\newcommand{\Px}{\mathcal P} % marginal distribution on \X
\newcommand{\PXY}{{\mathcal P}_{XY}} % joint distribution over \X \times \Y
\newcommand{\X}{\mathcal X} % instance space
\newcommand{\alg}{\mathcal A} % learning algorithm
\renewcommand{\H}{\mathcal H} % hypothesis space
\renewcommand{\L}{\mathcal L} % labeled set
\newcommand{\U}{\mathcal U} % unlabeled set
\newcommand{\er}{\operatorname{er}}
\newcommand{\Q}{\mathcal Q} 
\newcommand{\C}{\mathbb C} % concept space
\renewcommand{\P}{\mathbb P} % probability
\newcommand{\nats}{\mathbb{N}} % natural numbers
\newcommand{\reals}{\mathbb{R}} % real numbers
\newcommand{\E}{\mathbb E}
\newcommand{\Data}{\mathbf{\mathcal Z}}
\newcommand{\vc}{d}
\newcommand{\eps}{\varepsilon}
\newcommand{\NaiveActivizer}{Meta-Algorithm 0}
\newcommand{\BasicActivizer}{Meta-Algorithm 1}
\newcommand{\CAL}{Meta-Algorithm 2}
\newcommand{\Shattering}{Meta-Algorithm 3}
\newcommand{\RobustCAL}{Algorithm 4}
\newcommand{\RobustShattering}{Algorithm 5}
\newcommand{\diam}{{\rm diam}}
\newcommand{\argmax}{\mathop{\rm argmax}}
\newcommand{\argmin}{\mathop{\rm argmin}}
\newcommand{\conf}{\delta}
\newcommand{\mycomment}[1]{}
\newcommand{\comment}[1]{}
\newcommand{\Ball}{{\rm B}}
\newcommand{\DIS}{\mathrm{DIS}}
\newcommand{\poly}{\mathrm{poly}}
\newcommand{\polylog}{\mathrm{polylog}}
\newcommand{\Polylog}{\mathrm{Polylog}}
\newcommand{\ind}{{\mathbbm{1}}}
\renewcommand{\S}{{\cal S}}
\newcommand{\cc}{q}
\newcommand{\vrad}{\phi}
\newcommand{\init}{\tau}
\newcommand{\truV}{ V^{\star} }
\newcommand{\truL}{\L^{\star}}
\newcommand{\truT}{T^{\star}}
\newcommand{\truI}{I^{\star}}
\newcommand{\truQ}{Q^{\star}}
\newcommand{\truC}{C^{\star}}
\newcommand{\cl}{\mathrm{cl}} % closure
\newcommand{\s}{s}
\newcommand{\ActiveSelect}{{\rm ActiveSelect}}
\newcommand{\Nontrivial}{{\rm Nontrivial}}
\newcommand{\hdc}{\tilde{\theta}}
\newcommand{\dc}{\theta} % disagreement coefficient
\newcommand{\bdim}{\tilde{d}}
\newcommand{\Msize}[1]{#1^3}
\newcommand{\dprob}{\tilde{\delta}_{f}}
\newcommand{\ignore}[1]{}
\providecommand{\thmend}{}
  \renewcommand{\thmend}{%
    {\unskip\nobreak\hfil\penalty 50%
     \hskip 2em\hbox{}\nobreak\hfil$\diamond$%
     \parfillskip=0pt \finalhyphendemerits=0 \par}}
\providecommand{\thmendup}{}
  \renewcommand{\thmendup}{%
    {\unskip\nobreak\hfil\penalty 50%
     \hskip 0em\hbox{}\nobreak\hfil$\diamond$%
     \parfillskip=0pt \finalhyphendemerits=0 \par}}
\providecommand{\upthmend}[1]{}
  \renewcommand{\upthmend}[1]{%
    {\unskip\nobreak\hfil\penalty 50%
      \vskip #1\hskip 2em\hbox{}\nobreak\hfil $\diamond$%
      \parfillskip=0pt \finalhyphendemerits=0 \par}}
\newtheorem{condition}{Condition}
\newenvironment{bigboxit}{\begin{center}\begin{lrbox}{\savepar}
%\begin{minipage}[h]{6.4in}
\begin{minipage}[h]{5.8in}
\begin{flushleft}}
{\end{flushleft}\end{minipage}\end{lrbox}\fbox{\usebox{\savepar}}
\end{center}}
\newsavebox{\savepar}
\begin{document}

\title{Activized Learning: Transforming Passive to Active \\with Improved Label Complexity\thanks{Some of
these (and related) results previously appeared in the author's doctoral dissertation \citep*{hanneke:thesis}.}}

\author{\name Steve Hanneke \email shanneke@stat.cmu.edu \\
\addr Department of Statistics\\
Carnegie Mellon University \\
Pittsburgh, PA 15213 USA
}

\editor{someone}

\maketitle

\begin{abstract}
We study the theoretical advantages of active learning over passive learning.
Specifically, we prove that, in noise-free classifier learning for VC classes,
any passive learning algorithm can be transformed into an active learning algorithm
with asymptotically strictly superior label complexity for all nontrivial target functions and
distributions. 
We further provide a general characterization of the
magnitudes of these improvements in terms of a novel
generalization of the disagreement coefficient.
%In particular, this supplies a concise sufficient condition for
%exponential rates of convergence in active learning.
We also extend these results
to active learning in the presence of label noise, and find that even under broad
classes of noise distributions, we can typically guarantee strict improvements over the
known results for passive learning.
\end{abstract}

\begin{keywords}
Active Learning, Selective Sampling, Sequential Design, Statistical Learning Theory, PAC Learning, Sample Complexity
\end{keywords}

\section{Introduction and Background}
\label{sec:intro}

The recent rapid growth in data sources
has spawned an equally rapid expansion in the number of potential applications of machine learning methodologies to extract useful concepts from this data.
However, in many cases, the bottleneck in the application process is the need to obtain accurate annotation of the raw data according to the target concept to be learned.
For instance, in webpage classification, it is straightforward to rapidly collect a large number of webpages, but training an accurate classifier typically requires a human
expert to examine and label a number of these webpages, which may require significant time and effort.
For this reason, it is natural to look for ways to reduce the total number of labeled examples required to train an accurate classifier.
In the traditional machine learning protocol, here referred to as \emph{passive learning}, the examples labeled by the expert are
sampled independently at random, and the emphasis is on designing learning algorithms that make the most effective use of
the number of these labeled examples available.
However, it is possible to go beyond such methods by altering the protocol itself,
allowing the learning algorithm to sequentially \emph{select} the examples
to be labeled, based on its observations of the labels of previously-selected examples;
this interactive protocol is referred to as \emph{active learning}.
The objective in designing this selection mechanism is to
focus the expert's efforts toward labeling only the most informative data for the learning process,
thus eliminating some degree of redundancy in the information content of the labeled examples.

It is now well-established that active learning can sometimes provide significant
practical and theoretical advantages over passive learning, in terms of the number of labels
required to obtain a given accuracy.  However, our current understanding of active learning
in general is still quite limited in several respects.
First, since we are lacking a complete understanding
of the potential capabilities of active learning, we are not yet sure to what standards we should
aspire for active learning algorithms to meet, and in particular this challenges our ability to characterize how a
``good'' active learning algorithm should behave.
Second, since we have yet to identify a complete set of
general principles for the design of effective active learning algorithms, in many cases the most effective
known active learning algorithms have problem-specific designs (e.g., designed specifically for linear separators, or decision trees, etc.,
under specific assumptions on the data distribution),
and it is not clear what components of their design can be abstracted and transferred to the design of active learning algorithms
for different learning problems (e.g., with different types of classifiers, or different data distributions).
Finally, we have yet to fully understand the scope of the relative benefits of active learning over passive learning,
and in particular the conditions under which such improvements are achievable, as well as a general characterization
of the potential magnitudes of these improvements.
In the present work, we take steps toward closing this gap in our understanding of the capabilities, general principles, and advantages of active learning.

Additionally, this work has a second theme, motivated by practical concerns.
To date, the machine learning community has invested decades of research into constructing solid, reliable, and well-behaved \emph{passive} learning algorithms,
and into understanding their theoretical properties.
We might hope that an equivalent amount of effort is \emph{not} required in order to discover and understand effective active learning algorithms.
In particular, rather than starting from scratch in the design and analysis of active learning algorithms,
it seems desirable to leverage this vast knowledge of passive learning, to whatever extent possible.
For instance, it may be possible to design active learning algorithms that
\emph{inherit} certain desirable behaviors or properties of a given passive learning algorithm.
In this way, we can use a given passive learning algorithm as a \emph{reference point},
and the objective is to design an active learning algorithm with performance guarantees strictly
superior to those of the passive algorithm.  Thus, if the passive learning algorithm has proven
effective in a variety of common learning problems, then the active learning algorithm should
be even better for those \emph{same} learning problems.  This approach also has the advantage
of immediately supplying us with a collection of theoretical guarantees on the performance
of the active learning algorithm: namely, improved forms of all known guarantees
on the performance of the given passive learning algorithm.

Due to its obvious practical advantages, this general line of informal thinking dominates the existing literature on empirically-tested
heuristic approaches to active learning, as most of the published heuristic active learning algorithms
make use of a passive learning algorithm as a subroutine (e.g., SVM, logistic regression, k-NN, etc.), constructing sets of
labeled examples and feeding them into the passive learning algorithm at various times during the
execution of the active learning algorithm (see the references in Section~\ref{sec:open-problems}).
Below, we take a more rigorous look at this general strategy.
We develop a reduction-style framework for studying
this approach to the design of active learning algorithms relative to a given passive learning algorithm.
We then proceed to develop and analyze a variety of such methods, to realize this approach
in a very general sense.

Specifically, we explore the following fundamental questions.

\begin{itemize}
\item Is there a general procedure that, given any passive learning algorithm,
transforms it into an active learning algorithm requiring significantly fewer labels to achieve a
given accuracy?
\item If so, how large is the reduction in the number of labels required by the resulting
active learning algorithm, compared to the number of labels required by the original passive algorithm?
\item What are sufficient conditions for an \emph{exponential} reduction in the number of labels required?
\item To what extent can these methods be made robust to imperfect or noisy labels?
\end{itemize}
In the process of exploring these questions, we find that for many interesting learning problems,
the techniques in the existing literature are not capable of realizing the full potential of active learning.
Thus, exploring this topic in generality requires us to develop novel insights and
entirely new techniques for the design of active learning algorithms.  We also
develop corresponding natural complexity quantities to characterize the performance
of such algorithms.  Several of the results we establish here are more general than any related results in the existing
literature, and in many cases the algorithms we develop use significantly fewer labels than any previously
published methods.

\subsection{Background}
\label{sec:background}

The term \emph{active learning} refers to a family of supervised learning protocols, characterized by the ability
of the learning algorithm to pose queries to a teacher, who has access to the target concept to be learned.
In practice, the teacher and queries may take a variety of forms:
a human expert, in which case the queries may be questions or annotation tasks;
nature, in which case the queries may be scientific experiments;
a computer simulation, in which case the queries may be particular parameter values or initial conditions for the simulator;
or a host of other possibilities.
In our present context, we will specifically discuss a protocol known as \emph{pool-based} active learning,
a type of sequential design based on a collection of unlabeled examples; this seems to be
the most common form of active learning in practical use today \citep*[e.g.,][]{settles:10,baldridge:09,gangadharaiah:09,hoi:06,luo:05,roy:01,tong:01,mccallum:98}. 
We will not discuss alternative models of active learning, such as \emph{online} \citep*{dekel:10} or \emph{exact} \citep*{hegedus:95}.
In the pool-based active learning setting, the learning algorithm is supplied with a large collection of unlabeled examples (the \emph{pool}),
and is allowed to select any example from the pool to request that it be labeled.  After observing the
label of this example, the algorithm can then select another unlabeled example from the pool to request that it be labeled.
This continues sequentially for a number of rounds until some halting condition is satisfied, at which time the
algorithm returns a function intended to approximately mimic and generalize the observed labeling behavior.
This setting contrasts with \emph{passive learning}, in which the learning algorithm
is supplied with a collection of \emph{labeled} examples.

Supposing the labels received agree with some true target concept, the objective
is to use this returned function to approximate the true target concept on future (previously unobserved)
data points.
The hope is that, by carefully selecting which examples should be labeled, the algorithm can
achieve improved accuracy while using fewer labels compared to passive learning.
The motivation for this setting is simple.  For many modern machine learning problems, unlabeled examples are
inexpensive and available in abundance, while annotation is time-consuming or expensive.  For instance,
this is the case in the aforementioned webpage classification problem,
where the pool would be the set of all webpages, and labeling
a webpage requires a human expert to examine the website content. \citet*{settles:10} surveys a variety of
other applications for which active learning is presently being used.
To simplify the discussion, in this work we focus specifically on \emph{binary classification}, in which there
are only two possible labels.  The results generalize naturally to multiclass classification as well.

As the above description indicates, when studying the advantages of active learning, we are primarily interested in the number of label requests
sufficient to achieve a given accuracy, a quantity referred to as the \emph{label complexity} (Definition~\ref{defn:label-complexity} below).
Although active learning has been an active topic in the machine learning literature for many years now,
our \emph{theoretical} understanding of this topic was largely lacking
until very recently.  However, within the past few years, there has been an explosion of progress.
These advances can be grouped into two categories: namely, the \emph{realizable case} and the \emph{agnostic case}.

\subsubsection{The Realizable Case}

In the realizable case, we are interested in a particularly strict scenario, where the true label of any example is
\emph{determined} by a function of the features (covariates), and where that function has a specific known form
(e.g., linear separator, decision tree, union of intervals, etc.); the set of classifiers having this known form is referred
to as the \emph{concept space}.  The natural formalization of the realizable case is
very much analogous to the well-known PAC model for passive learning \citep*{valiant:84}.  In the realizable
case, there are obvious examples of learning problems where active learning can provide a significant advantage
compared to passive learning; for instance, in the problem of learning \emph{threshold} classifiers on the real line (Example~\ref{ex:thresholds} below),
a kind of \emph{binary search} strategy for selecting which examples to request labels for naturally leads to \emph{exponential}
improvements in label complexity compared to learning from random labeled examples (passive learning).
As such, there is a natural attraction to determine how general this phenomenon is.  This leads us to think about
general-purpose learning strategies (i.e., which can be instantiated for more than merely threshold classifiers on the real line),
which exhibit this binary search behavior in various special cases.

The first such general-purpose strategy to emerge
in the literature was a particularly elegant strategy proposed by \citet*{cohn:94}, typically referred to as CAL after its
discoverers (\CAL~below).  The strategy behind CAL is the following.  The algorithm examines each example in the unlabeled pool in sequence,
and if there are two classifiers in the concept space consistent with all previously-observed labels, but which disagree
on the label of this next example, then the algorithm requests that label, and otherwise it does not.
For this reason, below we refer to the general family of algorithms inspired by CAL as
\emph{disagreement-based} methods.  Disagreement-based methods are sometimes referred to as ``mellow'' active learning,
since in some sense this is the \emph{least} we can expect from a reasonable active
learning algorithm; it never requests the label of an example whose label it can \emph{infer} from information
already available, but otherwise makes no attempt to seek out particularly informative examples to request the labels of.
That is, the notion of \emph{informativeness} implicit in disagreement-based methods is a \emph{binary} one, so that an example
is either informative or not informative, but there is no further ranking of the informativeness of examples.  The
disagreement-based strategy is quite general, and obviously leads to algorithms that are at least \emph{reasonable},
but \citet*{cohn:94} did not study the label complexity achieved by their strategy in any generality.

In a Bayesian variant of the realizable setting, \citet*{freund:97} studied an algorithm known as
Query by Committee (QBC), which in some sense represents a Bayesian variant of CAL.  However, QBC \emph{does}
distinguish between different levels of informativeness beyond simple disagreement, based on the \emph{amount}
of disagreement on a random unlabeled example.  They were able to analyze the label complexity achieved by
QBC in terms of a type of information gain, and found that when the information gain is lower bounded by a positive constant,
the algorithm achieves a label complexity exponentially smaller than the known results for passive learning.
In particular, this is the case for the threshold learning problem, and also for the problem of learning higher-dimensional
(nearly balanced) linear separators when the data satisfy a certain (uniform) distribution.  Below,
we will not discuss this analysis further, since it is for a slightly different (Bayesian) setting.  However, the
results below in our present setting do have interesting implications for the Bayesian setting as well,
as discussed in the recent work of \citet*{hanneke:11b}.

The first general analysis of the label complexity of active learning in the (non-Bayesian) realizable case came in
the breakthrough work of \citet*{dasgupta:05}.  In that work, Dasgupta proposed a quantity, called the \emph{splitting index},
to characterize the label complexities achievable by active learning.  The splitting index analysis is noteworthy for several
reasons.
First, one can show it provides nearly tight bounds on the \emph{minimax} label complexity for a given concept space and data distribution.
In particular, the analysis matches the exponential improvements known to be possible for threshold classifiers,
as well as generalizations to higher-dimensional homogeneous linear separators under near-uniform distributions
(as first established by \citet*{dasgupta:05b,dasgupta:09}).
Second, it provides a novel notion of \emph{informativeness} of an example, beyond the simple binary notion of informativeness
employed in disagreement-based methods.  Specifically, it describes the informativeness of an example in terms of the number of
\emph{pairs} of well-separated classifiers for which at least one out of each pair will definitely be contradicted, regardless
of the example's label.
Finally, unlike any other existing work on active learning (present work included), it provides an elegant description of the
\emph{trade-off} between the number of label requests and the number of unlabeled examples needed by the learning algorithm.
Another interesting byproduct of Dasgupta's work is a better understanding of the \emph{nature} of the improvements
achievable by active learning in the general case.  In particular, his work clearly illustrates the need to study the label
complexity as a quantity that varies depending on the particular target concept and data distribution.  We will see this
issue arise in many of the examples below.

Coming from a slightly different perspective, \citet*{hanneke:07a} later analyzed the label complexity of active learning
in terms of an extension of the \emph{teaching dimension} \citep*{goldman:95}.  Related quantities were previously used by
\citet*{hegedus:95} and \citet*{hellerstein:96} to tightly characterize the number of membership queries sufficient
for \emph{Exact} learning; \citet*{hanneke:07a} provided a natural generalization to the \emph{PAC} learning setting.
At this time, it is not clear how this quantity relates to the splitting index.  From a practical perspective, in some
instances it may be easier to calculate (see the work of \citet*{nowak:08} for a discussion related to this), though
in other cases the opposite seems true.

The next progress toward understanding the label complexity of active learning came in the work of \citet*{hanneke:07b},
who introduced a quantity called the \emph{disagreement coefficient} (Definition~\ref{def:disagreement-coefficient} below),
accompanied by a technique for analyzing disagreement-based
active learning algorithms.  In particular, implicit in that work, and made explicit in the later work of \citet*{hanneke:11a},
was the first general characterization of the label complexities achieved by the original CAL strategy for active learning
in the realizable case, stated in terms of the disagreement coefficient.  The results of the present work are direct descendents
of that 2007 paper, and we will discuss the disagreement coefficient, and results based on it, in substantial detail below.
Disagreement-based active learners such as CAL are known to be sometimes suboptimal relative to the splitting index analysis,
and therefore the disagreement coefficient analysis sometimes results in larger label complexity bounds than the splitting index
analysis.  However, in many cases the label complexity bounds based on the disagreement coefficient are surprisingly
good considering the simplicity of the methods.  Furthermore, as we will see below, the disagreement coefficient has the
practical benefit of often being fairly straightforward to calculate for a variety of learning problems, particularly when
there is a natural geometric interpretation of the classifiers and the data distribution is relatively smooth.
As we discuss below, it can also be used to bound the label complexity of active learning
in noisy settings.  For these reasons (simplicity of algorithms, ease of calculation, and applicability beyond the realizable case),
subsequent work on the label complexity of active learning has tended to favor the disagreement-based
approach, making use of the disagreement coefficient to bound the label complexity 
\citep*{dasgupta:07,friedman:09,beygelzimer:09,wang:09,hanneke:10a,hanneke:11a,koltchinskii:10,beygelzimer:10,mahalanabis:11,wang:11}. 
A significant part of the present paper focuses on extending and generalizing the disagreement coefficient analysis,
while still maintaining the relative ease of calculation that makes the disagreement coefficient so useful.

In addition to many positive results, \citet*{dasgupta:05} also pointed out several negative results,
even for very simple and natural learning problems.  In particular, for many problems, the minimax label complexity of active learning
will be no better than that of passive learning.  In fact, \citet*{hanneke:10a} later showed that, for a certain type of active learning algorithm
-- namely, \emph{self-verifying} algorithms, which themselves adaptively determine how many label requests they need to achieve a given accuracy --
there are even particular target concepts and data distributions for which \emph{no} active learning algorithm of that type can outperform passive learning.
Since all of the above label complexity analyses (splitting index, teaching dimension, disagreement coefficient) apply to certain respective self-verifying learning
algorithms, these negative results are also reflected in all of the existing general label complexity analyses as well.

While at first these negative results may seem discouraging, \citet*{hanneke:10a} noted that if we do not require the algorithm to be self-verifying,
instead simply measuring the number of label requests the algorithm needs to \emph{find} a good classifier, rather than the number needed
to both find a good classifier \emph{and verify} that it is indeed good, then these negative results vanish.  In fact, (shockingly) they were able to show
that for any concept space with finite VC dimension, and any fixed data distribution, for any given passive learning algorithm there is an
active learning algorithm with asymptotically superior label complexity for \emph{every} nontrivial target concept!  A positive result of this generality and strength
is certainly an exciting advance in our understanding of the advantages of active learning.  But perhaps equally exciting are the unresolved questions raised by that work,
as there are potential opportunities to strengthen, generalize, simplify, and elaborate on this result.
First, note that the above statement allows the active learning algorithm to be specialized to the particular distribution according to which the (unlabeled) data are sampled,
and indeed the active learning method used by \citet*{hanneke:10a} in their proof has a rather strong direct dependence on the data distribution
(which cannot be removed by simply replacing some calculations with data-dependent estimators).  One interesting question is whether an alternative approach might avoid
this direct distribution-dependence in the algorithm, so that the claim can be strengthened to say that the active algorithm is superior
to the passive algorithm for all nontrivial target concepts \emph{and data distributions}.  This question is interesting both theoretically,
in order to obtain the strongest possible theorem on the advantages of active learning, as well as practically, since direct access to the
distribution from which the data are sampled is typically \emph{not} available in practical learning scenarios.
A second question left open by \citet*{hanneke:10a} regards the \emph{magnitude} of the gap between the active and passive label complexities.
Specifically, although they did find particularly nasty learning problems where the label complexity of active learning will be close to that of passive learning (though always better),
they hypothesized that for most natural learning problems, the improvements over passive learning should typically be \emph{exponentially large}
(as is the case for threshold classifiers); they gave many examples to illustrate this point, but left open the problem of characterizing general
sufficient conditions for these exponential improvements to be achievable, even when they are not achievable by self-verifying algorithms.
Another question left unresolved by \citet*{hanneke:10a} is whether this type of general improvement guarantee might be realized by a computationally \emph{efficient} active learning algorithm.
Finally, they left open the question of whether such general results might be further generalized to settings that involve noisy labels.
The present work picks up where \citet*{hanneke:10a} left off in several respects, making progress on each of the above questions, in some
cases completely resolving the question.

\subsubsection{The Agnostic Case}

In addition to the above advances in our understanding of active learning in the realizable case,
there has also been wonderful progress in making these methods robust to imperfect teachers,
feature space underspecification, and model misspecification.  This general topic goes by the name
\emph{agnostic active learning}, from its roots in the agnostic PAC model \citep*{kearns:94b}.
In contrast to the realizable case, in the \emph{agnostic case}, there is not necessarily a perfect
classifier of a known form, and indeed there may even be \emph{label noise} so that there is no perfect
classifier of \emph{any} form.  Rather, we have a given set of classifiers (e.g., linear separators, or
depth-limited decision trees, etc.), and the objective is to identify a classifier whose accuracy is
not much worse than the best classifier of that type.  Agnostic learning is strictly more general,
and often more difficult, than realizable learning; this is true for both passive learning and active learning.
However, for a given agnostic learning problem, we might still hope that active learning can achieve a given
accuracy using fewer labels than required for passive learning.

The general topic of agnostic active learning got its first taste of real progress from \citet*{balcan:06,balcan:09} with
the publication of the $A^2$ (agnostic active) algorithm.  This method is a noise-robust disagreement-based
algorithm, which can be applied with essentially arbitrary types of classifiers under arbitrary noise distributions.
It is interesting both for its effectiveness and (as with CAL) its elegance.
The original work of \citet*{balcan:06,balcan:09} showed that, in some special cases (thresholds, and homogeneous linear
separators under a uniform distribution), the $A^2$ algorithm does achieve
improved label complexities compared to the known results for passive learning.

Using a different type of general active learning strategy, \citet*{hanneke:07a} found that the \emph{teaching dimension}
analysis (discussed above for the realizable case) can be extended beyond the realizable case, arriving at general bounds
on the label complexity under arbitrary noise distributions.  These bounds improve over the known results for passive
learning in many cases.  However, the algorithm requires direct access to a certain quantity that depends on the noise distribution
(namely, the noise rate, defined in Section~\ref{sec:agnostic} below), which would not be available in many real-world learning problems.

Later, \citet*{hanneke:07b} established a general characterization of the label complexities achieved by
$A^2$, expressed in terms of the disagreement coefficient.  The result holds for arbitrary types of classifiers (of finite VC dimension)
and arbitrary noise distributions, and represents the natural generalization of the aforementioned realizable-case
analysis of CAL.  In many cases, this result shows improvements over the known results for
passive learning.  Furthermore, because of the simplicity of the disagreement coefficient, the bound can be calculated
for a variety of natural learning problems.

Soon after this, \citet*{dasgupta:07} proposed a new active learning strategy, which is also effective in the agnostic setting.
Like $A^2$, the new algorithm is a noise-robust disagreement-based method.  The work of \citet*{dasgupta:07} is significant for at least two reasons.
First, they were able to establish a general label complexity bound for this method based on the disagreement coefficient.
The bound is similar in form to the previous label complexity bound for $A^2$ by \citet*{hanneke:07b},
but improves the dependence of the bound on the disagreement coefficient.
Second, the proposed method of \citet*{dasgupta:07} set a new standard for computational and aesthetic simplicity in agnostic active learning algorithms.
This work has since been followed by related methods of \citet*{beygelzimer:09} and \citet*{beygelzimer:10}.
In particular, \citet*{beygelzimer:09} develop a method capable of learning under an essentially arbitrary loss function; they also show label complexity
bounds similar to those of \citet*{dasgupta:07}, but applicable to a larger class of loss functions, and stated in terms of a generalization
of the disagreement coefficient for arbitrary loss functions.

While the above results are encouraging, the guarantees reflected in these label complexity
bounds essentially take the form of (at best) constant factor improvements;
specifically, in some cases the bounds improve the dependence on the noise rate factor (defined in Section~\ref{sec:agnostic} below),
compared to the known results for passive learning.  In fact, \citet*{kaariainen:06} showed that any label complexity bound depending on the noise
distribution only via the noise rate cannot do better than this type of constant-factor improvement.  This raised the
question of whether, with a more detailed description of the noise distribution, one can show improvements in the
\emph{asymptotic form} of the label complexity compared to passive learning.  Toward this end, \citet*{castro:08}
studied a certain refined description of the noise conditions, related to the margin conditions of \citet*{mammen:99}, which are well-studied in
the passive learning literature.  Specifically, they found that in some special cases, under certain restrictions on the
noise distribution, the asymptotic form of the label complexity \emph{can} be improved compared to passive learning,
and in some cases the improvements can even be \emph{exponential} in magnitude;
to achieve this, they developed algorithms specifically tailored to the types of classifiers they studied (threshold classifiers and boundary fragment classes).
\citet*{balcan:07} later extended this result to general homogeneous linear separators under a uniform distribution.
Following this, \citet*{hanneke:09a,hanneke:11a} generalized these results, showing that both of the published general agnostic active learning
algorithms \citep*{balcan:09,dasgupta:07} can also achieve these types of improvements in the asymptotic form of the label complexity;
he further proved general bounds on the label complexities of these methods, again based on the disagreement coefficient, which apply to
arbitrary types of classifiers, and which reflect these types of improvements (under conditions on the disagreement coefficient).
\citet*{wang:09} later bounded the label complexity of $A^2$ under
somewhat different noise conditions, in particular identifying weaker noise conditions
sufficient for these improvements
to be exponential in magnitude (again, under conditions on the disagreement coefficient).
\citet*{koltchinskii:10} has recently improved on some of Hanneke's results, refining certain
logarithmic factors and simplifying the proofs, using a slightly different algorithm
based on similar principles.
Though the present work discusses only classes of finite VC dimension, most of the above references also contain results
for various types of nonparametric classes with infinite VC dimension.

At present, all of the published bounds on the label complexity of agnostic active learning also apply to
\emph{self-verifying} algorithms.  As mentioned, in the realizable case, it is typically possible to achieve
significantly better label complexities if we do not require the active learning algorithm to be self-verifying,
since the verification of learning may be more difficult than the learning itself \citep*{hanneke:10a}.  We might wonder whether
this is also true in the agnostic case, and whether agnostic active learning algorithms that are not self-verifying
might possibly achieve significantly better label complexities than the existing label complexity bounds described above.
We investigate this in depth below.

\subsection{Summary of Contributions}
\label{subsec:contributions}

In the present work, we build on and extend the above results in a variety of ways, resolving a number of open problems.
The main contributions of this work can be summarized as follows.

\begin{itemize}
\item We formally define a notion of a universal activizer, a meta-algorithm that transforms any passive learning algorithm into an active learning algorithm with asymptotically strictly superior label complexities for all nontrivial target concepts and distributions.
\item We analyze the existing strategy of disagreement-based active learning from this perspective, precisely characterizing the conditions under which this strategy can lead to a universal activizer in the realizable case.
\item We propose a new type of active learning algorithm, based on shatterable sets, and prove that we can construct universal activizers for the realizable case based on this idea; in particular, this overcomes the issue of distribution-dependence in the existing results mentioned above.
\item We present a novel generalization of the disagreement coefficient, along with a new asymptotic bound on the label complexities achievable by active learning in the realizable case; this new bound is often significantly smaller than the existing results in the published literature.
\item We state new concise sufficient conditions for exponential improvements over passive learning to be achievable in the realizable case, including a significant weakening of known conditions in the published literature.
\item We present a new general-purpose active learning algorithm for the agnostic case, based on the aforementioned idea involving shatterable sets.
\item We prove a new asymptotic bound on the label complexities achievable by active learning in the presence of label noise (the agnostic case), often significantly smaller than any previously published results.
\item We formulate a general conjecture on the theoretical advantages of active learning over passive learning in the presence of arbitrary types of label noise.
\end{itemize}

\subsection{Outline of the Paper}
\label{subsec:outline}

The paper is organized as follows.
In Section~\ref{sec:definitions}, we introduce the basic notation used throughout,
formally define the learning protocol, and formally define the label complexity.
We also define the notion of an \emph{activizer}, which is a procedure that transforms
a passive learning algorithm into an active learning algorithm with asymptotically
superior label complexity.  In Section~\ref{sec:naive}, we review the established
technique of \emph{disagreement-based} active learning,
and prove a new result precisely characterizing the scenarios in which disagreement-based
active learning can be used to construct an activizer.  In particular, we find that in
many scenarios, disagreement-based active learning is not powerful enough to provide
the desired improvements.  In Section~\ref{sec:activizer}, we move beyond disagreement-based
active learning, developing a new type of active learning algorithm based on \emph{shatterable}
sets of points.  We apply this technique to construct a simple 3-stage procedure, which we then
prove is a universal activizer for any concept space of finite VC dimension.
In Section~\ref{sec:exponential}, we begin by reviewing the known results for bounding the
label complexity of disagreement-based active learning in terms of the disagreement
coefficient; we then develop a somewhat more involved procedure, again based on shatterable sets,
which takes full advantage of the sequential nature of active leanring.  In addition to being an
activizer, we show that this procedure often achieves dramatically superior label complexities
than achievable by passive learning.  In particular, we define a novel generalization of the
disagreement coefficient, and use it to bound the label complexity of this procedure.
This also provides us with concise sufficient conditions for obtaining
exponential improvements over passive learning.  Continuing in Section~\ref{sec:agnostic},
we extend our framework to allow for label noise (the agnostic case), and discuss the
possibility of extending the results from previous sections to these noisy learning problems.
We first review the known results for noise-robust disagreement-based active learning,
and characterizations of its label complexity in terms of the disagreement coefficient and
Mammen-Tsybakov noise parameters.  We then proceed to develop a new type of noise-robust
active learning algorithm, again based on shatterable sets, and prove bounds on its label
complexity in terms of our aforementioned generalization of the disagreement coefficient.
Additionally, we present a general conjecture concerning the existence of activizers for
certain passive learning algorithms in the agnostic case.  We conclude in
Section~\ref{sec:open-problems} with a host of enticing open problems for future investigation.

\section{Definitions and Notation}
\label{sec:definitions}

For most of the paper, we consider the following formal setting.
There is a measurable space $(\X, \mathcal{F}_{\X})$,
where $\X$ is called the \emph{instance space}; for simplicity, we
suppose this is a standard Borel space \citep*{srivastava:98}
(e.g., $\reals^{m}$ under the usual Borel $\sigma$-algebra), though
most of the results generalize.
A \emph{classifier} is any measurable function $h : \X \to \{-1,+1\}$.
There is a set $\C$ of classifiers called the \emph{concept space}.
In the \emph{realizable case}, the learning problem is characterized as follows.
There is a probability measure $\Px$ on $\X$, and a sequence $\Data_{X} = \{X_1,X_2,\ldots\}$
of independent $\X$-valued random variables, each with distribution $\Px$.
We refer to these random variables as the sequence of \emph{unlabeled examples};
although in practice, this sequence would typically be large but finite, to simplify
the discussion and focus strictly on counting labels, we will suppose this sequence is inexhaustible.  There is additionally
a special element $f \in \C$, called the \emph{target function}, and we denote by
$Y_i = f(X_i)$; we further denote by $\Data = \{(X_1,Y_1),(X_2,Y_2),\ldots\}$
the sequence of \emph{labeled examples}, and for $m \in \nats$ we denote by
$\Data_{m} = \{(X_1,Y_1),(X_2,Y_2),\ldots,(X_m,Y_m)\}$
the finite subsequence consisting
of the first $m$ elements of $\Data$.  For any classifier $h$,
we define the \emph{error rate} $\er(h) = \Px(x : h(x) \neq f(x))$. Informally, the learning
objective in the realizable case is to identify some $h$ with small $\er(h)$ using elements
from $\Data$, without direct access to $f$. 

An \emph{active learning algorithm}
$\alg$ is permitted direct access to the $\Data_{X}$
sequence (the unlabeled examples), but to gain access to the $Y_i$ values it must request
them one at a time, in a sequential manner.
Specifically, given access to the $\Data_{X}$ values, the algorithm selects any index $i\in\nats$,
requests to observe the $Y_i$ value, then having observed the value of $Y_i$, selects another index
$i^{\prime}$, observes the value of $Y_{i^{\prime}}$, etc.  The algorithm is given as input an
integer $n$, called the \emph{label budget}, and is permitted to observe at most $n$ labels total
before eventually halting and returning a classifier $\hat{h}_n = \alg(n)$; that is, by definition, an active
learning algorithm never attempts to access more than the given budget $n$ number of labels.
We will then study the values of $n$ sufficient to guarantee $\E[\er(\hat{h}_n)] \leq \eps$, for any given value
$\eps \in (0,1)$.  We refer to this as the \emph{label complexity}.  We will be particularly
interested in the asymptotic dependence on $\eps$ in the label complexity, as $\eps \to 0$.
Formally, we have the following definition.

\begin{definition}
\label{defn:label-complexity}
An active learning algorithm $\alg$ achieves label complexity $\Lambda(\cdot,\cdot,\cdot)$
if, for every target function $f$, distribution $\Px$, $\eps \in (0,1)$,
and integer $n \geq \Lambda(\eps, f, \Px)$, we have $\E\left[ \er\left( \alg(n) \right)\right] \leq \eps$.
\thmend
\end{definition}

This definition of label complexity is similar to one originally studied by \citet*{hanneke:10a}.
It has a few features worth noting.  First, the label complexity has an explicit dependence on
the target function $f$ and distribution $\Px$.  As noted by \citet*{dasgupta:05}, we need this
dependence if we are to fully understand the range of label complexities achievable by active
learning; we further illustrate this issue in the examples below.  The second feature to note
is that the label complexity, as defined here, is simply a sufficient budget size to achieve the specified accuracy.
That is, here we are asking only how many label
requests are required for the algorithm to achieve a given accuracy (in expectation).  However, as noted by \citet*{hanneke:10a},
this number might not be sufficiently large to \emph{detect} that the algorithm has indeed achieved the required accuracy
based only on the observed data.
That is, because the number of labeled examples used in active learning can be quite small, we come
across the problem that the number of labels needed to \emph{learn} a concept might be significantly
smaller than the number of labels needed to \emph{verify} that we have successfully learned the concept.
As such, this notion of label complexity is most useful in the \emph{design} of effective learning algorithms,
rather than for predicting the number of labels an algorithm should request in any particular application.
Specifically, to design effective active learning algorithms, we should generally desire small label complexity values,
so that (in the extreme case) if some algorithm $\alg$ has smaller label complexity values than some
other algorithm $\alg^{\prime}$ for \emph{all} target functions and distributions, then (all other factors being equal) we should clearly
prefer algorithm $\alg$ over algorithm $\alg^{\prime}$; this is true regardless of whether we have a means to \emph{detect}
(verify) how large the improvements offered by algorithm $\alg$ over algorithm $\alg^{\prime}$ are for any particular
application.
Thus, in our present context, this notion of label complexity plays a role analogous to concepts
such as \emph{universal consistency} or \emph{admissibility},
which are also generally useful in guiding the design of effective algorithms, but are not intended to be
informative in the context of any particular application.  See the work of \citet*{hanneke:10a} for a discussion
of this issue, as it relates to a definition of label complexity similar to that above, as well as other notions of label complexity
from the active learning literature (some of which include a verification requirement).

We will be interested in the performance of active learning algorithms, relative to the performance
of a given \emph{passive learning algorithm}.  In this context, a passive learning algorithm $\alg$
takes as input a finite sequence of labeled examples $\L \in \bigcup_{n} (\X \times \{-1,+1\})^{n}$,
and returns a classifier $\hat{h} = \alg(\L)$.
We allow both active and passive learning algorithms to be randomized:
that is, to have internal randomness, in addition to the given random data.
We define the label complexity for a passive learning algorithm as follows.

\begin{definition}
\label{defn:passive-label-complexity}
A passive learning algorithm $\alg$ achieves label complexity $\Lambda(\cdot,\cdot,\cdot)$
if, for every target function $f$, distribution $\Px$, $\eps \in (0,1)$,
and integer $n \geq \Lambda(\eps, f, \Px)$, we have $\E\left[ \er\left( \alg\left( \Data_{n} \right) \right) \right] \leq \eps$.
\thmendup
\end{definition}

Although technically some algorithms may be able to achieve a desired accuracy
without any observations, to make the general results easier to state
(namely, those in Section~\ref{sec:exponential}),
unless otherwise stated we suppose label complexities (both passive and active) take strictly
positive values, among $\nats \cup \{\infty\}$;
note that label complexities (both passive and active) can be infinite,
indicating that the corresponding algorithm might not achieve expected error rate $\eps$ for \emph{any} $n \in \nats$.
Both the passive and active label complexities are defined as a number of labels sufficient to guarantee the
\emph{expected} error rate is at most $\eps$.  It is also common in the literature to discuss the number of
label requests sufficient to guarantee the error rate is at most $\eps$ with \emph{high probability} $1-\delta$
\citep*[e.g.,][]{hanneke:10a}.
In the present work, we formulate our results in terms of the expected error rate because it simplifies the
discussion of asymptotics, in that we need only study the behavior of the label complexity as the single argument $\eps$ approaches $0$,
rather than the more complicated behavior of a function of $\eps$ and $\delta$ as both $\eps$ and $\delta$
approach $0$ at various relative rates.  However, we note that analogous results for these high-probability guarantees on the error rate
can be extracted from the proofs below without much difficulty, 
and in several places we explicitly state results of this form.

Below we employ the standard notation from asymptotic analysis,
including $O(\cdot)$, $o(\cdot)$, $\Omega(\cdot)$, $\omega(\cdot)$, $\Theta(\cdot)$, $\ll$, and $\gg$.
In all contexts below not otherwise specified, the asymptotics are always considered as
$\eps \to 0$ when considering a function of $\eps$,
and as $n \to \infty$ when considering a function of $n$; also, in any expression of the form ``$x \to 0$,''
we always mean the limit \emph{from above} (i.e., $x \downarrow 0$).
For instance, when considering nonnegative functions of $\eps$, $\lambda_{a}(\eps)$ and $\lambda_{p}(\eps)$,
the above notations are defined as follows.
We say $\lambda_{a}(\eps) = o(\lambda_{p}(\eps))$
when $\lim\limits_{\eps \to 0} \frac{\lambda_{a}(\eps)}{\lambda_{p}(\eps)} = 0$,
and this is equivalent to writing $\lambda_{p}(\eps) = \omega(\lambda_{a}(\eps))$,
$\lambda_{a}(\eps) \ll \lambda_{p}(\eps)$, or $\lambda_{p}(\eps) \gg \lambda_{a}(\eps)$.
We say $\lambda_{a}(\eps) = O(\lambda_{p}(\eps))$
when $\limsup\limits_{\eps \to 0} \frac{\lambda_{a}(\eps)}{\lambda_{p}(\eps)} < \infty$,
which can be equivalently expressed as $\lambda_{p}(\eps) = \Omega(\lambda_{a}(\eps))$.
Finally, we write $\lambda_{a}(\eps) = \Theta(\lambda_{p}(\eps))$ to mean that both
$\lambda_{a}(\eps) = O(\lambda_{p}(\eps))$ and $\lambda_{a}(\eps) = \Omega(\lambda_{p}(\eps))$ are satisfied.

Define the class of functions $\Polylog(1/\eps)$
as those $g : (0,1) \to [0,\infty)$
such that, for some $k \in [0,\infty)$, $g(\eps) = O(\log^{k}(1/\eps))$.
For a label complexity $\Lambda$, also define the set $\Nontrivial(\Lambda)$
as the collection of all pairs $(f,\Px)$ of a classifier and a distribution such that,
$\forall \eps > 0, \Lambda(\eps,f,\Px) < \infty$,
and $\forall g \in \Polylog(1/\eps)$, $\Lambda(\eps,f,\Px) = \omega(g(\eps))$.

In this context, an \emph{active meta-algorithm}
is a procedure $\alg_a$ taking
as input a passive algorithm $\alg_p$ and a label budget $n$,
such that for any passive algorithm $\alg_p$,
$\alg_a(\alg_p, \cdot)$ is an active learning algorithm.
We define an \emph{activizer} for a given passive algorithm as follows.
\begin{definition}
\label{defn:activizer}
We say an active meta-algorithm $\alg_a$ \emph{activizes} a passive algorithm $\alg_p$
for a concept space $\C$ if the following holds.  For any label complexity $\Lambda_p$ achieved by $\alg_p$,
the active learning algorithm $\alg_a(\alg_p, \cdot)$ achieves a label complexity $\Lambda_a$
such that, for every $f \in \C$ and every distribution $\Px$ on $\X$ with
$(f,\Px) \in \Nontrivial(\Lambda_{p})$,
there exists a constant $c \in [1,\infty)$ such that
\begin{equation*}
\Lambda_a( c\eps, f,\Px) = o\left( \Lambda_p(\eps, f,\Px) \right).
\end{equation*}
In this case, $\alg_a$ is called an \emph{activizer} for $\alg_p$ with respect to $\C$,
and the active learning algorithm $\alg_{a}(\alg_p, \cdot)$ is called the
$\alg_a$\emph{-activized} $\alg_p$.
\thmend
\end{definition}

We also refer to any active meta-algorithm $\alg_a$ that activizes \emph{every} passive algorithm $\alg_p$
for $\C$ as a \emph{universal activizer}
for $\C$.  One of the main contributions of this work is establishing
that such universal activizers do exist for any VC class $\C$.

A bit of explanation is in order regarding Definition~\ref{defn:activizer}.
We might interpret it as follows: an \emph{activizer} for $\alg_p$ strongly
improves (in a little-o sense) the label complexity for all \emph{nontrivial} target functions and distributions.
Here, we seek a meta-algorithm that, when given $\alg_p$ as input,
results in an active learning algorithm with strictly superior label complexities.
However, there is a sense in which some distributions $\Px$ or target functions $f$
are \emph{trivial} relative to $\alg_p$.  For instance, perhaps $\alg_p$ has a \emph{default}
classifier that it is naturally biased toward (e.g., with minimal $\Px(x : h(x)=+1)$, as in the
Closure algorithm~\citep*{auer:04}), so that when this default classifier is the target
function, $\alg_p$ achieves a constant label complexity.
In these trivial scenarios, we cannot hope to \emph{improve} over the behavior of the passive algorithm, but instead
can only hope to \emph{compete} with it.  The \emph{sense} in which we wish to compete
may be a subject of some controversy, but the implication of Definition~\ref{defn:activizer}
is that the label complexity of the activized algorithm should be strictly better than every nontrivial
upper bound on the label complexity of the passive algorithm.  For instance, if
$\Lambda_{p}(\eps,f,\Px) \in \Polylog(1/\eps)$, then we are guaranteed $\Lambda_{a}(\eps,f,\Px) \in \Polylog(1/\eps)$ as well, 
but if $\Lambda_{p}(\eps,f,\Px) = O(1)$, we are still only guaranteed $\Lambda_{a}(\eps,f,\Px) \in \Polylog(1/\eps)$.
This serves the purpose of defining a framework that can be studied without requiring too much obsession over
small additive terms in trivial scenarios, thus focusing the analyst's efforts toward
nontrivial scenarios where $\alg_p$ has relatively \emph{large} label complexity,
which are precisely the scenarios for which active learning is truly needed.
In our proofs, we find that in fact $\Polylog(1/\eps)$ can be replaced with $\log(1/\eps)$,
giving a slightly broader definition of ``nontrivial,'' for which all of the results below still hold.
Section~\ref{sec:open-problems} discusses open problems regarding this issue of trivial problems.

The definition of $\Nontrivial(\cdot)$ also only requires the activized algorithm to be effective in scenarios where the passive
learning algorithm has \emph{reasonable} behavior (i.e., finite label complexities); this is
only intended to keep with the reduction-based style of the framework, and in fact this restriction can
easily be lifted using a trick from \citet*{hanneke:10a} (aggregating the activized algorithm
with another algorithm that is always reasonable).

Finally, we also allow a constant factor $c$ loss in the $\eps$ argument to $\Lambda_{a}$.
We allow this to be an arbitrary constant, again in the interest of allowing the analyst
to focus only on the most significant aspects of the problem; for most reasonable
passive learning algorithms, we typically expect $\Lambda_{p}(\eps,f,\Px) = {\rm Poly}(1/\eps)$,
in which case $c$ can be set to $1$ by adjusting the leading constant factors of $\Lambda_{a}$.
A careful inspection of our proofs reveals that $c$ can always be set arbitrarily close to $1$ without affecting the
theorems below (and in fact, we can even get $c = (1+o(1))$, a function of $\eps$).

Throughout this work, we will adopt the usual notation for probabilities, such as $\P(\er(\hat{h}) > \eps)$,
and as usual we interpret this as measuring the corresponding event in the (implicit) underlying
probability space.  In particular, we make the usual implicit assumption that all sets involved in the
analysis are measurable; where this assumption does not hold, we may turn to outer probabilities,
though we will not make further mention of these technical details.
We will also use the notation $P^{k}(\cdot)$ to represent $k$-dimensional product measures;
for instance, for a measurable set $A \subseteq \X^{k}$, $\Px^{k}(A) = \P((X_1^{\prime},\ldots,X_k^{\prime}) \in A)$,
for independent $\Px$-distributed random variables $X_1^{\prime},\ldots,X_k^{\prime}$.
Additionally, to simplify notation,
we will adopt the convention that $\X^{0} = \{\varnothing\}$,
and $\Px^{0}(\X^{0}) = 1$.
Throughout, we will denote by $\ind_{A}(z)$ the indicator function for a set $A$,
which has the value $1$ when $z \in A$ and $0$ otherwise; additionally, at times
it will be more convenient to use the bipolar indicator function, defined as
$\ind_{A}^{\pm}(z) = 2 \ind_{A}(z) - 1$.

We will require a few additional definitions for the discussion below.
For any classifier $h : \X \to \{-1,+1\}$ and finite sequence of labeled examples $\L \in \bigcup_{m} (\X \times \{-1,+1\})^{m}$,
define the \emph{empirical error rate} $\er_{\L}(h) = |\L|^{-1} \sum_{(x,y) \in \L} \ind_{\{-y\}}(h(x))$;
for completeness, define $\er_{\emptyset}(h) = 0$.
Also, for $\L = \Data_{m}$,
the first $m$ labeled examples in the data sequence, abbreviate this as $\er_{m}(h) = \er_{\Data_{m}}(h)$.
For any distribution $P$ on $\X$, set of classifiers $\H$, classifier $h$,
and $r > 0$, define $\Ball_{\H,P}(h, r) = \{ g \in \H : P( x : h(x) \neq g(x)) \leq r\}$;
when $P = \Px$, the
distribution of the unlabeled examples, and $\Px$ is clear from the context, we abbreviate this as
$\Ball_{\H}(h,r) = \Ball_{\H,\Px}(h,r)$; furthermore, when $P = \Px$ and $\H = \C$, the concept space,
and both $\Px$ and $\C$ are clear from the context, we abbreviate this as $\Ball(h,r) = \Ball_{\C,\Px}(h,r)$.  Also, for
any set of classifiers $\H$, and any sequence of labeled examples $\L \in \bigcup_{m} (\X \times \{-1,+1\})^m$,
define $\H[\L] = \{h \in \H : \er_{\L}(h) = 0\}$;
for any $(x,y) \in \X \times \{-1,+1\}$, abbreviate
$\H[(x,y)] = \H[\{(x,y)\}] = \{h \in \H : h(x) = y\}$.

We also adopt the usual definition of ``shattering'' used in learning theory \citep*[e.g.,][]{vapnik:98}.
Specifically, for any set of classifiers $\H$, $k \in \nats$, and $S = (x_1,\ldots,x_k) \in \X^{k}$,
we say $\H$ \emph{shatters} $S$ if, $\forall (y_1,\ldots,y_k) \in \{-1,+1\}^{k}$, $\exists h \in \H$
such that $\forall i \in \{1,\ldots,k\}$, $h(x_i) = y_i$;
equivalently, $\H$ shatters $S$ if $\exists \{h_1,\ldots,h_{2^{k}}\} \subseteq \H$ such that for each $i,j \in \{1,\ldots,2^{k}\}$
with $i \neq j$, $\exists \ell \in \{1,\ldots,k\}$ with $h_{i}(x_{\ell}) \neq h_{j}(x_{\ell})$.
To simplify notation, we will also say that $\H$ shatters $\varnothing$ if and only if $\H \neq \{\}$.
As usual, we define the \emph{VC dimension} of $\C$, denoted $\vc$,
as the largest integer $k$ such that
$\exists S \in \X^{k}$ shattered by $\C$ \citep*{vapnik:98}.
To focus on nontrivial problems, we will only consider concept spaces $\C$ with $\vc > 0$ in the results below.
Generally, any such concept space $\C$ with $\vc < \infty$ is called a \emph{VC class}.

\subsection{Motivating Examples}
\label{subsec:motivating-examples}

Throughout this paper, we will repeatedly refer to a few canonical examples.
Although themselves quite toy-like, they represent the boiled-down essence
of some important distinctions between various types of learning problems.
In some sense, the process of grappling with the fundamental distinctions
raised by these types of examples has been a driving force behind much of the recent
progress in understanding the label complexity of active learning.

The first example is perhaps the most classic, and is clearly the first that
comes to mind when considering the potential for active learning to provide
strong improvements over passive learning.

\begin{example}
\label{ex:thresholds}
In the problem of learning \emph{threshold} classifiers, we consider $\X = [0,1]$
and \\$\C = \{h_{z}(x) = \ind_{[z,1]}^{\pm}(x) : z \in (0,1)\}$.
\thmend
\end{example}

There is a simple universal activizer for threshold classifiers, based on a kind of binary search.
Specifically, suppose $n \in \nats$ and that $\alg_{p}$ is any given passive learning algorithm.
Consider the points in $\{X_1,X_2,\ldots,X_{m}\}$,
for $m = 2^{n-1}$, and sort them in increasing order: $X_{(1)}, X_{(2)}, \ldots, X_{(m)}$.  Also
initialize $\ell = 0$ and $u = m+1$, and define $X_{(0)} = 0$ and $X_{(m+1)} = 1$.
Now request the label of $X_{(i)}$ for $i = \lfloor (\ell + u)/2 \rfloor$
(i.e., the median point between $\ell$ and $u$); if the label is $-1$, let $\ell = i$, and otherwise let $u = i$;
repeat this (requesting this median point, then updating $\ell$ or $u$ accordingly) until we have $u = \ell+1$.
Finally, let $\hat{z} = X_{(u)}$, construct the labeled sequence $\L = \left\{ \left(X_1, h_{\hat{z}}(X_1)\right), \ldots, \left(X_{m}, h_{\hat{z}}(X_m)\right)\right\}$,
and return the classifier $\hat{h} = \alg_{p}(\L)$.

Since each label request at least halves the set of integers between $\ell$ and $u$, the total number of label requests
is at most $\log_{2}(m)+1 = n$.
Supposing $f \in \C$ is the target function, this procedure maintains the invariant that $f(X_{(\ell)}) = -1$
and $f(X_{(u)}) = +1$.
Thus, once we reach $u = \ell+1$, since $f$ is a threshold,
it must be some $h_{z}$ with $z \in (\ell,u]$; therefore every $X_{(j)}$ with $j \leq \ell$
has $f(X_{(j)}) = -1$, and likewise every $X_{(j)}$ with $j \geq u$ has $f(X_{(j)}) = +1$;
in particular, this means $\L$ equals $\Data_{m}$, the \emph{true} labeled sequence.
But this means $\hat{h} = \alg_{p}(\Data_{m})$.  Since $n = \log_{2}(m)+1$, this
active learning algorithm will achieve an equivalent error rate to what $\alg_{p}$
achieves with $m$ labeled examples, but using only $\log_{2}(m)+1$ label requests.
In particular, this implies that if $\alg_{p}$ achieves label complexity $\Lambda_{p}$,
then this active learning algorithm achieves label complexity $\Lambda_{a}$ such
that $\Lambda_{a}(\eps, f, \Px) \leq \log_{2} \Lambda_{p}(\eps,f,\Px) + 2$;
as long as $1 \ll \Lambda_{p}(\eps,f,\Px) < \infty$, this is $o(\Lambda_{p}(\eps,f,\Px))$,
so that this procedure activizes $\alg_{p}$ for $\C$.

The second example we consider is almost equally simple (only increasing the VC dimension from $1$ to $2$),
but is far more subtle in terms of how we must approach its analysis in active learning.

\begin{example}
\label{ex:intervals}
In the problem of learning \emph{interval} classifiers, we consider $\X = [0,1]$ and
\\$\C = \{h_{[a,b]}(x) = \ind_{[a,b]}^{\pm}(x) : 0 < a \leq b < 1\}$.
\thmend
\end{example}

For the intervals problem, we can also construct a universal activizer, though slightly more complicated.
Specifically, suppose again that $n \in \nats$ and that $\alg_{p}$ is any given passive learning algorithm.
We first request the labels $\{Y_1,Y_2,\ldots, Y_{\lceil n/2 \rceil}\}$ of the first $\lceil n/2 \rceil$
examples in the sequence.  If every one of these labels is $-1$, then we immediately return the all-negative
constant classifier $\hat{h}(x) = -1$.  Otherwise, consider the points
$\{X_1,X_2,\ldots,X_{m}\}$, for $m = \max\left\{2^{\lfloor n/4 \rfloor - 1}, n\right\}$,
and sort them in increasing order $X_{(1)},X_{(2)},\ldots,X_{(m)}$.
For some value $i \in \{1,\ldots,\lceil n/2 \rceil\}$ with $Y_i = +1$, let $j_{+}$ denote the corresponding
index $j$ such that $X_{(j)} = X_{i}$.
Also initialize $\ell_1 = 0$, $u_1 = \ell_2 = j_{+}$, and $u_2 = m+1$, and define $X_{(0)} = 0$ and $X_{(m+1)} = 1$.
Now if $\ell_1 + 1 < u_1$, request the label of $X_{(i)}$ for $i = \lfloor (\ell_1+u_1)/2\rfloor$ (i.e., the median point
between $\ell_1$ and $u_1$); if the label is $-1$, let $\ell_1 = i$, and otherwise let $u_1 = i$;
repeat this (requesting this median point, then updating $\ell_1$ or $u_1$ accordingly) until we have $u_1 = \ell_1+1$.
Now if $\ell_2 +1 < u_2$, request the label of $X_{(i)}$ for $i = \lfloor (\ell_2+u_2)/2 \rfloor$ (i.e., the median point
between $\ell_2$ and $u_2$); if the label is $-1$, let $u_2 = i$, and otherwise let $\ell_2 = i$;
repeat this (requesting this median point, then updating $u_2$ or $\ell_2$ accordingly) until we have $u_2 = \ell_2+1$.
Finally, let $\hat{a} = u_1$ and $\hat{b} = \ell_2$, construct the labeled sequence
$\L = \left\{ \left(X_1, h_{[\hat{a},\hat{b}]}(X_1)\right), \ldots, \left(X_{m}, h_{[\hat{a},\hat{b}]}(X_{m})\right)\right\}$,
and return the classifier $\hat{h} = \alg_{p}(\L)$.

Since each label request in the second phase halves the set of values between either $\ell_1$ and $u_1$ or $\ell_2$ and $u_2$,
the total number of label requests is at most $\min\left\{m, \lceil n/2 \rceil + 2\log_{2}(m) + 2 \right\} \leq n$.
Suppose $f \in \C$ is the target function, and let $w(f) = \Px(x : f(x)=+1)$.  If $w(f) = 0$, then with probability $1$ the algorithm will
return the constant classifier $\hat{h}(x) = -1$, which has $\er(\hat{h}) = 0$ in this case.
Otherwise, if $w(f) > 0$, then for any $n \geq \frac{2}{w(f)} \ln \frac{1}{\eps}$, with probability at least
$1-\eps$, there exists $i \in \{1,\ldots,\lceil n/2\rceil\}$ with $Y_i = +1$.
Let $H_{+}$ denote the event that such an $i$ exists.
Supposing this is the case, the algorithm will make it into the second phase.
In this case, the procedure maintains the invariant that $f(X_{(\ell_1)}) = -1$,
$f(X_{(u_1)}) = f(X_{(\ell_2)}) = +1$, and $f(X_{(u_2)}) = -1$, where $\ell_1 < u_1 \leq \ell_2 < u_2$.
Thus, once we have $u_1 = \ell_1+1$ and $u_2 = \ell_2+1$, since $f$ is an interval, it must be
some $h_{[a,b]}$ with $a \in (\ell_1,u_1]$ and $b \in [\ell_2,u_1)$; therefore every $X_{(j)}$ with $j\leq \ell_1$ or $j \geq u_2$
has $f(X_{(j)}) = -1$, and likewise every $X_{(j)}$ with $u_1 \leq j \leq \ell_2$ has $f(X_{(j)}) = +1$;
in particular, this means $\L$ equals $\Data_{m}$, the \emph{true} labeled sequence.
But this means $\hat{h} = \alg_{p}(\Data_{m})$.  Supposing $\alg_{p}$ achieves label complexity $\Lambda_{p}$,
and that $n \geq \max\left\{ 8 + 4 \log_{2} \Lambda_{p}(\eps, f, \Px), \frac{2}{w(f)} \ln \frac{1}{\eps}\right\}$,
then $m \geq 2^{\lfloor n/4 \rfloor -1} \geq \Lambda_{p}(\eps,f,\Px)$ and
$\E\left[ \er(\hat{h}) \right]
\leq \E\left[ \er(\hat{h}) \ind_{H_{+}}\right] + (1-\P(H_{+}))
\leq \E\left[ \er( \alg_{p}(\Data_{m}))\right] + \eps
\leq 2 \eps$.
In particular, this means this active learning algorithm achieves label complexity $\Lambda_{a}$ such that,
for any $f \in \C$ with $w(f) = 0$, $\Lambda_{a}(2\eps,f,\Px) = 0$, and
for any $f \in \C$ with $w(f) > 0$,
$\Lambda_{a}(2 \eps, f, \Px)
\leq \max\left\{ 8 + 4 \log_{2} \Lambda_{p}(\eps,f,\Px), \frac{2}{w(f)} \ln\frac{1}{\eps}\right\}$.
If $(f,\Px) \in \Nontrivial(\Lambda_{p})$,
then $\frac{2}{w(f)} \ln\frac{1}{\eps} = o(\Lambda_{p}(\eps,f,\Px))$
and $8 + 4 \log_{2} \Lambda_{p}(\eps,f,\Px) = o(\Lambda_{p}(\eps,f,\Px))$,
so that $\Lambda_{a}(2\eps,f,\Px) = o(\Lambda_{p}(\eps,f,\Px))$.  Therefore, this procedure activizes $\alg_{p}$ for $\C$.

This example also brings to light some interesting phenomena in the analysis of the label complexity of active learning.
Note that unlike the thresholds example, we have a much stronger dependence on the target function in these label complexity bounds,
via the $w(f)$ quantity.  This issue is fundamental to the problem, and cannot be avoided.  In particular, when $\Px([0,x])$ is continuous,
this is the very issue that makes the \emph{minimax} label complexity for this problem
(i.e., $\min_{\Lambda_{a}} \max_{f \in \C} \Lambda_{a}(\eps,f,\Px)$) \emph{no better} than passive learning \citep*{dasgupta:05}.
Thus, this problem emphasizes the need for any informative label complexity analyses of active learning to explicitly describe
the dependence of the label complexity on the target function, as advocated by \citet*{dasgupta:05}.
This example also highlights the \emph{unverifiability} phenomenon explored by \citet*{hanneke:10a}, since in the case of $w(f) = 0$,
the error rate of the returned classifier is \emph{zero}, but (for nondegenerate $\Px$) there is no way for the algorithm to verify this fact based only on the finite
number of labels it observes.  In fact, \citet*{hanneke:10a} have shown that under continuous $\Px$, for any $f \in \C$ with $w(f) = 0$, the number of labels
required to both \emph{find} a classifier of small error rate \emph{and verify} that the error rate is small based only on observable quantities
is essentially \emph{no better} than for passive learning.

These issues are present to a small degree in the intervals example, but were easily handled in a very natural way.
The target-dependence shows up only in an initial phase of waiting for a positive example, and the always-negative
classifiers were handled by setting a \emph{default} return value.
However, we can amplify these issues so that they show up in more subtle and involved ways.
Specifically, consider the following example, 
%also 
studied by \citet*{hanneke:10a}.

\begin{example}
\label{ex:unions-of-intervals}
In the problem of learning \emph{unions of $i$ intervals}, we consider $\X = [0,1]$ and
\\$\C = \left\{h_{\mathbf{z}}(x) = \ind_{\bigcup_{j=1}^{i}[z_{2j-1},z_{2j}]}^{\pm}(x) : 0 < z_1 \leq z_2 \leq \ldots \leq z_{2i} < 1\right\}$.
\thmend
\end{example}

The challenge of this problem is that, because sometimes $z_{j} = z_{j+1}$ for some $j$ values,
we do not know how many intervals are required to minimally represent the target function: only that it is at most $i$.
This issue will be made clearer below.
We can essentially think of any effective strategy here as having two components:
one component that searches (perhaps randomly) with the purpose of identifying at least one example
from each decision region, and another component that refines our estimates of the end-points of the
regions the first component identifies.
Later, we will go through the behavior of a universal activizer for this problem in detail.

\section{Disagreement-Based Active Learning}
\label{sec:naive}

At present, perhaps the best-understood active learning algorithms are those choosing their label requests
based on disagreement among a set of remaining candidate classifiers.
The canonical algorithm of this type, a version of which we discuss below in Section~\ref{subsec:disagreement-coefficient},
was proposed by \citet*{cohn:94}.  Specifically, for any set $\H$ of classifiers, define the \emph{region of disagreement}:
%of $\H$
%
\begin{equation*}
\DIS(\H) = \left\{ x \in \X : \exists h_1,h_2 \in \H \text{ s.t. } h_1(x) \neq h_2(x)\right\}.
\end{equation*}

The basic idea of disagreement-based algorithms is that, at any given time
in the algorithm, there is a subset $V \subseteq \C$ of remaining candidates, called the \emph{version space},
which is guaranteed to contain the target $f$.  When deciding whether to request a particular label $Y_i$,
the algorithm simply checks whether $X_i \in \DIS(V)$: if so, the algorithm requests $Y_i$, and otherwise it does not.
This general strategy is reasonable, since for any $X_i \notin \DIS(V)$, the label agreed upon by $V$ must be $f(X_i)$,
so that we would get no information by requesting $Y_i$; that is, for $X_i \notin \DIS(V)$, we can accurately \emph{infer} $Y_i$
based on information already available.
This type of algorithm has recently received substantial attention, not only for its obvious elegance and simplicity,
but also because (as we discuss in Section~\ref{sec:agnostic})
there are natural ways to extend the technique to the general problem of learning with label noise and model misspecification
(the \emph{agnostic} setting).
The details of disagreement-based algorithms can vary in how they update the set $V$ and how frequently they do so, but
it turns out almost all disagreement-based algorithms share many of the same fundamental properties,
which we describe below.

\subsection{A Basic Disagreement-Based Active Learning Algorithm}
\label{subsec:basic-naive}

In Section~\ref{subsec:disagreement-coefficient}, we discuss several known results on the label complexities achievable
by these types of active learning algorithms.  However, for now let us examine a very basic algorithm of this type.
The following is intended to be a simple representative of the family of disagreement-based active learning algorithms.
It has been stripped down to the bare essentials of what makes such algorithms work.
As a result, although the gap between its label complexity and that achieved by passive learning
is not necessarily as large as those achieved by the more sophisticated disagreement-based active learning algorithms of Section~\ref{subsec:disagreement-coefficient},
it has the property that whenever those more sophisticated methods have label complexities asymptotically superior to those achieved by passive learning,
that guarantee will also be true for this simpler method, and vice versa.
The algorithm operates in only $2$ phases. In the first, it uses one batch of label requests to
reduce the version space $V$ to a subset of $\C$; in the second, it uses another batch of label requests,
this time only requesting labels for points in $\DIS(V)$.  Thus, we have isolated precisely that aspect of disagreement-based
active learning that involves improvements due to only requesting the labels of examples in the region of disagreement.
The procedure is formally defined as follows, in terms of an estimator $\hat{P}_n(\DIS(V))$ specified below.

\begin{bigboxit}
\NaiveActivizer \\ 
Input: passive algorithm $\alg_p$, label budget $n$\\
Output: classifier $\hat{h}$\\
{\vskip -2mm}\line(1,0){419}\\
0. Request the first $\lfloor n / 2 \rfloor$ labels $\{Y_1,\ldots,Y_{\lfloor n/2 \rfloor}\}$, and let $t \gets \lfloor n/2 \rfloor$\\
1. Let $V = \{ h \in \C : \er_{\lfloor n/2 \rfloor}(h) = 0\}$\\
2. Let $\hat{\Delta} \gets \hat{P}_n(\DIS(V))$\\
3. Let $\L \gets \{\}$\\
4. For $m = \lfloor n/2 \rfloor + 1, \ldots \lfloor n/2 \rfloor + \lfloor n / (4 \hat{\Delta}) \rfloor$\\
5. \quad If $X_{m} \in \DIS(V)$ and $t < n$, request the label $Y_{m}$ of $X_{m}$, and let $\hat{y} \gets Y_{m}$ and $t \gets t+1$\\
6. \quad Else let $\hat{y} \gets h(X_{m})$ for an arbitrary $h \in V$\\
7. \quad Let $\L \gets \L \cup \{(X_{m},\hat{y})\}$\\
8. Return $\alg_p(\L)$
\end{bigboxit}

\NaiveActivizer~depends on a data-dependent estimator $\hat{P}_n(\DIS(V))$ of $\Px(\DIS(V))$, which we
can define in a variety of ways using only \emph{unlabeled} examples.  In particular, for the theorems
below, we will take the following definition for $\hat{P}_n(\DIS(V))$, designed to be a confidence upper
bound on $\Px(\DIS(V))$.
Let $\U_n = \{X_{n^2+1},\ldots,X_{2n^2}\}$.
Then define
\begin{equation}
\label{eqn:pdisv-est}
\hat{P}_n(\DIS(V)) = \max\left\{ \frac{2}{n^2} \sum_{x \in \U_n} \ind_{\DIS(V)}(x), \frac{4}{n} \right\}.
\end{equation}

\NaiveActivizer~is divided into two stages: one stage where
we focus on reducing $V$, and a second stage where we construct the sample $\L$
for the passive algorithm.  This might intuitively seem somewhat wasteful, as one might
wish to use the requested labels from the first stage to augment those in the second stage
when constructing $\L$, thus feeding all of the observed labels into the passive
algorithm $\alg_{p}$.  Indeed, this can improve the label complexity in some cases
(albeit only by a constant factor); however, in order to get the \emph{general} property of being
an activizer for \emph{all} passive algorithms $\alg_{p}$, we construct the sample $\L$
so that the conditional distribution of the $\X$ components in $\L$ given $|\L|$
is $\Px^{|\L|}$, so that it is (conditionally) an i.i.d. sample, which is essential to our analysis.
The choice of the number of (unlabeled) examples to process in the second stage
guarantees (by a Chernoff bound) that the ``$t < n$'' constraint in Step 5 is redundant;
this is a trick we will employ in several of the methods below.  As explained above,
because $f \in V$, this implies that every $(x,y) \in \L$ has $y = f(x)$.

To give some basic intuition for how this algorithm behaves, consider the example of learning threshold classifiers (Example~\ref{ex:thresholds});
to simplify the explanation, for now we ignore the fact that $\hat{P}_n$ is only an estimate, as well as the ``$t < n$'' constraint in Step 5 (both of which will be addressed in the general analysis below).
In this case, suppose the target function is $f = h_{z}$.  Let $a = \max\{ X_{i} : X_{i} < z, 1 \leq i \leq \lfloor n/2 \rfloor\}$ and $b = \min\{ X_{i} : X_{i} \geq z, 1 \leq i \leq \lfloor n/2 \rfloor\}$.
Then $V = \{h_{z^{\prime}} : a < z^{\prime} \leq b\}$ and $\DIS(V) = (a,b)$, so that the second phase of the algorithm only requests labels for a number of points in
the region $(a,b)$.  With probability $1-\eps$, the probability mass in this region is at most $O(\log(1/\eps)/n)$,
so that $|\L| \geq \ell_{n,\eps} = \Omega(n^2 / \log(1/\eps))$; also, since the labels in $\L$ are all correct, and the $X_{m}$ values in $\L$
are conditionally iid (with distribution $\Px$) given $|\L|$,
we see that the conditional distribution of $\L$ given $|\L|=\ell$ is the same as the (unconditional) distribution of $\Data_{\ell}$.
In particular, if $\alg_p$ achieves label complexity $\Lambda_p$, and $\hat{h}_{n}$ is the classifier returned by \NaiveActivizer~applied to $\alg_p$,
then for any $n = \Omega\left( \sqrt{\Lambda_{p}(\eps,f,\Px) \log(1/\eps)}\right)$ chosen so that $\ell_{n,\eps} \geq \Lambda_{p}(\eps,f,\Px)$,
we have
\begin{equation*}
\E\left[ \er\left(\hat{h}_{n}\right) \right]
\leq \eps + \sup\limits_{\ell \geq \ell_{n,\eps}} \E\left[ \er\left(\alg_{p}(\Data_{\ell})\right)\right]
\leq \eps + \sup\limits_{\ell \geq \Lambda_{p}(\eps,f,\Px)} \E\left[\er\left(\alg_{p}(\Data_{\ell})\right)\right]
\leq 2\eps.
\end{equation*}
This indicates the active learning algorithm
%resulting from applying \NaiveActivizer~to $\alg_{p}$
achieves label complexity $\Lambda_{a}$ with
$\Lambda_{a}(2\eps,f,\Px) = O\left( \sqrt{\Lambda_{p}(\eps,f,\Px) \log(1/\eps)}\right)$.
In particular, if $\infty > \Lambda_{p}(\eps,f,\Px) = \omega(\log(1/\eps))$, then $\Lambda_{a}(2\eps,f,\Px) = o(\Lambda_{p}(\eps,f,\Px))$.
%and if $\Lambda_{p}(\eps,f,\Px) = O(\log(1/\eps))$, then $\Lambda_{a}(2\eps,f,\Px) = O(\log(1/\eps))$ as well.
Therefore, \NaiveActivizer~is a universal activizer for the space of threshold classifiers.

In contrast, consider the problem of learning interval classifiers (Example~\ref{ex:intervals}).
In this case, suppose the target function $f$ has $\Px(x : f(x) = +1) = 0$, and that $\Px$ is uniform in $[0,1]$.
Since (with probability one) every $Y_i = -1$, we have $V = \{ h_{[a,b]} : \{X_1,\ldots,X_{\lfloor n/2 \rfloor}\} \cap [a,b] = \emptyset\}$.
But this contains classifiers $h_{[a,a]}$ for every $a \in (0,1) \setminus \{X_1,\ldots,X_{\lfloor n/2 \rfloor}\}$, so that
$\DIS(V) = (0,1) \setminus \{X_1,\ldots,X_{\lfloor n/2 \rfloor}\}$.  Thus, $\Px(\DIS(V)) = 1$, and $|\L| = O(n)$;
that is, $\alg_{p}$ gets run with no more labeled examples than simple passive learning would use.
This indicates we should not expect \NaiveActivizer~to be a universal activizer for interval classifiers.  Below, we formalize this,
by constructing a passive learning algorithm $\alg_p$ that \NaiveActivizer~does not activize for this scenario.

\subsection{The Limiting Region of Disagreement}
\label{subsec:core}

In this subsection, we generalize the examples from the previous subsection.
Specifically, we prove that the performance of \NaiveActivizer~is intimately tied to a particular
limiting set, referred to as the \emph{disagreement core}.  A similar definition was given by \citet*{hanneke:10a} (there referred to as the \emph{boundary},
for reasons that will become clear below);
it is also related to certain quantities in the work of \citet*{hanneke:07b,hanneke:11a} described below in Section~\ref{subsec:disagreement-coefficient}.
\begin{definition}
\label{def:core}
Define the \emph{disagreement core} of a classifier $f$ with respect to a set of classifiers $\H$ and distribution $P$ as
\begin{equation*}
\partial_{\H,P} f = \lim\limits_{r \to 0} \DIS\left(\Ball_{\H,P}(f,r)\right).
\end{equation*}
\upthmend{-1.1cm}
\end{definition}
When $P=\Px$, the true distribution on $\X$, and $\Px$ is clear from the context, we abbreviate this as $\partial_{\H} f = \partial_{\H,\Px} f$;
if additionally $\H = \C$, the full concept space, which is clear from the context, we further abbreviate this as $\partial f = \partial_{\C} f = \partial_{\C,\Px} f$.

As we will see, disagreement-based algorithms often tend to focus their label requests around the disagreement core of the target function.
As such, the concept of the disagreement core will be essential in much of our discussion below.
We therefore go through a few examples to build intuition about this concept and its properties.
Perhaps the simplest example to start with is $\C$ as the class of \emph{threshold} classifiers (Example~\ref{ex:thresholds}), under $\Px$ uniform on $[0,1]$.
For any $h_{z} \in \C$ and sufficiently small $r > 0$, $\Ball(f,r) = \{ h_{z^{\prime}} : |z^{\prime} - z| \leq r\}$,
and $\DIS(\Ball(f,r)) = [z-r, z+r)$.  Therefore, $\partial h_{z} = \lim\limits_{r \to 0} \DIS(\Ball(h_{z},r)) = \lim\limits_{r \to 0} [z-r,z+r) = \{z\}$.
Thus, in this case, the disagreement core of $h_{z}$ with respect to $\C$ and $\Px$ is precisely the decision boundary of the classifier.
As a slightly more involved example, consider again the example of \emph{interval} classifiers (Example~\ref{ex:intervals}), again under $\Px$ uniform on $[0,1]$.
Now for any $h_{[a,b]} \in \C$ with $b-a > 0$, for any sufficiently small $r > 0$, $\Ball(h_{[a,b]},r) = \{ h_{[a^{\prime},b^{\prime}]} : |a-a^{\prime}| + |b-b^{\prime}| \leq r\}$,
and  $\DIS(\Ball(h_{[a,b]},r)) = [a-r,a+r) \cup (b-r,b+r]$.  Therefore, $\partial h_{[a,b]} = \lim\limits_{r \to 0} \DIS(\Ball(h_{[a,b]},r)) = \lim\limits_{r \to 0} [a-r,a+r) \cup (b-r,b+r] = \{a,b\}$.
Thus, in this case as well, the disagreement core of $h_{[a,b]}$ with respect to $\C$ and $\Px$ is again the decision boundary of the classifier.

As the above two examples illustrate, $\partial f$ often corresponds to the decision boundary of $f$ in some geometric interpretation of $\X$ and $f$.
Indeed, under fairly general conditions on $\C$ and $\Px$,
the disagreement core of $f$ does correspond to (a subset of) the set of points dividing the two label regions of $f$; for instance, \citet*{friedman:09} derives sufficient conditions, under which this is the case.
In these cases, the behavior of disagreement-based active learning algorithms can often be interpretted in the intuitive terms of seeking
label requests near the decision boundary of the target function, to refine an estimate of that boundary.
However, in some more subtle scenarios this is no longer the case, for interesting reasons.  To illustrate
this, let us continue the example of interval classifiers from above, but now consider $h_{[a,a]}$ (i.e., $h_{[a,b]}$ with $a=b$).  This time,
for any $r \in (0,1)$ we have $\Ball(h_{[a,a]},r) = \{h_{[a^{\prime},b^{\prime}]} \in \C : b^{\prime} - a^{\prime} \leq r\}$, and
$\DIS(\Ball(h_{[a,a]},r)) = (0,1)$.  Therefore, $\partial h_{[a,a]} = \lim\limits_{r \to 0} \DIS(\Ball(h_{[a,a]},r)) = \lim\limits_{r \to 0} (0,1) = (0,1)$.

This example shows that in some cases, the disagreement core does not correspond to the decision boundary of the classifier, and indeed has $\Px(\partial f) > 0$.
Intuitively, as in the above example, this typically happens when the decision surface of the classifier is in some sense
\emph{simpler} than it could be.
For instance, consider the space $\C$ of \emph{unions of two intervals} (Example~\ref{ex:unions-of-intervals} with $i=2$) under uniform $\Px$.
The classifiers $f \in \C$ with $\Px(\partial f) > 0$ are precisely those representable (up to probability zero differences)
as a single interval.  The others (with $0 < z_1 < z_2 < z_3 < z_4 < 1$) have $\partial h_{\mathbf{z}} = \{z_1,z_2,z_3,z_4\}$.
In these examples, the $f \in \C$ with $\Px(\partial f) > 0$ are not only simpler than other nearby classifiers in $\C$,
but they are also in some sense \emph{degenerate} relative to the rest of $\C$; however, it turns out this is not always
the case, as there exist scenarios $(\C,\Px)$, even with $\vc=2$, and even with \emph{countable} $\C$, for which
\emph{every} $f \in \C$ has $\Px(\partial f) > 0$; in these cases, every classifier is in some important sense \emph{simpler}
than some other subset of nearby classifiers in $\C$.

In Section~\ref{subsec:core-dis-based-theorem},
we show that the label complexity of disagreement-based active learning is intimately tied to the disagreement core.  In particular,
scenarios where $\Px(\partial f) > 0$, such as those mentioned above, lead to the conclusion that disagreement-based
methods are sometimes insufficient for activized learning.
This motivates the design of more sophisticated methods in Section~\ref{sec:activizer}, which overcome this deficiency,
along with a corresponding refinement of the definition of ``disagreement core '' in Section~\ref{subsec:sequential-activizer}
that eliminates the above issue with ``simple'' classifiers.

\subsection{Necessary and Sufficient Conditions for Disagreement-Based Activized Learning}
\label{subsec:core-dis-based-theorem}

In the specific case of \NaiveActivizer, for large $n$ we may intuitively expect it to focus its second batch of
label requests in and around the disagreement core of the target function.
Thus, whenever $\Px(\partial f) = 0$, we should expect the label requests to be quite focused, and therefore the algorithm should
achieve higher accuracy compared to passive learning.  On the other hand, if $\Px(\partial f) > 0$, then the label requests will \emph{not}
become focused beyond a constant fraction of the space, so that the improvements achieved by \NaiveActivizer~over passive
learning should be, at best, a constant factor.  This intuition is formalized in the following general theorem,
the proof of which is included in Appendix~\ref{app:naive}.

\begin{theorem}
\label{thm:naive}
For any VC class $\C$, \NaiveActivizer~is a universal activizer for $\C$ if and only if
every $f \in \C$ and distribution $\Px$ has $\Px\left(\partial_{\C,\Px} f\right) = 0$.
\thmend
\end{theorem}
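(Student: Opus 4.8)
The plan is to prove both implications of the equivalence; throughout write $\partial f = \partial_{\C,\Px} f$. For the \emph{if} direction, assume $\Px(\partial_{\C,\Px} f) = 0$ for every $f \in \C$ and every $\Px$, fix a passive algorithm $\alg_p$ with label complexity $\Lambda_p$, and fix $(f,\Px) \in \Nontrivial(\Lambda_p)$. The heart of the argument is that $\DIS(V)$ becomes negligible: after Step~1 the version space $V = \C[\Data_{\lfloor n/2 \rfloor}]$ satisfies $V \subseteq \Ball_{\C,\Px}(f, r_n)$ with $r_n = O\!\big(\tfrac{\vc \log n}{n}\big)$ with probability at least $1 - 1/n$ (the classical realizable-case generalization bound), and since $r \mapsto \DIS(\Ball_{\C,\Px}(f,r))$ is nonincreasing with $\bigcap_{r>0}\DIS(\Ball_{\C,\Px}(f,r)) = \partial f$, continuity of measure gives $\Px\big(\DIS(\Ball_{\C,\Px}(f,r))\big) \downarrow \Px(\partial f) = 0$; hence with high probability $\Px(\DIS(V)) \le \Delta(r_n)$ where $\Delta(r) \to 0$. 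A Chernoff bound on $\U_n$ (which has size $n^2$) then shows $\hat{\Delta} = \hat{P}_n(\DIS(V))$ lies between $\Px(\DIS(V))$ and $4\max\{\Px(\DIS(V)), 1/n\}$ with high probability, so $|\L| = \lfloor n/(4\hat{\Delta})\rfloor \ge \ell(n) := \tfrac{1}{16}\min\{n/\Delta(r_n),\, n^2\} - 1$, and $\Delta(r_n) \to 0$ forces $\ell(n) = \omega(n)$ while still $\ell(n) \to \infty$. On the same event the Chernoff argument sketched after the algorithm makes the ``$t < n$'' guard in Step~5 vacuous, so every label in $\L$ is correct (requested labels equal $f(X_m)$; inferred ones equal $f(X_m)$ because $f \in V$ and $X_m \notin \DIS(V)$), and since the $X_m$ populating $\L$ form a contiguous block disjoint from $\Data_{\lfloor n/2\rfloor}$ and $\U_n$, the conditional law of $\L$ given $|\L| = \ell$ is that of $\Data_\ell$. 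Now choose $n = n(\eps)$ least with $\ell(n) \ge \Lambda_p(\eps,f,\Px)$; this is finite since $\ell(n)\to\infty$, and $n(\eps)\to\infty$ as $\eps\to0$ since $(f,\Px)\in\Nontrivial(\Lambda_p)$ makes $\Lambda_p(\eps,f,\Px)\to\infty$, and then $\ell(m)=\omega(m)$ with $\ell(n(\eps)-1)<\Lambda_p(\eps,f,\Px)$ gives $n(\eps)=o(\Lambda_p(\eps,f,\Px))$. Restricting to the good event (arranged to be measurable with respect to $\Data_{\lfloor n/2\rfloor}$, $\U_n$, and an independent Chernoff event, so that on it $\L$ really is a conditionally i.i.d.\ correctly-labeled sample of size $\ge \Lambda_p(\eps,f,\Px)$) and bounding its complement by $O(1/n(\eps))=o(1)$, the returned $\hat h_n=\alg_p(\L)$ obeys $\E[\er(\hat h_n)]\le \sup_{\ell\ge\Lambda_p(\eps,f,\Px)}\E[\er(\alg_p(\Data_\ell))]+o(1)\le 2\eps$ for small $\eps$. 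Hence \NaiveActivizer\ applied to $\alg_p$ achieves a label complexity $\Lambda_a$ with $\Lambda_a(2\eps,f,\Px)\le n(\eps)=o(\Lambda_p(\eps,f,\Px))$; as $\alg_p$ and $(f,\Px)$ were arbitrary, \NaiveActivizer\ is a universal activizer.

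For the \emph{only if} direction, suppose some $f^\star \in \C$ and distribution $\Px^\star$ have $\Px^\star(\partial_{\C,\Px^\star} f^\star) =: p > 0$; I will build one passive algorithm that \NaiveActivizer\ fails to activize on $(f^\star,\Px^\star)$. Since $\partial_{\C,\Px^\star} f^\star \subseteq \DIS(\Ball_{\C,\Px^\star}(f^\star,r))$ for all $r > 0$, each such ball contains some $h \ne f^\star$; fix $h_{\mathrm{bad}} \in \C$ with $\delta_0 := \Px^\star(x : h_{\mathrm{bad}}(x) \ne f^\star(x)) > 0$, and let $\alg_p$ be: on input $\L$, with probability $(|\L|+1)^{-1/2}$ (an independent internal coin) return $h_{\mathrm{bad}}$, otherwise return an arbitrary element of $\C[\L]$ (or $h_{\mathrm{bad}}$ if $\L = \varnothing$). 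Any consistent hypothesis has expected error $O(\vc \log m / m)$ in the realizable case, so $\E[\er(\alg_p(\Data_m))] \to 0$ for every $(f,\Px)$ and $\alg_p$ achieves some $\Lambda_p$; for $(f^\star,\Px^\star)$ one checks $\E[\er(\alg_p(\Data_m))] = \Theta(m^{-1/2})$, hence $\Lambda_p(\eps, f^\star, \Px^\star) = \Theta(\eps^{-2})$, which is finite for all $\eps$ and grows faster than every $\Polylog(1/\eps)$ function, so $(f^\star,\Px^\star) \in \Nontrivial(\Lambda_p)$.

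The crucial claim is that, when \NaiveActivizer\ is run on $\alg_p$ under $(f^\star,\Px^\star)$, $\DIS(V)$ cannot shrink: for each $x \in \partial_{\C,\Px^\star} f^\star$ pick (by definition of the disagreement core) some $h_x \in \Ball_{\C,\Px^\star}(f^\star, 1/m)$ with $h_x(x) \ne f^\star(x)$, where $m = \lfloor n/2 \rfloor$; since $f^\star \in V = \C[\Data_m]$ always and $\P(h_x \in V) = (1 - \Px^\star(h_x \triangle f^\star))^m \ge (1-1/m)^m \ge e^{-2}/2$ for $m$ large, we get $\P(x \in \DIS(V)) \ge e^{-2}/2$, and integrating over $x \in \partial_{\C,\Px^\star} f^\star$ (Fubini) gives $\E[\Px^\star(\DIS(V))] \ge (e^{-2}/2)\,p$; a reverse Markov inequality (using $\Px^\star(\DIS(V)) \le 1$) yields $\P(\Px^\star(\DIS(V)) \ge p_0) \ge q_0$ for constants $p_0, q_0 > 0$. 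On that event a Chernoff bound on $\U_n$ forces $\hat{\Delta} \ge p_0$, hence $|\L| \le n/(4p_0)$, while $\alg_p$ returns $h_{\mathrm{bad}}$ (error $\delta_0$) with probability $|\L|^{-1/2}$ regardless of the content of $\L$; therefore $\E[\er(\hat h_n)] \ge (q_0/2)\,\delta_0\,(n/(4p_0))^{-1/2} = \Omega(n^{-1/2})$ for $n$ large. Consequently, for every constant $c \ge 1$, any label complexity $\Lambda_a$ of the \NaiveActivizer-activized $\alg_p$ must satisfy $\Lambda_a(c\eps, f^\star, \Px^\star) = \Omega((c\eps)^{-2}) = \Theta(c^{-2})\cdot \Lambda_p(\eps, f^\star, \Px^\star)$, which is never $o(\Lambda_p(\eps, f^\star, \Px^\star))$; so \NaiveActivizer\ does not activize $\alg_p$ and hence is not a universal activizer.

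The step I expect to be the main obstacle is the ``crucial claim'' above: controlling the \emph{random} version space $V$ from \emph{below}, i.e.\ showing $\Px^\star(\DIS(V))$ stays bounded away from $0$ with constant probability using only $\Px^\star(\partial_{\C,\Px^\star} f^\star) > 0$. This is the reverse of the (routine) shrinkage used in the \emph{if} direction, and it rests on the observation that every classifier within $\Px^\star$-distance $1/m$ of $f^\star$ survives in $V$ with constant probability, combined with a Fubini integration over the disagreement core. A secondary subtlety is calibrating $\alg_p$'s error to decay \emph{polynomially} (here like $m^{-1/2}$): a slower rate such as $1/\log m$ would make $\Lambda_a(c\eps)/\Lambda_p(\eps)\to 0$ for $c>1$ and the counterexample would collapse, whereas with a polynomial rate the constant-factor loss in $\eps$ only costs a constant factor in label complexity. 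The remaining work on the \emph{if} side is bookkeeping — arranging the ``good'' event to be independent of the second-stage block so that $\L$ is genuinely a conditionally i.i.d.\ correctly-labeled sample when passed to $\alg_p$.
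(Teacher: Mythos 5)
Your ``only if'' argument is correct and takes a genuinely different route from the paper's. The paper constructs a \emph{deterministic} passive algorithm that returns, from a fixed nested sequence $\{h_i\}$ approaching $f^\star$, the $h_i$ whose error is just above $1/|\L|$; it then relies on the almost-sure inequality $\Px(\DIS(V))\ge\Px(\partial f)$, which it extracts from a much more general shatter-core lemma (Lemma~\ref{lem:Vshat-to-Boundaries}) needed later for the higher-order results. You instead use a \emph{randomized} $\alg_p$ that ignores its input with probability $|\L|^{-1/2}$, and your Fubini-plus-reverse-Markov argument (each $h_x\in\Ball(f^\star,1/m)$ survives in $V$ with constant probability, so $\E[\Px^\star(\DIS(V))]\ge p/4$, so $\Px^\star(\DIS(V))\ge p/8$ with probability $\ge p/8$) gives only a constant-probability lower bound on $\Px^\star(\DIS(V))$. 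That is weaker than the paper's almost-sure statement, but it is entirely sufficient here and more self-contained. Your remark that the error of $\alg_p$ must decay polynomially to survive the constant-factor slack $c$ in Definition~\ref{defn:activizer} is exactly the point; the paper's $\Theta(1/\eps)$ construction is making the same choice.

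Your ``if'' direction, however, has a genuine gap, and it is not where you predicted. The control of $\DIS(V)$ is fine; the problem is the confidence parameter in the VC bound. By taking $V\subseteq\Ball(f,r_n)$ to hold with probability $1-1/n$, with $r_n$ free of $\eps$, you obtain a clean $\eps$-independent $\ell(n)$ that you can invert directly, but you are left with a failure probability of order $1/n$ to absorb into the $\eps$ budget. Your final inequality $\E[\er(\hat h_n)]\le\eps+O(1/n(\eps))\le2\eps$ therefore needs $n(\eps)\ge C/\eps$, and your choice of $n(\eps)$ does not provide this: $(f,\Px)\in\Nontrivial(\Lambda_p)$ only forces $\Lambda_p(\eps,f,\Px)$ to grow faster than every function in $\Polylog(1/\eps)$, which is compatible with, say, $\Lambda_p(\eps,f,\Px)=\exp\{\sqrt{\log(1/\eps)}\}$, and there is no guarantee that the least $n$ with $\ell(n)\ge\Lambda_p(\eps,f,\Px)$ is at least $C/\eps$. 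In that regime the $O(1/n(\eps))$ term swamps $\eps$ and the label-complexity claim for $\Lambda_a(2\eps,f,\Px)$ fails. The paper avoids this by using confidence $\eps/2$ in the VC bound: the failure probability is then $\eps/2$ plus exponentially-small-in-$n$ terms, which are controlled once $n=\Omega(\log(1/\eps))$, and that threshold is always $o(\Lambda_p(\eps,f,\Px))$ on $\Nontrivial(\Lambda_p)$. The price is that $\vrad$, and hence the quantity $L(n;\eps)$ playing the role of your $\ell$, now carries a $\log(1/\eps)$ factor, so the inversion $L^{-1}(\Lambda_p(\eps,f,\Px);\eps)=o(\Lambda_p(\eps,f,\Px))$ must be established with both arguments moving together — exactly what Lemma~\ref{lem:inverse-little-o} does. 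Your argument is repairable by switching to $\eps/2$ confidence, making $\ell$ $\eps$-dependent, and reproducing that inversion lemma; as written, it is incomplete.
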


While the formal proof is given in Appendix~\ref{app:naive}, the general idea is simple.
As we always have $f \in V$, any $\hat{y}$ inferred in Step 6 must equal $f(x)$,
so that all of the labels in $\L$ are correct.
Also, as $n$ grows large, classic results on passive learning imply the diameter of the
set $V$ will become small, shrinking to zero as $n \to \infty$ \citep*{vapnik:82,blumer:89}.
Therefore, as $n \to \infty$, $\DIS(V)$ should converge to a subset of $\partial f$,
so that in the case $\Px(\partial f) = 0$, we have $\hat{\Delta} \to 0$; thus $|\L| \gg n$,
which implies an asymptotic strict improvement in label complexity over the passive
algorithm $\alg_p$ that $\L$ is fed into in Step 8.
On the other hand, since $\partial f$ is defined by classifiers arbitrarily close to $f$,
it is unlikely that any finite sample of correctly labeled examples can contradict
enough classifiers to make $\DIS(V)$ significantly smaller than $\partial f$, so that
we always have $\Px(\DIS(V)) \geq \Px(\partial f)$.  Therefore, if $\Px(\partial f) > 0$, then
$\hat{\Delta}$ converges to some nonzero constant, so that $|\L| = O(n)$,
representing only a constant factor improvement in label complexity.
In fact, as is implied from this sketch (and is proven in Appendix~\ref{app:naive}),
the targets $f$ and distributions $\Px$ for
which \NaiveActivizer~achieves asymptotic strict improvements for
all passive learning algorithms (for which $f$ and $\Px$ are nontrivial)
are precisely those (and only those) for which
$\Px(\partial_{\C,\Px} f) = 0$.

There are some general conditions under which the zero-probability disagreement cores condition
of Theorem~\ref{thm:naive} will hold.  For instance, it is not difficult to show this will always
hold when $\X$ is countable; furthermore, with some effort one can show it
will hold for most classes having VC dimension one (e.g., any countable $\C$ with $d=1$).
However, as we have seen, not all spaces $\C$ satisfy this zero-probability disagreement cores property.
In particular, for the interval classifiers studied in Section~\ref{subsec:core},
we have $\Px(\partial h_{[a,a]}) = \Px((0,1)) = 1$.  Indeed, the aforementioned special cases aside,
for \emph{most} nontrivial spaces $\C$, one can construct distributions $\Px$ that
in some sense mimic the intervals problem, so that we should typically expect
disagreement-based methods will \emph{not} be activizers.
For detailed discussions of various scenarios where the $\Px(\partial_{\C,\Px} f) = 0$
condition is (or is not) satisfied for various $\C$, $\Px$, and $f$, see
the works of \citet*{hanneke:thesis,hanneke:07b,hanneke:11a,hanneke:10a,friedman:09,wang:09,wang:11}.

\section{Beyond Disagreement: A Basic Activizer}
\label{sec:activizer}

Since the zero-probability disagreement cores condition of Theorem~\ref{thm:naive} is not always satisfied,
we are left with the question of whether there could be other techniques for active learning,
beyond simple disagreement-based methods, which could activize \emph{every} passive learning
algorithm for \emph{every} VC class.  In this section, we present an
entirely new type of active learning algorithm, unlike anything in the existing literature,
and we show that indeed it is a universal activizer for any class $\C$ of finite VC dimension.

\subsection{A Basic Activizer}
\label{subsec:basic-activizer}

As mentioned, the case $\Px(\partial f) = 0$ is already handled nicely by disagreement-based methods,
since the label requests made in the second stage of \NaiveActivizer~will become focused into a small region,
and $\L$ therefore grows faster than $n$.
Thus, the primary question we are faced with is what to do when $\Px(\partial f) > 0$.  Since (loosely speaking)
we have $\DIS(V) \to \partial f$ in \NaiveActivizer, $\Px(\partial f) > 0$ corresponds to scenarios where
the label requests of \NaiveActivizer~will not become focused beyond a certain extent; specifically,
since $\Px(\DIS(V) \oplus \partial f) \to 0$ almost surely (where $\oplus$ is the symmetric difference),
\NaiveActivizer~will request labels for a constant fraction of the examples in $\L$.

On the one hand, this is definitely a major problem for disagreement-based methods, since it prevents them
from improving over passive learning in those cases.  On the other hand, if we do not restrict ourselves to
disagreement-based methods, we may actually be able to exploit properties of this scenario,
so that it works to our \emph{advantage}.  In particular, since
$\Px(\DIS(V) \oplus \partial_{\C} f) \to 0$ and $\Px(\partial_{V} f \oplus \partial_{\C} f) = 0$ (almost surely) in \NaiveActivizer,
for sufficiently large $n$ a random point $x_1$ in $\DIS(V)$ is likely to be in $\partial_{V} f$.
We can exploit this fact by using $x_1$ to split $V$ into two subsets: $V[(x_1,+1)]$ and $V[(x_1,-1)]$.
Now, if $x_1 \in \partial_{V} f$, then (by definition of the disagreement core) $\inf\limits_{h \in V[(x_1,+1)]} \er(h) = \inf\limits_{h \in V[(x_1,-1)]} \er(h) = 0$.
Therefore, for almost every point $x \notin \DIS(V[(x_1,+1)])$, the label agreed upon for $x$ by classifiers in $V[(x_1,+1)]$
should be $f(x)$.  Similarly, for almost every point $x \notin \DIS(V[(x_1,-1)])$, the label agreed upon for $x$ by classifiers
in $V[(x_1,-1)]$ should be $f(x)$.  Thus, we can accurately \emph{infer} the label of any point $x \notin \DIS(V[(x_1,+1)]) \cap \DIS(V[(x_1,-1)])$
(except perhaps a probability zero subset).  With these sets $V[(x_1,+1)]$ and $V[(x_1,-1)]$ in hand, there is no longer a need to
request the labels of points for which either of them has agreement about the label, and we can focus our label requests to the region
$\DIS(V[(x_1,+1)]) \cap \DIS(V[(x_1,-1)])$, which may be \emph{much smaller} than $\DIS(V)$.
Now if $\Px(\DIS(V[(x_1,+1)]) \cap \DIS(V[(x_1,-1)])) \to 0$, then the label requests will become focused to a shrinking region,
and by the same reasoning as for Theorem~\ref{thm:naive} we can asymptotically achieve strict improvements over passive learning
by a method analogous to \NaiveActivizer~(with changes as described above).

Already this provides a significant improvement over disagreement-based methods in many cases; indeed, in some cases (such as intervals)
this already addresses the nonzero-probability disagreement core issue in Theorem~\ref{thm:naive}.
In other cases (such as unions of two intervals), it does not completely address the issue, since for some targets
we do not have $\Px(\DIS(V[(x_1,+1)]) \cap \DIS(V[(x_1,-1)])) \to 0$.  However, by repeatedly applying this
same reasoning, we \emph{can} address the issue in full generality.  Specifically, if
$\Px(\DIS(V[(x_1,+1)]) \cap \DIS(V[(x_1,-1)])) \nrightarrow 0$, then $\DIS(V[(x_1,+1)]) \cap \DIS(V[(x_1,-1)])$ essentially converges
to a region $\partial_{\C[(x_1,+1)]} f \cap \partial_{\C[(x_1,-1)]} f$, which has nonzero probability, and is nearly equivalent to
$\partial_{V[(x_1,+1)]} f \cap \partial_{V[(x_1,-1)]} f$.  Thus, for sufficiently large $n$,
a random $x_2$ in $\DIS(V[(x_1,+1)]) \cap \DIS(V[(x_1,-1)])$ will likely be in $\partial_{V[(x_1,+1)]} f \cap \partial_{V[(x_1,-1)]} f$.
In this case, we can repeat the above argument, this time splitting $V$ into four sets
($V[(x_1,+1)][(x_2,+1)]$, $V[(x_1,+1)][(x_2,-1)]$, $V[(x_1,-1)][(x_2,+1)]$, and $V[(x_1,-1)][(x_2,-1)]$),
each with infimum error rate equal zero, so that for any point $x$ in the region of agreement of any of these
four sets, the agreed-upon label will (almost surely) be $f(x)$, so that we can infer that label.  Thus, we need
only request the labels of those points in the \emph{intersection} of all four regions of disagreement.
We can further repeat this process as many times as needed, until we get a partition of $V$ with
shrinking probability mass in the intersection of the regions of disagreement, which (as above) can
then be used to obtain 
asymptotic 
improvements over passive learning.

Note that the above argument can be written more concisely in terms of \emph{shattering}.
That is, any $x \in \DIS(V)$ is simply an $x$ such that $V$ shatters $\{x\}$;
a point $x \in \DIS(V[(x_1,+1)]) \cap \DIS(V[(x_1,-1)])$ is simply one for which
$V$ shatters $\{x_1,x\}$, and for any $x \notin \DIS(V[(x_1,+1)]) \cap \DIS(V[(x_1,-1)])$,
the label $y$ we infer about $x$ has the property that the set $V[(x,-y)]$ does not shatter $\{x_1\}$.
This continues for each repetition of the above idea, with $x$ in the intersection of the four
regions of disagreement simply being one for which $V$ shatters $\{x_1,x_2,x\}$, and so on.
In particular, this perspective makes it clear that we need only repeat this idea at most $d$ times
to get a shrinking intersection region, since no set of $d+1$ points is shatterable.
Note that there may be unobservable factors (e.g., the target function) determining the appropriate number of iterations
of this idea sufficient to have a shrinking probability of requesting a label, while maintaining the accuracy of inferred labels.
To address this, we can simply try all $\vc+1$ possibilities,
and then select one of the resulting $\vc+1$ classifiers via a 
kind of tournament of pairwise comparisons.
Also, in order to reduce the probability of a mistaken inference due to $x_1 \notin \partial_{V} f$ (or similarly for later $x_i$),
we can replace each single $x_i$ with multiple samples, and then take a majority vote over whether to
infer the label, and which label to infer if we do so; generally, we can think of this as estimating certain
probabilities, and below we write these estimators as $\hat{P}_{m}$, and discuss the details of their implementation later.
Combining \NaiveActivizer~with the above reasoning motivates a new type of active learning algorithm,
referred to as \BasicActivizer~below, and stated as follows.

\begin{bigboxit}
\BasicActivizer \\ 
Input: passive algorithm $\alg_p$, label budget $n$\\
Output: classifier $\hat{h}$\\
{\vskip -2mm}\line(1,0){419}\\
0.\phantom{0} Request the first $m_n = \lfloor n/3 \rfloor$ labels, $\left\{ Y_1, \ldots, Y_{m_n}\right\}$, and let $t \gets m_n$\\
1.\phantom{0} Let $V = \{ h \in \C : \er_{m_n}(h) = 0\}$\\
2.\phantom{0} For $k = 1,2,\ldots,d+1$\\
3.\phantom{0} \quad $\hat{\Delta}^{(k)} \gets \hat{P}_{m_n}\left(x : \hat{P}\left(S \in \X^{k-1} : V \text{ shatters } S \cup \{x\} | V \text{ shatters } S\right) \geq 1/2 \right)$ \\
4.\phantom{0} \quad Let $\L_k \gets \{\}$\\
5.\phantom{0} \quad For $m = m_n + 1, \ldots, m_n + \lfloor n / (6 \cdot 2^k \hat{\Delta}^{(k)}) \rfloor$ \\
6.\phantom{0} \qquad If $\hat{P}_{m}\left(S \in \X^{k-1} : V \text{ shatters } S \cup \{X_{m}\} | V \text{ shatters } S\right) \geq 1/2$ and $t < \lfloor 2 n / 3 \rfloor$\\
7.\phantom{0} \qquad\quad Request the label $Y_{m}$ of $X_{m}$, and let $\hat{y} \gets Y_{m}$ and $t \gets t+1$\\
8.\phantom{0} \qquad Else, let
$\hat{y} \gets \!\!\argmax\limits_{y \in \{-1,+1\}} \!\hat{P}_{m}\!\left(S \in \X^{k-1} \!:\! V[(X_{m},-y)] \text{ does not shatter } S | V \text{ shatters } S \right)$\\
9.\phantom{0} \qquad Let $\L_k \gets \L_k \cup \{(X_{m},\hat{y})\}$ \\
10. Return $\ActiveSelect(\{\alg_p(\L_1),\alg_p(\L_2),\ldots,\alg_p(\L_{d+1})\},\lfloor n/3 \rfloor, \{X_{m_n + \max_{k} |\L_k|+1},\ldots\})$
\end{bigboxit}
\begin{bigboxit}
Subroutine: $\ActiveSelect$\\
Input: set of classifiers $\{h_1,h_2,\ldots,h_{N}\}$, label budget $m$, sequence of unlabeled examples $\U$\\
Output: classifier $\hat{h}$\\
{\vskip -2mm}\line(1,0){419}\\
0. For each $j,k \in \{1,2,\ldots,N\} \text{ s.t. }  j < k$,\\
%1. \quad Take the next $\lfloor m / \binom{N}{2} \rfloor$ examples $x$ s.t. $h_j(x) \neq h_k(x)$ (if such examples exist)\\
1.\quad Let $R_{jk}$ be the first $\left\lfloor \frac{m}{j (N-j) \ln(eN)} \right\rfloor$ points in $\U \!\cap\! \{x :h_j(x) \neq h_k(x)\}$ (if such values exist)\\
2.\quad Request the labels for $R_{jk}$ and let $Q_{jk}$ be the resulting set of labeled examples\\
%3.\quad Let $m_{jk} = \er_{Q_{jk}}(h_j)$ and $m_{kj} = \er_{Q_{jk}}(h_k)$\\
3.\quad Let $m_{kj} = \er_{Q_{jk}}(h_k)$\\
%4. Return $h_{\hat{k}}$, where $\hat{k} = \argmin\limits_{k \in \{1,\ldots,N\}} \max\limits_{j \in \{1,\ldots,N\}\setminus \{k\}} m_{kj}$
4. Return $h_{\hat{k}}$, where $\hat{k} = \max \left\{ k \in \{1,\ldots,N\} : \max_{j < k} m_{kj} \leq 7 / 12\right\}$ % this version makes it a left-biased selector, saving a factor of \vc
\end{bigboxit}

\BasicActivizer~is stated as a function of three types of estimated probabilities: namely,
\begin{align*}
&\hat{P}_{m}\left(S \in \X^{k-1} : V \text{ shatters } S \cup \{x\} \Big| V \text{ shatters } S\right),
\\ &\hat{P}_{m}\left(S \in \X^{k-1} : V[(x,-y)] \text{ does not shatter } S \Big| V \text{ shatters } S\right),
\\\text{and } &\hat{P}_{m}\left(x : \hat{P}\left(S \in \X^{k-1} : V \text{ shatters } S \cup \{x\} \Big| V \text{ shatters } S\right) \geq 1/2\right).
\end{align*}
These can be defined in a variety of ways to make this a universal activizer.  Generally, the only requirement seems to be that
they converge to the appropriate respective probabilities in the limit.  For the theorem stated below regarding \BasicActivizer,
we will take the specific definitions stated in Appendix~\ref{app:hatP-definitions}.

\BasicActivizer~requests labels in three batches:
one to initially prune down the version space $V$,
a second one to construct the labeled samples $\L_k$,
and a third batch to select among the $d+1$ classifiers $\alg_p(\L_k)$ in the $\ActiveSelect$ subroutine.
As before, the choice of the number of (unlabeled) examples to process in the second batch guarantees (by a Chernoff bound)
that the ``$t < \lfloor 2n/3\rfloor$'' constraint in Step 6 is redundant.
The mechanism for requesting labels in the second batch is motivated by the reasoning outlined above,
using the shatterable sets $S$ to split $V$ into $2^{k-1}$ subsets, each of which approximates the target
with high probability (for large $n$), and then checking whether the new point $x$ is in the regions of disagreement
for all $2^{k-1}$ subsets (by testing shatterability of $S \cup \{x\}$).  To increase confidence in this test, we use many
such $S$ sets, and let them vote on whether or not to request the label (Step 6).  As mentioned, if $x$ is not in the region of
disagreement for one of these $2^{k-1}$ subsets (call it $V^{\prime}$), the agreed-upon label $y$ has the property that
$V[(x,-y)]$ does not shatter $S$ (since $V[(x,-y)]$ does not intersect with $V^{\prime}$, which represents one of the $2^{k-1}$
labelings required to shatter $S$).  Therefore, we infer that this label $y$ is the correct label of $x$, and again we
vote over many such $S$ sets to increase confidence in this choice (Step 8).  As mentioned, this reasoning leads
to correctly inferred labels in Step 8 as long as $n$ is sufficiently large \emph{and}
$\Px^{k-1}(S \in \X^{k-1} : V \text{ shatters } S) \nrightarrow 0$.
In particular, we are primarily interested in the
largest value of $k$ for which this reasoning holds, since this is the value at which the probability of
requesting a label (Step 7) shrinks to zero as $n \to \infty$.  However, since we typically cannot predict
a priori what this largest valid $k$ value will be (as it is target-dependent), we try all $d+1$ values of
$k$, to generate $d+1$ hypotheses, and then use a simple pairwise testing procedure to select among them;
note that we need at most try $d+1$ values, since $V$ definitely cannot shatter any $S \in \X^{d+1}$.
We will see that the $\ActiveSelect$ subroutine is guaranteed to select a classifier with error rate never
significantly larger than the best among the classifiers given to it (say within a factor of $2$, with high probability).
Therefore, in the present context, we need only consider whether some $k$ has a set $\L_k$ with correct
labels \emph{and} $|\L_k| \gg n$.

\subsection{Examples}
\label{subsec:basic-activizer-examples}

In the next subsection, we state a general result for \BasicActivizer.
But first, to illustrate how this procedure operates, we walk through its behavior on our usual examples;
as we did for the examples of \NaiveActivizer, to simplify the explanation, for now we will ignore the fact
that the $\hat{P}_{m}$ values are estimates, as well as the ``$t < \lfloor 2n/3 \rfloor$'' constraint of Step 6,
and the issue of effectiveness of $\ActiveSelect$;
in the proofs of the general results below, we will show that these issues do not fundamentally change the analysis.
For now, we merely focus on showing that some $k$ has $\L_k$ correctly labeled and $|\L_k| \gg n$.

For threshold classifiers (Example~\ref{ex:thresholds}), we have $d=1$.  In this case, the $k=1$ round of the algorithm is essentially identical to
\NaiveActivizer~(recall our conventions that $\X^{0} = \{\varnothing\}$, $\Px(\X^{0})=1$, and $V$ shatters $\varnothing$ iff $V \neq \{\}$),
and we therefore have $|\L_1| \gg n$, as discussed previously, so that \BasicActivizer~is a universal activizer for threshold classifiers.

Next consider interval classifiers (Example~\ref{ex:intervals}), with $\Px$ uniform on $[0,1]$; in this case, we have $d=2$.
If $f = h_{[a,b]}$ for $a < b$, then again the $k=1$ round behaves essentially the same as \NaiveActivizer,
and since we have seen $\Px(\partial h_{[a,b]})=0$ in this case, we have $|\L_1| \gg n$.
However, the behavior becomes far more interesting when $f = h_{[a,a]}$, which was precisely the case that
prevented \NaiveActivizer~from improving over passive learning.  In this case, as we know from above,
the $k=1$ round will have $|\L_1| = O(n)$, so that we need to consider larger values of $k$ to identify
improvements.  In this case, the $k=2$ round behaves as follows.  With probability $1$, the initial $\lfloor n/3 \rfloor$
labels used to define $V$ will all be negative.  Thus, $V$ is precisely the set of intervals that do not contain any
of the initial $\lfloor n/3 \rfloor$ points.  Now consider any $S = \{x_1\} \in \X^{1}$, with $x_1$ not equal to any of these
initial $\lfloor n/3 \rfloor$ points, and consider any $x \notin \{x_1, X_1,\ldots,X_{\lfloor  n/3 \rfloor}\}$.
First note that $V$ shatters $S$, since we can optionally put a small interval around $x_1$ using an element of $V$.
If there is a point $x^{\prime}$ among the initial $\lfloor n/3 \rfloor$ \emph{between} $x$ and $x_1$, then any
$h_{[a,b]} \in V$ with $x \in [a,b]$ cannot also have $x_1 \in [a,b]$, as it would also contain the observed negative
point between them.  Thus, $V$ does \emph{not} shatter $\{x_1,x\} = S \cup \{x\}$, so that this $S$ will vote to
infer (rather than request) the label of $x$ in Step 6.  Furthermore, we see that $V[(x,+1)]$ does not shatter $S$,
while $V[(x,-1)]$ does shatter $S$, so that this $S$ would also vote for the label $\hat{y} = -1$ in Step 8.
For sufficiently large $n$, with high probability, any given $x$ not equal one of the initial $\lfloor n/3 \rfloor$
should have \emph{most} (probability at least $1-O(n^{-1} \log n)$) of the possible $x_1$ values separated from it by at least one of the initial
$\lfloor n/3 \rfloor$ points, so that the outcome of the vote in Step 6 will be a decision to infer (not request) the
label, and the vote in Step 8 will be for $-1$.  Since, with probability one, every $X_m \neq a$, we have every
$Y_m = -1$, so that every point in $\L_2$ is labeled correctly.  This also indicates that, for sufficiently large $n$,
we have $\Px(x : \Px^{1}(S \in \X^{1} : V \text{ shatters } S \cup \{x\} | V \text{ shatters } S) \geq 1/2) = 0$, so that
the size of $\L_2$ is only limited by the precision of estimation in $\hat{P}_{m_n}$ in Step 3.  Thus, as long as
we implement $\hat{P}_{m_n}$ so that its value is at most $o(1)$ larger than the true probability, we can guarantee
$|\L_2| \gg n$.

The unions of $i$ intervals example (Example~\ref{ex:unions-of-intervals}), again under $\Px$ uniform on $[0,1]$, is slightly more involved;
in this case, the appropriate value of $k$ to consider for any given target depends on the minimum number of intervals necessary
to represent the target function (up to probability-zero differences).  If $j$ intervals are required for this, then the
appropriate value is $k = i-j+1$.  Specifically, suppose the target is minimally representable
as a union of $j \in \{1,\ldots,i\}$ intervals of nonzero width: $[z_1,z_2] \cup [z_3,z_4] \cup \cdots \cup [z_{2j-1},z_{2j}]$:
that is, $z_1 < z_2 < \ldots < z_{2j-1} < z_{2j}$.
Every target in $\C$ has distance zero to some classifier of this type, and will agree with that classifier on all samples with probability one,
so we lose no generality by assuming all $j$ intervals have nonzero width.
Then consider any $x \in (0,1)$ separated from each of the $z_{p}$ values by at least one of the initial $\lfloor n/3 \rfloor$ points,
and not itself equal to one of those initial points.  Further consider any $S = \{x_1,\ldots,x_{i-j}\} \in \X^{i-j}$ such that,
between any pair of elements of $S \cup \{x\} \cup \{z_{1},\ldots,z_{2j}\}$, there is at least one of the initial $\lfloor n/3 \rfloor$ points.
First note that $V$ shatters $S$, since for any $x_{\ell}$ not in one of the $[z_{2p-1},z_{2p}]$ intervals (i.e., negative), we may optionally
add an interval $[x_{\ell},x_{\ell}]$ while staying in $V$, and for any $x_{\ell}$ in one of the $[z_{2p-1},z_{2p}]$ intervals (i.e., positive), we may
optionally split $[z_{2p-1},z_{2p}]$ into two intervals to barely exclude the point $x_{\ell}$ (and a small neighborhood around it), by adding
at most one interval to the representation; thus, in total we need to add at most $i-j$ intervals to the representation, so that the largest
number of intervals used by any of these $2^{i-j}$ classifiers involved in shattering is $i$, as required; furthermore, note that one of
these $2^{i-j}$ classifiers actually requires $i$ intervals.  Now for any
such $x$ and $S = \{x_1,\ldots,x_{i-j}\}$ as above, since one of the $2^{i-j}$ classifiers in $V$ used to shatter $S$ requires $i$ intervals
to represent it, and $x$ is separated from each element of $S \cup \{z_1,\ldots,z_{2j}\}$ by a labeled example, we see that $V$ cannot shatter $S \cup \{x\}$.
Furthermore, if $f(x) = y$, then the labeled examples to the immediate left and right of $x$ are also labeled $y$, and in particular
among the $2^{i-j}$ classifiers $h$ from $V$ that shatter $S$, the one $h$ that requires $i$ intervals to represent must also have $h(x) = y$,
so that $V[(x,-y)]$ does not shatter $S$.  Thus, any set $S$ satisfying this separation property will vote to infer (rather than request)
the label of $x$ in Step 6, and will vote for the label $f(x)$ in Step 8.
Furthermore, for sufficiently large $n$, for any given $x$ with the described property, with high probability most of the sets $S \in \X^{i-j}$
will satisfy this pairwise separation property, and therefore so will most of the shatterable sets $S \in \X^{i-j}$, so that the overall outcome
of the votes will favor inferring the label of $x$, and in particular inferring the label $f(x)$ for $x$.  On the other hand, for $x$ not satisfying this property
(i.e., not separated from some $z_p$ by any of the initial $\lfloor n/3 \rfloor$ examples), for any set $S$ as above, $V$ \emph{can} shatter $S \cup \{x\}$,
since we can optionally increase or decrease $z_p$ to include or disclude $x$ from the associated interval, in addition to optionally adding the extra intervals to
shatter $S$; therefore, by the same reasoning as above, for sufficiently large $n$, any such $x$ \emph{will} satisfy the condition in Step 6, and thus have
its label requested.  Thus, for sufficiently large $n$, every example in $\L_{i-j+1}$ will be labeled correctly.
Finally, note that with probability $1$, the set of points $x$ separated from each of the $z_p$ values by at least one of the $\lfloor n/3 \rfloor$ initial points
has probability approaching $1$ as $n \to \infty$, so that again 
%as long as the estimators $\hat{P}_{m_n}$ used have values at most $o(1)$ larger than the associated true probabilities, 
we have $|\L_{i-j+1}| \gg n$.

The above examples give some intuition about the operation of this procedure.
Next, we turn to general results showing that this type of improvement generally holds.

\subsection{General Results on Activized Learning}
\label{subsec:basic-activizer-theorems}

Returning to the abstract setting, we have the following general theorem, representing one of the main results of this paper.
Its proof is included in Appendix~\ref{app:activizer}.

\begin{theorem}
\label{thm:activizer}
%{\bf (First Main Result)}
For any VC class $\C$, \BasicActivizer~is a universal activizer for $\C$.
\thmend
\end{theorem}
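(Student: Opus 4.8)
The plan is to show that among the $\vc+1$ labeled sequences $\L_1,\dots,\L_{\vc+1}$ that \BasicActivizer\ feeds to $\alg_p$, the one indexed by a target‑dependent ``critical index'' $\hat k$ is, with high probability and for all large $n$, \emph{correctly labeled} and of size $|\L_{\hat k}|\gg n$; then transfer the resulting accuracy guarantee to the classifier returned by $\ActiveSelect$ up to a constant factor in $\eps$. First I would isolate $\hat k$. For each $k\in\{1,\dots,\vc+1\}$ put $s_{k-1}=\lim_{r\downarrow 0}\Px^{k-1}(S\in\X^{k-1} : \Ball_{\C,\Px}(f,r)\text{ shatters }S)$, the limit existing because $\Ball_{\C,\Px}(f,r)$ decreases with $r$ and shatterability of $S\cup\{x\}$ implies that of $S$. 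Since $f\in\Ball_{\C,\Px}(f,r)$ we have $s_0=1$, so $\hat k:=\max\{k : s_{k-1}>0\}$ is well defined, and $s_{\hat k}=0$ (by maximality, or because no $\vc+1$ points are shattered when $\hat k=\vc+1$). Define the $\hat k$‑th order disagreement core $\partial^{(\hat k)}f$ as the set of $x$ with $\lim_{r\downarrow 0}\Px^{\hat k-1}(S : \Ball_{\C,\Px}(f,r)\text{ shatters }S\cup\{x\}\mid \Ball_{\C,\Px}(f,r)\text{ shatters }S)\ge 1/2$ (this limit exists: numerator and $s_{\hat k-1}(r)$ both decrease to limits, with $s_{\hat k-1}(r)\downarrow s_{\hat k-1}>0$). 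A Fubini/monotone‑convergence argument gives $s_{\hat k}\ge \tfrac12 s_{\hat k-1}\,\Px(\partial^{(\hat k)}f)$, so $s_{\hat k}=0$ and $s_{\hat k-1}>0$ force $\Px(\partial^{(\hat k)}f)=0$. (One checks $\hat k=1$ for thresholds, $\hat k=2$ for intervals with $w(f)=0$, $\hat k=i-j+1$ for a union of $i$ intervals minimally represented by $j$, recovering the cases worked out in Section~\ref{subsec:basic-activizer-examples}.)

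Next I would analyze round $k=\hat k$ for large $n$. Standard VC generalization bounds applied to the first $m_n=\lfloor n/3\rfloor$ labels give that, with probability $1-\delta_n$ ($\delta_n\to 0$), the version space $V=\{h\in\C:\er_{m_n}(h)=0\}$ satisfies $f\in V$ and $\sup_{h\in V}\er(h)\le\gamma_n$ with $\gamma_n\to 0$; moreover — this is the analogue, one level up the shattering hierarchy, of the facts $\Px(\DIS(V))\ge\Px(\partial f)$ and $\Px(\DIS(V)\setminus\partial f)\to 0$ underlying Theorem~\ref{thm:naive} — one shows (i) $\Px^{\hat k-1}(S : V\text{ shatters }S)\ge s_{\hat k-1}/2>0$ for $n$ large, since a finite correctly‑labeled sample cannot destroy the local shattering structure, and (ii) the request region $\{x:\Px^{\hat k-1}(S : V\text{ shatters }S\cup\{x\}\mid V\text{ shatters }S)\ge 1/2\}$ has $\Px$‑measure $\to 0$, using $V\subseteq\Ball_{\C,\Px}(f,r_n)$ with $r_n\to 0$ together with $\Px(\partial^{(\hat k)}f)=0$. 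With the estimators of Appendix~\ref{app:hatP-definitions} (which converge, with the needed uniformity over $x$ and $S$, to the corresponding conditional probabilities, and overshoot a set's true measure by at most $o(1)$), (ii) yields $\hat\Delta^{(\hat k)}\to 0$, hence $|\L_{\hat k}|=\lfloor n/(6\cdot 2^{\hat k}\hat\Delta^{(\hat k)})\rfloor\gg n$, and a Chernoff bound (as for \NaiveActivizer) makes the ``$t<\lfloor 2n/3\rfloor$'' cap in Step~6 non‑binding. For correctness of the inferred labels: if $V$ shatters $S$, pick nonempty $V_1,\dots,V_{2^{\hat k-1}}\subseteq V$ realizing the $2^{\hat k-1}$ labelings of $S$; whenever $V$ fails to shatter $S\cup\{X_m\}$, some $V_j$ assigns a constant label $y_j$ to $X_m$, so $V[(X_m,-y_j)]$ misses the $j$‑th labeling of $S$ and hence does not shatter $S$ — and since every member of $V_j$ has error $\le\gamma_n$, $y_j=f(X_m)$ for all but a $\Px$‑vanishing set of $X_m$; thus the votes in Steps~6 and~8 decide to infer, and infer $f(X_m)$, so w.h.p.\ every label in $\L_{\hat k}$ is correct. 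By construction the $\X$‑components of $\L_{\hat k}$ are conditionally i.i.d.\ $\Px$ given $|\L_{\hat k}|$, so conditionally on $|\L_{\hat k}|=\ell$ the sequence $\L_{\hat k}$ has the same distribution as $\Data_\ell$.

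To close the loop, fix $\eps$ small and a budget $n=n(\eps)$ that is $o(\Lambda_p(\eps,f,\Px))$ yet large enough to force $|\L_{\hat k}(n)|\ge\Lambda_p(\eps,f,\Px)$ on the good event: this is possible because $|\L_{\hat k}(n)|=\omega(n)$ while the floor in the $\hat P$‑estimators keeps $|\L_{\hat k}(n)|=O(\poly(n))$, so $n\mapsto|\L_{\hat k}(n)|$ is superlinear with a polynomial upper bound, and because $(f,\Px)\in\Nontrivial(\Lambda_p)$ gives $\Lambda_p=\omega(\Polylog(1/\eps))$, whence also $n(\eps)=\omega(\Polylog(1/\eps))$, so all the $\delta_n$‑type and $\ActiveSelect$ failure probabilities (which are $e^{-\Omega(n)}$ or $1/\poly(n)$) are $\le\eps$. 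On the good event, $\E[\er(\alg_p(\L_{\hat k}))\mid |\L_{\hat k}|]\le\sup_{\ell\ge\Lambda_p(\eps,f,\Px)}\E[\er(\alg_p(\Data_\ell))]\le\eps$, so unconditionally $\E[\er(\alg_p(\L_{\hat k}))]\le\eps+o(\eps)$. A standard pairwise‑tournament (Hoeffding) analysis of $\ActiveSelect$, run with its own $\lfloor n/3\rfloor$ labels and with the $7/12$ threshold absorbing estimation error, shows its output $\hat h$ has $\er(\hat h)\le c'\min_k\er(\alg_p(\L_k))$ with probability $1-e^{-\Omega(n/(\vc^2\log\vc))}$; hence $\E[\er(\hat h)]\le c'\,\E[\er(\alg_p(\L_{\hat k}))]+\eps\le c\eps$ for a finite constant $c$ depending only on $\C$. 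Since such an $n$ works for every small $\eps$ and is $o(\Lambda_p(\eps,f,\Px))$, \BasicActivizer\ achieves a label complexity $\Lambda_a$ with $\Lambda_a(c\eps,f,\Px)=o(\Lambda_p(\eps,f,\Px))$ for every $(f,\Px)\in\Nontrivial(\Lambda_p)$, i.e.\ it is a universal activizer; the usual rescaling of confidence parameters lets $c$ be taken arbitrarily close to $1$.

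The main obstacle is the pair of structural facts about the consistent‑sample version space $V$ — that it retains enough local shattering structure for the conditional probabilities of Steps~3, 6, and~8 to be nondegenerate and estimable (step~(i)), while its request region collapses onto the $\Px$‑null set $\partial^{(\hat k)}f$ (step~(ii)) — together with the accompanying proof that the shattering‑based inference of Step~8 is \emph{sound}, i.e.\ actually recovers $f$. This is the genuinely new ingredient, replacing disagreement by iterated shatterability; the remaining pieces (version‑space concentration, Chernoff redundancy of the cap, conditional i.i.d.-ness of $\L_k$, the $\ActiveSelect$ tournament, and the $\eps$‑confidence bookkeeping using the full strength of nontriviality) follow familiar patterns.
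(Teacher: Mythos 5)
Your proposal is correct and follows essentially the same route as the paper's proof in Appendix~\ref{app:activizer}: your critical index $\hat k$ is exactly the paper's $\bdim_f$, your steps (i) and (ii) are Lemmas~\ref{lem:Vshat-to-Boundaries} and \ref{lem:label-everything}, your soundness argument for the inferred labels is Lemma~\ref{lem:good-labels}, and your superlinear-growth-to-little-$o$ conversion is Lemma~\ref{lem:inverse-little-o}. The pieces you defer (uniform convergence of the $\hat{P}_m$ estimators and the quantitative control of shatterable tuples outside the shatter core) are precisely where the paper spends Lemmas~\ref{lem:converging-conditional} and \ref{lem:basic-Mk-lower-bound}--\ref{lem:empirical-works-too}, so nothing essential is missing from the plan.
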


This result is interesting both for its strength and generality.
Recall that it means that given any passive learning algorithm $\alg_p$,
the active learning algorithm obtained by providing $\alg_p$ as input to \BasicActivizer~achieves a label
complexity that strongly dominates that of $\alg_p$ for all nontrivial distributions $\Px$ and target functions $f \in \C$.
Results of this type were not previously known.
The specific technical advance over existing results (namely, those of \citet*{hanneke:10a})
is the fact that \BasicActivizer~has no direct dependence on the distribution $\Px$; as mentioned earlier,
the (very different) approach proposed by \citet*{hanneke:10a} has a strong direct dependence on the distribution, to the
extent that the distribution-dependence in that approach cannot be removed by merely replacing certain calculations
with data-dependent estimators (as we did in \BasicActivizer).
In the proof, we actually show a somewhat more general result:
namely, that \BasicActivizer~achieves these asymptotic improvements for any target function $f$
in the \emph{closure} of $\C$ (i.e., any $f$ such that $\forall r > 0, \Ball(f,r) \neq \emptyset$).

The following corollary is one concrete implication of Theorem~\ref{thm:activizer}.

\begin{corollary}
\label{cor:activized-1IG}
For any VC class $\C$,
there exists an active learning algorithm achieving a label complexity $\Lambda_a$ such that,
for all target functions $f \in \C$ and distributions $\Px$,
\begin{equation*}
\Lambda_a(\eps, f, \Px) = o(1/\eps).
\end{equation*}
\upthmend{-1.3cm}
\end{corollary}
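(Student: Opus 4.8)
To prove Corollary~\ref{cor:activized-1IG}, the plan is to obtain it as an essentially immediate consequence of Theorem~\ref{thm:activizer}, by feeding \BasicActivizer~a passive learning algorithm whose label complexity is already $\Theta(1/\eps)$ for \emph{every} target function and distribution. First I would invoke the classical one-inclusion graph prediction strategy of Haussler, Littlestone, and Warmuth: in the realizable case this yields a (randomized) passive algorithm $\alg_p$ for $\C$ with $\E[\er(\alg_p(\Data_m))] = O(d/m)$ for all $f \in \C$, all $\Px$, and all $m$, where $d = \vc$. Consequently $\alg_p$ achieves a label complexity $\Lambda_p$ with $\Lambda_p(\eps,f,\Px) = \lceil c_d/\eps\rceil$ for a constant $c_d$ depending only on $d$ — crucially, the \emph{same} function of $\eps$ for every $f$ and $\Px$.

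Next I would dispatch the one small bookkeeping point: $(f,\Px) \in \Nontrivial(\Lambda_p)$ for every $f \in \C$ and every distribution $\Px$. This is true because $\Lambda_p(\eps,f,\Px) = \lceil c_d/\eps\rceil < \infty$ for each $\eps$, and, since $d \geq 1$, it grows faster than every polylogarithmic function of $1/\eps$; indeed for any $g \in \Polylog(1/\eps)$ we have $g(\eps)/\lceil c_d/\eps\rceil = O(\eps \log^{k}(1/\eps)) \to 0$, so $\Lambda_p(\eps,f,\Px) = \omega(g(\eps))$. Thus Theorem~\ref{thm:activizer} applies without any exception: \BasicActivizer~is a universal activizer for $\C$, so the active algorithm $\alg_a := \BasicActivizer(\alg_p,\cdot)$ activizes $\alg_p$, meaning (by Definition~\ref{defn:activizer}, applied to this $\Lambda_p$) that it achieves a label complexity $\Lambda_a$ for which, for every $f \in \C$ and $\Px$, there is a constant $c \in [1,\infty)$ with $\Lambda_a(c\eps,f,\Px) = o(\Lambda_p(\eps,f,\Px)) = o(\lceil c_d/\eps\rceil) = o(1/\eps)$. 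Finally I would reparametrize $\eps \mapsto \eps/c$ — a constant rescaling, which is harmless both for the little-$o$ relation and for the $1/\eps$ rate — to conclude $\Lambda_a(\eps,f,\Px) = o(1/\eps)$ for all $f \in \C$ and all $\Px$, as desired.

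I do not expect a genuine obstacle here, since all the real work is already done in Theorem~\ref{thm:activizer} and what remains is a reduction. The one place requiring a little care is the choice of the passive reference algorithm: it must be a passive learner whose worst-case (over $f$ and $\Px$) label complexity is $O(1/\eps)$, not merely the $O((1/\eps)\log(1/\eps))$ that an arbitrary empirical risk minimizer would give; the one-inclusion graph algorithm is precisely the right tool, which is exactly why the corollary is tagged ``1IG''. A secondary minor point is to note that the ``trivial pair'' caveat built into Definition~\ref{defn:activizer} never costs us anything, and this too is immediate: with this particular $\Lambda_p$, no pair $(f,\Px)$ is trivial, so the activizer guarantee holds uniformly.
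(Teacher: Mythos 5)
Your proposal is correct and takes essentially the same approach as the paper: invoke the one-inclusion graph algorithm of Haussler, Littlestone, and Warmuth for a $d/\eps$ passive label complexity valid uniformly over $f$ and $\Px$, then apply Theorem~\ref{thm:activizer}. The paper states this in two sentences and leaves the bookkeeping (nontriviality of every $(f,\Px)$, the harmlessness of the constant $c$) implicit, which you have simply spelled out.
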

\begin{proof}
The \emph{one-inclusion graph} passive learning algorithm of \citet*{haussler:94} is known
to achieve label complexity at most $d / \eps$, for every target function $f \in \C$ and distribution $\Px$.
Thus, Theorem~\ref{thm:activizer} implies that the (\BasicActivizer)-activized one-inclusion graph algorithm
satisfies the claim.
\end{proof}

As a byproduct, Theorem~\ref{thm:activizer} also establishes the basic fact that there \emph{exist} activizers.
In some sense, this observation opens up a new realm for exploration: namely, characterizing the \emph{properties} that activizers can possess.
This topic includes a vast array of questions, many of which deal with whether activizers are capable of \emph{preserving} various properties of the given passive algorithm
(e.g., margin-based dimension-independence, minimaxity, admissibility, etc.).  Section~\ref{sec:open-problems} describes a variety of enticing questions of this type.
In the sections below, we will consider quantifying how large the gap in label complexity between the given passive learning algorithm
and the resulting activized algorithm can be.  We will additionally study the effects of label noise on the possibility of activized learning.

\subsection{Implementation and Efficiency}
\label{subsec:efficiency}

\BasicActivizer~typically also has certain desirable efficiency guarantees.
Specifically, suppose that for any $m$ labeled examples $Q$, there is an algorithm with $\poly(d \cdot m)$ running time that
finds some $h \in \C$ with $\er_{Q}(h) = 0$ if one exists, and otherwise returns a value indicating that no such $h$
exists in $\C$; for many concept spaces with a kind of geometric interpretation, there are known methods with this capability \citep*{khachiyan:79,karmarkar:84,valiant:84,kearns:94}.
We can use such a subroutine to create an efficient implementation of the main body of \BasicActivizer.
Specifically, rather than explicitly representing $V$ in Step 1, we can simply store the set $Q_0 = \{(X_1,Y_1),\ldots,(X_{m_n},Y_{m_n})\}$.
Then for any step in the algorithm where we need to test whether $V$ shatters a set $R$, we can simply try all $2^{|R|}$ possible
labelings of $R$, and for each one temporarily add these $|R|$ additional labeled examples to $Q_0$ and check whether there is an $h \in \C$
consistent with all of the labels.  At first, it might seem that these $2^k$ evaluations would be prohibitive; however,
supposing $\hat{P}_{m_n}$ is implemented so that it is $\Omega(1/\poly(n))$ (as it is in Appendix~\ref{app:hatP-definitions}),
note that the loop beginning at Step 5 executes a nonzero number of times only if $n/ \hat{\Delta}^{(k)} > 2^k$, so that $2^k \leq \poly(n)$;
we can easily add a condition that skips the step of calculating $\hat{\Delta}^{(k)}$ if $2^k$ exceeds this $\poly(n)$ lower bound on
$n / \hat{\Delta}^{(k)}$, so that even those shatterability tests can be skipped in this case.
Thus, for the actual occurrences of it in the algorithm, testing whether $V$ shatters $R$ requires only $\poly(n) \cdot \poly(d \cdot (|Q_0|+|R|))$ time.
The total number of times this test is performed in calculating $\hat{\Delta}^{(k)}$ (from Appendix~\ref{app:hatP-definitions}) is itself only $\poly(n)$,
and the number of iterations of the loop in Step 5 is at most $n / \hat{\Delta}^{(k)} = \poly(n)$.
Determining the label $\hat{y}$ in Step 8 can be performed in a similar fashion.
So in general, the total running time of the main body of \BasicActivizer~is $\poly(d \cdot n)$.

The only remaining question is the efficiency of the final step.
Of course, we can require $\alg_p$ to have running time polynomial in the size of its input set (and $d$).
But beyond this, we must consider the efficiency of the $\ActiveSelect$ subroutine.
This actually turns out to have some subtleties involved.
The way it is stated above is simple and elegant, but not always efficient.  Specifically,
we have no a priori bound on the number of unlabeled examples the algorithm must process before finding
a point $X_m$ where $h_j(X_m) \neq h_k(X_m)$.  Indeed, if $\Px(x : h_j(x) \neq h_k(x)) = 0$, we may effectively
need to examine the entire infinite sequence of $X_m$ values to determine this.
Fortunately, these problems can be corrected without difficulty, simply by truncating the search at a
predetermined number of points.  Specifically, rather than taking the next $\lfloor m / \binom{N}{2} \rfloor$
examples for which $h_j$ and $h_k$ disagree, simply restrict ourselves to at most this number, or at most
the number of such points among the next $M$ unlabeled examples.  In Appendix~\ref{app:activizer}, we
show that $\ActiveSelect$, as originally stated, has a high-probability ($1-\exp\{- \Omega(m)\}$) guarantee
that the classifier it selects has error rate at most twice the best of the $N$ it is given.  With the modification
to truncate the search at $M$ unlabeled examples, this guarantee is increased to
$\min_{k} \er(h_k) + \max\{\er(h_k), m / M\}$.  For the concrete guarantee of Corollary~\ref{cor:activized-1IG},
it suffices to take $M \gg m^2$.  However, to guarantee the modified $\ActiveSelect$ can still be
used in \BasicActivizer~while maintaining (the stronger) Theorem~\ref{thm:activizer}, we need $M$ at least as big as
$\Omega\left(\min\left\{\exp\left\{ m^{c} \right\}, m / \min_k \er(h_k)\right\}\right)$,
for any constant $c > 0$.  In general, if we have a $1 / \poly(n)$ lower bound on the error rate
of the classifier produced by $\alg_p$ for a given number of labeled examples as input,
we can set $M$ as above using this lower bound in place of $\min_k \er(h_k)$, resulting in an efficient
version of $\ActiveSelect$ that still guarantees Theorem~\ref{thm:activizer}.  However, it is presently
not known whether there always exist universal activizers that are efficient
(either $\poly(d\cdot n)$ or $\poly(d/\eps)$ running time) when the above assumptions on
efficiency of $\alg_{p}$ and finding $h \in \C$ with $\er_{Q}(h)=0$ hold.

\section{The Magnitudes of Improvements}
\label{sec:exponential}

In the previous section, we saw that we can always improve the label complexity of a passive learning algorithm by activizing it.
However, there remains the question of how large the gap is between the passive algorithm's label complexity and the activized algorithm's label complexity.
In the present section, we refine the above procedures, to take greater advantage of the sequential nature of active learning.
For each, we characterize the improvements it achieves relative to any given passive algorithm.

As a byproduct, this provides concise sufficient conditions for \emph{exponential} gains, addressing an open problem of \citet*{hanneke:10a}.
Specifically, consider the following definition, essentially similar to one explored by \citet*{hanneke:10a}.

\begin{definition}
\label{defn:exponential}
For a concept space $\C$ and distribution $\Px$, we say that $(\C,\Px)$ is \emph{learnable at an exponential rate} if
there exists an active learning algorithm achieving label complexity $\Lambda$ such that
$\forall f \in \C$, $\Lambda(\eps,f,\Px) \in \Polylog(1/\eps)$.
We further say $\C$ is learnable at an exponential rate if there exists an active learning algorithm achieving
label complexity $\Lambda$ such that for all distributions $\Px$ and all $f \in \C$,
$\Lambda(\eps,f,\Px) \in \Polylog(1/\eps)$.
\thmend
\end{definition}

\subsection{The Label Complexity of Disagreement-Based Active Learning}
\label{subsec:disagreement-coefficient}

As before, to establish a foundation to build upon, we begin by studying the label complexity gains achievable by disagreement-based active learning.
From above, we already know that disagreement-based active learning is not sufficient to achieve the best possible
gains; but as before, it will serve as a suitable starting place to gain intuition for how we might approach the problem of
improving \BasicActivizer~and quantifying the improvements achievable over passive learning by the resulting more
sophisticated methods.

The results on disagreement-based learning in this subsection are essentially already known,
and available in the published literature (though in a slightly less general form).
Specifically, we review (a modified version of) the method of \citet*{cohn:94}, referred to as \CAL~below,
which was historically the original disagreement-based active learning algorithm.
We then state the known results on the label complexities achievable by this method, in terms
of a quantity known as the disagreement coefficient; that result is due to \citet*{hanneke:11a,hanneke:07b}.

\subsubsection{The CAL Active Learning Algorithm}

To begin, we consider the following simple disagreement-based method, typically referred to as CAL after its discoverers \citet*{cohn:94},
though the version here is slightly modified compared to the original (see below).
It essentially represents a refinement of \NaiveActivizer~to take greater advantage of the sequential aspects of active learning.  That is,
rather than requesting only two batches of labels, as in \NaiveActivizer, this method updates the version space after every label request,
thus focusing the region of disagreement (and therefore the region in which it requests labels) 
after each label request.

\begin{bigboxit}
\CAL \\
Input: passive algorithm $\alg_p$, label budget $n$\\
Output: classifier $\hat{h}$\\
{\vskip -2mm}\line(1,0){419}\\
0.\phantom{0} $V \gets \C$, $t \gets 0$, $m \gets 0$, $\L \gets \{\}$\\
1.\phantom{0} While $t < \lceil n/2 \rceil$ and $m \leq 2^{n}$\\
2.\phantom{0} \quad $m \gets m+1$\\
3.\phantom{0} \quad If $X_m \in \DIS(V)$ \\
4.\phantom{0} \qquad Request the label $Y_{m}$ of $X_{m}$ and let $t \gets t+1$\\
5.\phantom{0} \qquad Let $V \gets V[(X_{m},Y_{m})]$\\
6.\phantom{0} Let $\hat{\Delta} \gets \hat{P}_{m}(\DIS(V))$ \\
7.\phantom{0} Do $\lfloor n / (6 \hat{\Delta}) \rfloor$ times\\
8.\phantom{0} \quad $m \gets m+1$\\
9.\phantom{0} \quad If $X_m \in \DIS(V)$ and $t < n$\\
10.                   \qquad Request the label $Y_{m}$ of $X_{m}$ and let $\hat{y} \gets Y_{m}$ and $t \gets t+1$\\
11.                   \quad Else let $\hat{y} = h(X_{m})$ for an arbitrary $h \in V$\\
12.                   \quad Let $\L \gets \L \cup \{(X_{m},\hat{y})\}$ and $V \gets V[(X_{m},\hat{y})]$ \\
13.                   Return $\alg_{p}(\L)$
\end{bigboxit}

The procedure is specified in terms of an estimator $\hat{P}_{m}$; for our purposes, we define this as in \eqref{eqn:hatPn3} of Appendix~\ref{app:hatP-definitions} (with $k=1$ there).
Every example $X_{m}$ added to the set $\L$ in Step 12 either has its label requested (Step 10) or inferred (Step 11).
By the same Chernoff bound argument
mentioned for the previous methods, we are guaranteed (with high probability) that the ``$t < n$'' constraint in Step 9
is always satisfied when $X_{m} \in \DIS(V)$.  Since we assume $f \in \C$, an inductive argument shows
that we will always have $f \in V$ as well; thus, every label requested \emph{or} inferred will agree with $f$, and therefore the
labels in $\L$ are all correct.

As with \NaiveActivizer, this method has two stages to it: one in which we focus on reducing the version space $V$,
and a second in which we focus on constructing a set of labeled examples to feed into the passive algorithm.
The original algorithm of \citet*{cohn:94} essentially used only the first stage, and simply returned any classifier in $V$ after exhausting its
budget for label requests.  Here we have added the second stage (Steps 6-13) so that we can guarantee a certain conditional independence
(given $|\L|$) among the examples fed into the passive algorithm, which is important for the general results (Theorem~\ref{thm:cal} below).
\citet*{hanneke:11a} showed that the original (simpler) algorithm achieves the (less general) label complexity bound of Corollary~\ref{cor:cal} below.

\subsubsection{Examples}

Not surprisingly, by essentially the same argument as \NaiveActivizer, one can show \CAL~satisfies the claim in Theorem~\ref{thm:naive}.
That is, \CAL~is a universal activizer for $\C$ if and only if $\Px(\partial f) = 0$ for every $\Px$ and $f \in \C$.
However, there are further results known on the label complexity achieved by \CAL.
Specifically, to illustrate the types of improvements achievable by \CAL, consider our usual toy examples;
as before, to simplify the explanation, for these examples we ignore the fact that $\hat{P}_{m}$ is only an estimate, as well as the ``$t < n$'' constraint in Step 9
(both of which will be addressed in the general results below).

First, consider threshold classifiers (Example~\ref{ex:thresholds}) under a uniform $\Px$ on $[0,1]$, and
suppose $f = h_{z} \in \C$.  Suppose the given passive algorithm has label complexity $\Lambda_{p}$.
To get expected error at most $\eps$ in \CAL, it suffices to have $|\L| \geq \Lambda_{p}(\eps/2,f,\Px)$
with probability at least $1-\eps/2$.  
Starting from any particular $V$ set obtained in the algorithm, call it $V_0$,
the set $\DIS(V_0)$ is simply the region between the largest negative example
observed so far (say $z_{\ell}$) and the smallest positive example observed so far (say $z_{r}$).
With probability at least $1-\eps/n$, at least one of the next $O(\log(n/\eps))$ examples in this $[z_{\ell},z_{r}]$
region will be in $[z_{\ell} + (1/3)(z_{r}-z_{\ell}), z_{r} - (1/3)(z_{r}-z_{\ell})]$, so that after processing
that example, we definitely have $\Px(\DIS(V)) \leq (2/3) \Px(\DIS(V_0))$.  Thus, upon reaching
Step 6, since we have made $n/2$ label requests, a union bound implies that with probability $1-\eps / 2$,
we have $\Px(\DIS(V)) \leq \exp\{- \Omega(n / \log(n/\eps))\}$,
and therefore $|\L| \geq \exp\{ \Omega(n / \log(n/\eps))\}$.
Thus, for some value
$\Lambda_{a}(\eps,f,\Px) = O(\log(\Lambda_{p}(\eps/2,f,\Px)) \log ( \log(\Lambda_{p}(\eps/2,f,\Px))/\eps))$,
any $n \geq \Lambda_{a}(\eps,f,\Px)$ gives
$|\L| \geq \Lambda_{p}(\eps/2,f,\Px)$ with probability at least $1-\eps/2$,
so that the activized algorithm achieves label complexity $\Lambda_{a}(\eps,f,\Px) \in \Polylog( \Lambda_{p}(\eps/2,f,\Px)/\eps)$.

Consider also the intervals problem (Example~\ref{ex:intervals}) under a uniform $\Px$ on $[0,1]$, and suppose $f = h_{[a,b]} \in \C$,
for $b > a$.  In this case, as with any disagreement-based algorithm, until the algorithm observes the first positive example (i.e., the first $X_{m} \in [a,b]$),
it will request the label of every example (see the reasoning above for \NaiveActivizer).  However, at every time after observing this first positive point, say $x$,
the region $\DIS(V)$ is restricted to the region between the largest negative point less than $x$ and smallest positive point, and the region between
the largest positive point and the smallest negative point larger than $x$.  For each of these two regions, the same arguments used for the threshold
problem above can be applied to show that, with probability $1-O(\eps)$, the region of disagreement is reduced by at least a constant fraction every $O(\log(n/\eps))$ label requests,
so that $|\L| \geq \exp\{\Omega(n / \log(n/\eps))\}$.  Thus, again the label complexity is of the form
$O(\log(\Lambda_{p}(\eps/2,f,\Px)) \log ( \log(\Lambda_{p}(\eps/2,f,\Px))/\eps))$, which is $\Polylog(\Lambda_{p}(\eps/2,f,\Px)/\eps)$,
though this time there is a significant (additive) target-dependent constant (roughly $\propto \frac{1}{b-a} \log(1/\eps)$), accounting for the length of the initial phase before observing any positive examples.
On the other hand, as with \emph{any} disagreement-based algorithm, when $f = h_{[a,a]}$, because the algorithm never observes a positive example,
it requests the label of every example it considers; in this case, by the same argument given for \NaiveActivizer, upon reaching Step 6 we have
$\Px(\DIS(V)) = 1$, so that $|\L| = O(n)$, and we observe no improvements for some passive algorithms $\alg_{p}$.

A similar analysis can be performed for unions of $i$ intervals under $\Px$ uniform on $[0,1]$.
In that case, we find that any $h_{\mathbf{z}} \in \C$ not representable (up to probability-zero differences) by a union of $i-1$ or fewer intervals
allows for the exponential improvements of the type observed in the previous two examples; this time, the phase of exponentially decreasing
$\Px(\DIS(V))$ only occurs after observing an example in each of the $i$ intervals and each of the $i-1$ negative regions separating the intervals,
resulting in an additive term of roughly $\propto \frac{1}{\min_{1 \leq j < 2i} z_{j+1}-z_{j}}\log(i/\eps)$ in the label complexity.
However, any $h_{\mathbf{z}} \in \C$ representable (up to probability-zero differences) by a union of $i-1$ or fewer intervals
has $\Px(\partial h_{\mathbf{z}}) = 1$, which means $|\L| = O(n)$, and therefore (as with any disagreement-based algorithm) \CAL~will not provide improvements for some passive algorithms $\alg_p$.

\subsubsection{The Disagreement Coefficient}

Toward generalizing the arguments from the above examples, consider the following definition of \citet*{hanneke:07b}. 

\begin{definition}
\label{def:disagreement-coefficient}
For $\eps \geq 0$, the \emph{disagreement coefficient} of a classifier $f$ with respect to a concept space $\C$ under a distribution $\Px$
is defined as
\begin{equation*}
\dc_{f}(\eps) = 1\lor \sup\limits_{r > \eps} \frac{\Px\left(\DIS(\Ball(f,r))\right)}{r}.
\end{equation*}
Also abbreviate $\dc_{f} = \dc_{f}(0)$.
\thmend
\end{definition}

Informally, the disagreement coefficient describes the rate of collapse of the region of disagreement,
relative to the distance from $f$.
It has been useful in characterizing the label complexities achieved by several disagreement-based
active learning algorithms \citep*{hanneke:07b,hanneke:11a,dasgupta:07,beygelzimer:09,wang:09,koltchinskii:10,beygelzimer:10},
and itself has been studied and bounded for various families of learning problems
\citep*{hanneke:07b,hanneke:11a,hanneke:10a,friedman:09,beygelzimer:09,mahalanabis:11,wang:11}.
See the paper of \citet*{hanneke:11a} for a detailed discussion of the disagreement coefficient,
including its relationships to several related quantities, as well as a variety of properties
that it satisfies that can help to bound its value for any given learning problem.
In particular, below we use the fact that, for any constant $c \in [1,\infty)$,
$\dc_{f}(\eps) \leq \dc_{f}(\eps / c) \leq c \dc_{f}(\eps)$.
Also note that $\Px(\partial f) = 0$ if and only if $\dc_{f}(\eps) = o(1/\eps)$.
See the papers of \citet*{friedman:09,mahalanabis:11} for some general conditions on $\C$ and $\Px$,
under which every $f \in \C$ has $\dc_{f} < \infty$, which (as we explain below) has particularly
interesting implications for active learning \citep*{hanneke:07b,hanneke:11a}.

To build intuition about the behavior of the disagreement coefficient,
we briefly go through its calculation for our usual toy examples from above.
The first two of these calculations are taken from \citet*{hanneke:07b}, and the last is from \citet*{hanneke:10a}.
First, consider the thresholds problem (Example~\ref{ex:thresholds}), and for simplicity suppose the distribution $\Px$
is uniform on $[0,1]$.  In this case, as in Section~\ref{subsec:core}, $\Ball(h_{z},r) = \{h_{z^{\prime}} \in \C : |z^{\prime} - z| \leq r \}$,
and $\DIS(\Ball(h_{z},r)) \subseteq [z-r,z+r)$ with equality for sufficiently small $r$.  Therefore, $\Px(\DIS(\Ball(h_{z},r))) \leq 2r$ (with
equality for small $r$), and $\dc_{h_{z}}(\eps) \leq 2$ with equality for sufficiently small $\eps$.  In particular, $\dc_{h_{z}} = 2$.

On the other hand, consider the intervals problem (Example~\ref{ex:intervals}), again under $\Px$ uniform on $[0,1]$.
This time, for $h_{[a,b]} \in \C$ with $b-a > 0$, we have for $0 < r < b-a$, $\Ball(h_{[a,b]},r) = \{h_{[a^{\prime},b^{\prime}]} \in \C : |a-a^{\prime}|+|b-b^{\prime}| \leq r\}$,
$\DIS(\Ball(h_{[a,b]},r)) \subseteq [a-r,a+r) \cup (b-r,b+r]$, and $\Px(\DIS(\Ball(h_{[a,b]},r))) \leq 4r$ (with equality for sufficiently small $r$).
But for $0 < b-a \leq r$, we have $\Ball(h_{[a,b]},r) \supseteq \{h_{[a^{\prime},a^{\prime}]} : a^{\prime} \in (0,1)\}$,
so that $\DIS(\Ball(h_{[a,b]},r)) = (0,1)$ and $\Px(\DIS(\Ball(h_{[a,b]},r))) = 1$.  Thus, we generally have
$\dc_{h_{[a,b]}}(\eps) \leq \max\left\{\frac{1}{b-a},4\right\}$, with equality for sufficiently small $\eps$.
However, this last reasoning also indicates $\forall r > 0, \Ball(h_{[a,a]},r) \supseteq \{h_{[a^{\prime},a^{\prime}]} : a^{\prime} \in (0,1)\}$,
so that $\DIS(\Ball(h_{[a,a]},r)) = (0,1)$ and $\Px(\DIS(\Ball(h_{[a,a]},r))) = 1$; therefore, $\dc_{h_{[a,a]}}(\eps) = \frac{1}{\eps}$,
the largest possible value for the disagreement coefficient; in particular, this also means $\dc_{h_{[a,a]}} = \infty$.

Finally, consider the unions of $i$ intervals problem (Example~\ref{ex:unions-of-intervals}), again under $\Px$ uniform on $[0,1]$.
First take any $h_{\mathbf{z}} \in \C$ such that any $h_{\mathbf{z}^{\prime}} \in \C$ representable as a union of $i-1$ intervals has
$\Px(\{x : h_{\mathbf{z}}(x) \neq h_{\mathbf{z}^{\prime}}(x)\}) > 0$.  Then for $0 < r < \min\limits_{1 \leq j < 2i} z_{j+1} - z_{j}$,
$\Ball(h_{\mathbf{z}},r) = \{h_{\mathbf{z}^{\prime}} \in \C : \sum\limits_{1 \leq j \leq 2i} |z_j - z_j^{\prime}| \leq r\}$, so that $\Px(\DIS(\Ball(h_{\mathbf{z}},r))) \leq 4 i r$, with equality for sufficiently small $r$.
For $r > \min\limits_{1 \leq j < 2i} z_{j+1} - z_{j}$, $\Ball(h_{\mathbf{z}},r)$ contains a set of classifiers that flips the labels (compared to $h_{\mathbf{z}}$) in that smallest region
and uses the resulting extra interval to disagree with $h_{\mathbf{z}}$ on a tiny region at an arbitrary location (either by encompassing some point with a small interval, 
or by splitting an interval into two intervals separated by a small gap).  
Thus, $\DIS(\Ball(h_{\mathbf{z}},r)) = (0,1)$, and $\Px(\DIS(h_{\mathbf{z}},r)) = 1$.
So in total, $\dc_{h_{\mathbf{z}}}(\eps) \leq \max\left\{\frac{1}{\min\limits_{1 \leq j < 2i} z_{j+1}-z_{j}}, 4 i\right\}$, with equality for sufficiently small $\eps$.
On the other hand, if $h_{\mathbf{z}} \in \C$ can be represented by a union of $i-1$ (or fewer) intervals,
then we can use the extra interval to disagree with $h_{\mathbf{z}}$ on a tiny region at an arbitrary location, while still remaining in $\Ball(h_{\mathbf{z}},r)$, so that $\DIS(\Ball(h_{\mathbf{z}},r)) = (0,1)$,
$\Px(\DIS(\Ball(h_{\mathbf{z}},r))) = 1$, and $\dc_{h_{\mathbf{z}}}(\eps) = \frac{1}{\eps}$; in particular, in this case we have $\dc_{h_{\mathbf{z}}} = \infty$.

\subsubsection{General Upper Bounds on the Label Complexity of \CAL}

As mentioned, the disagreement coefficient has implications for the label complexities achievable by disagreement-based active learning.
The intuitive reason for this is that, as the number of label requests increases, the \emph{diameter} of the version space shrinks at a predictable rate.
The disagreement coefficient then relates the diameter of the version space to the size of its region of disagreement, which in turn describes
the probability of requesting a label.  Thus, the expected frequency of label requests in the data sequence decreases at a predictable
rate related to the disagreement coefficient, so that $|\L|$ in \CAL~can be lower bounded by a function of the disagreement coefficient.
Specifically, the following result was essentially established by \citet*{hanneke:11a,hanneke:07b}, though
actually the result below is slightly more general than the original.

\begin{theorem}
\label{thm:cal}
For any VC class $\C$,
and any passive learning algorithm $\alg_p$ achieving label complexity $\Lambda_{p}$,
the active learning algorithm obtained by applying \CAL~with $\alg_{p}$ as input achieves a
label complexity $\Lambda_{a}$ that, for any distribution $\Px$ and classifier $f \in \C$,
satisfies
\begin{equation*}
\Lambda_{a}(\eps,f,\Px) = O\left(\dc_{f}\left(\Lambda_{p}(\eps/2,f,\Px)^{-1} \right) \log^{2} \frac{ \Lambda_{p}(\eps/2, f, \Px) }{\eps}\right).
\end{equation*}
\upthmend{-1.21cm}
\end{theorem}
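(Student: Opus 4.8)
The plan is to show that, for $n$ of the stated order, the sample $\L$ that \CAL~feeds to $\alg_p$ in Step 13 is, with probability at least $1-\eps/2$, a list of at least $L := \Lambda_p(\eps/2,f,\Px)$ correctly labeled examples whose $\X$-components are, conditionally on $|\L|$, i.i.d.\ $\Px$-distributed. Granting this, the theorem follows exactly as in the threshold discussion after \NaiveActivizer: conditioning on $|\L|$, the law of $\L$ is that of $\Data_{|\L|}$, so $\E[\er(\alg_p(\L))] \le \P(|\L| < L) + \sup_{\ell \ge L}\E[\er(\alg_p(\Data_\ell))] \le \eps/2 + \eps/2$ by Definition~\ref{defn:passive-label-complexity}. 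I would dispose of the two structural facts first, as they are immediate: the labels in $\L$ are correct because $f\in V$ is an invariant (a requested label equals $f(X_m)$; an inferred label in Step 11 is the common value on $X_m$ of all $h\in V$, hence of $f$, since $X_m\notin\DIS(V)$), and the conditional-i.i.d.\ property holds because each pass of the Step 7 loop appends exactly one pair $(X_m,f(X_m))$ drawn from a block of the data sequence disjoint from --- hence independent of --- the examples consumed in Steps 1--5, which alone determine $|\L| = \lfloor n/(6\hat\Delta) \rfloor$. So everything reduces to a high-probability lower bound on $|\L|$, i.e.\ an upper bound on $\hat\Delta = \hat P_m(\DIS(V))$, which by the confidence-bound property of $\hat P_m$ (Appendix~\ref{app:hatP-definitions}) reduces in turn to an upper bound on $\Px(\DIS(V))$ for the version space $V$ left by the first phase.

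To bound that, the key observation is that \CAL~infers a label only when $X_m\notin\DIS(V)$, in which case every $h\in V$ (in particular $f$) already agrees with $f$ on $X_m$; hence after the first phase has processed its first $m$ unlabeled examples, $V$ is exactly $\C[\Data_m]$. Writing $r_m := \sup_{g\in\C[\Data_m]}\er(g)$, so that $\C[\Data_m]\subseteq\Ball(f,r_m)$, the classical realizable-case sample-complexity bounds \citep*{vapnik:82,blumer:89} furnish a single event of probability $\ge 1-\eps/8$ on which $r_m \le \phi_m := c(\vc\log m + \log(1/\eps))/m$ for all relevant $m$ (union bound over a geometric grid, using that $\C[\Data_m]$ is nonincreasing in $m$). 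On this event, with $\eps_0 := 1/L$, Definition~\ref{def:disagreement-coefficient} gives $\Px(\DIS(\C[\Data_m])) \le \Px(\DIS(\Ball(f,r_m))) \le 2\,\dc_f(\eps_0)\max\{r_m,\eps_0\}$. Summing the per-step request probability $\P(X_{m+1}\in\DIS(\C[\Data_m])) \le 2\,\E[\dc_f(\eps_0)\max\{\phi_m,\eps_0\}]$ over $m < M$ shows that processing $M = \Theta(L\,\vc\log L)$ unlabeled examples triggers only $O\!\big(\dc_f(\eps_0)(\vc\log^2 M + \log M\log(1/\eps))\big) = O(\dc_f(1/L)\log^2(L/\eps))$ label requests in expectation, hence (by a Chernoff bound) with probability $\ge 1-\eps/8$; and since $\phi_M = O(\eps_0)$, the version space at that moment has $\Px(\DIS(V)) = O(\dc_f(1/L)/L)$, whence $|\L| = \Omega(nL/\dc_f(1/L))$. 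Taking $n$ of order $\dc_f(1/L)\log^2(L/\eps)$ then makes the first phase terminate (comfortably before the $m\le 2^n$ cap) having processed at least $M$ examples using at most $\lceil n/2\rceil$ requests, and makes $|\L|\ge L$.

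It then remains to collect the residual failure events --- the uniform-convergence event above, the Chernoff event for the total number of requests in the first phase, the event that $\hat P_m$ fails to be a valid constant-factor confidence upper bound on $\Px(\DIS(V))$, and the event that the ``$t<n$'' guard in Step 9 ever binds (handled by a Chernoff bound, using the built-in floor $\hat P_m\ge c/n$ to force the Step 7 loop length times $\Px(\DIS(V))$ to be $O(n)$) --- each of which is made $\le\eps/8$ by adjusting constants, so their union has probability $\le\eps/2$ as required. The elementary monotonicity $\dc_f(\eps)\le\dc_f(\eps/c)\le c\,\dc_f(\eps)$ from after Definition~\ref{def:disagreement-coefficient}, together with treating $\vc$ as a constant of $\C$, then puts the label budget in exactly the stated form $O(\dc_f(\Lambda_p(\eps/2,f,\Px)^{-1})\log^2(\Lambda_p(\eps/2,f,\Px)/\eps))$.

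\textbf{The main obstacle} is the second paragraph: producing a \emph{single} good event on which the version-space radius $r_m$ contracts at the classical rate for \emph{all} of the (data-dependent number of) indices $m$ that matter, and converting that uniform control into a bound on the \emph{cumulative} number of label requests via the disagreement coefficient, with the floor scale $\eps_0$ handled correctly --- that is, the passage from ``$V$ has small radius around $f$'' to ``$\DIS(V)$, and hence the label-request rate, is small''. Everything else --- the $f\in V$ invariant, the conditional independence of $\L$, and the estimator/Chernoff bookkeeping --- is routine once that is in place.
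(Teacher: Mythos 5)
Your proposal is correct and follows essentially the same approach as the paper's own argument, which is deliberately not spelled out standalone but is obtained as the $k=1$ specialization of the proof of Theorem~\ref{thm:sequential-activizer} (see Appendix~\ref{app:exponential}): uniform version-space shrinkage over a geometric grid via Lemma~\ref{lem:VinB}, a disagreement-coefficient bound on $\Px(\DIS(V_m))$, a Chernoff bound on the cumulative label requests, and the conditional-i.i.d.\ structure of $\L$. The only step you leave slightly compressed is the passage from ``expected requests'' to ``high-probability requests'': the per-step request events are dependent, so the Chernoff argument must be applied either block-wise (freezing $V$ at the start of each dyadic block, as the paper does in Lemma~\ref{lem:sequential-exponential}) or via the upper envelope $\ind_{\DIS(V_m)}(X_{m+1}) \leq \ind_{\DIS(\Ball(f, c\phi_m \lor \eps_0))}(X_{m+1})$ on the uniform-convergence event, which makes the summands genuinely independent; your geometric-grid setup already provides exactly the scaffolding needed for either version, so this is an imprecision of exposition rather than a gap.
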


The proof of Theorem~\ref{thm:cal} is similar to the original result of \citet*{hanneke:11a,hanneke:07b},
with only minor modifications to account for using $\alg_p$ instead of returning an arbitrary element of $V$.
The formal details are implicit in the proof of Theorem~\ref{thm:sequential-activizer} below
(%replacing ``$\bdim_f$'' with ``$1$'',
since \CAL~is essentially identical\ignore{, except for a few constants,} to the $k=1$ round of \Shattering, defined below).
We also have the following simple corollaries.

\begin{corollary}
\label{cor:cal}
For any VC class $\C$, there exists a passive learning algorithm $\alg_p$ such that,
for every $f \in \C$ and distribution $\Px$,
the active learning algorithm obtained by applying \CAL~with $\alg_p$ as input
achieves label complexity
\begin{equation*}
\Lambda_{a}(\eps,f,\Px) = O\left(\dc_{f}(\eps) \log^{2}\left( 1 / \eps \right) \right).
\end{equation*}
\upthmend{-1.25cm}
\end{corollary}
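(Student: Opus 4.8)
The plan is to derive this as an immediate consequence of Theorem~\ref{thm:cal}, instantiated with a well-chosen passive learning algorithm. First I would take $\alg_p$ to be the one-inclusion graph passive learning algorithm of \citet*{haussler:94}, which (as recalled in the proof of Corollary~\ref{cor:activized-1IG}) achieves label complexity $\Lambda_p(\eps,f,\Px) \leq d/\eps$ for \emph{every} target $f \in \C$ and \emph{every} distribution $\Px$. Plugging this bound into Theorem~\ref{thm:cal} gives that the \CAL-based active algorithm run with this $\alg_p$ achieves
\begin{equation*}
\Lambda_a(\eps,f,\Px) = O\!\left( \dc_f\!\left( (2d/\eps)^{-1} \right) \log^2\!\left( \frac{2d/\eps}{\eps} \right) \right) = O\!\left( \dc_f\!\left( \eps/(2d) \right) \log^2\!\left( 2d/\eps^2 \right) \right).
\end{equation*}

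Then I would simplify the two factors using that $d$ is a fixed finite constant depending only on $\C$. For the disagreement-coefficient factor, the excerpt records the scaling property that for any constant $c \in [1,\infty)$, $\dc_f(\eps) \leq \dc_f(\eps/c) \leq c\, \dc_f(\eps)$; applying this with $c = 2d$ yields $\dc_f(\eps/(2d)) \leq 2d\, \dc_f(\eps) = O(\dc_f(\eps))$. For the logarithmic factor, as $\eps \to 0$ we have $\log(2d/\eps^2) = 2\log(1/\eps) + \log(2d) = O(\log(1/\eps))$, hence $\log^2(2d/\eps^2) = O(\log^2(1/\eps))$. Combining these gives $\Lambda_a(\eps,f,\Px) = O(\dc_f(\eps)\log^2(1/\eps))$, as claimed, with the hidden constant depending only on $\C$ (through $d$) and the absolute constant from Theorem~\ref{thm:cal}.

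There is essentially no hard step here: the content is entirely in Theorem~\ref{thm:cal} together with the existence of a passive algorithm whose label complexity is $O(1/\eps)$ uniformly over $f$ and $\Px$. The closest thing to an obstacle is making sure the distribution- and target-independence of the $d/\eps$ bound is genuinely uniform, so that the substitution into Theorem~\ref{thm:cal} is legitimate for all $(f,\Px)$ simultaneously; this is exactly the guarantee provided by \citet*{haussler:94}. I would also remark that the choice $\Lambda_p = d/\eps$ is precisely what makes the $\Lambda_p(\eps/2,f,\Px)^{-1}$ argument to $\dc_f$ collapse to a constant multiple of $\eps$; a passive algorithm with label complexity of order $\poly(1/\eps)$ of higher degree would still yield a valid bound, but not the clean $\dc_f(\eps)$ form stated in the corollary.
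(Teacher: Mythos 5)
Your proposal is correct and is exactly the paper's argument: instantiate Theorem~\ref{thm:cal} with the one-inclusion graph algorithm's bound $\Lambda_p(\eps,f,\Px) \leq d/\eps$, then absorb the constant $2d$ via the property $\dc_f(\eps/2d) \leq 2d\,\dc_f(\eps)$ and simplify the logarithmic factor. The paper only adds the remark (which you need not reproduce) that the extra factor of $d$ can in fact be avoided, as shown in the proof of Theorem~\ref{thm:sequential-activizer}.
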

\begin{proof}
The one-inclusion graph algorithm of \citet*{haussler:94} is a passive learning algorithm achieving label complexity
$\Lambda_{p}(\eps,f,\Px) \leq d/\eps$.  Plugging this into Theorem~\ref{thm:cal}, using the fact that
$\dc_{f}(\eps/2d) \leq 2d \dc_{f}(\eps)$, and simplifying, we arrive at the result.
In fact, we will see in the proof of Theorem~\ref{thm:sequential-activizer} that incurring this extra constant factor of $d$ is not actually necessary.
\end{proof}

\begin{corollary}
\label{cor:cal-exponential}
For any VC class $\C$ and distribution $\Px$, if $\forall f \in \C$, $\dc_f < \infty$,
then $(\C,\Px)$ is learnable at an exponential rate.  If this is true for all $\Px$, then $\C$ is learnable at an exponential rate.
\thmend
\end{corollary}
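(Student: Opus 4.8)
The plan is to obtain this immediately from Corollary~\ref{cor:cal}. That corollary furnishes a \emph{single} passive learning algorithm $\alg_p$ --- the one-inclusion graph algorithm of \citet*{haussler:94} --- for which the \CAL-activized version achieves, for every $f \in \C$ and every distribution $\Px$,
\[
\Lambda_a(\eps,f,\Px) = O\!\left(\dc_f(\eps)\,\log^2(1/\eps)\right),
\]
and this displayed bound is the only ingredient needed.

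First I would observe that $\dc_f(\eps)$ is nonincreasing in $\eps$: the supremum in Definition~\ref{def:disagreement-coefficient} is over the set $\{r > \eps\}$, which only grows as $\eps$ shrinks, so $\dc_f(\eps) \le \dc_f(0) = \dc_f$ for every $\eps > 0$ (equivalently, this is the $c \to \infty$ case of the already-noted inequality $\dc_f(\eps) \le \dc_f(\eps/c)$). Hence, under the hypothesis $\dc_f < \infty$, the factor $\dc_f(\eps)$ in the bound above is bounded by a finite constant independent of $\eps$, so $\Lambda_a(\eps,f,\Px) = O(\log^2(1/\eps)) \in \Polylog(1/\eps)$. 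Since the active learning algorithm here is fixed independently of $f$, this is exactly the condition in Definition~\ref{defn:exponential} witnessing that $(\C,\Px)$ is learnable at an exponential rate. The second claim requires nothing further: the \emph{same} fixed active algorithm satisfies $\Lambda_a(\eps,f,\Px) \in \Polylog(1/\eps)$ simultaneously for all $\Px$ and all $f \in \C$ whenever $\dc_f < \infty$ holds throughout, which is precisely the definition of $\C$ being learnable at an exponential rate.

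There is essentially no obstacle; the argument is a one-line substitution into Corollary~\ref{cor:cal}. The only point meriting a moment's care is bookkeeping --- confirming that the constant hidden in the $O(\cdot)$ of Corollary~\ref{cor:cal} depends only on the fixed finite VC dimension $d$ (via $\Lambda_p(\eps,f,\Px) \le d/\eps$ and $\dc_f(\eps/2d) \le 2d\,\dc_f(\eps)$) and carries no hidden $\eps$-dependence, so that the product $\dc_f \cdot \log^2(1/\eps)$ genuinely lies in $\Polylog(1/\eps)$.
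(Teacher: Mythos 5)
Your argument is correct and matches the paper's proof essentially verbatim: both derive the first claim from Corollary~\ref{cor:cal} together with $\dc_f(\eps) \leq \dc_f$, and both derive the second from the fact that the \CAL-activized one-inclusion algorithm has no direct dependence on $\Px$. Your extra bookkeeping about the constants in the $O(\cdot)$ is sound but not strictly needed beyond what the paper already notes.
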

\begin{proof}
The first claim follows directly from Corollary~\ref{cor:cal}, since $\dc_{f}(\eps) \leq \dc_{f}$.
The second claim then follows from the fact that \CAL~is adaptive to $\Px$ (has no direct dependence on $\Px$ except
via the data).
\end{proof}

Aside from the disagreement coefficient and $\Lambda_p$ terms, the other constant factors hidden in the big-O in Theorem~\ref{thm:cal}
are only $\C$-dependent (i.e., independent of $f$ and $\Px$).
As mentioned, if we are only interested in achieving the label complexity bound of Corollary~\ref{cor:cal}, 
we can obtain this result more directly by the simpler
original algorithm of \citet*{cohn:94} via the analysis of \citet*{hanneke:11a,hanneke:07b}.

\subsubsection{General Lower Bounds on the Label Complexity of \CAL}
\label{subsubsec:cal-lower}

It is also possible to prove a kind of \emph{lower bound} on the label complexity of
\CAL~in terms of the disagreement coefficient,
so that the dependence on the disagreement coefficient in Theorem~\ref{thm:cal} is unavoidable.
Specifically, there are two simple observations that intuitively explain the possibility of such lower bounds.
The first observation is that the expected number of label requests \CAL~makes
among the first $\lceil 1/r \rceil$ unlabeled examples is at least $\Px(\DIS(\Ball(f,r))) / (2r)$ (assuming it does not halt first).
Similarly, the second observation is that, to arrive at a
region of disagreement with expected probability mass less than
$\Px(\DIS(\Ball(f,r))) / 2$, \CAL~requires a budget $n$ of size at least $\Px(\DIS(\Ball(f,r))) / (2r)$.
These observations are formalized in Appendix~\ref{app:cal} 
as Lemmas~\ref{lem:cal-queries-lower} and \ref{lem:cal-dis-lower}.
Noting that, for unbounded $\dc_{f}(\eps)$, $\Px(\DIS(\Ball(f,\eps))) / \eps \neq o\left(\dc_{f}(\eps)\right)$,
the relevance of these observations in the context of deriving lower bounds based on the disagreement coefficient becomes clear.
In particular, we can use the latter of these insights to arrive at the following theorem,
which essentially complements Theorem~\ref{thm:cal},
showing that it cannot generally be improved 
beyond reducing the constants and logarithmic factors,
without altering the algorithm or
introducing additional $\alg_p$-dependent quantities in the label complexity bound.
The proof is included in Appendix~\ref{app:cal}.

\begin{theorem}
\label{thm:cal-lower}
For any set of classifiers $\C$, $f \in \C$, distribution $\Px$,
and nonincreasing 
function $\lambda : (0,1) \to \nats$,
there exists a passive learning algorihtm $\alg_p$
achieving a label complexity $\Lambda_{p}$ with $\Lambda_{p}(\eps,f,\Px) = \lambda(\eps)$
for all $\eps > 0$,
such that if \CAL, with $\alg_p$ as its argument, achieves label complexity $\Lambda_a$,
then
\begin{equation*}
\Lambda_{a}(\eps,f,\Px) \neq o\left( \dc_{f}\left( \Lambda_{p}(2 \eps, f, \Px)^{-1}\right) \right).
\end{equation*}
\upthmend{-1.25cm}
\end{theorem}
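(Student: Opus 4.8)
\emph{Plan of attack.} The idea is to exhibit a single, deliberately fragile passive algorithm $\alg_{p}$ whose label complexity equals $\lambda$ exactly, and to argue by contradiction from $\Lambda_{a}(\eps)=o(\dc_{f}(\lambda(2\eps)^{-1}))$: I would show that, at a budget of $n=\Lambda_{a}(\eps)$ labels, \CAL~is forced to hand $\alg_{p}$ a sample $\L$ with $|\L|<\lambda(2\eps)$ with probability bounded away from $0$, which by fragility of $\alg_{p}$ drives $\Px(\DIS(V))$ small with probability $\ge\tfrac13$, and this contradicts a disagreement‑shrinking lower bound for \CAL~stated in terms of the disagreement coefficient.

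\emph{Step 1: the fragile $\alg_{p}$.} Fix $(f,\Px)$ and $\lambda$; since only existence is needed, $\alg_{p}$ may be tailored to them. For $m\in\nats$ set $q_{m}=\sup\{\eps'\in(0,1):\lambda(\eps')>m\}$ (with $\sup\varnothing=0$); $q_{\cdot}$ is nonincreasing because $\lambda$ is. Let $\alg_{p}$ \emph{ignore the content of its input $\L$} and return $-f$ (which has $\er(-f)=1$) with probability $q_{|\L|}$, and return $f$ otherwise, so that $\E[\er(\alg_{p}(\L))\mid|\L|]=q_{|\L|}$ for \emph{any} $\L$. From monotonicity of $\lambda$: if $m\ge\lambda(\eps)$ then every $\eps'$ in the set defining $q_{m}$ has $\eps'<\eps$, so $q_{m}\le\eps$; hence the function equal to $\lambda(\eps)$ at $(f,\Px)$ and $+\infty$ elsewhere is a label complexity $\Lambda_{p}$ achieved by $\alg_{p}$ with $\Lambda_{p}(\eps,f,\Px)=\lambda(\eps)$. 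And if $m<\lambda(\eps')$ then $q_{m}\ge\eps'$; in particular, whatever $\L$ \CAL~produces at budget $n$, the returned classifier $\hat h_{n}=\alg_{p}(\L)$ satisfies $\E[\er(\hat h_{n})]=\E[q_{|\L|}]\ge 2\eps\,\P(|\L|<\lambda(2\eps))$.

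\emph{Step 2: reduction to a shrinking‑disagreement statement.} Assume for contradiction $\Lambda_{a}(\eps)=o(\dc_{f}(\lambda(2\eps)^{-1}))$; then $\Lambda_{a}(\eps)<\infty$ for small $\eps$ (else nothing to prove) and $\dc_{f}(\lambda(2\eps)^{-1})\to\infty$ (else the hypothesis contradicts $\Lambda_{a}\ge1$). Run \CAL($\alg_{p}$,$n$) with $n=\Lambda_{a}(\eps)$; then $\E[\er(\hat h_{n})]\le\eps$, so Step~1 gives $\P(|\L|\ge\lambda(2\eps))\ge\tfrac12$. Now $|\L|=\lfloor n/(6\hat\Delta)\rfloor$, where $\hat\Delta$ is the \CAL~estimator of $\Px(\DIS(V))$ for the version space $V$ present on entering the second phase; $\hat\Delta$ is (i) bounded below by $\Omega(1/n)$ for every realization and (ii) a confidence \emph{upper} bound, i.e.\ $\hat\Delta\ge\Px(\DIS(V))$ off an event of probability $\to0$. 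By (i), on the positive‑probability event $\{|\L|\ge\lambda(2\eps)\}$ we get $\Omega(1/n)\le\hat\Delta\le n/(6\lambda(2\eps))$, hence $\Lambda_{a}(\eps)=\Omega(\sqrt{\lambda(2\eps)})$; with the contradiction hypothesis this places us in the regime $\dc_{f}(\lambda(2\eps)^{-1})=\omega(\sqrt{\lambda(2\eps)})$. By (ii), on $\{|\L|\ge\lambda(2\eps)\}$ intersected with the confidence event, $\Px(\DIS(V))\le\hat\Delta\le n/(6\lambda(2\eps))$; therefore, for all small $\eps$,
\[\P\!\left(\Px(\DIS(V))\le\tfrac16\,\Lambda_{a}(\eps)/\lambda(2\eps)\right)\ \ge\ \tfrac13 .\]
Putting $\gamma=\lambda(2\eps)^{-1}$ and choosing $r^{\star}>\gamma$ with $P^{\star}:=\Px(\DIS(\Ball(f,r^{\star})))\ge\tfrac12 r^{\star}\dc_{f}(\gamma)$ (possible by definition of $\dc_{f}(\gamma)$), we have $P^{\star}/r^{\star}\ge\tfrac12\dc_{f}(\gamma)$ and $P^{\star}\ge\tfrac12\gamma\dc_{f}(\gamma)$; since $\Lambda_{a}(\eps)\le\tfrac12\dc_{f}(\gamma)$ for small $\eps$, $\tfrac16\Lambda_{a}(\eps)\gamma\le\tfrac1{12}\gamma\dc_{f}(\gamma)\le\tfrac12 P^{\star}$, so $\P(\Px(\DIS(V))<\tfrac12 P^{\star})\ge\tfrac13$. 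Applying Lemma~\ref{lem:cal-dis-lower} in the form ``if \CAL~enters its second phase with $\Px(\DIS(V))<\tfrac12\Px(\DIS(\Ball(f,r)))$ with probability $\ge\tfrac13$, then its budget is at least $\Px(\DIS(\Ball(f,r)))/(2r)$'' (the high‑probability version of the ``second observation,'' obtained from Lemma~\ref{lem:cal-queries-lower} plus a concentration estimate) yields $\Lambda_{a}(\eps)\ge P^{\star}/(2r^{\star})\ge\tfrac14\dc_{f}(\gamma)=\tfrac14\dc_{f}(\Lambda_{p}(2\eps,f,\Px)^{-1})$, contradicting $\Lambda_{a}(\eps)=o(\dc_{f}(\Lambda_{p}(2\eps,f,\Px)^{-1}))$.

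\emph{Where the difficulty lies.} The only non‑routine point is the last invocation of Lemma~\ref{lem:cal-dis-lower}: the two ``observations'' are naturally statements about \emph{expectations} ($\E[\#\text{requests among the first }\lceil1/r\rceil\text{ points}]$ and $\E[\Px(\DIS(V))]$), and an expectation lower bound on $\Px(\DIS(V))$ is useless by itself — since $\Px(\DIS(V))\le1$, a reverse‑Markov argument caps the probability that $\Px(\DIS(V))$ is a non‑negligible multiple of $P^{\star}$ at $\tfrac12$, never above it, no matter the constants. I would get around this by exploiting monotonicity: $j\mapsto\Px(\DIS(\C[\Data_{j}]))$ is nonincreasing (so it suffices to control it at a single, well‑chosen index of order $1/r^{\star}$, not uniformly), and the running request count differs from its nonincreasing predictable compensator by a martingale with bounded increments, so an Azuma/Bernstein estimate promotes the expectation statements to high‑probability ones with a small absolute failure probability; crucially, the $O(\sqrt{1/r^{\star}})=O(\sqrt{\lambda(2\eps)})$ martingale fluctuation is dominated by the drift $\asymp P^{\star}/r^{\star}\asymp\dc_{f}(\gamma)$ \emph{precisely} in the regime $\dc_{f}(\gamma)=\omega(\sqrt{\lambda(2\eps)})$ that, as observed in Step~2, the contradiction hypothesis forces us into. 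Everything else — the Chernoff argument that the ``$t<n$'' cap and the ``$m\le2^{n}$'' guard in \CAL~are inactive on the relevant event, and the concentration of the $\hat P$‑estimators to within constant factors of $\Px(\DIS(V))$ — is standard.
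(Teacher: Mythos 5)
Your Step~1 is exactly the paper's construction (the paper uses $\lambda^{-1}(m)=\inf\{\eps:\lambda(\eps)\le m\}$, which for nonincreasing $\lambda$ coincides with your $q_m$), and the contradiction framing in Step~2 is an equivalent repackaging of the paper's direct argument. Where the two diverge is at the point you flag yourself: how to turn the \emph{expectation} statement of Lemma~\ref{lem:cal-dis-lower} into a high-probability statement. Here your diagnosis of the obstacle is wrong, and the fix you sketch does not close the gap.

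The mistake in the diagnosis is that you argue from the crude bound $\Px(\DIS(V))\le 1$ and conclude that reverse Markov can never push the probability above $1/2$. But the proof of Lemma~\ref{lem:cal-dis-lower} actually works with the random set $D_{M(n)}=\DIS\bigl(\Ball(f,r)\cap\truV_{M(n)}\bigr)$, which satisfies both $\Px(D_{M(n)})\le\Px(\DIS(\truV_{M(n)}))$ \emph{and}, crucially, $D_{M(n)}\subseteq\DIS(\Ball(f,r))$, hence $\Px(D_{M(n)})\le P^{\star}$. With that tight upper bound, $P^{\star}-\Px(D_{M(n)})$ is a nonnegative random variable with mean at most $nr$, so ordinary Markov gives $\P\bigl(\Px(\DIS(\truV_{M(n)}))<P^{\star}/12\bigr)\le\P\bigl(\Px(D_{M(n)})<P^{\star}/12\bigr)\le nr/\bigl(\tfrac{11}{12}P^{\star}\bigr)$ directly, with no concentration machinery at all. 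Combined with a one-line Hoeffding bound showing $\hat\Delta\ge\Px(\DIS(V))$ with probability close to $1$ (the estimator has a built-in $2/m$ slack), this is precisely the paper's chain, which gives $\P(\hat\Delta<P^{\star}/12)\le 9/22 + 1/11$ for $n\le\dc_{f}(\lambda(2\eps_i)^{-1})/4$ and hence $\E[\er(\hat h_n)]>\eps_i$. The entire ``second observation'' already is a one-step high-probability statement once one remembers that $D_{M(n)}$ lives inside $\DIS(\Ball(f,r))$.

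Because you missed this, you are forced into a martingale workaround, and that workaround has its own problems. First, it relies on the regime $\dc_{f}(\gamma)=\omega\bigl(\sqrt{\lambda(2\eps)}\bigr)$, which you derive from the claim that $\hat\Delta=\Omega(1/n)$ for every realization. That claim is false for \CAL: the estimator used there is $\hat\Delta^{(1)}_{m}(W_1,W_2,V)$ from \eqref{eqn:hatPn3}, whose deterministic floor is $2/m$, and the index $m$ at Step~6 can be as large as $2^{n}$ (the while-loop cap is $m\le 2^{n}$, not $m\le\mathrm{poly}(n)$). So the only a priori lower bound is $\hat\Delta=\Omega(2^{-n})$, which yields nothing useful, and the $\Lambda_a(\eps)=\Omega(\sqrt{\lambda(2\eps)})$ inference does not follow. (You may be thinking of \NaiveActivizer's estimator in \eqref{eqn:pdisv-est}, which does have a $4/n$ floor — but that is a different algorithm.) Second, even granting the regime, the Azuma argument concentrates the request count around its \emph{random} compensator $\sum_{m}\Px(\DIS(\truV_{m-1}))$; to conclude anything you still need a high-probability lower bound on that compensator, and obtaining one again comes back to needing $\Px(D_{m})\le P^{\star}$ — the very observation that would have made the martingale unnecessary in the first place.
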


Recall that there are many natural learning problems for which $\dc_{f} = \infty$,
and indeed where $\dc_{f}(\eps) = \Omega(1/\eps)$:
for instance, intervals with $f = h_{[a,a]}$ under uniform $\Px$,
or unions of $i$ intervals under uniform $\Px$ with $f$ representable as $i-1$ or fewer intervals.
Thus, since we have just seen that the improvements gained by disagreement-based methods are
well-characterized by the disagreement coefficient,
if we would like to achieve exponential improvements over passive learning for these problems,
we will need to move beyond these disagreement-based methods.
In the subsections that follow, we will use an alternative algorithm and analysis, and prove a general result that is always
at least as good as Theorem~\ref{thm:cal} (in a big-O sense), and often significantly better (in a little-o sense).  In particular, it leads to a sufficient
condition for learnability at an exponential rate, strictly more general than that of Corollary~\ref{cor:cal-exponential}.

\subsection{An Improved Activizer}
\label{subsec:sequential-activizer}

In this subsection, we define a new active learning method based on shattering, as in \BasicActivizer,
but which also takes fuller advantage of the sequential aspect of active learning, as in \CAL.
We will see that this algorithm can be analyzed in a manner analogous to the disagreement coefficient
analysis of \CAL, leading to a new and often dramatically-improved label complexity bound.
Specifically, consider the following meta-algorithm.

\begin{bigboxit}
\Shattering \\ 
Input: passive algorithm $\alg_p$, label budget $n$\\
Output: classifier $\hat{h}$\\
{\vskip -2mm}\line(1,0){419}\\
0.\phantom{0} $V \gets V_{0} = \C$, $T_0 \gets \lceil 2n / 3 \rceil$, $t \gets 0$, $m \gets 0$\\
1.\phantom{0} For $k = 1,2,\ldots,d+1$\\
2.\phantom{0} \quad Let $\L_{k} \gets \{\}$, $T_k \gets T_{k-1} - t$, and let $t \gets 0$\\
3.\phantom{0} \quad While $t < \lceil T_k / 4 \rceil$ and $m \leq k \cdot 2^{n}$ \\
4.\phantom{0} \qquad $m \gets m + 1$ \\
5.\phantom{0} \qquad If $\hat{P}_{m}\left(S \in \X^{k-1} : V \text{ shatters } S \cup \{X_{m}\} | V \text{ shatters } S\right) \geq 1/2$\\
6.\phantom{0} \qquad\quad Request the label $Y_{m}$ of $X_{m}$, and let $\hat{y} \gets Y_{m}$ and $t \gets t+1$\\
7.\phantom{0} \qquad Else let
$\hat{y} \gets\!\! \argmax\limits_{y \in \{-1,+1\}} \!\hat{P}_{m}\!\left(S \in \X^{k-1} \!:\! V[(X_{m},-y)] \text{ does not shatter } S | V \text{ shatters } S \right)$\\
8.\phantom{0} \qquad Let $V \gets V_{m} = V_{m-1}\left[\left(X_{m}, \hat{y}\right)\right]$\\
9.\phantom{0} \quad $\hat{\Delta}^{(k)} \gets \hat{P}_{m}\left(x : \hat{P}\left(S \in \X^{k-1} : V \text{ shatters } S \cup \{x\} | V \text{ shatters } S\right) \geq 1/2 \right)$\\
10. \quad Do $\lfloor T_k / (3 \hat{\Delta}^{(k)}) \rfloor$ times\\
11. \qquad $m \gets m+1$ \\
12. \qquad If $\hat{P}_{m}\left(S \in \X^{k-1} : V \text{ shatters } S \cup \{X_{m}\} | V \text{ shatters } S\right) \geq 1/2$ and $t < \lfloor 3 T_k / 4 \rfloor$ \\
13. \qquad\quad Request the label $Y_{m}$ of $X_{m}$, and let $\hat{y} \gets Y_{m}$ and $t \gets t+1$\\
14. \qquad Else, let
$\hat{y} \gets\!\! \argmax\limits_{y \in \{-1,+1\}} \!\hat{P}_{m}\!\left(S \in \X^{k-1} \!:\! V[(X_{m},-y)] \text{ does not shatter } S | V \text{ shatters } S \right)$\\
15. \qquad Let $\L_k \gets \L_k \cup \left\{\left(X_{m},\hat{y}\right)\right\}$ and $V \gets V_{m} = V_{m-1}\left[\left(X_{m}, \hat{y}\right)\right]$\\
16. Return $\ActiveSelect(\{\alg_p(\L_1),\alg_p(\L_2),\ldots,\alg_p(\L_{d+1})\},\lfloor n/3 \rfloor, \{X_{m+1},X_{m+2},\ldots\})$
\end{bigboxit}

As before, the procedure is specified in terms of estimators $\hat{P}_{m}$.
Again, these can be defined in a variety of ways, as long as they
converge (at a fast enough rate) to their respective true probabilities.
For the results below, we will use the definitions given in Appendix~\ref{app:hatP-definitions}:
i.e., the same definitions used in \BasicActivizer.  
Following the same argument as for \BasicActivizer, one can show that \Shattering~is a universal activizer for $\C$, for any VC class $\C$.
However, we can also obtain more detailed results in terms of a generalization of the disagreement coefficient given below.

As with \BasicActivizer, this procedure has three main components: one in which we focus on reducing the
version space $V$, one in which we focus on collecting a (conditionally) i.i.d. sample to feed into $\alg_{p}$, and one
in which we select from among the $d+1$ executions of $\alg_{p}$.
However, unlike \BasicActivizer, here the first stage is also broken up based on the
value of $k$, so that each $k$ has its own first and second stages, rather than sharing
a single first stage.  Again, the choice of the number of (unlabeled) examples processed in each second stage
guarantees (by a Chernoff bound) that the ``$t < \lfloor 3T_k / 4 \rfloor$'' constraint in Step 12 is redundant.
Depending on the type of label complexity result we wish to prove, this multistage architecture is sometimes avoidable.
In particular, as with Corollary~\ref{cor:cal} above, to directly achieve the label complexity bound in
Corollary~\ref{cor:sequential-activizer} below, we can use a much simpler approach that replaces Steps 9-16,
instead simply returning an arbitrary element of $V$ upon termination.

Within each value of $k$, \Shattering~behaves analogous to \CAL,
requesting the label of an example only if it cannot infer the label from known information, and updating the version
space $V$ after every label request; however, unlike \CAL, for values of $k > 1$, the mechanism for inferring
a label is based on shatterable sets, as in \BasicActivizer, and is motivated by the same argument of splitting
$V$ into subsets containing arbitrarily good classifiers (see the discussion in Section~\ref{subsec:basic-activizer}).
Also unlike \CAL, even the inferred labels can be used to reduce the set $V$ (Steps 8 and 15), since they are not only correct
but also potentially informative in the sense that $x \in \DIS(V)$.
As with \BasicActivizer, the key to obtaining improvement guarantees is that some value of $k$ has $|\L_{k}| \gg n$,
while maintaining that all of the labels in $\L_{k}$ are correct; $\ActiveSelect$ then guarantees the overall performance
is not too much worse than that obtained by $\alg_{p}(\L_{k})$ for this value of $k$.

To build intuition about the behavior of \Shattering, let us consider our usual toy examples,
again under a uniform distribution $\Px$ on $[0,1]$; as before, for simplicity we ignore the fact that $\hat{P}_{m}$
is only an estimate, as well as the constraint on $t$ in Step 12 and the effectiveness of $\ActiveSelect$,
all of which will be addressed in the general analysis.
First, for the behavior of the algorithm for thresholds and nonzero-width intervals, we may simply
refer to the discussion of \CAL, since the $k=1$ round of \Shattering~is essentially identical
to \CAL; in this case, we have already seen that $|\L_{1}|$ grows as $\exp\{\Omega(n / \log(n/\eps))\}$ for thresholds, and
does so for nonzero-width intervals after some initial period of slow growth related to the width of the target interval
(i.e., the period before finding the first positive example).  As with \BasicActivizer, for zero-width intervals,
we must look to the $k=2$ round of \Shattering~to find improvements.  Also as with \BasicActivizer,
for sufficiently large $n$, every $X_{m}$ processed in the $k=2$ round will have its label inferred (correctly)
in Step 7 or 14 (i.e., it does not request any labels).  But this means we reach Step 9 with $m = 2 \cdot 2^{n}+1$;
furthermore, in these circumstances the definition of $\hat{P}_{m}$ from Appendix~\ref{app:hatP-definitions} guarantees (for sufficiently large $n$)
that $\hat{\Delta}^{(2)} = 2/m$, so that $|\L_2| \propto n \cdot m = \Omega\left( n \cdot 2^{n}\right)$.  Thus, we expect
the label complexity gains to be \emph{exponentially improved} compared to $\alg_{p}$.

For a more involved example, consider unions of 2 intervals (Example~\ref{ex:unions-of-intervals}), under uniform $\Px$ on $[0,1]$,
and suppose $f = h_{(a,b,a,b)}$ for $b-a > 0$; that is, the target function is representable as a single nonzero-width interval $[a,b] \subset (0,1)$.
As we have seen, $\partial f = (0,1)$ in this case, so that disagreement-based methods are ineffective at improving over passive.  This also means
the $k=1$ round of \Shattering~will not provide improvements (i.e., $|\L_1| = O(n)$).  However, consider the $k=2$ round.
As discussed in Section~\ref{subsec:basic-activizer-examples}, for sufficiently large $n$, after the first round ($k=1$) the set $V$
is such that any label we infer in the $k=2$ round will be correct.  Thus, it suffices to determine how large the set $\L_2$ becomes.
By the same reasoning as in Section~\ref{subsec:basic-activizer-examples}, for sufficiently large $n$, the examples $X_m$ whose
labels are requested in Step 6 are precisely those \emph{not} separated from both $a$ and $b$ by at least one of the $m-1$ examples already
processed (since $V$ is consistent with the labels of all $m-1$ of those examples).  But this is the same set of points \CAL~would query for
the \emph{intervals} example in Section~\ref{subsec:disagreement-coefficient}; thus, the same argument used there implies that in this
problem we have $|\L_2| \geq \exp\{ \Omega(n / \log(n/\eps)) \}$ with probability $1-\eps/2$, which means we should
expect a label complexity of $O\left( \log( \Lambda_{p}(\eps/2,f,\Px)) \log(\log(\Lambda_{p}(\eps/2,f,\Px))/\eps) \right)$,
where $\Lambda_{p}$ is the label complexity of $\alg_{p}$.
For the case $f = h_{(a,a,a,a)}$, $k=3$ is the relevant round, and the analysis goes similarly to the $h_{[a,a]}$ scenario
for intervals above.
Unions of $i > 2$ intervals can be studied analogously,
with the appropriate value of $k$ to analyze being determined by the number of intervals required to represent the
target up to probability-zero differences (see the discussion in Section~\ref{subsec:basic-activizer-examples}).

\subsection{Beyond the Disagreement Coefficient}
\label{subsec:generalization-of-disagreement-coefficient}

In this subsection, we introduce a new quantity,
a generalization of the disagreement coefficient, which we will later use to provide a
general characterization of the improvements achievable by \Shattering, analogous to how the
disagreement coefficient characterized the improvements achievable by \CAL~in Theorem~\ref{thm:cal}.
First, let us define the following generalization of the disagreement core.
\begin{definition}
\label{def:k-dim-core}
For an integer $k \geq 0$, define the \emph{$k$-dimensional shatter core} of a classifier $f$ with respect to a set of classifiers $\H$ and distribution $P$
as
\begin{equation*}
\partial_{\H,P}^{k} f = \lim\limits_{r \to 0} \left\{ S \in \X^k : \Ball_{\H,P}(f,r) \text{ shatters } S\right\}.
\end{equation*}
\upthmend{-1.25cm}
\end{definition}
As before, when $P=\Px$, and $\Px$ is clear from the context, we will abbreviate $\partial^{k}_{\H} f = \partial^{k}_{\H,\Px} f$,
and when we also intend $\H=\C$, the \emph{full} concept space,
and $\C$ is clearly defined in the given context, we further abbreviate $\partial^{k} f = \partial^{k}_{\C} f = \partial^{k}_{\C,\Px} f$.
We have the following definition, which will play a key role in the label complexity bounds below.
\begin{definition}
\label{def:higher-dim-coefficient}
For any concept space $\C$, distribution $\Px$, and classifier $f$, $\forall k \in \nats$, $\forall \eps \geq 0$, define
\begin{equation*}
\dc_{f}^{(k)}(\eps) = 1\lor \sup\limits_{r > \eps} \frac{\Px^{k}\left( S \in \X^{k} : \Ball(f,r) \text{ shatters } S \right)}{r}.
\end{equation*}
Then define
\begin{equation*}
\bdim_f = \min\left\{ k \in \nats : \Px^{k}\left( \partial^{k} f\right) = 0\right\}
\end{equation*}
and  
\begin{equation*}
\hdc_{f}(\eps) = \dc_{f}^{(\bdim_f)}(\eps).
\end{equation*}
Also abbreviate $\dc_f^{(k)} = \dc_f^{(k)}(0)$ and $\hdc_f = \hdc_f(0)$.
\thmend
\end{definition}

We might refer to the quantity $\dc_{f}^{(k)}(\eps)$ as the order-$k$ (or $k$-dimensional) disagreement coefficient, as it
%(unfortunately, ``shatter coefficient'' is already taken :-). 
represents a direct generalization of the disagreement coefficient $\dc_{f}(\eps)$.
However, rather than merely measuring the rate of collapse of the probability of \emph{disagreement} (one-dimensional shatterability),
$\dc_{f}^{(k)}(\eps)$ measures the rate of collapse of the probability of \emph{$k$-dimensional shatterability}.
In particular, we have $\hdc_{f}(\eps) = \dc_{f}^{(\bdim_f)}(\eps) \leq
\dc_{f}^{(1)}(\eps) = \dc_{f}(\eps)$, so that this new quantity is never
larger than the disagreement coefficient.
However, unlike the disagreement coefficient, we \emph{always} have
$\hdc_f(\eps) = o(1/\eps)$ for VC classes $\C$.
In fact, we could equivalently define $\hdc_{f}(\eps)$
as the value of $\dc_{f}^{(k)}(\eps)$ for the smallest $k$ with $\dc_{f}^{(k)}(\eps) = o(1/\eps)$.
Additionally, we will see below that there are many interesting cases where $\dc_f = \infty$ (even $\dc_{f}(\eps) = \Omega(1/\eps)$) but $\hdc_f < \infty$
(e.g, intervals with a zero-width target, or unions of $i$ intervals where the target is representable as a union of $i-1$ or fewer intervals).
As was the case for $\dc_{f}$, we will see that showing $\hdc_f < \infty$ for a given learning problem has interesting implications for the label complexity of active learning (Corollary~\ref{cor:exponential} below).
In the process, we have also defined the quantity $\bdim_{f}$, which may itself be of independent interest in the asymptotic analysis of learning in general.
For VC classes, $\bdim_{f}$ always exists, and in fact is at most $d+1$ (since $\C$ cannot shatter any $d+1$ points).  When $d=\infty$, the quantity $\bdim_f$
might not be defined (or defined as $\infty$), in which case $\hdc_{f}(\eps)$ is also not defined; in this work we restrict our discussion to VC classes,
so that this issue never comes up; Section~\ref{sec:open-problems} discusses possible extensions to classes of infinite VC dimension.

We should mention that the restriction of $\hdc_{f}(\eps) \geq 1$ in the definition is only for
convenience, as it simplifies the theorem statements and proofs below.  It is not fundamental to the
definition, and can be removed (at the expense of slightly more complicated theorem statements).
In fact, this only makes a difference to the value of $\hdc_{f}(\eps)$ in some
(seemingly unusual) degenerate cases.  The same is true of $\dc_{f}(\eps)$ in Definition~\ref{def:disagreement-coefficient}.

The process of calculating $\hdc_{f}(\eps)$ is quite similar to that for the disagreement coefficient;
we are interested in describing $\Ball(f,r)$, and specifically the variety of behaviors of elements of $\Ball(f,r)$ on points in $\X$,
in this case with respect to shattering.
To illustrate the calculation of $\hdc_{f}(\eps)$, consider our usual toy examples, again under $\Px$ uniform on $[0,1]$.
For the thresholds example (Example~\ref{ex:thresholds}), we have $\bdim_f = 1$, so that
$\hdc_{f}(\eps) = \dc_{f}^{(1)}(\eps) = \dc_{f}(\eps)$, which
we have seen is equal $2$ for small $\eps$.  Similarly, for the intervals example (Example~\ref{ex:intervals}),
any $f = h_{[a,b]} \in \C$ with $b-a>0$ has $\bdim_{f} = 1$,
so that $\hdc_{f}(\eps) = \dc_{f}^{(1)}(\eps) = \dc_{f}(\eps)$, which
for sufficiently small $\eps$, is equal $\max\left\{ \frac{1}{b-a}, 4\right\}$.  Thus, for these
two examples, $\hdc_{f}(\eps) = \dc_{f}(\eps)$.  However, continuing the
intervals example, consider $f = h_{[a,a]} \in \C$.  In this case, we have seen $\partial^{1} f = \partial f = (0,1)$,
so that $\Px(\partial^{1} f) = 1 > 0$.  For any $x_1,x_2 \in (0,1)$ with $0 < |x_1-x_2| \leq r$, $\Ball(f,r)$ can shatter
$(x_1,x_2)$, specifically using the classifiers $\{h_{[x_1,x_2]}, h_{[x_1,x_1]}, h_{[x_2,x_2]}, h_{[x_3,x_3]}\}$ for any $x_3 \in (0,1) \setminus \{x_1,x_2\}$.
However, for any $x_1,x_2 \in (0,1)$ with $|x_1-x_2| > r$, no element of
$\Ball(f,r)$ classifies both as $+1$ (as it would need width greater than $r$, and thus would have distance
from $h_{[a,a]}$ greater than $r$).  Therefore, $\{S \in \X^2 : \Ball(f,r) \text{ shatters } S\} = \{ (x_1,x_2) \in (0,1)^2 : 0 < |x_1-x_2| \leq r\}$;
this latter set has probability $2r(1-r) + r^2 = (2-r)\cdot r$, which shrinks to $0$ as $r \to 0$.  Therefore,
$\bdim_f = 2$.  Furthermore, this shows $\hdc_{f}(\eps) = \dc_{f}^{(2)}(\eps) = \sup_{r > \eps} (2-r) = 2-\eps \leq 2$.
Contrasting this with $\dc_{f}(\eps) = 1/\eps$, we see $\hdc_{f}(\eps)$
is significantly smaller than the disagreement coefficient;
in particular, $\hdc_{f} = 2 < \infty$, while $\dc_{f} = \infty$.

Consider also the space of unions of $i$ intervals (Example~\ref{ex:unions-of-intervals}) under $\Px$ uniform on $[0,1]$.
In this case, we have already seen that, for any $f = h_{\mathbf{z}} \in \C$ not representable (up to probability-zero differences) by a uinon of $i-1$ or fewer intervals,
we have $\Px(\partial^{1} f) = \Px(\partial f) = 0$, so that $\bdim_{f} = 1$, and $\hdc_{f} = \dc_{f}^{(1)} = \dc_{f} = \max\left\{\frac{1}{\min\limits_{1 \leq p < 2i} z_{p+1}-z_{p}}, 4i\right\}$.
To generalize this, suppose $f = h_{\mathbf{z}}$ is minimally representable as a union of any number $j \leq i$ of intervals of nonzero width: $[z_1,z_2] \cup [z_3,z_4] \cup \cdots \cup [z_{2j-1},z_{2j}]$, with $0 < z_1 < z_2 < \cdots < z_{2j} < 1$.
For our purposes, this is fully general, since every element of $\C$ has distance zero to some $h_{\mathbf{z}}$ of this type, 
and $\hdc_{h} = \hdc_{h^{\prime}}$ for any $h,h^{\prime}$ with $\Px(x : h(x) \neq h^{\prime}(x))=0$.
Now for any $k < i-j+1$, and any $S = (x_1,\ldots,x_k) \in \X^{k}$ with all elements distinct and no elements equal any of the $z_{p}$ values,
the set $\Ball(f,r)$ can shatter $S$, as follows.  Begin with the intervals $[z_{2p-1},z_{2p}]$ as above, and modify the classifier in the following way for each labeling of $S$.
For any of the $x_{\ell}$ values we wish to label $+1$, if it is already in an interval $[z_{2p-1},z_{2p}]$, we do nothing; if it is not in one of the $[z_{2p-1},z_{2p}]$ intervals, we add the interval $[x_{\ell},x_{\ell}]$ to the classifier.
For any of the $x_{\ell}$ values we wish to label $-1$, if it is not in any interval $[z_{2p-1},z_{2p}]$, we do nothing; if it is in some interval $[z_{2p-1},z_{2p}]$, 
we split the interval by setting to $-1$ the labels in a small region $(x_{\ell}-\gamma,x_{\ell}+\gamma)$,
for $\gamma < \min\{r/k, z_{2p} - z_{2p-1}\}$ chosen small enough so that $(x_{\ell}-\gamma,x_{\ell}+\gamma)$ does not contain any other element of $S$.
These operations add at most $k$ new intervals to the minimal representation of the classifier as a union of intervals, 
which therefore has at most $j+k \leq i$ intervals.  Furthermore, the classifier disagrees with $f$ on a set of size at most $r$,
so that it is contained in $\Ball(f,r)$.  We therefore have $\Px^{k}( S \in \X^{k} : \Ball(f,r) \text{ shatters } S) = 1$.
However, note that for $0 < r < \min\limits_{1\leq p < 2j} z_{p+1} - z_{p}$, 
for any $k$ and $S \in \X^{k}$ with all elements of $S \cup \{z_p : 1 \leq p \leq 2j\}$ separated by a distance greater than $r$,
classifying the points in $S$ opposite to $f$ while remaining $r$-close to $f$ requires us to increase
to a minimum of $j+k$ intervals.
Thus, for $k = i-j+1$, any $S = (x_1,\ldots,x_k) \in \X^k$
with $\min\limits_{y_1,y_2 \in S \cup \{z_p\}_p : y_1 \neq y_2} |y_1-y_2| > r$ is \emph{not} shatterable by $\Ball(f,r)$.
We therefore have $\{S \in \X^{k} : \Ball(f,r) \text{ shatters } S\} \subseteq \left\{S \in \X^{k} : \min\limits_{y_1,y_2 \in S \cup \{z_p\}_p : y_1 \neq y_2} |y_1-y_2| \leq r\right\}$.
For $r < \min\limits_{1 \leq p < 2j}z_{p+1} - z_{p}$,
we can bound the probability of this latter set by considering sampling the points $x_{\ell}$ sequentially; the probability the $\ell^{\rm{th}}$ point
is within $r$ of one of $x_{1},\ldots,x_{\ell-1},z_1,\ldots,z_{2j}$ is at most $2r(2j+\ell-1)$,
so (by a union bound) the probability any of the $k$ points $x_1,\ldots,x_{k}$ is within $r$ of any other or any of $z_1,\ldots,z_{2j}$ is at most
$\sum_{\ell=1}^{k} 2r (2j+\ell-1) = 2r\left(2jk + \binom{k}{2}\right) = (1+i-j)(i+3j)r$.  Since this approaches zero as $r \to 0$, we have $\bdim_f = i-j+1$.
Furthermore, this analysis shows $\hdc_{f} = \dc_{f}^{(i-j+1)} \leq \max\left\{ \frac{1}{\min\limits_{1 \leq p < 2j} z_{p+1}-z_{p}}, (1+i-j)(i+3j)\right\}$.
In fact, careful further inspection reveals that this upper bound is tight (i.e., this is the exact value of $\hdc_{f}$).
Recalling that $\dc_{f}(\eps) = 1/\eps$ for $j < i$, we see that again $\hdc_{f}(\eps)$ is significantly smaller than the disagreement coefficient;
in particular, $\hdc_{f} < \infty$ while $\dc_{f} = \infty$.

Of course, for the quantity $\hdc_{f}(\eps)$ to be truly useful,
we need to be able to describe its behavior for families of learning problems beyond these simple toy problems.
Fortunately, as with the disagreement coefficient, for learning problems with simple ``geometric'' interpretations,
one can typically bound the value of $\hdc_{f}$ without too much difficulty.
For instance, consider $\X$ the surface of a unit hypersphere in $p$-dimensional Euclidean space (with $p \geq 3$),
with $\Px$ uniform on $\X$, and $\C$ the space of linear separators:
$\C = \{h_{\mathbf{w},b}(\mathbf{x}) = \ind_{[0,\infty)}^{\pm}( \mathbf{w} \cdot \mathbf{x} + b) : \mathbf{w} \in \reals^{p}, b \in \reals\}$.
\citet*{hanneke:10a} proved that $(\C,\Px)$ is learnable at an exponential rate, by a specialized argument for this space.
In the process, they established that for any $f \in \C$ with $\Px(x : f(x) = +1) \in (0,1)$,
$\dc_{f} < \infty$; in fact, a similar argument shows $\dc_{f} \leq 4 \pi \sqrt{p} / \min_{y} \Px(x : f(x)=y)$.
Thus, in this case, $\bdim_{f} = 1$, and $\hdc_{f} = \dc_{f} < \infty$.
However, consider $f \in \C$ with $\Px(x : f(x) = y) = 1$, for some $y \in \{-1,+1\}$.
In this case, every $h \in \C$ with $\Px(x : h(x) = -y) \leq r$
has $\Px(x : h(x) \neq f(x)) \leq r$ and is therefore contained in $\Ball(f,r)$.
In particular, for any $x \in \X$, there is such an $h$ that disagrees with $f$
on only a small spherical cap containing $x$, so that $\DIS(\Ball(f,r)) = \X$ for all $r > 0$.
But this means $\partial f = \X$, which implies $\dc_{f}(\eps) = 1/\eps$ and $\bdim_f > 1$.
However, let us examine the value of $\dc_{f}^{(2)}$.
Let $A_{p} = \frac{2 \pi^{p/2}}{\Gamma\left(\frac{p}{2}\right)}$ denote the surface area of the unit sphere in $\reals^{p}$,
and let $C_{p}(z) = \frac{1}{2} A_{p} I_{2z-z^2}\left(\frac{p-1}{2},\frac{1}{2}\right)$ denote the surface area of a spherical cap of height $z$
\citep*{li:11}, where $I_{x}(a,b) = \frac{\Gamma(a+b)}{\Gamma(a)\Gamma(b)} \int_{0}^{x} t^{a-1} (1-t)^{b-1} {\rm d}t$
is the regularized incomplete beta function.  In particular, since
$\sqrt{\frac{p}{12}} \leq \frac{\Gamma\left(\frac{p}{2}\right)}{\Gamma\left(\frac{p-1}{2}\right)\Gamma\left(\frac{1}{2}\right)}\leq \frac{1}{2} \sqrt{p-2}$,
the probability mass $\frac{C_{p}(z)}{A_{p}} = \frac{1}{2} \frac{\Gamma\left(\frac{p}{2}\right)}{\Gamma\left(\frac{p-1}{2}\right)\Gamma\left(\frac{1}{2}\right)} \int_{0}^{2z-z^2} t^{\frac{p-3}{2}} (1-t)^{- \frac{1}{2}} {\rm d}t$
contained in a spherical cap of height $z$ satisfies
\begin{equation}
\label{eqn:cap-lower}
\frac{C_{p}(z)}{A_p}
%\geq \frac{1}{2} \sqrt{\frac{p}{12}} \int_{0}^{2z-z^2} t^{\frac{p-3}{2}} (1-t)^{-\frac{1}{2}} {\rm d}t 
\geq \frac{1}{2} \sqrt{\frac{p}{12}} \int_{0}^{2z-z^2} t^{\frac{p-3}{2}} {\rm d}t
= \sqrt{\frac{p}{12}} \frac{(2z-z^2)^{\frac{p-1}{2}}}{p-1}
\geq \frac{(2z-z^2)^{\frac{p-1}{2}}}{\sqrt{12 p}},
\end{equation}
and letting $\bar{z} = \min\{z,1/2\}$, also satisfies
\begin{align}
\frac{C_{p}(z)}{A_p}
&\leq \frac{2 C_{p}\left(\bar{z}\right)}{A_p}
\leq \frac{1}{2} \sqrt{p-2} \int_{0}^{2\bar{z} - \bar{z}^2} t^{\frac{p-3}{2}} (1-t)^{-\frac{1}{2}} {\rm d}t \notag
\\ &\leq \sqrt{p-2} \int_{0}^{2z - z^2} t^{\frac{p-3}{2}}{\rm d}t
= \frac{2 \sqrt{p-2}}{p-1} (2z-z^2)^{\frac{p-1}{2}}
\leq \frac{(2z-z^2)^{\frac{p-1}{2}}}{\sqrt{p/6}}
\leq \frac{(2z)^{\frac{p-1}{2}}}{\sqrt{p/6}}. \label{eqn:cap-upper}
\end{align}
Consider any linear separator $h \in \Ball(f,r)$ for $r < 1/2$,
and let $z(h)$ denote the height of the spherical cap where $h(x) = -y$.
Then \eqref{eqn:cap-lower} indicates the probability mass in this region is
at least $\frac{(2z(h)-z(h)^2)^{\frac{p-1}{2}}}{\sqrt{12 p}}$.
Since $h \in \Ball(f,r)$, we know this probability mass is at most $r$,
and we therefore have
$2z(h)-z(h)^2 \leq \left( \sqrt{12 p} r \right)^{\frac{2}{p-1}}$.
Now for any $x_1 \in \X$, the set of $x_2 \in \X$ for which $\Ball(f,r)$ shatters $(x_1,x_2)$
is equivalent to the set $\DIS(\{h \in \Ball(f,r) : h(x_1) = -y\})$.  But if $h(x_1) = -y$, then $x_1$
is in the aforementioned spherical cap associated with $h$.
A little trigonometry reveals that, for any spherical cap of height $z(h)$, any two points on the surface
of this cap are within distance
$2\sqrt{2z(h) - z(h)^2} \leq 2 \left( \sqrt{12 p} r \right)^{\frac{1}{p-1}}$
of each other.
Thus, for any point $x_2$ further than
$2 \left( \sqrt{12 p} r \right)^{\frac{1}{p-1}}$
from $x_1$, it must be outside the spherical
cap associated with $h$, which means $h(x_2) = y$.  But this is true for every $h \in \Ball(f,r)$ with
$h(x_1) = -y$, so that $\DIS(\{h \in \Ball(f,r) : h(x_1) = -y\})$ is contained in the spherical cap of
all elements of $\X$ within distance
$2 \left( \sqrt{12 p} r \right)^{\frac{1}{p-1}}$
of $x_1$;
a little more trigonometry reveals that the height of this spherical cap is
$2 \left( \sqrt{12 p} r \right)^{\frac{2}{p-1}}$.
Then \eqref{eqn:cap-upper} indicates the probability mass in this region
is at most $\frac{2^{p-1} \sqrt{12 p} r}{\sqrt{p/6}} = 2^{p} \sqrt{18} r$.
Thus,
$\Px^{2}((x_1,x_2) : \Ball(f,r) \text{ shatters } (x_1,x_2)) = \int \Px(\DIS(\{h \in \Ball(f,r) : h(x_1) = -y\})) \Px({\rm d}x_1) \leq 2^{p} \sqrt{18} r$.
In particular, since this approaches zero as $r\to 0$, we have $\bdim_f = 2$.
This also shows that
$\hdc_{f} = \dc_{f}^{(2)} \leq 2^{p} \sqrt{18}$,
a finite constant (albeit a rather large one).
%we will see that finite values of $\hdc_{f}$ are particularly interesting for active learning (Corollary~\ref{cor:exponential} below).
Following similar reasoning, using the opposite inequalities as appropriate, and taking $r$ sufficiently small,
one can also show $\hdc_{f} \geq 2^{p} / (12 \sqrt{2})$.

\subsection{Bounds on the Label Complexity of Activized Learning}
\label{subsec:bound}

We have seen above that in the context of several examples,
\Shattering~can offer significant advantages in label complexity over any given
passive learning algorithm, and indeed also over disagreement-based active learning in many cases.  In this subsection, we
present a general result characterizing the magnitudes of these improvements over passive learning, in terms of $\hdc_{f}(\eps)$.
Specifically, we have the following general theorem, along with two immediate corollaries.
The proof is included in Appendix~\ref{app:exponential},

\newpage
\begin{theorem}
\label{thm:sequential-activizer}
For any VC class $\C$,
and any passive learning algorithm $\alg_p$ achieving label complexity $\Lambda_{p}$,
the (\Shattering)-activized $\alg_p$ algorithm achieves
a label complexity $\Lambda_a$ that, for any distribution $\Px$ and classifier $f \in \C$, satisfies
\begin{equation*}
\Lambda_a(\eps, f, \Px) = O\left( \hdc_{f} \left( \Lambda_{p}(\eps/4,f,\Px)^{-1}\right) \log^2 \frac{\Lambda_{p}(\eps/4,f,\Px)}{\eps} \right).
\end{equation*}
\upthmend{-1.1cm}
\end{theorem}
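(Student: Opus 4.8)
The plan is to show that for some value of $k$, the set $\L_k$ constructed by \Shattering~is correctly labeled and has size $|\L_k| \gtrsim \hdc_f(\Lambda_p(\eps/4,f,\Px)^{-1})^{-1} \cdot n / \polylog$, and then to invoke the $\ActiveSelect$ guarantee to conclude. The key $k$ to focus on is $k = \bdim_f$, since this is the smallest $k$ for which $\Px^{k}(\partial^k f) = 0$ — equivalently, $\dc_f^{(k)}(\eps) = o(1/\eps)$ — and hence the smallest $k$ at which the shatter-based inference mechanism simultaneously (a) infers labels correctly for all but a vanishing fraction of points and (b) allows the region where labels are \emph{requested} to shrink. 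I expect the argument to parallel the proof of Theorem~\ref{thm:cal} for \CAL, with the disagreement core replaced by the $k$-dimensional shatter core, and $\DIS(V)$ replaced by the set $\{x : \Px^{k-1}(S : V \text{ shatters } S \cup \{x\} \mid V \text{ shatters } S) \geq 1/2\}$.

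First I would establish correctness of inferred labels: as long as $f \in V$ (which holds inductively provided every inferred $\hat y$ equals $f(X_m)$), and provided $n$ is large enough that the $\hat P_m$ estimates are accurate, the argument from Section~\ref{subsec:basic-activizer} shows that whenever $V$ shatters $S$ but not $S\cup\{x\}$, the label $\hat y$ chosen in Steps~7/14 satisfies: $V[(x,-\hat y)]$ does not shatter $S$, and since one of the $2^{k-1}$ sub-version-spaces realizing the shattering of $S$ must lie entirely in $V[(x,\hat y)]$ and contains classifiers arbitrarily close to $f$ (because $S \in \partial^{k-1}_{V} f$ with high probability for $k-1 < \bdim_f$ ... actually for $k \le \bdim_f$ we use $\Px^{k-1}(\partial^{k-1}f) > 0$), we get $\hat y = f(x)$ for all but a $\Px$-null set of $x$. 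This is the delicate probabilistic bookkeeping: one needs a uniform-convergence / VC argument (as in the classic results \citet*{vapnik:82,blumer:89}) to say that after $m_n$-ish labels, $V$ is contained in a small ball $\Ball(f,r_n)$ with $r_n \to 0$, so that for $k = \bdim_f$ the set $\{S : V \text{ shatters } S\}$ is essentially contained in $\partial^{\bdim_f -1}f$ (positive probability) while $\{x : \text{many such } S \text{ have } V \text{ shatter } S\cup\{x\}\}$ is essentially contained in $\partial^{\bdim_f} f$ (zero probability). Then I would quantify the shrinkage: by the definition of $\hdc_f$, once the version space reaches $\Ball(f,r)$, the probability of requesting a label in the $k=\bdim_f$ round is at most $\hdc_f(r) \cdot r$; combining with the fact that every requested label cuts the version space radius geometrically (up to $\log$ factors, as in the \CAL~analysis — each batch of $O(\log(n/\eps))$ requests halves $\Px$ of the request region), after the $\lceil T_k/4\rceil$ requests of the first sub-stage we reach $r_n \le \exp\{-\Omega(n/(\hdc_f \log(n/\eps)))\}$... more precisely $r_n$ such that $\hdc_f(r_n) r_n$ is exponentially small, whence $\hat\Delta^{(k)}$ is tiny and the second sub-stage produces $|\L_k| \ge \lfloor T_k/(3\hat\Delta^{(k)})\rfloor$ which is $\gg \Lambda_p(\eps/4,f,\Px)$ once $n \ge c\,\hdc_f(\Lambda_p(\eps/4,f,\Px)^{-1})\log^2(\Lambda_p(\eps/4,f,\Px)/\eps)$.

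Given such an $n$, the set $\L_k$ for $k=\bdim_f$ is correctly labeled, its $\X$-components are (conditionally on $|\L_k|$) i.i.d.\ $\Px$-distributed, and $|\L_k| \ge \Lambda_p(\eps/4,f,\Px)$; hence $\E[\er(\alg_p(\L_k))] \le \eps/4$ plus a $\le \eps/4$ term for the low-probability bad events. Finally, $\ActiveSelect$ (with the $\lfloor n/3\rfloor$ label budget and its $1 - \exp\{-\Omega(n)\}$ guarantee of returning a classifier with error at most twice the best among $\{\alg_p(\L_1),\dots,\alg_p(\L_{d+1})\}$) yields $\E[\er(\hat h_n)] \le 2\cdot(\eps/2) = \eps$ after adjusting constants — absorbing the factor $2$ and the $d+1$ overhead into the big-$O$ and into the $\eps/4$ versus $\eps$ slack. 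The main obstacle, as in \BasicActivizer, is the first step: rigorously controlling the $\hat P_m$ estimators and the event that the inferred labels in the $k=\bdim_f$ round are \emph{all} correct — this requires showing that, with probability $1-o(1)$ as $n\to\infty$, the empirical shatterability tests track the population quantities uniformly over the (data-dependent, hence random) version space $V$, which is where the VC-dimension-based uniform convergence machinery and a careful union bound over the $m \le k\cdot 2^n$ examples must be deployed; everything else is a fairly mechanical adaptation of the disagreement-coefficient analysis of \CAL~(Theorem~\ref{thm:cal}), with $\dc_f$ replaced by $\hdc_f$ and $\DIS$ replaced by its $\bdim_f$-dimensional shatter analogue.
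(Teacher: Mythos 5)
Your proposal is correct and follows essentially the same route as the paper's proof: focus on $k=\bdim_f$, show inductively that the inferred labels in that round agree with $f$ (so $V$ tracks $\truV_m$), bound the number of label requests per dyadic block of processed indices by roughly $\hdc_f \cdot d\log$, conclude that $\check{m}_{\bdim_f}$ (and hence $|\L_{\bdim_f}|$) reaches $\Lambda_p(\eps/4,f,\Px)$ once $n \gtrsim \hdc_f \log^2$, and then apply the $\ActiveSelect$ guarantee. The paper formalizes the pieces you flagged as the main obstacle — controlling the $\hat P_m$ estimators uniformly and showing the ``$t < \lfloor 3T_k/4\rfloor$'' budget constraint is redundant for $k\le\bdim_f$ — in Lemmas \ref{lem:sequential-t-vs-T}, \ref{lem:sequential-good-labels}, and \ref{lem:sequential-exponential}, built on the estimator machinery from Appendix~\ref{app:activizer}, but the decomposition is the one you describe.
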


\begin{corollary}
\label{cor:sequential-activizer}
For any VC class $\C$, there exists a passive learning algorithm $\alg_p$ such that,
for every $f \in \C$ and distributions $\Px$,
the (\Shattering)-activized $\alg_p$ algorithm achieves label complexity
\begin{equation*}
\Lambda_{a}(\eps,f,\Px) = O\left( \hdc_f ( \eps ) \log^2(1/\eps) \right).
\end{equation*}
\upthmend{-1.15cm}
\end{corollary}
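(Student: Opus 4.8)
The plan is to instantiate Theorem~\ref{thm:sequential-activizer} with a specific passive learner whose label complexity is uniformly $O(1/\eps)$, exactly as was done in the proof of Corollary~\ref{cor:cal}. First I would take $\alg_p$ to be the one-inclusion graph algorithm of \citet*{haussler:94}, which is known to achieve a label complexity $\Lambda_p$ with $\Lambda_p(\eps,f,\Px) \leq d/\eps$ for every $f \in \C$ and every distribution $\Px$. Feeding this $\alg_p$ into \Shattering{} and applying Theorem~\ref{thm:sequential-activizer}, the resulting (\Shattering)-activized $\alg_p$ achieves a label complexity $\Lambda_a$ with
\begin{align*}
\Lambda_a(\eps,f,\Px) &= O\!\left( \hdc_{f}\!\left( \Lambda_p(\eps/4,f,\Px)^{-1}\right) \log^2 \frac{\Lambda_p(\eps/4,f,\Px)}{\eps}\right)
\\ &= O\!\left( \hdc_{f}\!\left(\eps/(4d)\right) \log^{2}\!\left(4d/\eps^{2}\right)\right),
\end{align*}
where the second equality uses $\Lambda_p(\eps/4,f,\Px) \leq 4d/\eps$ together with the fact that $r \mapsto \hdc_{f}(r)$ is nonincreasing (the supremum defining $\dc_{f}^{(\bdim_f)}$ ranges over the larger set $\{r > \eps/(4d)\} \supseteq \{r > \Lambda_p(\eps/4,f,\Px)^{-1}\}$).

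It then remains only to absorb the factor $4d$ into the leading constant. For the logarithm this is immediate: $\log^{2}(4d/\eps^{2}) = O(\log^{2}(1/\eps))$, with a constant depending only on $\C$ through $d$. For the argument of $\hdc_{f}$, I would record the scaling inequality $\hdc_{f}(\eps/c) \leq c\,\hdc_{f}(\eps)$ for every constant $c \in [1,\infty)$ --- the exact analogue of the property of $\dc_{f}$ noted after Definition~\ref{def:disagreement-coefficient} --- proved by the same short argument: for $r \in (\eps/c,\eps]$ we have $\Ball(f,r) \subseteq \Ball(f,\eps)$, hence $\Px^{\bdim_f}(S : \Ball(f,r)\text{ shatters }S) \leq \Px^{\bdim_f}(S : \Ball(f,\eps)\text{ shatters }S) \leq \eps\,\hdc_{f}(\eps)$, so dividing by $r > \eps/c$ bounds the ratio by $c\,\hdc_{f}(\eps)$, while for $r > \eps$ the ratio is at most $\hdc_{f}(\eps)$ by definition. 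Taking $c = 4d$ gives $\hdc_{f}(\eps/(4d)) \leq 4d\,\hdc_{f}(\eps)$, and combining this with the previous display yields $\Lambda_a(\eps,f,\Px) = O(\hdc_{f}(\eps)\log^{2}(1/\eps))$ with all hidden constants $\C$-dependent only.

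The substantive content is entirely packaged in Theorem~\ref{thm:sequential-activizer}, so the main (really the only) subtlety in this corollary is the routine verification that the two spurious $d$ factors --- inside $\hdc_f$ and inside the logarithm --- are harmless for an $O(\cdot)$ statement, which is what the scaling lemma above handles. As the surrounding text already indicates, the factor $d$ can in fact be removed altogether by using the simplified variant of \Shattering{} that replaces Steps~9--16 with ``return an arbitrary element of the final version space $V$'' and running the version-space-shrinkage argument underlying Theorem~\ref{thm:sequential-activizer} directly for this $\alg_p$; but the reduction above already suffices for the stated bound.
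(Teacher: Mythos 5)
Your proposal is correct and matches the paper's proof: the paper likewise plugs the one-inclusion graph algorithm's bound $\Lambda_p(\eps,f,\Px) \leq d/\eps$ into Theorem~\ref{thm:sequential-activizer} and invokes the scaling fact $\hdc_{f}(\eps/4d) \leq 4d\,\hdc_{f}(\eps)$ to absorb the constant. The only difference is that you spell out the short proof of that scaling inequality (correctly), whereas the paper simply asserts it.
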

\begin{proof}
The one-inclusion graph algorithm of \citet*{haussler:94} is a passive learning algorithm achieving label complexity
$\Lambda_{p}(\eps,f,\Px) \leq d/\eps$.  Plugging this into Theorem~\ref{thm:sequential-activizer}, using the
fact that $\hdc_{f}(\eps / 4d) \leq 4 d \hdc_{f}(\eps)$, and simplifying,
we arrive at the result.  In fact, in the proof of Theorem~\ref{thm:sequential-activizer}, we see that incurring this extra
constant factor of $d$ is not actually necessary.
\end{proof}

\begin{corollary}
\label{cor:exponential}
For any VC class $\C$ and distribution $\Px$,
if $\forall f \in \C, \hdc_f < \infty$,
then $(\C,\Px)$ is learnable at an exponential rate.
If this is true for all $\Px$, then $\C$ is learnable at an exponential rate.
\thmend
\end{corollary}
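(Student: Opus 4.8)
The plan is to deduce this directly from Corollary~\ref{cor:sequential-activizer}, mirroring the derivation of Corollary~\ref{cor:cal-exponential} from Corollary~\ref{cor:cal}. First I would fix a distribution $\Px$ for which $\hdc_f < \infty$ for every $f \in \C$. The key elementary observation is that $\eps \mapsto \hdc_f(\eps)$ is nonincreasing as $\eps \to 0$: the supremum defining $\dc_{f}^{(\bdim_f)}(\eps)$ is taken over $\{r > \eps\}$, which only grows as $\eps$ shrinks, so $\hdc_f(\eps) \leq \hdc_f(0) = \hdc_f < \infty$ for all $\eps \in (0,1)$. Taking the passive algorithm $\alg_p$ supplied by Corollary~\ref{cor:sequential-activizer} (the one-inclusion graph algorithm of \citet*{haussler:94}), the (\Shattering)-activized $\alg_p$ achieves a label complexity $\Lambda_a$ with $\Lambda_a(\eps,f,\Px) = O(\hdc_f(\eps)\log^2(1/\eps)) = O(\hdc_f \log^2(1/\eps))$. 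Since $\hdc_f$ is a finite constant (depending only on $\C$, $\Px$, and $f$, not on $\eps$), this lies in $\Polylog(1/\eps)$. As this holds for every $f \in \C$, Definition~\ref{defn:exponential} yields that $(\C,\Px)$ is learnable at an exponential rate.

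For the second claim, I would note that the active learning algorithm witnessing this — the (\Shattering)-activized one-inclusion graph algorithm — has no direct dependence on $\Px$: it accesses the distribution only through the unlabeled data stream $\Data_{X}$, exactly as \CAL~does, and the estimators $\hat{P}_m$ of Appendix~\ref{app:hatP-definitions} are data-dependent. Consequently, if $\hdc_f < \infty$ holds for every $f \in \C$ \emph{and} every distribution $\Px$, this single fixed algorithm achieves a label complexity $\Lambda_a$ with $\Lambda_a(\eps,f,\Px) \in \Polylog(1/\eps)$ simultaneously for all $\Px$ and all $f \in \C$, which is precisely the statement that $\C$ is learnable at an exponential rate.

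I do not anticipate a genuine obstacle here; the proof is essentially a one-line appeal to Corollary~\ref{cor:sequential-activizer}. The only points requiring a moment's care are the monotonicity argument $\hdc_f(\eps) \leq \hdc_f$ and the observation that both $\alg_p$ and \Shattering~are distribution-free (so the same algorithm serves all $\Px$), both of which are immediate from the definitions and the construction of \Shattering.
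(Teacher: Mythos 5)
Your proof is correct and follows exactly the paper's own argument: the first claim is an immediate consequence of Corollary~\ref{cor:sequential-activizer} together with the monotonicity fact $\hdc_f(\eps) \leq \hdc_f$, and the second claim follows because \Shattering~(and the one-inclusion graph algorithm) depend on $\Px$ only through the data. No gaps.
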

\begin{proof}
The first claim follows directly from Corollary~\ref{cor:sequential-activizer}, since $\hdc_f(\eps) \leq \hdc_f$.
The second claim then follows from the fact that \Shattering~is adaptive to $\Px$ (has no direct dependence on $\Px$ except via the data).
\end{proof}

Actually, in the proof we arrive at a somewhat more general result, in that the bound of
Theorem~\ref{thm:sequential-activizer} actually holds for any target function $f$ in the ``closure''
of $\C$: that is, any $f$ such that $\forall r > 0, \Ball(f,r) \neq \emptyset$.
As previously mentioned, if our goal is only to obtain the label complexity bound of Corollary~\ref{cor:sequential-activizer} by a direct approach,
then we can use a simpler procedure (which cuts out Steps 9-16, instead returning an arbitrary element of $V$),
analogous to how the analysis of the original algorithm of \citet*{cohn:94} by \citet*{hanneke:11a} obtains
the label complexity bound of Corollary~\ref{cor:cal}
(see also \RobustShattering~below).
However, the general result of Theorem~\ref{thm:sequential-activizer} is interesting in that it applies to any passive algorithm.

Inspecting the proof, we see that it is also possible to state a result that separates the probability of success from the achieved error rate,
similar to the PAC model of \citet*{valiant:84} and the analysis of active learning by \citet*{hanneke:10a}.  Specifically, suppose $\alg_{p}$
is a passive learning algorithm such that, $\forall \eps,\conf \in (0,1)$, there is a value $\lambda(\eps,\conf,f,\Px) \in \nats$
such that $\forall n \geq \lambda(\eps,\conf,f,\Px)$,
$\P\left( \er\left( \alg_{p}(\Data_{n})\right) > \eps\right) \leq \conf$.
Suppose $\hat{h}_{n}$ is the classifier returned by the (\Shattering)-activized $\alg_p$ with label budget $n$.
Then for some $(\C,\Px,f)$-dependent constant $c \in [1,\infty)$, 
$\forall \eps,\conf \in (0,e^{-3})$,
letting $\lambda = \lambda(\eps/2,\conf/2,f,\Px)$,
\begin{equation*}
\forall n \geq c \hdc_{f}\left( \lambda^{-1}\right) \log^2 \left( \lambda / \conf\right),
~~\P\left( \er\left(\hat{h}_{n}\right) > \eps\right) \leq \conf.
\end{equation*}
For instance, if $\alg_{p}$ is an empirical risk minimization algorithm, then this is $\propto \hdc_{f}(\eps) \polylog\left( \frac{1}{\eps \conf}\right)$.

\subsection{Limitations and Potential Improvements}
\label{subsec:exp-alternatives}

Theorem~\ref{thm:sequential-activizer} and its corollaries represent significant improvements
over most known results for the label complexity of active learning, and in particular over Theorem~\ref{thm:cal}
and its corollaries.
As for whether this also represents the best possible label complexity gains achievable by any active learning
algorithm, the answer is mixed.
As with any algorithm and analysis, \Shattering, Theorem~\ref{thm:sequential-activizer}, and corollaries,
represent one set of solutions in a spectrum that trades strength of performance guarantees with simplicity.
As such, there are several possible modifications
one might make, which could potentially improve the performance guarantees.
Here we sketch a few such possibilities.

Even with \Shattering~as-is, various improvements to the bound of Theorem~\ref{thm:sequential-activizer} should be possible,
simply by being more careful in the analysis.
For instance, as mentioned, \Shattering~is a \emph{universal activizer} for any VC class $\C$,
so in particular we know that whenever
$\hdc_f(\eps) \neq o\left(1 / \left(\eps \log(1/\eps)\right)\right)$,
the above bound is not tight (see the work of \citet*{hanneke:10a} for a construction leading to such $\hdc_{f}(\eps)$ values),
and indeed any bound of the form $\hdc_f(\eps) \polylog(1/\eps)$
will not be tight in that case.  Again, a more refined analysis may close this gap.

Another type of potential improvement is in the constant factors.
Specifically, in the case when $\hdc_f < \infty$,
if we are only interested in \emph{asymptotic} label complexity guarantees in
Corollary~\ref{cor:sequential-activizer},
we can replace ``$\sup\limits_{r > 0}$'' in Definition~\ref{def:higher-dim-coefficient}
with ``$\limsup\limits_{r \to 0}$,'' which can sometimes be significantly smaller and/or easier to study.
This is true for the disagreement coefficient in Corollary~\ref{cor:cal} as well.
Additionally, the proof (in Appendix~\ref{app:exponential}) reveals that there are significant $(\C,\Px,f)$-dependent
constant factors other than $\hdc_{f}(\eps)$, and it is quite likely that these can be improved by a more
careful analysis of \Shattering~(or in some cases, possibly an improved definition of the estimators $\hat{P}_{m}$).

However, even with such refinements to improve the results, the approach of using $\hdc_{f}$ to prove
learnability at an exponential rate has limits.  For instance, it is known that any \emph{countable} $\C$ is learnable at an
exponential rate \citep*{hanneke:10a}.  However, there are countable VC classes $\C$ for which
$\hdc_{f} = \infty$ for some elements of $\C$ (e.g., take the tree-paths concept space
of \citet*{hanneke:10a}, except instead of all infinite-depth paths from the root, take all of the finite-depth
paths from the root, but keep one infinite-depth path $f$; for this modified space $\C$, which is countable,
every $h \in \C$ has $\bdim_{h} = 1$, and for that one infinite-depth $f$ we have $\hdc_{f} = \infty$).
% in fact, \Shattering~does not achieve exponential rates for this problem, when given the closure algorithm as input

Inspecting the proof reveals that it is possible to make the results slightly sharper by replacing $\hdc_{f}(r_0)$ (for $r_0$ as in the results above) with a somewhat
more complicated quantity: namely,
\begin{equation}
\label{eqn:tighter-hdc}
\min_{k < \bdim_{f}} \sup_{r > r_0} r^{-1} \cdot \Px\left( x \in \X : \Px^{k}\left( S\in \X^{k} : \Ball(f,r) \text{ shatters } S \cup \{x\} \right) \geq \P\left(\partial^{k} f\right) / 16\right).
\end{equation}
This quantity can be bounded in terms of $\hdc_{f}(r_0)$ via Markov's inequality, but is sometimes smaller. 

As for improving \Shattering~itself, there are several possibilities.
One immediate improvement one can make is to repace the condition in Steps 5 and 12 by
$\min_{1\leq j \leq k}\hat{P}_{m}(S \in \X^{j-1} : V \text{ shatters } S \cup \{X_{m}\} | V \text{ shatters } S) \geq 1/2$,
likewise replacing the corresponding quantity in Step 9, and substituting in Steps 7 and 14 the quantity
$\max_{1 \leq j \leq k} \hat{P}_{m}(S \in \X^{j-1} : V[(X_{m},-y)]$ does not shatter $S | V \text{ shatters } S)$;
in particular, the results stated for \Shattering~remain valid with this substitution, requiring only minor modifications
to the proofs.  However, it is not clear what gains in theoretical guarantees this achieves.

Additionally, there are various
quantities in this procedure that can be altered almost arbitrarily, allowing room for fine-tuning.
Specifically, the $2/3$ in Step 0 and $1/3$ in Step 16 can be set to arbitrary constants summing to $1$.
Likewise, the $1/4$ in Step 3, $1/3$ in Step 10, and $3 / 4$ in Step 12 can be changed to
any constants in $(0,1)$, possibly depending on $k$, such that the sum of the first
two is strictly less than the third.
Also, the $1/2$ in Steps 5, 9, and 12 can be set to any constant in $(0,1)$.
Furthermore, the $k \cdot 2^{n}$ in Step 3 only prevents infinite looping, and can be set
to any function growing superlinearly in $n$, though to get the largest possible improvements
it should at least grow exponentially in $n$;  typically, \emph{any} active learning algorithm
capable of exponential improvements over reasonable passive learning algorithms will
require access to a number of unlabeled examples exponential in $n$, and \Shattering~is
no exception to this.

One major issue in the design of the procedure is an inherent trade-off between the achieved label
complexity and the number of unlabeled examples used by the algorithm.  This is noteworthy both
because of the practical concerns of gathering such large quantities of unlabeled data, and also for
computational efficiency reasons.  In contrast to disagreement-based methods, the design of the
estimators used in \Shattering~introduces such a trade-off, though in contrast to the splitting
index analysis of \citet*{dasgupta:05}, the trade-off here seems only in the constant factors.  The choice of these $\hat{P}_{m}$
estimators, both in their definition in Appendix~\ref{app:hatP-definitions}, and indeed in the very quantities they estimate,
is such that we can (if desired) limit the number of unlabeled examples the main body of the algorithm uses
(the actual number it needs to achieve Theorem~\ref{thm:sequential-activizer} can be extracted from the proofs in Appendix~\ref{app:sequential-activizer}).
However, if the number of unlabeled examples used by the algorithm is not a limiting factor, we can
suggest more effective quantities.  Specifically, following the original motivation for using shatterable sets,
we might consider a greedily-constructed distribution over the set
$\{ S \in \X^{j} : V \text{ shatters } S, 1 \leq j < k, \text{ and either } j=k-1\text{ or }$ $\Px(s : V \text{ shatters } S \cup \{s\}) = 0\}$.
We can construct the distribution implicitly, via the following generative model.
First we set $S = \{\}$.  Then repeat the following.
If $|S| = k-1$ or $\Px(s \in \X : V \text{ shatters } S \cup \{s\})=0$, output $S$;
otherwise, sample $s$ according to the conditional distribution of $X$ given that $V \text{ shatters } S \cup \{X\}$.
If we denote this distribution (over $S$) as $\tilde{\Px}_{k}$, then replacing the estimator
$\hat{P}_{m}\left(S \in \X^{k-1} : V \text{ shatters } S \cup \{X_{m}\} | V \text{ shatters } S\right)$
in \Shattering~with an appropriately constructed estimator of
$\tilde{\Px}_{k}\left( S : V \text{ shatters } S \cup \{X_{m}\} \right)$
(and similarly replacing the other estimators)
can lead to some improvements in the constant factors of the label complexity.
However, such a modification can also dramatically increase the number of unlabeled examples
required by the algorithm, since determining whether
$\Px(s \in \X : V \text{ shatters } S \cup \{s\}) \approx 0$ can be costly.

Unlike \BasicActivizer, there remain serious efficiency concerns surrounding \Shattering.
If we knew the value of $\bdim_f$ and $\bdim_f \leq c\log_{2}(d)$ for some constant $c$,
then we could potentially design an efficient version of \Shattering~still achieving Corollary~\ref{cor:sequential-activizer}.
Specifically, suppose we can find a classifier in $\C$ consistent with any given sample,
or determine that no such classifier exists, in time polynomial in the sample size (and $d$),
and also that $\alg_{p}$ efficiently returns a classifier in $\C$ consistent with the sample it is given.
Then replacing the loop of Step 1 by simply running with $k=\bdim_{f}$ and returning $\alg_{p}(\L_{\bdim_f})$,
the algorithm becomes efficient, in the sense that with high probability, its running time is ${\rm {\poly}}(d/\eps)$,
where $\eps$ is the error rate guarantee from inverting the label complexity at the value of $n$ given to the algorithm.
To be clear, in some cases we may obtain
values $m \propto \exp\{ \Omega(n)\}$, but the error rate guaranteed by $\alg_{p}$ is $\tilde{O}(1/m)$ in these cases,
so that we still have $m$ polynomial in $d/\eps$.
However, in the absence of this access to $\bdim_f$, the values of $k > \bdim_f$ in \Shattering~may
reach values of $m$ much larger than ${\rm {\poly}}(d/\eps)$, since the error rates obtained from these
$\alg_{p}(\L_{k})$ evaluations are not guaranteed to be better than the $\alg_{p}(\L_{\bdim_f})$ evaluations,
and yet we may have $|\L_{k}| \gg |\L_{\bdim_f}|$.
Thus, there remains a challenging problem of obtaining the results above (Theorem~\ref{thm:sequential-activizer}
and Corollary~\ref{cor:sequential-activizer}) via an efficient algorithm, adaptive to the value of $\bdim_f$.

\section{Toward Agnostic Activized Learning}
\label{sec:agnostic}

The previous sections addressed learning in the \emph{realizable} case,
where there is a perfect classifier $f \in \C$ (i.e., $\er(f) = 0$).  
To move beyond these scenarios, to problems in which $f$ is not a perfect
classifier (i.e., stochastic labels) or not well-approximated by $\C$,
requires a change in technique to make the algorithms more robust to such
issues.  As we will see in Subsection~\ref{subsec:agnostic-counterexample},
the results we can prove in this more general setting are not quite as strong as those of the previous
sections, but in some ways they are more interesting, both from a practical
perspective, as we expect real learning problems to involve imperfect teachers
or underspecified instance representations, and also from a theoretical perspective,
as the class of problems addressed is significantly more general than those
encompassed by the realizable case above.

In this context, we will be largely interested in more general versions of the same
types of questions as above, such as whether one can activize a given passive
learning algorithm, in this case guaranteeing strictly improved label complexities
for all nontrivial joint distributions over $\X \times \{-1,+1\}$.
In Subsection~\ref{subsec:activized-erm}, we present a general conjecture regarding
this type of strong domination.  At the same time,
to approach such questions, we will also need to focus on developing techniques to make
the algorithms robust to label noise.  For this, we will use a natural generalization
of techniques developed for noise-robust disagreement-based active learning,
analogous to how we generalized \CAL~to arrive at \Shattering~above.
For this purpose, as well as for the sake of comparison, we will review the known
techniques and results for disagreement-based agnsotic active learning in
Subsection~\ref{subsec:disagreement-based-agnostic}.  We then extend these
techniques in Subsection~\ref{subsec:robust-pseudo-activizer} to develop a new
type of agnostic active learning algorithm, based on shatterable sets, which relates
to the disagreement-based agnostic active learning algorithms in a way analogous to how
\Shattering~relates to \CAL.  Furthermore, we present a bound on the
label complexities achieved by this method, representing a natural generalization of
both Corollary~\ref{cor:sequential-activizer} and the known results on
disagreement-based agnostic active learning \citep*{hanneke:11a}.

Although we present several new results, in some sense this section is less about what we know
and more about what we do not yet know.  As such, we will focus less on presenting a complete and elegant theory,
and more on identifying potentially promising directions for exploration.  In particular,
Subsection~\ref{subsec:noise-conditions} sketches out some interesting directions,
which could potentially lead to a resolution of the aforementioned general conjecture from Subsection~\ref{subsec:activized-erm}.

\subsection{Definitions and Notation}
\label{subsec:agnostic-definitions}

In this setting, there is a joint distribution $\PXY$ on $\X \times \{-1,+1\}$,
with marginal distribution $\Px$ on $\X$.
For any classifier $h$, we denote
by $\er(h) = \PXY((x,y) : h(x) \neq y)$.
Also, denote by
$\nu^*(\PXY) = \inf\limits_{h : \X \to \{-1,+1\}}  \er(h)$
the \emph{Bayes error rate},
or simply $\nu^*$ when $\PXY$ is clear from the context;
also define the conditional
label distribution $\eta(x; \PXY) = \P(Y = +1 | X = x)$,
where $(X,Y) \sim \PXY$,
or $\eta(x) = \eta(x ; \PXY)$
when $\PXY$ is clear from the context.
For a given concept space $\C$, denote $\nu(\C ; \PXY) = \inf\limits_{h \in \C} \er(h)$,
called the \emph{noise rate} of $\C$; when $\C$ and/or $\PXY$ is clear from
the context, we may abbreviate $\nu = \nu(\C) = \nu(\C ; \PXY)$.
For $\H \subseteq \C$, the \emph{diameter} is defined as
$\diam(\H ; \Px) = \sup\limits_{h_1,h_2 \in \H} \Px( x : h_1(x) \neq h_2(x))$.
Also, for any $\eps > 0$, define the $\eps$-minimal set
$\C(\eps ; \PXY) = \{ h \in \C : \er(h) \leq \nu + \eps\}$.
For any set of classifiers $\H$, define the \emph{closure}, denoted $\cl(\H ; \Px)$,
as the set of all measurable $h : \X \to \{-1,+1\}$ such that $\forall r > 0, \Ball_{\H,\Px}(h,r) \neq \emptyset$.
When $\PXY$ is clear from the context, we will simply refer to $\C(\eps) = \C(\eps ; \PXY)$,
and when $\Px$ is clear, we write $\diam(\H) = \diam(\H ; \Px)$
and $\cl(\H) = \cl(\H ; \Px)$.

In the noisy setting, rather than being a \emph{perfect} classifier, we will let
$f$ denote an arbitrary element of $\cl(\C; \Px)$ with $\er(f) = \nu(\C;\PXY)$:
that is, $f \in \bigcap\limits_{\eps > 0} \cl\left( \C(\eps ; \PXY) ; \Px\right)$.
Such a classifier must exist, since $\cl(\C)$ is \emph{compact} in the pseudo-metric
$\rho(h,g) = \int |h - g| d\Px \propto \Px(x : h(x) \neq g(x))$ (in the usual sense
of the equivalence classes being compact in the $\rho$-induced metric).  This can
be seen by recalling that $\C$ is totally bounded \citep*{haussler:92},
and thus so is $\cl(\C)$, and that $\cl(\C)$ is a closed subset of $\mathcal{L}^{1}(\Px)$,
which is complete \citep*{dudley:02}, so $\cl(\C)$ is also complete \citep*{munkres:00}.
Total boundedness and completeness together imply compactness \citep*{munkres:00},
and this implies the existence of $f$ since monotone sequences of nonempty closed subsets
of a compact space have a nonempty limit set \citep*{munkres:00}.

As before, in the learning problem there is a sequence
$\Data = \{(X_1,Y_1),(X_2,Y_2),\ldots\}$,
where the $(X_i,Y_i)$
are independent
and identically distributed, and we denote by $\Data_{m} = \{(X_i,Y_i)\}_{i=1}^{m}$.
As before, the $X_i \sim \Px$, but rather than having
each $Y_i$ value determined as a function of $X_i$, instead we have each pair
$(X_i,Y_i) \sim \PXY$.  
The learning protocol is defined identically as above; that is, the algorithm has
direct access to the $X_i$ values, but must request the $Y_i$ (label) values one at a time,
sequentially, and can request at most $n$ total labels, where $n$ is a budget provided
as input to the algorithm.  The label complexity is now defined just as before
(Definition~\ref{defn:label-complexity}), but generalized by replacing
$(f,\Px)$ with the joint distribution $\PXY$.  Specifically, we have the following formal definition,
which will be used throughout this section (and the corresponding appendices).
\begin{definition}
\label{defn:agnostic-label-complexity}
An active learning algorithm $\alg$ achieves label complexity $\Lambda(\cdot,\cdot)$
if, for any joint distribution $\PXY$, for any $\eps \in (0,1)$
and any integer $n \geq \Lambda(\eps, \PXY)$, we have $\E\left[ \er\left( \alg(n) \right) \right] \leq \eps$.
\thmend
\end{definition}
However, because there may not be any classifier with error rate less than
any arbitrary $\eps \in (0,1)$, our objective changes here to
achieving error rate at most $\nu + \eps$ for any given $\eps \in (0,1)$.
Thus, we are interested in the quantity $\Lambda(\nu + \eps, \PXY)$,
and will be particularly interested in this quantity's asymptotic dependence on $\eps$,
as $\eps \to 0$.  In particular, $\Lambda(\eps, \PXY)$ may often be infinite for $\eps < \nu$.

The label complexity for passive learning can be generalized analogously, again replacing $(f,\Px)$ by $\PXY$ in
Definition~\ref{defn:passive-label-complexity} as follows.
\begin{definition}
\label{defn:agnostic-passive-label-complexity}
A passive learning algorithm $\alg$ achieves label complexity $\Lambda(\cdot,\cdot)$
if, for any joint distribution $\PXY$, for any $\eps \in (0,1)$ and any integer $n \geq \Lambda(\eps, \PXY)$,
we have $\E\left[ \er\left( \alg\left( \Data_{n} \right) \right) \right] \leq \eps$.
\thmend
\end{definition}

For any label complexity $\Lambda$ in the agnostic case, define the set $\Nontrivial(\Lambda ; \C)$
as the set of all distributions $\PXY$ on $\X \times \{-1,+1\}$ such that
$\forall \eps > 0, \Lambda(\nu+\eps, \PXY) < \infty$,
and $\forall g \in \Polylog(1/\eps)$, $\Lambda(\nu+\eps, \PXY) = \omega(g(\eps))$.
In this context, we can define an \emph{activizer} for a given passive algorithm as follows.

\newpage
\begin{definition}
\label{defn:agnostic-activizer}
We say an active meta-algorithm $\alg_a$ \emph{activizes} a passive algorithm $\alg_p$
for $\C$ in the agnostic case if the following holds.
For any label complexity $\Lambda_p$ achieved by $\alg_p$,
the active learning algorithm $\alg_a(\alg_p, \cdot)$ achieves a label complexity $\Lambda_a$
such that, for every distribution $\PXY \in \Nontrivial(\Lambda_{p} ; \C)$,
there exists a constant $c \in [1,\infty)$ such that
\begin{equation*}
\Lambda_a(\nu + c\eps, \PXY) = o\left( \Lambda_p(\nu + \eps, \PXY) \right).
\end{equation*}
In this case, $\alg_a$ is called an \emph{activizer} for $\alg_p$ with respect to $\C$ in the agnostic case,
and the active learning algorithm $\alg_{a}(\alg_p, \cdot)$ is called the
$\alg_a$\emph{-activized} $\alg_p$.
\thmend
\end{definition}

\subsection{A Negative Result}
\label{subsec:agnostic-counterexample}

First, the bad news: we cannot generally hope for universal activizers for VC classes in the agnostic case.
In fact, there even exist passive algorithms that \emph{cannot be activized}, even by any specialized
active learning algorithm.

Specifically, consider again Example~\ref{ex:thresholds}, where $\X = [0,1]$
and $\C$ is the class of threshold classifiers,
and let $\check{\alg}_p$ be a passive learning algorithm that behaves as follows.
Given $n$ points $\Data_{n} = \left\{(X_1,Y_1),(X_2,Y_2),\ldots,(X_n,Y_n)\right\}$,
$\check{\alg}_p(\Data_{n})$ returns the classifier $h_{\hat{z}} \in \C$, where $\hat{z} = \frac{1-2\hat{\eta}_{0}}{1-\hat{\eta}_{0}}$
and $\hat{\eta}_{0} = \left(\frac{\left| \left\{ i \in \{1,\ldots,n\} : X_{i} = 0, Y_{i} = +1\right\}\right|}{\left| \left\{ i \in \{1,\ldots,n\} : X_{i} = 0\right\}\right|} \lor \frac{1}{8}\right) \land \frac{3}{8}$,
taking
$\hat{\eta}_{0} = 1/8$
if $\{ i \in \{1,\ldots,n\} : X_{i} = 0\} = \emptyset$.
For most distributions $\PXY$, this algorithm clearly would not behave ``reasonably,'' in that
its error rate would be quite large;  in particular, in the realizable case, the algorithm's
worst-case expected error rate does not converge to zero as $n\to\infty$.
However, for certain distributions $\PXY$ engineered specifically for this algorithm, it has near-optimal behavior in a strong sense.
Specifically, we have the following result, the proof of which is included in Appendix~\ref{app:agnostic-counterexample}.

\begin{theorem}
\label{thm:agnostic-counterexample}
There is no activizer for $\check{\alg}_p$ with respect to the space of threshold classifiers in the agnostic case.
\thmend
\end{theorem}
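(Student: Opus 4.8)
The plan is to exhibit a family of ``hard'' joint distributions on which $\check{\alg}_p$ is essentially minimax optimal, yet which no active learner can handle with asymptotically fewer labels, thereby contradicting Definition~\ref{defn:agnostic-activizer}. Fix $p_0=1/2$, and for each $\eta\in I_0:=[\tfrac15,\tfrac{3}{10}]$ let $z(\eta)=\tfrac{1-2\eta}{1-\eta}$ and define $\PXY^{\eta}$ by: $\Px$ places mass $p_0$ at the point $0$ and mass $1-p_0$ uniformly on $(0,1)$; $\eta(0;\PXY^{\eta})=\eta$; and for $x\in(0,1)$, $\eta(x;\PXY^{\eta})=\tfrac12+\tfrac12\,\mathrm{sign}(x-z(\eta))\min\{|x-z(\eta)|,\tfrac13\}$. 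Then the Bayes classifier agrees $\Px$-a.e.\ with the threshold $h_{z(\eta)}=\ind_{[z(\eta),1]}^{\pm}$, so $\nu(\C;\PXY^{\eta})=\nu^{*}(\PXY^{\eta})$, $f=h_{z(\eta)}$, and for any $h$ one has $\er(h)-\nu=\int|2\eta(x)-1|\,\ind_{\{h(x)\neq h_{z(\eta)}(x)\}}\,\Px(\mathrm{d}x)$; since $|2\eta(x)-1|=|x-z(\eta)|$ near $z(\eta)$, the excess error of $h_{z}$ is $\Theta(|z-z(\eta)|^{2})$ for $z$ near $z(\eta)$. Because $\check{\alg}_p$ returns $h_{\hat z}$ with $\hat z=\tfrac{1-2\hat\eta_0}{1-\hat\eta_0}$ and $\hat\eta_0$ is a clipped empirical frequency of $+1$ among the $\approx p_0 n$ examples at $x=0$ — an estimate of $\eta$ that lies strictly inside $(1/8,3/8)$, so the clipping only helps and the default never triggers except with probability $2^{-n}$ — a routine computation gives $\E[\er(\check{\alg}_p(\Data_n))]-\nu=\Theta(1/n)$, hence $\check{\alg}_p$ achieves a label complexity $\Lambda_p$ with $\Lambda_p(\nu+\eps,\PXY^{\eta})=\Theta(1/\eps)$, and in particular $\PXY^{\eta}\in\Nontrivial(\Lambda_p;\C)$ for every $\eta\in I_0$.

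Next, suppose for contradiction that $\alg_a$ activizes $\check{\alg}_p$, and let $\mathcal B=\alg_a(\check{\alg}_p,\cdot)$ achieve label complexity $\Lambda_a$. Applying Definition~\ref{defn:agnostic-activizer} with the above $\Lambda_p$ yields, for each $\eta$, a constant $c_\eta$ with $\Lambda_a(\nu+c_\eta\eps,\PXY^{\eta})=o(\Lambda_p(\nu+\eps,\PXY^{\eta}))$; absorbing $c_\eta$ into constants this gives $\Lambda_a(\nu+\eps,\PXY^{\eta})=o(1/\eps)$, which upon inversion says that the excess error $\mathcal E_n(\eta):=\E[\er(\mathcal B(n))\mid\PXY^{\eta}]-\nu$ of $\mathcal B$ run with budget $n$ satisfies $\mathcal E_n(\eta)=o(1/n)$ as $n\to\infty$, \emph{pointwise} in $\eta$. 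The plan is to contradict this with a two-point (Le Cam) argument. Fix $n$ and $\gamma>0$ with $\eta,\eta+\gamma\in I_0$: every requested label is a Bernoulli variable whose mean is $\eta(x;\PXY^{\eta})$ or $\eta(x;\PXY^{\eta+\gamma})$ at the queried point $x$, and — using that $z(\cdot)$ is Lipschitz, that $x\mapsto\mathrm{clip}(x-z(\eta),-\tfrac13,\tfrac13)$ is $1$-Lipschitz in $z(\eta)$ so the two conditional-label functions differ by $O(\gamma)$ at \emph{every} $x$, and that $\eta(x)(1-\eta(x))$ is bounded below by a positive constant — the KL divergence between the two answer distributions is $O(\gamma^{2})$ at every query point. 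Since the free observations $X_1,X_2,\ldots$ have the same marginal under both hypotheses and $\mathcal B$ requests at most $n$ labels, the chain rule for KL under adaptive interaction bounds the transcript (hence output-classifier) divergence by $O(n\gamma^{2})$, so choosing $\gamma=\gamma_n:=A/\sqrt n$ with $A$ small forces total variation $\le\tfrac14$ between the two output distributions. On the other hand $z(\eta)$ and $z(\eta+\gamma_n)$ differ by $\Theta(\gamma_n)$, and a short argument with the excess-error formula (partition the interval between $z(\eta)$ and $z(\eta+\gamma_n)$ into thirds) shows that no single classifier has excess error below $\delta_n:=c\gamma_n^{2}$ under \emph{both} $\PXY^{\eta}$ and $\PXY^{\eta+\gamma_n}$. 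Combining these with Markov's inequality in the standard Le Cam way yields, for all large $n$ and all $\eta$ in the interior of $I_0$, $\mathcal E_n(\eta)+\mathcal E_n(\eta+\gamma_n)\ge c''/n$ with $c''$ independent of $n$, $\eta$, and $\mathcal B$.

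The final — and most delicate — step is to reconcile the pointwise bound $\mathcal E_n(\eta)=o(1/n)$ with this moving-pair lower bound, since $\eta+\gamma_n$ drifts with $n$ and the pointwise statement at a single $\eta$ does not by itself contradict it. Here the plan is to invoke Egorov's theorem: $\eta\mapsto n\mathcal E_n(\eta)$ is measurable (indeed continuous) on the compact interval $I_0$ and tends to $0$ pointwise, so the convergence is uniform off a subset $E\subset I_0$ with $|E|<|I_0|/4$; thus for all large $n$, $\mathcal E_n(\eta)\le\tfrac{c''}{3n}$ for all $\eta\in I_0\setminus E$. But $(I_0\setminus E)\cap((I_0\setminus E)-\gamma_n)$ has measure $\ge\tfrac12|I_0|-\gamma_n>0$ for $n$ large, so some $\eta$ has both $\eta$ and $\eta+\gamma_n$ in $I_0\setminus E$, giving $\mathcal E_n(\eta)+\mathcal E_n(\eta+\gamma_n)\le\tfrac{2c''}{3n}<\tfrac{c''}{n}$, contradicting the lower bound. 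Hence no activizer for $\check{\alg}_p$ with respect to threshold classifiers exists. The routine parts are the rate computation for $\check{\alg}_p$ and the threshold excess-error estimates; I expect the real work to be in making the interactive Le Cam bound rigorous — in particular the uniform $O(\gamma^{2})$ per-query KL estimate and the adaptive chain rule for transcripts — and in the Egorov-based passage from pointwise to near-uniform control, which is precisely where the tension between ``active'' querying and ``pointwise versus minimax'' rates is resolved.
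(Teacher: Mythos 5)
Your proof is correct in outline, and it establishes the theorem by a genuinely different route than the paper's.

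\textbf{What the paper does.} The paper (Appendix~D.1) works with a family $\mathbb D$ of distributions with the same marginal $\Px$ and $\eta(x;\PXY)=\eta(0;\PXY)+(x/2)(1-\eta(0;\PXY))$. After the same $\Theta(1/\eps)$ rate calculation for $\check{\alg}_p$ (Lemma~\ref{lem:agnostic-counterexample-passive}), it proves the active lower bound (Lemma~\ref{lem:agnostic-counterexample-active}) by a \emph{simulation reduction}: given i.i.d.\ ${\rm Bernoulli}(p)$ samples $B_1,\ldots,B_n$ and auxiliary coins $C_i\sim{\rm Bernoulli}(X_i/2)$, the response $\max\{B_t,C_i\}$ to the $t^{\rm th}$ fresh query reproduces the correct conditional label law $p+(1-p)X_i/2$ exactly; hence any active learner with budget $n$ yields an estimator $\hat p_n$ of the single parameter $p$, and the active excess error dominates $(\hat p_n - p)^2$ up to constants. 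The problem is thus reduced to a \emph{sequential} (non-adaptive) Bernoulli-mean estimation lower bound, which the paper then establishes at a \emph{fixed} $p^*$ via a recursive doubling/diagonalization construction (Lemma~\ref{lem:binomial-lower-bound}): one picks $\alpha_i=e^{-2^i}$, $n_i\approx\alpha_i^{-2}$, inductively selects the worse of $\{p_{i-1}^*,p_{i-1}^*+\alpha_i\}$ against the induced test, and uses a bounded density-ratio argument to transfer the loss to the limit $p^*$.

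\textbf{What you do differently.} You carry out a two-point Le~Cam argument \emph{directly on the active transcript}, bounding the per-query KL by $O(\gamma^2)$ uniformly over the queried point (this is where you need the observation that $\eta(\cdot;\PXY^\eta)$ is Lipschitz in $\eta$ and bounded away from $\{0,1\}$) and invoking the chain rule for adaptively composed observations to get a transcript KL of $O(n\gamma^2)$. With $\gamma_n\propto n^{-1/2}$ and the quadratic excess-error separation of the two thresholds, this yields the pairwise lower bound $\mathcal E_n(\eta)+\mathcal E_n(\eta+\gamma_n)\gtrsim 1/n$ uniformly over interior $\eta$. You then observe, correctly, that this alone does not contradict the \emph{pointwise} $o(1/n)$ guarantee that follows from the activizer assumption, because the second point $\eta+\gamma_n$ drifts with $n$; you resolve this with Egorov's theorem to upgrade pointwise to near-uniform on a positive-measure subset and find a pair inside it. Where the paper dodges measurability entirely by building a \emph{single} hard point recursively, you pay a small measure-theoretic price (you need $\eta\mapsto\mathcal E_n(\eta)$ to be measurable, which holds since the family $\PXY^\eta$ has densities varying continuously in $\eta$ and the algorithm is assumed measurable so that its risk is defined). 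What your approach buys: it avoids the paper's bespoke simulation trick and its doubling construction, replacing them with two standard tools (interactive Le~Cam, Egorov), and in that sense is more modular and closer to generic minimax lower-bound methodology. What the paper's approach buys: the simulation reduction isolates a clean sequential-estimation problem where adaptivity is manifestly irrelevant, so one never needs to argue carefully about the KL chain rule for adaptive protocols, and the recursive construction delivers a fixed hard $\PXY$ with no appeal to Egorov. Your flag that the ``real work'' lies in the adaptive chain-rule bound and the Egorov passage is accurate; both are standard but do need to be executed with the uniformity in $\eta$ that you note.
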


Recall that threshold classifiers were, in some sense, one of the simplest scenarios for activized learning in the realizable case.
Also, since threshold-like problems are embedded in most ``geometric'' concept spaces, this indicates we should generally
not expect there to exist activizers for arbitrary passive algorithms in the agnostic case.  However, this leaves open the
question of whether certain families of passive learning algorithms can be activized in the agnostic case, a topic we turn to next.

\subsection{A Conjecture: Activized Empirical Risk Minimization}
\label{subsec:activized-erm}

The counterexample above is interesting, in that it exposes the limits on
generality in the agnostic setting.  However, the passive algorithm that cannot be activized there is in
many ways not very reasonable, in that it has suboptimal worst-case expected excess error rate
(among other deficiencies).
It may therefore be more interesting to ask whether some family of ``reasonable'' passive learning algorithms
can be activized in the agnostic case.  It seems that, unlike $\check{\alg}_{p}$ above, certain passive learning algorithms
should not have too peculiar a dependence on the label noise, so that they use $Y_i$ to help determine $f(X_i)$ and that is all.
In such cases, any $Y_i$ value for which we can already infer the value $f(X_i)$ should simply be ignored as redundant information,
so that we needn't request such values.
While this discussion is admittedly vague, consider the following formal conjecture.

Recall that an \emph{empirical risk minimization} algorithm for $\C$ is a type of
passive learning algorithm $\alg$, characterized by the fact that for any
set $\L \in \bigcup_{m} (\X \times \{-1,+1\})^m$, $\alg(\L) \in \argmin\limits_{h \in \C} \er_{\L}(h)$.

\begin{conjecture}
\label{conj:activized-erm}
For any VC class, there exists an active meta-algorithm $\alg_a$
and an empirical risk minimization algorithm $\alg_p$ for $\C$
such that $\alg_a$ \emph{activizes} $\alg_p$ for $\C$ in the agnostic case.
\thmend
\end{conjecture}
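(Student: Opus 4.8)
The plan is to prove the conjecture by instantiating the shatterable-sets machinery of \Shattering~in a noise-robust form --- namely \RobustShattering~--- and feeding its output into a carefully chosen empirical risk minimizer. Since \RobustShattering~is designed to generalize the agnostic disagreement-based method \RobustCAL~in the same way \Shattering~generalizes \CAL, the label-complexity bound it achieves (a noise-robust analogue of Corollary~\ref{cor:sequential-activizer}, phrased in terms of an agnostic version of $\hdc_{f}$) already supplies a candidate $\alg_a$. The real work is to show that \emph{some} empirical risk minimizer $\alg_p$ can be activized, i.e. that $\alg_a$ can hand $\alg_p$ a cleverly assembled labeled sequence and still inherit $\alg_p$'s i.i.d.\ label complexity $\Lambda_p$.

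First I would set up the reduction exactly as in the realizable case: fix $f \in \bigcap_{\gamma > 0} \cl\left(\C(\gamma;\PXY);\Px\right)$ with $\er(f) = \nu$, and aim to build, using only $o(\Lambda_p(\nu+\eps,\PXY))$ label requests, a labeled sequence $\L$ with $|\L| \geq \Lambda_p(\nu + \eps/2,\PXY)$ together with the guarantee $\E\left[\er(\alg_p(\L))\right] \leq \nu + \eps$. The data-collection stage proceeds by the \RobustShattering~logic: maintain a confidence-bounded version space $V$ (classifiers not yet provably worse than optimal by more than a shrinking margin), and for each $k = 1,\dots,d+1$ request the label of $X_m$ only when $V$ plausibly shatters some $(k{-}1)$-set together with $\{X_m\}$, and otherwise append to $\L_k$ the label that the shattering structure of $V$ forces every sufficiently near-optimal classifier to agree on at $X_m$ (the agnostic analogue of ``inferring $f(X_m)$''). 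As in Section~\ref{sec:exponential}, for the smallest $k$ at which the $k$-dimensional shatter core of $\C(\gamma;\PXY)$ has vanishing probability the query region shrinks and $|\L_k| \gg$ (number of labels requested), while a robust version of $\ActiveSelect$ chooses among the $d+1$ candidates $\alg_p(\L_1),\dots,\alg_p(\L_{d+1})$ at lower-order label cost. One subtlety here is that $\gamma$ must be driven down as a function of the budget $n$, and the number of labels needed to reduce $V$ far enough that its agreement region captures the $f$-labels must itself be $o(\Lambda_p)$; this requires an explicit coupling of $n$, $\gamma$, and $\eps$, and is the agnostic analogue of the ``initial phase'' seen in the intervals example.

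The crux --- and the reason this is a conjecture rather than a theorem --- is showing that running ERM on $\L$ still enjoys the guarantee $\Lambda_p$ that $\alg_p$ has on a genuine i.i.d.\ sample $\Data_{|\L|}$. Unlike the realizable case, $\L$ is \emph{not} distributed as $\Data_{|\L|}$: the inferred coordinates carry deterministic agreement labels rather than noisy draws, so $\L$ is ``denoised'' off the query region. The observation to exploit is that for any two classifiers $h,g$ that agree with $f$ on the complement of the query region --- in particular all sufficiently near-optimal classifiers, once $V$ has been pruned --- the empirical-risk difference $\er_{\L}(h) - \er_{\L}(g)$ equals the difference computed from $\Data_{|\L|}$: on the inferred coordinates both $h$ and $g$ match the inferred label and contribute $0$ to each side, just as the i.i.d.\ noise term $\ind_{\{-Y_m\}}(f(X_m))$ contributes equally to $\er_{\Data_{|\L|}}(h)$ and $\er_{\Data_{|\L|}}(g)$ there and cancels. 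Hence among classifiers in a fixed neighbourhood of $f$ the ERM minimizer on $\L$ coincides with that on $\Data_{|\L|}$ and inherits its error bound. To prevent ERM on $\L$ from selecting a classifier far from $f$ that spuriously matches the inferred labels, I would have $\alg_a$ spend $O(\polylog(1/\eps))$ of its budget on a preprocessing pass estimating a crude confidence region $\hat V$, and take $\alg_p$ to be an ERM whose tie-breaking (and, where the unrestricted minimizer is not unique, whose search) favours $\hat V$; the point to verify is that on the samples $\L$ actually constructed, the set of unrestricted empirical minimizers over $\C$ already meets $\hat V$ with high probability, so this rule is consistent with being a genuine ERM over $\C$ and its output lies where the denoising argument applies. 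Assembling the \RobustShattering~label bound, the denoising-preserves-ERM argument, the preprocessing confinement, and robust $\ActiveSelect$ would give $\Lambda_a(\nu + c\eps,\PXY) = o\left(\Lambda_p(\nu+\eps,\PXY)\right)$ for every $\PXY \in \Nontrivial(\Lambda_p;\C)$.

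I expect the main obstacle to be precisely this transfer of $\Lambda_p$ through the synthetic sample $\L$ in full generality. The denoising argument cleanly handles classifiers near $f$, but controlling ERM's global behaviour on a sample whose label law differs from $\PXY^{|\L|}$ --- quantitatively, so that the \emph{rate} $\Lambda_p$, not merely consistency, is preserved --- seems to need either a structural assumption on how $\alg_p$ uses its labels (the ``ignore redundant labels'' intuition sketched at the start of Section~\ref{sec:agnostic}) or a new uniform-convergence-style comparison of ERM on $\L$ with ERM on $\Data_{|\L|}$. Closing that gap, possibly by first establishing the claim under the refined noise conditions discussed in Subsection~\ref{subsec:noise-conditions} and then removing them, is what would turn this roadmap into a proof.
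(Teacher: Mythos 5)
The statement you were asked to prove is labelled a \emph{conjecture} in the paper, and it remains open there; the section surrounding it offers only ``partial evidence'' via \RobustShattering~and Theorem~\ref{thm:robust-tsybakov}, and the paper explicitly remarks that, unlike \Shattering, \RobustShattering~is \emph{not} an activizer and that it is unclear how to make it one while preserving its guarantees. So there is no proof in the paper to compare against, and your proposal, as you yourself acknowledge in the final paragraph, is a roadmap rather than a completed argument.

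Your central observation --- that for any two classifiers $h,g$ that agree with $f$ on every inferred coordinate of $\L$, the difference $\er_{\L}(h) - \er_{\L}(g)$ coincides with the corresponding difference under the true labels, because the noise indicator $\ind_{\{-Y_m\}}(f(X_m))$ on inferred points contributes equally and cancels --- is sound and is precisely the mechanism that makes \RobustCAL~and \RobustShattering~work internally (it underlies the validity of the version-space update in Steps~7 and~10). The genuine gap, which you correctly identify, is in extending this local cancellation to a statement about the \emph{global} output of a fixed ERM $\alg_p$ on the synthetic sample $\L$. Two specific difficulties stand in the way. First, $\alg_p$ must be a single, fixed empirical risk minimization algorithm chosen in advance of running the meta-algorithm; its tie-breaking rule cannot be made a function of the activizer's internal confidence region $\hat{V}$, so the device you propose (``an ERM whose tie-breaking favours $\hat{V}$'') does not literally instantiate the conjecture as stated, and you would need a further argument that some \emph{predetermined} tie-breaking rule (e.g.\ a fixed well-ordering of $\C$) happens to produce the same output on the $\L$'s that actually arise, with high probability, uniformly over $\PXY$. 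Second, and more fundamentally, even restricting attention to the denoised coordinates, the unrestricted minimizer of $\er_{\L}(\cdot)$ over $\C$ may lie far from $f$ precisely because the inferred labels remove noise: a classifier $h$ far from $f$ that nonetheless agrees with $f$ off the query region gains an artificial empirical advantage on $\L$ relative to $\Data_{|\L|}$, and ruling this out requires a uniform-convergence comparison between $\er_{\L}$ and $\er_{\Data_{|\L|}}$ over all of $\C$, not just over classifiers near $f$. That comparison --- quantitative enough to preserve the rate $\Lambda_p$ rather than mere consistency --- is exactly what is missing, and is what keeps this a conjecture.
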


Resolution of this conjecture would be interesting for a variety of reasons.
If the conjecture is correct, it means that the vast (and growing) literature on the label complexity
of empirical risk minimization has direct implications for the potential performance
of active learning under the same conditions.  We might also expect activized empirical risk minimization
to be quite effective in practical applications.

While this conjecture remains open at this time,
the remainder of this section might be viewed as partial evidence in its favor, as we show that
active learning is able to achieve improvements over the known bounds on the label complexity of
passive learning in many cases.

\subsection{Low Noise Conditions}
\label{subsec:tsybakov}

In the subsections below, we will be interested in stating bounds on the label complexity of active learning,
analogous to those of Theorem~\ref{thm:cal} and Theorem~\ref{thm:sequential-activizer}, but for learning
with label noise.  As in the realizable case, we should expect such bounds to have some explicit dependence
on the distribution $\PXY$.  Initially, one might hope that we could state interesting label complexity
bounds purely in terms of a simple quantity such as $\nu(\C; \PXY)$.  However, it is known that any
label complexity bound for a nontrivial $\C$  (for either passive or active) depending on $\PXY$ only
via $\nu(\C; \PXY)$ will be $\Omega\left(\eps^{-2}\right)$ when $\nu(\C;\PXY) > 0$ \citep*{kaariainen:06}.
Since passive learning can achieve a $\PXY$-independent $O\left(\eps^{-2}\right)$ label complexity bound
for any VC class \citep*{alexander:84}, we will need to discuss label complexity bounds that depend on $\PXY$ via
more detailed quantities than merely $\nu(\C; \PXY)$ if we are to characterize the improvements of active learning
over passive.

In this subsection, we review an index commonly used to describe certain
properties of $\PXY$ relative to $\C$: namely, the Mammen-Tsybakov margin conditions \citep*{mammen:99,tsybakov:04,koltchinskii:06}.
Specifically, we have the following formal condition from \citet*{koltchinskii:06}.

\begin{condition}
\label{con:tsybakov}
There exist constants $\mu, \kappa \in [1,\infty)$ such that $\forall \eps > 0$, $\diam(\C(\eps; \PXY) ; \Px) \leq \mu \cdot \eps^{\frac{1}{\kappa}}$.
\thmend
\end{condition}

This condition has recently been studied in depth in the passive learning literature,
as it can be used to characterize scenarios where the label complexity of passive learning
is \emph{between} the worst-case $\Theta(1/\eps^2)$ and the realizable case $\Theta(1/\eps)$
\citep*[e.g.,][]{mammen:99,tsybakov:04,koltchinskii:06,massart:06}.  The condition is implied by a
variety of interesting special cases.  For instance, it is satisfied when 
\begin{equation*}
\exists \mu^{\prime},\kappa \in [1,\infty) \text{ s.t. } \forall h \in \C, \er(h) - \nu(\C;\PXY) \geq \mu^{\prime} \cdot \Px(x : h(x) \neq f(x))^{\kappa}.
\end{equation*}
It is also satisfied when $\nu(\C;\PXY) = \nu^*(\PXY)$ and
\begin{equation*}
\exists \mu^{\prime\prime}, \alpha \in (0,\infty) \text{ s.t. } \forall \eps > 0, \Px( x : |\eta(x; \PXY) - 1/2| \leq \eps) \leq \mu^{\prime\prime} \cdot \eps^{\alpha},
\end{equation*}
where $\kappa$ and $\mu$ are functions of $\alpha$ and $\mu^{\prime\prime}$ \citep*{mammen:99,tsybakov:04};
in particular, $\kappa = (1+\alpha)/\alpha$.  Special cases of this condition have also been studied in depth;
for instance, \emph{bounded noise} conditions, wherein $\nu(\C;\PXY) = \nu^*(\PXY)$ and $\forall x, |\eta(x; \PXY) -1/2| > c$
for some constant $c > 0$ \citep*[e.g.,][]{gine:06,massart:06}, are a special case of Condition~\ref{con:tsybakov} with $\kappa = 1$.

Condition~\ref{con:tsybakov} can be interpretted in a variety of ways, depending on the context.
For instance, in certain concept spaces with a geometric interpretation, it can often be realized as a
kind of \emph{large margin} condition, under some condition
relating the noisiness of a point's label to its distance from the optimal decision surface.  That is,
if the magnitude of noise ($1/2-|\eta(x;\PXY) - 1/2|$) for a given point depends inversely on its distance
from the optimal decision surface, so that points closer to the decision surface have noisier labels,
a small value of $\kappa$ in Condition~\ref{con:tsybakov} will occur if the distribution $\Px$
has \emph{low density} near the optimal decision surface (assuming $\nu(\C;\PXY) = \nu^*(\PXY)$)
\citep*[e.g.,][]{dekel:10}.
On the other hand, when there is \emph{high} density near the optimal decision surface, the value of
$\kappa$ may be determined by how quickly $\eta(x; \PXY)$ changes as $x$ approaches the decision
boundary \citep*{castro:08}.
See the works of \citet*{mammen:99,tsybakov:04,koltchinskii:06,massart:06,castro:08,dekel:10,bartlett:06}
for further interpretations of Condition~\ref{con:tsybakov}.

In the context of passive learning, one natural method to study is that of \emph{empirical risk minimization}.
Recall that a passive learning algorithm $\alg$ is called an empirical risk minimization algorithm for $\C$
if it returns a classifier from $\C$ making the minimum number of mistakes on
the labeled sample it is given as input.  It is known that for any VC class $\C$, for any $\PXY$ satisfying
Condition~\ref{con:tsybakov} for finite $\mu$ and $\kappa$, every empirical risk minimization
algorithm for $\C$ achieves a label complexity
\begin{equation}
\label{eqn:passive-tsybakov}
\Lambda(\nu + \eps, \PXY) = O\left( \eps^{\frac{1}{\kappa} - 2} \cdot \log\frac{1}{\eps}\right).
\end{equation}
This follows from the works of \citet*{koltchinskii:06} and \citet*{massart:06}.
Furthermore, for nontrivial concept spaces, one can show that
$\inf_{\Lambda} \sup_{\PXY} \Lambda(\nu + \eps ; \PXY) = \Omega\left( \eps^{\frac{1}{\kappa}-2} \right)$,
where the supremum ranges over all $\PXY$ satisfying
Condition~\ref{con:tsybakov} for the given $\mu$ and $\kappa$ values,
and the infimum ranges over all label complexities achievable by passive learning algorithms \citep*{castro:08,hanneke:11a};
that is, the bound \eqref{eqn:passive-tsybakov} cannot be significantly improved
by any passive algorithm, without allowing the label complexity to have a more refined dependence on $\PXY$
than afforded by Condition~\ref{con:tsybakov}.

In the context of active learning, a variety of results are presently known, which in some cases show improvements over \eqref{eqn:passive-tsybakov}.
Specifically, for any VC class $\C$ and any $\PXY$ satisfying Condition~\ref{con:tsybakov},
a certain noise-robust disagreement-based active learning algorithm achieves label complexity
\begin{equation}
\label{eqn:hanneke10b-bound}
\Lambda(\nu + \eps, \PXY) = O\left( \dc_{f}\left(\eps^{\frac{1}{\kappa}}\right) \cdot \eps^{\frac{2}{\kappa}-2} \cdot \log^{2}\frac{1}{\eps}\right).
\end{equation}
This general result was established by \citet*{hanneke:11a} (analyzing the algorithm of \citet*{dasgupta:07}),
generalizing earlier $\C$-specific results by \citet*{castro:08} and \citet*{balcan:07},
and was later simplified and refined in some cases by \citet*{koltchinskii:10}.
Comparing this to \eqref{eqn:passive-tsybakov}, when $\dc_{f} < \infty$ this is an improvement over passive learning by a factor of $\eps^{\frac{1}{\kappa}} \cdot \log (1/\eps)$.
Note that this generalizes the label complexity bound of Corollary~\ref{cor:cal} above, since the realizable case entails Condition~\ref{con:tsybakov} with $\kappa = \mu/2 = 1$.
It is also known that this type of improvement is essentially the best we can hope for when we describe $\PXY$ purely in terms of the parameters of Condition~\ref{con:tsybakov}.
Specifically, for any nontrivial concept space $\C$,
$\inf_{\Lambda} \sup_{\PXY} \Lambda(\nu + \eps,\PXY) = \Omega\left( \max\left\{\eps^{\frac{2}{\kappa}-2}, \log\frac{1}{\eps}\right\} \right)$,
where 
%again 
the supremum ranges over all $\PXY$ satisfying Condition~\ref{con:tsybakov} for the given $\mu$ and $\kappa$ values,
and the infimum ranges over all label complexities achievable by active learning 
algorithms 
\citep*{hanneke:11a,castro:08}.

In the following subsection, we review the established techniques and results for disagreement-based agnostic active learning;
the algorithm presented there is slightly different from that originally analyzed by \citet*{hanneke:11a},
but the label complexity bounds of \citet*{hanneke:11a} hold for this new algorithm as well.
We follow this in Subsection~\ref{subsec:robust-tsybakov} with a new agnostic active learning method that goes beyond
disagreement-based learning, again generalizing the notion of disagreement to the notion of shatterability; this can
be viewed as analogous to the generalization of \CAL~represented by \Shattering, and as in that case the resulting
label complexity bound replaces $\dc_{f}(\cdot)$ with $\hdc_{f}(\cdot)$.

For both passive and active learning, results under Condition~\ref{con:tsybakov} are also known for more general scenarios
than VC classes: namely, entropy conditions \citep*{mammen:99,tsybakov:04,koltchinskii:06,koltchinskii:08,massart:06,castro:08,hanneke:11a,koltchinskii:10}.
For a nonparametric class known as \emph{boundary fragments}, \citet*{castro:08} find that active learning sometimes offers
advantages over passive learning, under a special case of Condition~\ref{con:tsybakov}.  Furthermore, \citet*{hanneke:11a}
shows a general result on the label complexity achievable by disagreement-based agnostic active learning, which sometimes exhibits an
improved dependence on the parameters of Condition~\ref{con:tsybakov} under conditions on the disagreement coefficient
and certain entropy conditions for $(\C,\Px)$ \citep*[see also][]{koltchinskii:10}.
These results will not play a role in the discussion below, as in the present work we restrict
ourselves strictly to VC classes, leaving more general results for future investigations.

\subsection{Disagreement-Based Agnostic Active Learning}
\label{subsec:disagreement-based-agnostic}

Unlike the realizable case, here in the agnostic case we cannot eliminate a classifier from the version space after making merely a single mistake, since even the best classifier is potentially imperfect.
Rather, we take a collection of samples with labels, and eliminate those classifiers making significantly more mistakes relative to some others in the version space.
This is the basic idea underlying most of the known agnostic active learning algorithms, including those discussed in the present work.
The precise meaning of ``significantly more,'' sufficient to guarantee the version space always contains some good classifier, is
typically determined by established bounds on the deviation of excess empirical error rates from excess true error rates, taken from the passive learning literature.

The following disagreement-based algorithm is slightly different from any in the existing literature, but is similar in style to a method of \citet*{beygelzimer:09};
it also bares resemblence to the algorithms of \citet*{koltchinskii:10,dasgupta:07,balcan:06,balcan:09}.
It should be considered as representative of the family of disagreement-based agnostic active learning algorithms,
and all results below concerning it have analogous results for variants of these other disagreement-based methods.

\begin{bigboxit}
\RobustCAL \\
Input: label budget $n$, confidence parameter $\conf$ \\
Output: classifier $\hat{h}$\\
{\vskip -2mm}\line(1,0){419}\\
0. $m \gets 0$, $i \gets 0$, $V_{0} \gets \C$, $\L_{1} \gets \emptyset$ \\
1. While $t < n$ and $m \leq 2^{n}$\\
2. \quad $m \gets m+1$\\
3. \quad If $X_{m} \in \DIS\left(V_{i}\right)$\\
4. \qquad Request the label $Y_{m}$ of $X_{m}$, and let $\L_{i +1} \gets \L_{i+1} \cup \{(X_{m},Y_{m})\}$ and $t \gets t+1$\\
5. \quad Else let $\hat{y}$ be the label agreed upon by classifiers in $V_i$, and $\L_{i+1} \gets \L_{i+1} \cup \{(X_{m},\hat{y})\}$\\
6. \quad If $m = 2^{i+1}$ \\
7. \qquad $V_{i+1} \gets \left\{ h \in V_{i} : \er_{\L_{i+1}}(h) - \min\limits_{h^{\prime} \in V_{i}} \er_{\L_{i+1}}(h^{\prime}) \leq \hat{U}_{i+1}\left(V_{i}, \conf\right)\right\}$ \\
8. \qquad $i \gets i + 1$, and then $\L_{i+1} \gets \emptyset$\\
9. Return any $\hat{h} \in V_{i}$
\end{bigboxit}

The algorithm is specified in terms of an estimator, $\hat{U}_{i}$.
The definition of $\hat{U}_{i}$ should typically be based on generalization bounds known for passive learning.
Inspired by the work of \citet*{koltchinskii:06} and applications thereof in active learning
\citep*{hanneke:11a,koltchinskii:10}, we will take a definition of $\hat{U}_{i}$ based on a
data-dependent Rademacher complexity, as follows.
Let $\xi_1, \xi_2,\ldots$ denote a sequence of independent Rademacher random variables (i.e., uniform in $\{-1,+1\}$),
also independent from all other random variables in the algorithm (i.e., $\Data$).
Then for any set $\H \subseteq \C$, define
\begin{align}
\hat{R}_{i}(\H) & = \sup_{h_1,h_2 \in \H} 2^{-i} \sum_{m = 2^{i-1}+1}^{2^{i}} \xi_{m} \cdot (h_1(X_m) - h_2(X_m)), \notag
\\ \hat{D}_{i}(\H) & = \sup_{h_1,h_2 \in \H} 2^{-i} \sum_{m = 2^{i-1}+1}^{2^{i}} |h_1(X_m) - h_2(X_m)|, \notag
\\ \hat{U}_{i}(\H,\conf) & = 12 \hat{R}_{i}(\H) + 34 \sqrt{\hat{D}_{i}(\H) \frac{\ln (32 i^2 / \conf)}{2^{i-1}}} + \frac{752 \ln(32 i^2 /\conf)}{2^{i-1}}. \label{eqn:hatU-defn}
\end{align}

\RobustCAL~operates by repeatedly doubling the sample size $|\L_{i+1}|$, while only requesting the labels of the points in the region of disagreement of the version space.
Each time it doubles the size of the sample $\L_{i+1}$, it updates the version space by eliminating any classifiers that make significantly more mistakes on $\L_{i+1}$
relative to others in the version space.  Since the labels of the examples we infer in Step 5 are agreed upon by all elements of the version space,
the \emph{difference} of empirical error rates in Step 7 is identical to the difference of empirical error rates under the \emph{true} labels.  This allows us to
use established results on deviations of excess empirical error rates from excess true error rates to judge suboptimality of some of the classifiers in
the version space in Step 7, thus reducing the version space.

As with \CAL, for computational feasibility, the sets $V_i$ and $\DIS(V_i)$ in \RobustCAL~can be represented implicitly by
a set of constraints imposed by previous rounds of the loop.  Also, the update to $\L_{i+1}$ in Step 5
is included only to make Step 7 somewhat simpler or more intuitive; it can be be removed without
altering the behavior of the algorithm, as long as we compensate by multiplying $\er_{\L_{i+1}}$ by an
appropriate renormalization constant in Step 7: namely, $2^{-i} |\L_{i+1}|$.

We have the following result about the label complexity of \RobustCAL; it is representative
of the type of theorem one can prove about disagreement-based
active learning under Condition~\ref{con:tsybakov}.

\begin{lemma}
\label{lem:dis-based-tsybakov}
Let $\C$ be a VC class and suppose the joint distribution $\PXY$ on $\X \times \{-1,+1\}$ satisfies Condition~\ref{con:tsybakov}
for finite parameters $\mu$ and $\kappa$.  There is a $(\C,\PXY)$-dependent constant $c \in (0,\infty)$ such that, for any
$\eps, \conf \in (0,e^{-3})$, and any integer
\begin{equation*}
n \geq c \cdot \dc_{f}\left( \eps^{\frac{1}{\kappa}} \right) \cdot \eps^{\frac{2}{\kappa}-2} \cdot \log^{2}\frac{1}{\eps\conf},
\end{equation*}
if $\hat{h}_n$ is the output of \RobustCAL~when run with label budget $n$ and confidence parameter $\conf$, then
on an event of probability at least $1-\conf$,
\begin{equation*}
\er\left(\hat{h}_n\right) \leq \nu + \eps.
\end{equation*}
\upthmend{-1.4cm}
\end{lemma}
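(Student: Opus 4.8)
The plan is to follow the standard template for analyzing disagreement-based agnostic active learning via the disagreement coefficient, localizing the version space round by round and feeding the resulting shrinkage rate into the disagreement-coefficient definition. The argument has four parts.

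First I would set up a ``good event'' of probability at least $1-\conf$ on which, simultaneously for every round $i$, the data-dependent thresholds do their job. The key structural observation is that for any $X_m \notin \DIS(V_i)$ the label $\hat{y}$ inferred in Step~5 is agreed upon by all of $V_i$, so for any $h,h' \in V_i$ the empirical error difference $\er_{\L_{i+1}}(h) - \er_{\L_{i+1}}(h')$ computed on the constructed sample $\L_{i+1}$ equals the difference computed under the true labels. This lets us invoke passive-learning deviation bounds (in the style of \citet*{koltchinskii:06}); the definition \eqref{eqn:hatU-defn} of $\hat{U}_{i}$ is chosen precisely to dominate, with probability $1-\conf/O(i^2)$, the relevant deviation of excess empirical error from excess true error, and a union bound over $i$ then costs only the $\ln(32 i^2/\conf)$ factors already present. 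On this event an induction on $i$ shows that $V_i$ always contains classifiers with true error arbitrarily close to $\nu$ (so $\DIS(V_i)$ never collapses below where it should), and that $V_{i+1} \subseteq \C\!\left(c_1 \hat{U}_{i+1}(V_i,\conf)\right)$ for a constant $c_1$; writing $\eps_i = c_1\hat U_i(V_{i-1},\conf)$ we thus have $V_i \subseteq \C(\eps_i)$.

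Second, I would localize. Since $f \in \bigcap_{\eps>0}\cl(\C(\eps);\Px)$ and $V_i \subseteq \C(\eps_i)$, a limiting argument using Condition~\ref{con:tsybakov} gives $\Px(x : h(x)\neq f(x)) \leq \diam(\C(\eps_i);\Px) \leq \mu\,\eps_i^{1/\kappa}$ for every $h \in V_i$, so $V_i \subseteq \Ball(f,\mu\eps_i^{1/\kappa})$ and hence $\DIS(V_i) \subseteq \DIS\!\left(\Ball(f,\mu\eps_i^{1/\kappa})\right)$. By Definition~\ref{def:disagreement-coefficient} and the property $\dc_f(\eps/c) \leq c\,\dc_f(\eps)$,
\begin{equation*}
\Px(\DIS(V_i)) \leq \dc_f\!\left(\mu\eps_i^{1/\kappa}\right)\mu\eps_i^{1/\kappa} \leq c_2\,\dc_f\!\left(\eps_i^{1/\kappa}\right)\eps_i^{1/\kappa}.
\end{equation*}
Third, I would unroll the recursion on $\eps_i$. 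On the good event the empirical diameter $\hat D_{i+1}(V_i) = O(\eps_i^{1/\kappa})$, and the empirical Rademacher complexity of a VC class of diameter $\mu\eps_i^{1/\kappa}$ obeys $\hat R_{i+1}(V_i) = O\!\left(\sqrt{\eps_i^{1/\kappa}\vc\log(2^{i}/(\eps_i\conf))/2^{i}}\right)$; substituting into \eqref{eqn:hatU-defn} yields
\begin{equation*}
\eps_{i+1} = O\!\left(\sqrt{\eps_i^{1/\kappa}\,\vc\,\log(2^{i}/(\eps_i\conf))\,/\,2^{i}} \;+\; \vc\,\log(2^{i}/(\eps_i\conf))\,/\,2^{i}\right).
\end{equation*}
This is the standard Mammen--Tsybakov recursion: $\eps_i$ contracts geometrically until $2^{i}$ reaches the crossover scale $\asymp \vc\,\eps^{1/\kappa-2}\log(1/(\eps\conf))$, at which point $\eps_i \leq \eps$; let $i^\star$ be the first such round.

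Fourth, I would bound the number of label requests. In round $i$ the labels requested are exactly the $X_m \in \DIS(V_i)$ among $2^{i-1}$ fresh examples, so conditionally on $V_i$ the expected count is $2^{i-1}\Px(\DIS(V_i)) \leq 2^{i-1}c_2\dc_f(\eps_i^{1/\kappa})\eps_i^{1/\kappa}$. Summing over $i \leq i^\star$, the geometric decay of $\eps_i$ makes the last round dominate, giving expected total $O\!\left(2^{i^\star}\dc_f(\eps^{1/\kappa})\eps^{1/\kappa}\right) = O\!\left(\dc_f(\eps^{1/\kappa})\eps^{2/\kappa-2}\,\vc\log(1/(\eps\conf))\right)$ (the early rounds, where $V_i$ may still be all of $\C$ and nearly every point is queried, contribute only $O(2^{i})$ at scales below the crossover and are absorbed). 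A Bernstein bound plus a union bound over the $O(\log(1/(\eps\conf)))$ relevant rounds upgrades this to: for $n$ as in the lemma statement, with probability $\geq 1-\conf$ the constraint $t<n$ never binds through round $i^\star$ and the cap $m \leq 2^{n}$ never binds, so the algorithm completes round $i^\star$; choosing the $(\C,\PXY)$-dependent constant $c$ to absorb all constants and the squared logarithm, any $\hat h_n \in V_{i^\star} \subseteq \C(\eps)$ then has $\er(\hat h_n) \leq \nu + \eps$ on this event (rescaling $\conf$ by a constant).

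I expect the third step to be the main obstacle: the bound on $\hat U_{i+1}$ needs $\diam(V_i)$, which needs $V_i \subseteq \C(\eps_i)$, which in turn comes from the previous round's threshold, so the induction must be set up so that it closes consistently, and the replacement of population quantities by the empirical $\hat R_{i+1}$, $\hat D_{i+1}$ has to be controlled inside the same good event. A secondary nuisance is the passage from expected to actual label counts uniformly over all rounds, which is where the extra logarithmic factor in the statement comes from.
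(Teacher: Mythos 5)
Your proposal is correct and follows essentially the same approach as the paper's implicit proof, which the paper notes is obtained from the proof of Lemma~\ref{lem:robust-tsybakov} by setting $\bdim_f = 1$: Koltchinskii-style deviation bounds keep the version space localized in $\C(\eps_i)$, Condition~\ref{con:tsybakov} plus the disagreement coefficient controls the per-round query probability, and the round-by-round recursion is unrolled to identify the crossover scale. The only cosmetic difference is that the paper posits the explicit closed-form candidate $\eps_i \asymp \left((d\,i + \ln(1/\conf))/2^{i}\right)^{\kappa/(2\kappa-1)}$ and verifies it directly by induction rather than describing the recursion as contracting to a fixed point.
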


The proof of this result is essentially similar to the proof by \citet*{hanneke:11a},
combined with some simplifying ideas from \citet*{koltchinskii:10}.
It is also implicit in the proof of Lemma~\ref{lem:robust-tsybakov} below (by replacing ``$\bdim_f$'' with ``$1$'' in the proof).
The details are omitted.
This result leads immediately to the following implication concerning the label complexity.

\begin{theorem}
\label{thm:dis-based-tsybakov}
Let $\C$ be a VC class and suppose the joint distribution $\PXY$ on $\X \times \{-1,+1\}$
satisfies Condition~\ref{con:tsybakov} for finite parameters $\mu, \kappa \in (1,\infty)$.
With an appropriate $(n,\kappa)$-dependent setting of $\conf$, \RobustCAL~achieves a label
complexity $\Lambda_{a}$ with
\begin{equation*}
\Lambda_{a}(\nu + \eps,\PXY) = O\left( \dc_{f}\left(\eps^{\frac{1}{\kappa}}\right) \cdot \eps^{\frac{2}{\kappa}-2} \cdot \log^{2}\frac{1}{\eps}\right).
\end{equation*}
\upthmend{-1.35cm}
\end{theorem}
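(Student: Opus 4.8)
The plan is to obtain the stated label complexity from the high-probability accuracy guarantee of Lemma~\ref{lem:dis-based-tsybakov} by the standard ``confidence-boosting in reverse'' device: the only free parameter of \RobustCAL\ is the confidence $\conf$, and we let it decay polynomially in the budget $n$, at a rate calibrated to $\kappa$, so that the contribution of the failure event to the expected error is negligible while the $\log^2$ factor inherited from Lemma~\ref{lem:dis-based-tsybakov} is not inflated beyond $\log^{2}(1/\eps)$.

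Concretely, I would fix the constant $\gamma = \gamma(\kappa) := \kappa/(2(\kappa-1))$, note $1/\gamma = 2-2/\kappa \in (0,2)$, and let $\alg_a$ be the active learning algorithm that on budget $n$ (assumed large enough that $n^{-\gamma} < e^{-3}$; the claimed bound is vacuous for smaller $n$) runs \RobustCAL\ with label budget $n$ and confidence parameter $\conf(n) := n^{-\gamma}$. Fix $\PXY$ satisfying Condition~\ref{con:tsybakov} with parameters $\mu,\kappa$, let $c$ be the $(\C,\PXY)$-dependent constant of Lemma~\ref{lem:dis-based-tsybakov}, write $\hat{h}_n = \alg_a(n)$, and for $\eps \in (0,2e^{-3})$ put $\eps_1 := \eps/2$ and define $\Lambda_a(\nu+\eps,\PXY)$ to be the least sufficiently large integer $n$ satisfying both $n^{-\gamma} \le \eps/2$ and $n \ge c\,\dc_f(\eps_1^{1/\kappa})\,\eps_1^{2/\kappa-2}\,\log^{2}\!\big(1/(\eps_1 n^{-\gamma})\big)$. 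For any $n \ge \Lambda_a(\nu+\eps,\PXY)$, the second condition lets us invoke Lemma~\ref{lem:dis-based-tsybakov} with accuracy $\eps_1$ and confidence $\conf(n)$, producing an event of probability at least $1-\conf(n)$ on which $\er(\hat{h}_n) \le \nu + \eps_1$; since $\er(\hat{h}_n) \le 1$ always and $\conf(n) \le \eps/2$ by the first condition,
\[
\E[\er(\hat{h}_n)] \;\le\; (\nu+\eps_1)(1-\conf(n)) + \conf(n) \;\le\; \nu + \eps_1 + \conf(n) \;\le\; \nu + \eps,
\]
so $\alg_a$ achieves label complexity $\Lambda_a$ in the sense of Definition~\ref{defn:agnostic-label-complexity}.

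It then remains to check that $\Lambda_a(\nu+\eps,\PXY) = O\big(\dc_f(\eps^{1/\kappa})\,\eps^{2/\kappa-2}\,\log^{2}(1/\eps)\big)$. The first defining condition asks for $n \ge (2/\eps)^{1/\gamma} = 2^{2-2/\kappa}\,\eps^{2/\kappa-2} = O(\eps^{2/\kappa-2})$, which is dominated by the claimed bound since $\dc_f(\eps^{1/\kappa}) \ge 1$ and $\log^{2}(1/\eps) \ge 1$. For the second condition, a routine bootstrap (using the crude estimates $\dc_f(\eps_1^{1/\kappa}) \le \eps_1^{-1/\kappa}$ and $\eps_1^{2/\kappa-2} \le \eps_1^{-2}$) shows that the least valid $n$ is $\poly(1/\eps)$, so inside the logarithm $\conf(n) = n^{-\gamma} \ge \eps^{O(1)}$ and hence $\log^{2}(1/(\eps_1 n^{-\gamma})) = O(\log^{2}(1/\eps))$; combining this with $\eps_1^{2/\kappa-2} = 2^{2-2/\kappa}\eps^{2/\kappa-2}$ and $\dc_f((\eps/2)^{1/\kappa}) \le 2^{1/\kappa}\dc_f(\eps^{1/\kappa})$ (the property $\dc_f(r) \le \dc_f(r/c) \le c\,\dc_f(r)$ recalled earlier) shows the right-hand side of the second condition is also $O\big(\dc_f(\eps^{1/\kappa})\eps^{2/\kappa-2}\log^{2}(1/\eps)\big)$. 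Taking the maximum of the two bounds yields the theorem.

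The genuinely delicate point is the choice of $\conf(n)$, and it is precisely what forces the $(n,\kappa)$-dependence advertised in the statement. On one hand, $\conf(n)$ must decay at least as fast as $\eps$ at budgets $n$ of order $\eps^{2/\kappa-2}\polylog(1/\eps)$ — this is what demands $\gamma \ge \kappa/(2(\kappa-1))$, a rate that blows up as $\kappa \to 1$ and therefore cannot be chosen uniformly in $\kappa$. On the other hand, $\conf(n)$ must decay no faster than polynomially: any subpolynomial rate would make $\log(1/\conf(n))$ exceed $O(\log(1/\eps))$ and thus inflate the $\log^{2}$ factor coming from Lemma~\ref{lem:dis-based-tsybakov} past $\log^{2}(1/\eps)$. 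A polynomial rate $n^{-\gamma(\kappa)}$ is exactly what threads this needle; everything else is bookkeeping. (There is no difficulty in $\Lambda_a$ depending on the unknown constants $c,\mu$ and on $\dc_f$, since label complexity is permitted to be $\PXY$-dependent; the algorithm $\alg_a$ itself uses only $n$ and $\kappa$.)
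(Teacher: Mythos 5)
Your proposal is correct and takes exactly the same route as the paper: the paper's proof consists of the single sentence ``Taking $\conf = n^{-\frac{\kappa}{2\kappa-2}}$, the result follows by simple algebra,'' and you have chosen the identical exponent $\gamma = \kappa/(2(\kappa-1))$ and carried out that algebra carefully, including the bounding of $\E[\er(\hat{h}_n)]$ by $\nu + \eps_1 + \conf(n)$ and the bootstrap showing $\log(1/\conf(n)) = O(\log(1/\eps))$ at the relevant budget. Your closing discussion of why $\gamma$ must scale like $\kappa/(2(\kappa-1))$ (and why a polynomial-in-$n$ decay is both necessary and sufficient) is a correct and useful articulation of the ``simple algebra'' the paper leaves implicit.
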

\begin{proof}
Taking $\conf = n^{-\frac{\kappa}{2\kappa-2}}$, the result follows by simple algebra.
\end{proof}

We should note that it is possible to design a kind of wrapper to adaptively determine an appropriate $\conf$ value,
so that the algorithm achieves the label complexity guarantee of Theorem~\ref{thm:dis-based-tsybakov} without requiring
any explicit dependence on the noise parameter $\kappa$.  Specifically, one can use an idea similar to the model selection
procedure of \citet*{hanneke:11a} for this purpose.  However, as our focus in this work is on moving beyond
disagreement-based active learning, we do not include the details of such a procedure here.

Note that Theorem~\ref{thm:dis-based-tsybakov} represents an improvement over the known results for passive learning (namely, \eqref{eqn:passive-tsybakov})
whenever $\dc_{f}(\eps)$ is small, and in particular this gap can be large when $\dc_{f} < \infty$.
The results of Lemma~\ref{lem:dis-based-tsybakov} and Theorem~\ref{thm:dis-based-tsybakov} represent the state-of-the-art (up to logarithmic factors)
in our understanding of the label complexity of agnostic active learning for VC classes. Thus, any significant improvement over these would advance
our understanding of the fundamental capabilities of active learning in the presence of label noise.  Next, we provide such an improvement.

\subsection{A New Type of Agnostic Active Learning Algorithm Based on Shatterable Sets}
\label{subsec:robust-pseudo-activizer}

\RobustCAL~and Theorem~\ref{thm:dis-based-tsybakov} represent natural extensions of \CAL~and Theorem~\ref{thm:cal} to the agnostic setting.
As such, they not only benefit from the advantages of those methods (small $\dc_{f}(\eps)$ implies improved label complexity),
but also suffer the same disadvantages ($\Px(\partial f) > 0$ implies no strong improvements over passive).  It is therefore natural to
investigate whether the improvements offered by \Shattering~and the corresponding Theorem~\ref{thm:sequential-activizer}
can be extended to the agnostic setting in a similar way.  In particular, as was possible for Theorem~\ref{thm:sequential-activizer}
with respect to Theorem~\ref{thm:cal}, we might wonder whether it is possible to replace $\dc_{f}\left(\eps^{\frac{1}{\kappa}}\right)$
in Theorem~\ref{thm:dis-based-tsybakov} with $\hdc_{f}\left(\eps^{\frac{1}{\kappa}}\right)$ by a modification of \RobustCAL~analogous
to the modification of \CAL~embodied in \Shattering.  As we have seen, $\hdc_{f}\left(\eps^{\frac{1}{\kappa}}\right)$
is often significantly smaller in its asymptotic dependence on $\eps$, compared to $\dc_{f}\left(\eps^{\frac{1}{\kappa}}\right)$,
in many cases even bounded by a finite constant when $\dc_{f}\left(\eps^{\frac{1}{\kappa}}\right)$ is not.  This would therefore
represent a significant improvement over the known results for active learning under Condition~\ref{con:tsybakov}.
Toward this end, consider the following algorithm.

\begin{bigboxit}
\RobustShattering \\ 
Input: label budget $n$, confidence parameter $\conf$\\
Output: classifier $\hat{h}$\\
{\vskip -2mm}\line(1,0){419}\\
0. $m \gets 0$, $i_{0} \gets 0$, $V_{0} \gets \C$\\ 
1. For $k = 1, 2, \ldots, d+1$\\
2.\quad $t \gets 0$, $i_{k} \gets i_{k-1}$, $m \gets 2^{i_{k}}$, $V_{i_{k}+1} \gets V_{i_{k}}$, $\L_{i_k + 1} \gets \emptyset$\\
3.\quad While $t < \left\lfloor 2^{-k} n \right\rfloor$ and $m \leq k \cdot 2^{n}$\\
4.\qquad $m \gets m+1$\\
5.\qquad If $\hat{P}_{4m}\left(S \in \X^{k-1} : V_{i_k+1} \text{ shatters } S \cup \{X_{m}\} | V_{i_k+1} \text{ shatters } S\right) \geq 1/2$\\
6.\qquad\quad Request the label $Y_{m}$ of $X_{m}$, and let $\L_{i_k +1} \gets \L_{i_k+1} \cup \{(X_{m},Y_{m})\}$ and $t \gets t+1$\\
7.\qquad Else
$\hat{y} \!\gets\!\!\! \argmax\limits_{y \in \{-1,+1\}} \!\!\hat{P}_{4m}\!\!\left(S \in \X^{k-1} \!:\! V_{i_k +1}[(X_{m},\!-y)] \text{ does not shatter } S | V_{i_{k}+1} \text{ shatters } S \right)$\\
8.\qquad\qquad $\L_{i_k+1} \gets \L_{i_k+1} \cup \{(X_{m},\hat{y})\}$ and $V_{i_k+1} \gets V_{i_k+1}[(X_{m},\hat{y})]$\\
9.\qquad If $m = 2^{i_k+1}$ \\
10.\qquad\quad $V_{i_k+1} \gets \left\{ h \in V_{i_k+1} : \er_{\L_{i_k+1}}(h) - \min\limits_{h^{\prime} \in V_{i_k+1}} \er_{\L_{i_k+1}}(h^{\prime}) \leq \hat{U}_{i_k+1}\left(V_{i_k}, \conf\right)\right\}$ \\
11.\qquad\quad $i_k \gets i_k + 1$, then $V_{i_k+1} \gets V_{i_k}$, and $\L_{i_k+1} \gets \emptyset$\\
12. Return any $\hat{h} \in V_{i_{d+1}+1}$
\end{bigboxit}

For the $\argmax$ in Step 7, we break ties in favor of a $\hat{y}$ value with $V_{i_k+1}[(X_m,\hat{y})] \neq \emptyset$ to maintain the
invariant that $V_{i_k + 1} \neq \emptyset$ (see the proof of Lemma~\ref{lem:robust-good-labels});
when both $y$ values satisfy this, we may break ties arbitrarily.
The procedure is specified in terms of several estimators.
The $\hat{P}_{4m}$ estimators, as usual, are defined in Appendix~\ref{app:hatP-definitions}.
For $\hat{U}_{i}$, we again use the definition \eqref{eqn:hatU-defn} above,
based on a data-dependent Rademacher complexity.

\RobustShattering~is largely based on the same principles as \RobustCAL, combined with \Shattering.
As in \RobustCAL, the algorithm proceeds by repeatedly doubling the size of a labeled sample $\L_{i+1}$,
while only requesting a subset of the labels in $\L_{i+1}$, inferring the others.  As before, it updates
the version space every time it doubles the size of the sample $\L_{i+1}$, and the update eliminates
classifiers from the version space that make significantly more mistakes on $\L_{i+1}$ compared to others
in the version space.  In \RobustCAL, this is guaranteed to be effective, since the classifiers
in the version space agree on all of the inferred labels, so that the differences of empirical error rates
remain equal to the \emph{true} differences of empirical error rates (i.e., under the true $Y_m$ labels
for all elements of $\L_{i+1}$); thus, the established results from the passive learning literature bounding
the deviations of excess empirical error rates from excess true error rates can be applied, showing that this does not
eliminate the best classifiers.  In \RobustShattering, the situation is somewhat more subtle, but the principle
remains the same.  In this case, we \emph{enforce} that the classifiers in the version space agree on
the inferred labels in $\L_{i+1}$ by explicitly removing the disagreeing classifiers in Step 8.  Thus,
as long as Step 8 does not eliminate all of the good classifiers, then neither will Step 10.  To argue that
Step 8 does not eliminate all good classifiers, we appeal to the same reasoning as for \BasicActivizer~and
\Shattering.  That is, for $k \leq \bdim_{f}$ and sufficiently large $n$, as long as there exist
good classifiers in the version space, the labels $\hat{y}$ inferred in Step 7 will agree with some good
classifiers, and thus Step 8 will not eliminate all good classifiers.  However, for $k > \bdim_{f}$, the
labels $\hat{y}$ in Step 7 have no such guarantees, so that we are only guaranteed that \emph{some}
classifier in the version space is not eliminated.  Thus, determining guarantees on the error rate of this algorithm
hinges on bounding the worst excess error rate among all classifiers in the version space at the conclusion of
the $k=\bdim_{f}$ round.  This is essentially determined by the size of $\L_{i_{k}}$ at the conclusion of that round,
which itself is largely determined by how frequently the algorithm requests labels during this $k=\bdim_{f}$ round.
Thus, once again the analysis rests on bounding the rate at which the frequency of label requests shrinks in the $k=\bdim_{f}$
round, which determines the rate of growth of $|\L_{i_{k}}|$, and thus the final guarantee on the excess error rate.

As before, for computational feasibility, we can maintain the sets $V_{i}$ implicitly
as a set of constraints imposed by the previous updates,
so that we may perform the various calculations required for the estimators $\hat{P}$
as constrained optimizations.
Also, the update to $\L_{i_k+1}$ in Step 8 is merely included to make the algorithm statement
and the proofs somewhat more elegant; it can be omitted, as long as we compensate with an
appropriate renormalization of the $\er_{\L_{i_k+1}}$ values in Step 10 (i.e., multiplying by $2^{-i_k} |\L_{i_k+1}|$).
Additionally, the same potential improvements we proposed in Section~\ref{subsec:exp-alternatives}
for \Shattering~can be made to \RobustShattering~as well, again with only minor modifications to the proofs.

We should note that this is certainly not the only reasonable way to extend \Shattering~to the agnostic setting.
For instance, another natural extension of \BasicActivizer~to the agnostic setting, based on
a completely different idea, appears in the author's doctoral dissertation \citep*{hanneke:thesis}; 
that method can be improved in a natural way to take advantage of the sequential
aspect of active learning, yielding an agnostic extension of \Shattering~differing from \RobustShattering~in
several interesting ways.

In the next subsection, we will see that the label complexities achieved by  \RobustShattering~are often significantly
better than the known results for passive learning.  In fact, they are often significantly better than the presently-known
results for any \emph{active} learning algorithms in the published literature.

\subsection{Improved Label Complexity Bounds for Active Learning with Noise}
\label{subsec:robust-tsybakov}

Under Condition~\ref{con:tsybakov}, we can extend Lemma~\ref{lem:dis-based-tsybakov} and Theorem~\ref{thm:dis-based-tsybakov}
in an analogous way to how Theorem~\ref{thm:sequential-activizer} extends Theorem~\ref{thm:cal}.
Specifically, we have the following result, the proof of which is included in Appendix~\ref{app:robust-tsybakov}.

\newpage
\begin{lemma}
\label{lem:robust-tsybakov}
Let $\C$ be a VC class and suppose the joint distribution $\PXY$ on $\X \times \{-1,+1\}$
satisfies Condition~\ref{con:tsybakov} for finite parameters $\mu$ and $\kappa$.
There is a $(\C,\PXY)$-dependent constant $c \in (0,\infty)$ such that,
for any $\eps, \conf \in \left(0,e^{-3}\right)$, and any integer
\begin{equation*}
n  \geq c \cdot \hdc_{f}\left(\eps^{\frac{1}{\kappa}}\right) \cdot \eps^{\frac{2}{\kappa}-2} \cdot \log^{2}\frac{1}{\eps \conf},
\end{equation*}
if $\hat{h}_n$ is the output of \RobustShattering~when run with label budget $n$
and confidence parameter $\conf$,
then on an event of probability at least $1-\conf$,
\begin{equation*}
\er\left(\hat{h}_n\right) \leq \nu + \eps.
\end{equation*}
\upthmend{-1.3cm}
\end{lemma}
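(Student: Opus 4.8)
The plan is to follow the template of the disagreement-based analysis (Lemma~\ref{lem:dis-based-tsybakov}) but carry along the shattering machinery and the quantity $\hdc_f$ in place of $\dc_f$, exactly as \Shattering~relates to \CAL. First I would fix the ``good event'' on which all the uniform-convergence / Rademacher-complexity deviation bounds underlying $\hat U_i$ hold simultaneously across the at most $O(n)$ values of $i$ that are reached, together with the Chernoff-type events guaranteeing that the estimators $\hat P_{4m}$ from Appendix~\ref{app:hatP-definitions} are within $o(1)$ of the true conditional shatter-probabilities for all $m$ processed, and that the ``$t<\lfloor 2^{-k} n\rfloor$'' constraint behaves as intended. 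A union bound makes this event have probability at least $1-\conf$ provided $n\ge c\log(1/(\eps\conf))$, which is dominated by the stated lower bound on $n$.

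On this event I would run the induction over $k=1,\dots,\bdim_f$. The key structural claim (the analogue of Lemma~\ref{lem:robust-good-labels} referenced in the text) is that for $k\le\bdim_f$ and $n$ large enough, whenever $V_{i_k+1}$ still contains a classifier with excess error at most the current radius $r_i$, the label $\hat y$ inferred in Step~7 agrees with \emph{some} such near-optimal classifier; hence the explicit pruning in Step~8 never removes all good classifiers, and therefore neither does the Rademacher-based update in Step~10 (since on the good event the removed classifiers genuinely have larger excess error). This gives, by the same doubling argument as in \RobustCAL, that after the $k$-th round the version space $V_{i_k}$ has diameter $O(\mu\, r_{i_k}^{1/\kappa})$ via Condition~\ref{con:tsybakov}, and that every $h\in V_{i_k}$ has $\er(h)-\nu \le C r_{i_k}$ for a constant $C$. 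The work then reduces to bounding how small $r_{i_k}$ becomes by the end of the $k=\bdim_f$ round given the budget $\lfloor 2^{-\bdim_f} n\rfloor$ spent there.

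The crux — and the step I expect to be the main obstacle — is bounding the expected \emph{rate} of label requests in the $k=\bdim_f$ round and showing it collapses at the rate dictated by $\hdc_f$. The point is that a label is requested at step $m$ only when $\hat P_{4m}(S:V\text{ shatters }S\cup\{X_m\}\mid V\text{ shatters }S)\ge 1/2$; once $V\subseteq \Ball(f,r)$ (approximately), the probability of this event is controlled by $\Px(x:\Px^{\bdim_f-1}(S:\Ball(f,r)\text{ shatters }S\cup\{x\})\ge \P(\partial^{\bdim_f-1}f)/16)$, which by the definition of $\hdc_f$ (and the refinement~\eqref{eqn:tighter-hdc}, using that $\Px^{\bdim_f}(\partial^{\bdim_f}f)=0$ so the conditioning denominator stays bounded away from $0$) is $O(r\cdot \hdc_f(r))$. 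Since $\bdim_f\le d+1$ is a constant, the factor $2^{-\bdim_f}$ in the budget is an absorbed constant. Combining the per-round diameter/excess-error guarantees with this request-rate bound and inverting — set $r^{1/\kappa}$ to the target accuracy $\eps$, so $r=\eps^{\kappa}$ and the number of unlabeled examples processed is $\Theta(1/r)=\Theta(\eps^{-\kappa})$ while the number of requested labels is $O(r\cdot\hdc_f(r)\cdot(1/r)\cdot \text{polylog}) = O(\hdc_f(\eps^{1/\kappa})\,\eps^{2/\kappa-2}\log^2\tfrac{1}{\eps\conf})$ after tracking the $\eps^{2/\kappa-2}$ from the two applications of Condition~\ref{con:tsybakov} — yields the claimed bound on $n$ sufficient for $\er(\hat h_n)\le\nu+\eps$. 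Finally, I would note that the same argument goes through verbatim with $\bdim_f$ replaced by $1$, recovering Lemma~\ref{lem:dis-based-tsybakov}, which is why the text says that lemma's proof is implicit here. The bookkeeping obstacle is making the ``$V$ is approximately $\Ball(f,r)$'' statement precise enough to license the $O(r\,\hdc_f(r))$ request-rate bound while only having the weaker diameter control from Condition~\ref{con:tsybakov} rather than containment in a ball — this is handled, as in \citet*{hanneke:11a}, by noting the version space is contained in $\C(Cr;\PXY)$ and that $\partial^{k}$ and $\hdc_f$ are defined via $\C(\eps;\PXY)$-type neighborhoods in the agnostic case.
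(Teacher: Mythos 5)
Your proposal has the right skeleton — a good event built from the Rademacher-complexity deviation bounds and the Chernoff-type guarantees for the $\hat P_{4m}$ estimators, an induction maintaining a nesting invariant on the version space over the doubling epochs, a request-rate bound in the $k=\bdim_f$ round, and an inversion — and this is indeed what the paper's proof does via Lemmas~\ref{lem:koltchinskii}, \ref{lem:robust-good-labels}, and \ref{lem:robust-label-complexity}. But two points deserve attention.

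The first is substantive. Your key structural claim is stated as ``whenever $V_{i_k+1}$ still contains a classifier with excess error at most the current radius, the label $\hat y$ inferred in Step~7 agrees with \emph{some} such near-optimal classifier.'' Taken literally, this invariant can fail to close the induction: Step~8 enforces $V \gets V[(X_m,\hat y)]$ sequentially, so the near-optimal classifier $\hat y$ agrees with must be consistent with \emph{all} previously inferred labels, and a floating choice of ``some near-optimal classifier'' does not rule out $\hat y$ flip-flopping between two incompatible near-optimal classifiers and eventually evicting all of them. The paper's proof removes this danger by fixing the reference classifier $f \in \bigcap_{\eps>0}\cl(\C(\eps;\PXY))$ once and for all (Section~\ref{subsec:agnostic-definitions}) and then proving, via Lemma~\ref{lem:kstar-good-labels}, that on the good event the $\argmax$ in Step~7 literally returns $\hat y = f(X_m)$ for every $m$ processed while $k \leq \bdim_f$. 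This gives the considerably stronger nesting $\truV_{2^i} \subseteq \hat V_i$ (Lemma~\ref{lem:robust-good-labels}), which is what makes the induction, and the subsequent Rademacher update (Step~10) and ball-inclusion argument \eqref{eqn:robust-ball-inclusion}, all go through cleanly. You should commit to a fixed $f$ and show $\hat y_m = f(X_m)$ rather than appealing to a flexible near-optimal classifier.

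The second point is minor but worth fixing: you write that $\partial^k$ and $\hdc_f$ ``are defined via $\C(\eps;\PXY)$-type neighborhoods in the agnostic case.'' They are not — Definitions~\ref{def:k-dim-core} and \ref{def:higher-dim-coefficient} are stated in terms of $\Ball(f,r)$ in both settings. The passage from excess-error control to ball containment is precisely where Condition~\ref{con:tsybakov} is used: $\C(c r^\kappa) \subseteq \Ball(f,\mu c^{1/\kappa} r)$, which is exactly what \eqref{eqn:robust-ball-inclusion} records. Your appeal to \eqref{eqn:tighter-hdc} for the request-rate bound is a valid alternative route (it is bounded by $\hdc_f$ via Markov, as the text notes), though the paper's own proof applies Markov to $\E[\hat\Delta^{(\bdim_f)}_{4m} \mid W_2]$ directly in Lemma~\ref{lem:robust-label-complexity}. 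You also omit the (short but necessary) observation, made in the proof of Lemma~\ref{lem:robust-tsybakov}, that the tie-breaking in Step~7 and nonnegativity of $\hat U$ keep $\hat V_{\hat i_{d+1}+1}$ nonempty so that the final return is well-defined.
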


This has the following implication for the label complexity of \RobustShattering.

\begin{theorem}
\label{thm:robust-tsybakov}
Let $\C$ be a VC class and suppose the joint distribution $\PXY$ on $\X \times \{-1,+1\}$
satisfies Condition~\ref{con:tsybakov} for finite parameters $\mu, \kappa \in (1,\infty)$.
With an appropriate $(n,\kappa)$-dependent setting of $\conf$,
\RobustShattering~achieves a label complexity $\Lambda_{a}$ with
\begin{equation*}
\Lambda_{a}(\nu + \eps, \PXY) = O\left(\hdc_{f}\left(\eps^{\frac{1}{\kappa}}\right) \cdot \eps^{\frac{2}{\kappa} - 2} \cdot \log^{2} \frac{1}{\eps}\right).
\end{equation*}
\upthmend{-1.3cm}
\end{theorem}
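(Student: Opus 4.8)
The plan is to obtain Theorem~\ref{thm:robust-tsybakov} from Lemma~\ref{lem:robust-tsybakov} in exactly the same way that Theorem~\ref{thm:dis-based-tsybakov} was obtained from Lemma~\ref{lem:dis-based-tsybakov}: fix the confidence parameter as a suitable decreasing function of the label budget $n$, and then trade the resulting high-probability error guarantee for a bound on the \emph{expected} error rate. Since \RobustShattering~is the shattering-based analogue of \RobustCAL, and Lemma~\ref{lem:robust-tsybakov} is the $\hdc_f$-version of Lemma~\ref{lem:dis-based-tsybakov}, nothing in this last reduction is specific to which of the two algorithms we start from.

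Concretely, I would run \RobustShattering~with $\conf = \conf(n) := n^{-\kappa/(2\kappa-2)}$; because $\kappa \in (1,\infty)$, the exponent here is a finite positive constant. Fix a target accuracy $\eps \in (0,e^{-3})$, small enough that also $\conf(n) < e^{-3}$ for the relevant $n$. Applying Lemma~\ref{lem:robust-tsybakov} with accuracy parameter $\eps/2$ and this $\conf$, there is a $(\C,\PXY)$-dependent constant $c$ such that whenever $n \geq c\,\hdc_f((\eps/2)^{1/\kappa})\,(\eps/2)^{2/\kappa-2}\,\log^2(1/((\eps/2)\conf(n)))$, the output $\hat h_n$ satisfies $\er(\hat h_n) \leq \nu+\eps/2$ on an event of probability at least $1-\conf(n)$. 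Since $\er(\hat h_n) \leq 1$ always, on the complementary event the error rate is at most $1$, so $\E[\er(\hat h_n)] \leq \nu + \eps/2 + \conf(n)$; it therefore suffices to additionally require $\conf(n) \leq \eps/2$, i.e. $n \geq (2/\eps)^{(2\kappa-2)/\kappa}$.

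It then remains to check that both lower bounds on $n$ are met by some $\Lambda_a(\nu+\eps,\PXY)$ of order $O(\hdc_f(\eps^{1/\kappa})\,\eps^{2/\kappa-2}\,\log^2(1/\eps))$, which is pure algebra. Substituting the ansatz $n \asymp \hdc_f(\eps^{1/\kappa})\,\eps^{2/\kappa-2}\,\log^2(1/\eps)$ into $\conf(n) = n^{-\kappa/(2\kappa-2)}$ and using $(\eps^{2/\kappa-2})^{-\kappa/(2\kappa-2)} = \eps$ gives $\conf(n) = O(\eps)$; hence $\conf(n) \leq \eps/2$ for $\eps$ small (handling the second requirement) and $\log(1/((\eps/2)\conf(n))) = O(\log(1/\eps))$. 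Combining this with the scaling property $\hdc_f(\eps/a) \leq a\,\hdc_f(\eps)$ for constants $a \geq 1$ (as invoked in Corollary~\ref{cor:sequential-activizer}) and $(\eps/2)^{2/\kappa-2} = O(\eps^{2/\kappa-2})$, the right-hand side of the first requirement is itself $O(\hdc_f(\eps^{1/\kappa})\,\eps^{2/\kappa-2}\,\log^2(1/\eps))$, so taking $\Lambda_a(\nu+\eps,\PXY)$ to be the least integer $n$ satisfying both bounds yields the claimed label complexity. The only point needing care — rather than a genuine obstacle — is the apparent circularity that $\conf(n)$ appears on both sides of the hypothesis of Lemma~\ref{lem:robust-tsybakov}: since that bound is nondecreasing in $1/\conf$ while $\conf(n)$ is decreasing in $n$, the inequality is self-consistent once $n$ is large enough, which is precisely the monotonicity argument underlying Theorem~\ref{thm:dis-based-tsybakov}.
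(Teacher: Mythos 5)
Your proposal is correct and is exactly the argument the paper has in mind: the paper's own proof of Theorem~\ref{thm:robust-tsybakov} is the one-line remark that with $\conf = n^{-\kappa/(2\kappa-2)}$ the result follows from Lemma~\ref{lem:robust-tsybakov} by simple algebra, mirroring the derivation of Theorem~\ref{thm:dis-based-tsybakov} from Lemma~\ref{lem:dis-based-tsybakov}. Your write-up merely makes that algebra explicit — the trade of high-probability for in-expectation via $\E[\er(\hat h_n)]\le\nu+\eps/2+\conf(n)$, the identity $(\eps^{2/\kappa-2})^{-\kappa/(2\kappa-2)}=\eps$, the scaling $\hdc_f(\eps/a)\le a\,\hdc_f(\eps)$, and the self-consistency of the implicit inequality in $n$ — and all of it is sound, so there is nothing to add.
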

\begin{proof}
Taking $\conf = n^{-\frac{\kappa}{2\kappa-2}}$, the result follows by simple algebra.
\end{proof}

Theorem~\ref{thm:robust-tsybakov} represents an interesting generalization beyond the realizable case,
and beyond the disagreement coefficient analysis.
Note that if $\hdc_{f}(\eps) = o\left(\eps^{-1} \log^{-2}(1/\eps)\right)$, Theorem~\ref{thm:robust-tsybakov}
represents an improvement over the known results for passive learning \citep*{massart:06}.  As we always have
$\hdc_{f}(\eps) = o\left(\eps^{-1}\right)$, we should typically expect such improvements for all
but the most extreme learning problems.  Recall that $\dc_{f}(\eps)$ is often \emph{not} $o\left(\eps^{-1}\right)$,
so that Theorem~\ref{thm:robust-tsybakov} is often a much stronger statement than Theorem~\ref{thm:dis-based-tsybakov}.
In particular, this is a significant improvement over the known results for passive learning whenever $\hdc_{f} < \infty$,
and an equally significant improvement over Theorem~\ref{thm:dis-based-tsybakov} whenever $\hdc_{f} < \infty$
but $\dc_{f}(\eps) = \Omega(1/\eps)$ (see above for examples of this).
However, note that unlike \Shattering, \RobustShattering~is \emph{not}
an activizer.  Indeed, it is not clear (to the author) how to modify the algorithm to make it a universal activizer
(even for the realizable case), while maintaining the guarantees of Theorem~\ref{thm:robust-tsybakov}.

As with Theorem~\ref{thm:sequential-activizer} and Corollary~\ref{cor:sequential-activizer},
\RobustShattering~and Theorem~\ref{thm:robust-tsybakov} can potentially be improved in a variety
of ways, as outlined in Section~\ref{subsec:exp-alternatives}.  In particular,
Theorem~\ref{thm:robust-tsybakov} can be made slightly sharper in some cases by replacing
$\hdc_{f}\left( \eps^{\frac{1}{\kappa}} \right)$ with the sometimes-smaller (though more complicated)
quantity \eqref{eqn:tighter-hdc} (with $r_0 = \eps^{\frac{1}{\kappa}}$).

\subsection{Beyond Condition~\ref{con:tsybakov}}
\label{subsec:noise-conditions}

While Theorem~\ref{thm:robust-tsybakov} represents an improvement over the known results for agnostic active learning,
Condition~\ref{con:tsybakov} is not fully general, and disallows many important and interesting scenarios.
In particular, one key property of Condition~\ref{con:tsybakov}, heavily exploited in the label complexity proofs for both
passive learning and disagreement-based active learning, is that it implies $\diam(\C(\eps)) \to 0$ as $\eps \to 0$.
In scenarios where this shrinking diameter condition is not satisfied, the existing proofs of \eqref{eqn:passive-tsybakov} for passive learning break down,
and furthermore, the disagreement-based algorithms themselves cease to give significant improvements over passive learning, for essentially the
same reasons leading to the ``only if'' part of Theorem~\ref{thm:naive} (i.e., the sampling region never focuses beyond some nonzero-probability region).
Even more alarming (at first glance) is the fact that this same problem
can sometimes be observed for the $k=\bdim_{f}$ round of \RobustShattering; that is,
$\Px\left( x : \Px^{\bdim_{f}-1}( S \in \X^{\bdim_{f}-1} : V_{i_{\bdim_f}+1} \text{ shatters } S \cup \{x\} | V_{i_{\bdim_f}+1} \text{ shatters } S) \geq 1/2\right)$
is no longer guaranteed to approach $0$ as the budget $n$ increases (as it \emph{does} when $\diam(\C(\eps)) \to 0$).

Thus, if we wish to approach an understanding of improvements achievable by active learning in general,
we must come to terms with scenarios where $\diam(\C(\eps))$ does not shrink to zero.
Toward this goal, it will be helpful to partition the distributions into two distinct categories, which we
will refer to as the \emph{benign noise} case and the \emph{misspecified model} case.
The $\PXY$ in the benign noise case are characterized by the property that $\nu(\C; \PXY) = \nu^*(\PXY)$;
this is in some ways similar to the realizable case, in that $\C$ can approximate an optimal classifier,
except that the labels are stochastic.  In the benign noise case, the only reason $\diam(\C(\eps))$
would not shrink to zero is if there is a nonzero probability set of points $x$
with $\eta(x) = 1/2$; that is, there are at least two classifiers achieving the Bayes error rate,
and they are at nonzero distance from each other, which must mean they disagree on some points
that have equal probability of either label occurring.

Interestingly, it seems that in the benign noise case, $\diam(\C(\eps)) \nrightarrow 0$ might not be a problem for
algorithms based on shatterable sets, such as \RobustShattering.  In particular, \RobustShattering~appears to continue exhibiting reasonable
behavior in such scenarios.
That is, even if there is a nonshrinking probability that the query condition
in Step 5 is satisfied for $k = \bdim_f$, on any given sequence $\Data$ there must be \emph{some} smallest value of $k$ for which
this probability \emph{does} shrink as $n\to\infty$.  For this value of $k$, we should expect to observe good behavior from the algorithm,
in that (for sufficiently large $n$) the inferred labels in Step 7 will tend to agree with \emph{some} optimal
classifier.  Thus, the algorithm addresses the problem of multiple optimal classifiers by effectively
\emph{selecting} one of the optimal classifiers.

To illustrate this phenomenon, consider learning with respect to the space of threshold
classifiers (Example~\ref{ex:thresholds}) with $\Px$ uniform in $[0,1]$,
and let $(X,Y) \sim \PXY$ satisfy $\P(Y=+1 | X) = 0$
for $X < 1/3$, $\P(Y=+1|X) = 1/2$ for $1/3 \leq X < 2/3$, and $\P(Y=+1|X) = 1$ for $2/3 \leq X$.
As we know from above, $\bdim_f = 1$ here.  However, in this scenario we have
$\DIS(\C(\eps)) \to [1/3,2/3]$ as $\eps \to 0$.
Thus, \RobustCAL~never focuses its queries beyond
a constant fraction of $\X$, and therefore cannot improve over certain passive learning algorithms in terms of the
asymptotic dependence of its label complexity on $\eps$ (assuming a worst-case choice of $\hat{h}$ in Step 9).
However, for $k=2$ in \RobustShattering,
every $X_m$ will be assigned a label $\hat{y}$ in Step 7 (since no $2$ points are shattered); furthermore,
for sufficiently large $n$ we have (with high probability) $\DIS(V_{i_1})$ not too much larger than $[1/3,2/3]$,
so that most points in $\DIS(V_{i_1})$ can be labeled either $+1$ or $-1$ by some optimal
classifier.  For us, this has two implications.  First, the $S \in [1/3,2/3]^1$ will (with high probability)
dominate the votes for $\hat{y}$ in Step 7, so that the $\hat{y}$ inferred for any $X_{m} \notin [1/3,2/3]$
will agree with all of the optimal classifiers.  Second, the inferred labels $\hat{y}$ for $X_{m} \in [1/3,2/3]$ will
definitely agree with \emph{some} optimal classifier.
Since we also impose the $h(X_m) = \hat{y}$ constraint for $V_{i_{2}+1}$ in Step 8,
the inferred $\hat{y}$ labels must all be consistent with the \emph{same} optimal
classifier, so that $V_{i_2 + 1}$ will quickly converge to within a small neighborhood around that classifier,  
without any further label requests.  Note, however, that the particular optimal classifier
the algorithm converges to will be a random variable, determined by the particular
sequence of data points processed by the algorithm; thus, it cannot be determined a priori,
which significantly complicates any general attempt to analyze the label complexity achieved by
the algorithm for arbitrary $\C$ and $\PXY$ satisfying the benign noise condition.  In particular,
for some $\C$ and $\PXY$, even this minimal $k$ for which convergence occurs may be a nondeterministic random variable.
At this time, it is not entirely clear how general this phenomenon
is (i.e., \RobustShattering~providing improvements over certain passive algorithms even for benign noise distributions with $\diam(\C(\eps)) \nrightarrow 0$),
nor how to characterize the label complexity achieved by \RobustShattering~in general benign noise settings where
$\diam(\C(\eps)) \nrightarrow 0$.

However, as mentioned earlier, there are other natural ways to generalize \Shattering~to handle noise,
some of which have more predictable behavior in the general benign noise setting.
In particular, the original thesis work of \citet*{hanneke:thesis} explores a technique for active learning with benign noise,
which unlike \RobustShattering, only uses the \emph{requested} labels,
not the inferred labels, and as a consequence never eliminates any optimal classifier from $V$.
Because of this fact, the sampling region for each $k$ converges to a predictable limiting region,
so that we have an accurate \emph{a priori} characterization of the algorithm's behavior.
However, it is not immediately clear (to the author) whether this alternative technique might lead
to a method achieving results similar to Theorem~\ref{thm:robust-tsybakov}.

In contrast to the benign noise case, in the misspecified model case we have $\nu(\C ; \PXY) > \nu^*(\PXY)$.  In this case,
if the diameter does not shrink, it is because of the existence of two classifiers $h_1, h_2 \in \cl(\C)$ achieving
error rate $\nu(\C; \PXY)$, with $\Px(x : h_1(x) \neq h_2(x)) > 0$.
However, unlike above, since they do not achieve the Bayes error rate, it is possible that a significant
fraction of the set of points they disagree on may have $\eta(x) \neq 1/2$.  Intuitively, this
makes the active learning problem more difficult, as there is a worry that a method such as
\RobustShattering~might infer the label $h_2(x)$ for some point $x$ when in fact $h_1(x)$ is better for that particular $x$,
and vice versa for the points $x$ where $h_2(x)$ would be better, thus getting the worst of both and potentially
doubling the error rate in the process.  However, it turns out that, for the purpose of exploring Conjecture~\ref{conj:activized-erm},
we can circumvent all of these issues by noting that there is a trivial solution to the
misspecified model case.  Specifically, since in our present context we are only interested in the label complexity
for achieving error rate better than $\nu + \eps$, we can simply turn to any algorithm that
asymptotically achieves an error rate strictly better than $\nu$ \citep[e.g.,][]{devroye:96},
in which case the algorithm should require only a finite constant number of labels to achieve an expected error
rate better than $\nu$.  To make the algorithm effective for the general case, we simply
split our budget in three: one part for an active learning algorithm, such as \RobustShattering, for the benign noise case,
one part for the method above handling the misspecified model case, and one part to select among their
outputs.  The full details of such a procedure are specified in Appendix~\ref{app:misspecified-model-trivial}, along with
a proof of its performance guarantees, which are summarized as follows.

\begin{theorem}
\label{thm:misspecified-model-trivial}
Fix any concept space $\C$.
Suppose there exists an active learning algorithm $\alg_a$ achieving a label complexity $\Lambda_a$.
Then there exists an active learning algorithm $\alg_a^{\prime}$ achieving a label complexity $\Lambda_a^{\prime}$ such that,
for any distribution $\PXY$ on $\X \times \{-1,+1\}$, there exists a function $\lambda(\eps) \in \Polylog(1/\eps)$ such that
\begin{equation*}
\Lambda_a^{\prime}(\nu + \eps, \PXY) \leq
\begin{cases}
\max\left\{ 2 \Lambda_a(\nu + \eps/2, \PXY), \lambda(\eps)\right\}, & \text{ in the benign noise case}\\
\lambda(\eps), & \text{ in the misspecified model case}
\end{cases}.
\end{equation*}
\thmend
\end{theorem}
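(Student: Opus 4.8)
The plan is to build $\alg_a^{\prime}$ by dividing its label budget $n$ into three pieces and combining three ingredients: (i) the given active learner $\alg_a$, which will do the real work in the benign noise case; (ii) a fixed strongly universally consistent passive learning rule $\hat g$ (such rules exist on standard Borel spaces; see \citet*{devroye:96}), which will do the real work in the misspecified model case; and (iii) a selection step that is \emph{conservatively biased toward the output of $\alg_a$}. Concretely: with the first piece, $\lfloor n/2\rfloor$ labels, run $\alg_a$ to get $\hat h_1$; with the second piece request a $\PXY$-dependent but only $\eps$-polylogarithmic number of fresh labels, feeding disjoint blocks of them to $\hat g$ and repeating $T=\Theta(\log(1/\eps))$ times to produce candidates $\hat h_2^{(1)},\dots,\hat h_2^{(T)}$; with the third piece run a selection procedure that first holds a pairwise tournament in the style of $\ActiveSelect$ (modified to give an additive rather than multiplicative error slack) among the $\hat h_2^{(i)}$ to extract a near-best one $\hat h_2^{\star}$, and then compares $\hat h_2^{\star}$ against $\hat h_1$, switching away from $\hat h_1$ only when a confidence interval for $\er(\hat h_2^{\star})-\er(\hat h_1)$ lies entirely below $0$.

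First I would treat the benign noise case, where $\nu(\C;\PXY)=\nu^{*}(\PXY)$. If $n\ge 2\Lambda_a(\nu+\eps/2,\PXY)$ then, since label complexities take values in $\nats$, giving $\alg_a$ half the budget yields $\E[\er(\hat h_1)]\le\nu+\eps/2$. The point of biasing the selection toward $\hat h_1$ is that no accurate error estimate is ever needed: by construction we keep $\hat h_1$ unless some $\hat h_2^{(i)}$ is \emph{confidently at least as good}, in which case the switch is harmless. Thus, on the high-probability event that the (polylogarithmically many) fresh labels behave, the selected classifier has error at most $\er(\hat h_1)$; absorbing the small failure probability into the expectation gives $\E[\er(\hat h)]\le\nu+\eps$. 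This is exactly the bound $\max\{2\Lambda_a(\nu+\eps/2,\PXY),\lambda(\eps)\}$, with $\lambda$ a $\PXY$-dependent $\Polylog(1/\eps)$ function accounting for the combined label cost of the consistent-rule blocks and the selection step.

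Next I would treat the misspecified model case, where $\nu(\C;\PXY)>\nu^{*}(\PXY)$. Universal consistency of $\hat g$ supplies a $\PXY$-dependent constant $m_0$ with $\E[\er(\hat g(\Data_{m_0}))]<(\nu+\nu^{*})/2$; a Markov inequality plus the $T=\Theta(\log(1/\eps))$ independent repetitions then make at least one candidate satisfy $\er(\hat h_2^{(i)})\le(\nu+\nu^{*})/2$ with probability $\ge1-\eps/2$. On that event the tournament returns $\hat h_2^{\star}$ with error at most a small $\PXY$-dependent additive slack above $\min_i\er(\hat h_2^{(i)})$, hence strictly below $\nu$; and since $\er(\hat h_2^{\star})-\er(\hat h_1)$ is then either negative (so every decision we could make is fine) or at least the $\PXY$-dependent positive constant $\nu-(\nu+\nu^{*})/2$ (so the confidence interval forces the switch), the output has error below $\nu\le\nu+\eps$. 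All label requests here are $\PXY$-dependent but $\eps$-polylogarithmic, giving $\Lambda_a^{\prime}(\nu+\eps,\PXY)\le\lambda(\eps)\in\Polylog(1/\eps)$.

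The one genuine obstacle is the selection step: why a polylogarithmic label budget can possibly suffice even though, in the benign case, the candidate classifiers may disagree on a region of non-vanishing probability, so that a naive error comparison would cost $\Theta(\eps^{-2})$ labels. The resolution rests on three points. First, the inexhaustible unlabeled pool lets us estimate each disagreement probability $\Px(x:h_i(x)\ne h_j(x))$ to arbitrary accuracy for free, so the only labels a comparison needs are those estimating the conditional probability of a mistake on the disagreement region, and the cost of an erroneous comparison is at most that disagreement probability times the deviation of this conditional estimate. Second, in the benign case we never need a sharp comparison, because $\hat h_1$ is already known-good and the bias makes ``do nothing'' the safe default. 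Third, in the misspecified case the relevant error gaps are $\PXY$-dependent positive constants, detectable with high confidence using a $\PXY$-dependent (hence $\eps$-polylogarithmic) number of labels; and the theorem's allowance that $\lambda$ may depend on $\PXY$ is precisely what lets us also absorb the absence of any convergence rate for universal consistency, as well as any finitely-many small-$\eps$ exceptions. The full construction and the routine concentration bookkeeping (Hoeffding/Bernstein on the disagreement-region samples, union bounds over the $O(\log(1/\eps))$ candidates and comparisons) are carried out in Appendix~\ref{app:misspecified-model-trivial}.
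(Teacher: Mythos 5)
Your proposal is correct in spirit but takes a genuinely different, and notably more complicated, route than the paper.

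\textbf{What the paper does.} Appendix~\ref{app:misspecified-model-trivial} splits the budget $n$ into three pieces: $m_{un}=n-\lfloor n/2\rfloor-\lfloor n/3\rfloor\ge n/6$ held-out labeled examples, $\lfloor n/2\rfloor$ labels for $\alg_a$ producing $h_a$, and $\lfloor n/3\rfloor$ labels for a single run of a weakly universally consistent passive rule $\alg_u$ producing $h_u$. The selection is then a one-shot threshold comparison on the held-out sample: return $h_u$ iff $\er_{m_{un}}(h_a)-\er_{m_{un}}(h_u)>n^{-1/3}$, otherwise return $h_a$. Hoeffding on $m_{un}\ge n/6$ samples gives $\er(\hat h)\le\min\{\er(h_a),\er(h_u)+2n^{-1/3}\}$ with probability $\ge 1-2\exp\{-n^{1/3}/12\}$, and the observation that makes the whole argument work is simply that the budget $n$ one ultimately plugs in must already be at least $\Polylog(1/\eps)$ (it is $\ge\lambda(\eps)$ by design), so $n^{-1/3}$ and $\exp\{-n^{1/3}/12\}$ are both small enough automatically. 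No $\eps$-dependent repetitions, no $\ActiveSelect$-style tournament, no disagreement-region sampling, and no careful confidence-interval calibration are needed.

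\textbf{What you do differently, and what it buys.} You insist that the comparison step itself use only $\Polylog(1/\eps)$ labels, which forces the disagreement-region estimation and the conservative confidence-interval switching rule; you also boost the consistent rule with $T=\Theta(\log(1/\eps))$ independent repetitions plus a tournament to guarantee a near-best candidate with probability $1-O(\eps)$, whereas the paper just bounds $\E[\er(\hat h)]$ directly from $\E[\er(h_u)]$ via Markov-free arithmetic, avoiding any boosting. Your version would make the selection step label-cheap even in the benign case (where the paper happily spends $\Omega(n)$ labels on it), but since the theorem statement already allows a constant factor of $2$ in front of $\Lambda_a(\nu+\eps/2,\PXY)$, this gains nothing toward the stated bound and makes the verification considerably heavier.

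\textbf{A genuine imprecision.} Your dichotomy ``$\er(\hat h_2^{\star})-\er(\hat h_1)$ is then either negative \ldots{} or at least $(\nu-\nu^{*})/2$'' is not correct as written: the difference can perfectly well lie strictly between $0$ and $(\nu-\nu^{*})/2$ (e.g.\ $\hat h_1$ slightly better than $\hat h_2^{\star}$). The argument still goes through, but the correct case split is on $\er(\hat h_1)$ relative to $\nu$: if $\er(\hat h_1)\le\nu$ then returning $\hat h_1$ is already fine regardless of what the selector does (and by conservativeness the selector cannot hurt you on the good event), whereas if $\er(\hat h_1)>\nu$ then $\er(\hat h_2^{\star})-\er(\hat h_1)<-(\nu-\nu^{*})/4$ (say, with small enough tournament slack), a $\PXY$-dependent negative constant, so a $\PXY$-dependent constant number of labels makes the confidence interval lie entirely below zero and forces the switch. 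With that repair your construction is sound, but the paper's one-shot $n^{-1/3}$-threshold comparison makes all of this unnecessary.
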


The main point of Theorem~\ref{thm:misspecified-model-trivial} is that, for our purposes, we can safely ignore the
misspecified model case (as its solution is a trivial extension), and focus entirely on the performance of algorithms
for the benign noise case.  In particular, for any label complexity $\Lambda_{p}$, every $\PXY \in \Nontrivial(\Lambda_{p};\C)$
in the misspecified model case has $\Lambda_{a}^{\prime}(\nu+\eps,\PXY) = o(\Lambda_{p}(\nu+\eps,\PXY))$,
for $\Lambda_{a}^{\prime}$ as in Theorem~\ref{thm:misspecified-model-trivial}.
Thus, if there exists an active meta-algorithm achieving the strong improvement
guarantees of an activizer for some passive learning algorithm $\alg_{p}$ (Definition~\ref{defn:agnostic-activizer})
for all distributions $\PXY$ in the benign noise case, then there exists an activizer for $\alg_{p}$ with respect to $\C$
in the agnostic case.

\section{Open Problems}
\label{sec:open-problems}

In some sense, this work raises more questions than it answers.
Here, we list several problems that remain open at this time.
Resolving any of these problems would make a significant contribution
to our understanding of the fundamental capabilities of active learning.

\begin{itemize}

\item We have established the existence of universal activizers for VC classes in the realizable case.
However, we have not made any serious attempt to characterize the properties that such activizers can possess.
In particular, as mentioned, it would be interesting to know whether activizers exist that \emph{preserve} certain
favorable properties of the given passive learning algorithm.
For instance, we know that some passive learning algorithms (say, for linear separators)
achieve a label complexity that is independent of the dimensionality of the space $\X$,
under a large margin condition on $f$ and $\Px$ \citep*{balcan:06c}.
Is there an activizer for such algorithms that preserves this large-margin-based
dimension-independence in the label complexity?
Similarly, there are passive algorithms whose label complexity has a weak dependence on dimensionality,
due to sparsity considerations \citep*{bunea:07,wang:07}.
Is there an activizer for these algorithms that preserves this sparsity-based weak dependence on dimension?
Is there an activizer that preserves adaptiveness to the dimension of the manifold to which $\Px$ is restricted?
What about an activizer that is \emph{sparsistent} \citep*{rocha:09},
given any sparsistent passive learning algorithm as input?
Is there an activizer that preserves admissibility, in that given any admissible passive learning algorithm,
the activized algorithm is an admissible active learning algorithm?
Is there an activizer that, given any minimax optimal passive learning algorithm as input,
produces a minimax optimal active learning algorithm?
What about preserving other notions of optimality, or other properties?

\item There may be some waste in the above activizers, since the label requests used in their initial
phase (reducing the version space) are not used by the passive algorithm to produce the final
classifier.  This guarantees the examples fed into the passive
algorithm are conditionally independent given the number of examples.
Intuitively, this seems necessary for the general results, since any dependence among
the examples fed to the passive algorithm could influence its label complexity.  However,
it is not clear (to the author) how dramatic this effect can be, nor whether a simpler strategy
(e.g., slightly randomizing the budget of label requests) might yield a similar effect while
allowing a single-stage approach where all labels are used in the passive algorithm.
It seems intuitively clear that some special types of passive algorithms should
be able to use the full set of examples, from both phases, while still maintaining the strict
improvements guaranteed in the main theorems above.  What general properties must such
passive algorithms possess?

\item As previously mentioned, the vast majority of empirically-tested \emph{heuristic} active learning algorithms
in the published literature are designed in a reduction style, using a well-known passive learning algorithm
as a subroutine, constructing sets of labeled examples and feeding them into the passive learning algorithm
at various points in the execution of the active learning algorithm
\citep*[e.g.,][]{abe:98,mccallum:98,schohn:00,campbell:00,tong:01,roy:01,muslea:02,lindenbaum:04,mitra:04,small:06,schein:07,har-peled:07,beygelzimer:09}.
However, rather than including some examples whose labels are requested and other examples whose labels are
\emph{inferred} in the sets of labeled examples given to the passive learning algorithm
(as in our rigorous methods above), these heuristic methods typically only input
to the passive algorithm the examples whose labels were \emph{requested}.
We should expect that meta-algorithms of this type could not be \emph{universal} activizers,
but perhaps there do exist meta-algorithms of this type that are activizers for every passive
learning algorithm of some special type.  What are some general conditions on the
passive learning algorithm so that some meta-algorithm of this type (i.e., feeding in only the
\emph{requested} labels) can activize every passive learning algorithm satisfying those conditions?

\item As discussed earlier, the definition of ``activizer'' is based on a trade-off between the strength of
claimed improvements for nontrivial scenarios, and ease of analysis within the framework.
There are two natural questions regarding the possibility of stronger notions of ``activizer.''
In Definition~\ref{defn:activizer} we allow a constant factor $c$ loss in the $\eps$ argument
of the label complexity.  In most scenarios, this loss is inconsequential (e.g., typically
$\Lambda_{p}(\eps/c,f,\Px) = O(\Lambda_{p}(\eps,f,\Px))$), but one can construct
scenarios where it does make a difference.  In our proofs, we see that it is possible to achieve $c = 3$; in fact,
a careful inspection of the proofs reveals we can even get $c = (1+o(1))$, a function of $\eps$,
converging to $1$.  However, whether there exist universal activizers for every VC class that have
$c=1$ remains an open question.

A second question regards our notion of ``nontrivial problems.''  In Definition~\ref{defn:activizer},
we have chosen to think of any target and distribution with label complexity growing faster than $\Polylog(1/\eps)$ as \emph{nontrivial},
and do not require the activized algorithm to improve over the underlying passive algorithm
for scenarios that are trivial for the passive algorithm.  As mentioned, Definition~\ref{defn:activizer}
does have implications for the label complexities of these problems, as the label complexity of
the activized algorithm will improve over every nontrivial upper bound on the label complexity of
the passive algorithm.  However, in order to allow for various operations in the meta-algorithm
that may introduce additive $\Polylog(1/\eps)$ terms due to exponentially small failure probabilities,
such as the test that selects among hypotheses in $\ActiveSelect$, we do not require the activized
algorithm to achieve the same \emph{order} of label complexity in trivial scenarios.
For instance, there may be cases in which a passive algorithm achieves $O(1)$
label complexity for a particular $(f,\Px)$, but its activized counterpart has $\Theta(\log(1/\eps))$
label complexity.
The intention is to define a framework that focuses on nontrivial scenarios,
where passive learning uses prohibitively many labels, rather than
one that requires us to obsess over extra additive logarithmic terms.
Nonetheless, there is a question of whether these losses in the label complexities of
trivial problems are necessary to gain the improvements in the label complexities of
nontrivial problems.
There is also the question of how much the definition of ``nontrivial'' can be relaxed.
Specifically, we have the following question: to what extent can we relax the notion of
``nontrivial'' in Definition~\ref{defn:activizer}, while still maintaining the existence of
universal activizers for VC classes?
We see from our proofs that we can at least replace $\Polylog(1/\eps)$
with $\log(1/\eps)$.  However, it is not clear whether we can go further than this in the realizable case (e.g., to say ``nontrivial'' means $\omega(1)$).
When there is noise, it is clear that we cannot relax the notion of ``nontrivial'' beyond replacing $\Polylog(1/\eps)$ with $\log(1/\eps)$.
Specifically,  whenever $\DIS(\C) \neq \emptyset$,
for any label complexity $\Lambda_{a}$ achieved by an active learning algorithm, there must be
some $\PXY$ with $\Lambda_{a}(\nu + \eps, \PXY) = \Omega(\log(1/\eps))$, even with the
support of $\Px$ restricted to a \emph{single point} $x \in \DIS(\C)$; the proof of this is
via a reduction from sequential hypothesis testing for whether a coin has bias $\alpha$ or $1-\alpha$,
for some $\alpha \in (0,1/2)$.  Since passive learning via empirical risk minimization can achieve
label complexity $\Lambda_{p}(\nu+\eps,\PXY) = O(\log(1/\eps))$ whenever the support of $\Px$
is restricted to a single point, we cannot further relax the notion of ``nontrivial,'' while preserving
the possibility of a positive outcome for Conjecture~\ref{conj:activized-erm}.
It is interesting to note that this entire issue vanishes if we are only
interested in methods that achieve error at most $\eps$ with probability at least $1-\conf$, where $\conf \in (0,1)$
is some acceptable constant failure probability, as in the work of \citet*{hanneke:10a}; in this case,
we can simply take ``nontrivial'' to mean $\omega(1)$ label complexity, and both \BasicActivizer~and \Shattering~remain
universal activizers under this alternative definition, and achieve $O(1)$ label complexity in trivial scenarios.

\item Another interesting question concerns efficiency.  Suppose there exists an algorithm to find an element of $\C$
consistent with any labeled sequence $\L$ in time polynomial in $|\L|$ and $d$, and that $\alg_{p}(\L)$ has running
time polynomial in $|\L|$ and $d$.  Under these conditions, is there an activizer for $\alg_{p}$ capable of achieving
an error rate smaller than any $\eps$ in running time polynomial in $1/\eps$ and $d$, given some appropriately
large budget $n$?  Recall that if we knew the value of $\bdim_f$ and $\bdim_f \leq c \log d$, then \BasicActivizer~could be made efficient,
as discussed above.  Therefore, this question is largely focused on the issue of adapting to the value of $\bdim_f$.
Another related question is whether there is an efficient active learning algorithm achieving the label complexity
bound of Corollary~\ref{cor:activized-1IG} or Corollary~\ref{cor:sequential-activizer}.

\item One question that comes up in the results above is the minimum number of \emph{batches} of label requests
necessary for a universal activizer.  In \NaiveActivizer~and Theorem~\ref{thm:naive}, we saw that sometimes
two batches are sufficient: one to reduce the version space, and another to construct the labeled sample by
requesting only those points in the region of disagreement.  We certainly cannot use fewer than
two batches in a universal activizer, for any nontrivial concept space, so that this represents the minimum.
However, to get a universal activizer for \emph{every} concept space, we increased the number of batches to
\emph{three} in \BasicActivizer.  The question is whether this increase is really necessary.
Is there always a universal activizer using only \emph{two} batches of label requests, for every VC class $\C$?

\item For some $\C$, the learning process in the above methods might be viewed in two components: one component that performs active learning as usual (say, disagreement-based) under the assumption that the target function is very simple,
and another component that searches for signs that the target function is in fact more complex.  Thus, for some natural classes such as linear separators, it would be interesting to find
simpler, more specialized methods, which explicitly execute these two components.  For instance, for the first component, we might consider the usual margin-based active learning
methods, which query near a current guess of the separator \citep*{dasgupta:05b,dasgupta:09,balcan:07}, except that we bias toward simple hypotheses via a regularization penalty in the optimization that defines how we update the
separator in response to a query.  The second component might then be a simple random search for points whose correct classification requires larger values of the regularization term.

\item Can we construct universal activizers for some concept spaces with infinite VC dimension?
What about under some constraints on the distribution $\Px$ or $\PXY$ (e.g., the usual entropy conditions
\citep*{van-der-Vaart:96})?
It seems we can still run \BasicActivizer, \Shattering, and \RobustShattering~in this case,
except we should increase the number of rounds (values of $k$) as a function of $n$;
this may continue to have reasonable behavior even in some cases where $\bdim_f = \infty$,
especially when $\Px^{k}(\partial^{k} f) \to 0$ as $k \to \infty$.  However, it is not clear whether they will continue
to guarantee the strict improvements over passive learning in the realizable case,
nor what label complexity guarantees they will achieve.  One specific question is whether there is a method
always achieving label complexity
$o\left(\eps^{\frac{1-\rho}{\kappa}-2}\right)$,
where $\rho$ is from the entropy conditions \citep*{van-der-Vaart:96} and $\kappa$ is from
Condition~\ref{con:tsybakov}.  This would be an improvement over the known results for passive learning \citep*{mammen:99,tsybakov:04,koltchinskii:06}.
Another related question is whether we can improve over the known results for active learning in these scenarios.
Specifically, \citet*{hanneke:11a} proved a bound of
$\tilde{O}\left(\dc_{f}\left(\eps^{\frac{1}{\kappa}}\right) \eps^{\frac{2-\rho}{\kappa}-2}\right)$
on the label complexity of a certain disagreement-based active learning method, under entropy
conditions and Condition~\ref{con:tsybakov}.  Do there exist active learning methods achieving
asymptotically smaller label complexities than this, in particular improving the
$\dc_{f}\left(\eps^{\frac{1}{\kappa}}\right)$ factor?
The quantity $\hdc_{f}\left(\eps^{\frac{1}{\kappa}}\right)$ is no longer defined when $\bdim_{f}=\infty$,
so this might not be a direct extension of Theorem~\ref{thm:robust-tsybakov}, but
we could perhaps use the sequence of  $\dc_{f}^{(k)}\left(\eps^{\frac{1}{\kappa}}\right)$ values in some other way
to replace $\dc_{f}\left(\eps^{\frac{1}{\kappa}}\right)$ in this case.

\item There is also a question about generalizing this approach to label spaces other than $\{-1,+1\}$, and possibly other loss functions.
It should be straightforward to extend these results to the setting of multiclass classification.
However, it is not clear what the implications would be for general structured prediction problems, where
the label space may be quite large (even infinite), and the loss function involves a notion of \emph{distance} between labels.
From a practical perspective, this question is particularly interesting, since problems with more complicated label spaces
are often the scenarios where active learning is most needed, as it takes substantial time or effort to label each example.
At this time, there are no published theoretical results on the label complexity improvements achievable for general structured
prediction problems.

\item All of the claims in this work also hold when $\alg_p$ is a \emph{semi-supervised} passive learning algorithm,
simply by withholding a set of unlabeled data points in a preprocessing step, and feeding them into the
passive algorithm along with the labeled set generated by the activizer.  However, it is not clear whether
further claims are possible when activizing a semi-supervised algorithm, for instance by taking into
account specific details of the learning bias used by the particular semi-supervised algorithm (e.g.,
a cluster assumption).

\item The splitting index analysis of \citet*{dasgupta:05} has the interesting feature of characterizing a
\emph{trade-off} between the number of label requests and the number of unlabeled examples used
by the active learning algorithm.  In the present work, we do not characterize any such trade-off.
Indeed, the algorithms do not really have any parameter to adjust the number of unlabeled examples
they use (aside from the precision of the $\hat{P}$ estimators), so that they simply use as many as they need and then halt.  This is true in both the realizable
case and in the agnostic case.  It would be interesting to try to modify these algorithms and their analysis
so that, when there are more unlabeled examples available than would be used by the above methods,
the algorithms can take advantage of this in a way that can be reflected in improved label complexity bounds,
and when there are fewer unlabeled examples available, the algorithms can alter their behavior to compensate
for this, at the cost of an increased label complexity.  This would be interesting both for the realizable and
agnostic cases.  In fact, in the agnostic case, there are no known methods that exhibit this type of trade-off.

\item Finally, as mentioned in the previous section, there is a serious question concerning what types of algorithms can
be activized in the agnostic case, and how large the improvements in label complexity will be.  In particular,
Conjecture~\ref{conj:activized-erm} hypothesizes that for any VC class, we can activize some
empirical risk minimization algorithm in the agnostic case.  Resolving this conjecture (either positively or negatively)
should significantly advance our understanding of the capabilities of active learning compared to passive learning.
\end{itemize}

\appendix

\section{Proofs Related to Section~\ref{sec:naive}: Disagreement-Based Learning}
\label{app:naive}

The following result follows from a theorem of \citet*{anthony:99},
based on the classic results of \citet*{vapnik:82} (with slightly better constant factors);
see also the work of \citet*{blumer:89}.

\begin{lemma}
\label{lem:VinB} 
For any VC class $\C$, $m \in \nats$, and classifier $f$ such that $\forall r > 0, \Ball(f,r) \neq \emptyset$,
let $\truV_{m} = \{h \in \C : \forall i \leq m, h(X_i) = f(X_i)\}$;
for any $\conf \in (0,1)$, there is an event $H_{m}(\conf)$
with $\P\left(H_{m}(\conf)\right) \geq 1 - \conf$ such that,
on $H_{m}(\conf)$, $\truV_{m} \subseteq \Ball(f,\vrad(m ; \conf))$,
where
\begin{equation*}
\vrad(m ; \conf) = 2 \frac{d \ln \frac{2 e \max\{m, d\}}{d} + \ln(2/\conf)}{m}.
\end{equation*}
\upthmend{-1.25cm}\end{lemma}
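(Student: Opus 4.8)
The plan is to reduce the statement to the classical realizable-case (or $\eps$-net) bound for a set system of VC dimension $d$, applied not to $\C$ itself but to the family of ``error regions'' determined by $f$. First I would introduce
\[
\A_f = \{ A_h : h \in \C\}, \qquad A_h = \{x \in \X : h(x) \neq f(x)\},
\]
and observe that $\Px(A_h) = \er(h)$ while, for any $m$, membership $h \in \truV_m$ is equivalent to $X_1,\dots,X_m$ all avoiding $A_h$ (i.e.\ the empirical measure of $A_h$ on the sample is zero). Under this translation, the desired conclusion $\truV_m \subseteq \Ball(f,\vrad(m;\conf))$ becomes precisely: on an event of probability at least $1-\conf$, every set in $\A_f$ with zero empirical measure on the size-$m$ i.i.d.\ sample has true $\Px$-measure at most $\vrad(m;\conf)$ --- that is, $\{X_1,\dots,X_m\}$ is an $\vrad(m;\conf)$-net for $\A_f$.

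The next step is to verify that $\A_f$ is a VC class of dimension exactly $d$. For a finite tuple $S = (x_1,\dots,x_k)$, the assignment $B \mapsto (\text{the labeling of } S \text{ disagreeing with } f \text{ exactly on } B)$ is a bijection from subsets of $S$ onto $\{-1,+1\}^k$, and it identifies the event ``$\exists h \in \C$ with $A_h \cap S = B$'' with the event ``$\C$ realizes the corresponding labeling on $S$.'' Hence $\A_f$ shatters $S$ if and only if $\C$ shatters $S$, so $\mathrm{VCdim}(\A_f) = d$; measurability of $\A_f$ and of the suprema involved is handled by the standing conventions of Section~\ref{sec:definitions}. Then I would simply quote the theorem of \citet*{anthony:99} (sharpening \citet*{vapnik:82}; see also \citet*{blumer:89}): for a set system of VC dimension $d$ and any $\conf \in (0,1)$, with probability at least $1-\conf$ every member with zero empirical measure on $m$ i.i.d.\ points has $\Px$-measure at most $2(d\ln\frac{2e\max\{m,d\}}{d} + \ln(2/\conf))/m$, which is exactly $\vrad(m;\conf)$. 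Defining $H_m(\conf)$ to be this event gives the claim. The hypothesis $\Ball(f,r)\neq\emptyset$ for all $r>0$ enters only to keep the statement from being vacuous --- it ensures $\Ball(f,\vrad(m;\conf))$ is nonempty --- and is not used in deriving the inclusion; likewise, when $\vrad(m;\conf) \geq 1$ the inclusion is trivial.

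The part I expect to require the most care is making sure the cited bound is invoked in sufficient generality: the usual textbook phrasing assumes the target concept lies in the class (here, that $\emptyset \in \A_f$, i.e.\ $f \in \C$), whereas the lemma only assumes $f$ is in the closure of $\C$. The key observation is that the proof of that bound --- double-sample symmetrization together with Sauer's lemma in the form $\Pi(m) \leq (em/d)^d$ --- never actually uses realizability; it controls the measure of \emph{empirically empty} members of an arbitrary VC system. So if the literature citation is not stated at that level of generality, the fallback is to reproduce this short symmetrization argument directly for $\A_f$. Everything beyond that is routine bookkeeping with the constants.
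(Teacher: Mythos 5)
Your proof is correct and takes the same route the paper does: the paper states Lemma~\ref{lem:VinB} as a direct consequence of the \citet*{anthony:99} refinement of \citet*{vapnik:82} (see also \citet*{blumer:89}), without writing out a proof. Your reduction to the error-region system $\A_f$, the observation that $\A_f$ inherits VC dimension $d$ from $\C$, the invocation of the $\eps$-net bound for empirically empty sets, and the remark that the symmetrization argument never needs $f \in \C$ (only that $\A_f$ is a VC class) together make explicit exactly what that citation is asserting, at the level of generality the lemma requires.
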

A fact we will use repeatedly is that, for any $N(\eps) = \omega(\log(1/\eps))$, we have $\vrad(N(\eps) ; \eps) = o(1)$.

\begin{lemma}
\label{lem:naive-estimator}
For $\hat{P}_n(\DIS(V))$ from \eqref{eqn:pdisv-est},
on an event $J_n$ with $\P(J_n) \geq 1 - 2 \cdot \exp\{-n/4\}$,
\begin{equation*}
\max\left\{\Px(\DIS(V)), 4/n\right\} \leq \hat{P}_n(\DIS(V)) \leq \max\left\{4\Px(\DIS(V)), 8/n \right\}.
\end{equation*}
\upthmend{-1.25cm}
\end{lemma}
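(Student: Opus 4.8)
The plan is to condition on the version space $V$ and reduce the two claimed inequalities to a pair of one‑sided binomial deviation bounds. First I would note that $V = \{h \in \C : \er_{\lfloor n/2\rfloor}(h) = 0\}$ is determined by $\Data_{\lfloor n/2\rfloor}$, whereas $\U_n = \{X_{n^2+1},\ldots,X_{2n^2}\}$ uses only indices exceeding $\lfloor n/2\rfloor$; hence, conditionally on $V$, the variables $\{\ind_{\DIS(V)}(x) : x \in \U_n\}$ are i.i.d.\ ${\rm Bernoulli}(p)$ with $p := \Px(\DIS(V))$. Writing $\hat{p} = \frac{1}{n^2}\sum_{x \in \U_n} \ind_{\DIS(V)}(x)$, so that $n^2 \hat{p} \sim {\rm Bin}(n^2, p)$ given $V$ and $\hat{P}_n(\DIS(V)) = \max\{2\hat{p}, 4/n\}$, I would take $J_n$ to be the intersection of $E_1 := \{p \le 4/n\} \cup \{\hat{p} \ge p/2\}$ and $E_2 := \{\hat{p} \le \max\{2p, 4/n\}\}$. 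A short manipulation of the maxima then shows the lemma's inequalities hold on $J_n$: on $E_1$, either $p \le 4/n \le \hat{P}_n(\DIS(V))$ or $p \le 2\hat{p} \le \hat{P}_n(\DIS(V))$, and together with the trivial $4/n \le \hat{P}_n(\DIS(V))$ this gives the lower bound; on $E_2$, $\hat{P}_n(\DIS(V)) = \max\{2\hat{p}, 4/n\} \le \max\{4p, 8/n\}$ directly gives the upper bound.

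It then suffices to show $\P(E_1^c \mid V) + \P(E_2^c \mid V) \le 2e^{-n/2}$ for every value of $V$, since integrating over $V$ yields $\P(J_n^c) \le 2e^{-n/2} \le 2e^{-n/4}$. For $E_1^c$, the multiplicative Chernoff lower‑tail bound gives $\P(\hat p < p/2 \mid V) \le \exp\{-n^2 p/8\}$, which is $0$ unless $p>4/n$ and is at most $\exp\{-n/2\}$ in that case. For $E_2^c$, the event forces $n^2\hat p > \max\{2n^2 p,\, 4n\}$; when $n^2 p \ge 2n$ the multiplicative Chernoff upper‑tail bound gives $\P(n^2\hat p \ge 2n^2p \mid V) \le \exp\{-n^2p/3\} \le \exp\{-2n/3\}$, and when $n^2 p < 2n$ I would instead use the absolute‑threshold Chernoff bound $\P(n^2\hat p \ge 4n \mid V) \le e^{-n^2 p}\,(e n^2 p/(4n))^{4n}$ and verify that its logarithm, written as $n\,g(\beta)$ with $\beta := np \in [0,2)$ and $g(\beta) = 4\ln\beta - \beta + 4 - 4\ln 4$, is increasing on $(0,2)$ (since $g'(\beta) = 4/\beta - 1 > 0$ there) with $\sup g = g(2) = 2 - 4\ln 2 < -1/2$, so that the bound is at most $\exp\{-n/2\}$; the case $p = 0$ is vacuous. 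Thus $\P(E_2^c \mid V) \le \exp\{-n/2\}$ in all cases, completing the argument.

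The one delicate point is the upper‑tail estimate in the regime $n^2 p \asymp n$: there the mean of $n^2\hat p$ and the cutoff $4n$ differ only by a bounded factor, so the usual \emph{relative} Chernoff bounds yield only an $O(1)$ rate rather than the needed $\Omega(n)$ rate; switching to the absolute‑threshold form of the bound and using the elementary monotonicity of $g$ fixes this. Everything else is routine.
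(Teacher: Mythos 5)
Your proof is correct and follows essentially the same route as the paper's own (much terser) argument: condition on $V$, apply two one-sided Chernoff bounds to $\hat p = n^{-2}\sum_{x\in\U_n}\ind_{\DIS(V)}(x)$, and observe that the floor $4/n$ built into $\hat P_n$ handles the lower bound whenever $\Px(\DIS(V))$ is small. One small remark: the ``delicate point'' you flag is somewhat overstated---the generally valid multiplicative Chernoff upper tail $\P(X\ge(1+\delta)\mu)\le e^{-\delta^2\mu/(2+\delta)}$ already gives rate $(4n-\mu)^2/(4n+\mu)\ge 2n/3$ for all $\mu\le 2n$, so the switch to the absolute-threshold form, while perfectly valid, is not forced.
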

\begin{proof}
Note that the sequence $\U_n$ from \eqref{eqn:pdisv-est} is independent from both $V$ and $\L$.
By a Chernoff bound, on an event $J_n$ with $\P(J_n) \geq 1 - 2 \cdot \exp\{-n/4\}$,
\begin{align*}
                 &\Px(\DIS(V)) > 2/n \implies \frac{\Px(\DIS(V))}{\frac{1}{n^2}\sum_{x \in \U_n} \ind_{\DIS(V)}(x)} \in [1/2,2],\\
\text{and } & \Px(\DIS(V)) \leq 2/n \implies \frac{1}{n^2}\sum_{x \in \U_n} \ind_{\DIS(V)}(x) \leq 4/n.
\end{align*}
This immediately implies the stated result.
\end{proof}

\begin{lemma}
\label{lem:inverse-little-o}
Let $\lambda : (0,1) \to (0,\infty)$ and $L : \nats \times (0,1) \to [0,\infty)$
be such that $\lambda(\eps) = \omega(1)$,
$L(n,\eps)$ is $0$ at $n=1$ and is diverging as $n\to\infty$ for every $\eps \in (0,1)$,
and for any $\nats$-valued $N(\eps) = \omega(\lambda(\eps))$, $L(N(\eps),\eps) = \omega(N(\eps))$.
Let $L^{-1}(m;\eps) = \max\left\{n \in \nats : L(n,\eps) < m\right\}$, for any $m \in (0,\infty)$.
Then for any $\Lambda(\eps) = \omega(\lambda(\eps))$,
$L^{-1}(\Lambda(\eps);\eps) = o\left(\Lambda(\eps)\right)$.
\thmend
\end{lemma}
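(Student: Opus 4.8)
The plan is to argue by contradiction, turning the hypothesis on $L$ against itself via a carefully constructed ``adversarial'' function of $\eps$. First I would dispose of the trivial bookkeeping: we may restrict attention to a left-neighbourhood of $0$ on which $\Lambda(\eps) \in (0,\infty)$ (otherwise $L^{-1}(\Lambda(\eps);\eps)$ is not even defined, and the asymptotic claim is about $\eps \to 0$ anyway), and observe that $n(\eps) := L^{-1}(\Lambda(\eps);\eps)$ is well-defined, since the set $\{n \in \nats : L(n,\eps) < \Lambda(\eps)\}$ is nonempty ($n=1$ works, as $L(1,\eps)=0 < \Lambda(\eps)$) and bounded above (as $L(n,\eps)\to\infty$ with $n$). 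By maximality of $n(\eps)$ we have $L(n(\eps),\eps) < \Lambda(\eps)$, a fact I will use at the end.

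Next, suppose for contradiction that $n(\eps) \neq o(\Lambda(\eps))$. Then there is a constant $c \in (0,1)$ and a strictly decreasing sequence $\eps_k \downarrow 0$ (with $\eps_1 < 1$) such that $n(\eps_k) \geq c\,\Lambda(\eps_k)$ for all $k$. Since $\Lambda(\eps) = \omega(\lambda(\eps))$, we get $n(\eps_k)/\lambda(\eps_k) \geq c\,\Lambda(\eps_k)/\lambda(\eps_k) \to \infty$, so after passing to a subsequence and relabeling I may assume $n(\eps_k) \geq k\,\lambda(\eps_k)$ for every $k$.

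The heart of the argument is to embed the subsequence $\{n(\eps_k)\}$ into a globally-defined $N:(0,1)\to\nats$ with $N = \omega(\lambda)$. I would take $N(\eps) = 1$ for $\eps \geq \eps_1$, $N(\eps_k) = n(\eps_k)$ at each $\eps_k$, and $N(\eps) = \lceil k\,\lambda(\eps) \rceil$ for $\eps \in (\eps_{k+1},\eps_k)$. Then for any $M \in \nats$ and any $\eps \in (0,\eps_M] = \bigcup_{k\geq M}(\eps_{k+1},\eps_k]$, a two-case check (the endpoint $\eps=\eps_k$ using $n(\eps_k)\geq k\lambda(\eps_k)$, the interior using $\lceil k\lambda(\eps)\rceil \geq k\lambda(\eps)$, in both cases with $k\geq M$) gives $N(\eps) \geq M\,\lambda(\eps)$; hence $N(\eps)/\lambda(\eps)\to\infty$, i.e.\ $N = \omega(\lambda)$. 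Now invoke the hypothesis: $L(N(\eps),\eps) = \omega(N(\eps))$, i.e.\ $L(N(\eps),\eps)/N(\eps) \to \infty$. But along $\eps = \eps_k$ this ratio equals $L(n(\eps_k),\eps_k)/n(\eps_k) < \Lambda(\eps_k)/n(\eps_k) \leq 1/c$, which is bounded --- a contradiction. Therefore $L^{-1}(\Lambda(\eps);\eps) = n(\eps) = o(\Lambda(\eps))$, as claimed.

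I expect the main obstacle to be precisely the construction of $N$: the hypothesis on $L$ quantifies only over functions $N(\eps)$ of a single real variable, so the failure of the conclusion --- which a priori is only ``bad'' along some subsequence $\eps_k$ --- must be repackaged as a single $N$ that is simultaneously $\omega(\lambda)$ in the full $\eps\to0$ limit and pinned to the values $n(\eps_k)$ at the bad points. Using the interval index $k$ as the growing multiplier of $\lambda$ is what reconciles these two requirements; everything else (well-definedness of $L^{-1}$, extraction of the subsequence, the final bounded-ratio computation) is routine.
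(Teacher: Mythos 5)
Your proof is correct, but it takes a genuinely different route from the paper's. The paper argues directly rather than by contradiction: it introduces a truncation $\bar{L}(n,\eps) = \min\left\{L(n,\eps),\, n^2/\lambda(\eps)\right\}$ with the associated inverse $\bar{L}^{-1}$, and notes that $L^{-1} \leq \bar{L}^{-1}$. The cap $n^2/\lambda(\eps)$ is chosen precisely so that the maximality of $\bar{L}^{-1}$ yields $\left(\bar{L}^{-1}(\Lambda(\eps);\eps)+1\right)^2/\lambda(\eps) \geq \Lambda(\eps) = \omega(\lambda(\eps))$, forcing $\bar{L}^{-1}(\Lambda;\cdot) = \omega(\lambda)$ without any subsequence argument; the hypothesis on $L$ (which transfers to $\bar{L}$, since $N^2/\lambda = N\cdot(N/\lambda) = \omega(N)$ whenever $N = \omega(\lambda)$) then applies directly to $N = \bar{L}^{-1}(\Lambda;\cdot)$ to give $\Lambda > \bar{L}\!\left(\bar{L}^{-1}(\Lambda;\cdot),\cdot\right) = \omega\!\left(\bar{L}^{-1}(\Lambda;\cdot)\right)$. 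Your approach instead supposes the conclusion fails along $\eps_k \downarrow 0$, passes to a subsequence so that $n(\eps_k) \geq k\,\lambda(\eps_k)$, and then glues the bad values $n(\eps_k)$ into a single $\nats$-valued $N = \omega(\lambda)$ by interpolating with $\lceil k\,\lambda(\eps)\rceil$ on the gaps; the hypothesis then clashes with the bounded ratio $L(n(\eps_k),\eps_k)/n(\eps_k) < 1/c$ along the subsequence. The paper's truncation sidesteps both subsequence extraction and the piecewise construction of $N$; yours makes the logical role of the hypothesis more visible by explicitly exhibiting the violating $N$, and your two-case verification that $N(\eps) \geq M\lambda(\eps)$ on $(0,\eps_M]$ (interior vs.\ endpoint) is exactly the detail needed to make the gluing rigorous.
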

\begin{proof}
First note that $L^{-1}$ is well-defined and finite, due to the facts that $L(n,\eps)$ can be $0$ and is diverging in $n$.
Let $\Lambda(\eps) = \omega(\lambda(\eps))$.
It is fairly straightforward to show $L^{-1}(\Lambda(\eps);\eps) \neq \Omega(\Lambda(\eps))$, but the stronger $o(\Lambda(\eps))$ result takes slightly more work.
Let $\bar{L}(n,\eps) = \min\left\{L(n,\eps), n^2 / \lambda(\eps)\right\}$ for every $n \in \nats$ and $\eps \in (0,1)$,
and let $\bar{L}^{-1}(m;\eps) = \max\left\{n \in \nats : \bar{L}(n,\eps) < m\right\}$.  We will first prove the result for $\bar{L}$.

Note that by definition of $\bar{L}^{-1}$, we know
\begin{equation*}
\left(\bar{L}^{-1}\left(\Lambda(\eps);\eps\right)+1\right)^2 / \lambda(\eps) \geq \bar{L}\left(\bar{L}^{-1}\left(\Lambda(\eps);\eps\right) + 1,\eps\right) \geq \Lambda(\eps) = \omega(\lambda(\eps)),
\end{equation*}
which implies $\bar{L}^{-1}\left(\Lambda(\eps);\eps\right) = \omega(\lambda(\eps))$.
But, by definition of $\bar{L}^{-1}$ and the condition on $L$,
\begin{equation*}
\Lambda(\eps) > \bar{L}\left(\bar{L}^{-1}\left(\Lambda(\eps);\eps\right),\eps\right) = \omega\left(\bar{L}^{-1}\left(\Lambda(\eps);\eps\right)\right).
\end{equation*}
Since $\bar{L}^{-1}(m;\eps) \geq L^{-1}(m;\eps)$ for all $m$, this implies $\Lambda(\eps) = \omega\left(L^{-1}\left(\Lambda(\eps);\eps\right)\right)$,
or equivalently $L^{-1}\left(\Lambda(\eps);\eps\right) = o\left(\Lambda(\eps)\right)$.
\end{proof}

\begin{lemma}
\label{lem:naive-improvements}
For any VC class $\C$ and passive algorithm $\alg_p$,
if $\alg_p$ achieves label complexity $\Lambda_p$,
then \NaiveActivizer, with $\alg_p$ as its argument, achieves a label complexity $\Lambda_a$
such that, for every $f \in \C$ and distribution $\Px$ over $\X$,
if $\Px(\partial_{\C,\Px} f) = 0$ and $\infty > \Lambda_{p}(\eps,f,\Px) = \omega(\log(1/\eps))$,
then $\Lambda_a(2\eps,f,\Px) = o\left(\Lambda_p(\eps,f,\Px)\right)$.
\thmend
\end{lemma}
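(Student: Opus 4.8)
The plan is to show that, for a target $f\in\C$ and distribution $\Px$ with $\Px(\partial_{\C,\Px}f)=0$, the set $\L$ that \NaiveActivizer\ builds in its second phase is, on a high-probability event, a conditionally i.i.d.\ and correctly labeled sample of size $|\L|\gg n$; feeding $\L$ into $\alg_p$ and invoking its passive label complexity then yields the claim after the change of variables packaged in Lemma~\ref{lem:inverse-little-o}. Fix $\eps\in(0,1)$ and write $m_n=\lfloor n/2\rfloor$, so $V=\{h\in\C:h(X_i)=f(X_i)\ \forall i\le m_n\}$ and $f\in V$ deterministically; hence every label \NaiveActivizer\ infers for a point $x\notin\DIS(V)$ equals $f(x)$. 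First I would intersect three events. (i) The event $H_{m_n}(\eps/2)$ of Lemma~\ref{lem:VinB}, on which $V\subseteq\Ball(f,\vrad(m_n;\eps/2))$, so $\DIS(V)\subseteq\DIS(\Ball(f,\vrad(m_n;\eps/2)))$; writing $\bar P(r)=\Px(\DIS(\Ball(f,r)))$ and noting $\DIS(\Ball(f,r))$ is nondecreasing in $r$ with $\bigcap_{r>0}\DIS(\Ball(f,r))=\partial_{\C,\Px}f$, continuity of measure and $\Px(\partial_{\C,\Px}f)=0$ give $\bar P(r)\downarrow 0$ as $r\downarrow 0$. (ii) The event $J_n$ of Lemma~\ref{lem:naive-estimator} (independent of phases~1--2, since $\hat P_n$ uses the separate block $\U_n$), on which $\Px(\DIS(V))\le\hat\Delta\le\max\{4\bar P(\vrad(m_n;\eps/2)),8/n\}$ once combined with (i). (iii) An event $K_n$ on which the ``$t<n$'' constraint of Step~5 never binds: since $\lfloor n/(4\hat\Delta)\rfloor\,\Px(\DIS(V))\le n/4$ on $J_n$, a Chernoff bound gives $\P(K_n^{c})\le\P(J_n^{c})+e^{-\Omega(n)}$. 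On $K_n$, together with $f\in V$, every label in $\L$ equals the corresponding $f(X_m)$.

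Next, set $L(n,\eps)=\min\{n/(16\bar P(\vrad(m_n;\eps/2))),\,n^2/32\}-1$ for $n\ge 2d$ and $L(n,\eps)=0$ otherwise. Since $\hat\Delta\ge 4/n$ always, $|\L|=\lfloor n/(4\hat\Delta)\rfloor\le n^2/16$, so the index blocks used in phase~1, in forming $\U_n$, and in phase~2 are pairwise disjoint; and on $H_{m_n}(\eps/2)\cap J_n$ one gets $|\L|\ge L(n,\eps)$. Let $\L'$ be the deterministic sequence $\{(X_{m_n+i},f(X_{m_n+i}))\}_{i=1}^{\lfloor n/(4\hat\Delta)\rfloor}$, which equals $\L$ on $G_n(\eps):=H_{m_n}(\eps/2)\cap J_n\cap K_n$; since $|\L'|$ is a function of the phase-1 data and $\U_n$ only, conditionally on $|\L'|$ the sample $\L'$ is distributed as $\Data_{|\L'|}$. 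Writing $g(j)=\E[\er(\alg_p(\Data_{j}))]$ and $\hat h_n=\alg_p(\L)$,
\begin{equation*}
\E\!\left[\er(\hat h_n)\right]\le\E\!\left[\er(\alg_p(\L'))\,\ind_{\{|\L'|\ge L(n,\eps)\}}\right]+\P(G_n(\eps)^{c})=\E\!\left[g(|\L'|)\,\ind_{\{|\L'|\ge L(n,\eps)\}}\right]+\P(G_n(\eps)^{c}).
\end{equation*}
If $n\ge c_0\log(1/\eps)$ (so the exponentially small failure probabilities in (ii) and (iii) sum to $\le\eps/2$, for an absolute constant $c_0$) then $\P(G_n(\eps)^{c})\le\eps$; and if in addition $L(n,\eps)\ge\Lambda_p(\eps,f,\Px)$, then $g(j)\le\eps$ for every $j\ge L(n,\eps)$, so the first term is $\le\eps$ and $\E[\er(\hat h_n)]\le 2\eps$. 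I would therefore define, using $L^{-1}$ as in Lemma~\ref{lem:inverse-little-o},
\begin{equation*}
\Lambda_a(2\eps,f,\Px)=\max\!\left\{2d,\;c_0\log(1/\eps),\;L^{-1}\!\big(\Lambda_p(\eps,f,\Px);\eps\big)+1\right\},
\end{equation*}
(extending to general $\eps'\in(0,1)$ via $\eps=\eps'/2$), so that \NaiveActivizer\ with input $\alg_p$ achieves label complexity $\Lambda_a$, since $L(n,\eps)\ge\Lambda_p(\eps,f,\Px)$ for every $n>L^{-1}(\Lambda_p(\eps,f,\Px);\eps)$.

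Finally, under the hypotheses $\Px(\partial_{\C,\Px}f)=0$ and $\infty>\Lambda_p(\eps,f,\Px)=\omega(\log(1/\eps))$, the terms $2d=O(1)$ and $c_0\log(1/\eps)$ are both $o(\Lambda_p(\eps,f,\Px))$ at once, and for the third term I would apply Lemma~\ref{lem:inverse-little-o} with $\lambda(\eps)=\log(1/\eps)$. Its hypotheses ask that $L(n,\eps)$ be $0$ at $n=1$ (true by our extension, as $d\ge1$) and divergent in $n$ for fixed $\eps$ (true, since $\bar P(\vrad(m_n;\eps/2))\to0$ as $n\to\infty$), and that every $\nats$-valued $N(\eps)=\omega(\log(1/\eps))$ have $L(N(\eps),\eps)=\omega(N(\eps))$; the last point is exactly where $\Px(\partial_{\C,\Px}f)=0$ enters, because $m_{N(\eps)}=\omega(\log(1/\eps))$ forces $\vrad(m_{N(\eps)};\eps/2)=o(1)$ (the fact recorded after Lemma~\ref{lem:VinB}, with the same reasoning for confidence $\eps/2$), whence $\bar P(\vrad(m_{N(\eps)};\eps/2))=o(1)$, making both $N(\eps)/(16\bar P(\cdot))$ and $N(\eps)^2/32$ equal to $\omega(N(\eps))$. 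Since $\Lambda_p(\eps,f,\Px)=\omega(\lambda(\eps))$, Lemma~\ref{lem:inverse-little-o} gives $L^{-1}(\Lambda_p(\eps,f,\Px);\eps)=o(\Lambda_p(\eps,f,\Px))$, hence $\Lambda_a(2\eps,f,\Px)=o(\Lambda_p(\eps,f,\Px))$. I expect the main obstacle to be the bookkeeping of the first two paragraphs rather than any deep idea: keeping the $\U_n$ block disjoint from the phase-2 block so the conditional-i.i.d.\ property of $\L'$ (and thus the reduction to $g(\cdot)$) is valid, ensuring no label in $\L$ is corrupted by an exhausted budget, and getting the constants so that the accumulated failure probabilities and the $\eps$ from the passive guarantee still sum to $2\eps$; the growth estimate $L(N(\eps),\eps)=\omega(N(\eps))$ is the conceptual crux but is short once $\bar P(r)\downarrow0$ and $\vrad(\omega(\log(1/\eps));\eps)=o(1)$ are in hand.
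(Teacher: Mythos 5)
Your proposal is correct and follows the same route as the paper's proof: intersect $H_{m_n}(\eps/2)$, $J_n$, and a Chernoff event $K_n$ to get a high-probability event $G_n(\eps)$ on which $\L$ is correctly labeled and of size at least a function $L(n,\eps)$; then condition on $|\L|$ to reduce to the passive guarantee, and invoke Lemma~\ref{lem:inverse-little-o}. Your form of $L(n,\eps)$ is equivalent to the paper's $\lfloor n/\max\{32/n,16\Delta_n(\eps)\}\rfloor$ up to a floor vs.\ ``$-1$,'' and the verification that $\vrad(m_{N(\eps)};\eps/2)=o(1)$ together with $\bar P(r)\downarrow 0$ yields $L(N(\eps),\eps)=\omega(N(\eps))$ is exactly the paper's argument. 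The one genuine refinement in your write-up is that you explicitly verify the disjointness of the phase-1, $\U_n$, and phase-2 index blocks (using $\hat\Delta\ge 4/n$ to cap $|\L|$ by $n^2/16 < n^2+1$), which is needed for the conditional-i.i.d.\ claim but is left implicit in the paper.
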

\begin{proof}
This proof follows similar lines to a proof of a related result of \citet*{hanneke:10a}.
Suppose $\alg_{p}$ achieves a label complexity $\Lambda_{p}$,
and that $f \in \C$ and distribution $\Px$ satisfy $\infty > \Lambda_{p}(\eps,f,\Px) = \omega(\log(1/\eps))$
and $\Px(\partial_{\C,\Px} f) = 0$.  Let $\eps \in (0,1)$.
For $n \in \nats$,
let $\Delta_n(\eps) = \Px(\DIS(\Ball(f,\vrad(\lfloor n / 2 \rfloor; \eps/2))))$,
$L(n;\eps) = \left\lfloor n / \max\{32/n, 16 \Delta_n(\eps)\}\right\rfloor$,
and for $m \in (0,\infty)$ let $L^{-1}(m;\eps) = \max\left\{n \in \nats : L(n;\eps) < m\right\}$.
Suppose
\begin{equation*}
n \geq \max\Big\{12 \ln(6/\eps), 1+L^{-1}\left(\Lambda_p(\eps,f,\Px);\eps\right)\Big\}.
\end{equation*}
Consider running \NaiveActivizer~with $\alg_{p}$ and $n$ as arguments, while $f$ is the target function and $\Px$ is the data distribution.
Let $V$ and $\L$ be as in \NaiveActivizer, and let $\hat{h}_{n} = \alg_{p}(\L)$ denote the classifier returned at the end.

By Lemma~\ref{lem:VinB}, on the event $H_{\lfloor n / 2\rfloor}(\eps/2)$,
$V \subseteq \Ball(f,\vrad(\lfloor n / 2\rfloor; \eps/2))$, so that $\Px(\DIS(V)) \leq \Delta_n(\eps)$.
Letting $\U = \{X_{\lfloor n/2 \rfloor + 1}, \ldots, X_{\lfloor n/2 \rfloor + \lfloor n / (4 \hat{\Delta}) \rfloor}\}$, by Lemma~\ref{lem:naive-estimator}, on $H_{\lfloor n/2\rfloor}(\eps/2) \cap J_n$ we have
\begin{equation}
\label{eqn:naive-U-bound}
\left\lfloor n / \max\left\{ 32 / n, 16 \Delta_n(\eps)\right\}\right\rfloor \leq |\U| \leq \left\lfloor n / \max\left\{4 \Px(\DIS(V)), 16/n\right\} \right\rfloor.
\end{equation}
By a Chernoff bound, for an event $K_{n}$ with $\P(K_n) \geq 1 - \exp\{-n / 12\}$,
on $H_{\lfloor n/2 \rfloor}(\eps/2) \cap J_n \cap K_n$, $|\U \cap \DIS(V)| \leq 2 \Px(\DIS(V)) \cdot \lfloor n / \max\{4\Px(\DIS(V)),16/n\}\rfloor \leq \lceil n / 2 \rceil$.
Defining the event $G_n(\eps) = H_{\lfloor n/2 \rfloor}(\eps/2) \cap J_n \cap K_n$,
we see that on $G_n(\eps)$, every time $X_{m} \in \DIS(V)$ in Step 5 of \NaiveActivizer, we have $t < n$;
therefore, since $f \in V$ implies that the inferred labels in Step 6 are correct as well,
we have that on $G_n(\eps)$,
\begin{equation}
\label{eqn:naive-all-good-labels}
\forall (x,\hat{y}) \in \L, \hat{y} = f(x).
\end{equation}
Noting that
\begin{equation*}
\P\left(G_n(\eps)^{c}\right) \leq \P\left( H_{\lfloor n/2 \rfloor}(\eps/2)^{c}\right) + \P\left(J_n^{c}\right) + \P\left(K_n^{c}\right) \leq \eps/2 + 2\cdot \exp\left\{-n/4\right\} + \exp\{-n / 12\} \leq \eps,
\end{equation*}
we have
\begin{align}
&\E\left[ \er\left(\hat{h}_{n}\right)\right] \notag
\\ & \leq \E\left[ \ind_{G_{n}(\eps)} \ind\left[ |\L| \geq \Lambda_{p}(\eps,f,\Px)\right] \er\left(\hat{h}_{n}\right)\right]
+ \P\left(G_{n}(\eps) \cap \left\{ |\L| < \Lambda_{p}(\eps,f,\Px)\right\}\right) + \P\left(G_{n}(\eps)^{c}\right) \notag
\\ & \leq \E\left[ \ind_{G_{n}(\eps)} \ind\left[ |\L| \geq \Lambda_{p}(\eps,f,\Px)\right] \er\left(\alg_{p}(\L)\right)\right]
+ \P\left(G_{n}(\eps) \cap \left\{ |\L| < \Lambda_{p}(\eps,f,\Px)\right\}\right) + \eps. \label{eqn:naive-G-breakout}
\end{align}
On $G_{n}(\eps)$, \eqref{eqn:naive-U-bound} implies $|\L| \geq L(n;\eps)$,
and we chose $n$ large enough so that $L(n;\eps) \geq \Lambda_{p}(\eps,f,\Px)$.
Thus, the second term in \eqref{eqn:naive-G-breakout} is zero, and we have
\begin{align}
\E\left[ \er\left(\hat{h}_{n}\right)\right]
& \leq \E\left[ \ind_{G_{n}(\eps)} \ind\left[ |\L| \geq \Lambda_{p}(\eps,f,\Px)\right] \er\left(\alg_{p}\left(\L\right)\right)\right] + \eps \notag
\\ & = \E\left[ \E\left[ \ind_{G_{n}(\eps)} \er\left(\alg_{p}\left(\L\right)\right) \Big| |\L|\right] \ind\left[ |\L| \geq \Lambda_{p}(\eps,f,\Px)\right]\right] + \eps. \label{eqn:naive-conditional-L-size}
\end{align}
For any $\ell \in \nats$ with $\P(|\L|=\ell) > 0$, the conditional of $\U | \{|\U| = \ell\}$ is a product distribution $\Px^{\ell}$;
that is, the samples in $\U$ are conditionally independent and identically distributed with distribution $\Px$,
which is the same as the distribution of $\{X_1,X_2,\ldots,X_{\ell}\}$.
Therefore, for any such $\ell$ with $\ell \geq \Lambda_p(\eps,f,\Px)$, by \eqref{eqn:naive-all-good-labels} we have
\begin{equation*}
\E\left[ \ind_{G_{n}(\eps)} \er\left(\alg_{p}\left(\L\right)\right) \Big| \left\{|\L| = \ell\right\}\right] \leq \E\left[ \er\left(\alg_{p}\left(\Data_{\ell}\right)\right)\right] \leq \eps.
\end{equation*}
In particular, this means \eqref{eqn:naive-conditional-L-size} is at most $2\eps$.
This implies \NaiveActivizer, with $\alg_{p}$ as its argument, achieves a label complexity $\Lambda_{a}$ such that
\begin{equation*}
\Lambda_{a}(2\eps,f,\Px) \leq \max\Big\{12 \ln(6/\eps), 1+L^{-1}\left(\Lambda_p(\eps,f,\Px);\eps\right)\Big\}.
\end{equation*}

Since $\Lambda_{p}(\eps,f,\Px) = \omega(\log(1/\eps)) \Rightarrow 12 \ln(6/\eps) = o(\Lambda_{p}(\eps,f,\Px))$,
it remains only to show that $L^{-1}\left(\Lambda_{p}(\eps,f,\Px);\eps\right) = o(\Lambda_{p}(\eps,f,\Px))$.
Note that $\forall \eps \in (0,1)$, $L(1;\eps) = 0$ and $L(n;\eps)$ is diverging in $n$.
Furthermore, by the assumption $\Px(\partial_{\C,\Px} f) = 0$, we know that for any $N(\eps) = \omega(\log(1/\eps))$,
we have $\Delta_{N(\eps)}(\eps) = o(1)$ (by continuity of probability measures),
which implies $L(N(\eps);\eps) = \omega(N(\eps))$.  Thus, since $\Lambda_{p}(\eps,f,\Px) = \omega(\log(1/\eps))$,
Lemma~\ref{lem:inverse-little-o} implies $L^{-1}\left(\Lambda_{p}(\eps,f,\Px);\eps\right) = o\left(\Lambda_{p}(\eps,f,\Px)\right)$, as desired.
\end{proof}

\begin{lemma}
\label{lem:naive-no-improvements}
For any VC class $\C$, target function $f \in \C$, and distribution $\Px$,
if $\Px(\partial_{\C,\Px} f) > 0$, then there exists a passive
learning algorithm $\alg_p$ achieving a label complexity $\Lambda_p$
such that $(f,\Px) \in \Nontrivial(\Lambda_{p})$,
and for any label complexity $\Lambda_a$ achieved by
running \NaiveActivizer~with $\alg_p$ as its argument,
and any constant $c \in (0,\infty)$,
\begin{equation*}
\Lambda_a(c \eps,f,\Px) \neq o(\Lambda_p(\eps,f,\Px)).
\end{equation*}
\upthmend{-1.25cm}
\end{lemma}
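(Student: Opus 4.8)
The plan is to combine two ingredients: a structural fact that, because $\Px(\partial_{\C,\Px} f) > 0$, the sample $\L$ that \NaiveActivizer~feeds to $\alg_p$ has size only $O(n)$ with overwhelming probability (so the activized algorithm is, up to a constant factor in the number of examples used by $\alg_p$, no better than running $\alg_p$ passively); and an explicit, deliberately crippled $\alg_p$ whose label complexity for this particular $(f,\Px)$ is $\Theta(1/\eps)$ — a quantity invariant under constant-factor rescaling of $\eps$ — so that a constant-factor gain in the number of examples cannot turn it into an $o(\cdot)$ improvement.

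First I would bound $|\L|$. Let $w = \Px(\partial_{\C,\Px} f) > 0$. I claim that, on an event $G_n$ (the intersection of $J_n$ from Lemma~\ref{lem:naive-estimator} with a Markov-type event) of probability at least $1 - 1/n^2 - 2e^{-n/4}$, the version space $V$ computed by \NaiveActivizer~has $\Px(\DIS(V)) \ge (3/4)w$, whence $\hat{\Delta} = \hat{P}_n(\DIS(V)) \ge (3/4)w$ by Lemma~\ref{lem:naive-estimator} and therefore $|\L| = \lfloor n/(4\hat{\Delta})\rfloor \le n/(3w)$. To get the lower bound on $\Px(\DIS(V))$, I would use that a finite correctly-labeled sample cannot prune enough classifiers from near $f$ to shrink $\DIS(V)$ much below $\partial_{\C,\Px} f$: since $\partial_{\C,\Px} f = \bigcap_{r>0}\DIS(\Ball(f,r))$, for each $x \in \partial_{\C,\Px} f$ and each $r > 0$ there is $h_{x,r} \in \Ball(f,r)$ with $h_{x,r}(x) \ne f(x)$; as $\Px(x' : h_{x,r}(x') \ne f(x')) \le r$, a union bound gives that $h_{x,r}$ fails to lie in $V = \C[\Data_{\lfloor n/2\rfloor}]$ with probability at most $\lfloor n/2\rfloor r$, and whenever $h_{x,r} \in V$ we have $x \in \DIS(V)$. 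Integrating over $x \in \partial_{\C,\Px} f$ (Fubini), applying Markov's inequality, and taking $r = r(n)$ small enough yields $\Px(\partial_{\C,\Px} f \setminus \DIS(V)) \le w/4$ except with probability $1/n^2$. (The one technical point is measurability of $x \mapsto h_{x,r}$, which I would treat under the paper's standing convention on measurability.)

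Next I would construct $\alg_p$. Let $f'$ be the classifier equal to $-f$ on $\partial_{\C,\Px} f$ and to $f$ elsewhere, so that $\Px(x : f'(x) \ne f(x)) = w$, and define the randomized passive algorithm $\alg_p$ that ignores the contents of its input $\L$ and returns $f'$ with probability $\min\{1, 1/(w\,|\L|)\}$, and returns $f$ otherwise (returning $f'$ when $|\L| = 0$). Then $\E[\er(\alg_p(\Data_m))] = w \cdot \min\{1, 1/(wm)\} = \min\{w, 1/m\}$ for every $m$, so $\alg_p$ achieves the label complexity $\Lambda_p(\eps, f, \Px) = \lceil \max\{1/\eps, 1/w\}\rceil$; this is finite for all $\eps$ and satisfies $\Lambda_p(\eps,f,\Px) \ge 1/\eps = \omega(\log^{k}(1/\eps))$ for every $k$, hence $(f,\Px) \in \Nontrivial(\Lambda_p)$.

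Finally I would conclude. Because $\alg_p$ depends on $\L$ only through $|\L|$, the output $\hat{h}_n$ of \NaiveActivizer~run with $\alg_p$ satisfies $\E[\er(\hat{h}_n)] = \E[\min\{w, 1/|\L|\}] \ge \min\{w, 3w/n\}\cdot \P(G_n) \ge w/n$ for all sufficiently large $n$ (using $|\L| \le n/(3w)$ on $G_n$). Thus if $\Lambda_a$ is any label complexity achieved by this active learner and $c \in (0,\infty)$, then for $\eps$ small enough, putting $n = \Lambda_a(c\eps,f,\Px)$ — which tends to $\infty$ as $\eps \to 0$, since otherwise the expected error would not vanish — forces $c\eps \ge \E[\er(\hat{h}_n)] \ge w/n$, i.e. $\Lambda_a(c\eps,f,\Px) \ge w/(c\eps)$; dividing by $\Lambda_p(\eps,f,\Px) \le 2/\eps$ (valid for small $\eps$) gives $\Lambda_a(c\eps,f,\Px)/\Lambda_p(\eps,f,\Px) \ge w/(2c) > 0$, so $\Lambda_a(c\eps,f,\Px) \ne o(\Lambda_p(\eps,f,\Px))$, as required. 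The main obstacle is the high-probability lower bound on $\Px(\DIS(V))$ in the second step: that is the only place where the hypothesis $\Px(\partial_{\C,\Px} f) > 0$ enters in an essential way, and it requires the selection argument above plus a routine measurability caveat; the remaining steps are an easy explicit construction and bookkeeping.
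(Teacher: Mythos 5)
Your proof is correct, and its high-level architecture matches the paper's: show $|\L| = O(n)$ whenever $\Px(\partial_{\C,\Px} f) > 0$, then exhibit a specific $\alg_p$ with $\Lambda_p(\eps,f,\Px) = \Theta(1/\eps)$ and argue that a constant-factor gain in $|\L|$ relative to $n$ cannot yield an asymptotic improvement. But you diverge from the paper in both ingredients, in ways worth noting. For the structural step, the paper simply invokes Lemma~\ref{lem:Vshat-to-Boundaries} to conclude $\Px(\DIS(V)) \geq \Px(\partial_{\C} f)$ on a probability-one event $H^{\prime}$, whence $|\L| \leq n/(4\Px(\partial_\C f))$ on $H^{\prime} \cap J_n$; you instead re-derive a slightly weaker, high-probability version ($\Px(\DIS(V)) \geq \tfrac{3}{4}w$ except with probability $1/n^2$) via a direct union-bound/Fubini/Markov argument, at the cost of a parameter $r(n)$ and the measurability caveat you flag — which is essentially the same caveat the paper's Lemma~\ref{lem:Vshat-to-Boundaries} proof also carries under the standing measurability convention. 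Either version suffices; the paper's is cleaner because the lemma is already proved for other purposes, but yours is self-contained. For the adversarial $\alg_p$, the paper explicitly remarks that a randomized construction like yours (``returning $-f$ with probability $1/|\L|$, and otherwise $f$'') would work, but chooses a \emph{deterministic} one instead: it fixes a sequence $\{h_i\}$ of classifiers with $\er(h_i) = \eps_i$ strictly decreasing to $0$, and lets $\alg_p(S)$ return the $h_i$ with the smallest error $\geq 1/|S|$, so that $\er(\alg_p(\L)) \geq 1/|\L|$ deterministically and $\Lambda_p(\eps,f,\Px) = \lceil 1/\eps_i\rceil$ on $[\eps_i, \eps_{i-1})$. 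The deterministic construction establishes a marginally stronger claim (no internal randomness needed in $\alg_p$) and makes the Markov-inequality step in the conclusion a bit more transparent, but your randomized $\alg_p$ is perfectly valid under the paper's definitions, which explicitly allow randomized passive algorithms. Your final bookkeeping ($\E[\er(\hat{h}_n)] \geq w/n$ for large $n$, hence $\Lambda_a(c\eps,f,\Px) \geq w/(c\eps)$ for small $\eps$, hence the ratio $\Lambda_a(c\eps,f,\Px)/\Lambda_p(\eps,f,\Px)$ stays bounded away from $0$) is sound, though as written the justification that $\Lambda_a(c\eps,f,\Px) \to \infty$ is slightly circular; replacing it with the observation that $\max\{\Lambda_a(c\eps,f,\Px), n_0\} \geq w/(c\eps)$, which forces $\Lambda_a(c\eps,f,\Px) \geq w/(c\eps)$ once $\eps < w/(c n_0)$, would close that small gap.
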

\begin{proof}
The proof can be broken down into three essential claims.
First, it follows from Lemma~\ref{lem:Vshat-to-Boundaries} below
that, on an event $H^{\prime}$ of probability one,
$\Px(\partial_{V} f) \geq \Px(\partial_{\C} f)$;
since $\Px(\DIS(V)) \geq \Px(\partial_{V} f)$, we have
$\Px(\DIS(V)) \geq \Px(\partial_{\C} f)$ on $H^{\prime}$.

The second claim is that on $H^{\prime} \cap J_n$, $|\L| = O(n)$.
This follows from Lemma~\ref{lem:naive-estimator} and our first claim by noting that, on $H^{\prime} \cap J_n$,
$|\L| = \left\lfloor  n / (4 \hat{\Delta}) \right\rfloor \leq n / (4 \Px(\DIS(V))) \leq n / (4 \Px(\partial_{\C} f))$.

Finally, we construct a passive algorithm $\alg_p$ whose label complexity
is not significantly improved when $|\L| = O(n)$.
There is a fairly obvious randomized $\alg_p$ with this property
(simply returning $-f$ with probability $1/|\L|$, and otherwise $f$); however,
we can even satisfy the property with a deterministic $\alg_p$, as follows.
Let $\H_f = \{h_i\}_{i=1}^{\infty}$ be any sequence of classifiers
(not necessarily in $\C$) with $0 < \Px(x : h_i(x) \neq f(x))$ strictly decreasing to $0$,
(say with $h_1 = - f$).
We know such a sequence must exist since $\Px(\partial_{\C} f) > 0$.
Now define, for nonempty $S$,
\begin{equation*}
\alg_p(S) = \argmin\limits_{h_i \in \H_f} \Px(x : h_i(x) \neq f(x)) + 2\ind_{[0,1/|S|)}(\Px(x : h_i(x) \neq f(x))).
\end{equation*}
$\alg_p$ is constructed so that, in the special case that this particular $f$ is the target function
and this particular $\Px$ is the data distribution,
$\alg_p(S)$ returns the $h_i \in \H_f$ with minimal $\er(h_i)$ such that $\er(h_i) \geq 1/|S|$.
For completeness, let $\alg_p(\emptyset) = h_1$.
Define $\eps_i = \er(h_i) = \Px(x : h_i(x) \neq f(x))$.

Now let $\hat{h}_n$ be the returned classifier from running \NaiveActivizer~with $\alg_p$ and $n$ as inputs,
let $\Lambda_p$ be the (minimal) label complexity achieved by $\alg_p$,
and let $\Lambda_a$ be the (minimal) label complexity achieved by \NaiveActivizer~with $\alg_p$ as input.
Take any $c \in (0,\infty)$, and $i$ sufficiently large so that $\eps_{i-1} < 1/2$.
Then we know that for any $\eps \in [\eps_{i},\eps_{i-1})$, $\Lambda_p(\eps,f,\Px) = \lceil 1 / \eps_{i} \rceil$.
In particular, $\Lambda_{p}(\eps,f,\Px) \geq 1/\eps$, so that $(f,\Px) \in \Nontrivial(\Lambda_{p})$.
Also, by Markov's inequality and the above results on $|\L|$,
\begin{align*}
\E[\er(\hat{h}_n)] \geq \E\left[ \frac{1}{|\L|} \right] &\geq \frac{ 4 \Px(\partial_{\C} f) }{n} \P\left( \frac{1}{|\L|} > \frac{4 \Px(\partial_{\C} f)}{n}\right)
\\ &\geq \frac{4\Px(\partial_{\C} f) }{n} \P(H^{\prime} \cap J_n) \geq \frac{4 \Px(\partial_{\C} f)}{n} \left( 1 - 2 \cdot \exp\{-n/4\}\right).
\end{align*}
This implies that for $4 \ln(4) < n < \frac{2\Px(\partial_{\C} f)}{c \eps_i}$, we have $\E\left[\er(\hat{h}_n)\right] > c \eps_i$,
so that for all sufficiently large $i$,
\begin{equation*}
\Lambda_a(c \eps_i, f,\Px) \geq \frac{2\Px(\partial_{\C} f)}{c \eps_i} \geq \frac{\Px(\partial_{\C} f)}{c} \left\lceil \frac{1}{\eps_i} \right\rceil = \frac{\Px(\partial_{\C} f)}{c} \Lambda_p(\eps_i,f,\Px).
\end{equation*}
Since this happens for all sufficiently large $i$, and thus for arbitrarily small $\eps_i$ values,
we have
\begin{equation*}
\Lambda_a(c \eps, f, \Px) \neq o\left(\Lambda_p(\eps,f,\Px)\right).
\end{equation*}
\end{proof}

\begin{proof}[Theorem~\ref{thm:naive}]
Theorem~\ref{thm:naive} now follows directly from Lemmas~\ref{lem:naive-improvements} and \ref{lem:naive-no-improvements},
corresponding to the ``if'' and ``only if'' parts of the claim, respectively.
\end{proof}

\section{Proofs Related to Section~\ref{sec:activizer}: Basic Activizer}
\label{app:activizer}

In this section, we provide detailed definitions, lemmas and proofs related to \BasicActivizer.

In fact, we will develop slightly more general results here.
Specifically, we fix an arbitrary constant $\gamma \in (0,1)$,
and will prove the result for a
family of meta-algorithms parameterized by the value $\gamma$,
used as the threshold in Steps 3 and 6 of \BasicActivizer, which were set to $1/2$ above to
simplify the algorithm.  Thus, setting $\gamma = 1/2$ in the statements
below will give the stated theorem.

Throughout this section, we will assume $\C$ is a VC class with VC
dimension $\vc$, and let $\Px$ denote the (arbitrary) marginal
distribution of $X_i$ ($\forall i$).
We also fix an arbitrary classifier $f \in \cl(\C)$, where (as in Section~\ref{sec:agnostic})
$\cl(\C) = \{h : \forall r > 0, \Ball(h,r) \neq \emptyset\}$ denotes the closure of $\C$.
In the present context, $f$ corresponds to the target function when running \BasicActivizer.
Thus, we will study the behavior of \BasicActivizer~for this fixed $f$ and $\Px$;
since they are chosen arbitrarily, to establish Theorem~\ref{thm:activizer} it will suffice to
prove that for any passive $\alg_p$, \BasicActivizer~with
$\alg_p$ as input achieves superior label complexity compared to $\alg_p$ for this $f$ and $\Px$.
In fact, because here we only assume $f \in \cl(\C)$ (rather than $f \in \C$), we actually
end up proving a slightly more general version of Theorem~\ref{thm:activizer}.  But more importantly,
this relaxation to $\cl(\C)$ will also make the lemmas developed below
more useful for subsequent proofs: namely, those in Appendix~\ref{app:robust-tsybakov}.
For this same reason, many of the lemmas of this section are substantially more general than is
necessary for the proof of Theorem~\ref{thm:activizer}; the more general versions will be used
in the proofs of results in later sections.

For any $m \in \nats$, we define
$\truV_m = \left\{h \in \C : \forall i \leq m, h(X_i) = f(X_i)\right\}$.
Additionally, for $\H \subseteq \C$, and an integer $k \geq 0$,
we will adopt the notation
\begin{align*}
\S^{k}(\H) &= \left\{S \in \X^{k} : \H \text{ shatters } S\right\},\\
\bar{\S}^{k}(\H) & = \X^{k} \setminus \S^{k}(\H),
\end{align*}
and as in Section~\ref{sec:exponential}, we define the $k$-dimensional shatter core of $f$
with respect to $\H$ (and $\Px$) as
\begin{equation*}
\partial_{\H}^{k} f = \lim\limits_{r \to 0} \S^{k}\left(\Ball_{\H}(f,r)\right),
\end{equation*}
and further define
\begin{equation*}
\bar{\partial}_{\H}^{k} f = \X^{k} \setminus \partial_{\H}^{k} f.
\end{equation*}
Also as in Section~\ref{sec:exponential}, define
\begin{equation*}
\bdim_f = \min\left\{ k \in \nats : \Px^{k}\left( \partial^{k}_{\C} f \right) = 0 \right\}.
\end{equation*}
For convenience, we also define the abbreviation
\begin{equation*}
\dprob = \Px^{\bdim_f-1}\left( \partial_{\C}^{\bdim_f-1} f\right).
\end{equation*}
Also, recall that we are using the convention that $\X^0 = \{\varnothing\}$,
$\Px^{0}(\X^{0}) = 1$, and we say a set of classifiers $\H$ shatters $\varnothing$ iff $\H \neq \{\}$.
In particular, $\S^{0}(\H) \neq \{\}$ iff $\H \neq \{\}$, and $\partial_{\H}^{0} f \neq \{\}$ iff $\inf_{h \in \H} \Px(x : h(x) \neq f(x)) = 0$.
For any measurable sets $S_1,S_2 \subseteq \X^{k}$ with $\Px^{k}(S_2) > 0$,
as usual we define $\Px^{k}(S_1 | S_2) = \Px^{k}(S_1 \cap S_2) / \Px^{k}(S_2)$;
in the situation where $\Px^{k}(S_2) = 0$, it will be convenient to define
$\Px^{k}(S_1 | S_2) = 0$.
We use the definition of $\er(h)$ from above, and additionally define the \emph{conditional} error rate
$\er(h | S) = \Px(\{x : h(x) \neq f(x)\} | S)$ for any measurable $S \subseteq \X$.
We also adopt the usual short-hand for equalities and inequalities involving conditional expectations and probabilities given random variables,
wherein for instance, we write $\E[X|Y] = Z$ to mean that there is a version of $\E[X|Y]$ that is everywhere equal to $Z$,
so that in particular, any version of $\E[X|Y]$ equals $Z$ almost everywhere \citep*[see e.g.,][]{ash:00}.

\subsection{Definition of Estimators for \BasicActivizer}
\label{app:hatP-definitions}

While the estimated probabilities used in \BasicActivizer~can be defined in a variety of ways to make it a universal activizer,
in the statement of Theorem~\ref{thm:activizer} above and proof thereof below, we take the following specific definitions.
After the definition, we discuss alternative possibilities.

Though it is a slight twist on the formal model, it will greatly simplify our discussion
below to suppose we have access to two independent sequences of i.i.d.~unlabeled examples
$W_1 = \{w_1,w_2,\ldots\}$ and $W_2 = \{w_1^{\prime}, w_2^{\prime},\ldots\}$,
also independent from the main sequence $\{X_1,X_2,\ldots\}$, with $w_i, w_i^{\prime} \sim \Px$.
Since the data sequence $\{X_1,X_2,\ldots\}$ is i.i.d., this is
distributionally equivalent to supposing we partition the data sequence
in a preprocessing step, into three subsequences,
alternatingly assigning each data point to either $\Data_{X}^{\prime}$, $W_1$, or $W_2$.
Then, if we suppose $\Data_{X}^{\prime} = \{X_1^{\prime}, X_2^{\prime},\ldots\}$,
and we replace all references to $X_i$ with $X_i^{\prime}$ in the algorithms and
results, we obtain the equivalent statements holding for the model as originally
stated.  Thus, supposing the existence of these $W_i$ sequences simply serves
to simplify notation, and does not represent a further assumption on top of the
previously stated framework.

For each $k \geq 2$, we partition $W_2$ into subsets of size $k-1$, as follows.  For $i \in \nats$, let
\begin{equation*}
S_i^{(k)} = \{w^{\prime}_{ 1 + (i-1)(k-1)}, \ldots, w^{\prime}_{i(k-1)}\}.
\end{equation*}

We define the $\hat{P}_{m}$ estimators in terms of three types of functions, defined below.
For any $\H \subseteq \C$, $x \in \X$, $y \in \{-1,+1\}$, $m \in \nats$, we define
\begin{align}
&\hat{P}_{m}\left(S \in \X^{k-1} : \H \text{ shatters } S \cup \{x\} | \H \text{ shatters } S\right) && = \hat{\Delta}_{m}^{(k)}(x,W_2,\H), \label{eqn:hatPn1}\\
&\hat{P}_{m}\left(S \in \X^{k-1} : \H[(x,-y)] \text{ does not shatter } S | \H \text{ shatters } S\right) && = \hat{\Gamma}_{m}^{(k)}(x,y,W_2,\H), \label{eqn:hatPn2}\\
&\hat{P}_{m}\left(x : \hat{P}\left(S \in \X^{k-1} : \H \text{ shatters } S \cup \{x\} | \H \text{ shatters } S\right) \geq \gamma\right) && = \hat{\Delta}_{m}^{(k)}(W_1,W_2,\H). \label{eqn:hatPn3}
\end{align}
The quantities $\hat{\Delta}_{m}^{(k)}(x,W_2,\H)$, $\hat{\Gamma}_{m}^{(k)}(x,y,W_2,\H)$, and $\hat{\Delta}_{m}^{(k)}(W_1,W_2,\H)$ are specified as follows.

For $k = 1$, $\hat{\Gamma}_{m}^{(1)}(x,y,W_2,\H)$
is simply an indicator for whether every $h \in \H$ has $h(x) = y$, while
$\hat{\Delta}_{m}^{(1)}(x,W_2,\H)$ is an indicator for whether $x \in \DIS(\H)$.
Formally, they are defined as follows.
\begin{align*}
\hat{\Gamma}_{m}^{(1)}(x,y,W_2, \H) & = \ind_{\bigcap\limits_{h \in \H} \{h(x)\}}(y).\\
\hat{\Delta}_{m}^{(1)}(x,W_2, \H) & = \ind_{\DIS(\H)}(x).
\end{align*}
For $k \geq 2$, we first define
\begin{equation*}
M_m^{(k)}(\H) = \max\left\{ 1, \sum_{i=1}^{\Msize{m}} \ind_{\S^{k-1}(\H)}\left(S_i^{(k)}\right)\right\}.
\end{equation*}
Then we take the following definitions for $\hat{\Gamma}^{(k)}$ and $\hat{\Delta}^{(k)}$.
\begin{align}
\hat{\Gamma}_{m}^{(k)}(x,y,W_2,\H) & = \frac{1}{M_{m}^{(k)}(\H)} \sum\limits_{i=1}^{\Msize{m}} \ind_{\bar{\S}^{k-1}\left(\H[(x,-y)]\right)}\left(S^{(k)}_i\right) \ind_{\S^{k-1}(\H)}\left(S_i^{(k)}\right). \label{eqn:gamma-defn}\\
\hat{\Delta}_{m}^{(k)}(x,W_2,\H) & = \frac{1}{M_{m}^{(k)}(\H)} \sum_{i=1}^{\Msize{m}} \ind_{\S^{k}(\H)}\left( S_i^{(k)} \cup \{x\} \right). \label{eqn:hat-delta-defn}
\end{align}

For the remaining estimator, for any $k$ we generally define
\begin{equation*}
\hat{\Delta}_{m}^{(k)}(W_1,W_2,\H) = \frac{2}{m} + \frac{1}{m^3} \sum\limits_{i=1}^{m^3} \ind_{[\gamma/4,\infty)}\left(\hat{\Delta}_{m}^{(k)}(w_i, W_2, \H) \right).
\end{equation*}

The above definitions will be used in the proofs below.
However, there are certainly viable alternative definitions one can consider, some of which
may have interesting theoretical properties.  In general, one has the same sorts of trade-offs
present whenever estimating a conditional probability.  For instance, we could
replace ``$\Msize{m}$'' in \eqref{eqn:gamma-defn} and \eqref{eqn:hat-delta-defn} by
$\min\left\{ \ell \in \nats : M_{\ell}^{(k)}(\H) = \Msize{m}\right\}$, and then normalize by $\Msize{m}$
instead of $M_{m}^{(k)}(\H)$; this would give us $\Msize{m}$ samples
from the conditional distribution with which to estimate the conditional probability.
The advantages of this approach would be its simplicity or elegance, and possibly some
improvement in the constant factors in the label complexity bounds below.  On the other
hand, the drawback of this alternative definition would be that we do not know a priori
how many unlabeled samples we will need to process in order to calculate it; indeed, for some values of $k$ and $\H$,
we expect $\Px^{k-1}\left( \S^{k-1}(\H) \right) = 0$, so that $M_{\ell}^{(k)}(\H)$ is bounded, and
we might technically need to examine the entire sequence to distinguish this case
from the case of very small $\Px^{k-1}\left( \S^{k-1}(\H) \right)$.  Of course, these practical
issues can be addressed with small modifications, but only at the expense of complicating
the analysis, thus losing the elegance factor.  For these reasons, we have opted for the
slightly looser and less elegant, but more practical, definitions above in \eqref{eqn:gamma-defn}
and \eqref{eqn:hat-delta-defn}.

\subsection{Proof of Theorem~\ref{thm:activizer}}
\label{app:activizer-proof}

At a high level, the structure of the proof is the following.
The primary components of the proof are three lemmas:
\ref{lem:active-select}, \ref{lem:good-labels}, and \ref{lem:label-everything}.
Setting aside, for a moment, the fact that we are using the $\hat{P}_{m}$ estimators
rather than the actual probability values they estimate,
Lemma~\ref{lem:label-everything} indicates that the number of data points
in $\L_{\bdim_f}$ grows superlinearly in $n$ (the number of label requests), while
Lemma~\ref{lem:good-labels} guarantees that the labels of these points are correct,
and Lemma~\ref{lem:active-select} tells us that the classifier returned in the end
is never much worse than $\alg_{p}(\L_{\bdim_f})$.  These three factors combine
to prove the result.  The rest of the proof is composed of supporting lemmas
and details regarding the $\hat{P}_{m}$ estimators.
Specifically, Lemmas~\ref{lem:Vshat-to-Boundaries} and
\ref{lem:converging-conditional} serve a supporting role, with the purpose of showing
that the set of $V$-shatterable $k$-tuples converges to the $k$-dimensional
shatter core (up to probability-zero differences).
The other lemmas below (\ref{lem:basic-Mk-lower-bound} -- \ref{lem:empirical-works-too})
are needed primarily to extend the above basic idea
to the actual scenario where the $\hat{P}_{m}$ estimators are used as surrogates
for the probability values.
Additionally, a sub-case of Lemma~\ref{lem:empirical-works-too} is needed in order to
guarantee the label request budget will not be reached prematurely.
Again, in many cases we prove a more general lemma than is required for its use
in the proof of Theorem~\ref{thm:activizer};
these more general results will be needed in subsequent proofs:
namely, in the proofs of Theorem~\ref{thm:sequential-activizer} and Lemma~\ref{lem:robust-tsybakov}.

We begin with a lemma concerning the $\ActiveSelect$ subroutine.

\begin{lemma}
\label{lem:active-select}
For any $k^{*},M,N \in \nats$ with $k^{*} \leq N$, 
and $N$ classifiers $\{h_1,h_2,\ldots,h_N\}$ (themselves possibly random variables, independent from $\{X_{M},X_{M+1},\ldots\}$),
%the procedure 
$\ActiveSelect(\{h_1,h_2,\ldots,h_N\},$ $m,$ $\{X_{M},X_{M+1},\ldots\})$ makes at most $m$ label requests,
and if $h_{\hat{k}}$ is the classifier it outputs, then
with probability at least $1- e N \cdot \exp\left\{-m / \left( 72 k^{*} N \ln(eN)\right)\right\}$,
we have $\er(h_{\hat{k}}) \leq 2 \er(h_{k^{*}})$.
\thmend\end{lemma}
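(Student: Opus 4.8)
\textbf{Proof plan for Lemma~\ref{lem:active-select}.}

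The plan is to analyze the $\ActiveSelect$ subroutine directly, tracking the label requests and the pairwise comparison outcomes. First I would bound the number of label requests: the subroutine iterates over pairs $j < k$ in $\{1,\ldots,N\}$, and for each it requests the labels of at most $\lfloor m / (j(N-j)\ln(eN)) \rfloor$ points. Summing over all pairs, $\sum_{j=1}^{N-1} (N-j) \cdot \lfloor m/(j(N-j)\ln(eN))\rfloor \leq \sum_{j=1}^{N-1} m/(j\ln(eN)) \leq m \cdot H_{N-1}/\ln(eN) \leq m$, using $H_{N-1} = \sum_{j=1}^{N-1} 1/j \leq \ln(eN)$ (since $H_{N-1} \leq 1 + \ln(N-1) \leq \ln(eN)$). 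This handles the easy part of the claim.

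The core of the argument is the error-rate guarantee. The key quantities are the empirical error rates $m_{kj} = \er_{Q_{jk}}(h_k)$ computed on the sample $R_{jk}$ of points where $h_j$ and $h_k$ disagree. I would set up the probabilistic analysis as follows. Condition on the classifiers $\{h_1,\ldots,h_N\}$ (which are independent of $\{X_M, X_{M+1},\ldots\}$). For any pair $j<k$, on the event $\{h_j(x) \neq h_k(x)\}$, exactly one of $h_j, h_k$ agrees with $f(x)$, so $\er_{Q_{jk}}(h_k)$ is a binomial-type average whose conditional expectation equals $\er(h_k \mid \{x : h_j(x)\neq h_k(x)\})$, the conditional probability that $h_k$ errs given that the two classifiers disagree. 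I would use a Chernoff/Hoeffding bound on each $m_{kj}$: with the sample size $\lfloor m/(j(N-j)\ln(eN)) \rfloor$, the empirical average concentrates around its mean to within a fixed additive constant (say $1/12$), except on an event of probability at most $\exp\{-c \cdot m/(k^{*} N \ln(eN))\}$ for an appropriate constant $c$ (here I use that $j(N-j) \leq k^{*} N$ is not quite right in general, so more carefully I would bound the relevant sample sizes from below by $m/(N^2 \ln(eN))$ or, when comparing against $k^{*}$ specifically, note the sample sizes for pairs involving $k^{*}$ are at least $m/(k^{*}N\ln(eN))$). A union bound over the at most $\binom{N}{2} < N^2/2 < eN$-ish many relevant comparisons — more precisely over the $N$ values of $k$ and their comparisons, giving the stated prefactor $eN$ — yields that, off an event of probability at most $eN\exp\{-m/(72 k^{*} N \ln(eN))\}$, all the relevant $m_{kj}$ are within the additive tolerance of their means.

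On that good event I would then argue two things. (i) The classifier $h_{k^{*}}$ itself passes its test, i.e., $\max_{j < k^{*}} m_{k^{*}j} \leq 7/12$: for each $j < k^{*}$, the conditional error $\er(h_{k^{*}} \mid \{h_j \neq h_{k^{*}}\})$ is at most $\frac{1}{2}$ whenever $\er(h_j) \geq \er(h_{k^{*}})$ and at most... — actually the clean bound is that $\er(h_{k^{*}} \mid \{h_j\neq h_{k^{*}}\}) \leq \er(h_{k^{*}})/\Px(h_j \neq h_{k^{*}})$, which combined with $\Px(h_j\neq h_{k^{*}}) \geq \er(h_j) - \er(h_{k^{*}})$ gives control; the upshot, after adding the $1/12$ estimation slack, is $\leq 7/12$ provided the estimation is accurate, so $\hat k \geq k^{*}$. (ii) For the selected index $\hat k \geq k^{*}$, since $m_{\hat k k^{*}} \leq 7/12$, the mean conditional error $\er(h_{\hat k}\mid\{h_{\hat k}\neq h_{k^{*}}\}) \leq 7/12 + 1/12 = 2/3$, hence $\er(h_{k^{*}}\mid\{h_{\hat k}\neq h_{k^{*}}\}) \geq 1/3$, and therefore $\er(h_{\hat k}) \leq \er(h_{\hat k}\mid\{h_{\hat k}\neq h_{k^{*}}\})\cdot\Px(h_{\hat k}\neq h_{k^{*}}) + \er(h_{\hat k} \text{ on agreement}) \leq 2 \er(h_{k^{*}}\mid\{h_{\hat k}\neq h_{k^{*}}\})\cdot\Px(h_{\hat k}\neq h_{k^{*}}) + \er(h_{k^{*}} \text{ on agreement}) \leq 2\er(h_{k^{*}})$, using that on the agreement region $h_{\hat k}$ and $h_{k^{*}}$ make identical errors. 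Taking expectation over the conditioning on $\{h_i\}$ preserves the probability bound, giving the claim.

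\textbf{Main obstacle.} The delicate step is (i)--(ii): getting the constants $7/12$, $2/3$, and the factor $2$ to line up correctly while carrying the $1/12$-type estimation slack, and in particular verifying that $h_{k^{*}}$ reliably passes its own test and that whichever $\hat k$ is selected cannot have error much more than twice $\er(h_{k^{*}})$. I also need to be careful that the relevant sample sizes (those for comparisons involving $k^{*}$, and those determining $\hat k$) are large enough that the exponent $m/(72 k^{*} N \ln(eN))$ comes out as stated — this is where the asymmetric allocation $\lfloor m/(j(N-j)\ln(eN))\rfloor$ matters, since it gives more samples to comparisons with small $j$, and one must check $k^{*}$ appears with enough budget. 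Degenerate cases where $\Px(h_j \neq h_k) = 0$ (so $R_{jk}$ is empty or never filled) also need a remark, but these are handled by the convention in the subroutine statement and do not affect the bound.
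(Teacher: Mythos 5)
There is a genuine gap in your proposed argument: step (i) is not correct as stated, because $h_{k^*}$ is \emph{not} guaranteed to pass its own test. If some $j < k^*$ has $\er(h_j) < \er(h_{k^*})$, then on the disagreement region $D = \{x : h_j(x) \neq h_{k^*}(x)\}$ we have (in the realizable case, where on $D$ exactly one of $h_j, h_{k^*}$ is correct) $\er(h_{k^*} \mid D) = 1 - \er(h_j \mid D) > 1/2$, and in fact $\er(h_{k^*} \mid D)$ can be arbitrarily close to $1$ — e.g.\ $\er(h_j) = 0.01$, $\er(h_{k^*}) = 0.5$, $\Px(D) = 0.5$ gives $\er(h_{k^*}\mid D) \approx 0.99$. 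In that case $m_{k^* j}$ concentrates well above $7/12$, $h_{k^*}$ fails, and $\hat{k} < k^*$. Your ``clean bound'' $\er(h_{k^*}\mid D) \leq \er(h_{k^*}) / \Px(D)$ combined with $\Px(D) \geq \er(h_j) - \er(h_{k^*})$ is vacuous in exactly this regime (the right-hand side of the second inequality is negative), so the intended rescue does not go through. Since step (ii) is then being applied with the wrong reference index, your argument does not close.

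The paper resolves this by not comparing against $k^*$ directly. It introduces $k^{**} := \argmin_{k \in \{1,\ldots,k^*\}} \er(h_k)$. By construction $\er(h_{k^{**}}) \leq \er(h_j)$ for every $j < k^{**}$, which (by the agreement-region cancellation you already invoke) gives $\er(h_{k^{**}} \mid \{h_j \neq h_{k^{**}}\}) \leq 1/2$ \emph{unconditionally}, so $h_{k^{**}}$ passes all of its tests with high probability, yielding $\hat{k} \geq k^{**}$. Then, for any $j > k^{**}$ with $\er(h_j) > 2\er(h_{k^{**}})$, the same cancellation gives $\er(h_j \mid \{h_j \neq h_{k^{**}}\}) > 2/3$, so with high probability $m_{j k^{**}} > 7/12$ and such a $j$ cannot be selected. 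Combining, the returned $\hat{k}$ has $\er(h_{\hat{k}}) \leq 2\er(h_{k^{**}}) \leq 2\er(h_{k^*})$, which is the claimed guarantee; note $\hat{k} \geq k^*$ is neither required nor true in general. The union bound then runs over the $k^{**}-1$ type-(i) comparisons using sample size $M_j$ (with $j(N-j) < k^* N$ since $j < k^{**} \leq k^*$) plus the $N - k^{**}$ type-(ii) comparisons using sample size $M_{k^{**}}$ (with $k^{**}(N-k^{**}) < k^* N$), each contributing at most $\exp\{1 - m/(72 k^* N \ln(eN))\}$, giving the prefactor $eN$. Your bound on the total number of label requests and your use of Hoeffding's inequality on the conditional error rates are both fine; the missing ingredient is precisely the switch from $k^*$ to $k^{**}$.
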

\begin{proof}
This proof is essentially identical to a similar result of \citet*{hanneke:10a},
but is included here for completeness.

Let $M_{k} = \left\lfloor \frac{m}{k (N - k) \ln(eN)}\right\rfloor$.
First note that the total number of label requests in
$\ActiveSelect$ is at most $m$, since summing up the sizes
of the batches of label requests made in all executions of Step 2
yields 
\begin{equation*}
\sum_{j=1}^{N-1} \sum_{k=j+1}^{N} \left\lfloor \frac{m}{j (N-j) \ln(eN)} \right\rfloor 
%\leq \sum_{j=1}^{N-1} \sum_{k=j+1}^{N} \frac{m}{j (N-j) \ln(eN)}
\leq \sum_{j=1}^{N-1} \frac{m}{j \ln(eN)}
\leq m.
\end{equation*}

Let $k^{**} = \argmin_{k \in \{1,\ldots,k^{*}\}} \er(h_k)$.
Now for any $j \in \{1,2,\ldots,k^{**}-1\}$ with $\Px(x : h_j(x)\neq h_{k^{**}}(x))>0$,
the law of large numbers implies that with probability one we will find
at least $M_{j}$ examples
remaining in the sequence for which $h_j(x) \neq h_{k^{**}}(x)$,
and since
$\er(h_{k^{**}}|\{x : h_j(x)\neq h_{k^{**}}(x)\}) \leq 1/2$,
Hoeffding's inequality implies that
$\P\left(m_{k^{**} j} > 7/12\right) \leq \exp\left\{- M_{j} / 72 \right\} \leq \exp\left\{1 - m / \left(72 k^{*} N \ln(eN)\right)\right\}$.
A union bound implies
\begin{equation*}
\P\left(\max_{j < k^{**}} m_{k^{**} j} > 7/12 \right)\leq k^{**} \cdot \exp\left\{1 - m / \left(72 k^{*} N \ln(eN)\right)\right\}.
\end{equation*}
In particular, note that when $\max_{j < k^{**}} m_{k^{**} j} \leq 7 / 12$, we must have $\hat{k} \geq k^{**}$.

Now suppose $j \in \{k^{**}+1,\ldots,N\}$ has $\er(h_j) > 2 \er(h_{k^{**}})$.
In particular, this implies
$\er(h_j | \{x : h_{k^{**}}(x)\neq h_{j}(x)\}) > 2/3$
and 
$\Px(x : h_j(x) \neq h_{k^{**}}(x)) > 0$,
which again means (with probability one) we will find at least $M_{k^{**}}$ examples in the sequence for which $h_{j}(x) \neq h_{k^{**}}(x)$.
By Hoeffding's inequality, we have that
\begin{equation*}
\P\left(m_{j k^{**}} \leq 7/12 \right) \leq \exp\left\{- M_{k^{**}} / 72\right\} \leq \exp\left\{ 1 - m / \left(72 k^{*} N \ln(eN)\right)\right\}.
\end{equation*}
By a union bound, we have that
\begin{multline*}
\P\left(\exists j > k^{**} : \er(h_j) > 2 \er(h_{k^{**}}) \text{ and } m_{j k^{**}} \leq 7/12\right) 
\\ \leq \left(N-k^{**}\right) \cdot \exp\left\{1 - m/\left(72 k^{*} N \ln(eN) \right)\right\}.
\end{multline*}
In particular, when $\hat{k} \geq k^{**}$, and $m_{j k^{**}} > 7/12$ for all $j > k^{**}$ with $\er(h_j) > 2 \er(h_{k^{**}})$, 
it must be true that $\er(h_{\hat{k}}) \leq 2 \er(h_{k^{**}}) \leq 2 \er(h_{k^*})$.

So, by a union bound, with probability $\geq 1- eN \cdot \exp\left\{-m/\left(72 k^{*} N \ln(eN)\right)\right\}$,
the $\hat{k}$ chosen by $\ActiveSelect$ has $\er(h_{\hat{k}}) \leq 2 \er(h_{k^{*}})$.
\end{proof}

The next two lemmas describe the limiting behavior of $\S^{k}(\truV_m)$.
In particular, we see that its limiting value is precisely $\partial_{\C}^{k} f$ (up to probability-zero differences).
Lemma~\ref{lem:Vshat-to-Boundaries} establishes that $\S^{k}(\truV_m)$ does not decrease below $\partial_{\C}^{k} f$ (except for a probability-zero set),
and Lemma~\ref{lem:converging-conditional} establishes that its limit is not larger than $\partial_{\C}^{k} f$ (again, except for a probability-zero set).

\begin{lemma}
\label{lem:Vshat-to-Boundaries}
There is an event $H^{\prime}$ with $\P(H^{\prime}) = 1$
such that on $H^{\prime}$, $\forall m \in \nats$, $\forall k \in \{0,\ldots,\bdim_f-1\}$,
for any $\H$ with $\truV_m \subseteq \H \subseteq \C$,
\begin{equation*}
\Px^{k}\left( \S^{k}(\H) \Big| \partial^{k}_{\C} f\right)
= \Px^{k}\left( \partial^{k}_{\H} f \Big| \partial^{k}_{\C} f\right) = 1,
\end{equation*}
and
\begin{equation*}
\forall i \in \nats, \ind_{\partial_{\H}^{k} f} \left(S_i^{(k+1)}\right) = \ind_{\partial_{\C}^{k} f}\left(S_i^{(k+1)}\right).
\end{equation*}
Also, on $H^{\prime}$, every such $\H$ has
$\Px^{k}\Big(\partial^{k}_{\H} f\Big) = \Px^{k}\Big(\partial^{k}_{\C} f\Big)$,
and $M_{\ell}^{(k)}(\H) \to \infty$ as $\ell \to \infty$.
\thmend\end{lemma}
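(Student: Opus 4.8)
The plan is to build $H^{\prime}$ as a countable intersection of probability-one events, and to reduce every assertion about a general $\H$ with $\truV_m \subseteq \H \subseteq \C$ to the single extreme case $\H = \truV_m$. Monotonicity of $\Ball_{\cdot}(f,r)$ in the class, together with monotonicity of shattering, gives the sandwiches $\partial^k_{\truV_m} f \subseteq \partial^k_\H f \subseteq \partial^k_\C f$, $\partial^k_\H f \subseteq \S^k(\H)$, and $M^{(k)}_\ell(\truV_m) \le M^{(k)}_\ell(\H)$. Hence it suffices to establish, on one probability-one event: (i) for each $k \le \bdim_f-1$ and each $m$, $\Px^k(\partial^k_\C f \setminus \partial^k_{\truV_m} f) = 0$; (ii) no $S_i^{(k+1)}$ lands in any of these $\Px^k$-null sets; and (iii) $M^{(k)}_\ell(\truV_m) \to \infty$. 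Indeed, combined with the sandwiches and with $\partial^k_\H f \subseteq \partial^k_\C f$, (i) yields the two conditional-probability-one equalities and $\Px^k(\partial^k_\H f) = \Px^k(\partial^k_\C f)$, (ii) yields $\ind_{\partial^k_\H f}(S_i^{(k+1)}) = \ind_{\partial^k_\C f}(S_i^{(k+1)})$, and (iii) yields $M^{(k)}_\ell(\H) \to \infty$.

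The heart of the matter is (i). I would fix a tuple $S = (x_1,\dots,x_k) \in \partial^k_\C f$, show $\P(\forall m,\ S \in \partial^k_{\truV_m} f) = 1$, and then integrate over $S$ with Fubini (invoking the standard Borel-space/implicit-measurability conventions of the paper). For fixed $S$: since $S \in \partial^k_\C f = \bigcap_{r>0}\S^k(\Ball_\C(f,r))$, for each labeling $\vec y \in \{-1,+1\}^k$ we have $\inf\{\er(h) : h \in \C,\ h(x_j) = y_j\ \forall j\} = 0$, so we may fix classifiers $h^{\vec y}_\ell \in \C$ with $h^{\vec y}_\ell(x_j) = y_j$ for all $j$ and $\er(h^{\vec y}_\ell) \le \ell^{-2}$. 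A union bound gives $\P(\exists i \le m : h^{\vec y}_\ell(X_i) \neq f(X_i)) \le m\ell^{-2}$, which is summable in $\ell$, so Borel--Cantelli produces a probability-one event on which, for every $m$, all finitely many sequences $h^{\vec y}_\ell$ lie in $\truV_m$ once $\ell$ is large; for any $r>0$, choosing such an $\ell$ with $\ell^{-2} \le r$ exhibits $\{h^{\vec y}_\ell\}_{\vec y}$ shattering $S$ inside $\Ball_{\truV_m}(f,r)$, i.e.\ $S \in \partial^k_{\truV_m} f$. Since this holds a.s.\ simultaneously for all $m\in\nats$, Fubini gives an event $A$ of probability one on which (i) holds for all $k \le \bdim_f-1$ and all $m$. (A preliminary Borel--Cantelli along a sequence $g_\ell \in \C$ with $\er(g_\ell) \to 0$, which exists because $f \in \cl(\C)$, also ensures $\truV_m \neq \{\}$ for all $m$ on $A$, covering the degenerate index $k=0$ and keeping $\Ball_{\truV_m}(f,r)$ and the $M^{(k)}$ sums nontrivial.)

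For (ii): on $A$ each $\partial^k_\C f \setminus \partial^k_{\truV_m} f$ is $\Px^k$-null and $\truV_m$ is measurable with respect to $\Data_X$, whereas the $S_i^{(k+1)}$ are i.i.d.\ $\Px^k$ built from $W_2$ and thus independent of $\Data_X$; so, conditionally on $\Data_X$, the event that some $S_i^{(k+1)}$ hits one of these countably many null sets has probability zero, giving an event $B \subseteq A$ of probability one on which (ii) holds for all $i,m,k$ and hence (by the sandwich) for all admissible $\H$. For (iii): on $A$ we have (applying (i) with $k-1$ in place of $k$, legitimate since $k-1 < \bdim_f$) $\Px^{k-1}(\S^{k-1}(\truV_m)) \ge \Px^{k-1}(\partial^{k-1}_{\truV_m} f) = \Px^{k-1}(\partial^{k-1}_\C f) > 0$, and the strong law applied to the i.i.d.\ $\Px^{k-1}$-sequence $(S_i^{(k)})_i$ gives, on an event $C \subseteq A$ of probability one, $\frac{1}{\ell^{3}}\sum_{i=1}^{\ell^3} \ind_{\S^{k-1}(\truV_m)}(S_i^{(k)}) \to \Px^{k-1}(\S^{k-1}(\truV_m)) > 0$, so $M^{(k)}_\ell(\truV_m) \to \infty$ (for the trivial boundary index $k=1$ this is immediate from $\truV_m \neq \{\}$). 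Setting $H^{\prime} = B \cap C$ — a probability-one event, being a finite intersection over $k\le\bdim_f\le d+1$ of countable intersections over $m$ — completes the proof.

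The step I expect to be the main obstacle is the per-tuple claim $\P(\forall m,\ S \in \partial^k_{\truV_m} f) = 1$ together with its Fubini upgrade: one must be careful that the approximants $h^{\vec y}_\ell$ are chosen depending only on $S$ and $\vec y$ (not on $m$ or the data), that the Borel--Cantelli estimate survives the intersection over all $m \in \nats$, and that $S \mapsto \P(\forall m,\ S \in \partial^k_{\truV_m} f)$ is measurable so that Fubini applies — which is the one place the standard Borel-space hypothesis is genuinely used. Everything else is routine bookkeeping with monotonicity in the class, independence of $W_1,W_2$ from $\Data_X$, and the strong law of large numbers.
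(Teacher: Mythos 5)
Your proposal is correct and takes essentially the same approach as the paper: fix $S\in\partial^k_{\C}f$, show the per-tuple event $\{S\in\partial^k_{\truV_m}f\}$ has probability one, upgrade via Fubini to an a.s.~statement about the random null sets, handle the $S_i^{(k+1)}$ claim by conditioning on $\Data_X$ and using independence of $W_2$, and get $M^{(k)}_\ell\to\infty$ from the strong law. The only difference is in the mechanics of the per-tuple step — you use Borel--Cantelli with explicit $\ell^{-2}$-approximants for each labeling, whereas the paper bounds $\P\bigl(S\notin\S^{k}(\Ball_{\truV_m}(f,r))\bigr)$ by $\inf_i\sum_j m\,\Px(h^{(i)}_j\ne f)=0$ for each fixed $r$ and then uses monotone/dominated convergence as $r\to 0$ — a minor technical variant that yields the same conclusion.
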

\begin{proof}
We will show the first claim for the set $\truV_m$, and the result will then hold for $\H$ by monotonicity.
In particular, we will show this for any fixed $k \in \{0,\ldots,\bdim_f-1\}$ and $m \in \nats$,
and the existence of $H^\prime$ then holds by a union bound.
Fix any set $S \in \partial^{k}_{\C} f$.
Suppose $\Ball_{\truV_m}(f,r)$ does not shatter $S$ for some $r > 0$.
There is an infinite sequence of sets
$\{\{h^{(i)}_1,h^{(i)}_2,\ldots,h^{(i)}_{2^{k}}\}\}_i$ with $\forall j \leq 2^{k}$, $\Px(x : h_j^{(i)}(x) \neq f(x)) \downarrow 0$,
such that each $\{h^{(i)}_1,\ldots,h^{(i)}_{2^{k}}\} \subseteq \Ball(f,r)$ and shatters $S$.
Since $\Ball_{\truV_m}(f,r)$ does not shatter $S$,
\begin{equation*}
1 = \inf\limits_{i}\ind\left[\exists j : h^{(i)}_j \notin \Ball_{\truV_m}(f,r)\right] =
\inf\limits_{i}\ind\left[\exists j : h^{(i)}_j\left(\Data_{m}\right) \neq f\left(\Data_{m}\right)\right].
\end{equation*}
But
\begin{align*}
\P\left(\inf\limits_i \ind\left[\exists j :  h^{(i)}_j\left(\Data_{m}\right)\neq f\left(\Data_{m}\right)\right] = 1\right)
%& = \E\left[\inf\limits_i \ind\left[\exists j : h^{(i)}_j\left(\Data_{m}\right)\neq f\left(\Data_{m}\right)\right]\right]
& \leq \inf\limits_i \P\left(\exists j :  h^{(i)}_j\left(\Data_{m}\right)\neq f\left(\Data_{m}\right)\right)
%\\ \leq \inf\limits_i \E\left[\ind\left[\exists j : h^{(i)}_j\left(\Data_{m}\right)\neq f\left(\Data_{m}\right)\right]\right]
\\ \leq \lim\limits_{i\rightarrow \infty} \sum\limits_{j \leq 2^{k}} m \Px\left(x : h^{(i)}_j(x)\neq f(x)\right)
& = \sum\limits_{j \leq 2^{k}} m \lim\limits_{i\rightarrow \infty} \Px\left(x : h^{(i)}_j(x)\neq f(x)\right) = 0,
\end{align*}
where the second inequality follows from the union bound.
Therefore, $\forall r > 0$, \\ $\P\left(S \notin \S^{k}\left(\Ball_{\truV_m}(f,r)\right)\right) = 0$.
%(where $\truV_m$ is the random variable in the probability, with distribution induced as a function of $\Data_m$).
Furthermore, since $\bar{\S}^{k}\left(\Ball_{\truV_m}(f,r)\right)$ is monotonic in $r$,
the dominated convergence theorem give us that
\begin{equation*}
\P\left( S \notin \partial^{k}_{\truV_m} f \right)
= \E\left[\lim\limits_{r \to 0}\ind_{\bar{\S}^{k}(\Ball_{\truV_m}(f,r))}(S) \right] = \lim\limits_{r \to 0}\P\left(S \notin \S^{k}\left(\Ball_{\truV_m}(f,r)\right)\right) = 0.
\end{equation*}
This implies that (letting $\mathbf{S} \sim \Px^{k}$ be independent from $\truV_{m}$)
\begin{align*}
\P\left(\Px^{k}\left( \bar{\partial}^{k}_{\truV_m} f \Big| \partial^{k}_{\C} f \right)>0\right)&
= \P\left(\Px^{k}\left( \bar{\partial}^{k}_{\truV_m} f \cap \partial^{k}_{\C} f \right)>0\right)\\
&= \lim_{\xi \to 0} \P\left(\Px^{k}\left( \bar{\partial}^{k}_{\truV_m} f \cap \partial^{k}_{\C} f \right)>\xi\right).\\
& \leq \lim_{\xi \to 0} \frac{1}{\xi}\E\left[\Px^{k}\left(\bar{\partial}^{k}_{\truV_m} f \cap \partial^{k}_{\C} f  \right)\right] ~~~~~~~~~~~~~\text{ (Markov)}\\
& = \lim_{\xi \to 0} \frac{1}{\xi}\E\left[\ind_{\partial^{k}_{\C} f} ( {\bf S} ) \P\left({\bf S} \notin \partial^{k}_{\truV_m} f \Big| \mathbf{S}\right)\right] ~~~~\text{ (Fubini)}\\
& = \lim_{\xi \to 0} 0 = 0.
\end{align*}
This establishes the first claim for $\truV_m$, on an event of probability $1$,
and monotonicity extends the claim to any $\H \supseteq \truV_m$.
Also note that, on this event,
\begin{equation*}
\Px^{k}\left(\partial^{k}_{\H} f\right) \geq \Px^{k}\left(\partial^{k}_{\H} f \cap \partial^{k}_{\C} f\right) = \Px^{k}\left(\partial^{k}_{\H} f \Big| \partial^{k}_{\C} f\right) \Px^{k}\left(\partial^{k}_{\C} f\right) = \Px^{k}\left(\partial^{k}_{\C} f \right),
\end{equation*}
where the last equality follows from the first claim.
Noting that for $\H \subseteq \C$, $\partial^{k}_{\H} f \subseteq \partial^{k}_{\C} f$, we must have
\begin{equation*}
\Px^{k}\left(\partial^{k}_{\H} f \right) = \Px^{k}\left(\partial^{k}_{\C} f\right).
\end{equation*}
This establishes the third claim.
From the first claim, for any given value of $i \in \nats$ the second claim holds for
$S_i^{(k+1)}$ (with $\H=\truV_m$) on an additional event of probability $1$; taking a union bound over all
$i \in \nats$ extends this claim to every $S_i^{(k)}$ on an event of probability $1$.
Monotonicity then implies
\begin{equation*}
\ind_{\partial_{\C}^{k} f}\left(S_i^{(k+1)}\right)
= \ind_{\partial_{\truV_m}^{k} f}\left(S_i^{(k+1)}\right)
\leq \ind_{\partial_{\H}^{k} f}\left(S_i^{(k+1)}\right)
\leq \ind_{\partial_{\C}^{k} f} \left(S_i^{(k+1)}\right),
\end{equation*}
extending the result to general $\H$.
Also, as $k < \bdim_f$, we know $\Px^{k}\left(\partial^{k}_{\C} f\right) > 0$,
and since we also know $\truV_m$ is independent from $W_2$,
the strong law of large numbers implies the final claim (for $\truV_m$)
on an additional event of probability $1$; again, monotonicity extends this claim to any $\H \supseteq \truV_m$.
Intersecting the above events over values $m \in \nats$ and $k < \bdim_f$ gives the event $H^{\prime}$,
and as each of the above events has probability $1$ and there are countably many such events,
a union bound implies $\P(H^{\prime}) = 1$.
\end{proof}

Note that one specific implication of Lemma~\ref{lem:Vshat-to-Boundaries}, obtained by taking $k=0$,
is that on $H^{\prime}$, $\truV_m \neq \emptyset$ (even if $f \in \cl(\C) \setminus \C$).  This is because,
for $f \in \cl(\C)$, we have $\partial^{0}_{\C} f = \X^{0}$ so that $\Px^{0}\left(\partial^{0}_{\C} f\right) = 1$,
which means $\Px^{0}\left(\partial^{0}_{\truV_{m}} f\right) = 1$ (on $H^{\prime}$), so that we must have
$\partial^{0}_{\truV_{m}} f = \X^{0}$, which implies $\truV_{m} \neq \emptyset$.  In particular, this
also means $f \in \cl\left( \truV_{m} \right)$.

\begin{lemma}
\label{lem:converging-conditional}
There is a monotonic function $\cc(r) = o(1)$ (as $r \to 0$) such that,
on event $H^{\prime}$, for any $k \in \left\{0,\ldots,\bdim_f-1\right\}$, $m \in \nats$, $r > 0$,
and set $\H$ such that $\truV_m \subseteq \H \subseteq \Ball(f,r)$,
\begin{equation*}
\Px^{k}\left(\bar{\partial}^{k}_{\C} f \Big| \S^{k}\left(\H\right)\right) \leq \cc(r).
\end{equation*}
In particular, for $\init \in \nats$ and $\delta > 0$, on $H_{\init}(\delta) \cap H^{\prime}$ (defined above),
every $m \geq \init$ and $k \in \left\{0,\ldots,\bdim_f-1\right\}$ has
$\Px^{k}\left(\bar{\partial}^{k}_{\C} f \Big| \S^{k}\left(\truV_m\right)\right) \leq \cc(\vrad(\init;\delta))$.
\thmend\end{lemma}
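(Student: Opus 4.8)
The plan is to sandwich $\S^{k}(\H)$ between $\S^{k}(\Ball(f,r))$ (from above) and $\partial^{k}_{\C} f$ (from below, in measure), and then to appeal to continuity of measure to produce the vanishing bound $\cc(r)$.

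First I would record the relevant monotonicity facts: $\S^{k}(\cdot)$ is nondecreasing under inclusion of its set argument, so $r \mapsto \S^{k}(\Ball(f,r))$ is nondecreasing, and by Definition~\ref{def:k-dim-core} (a monotone limit of sets is the corresponding intersection) $\partial^{k}_{\C} f = \bigcap_{r > 0} \S^{k}(\Ball(f,r)) \subseteq \S^{k}(\Ball(f,r))$ for every $r > 0$. Since the hypothesis gives $\H \subseteq \Ball(f,r)$, this yields $\S^{k}(\H) \cap \bar{\partial}^{k}_{\C} f \subseteq \S^{k}(\Ball(f,r)) \setminus \partial^{k}_{\C} f$, hence
\[
\Px^{k}\!\left( \S^{k}(\H) \cap \bar{\partial}^{k}_{\C} f \right) \;\le\; \Px^{k}\!\left( \S^{k}(\Ball(f,r)) \right) - \Px^{k}\!\left( \partial^{k}_{\C} f \right).
\]
For the denominator I would use the other hypothesis $\truV_{m} \subseteq \H$ together with the event $H^{\prime}$: since $\truV_{m} \subseteq \H \subseteq \Ball(f,r) \subseteq \C$ and $k < \bdim_{f}$, Lemma~\ref{lem:Vshat-to-Boundaries} gives $\Px^{k}\!\left( \S^{k}(\H) \mid \partial^{k}_{\C} f\right) = 1$, i.e.\ $\Px^{k}\!\left( \S^{k}(\H) \cap \partial^{k}_{\C} f\right) = \Px^{k}\!\left( \partial^{k}_{\C} f\right)$, so
\[
\Px^{k}\!\left( \S^{k}(\H)\right) \;\ge\; \Px^{k}\!\left( \partial^{k}_{\C} f\right) \;>\; 0,
\]
the strict positivity holding precisely because $k < \bdim_{f}$. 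In particular the conditioning event has positive measure, so the convention ``$\Px^{k}(\,\cdot \mid \cdot\,) = 0$'' is not triggered, and dividing the two displays gives
\[
\Px^{k}\!\left( \bar{\partial}^{k}_{\C} f \mid \S^{k}(\H) \right) \;\le\; \frac{\Px^{k}\!\left( \S^{k}(\Ball(f,r)) \right) - \Px^{k}\!\left( \partial^{k}_{\C} f \right)}{\Px^{k}\!\left( \partial^{k}_{\C} f \right)} \;=:\; \cc_{k}(r).
\]

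Next I would check that each $\cc_{k}$ (for $0 \le k < \bdim_{f}$, a finite range since $\C$ is a VC class) is nondecreasing in $r$ — immediate from monotonicity of $r \mapsto \Px^{k}(\S^{k}(\Ball(f,r)))$ — and satisfies $\cc_{k}(r) = o(1)$ as $r \to 0$: the sets $\S^{k}(\Ball(f,r))$ decrease to $\partial^{k}_{\C} f$ as $r \downarrow 0$ and all have measure at most $1$, so continuity of measure from above gives $\Px^{k}(\S^{k}(\Ball(f,r))) \to \Px^{k}(\partial^{k}_{\C} f)$. Setting $\cc(r) = \max_{0 \le k < \bdim_{f}} \cc_{k}(r)$ then yields a single monotone $o(1)$ function of $r$ (a maximum of finitely many such functions, with $f,\Px,\C$ fixed throughout) dominating each $\cc_{k}(r)$, which establishes the first claim; the case $k = 0$ is consistent, since $\bar{\partial}^{0}_{\C} f = \varnothing$ and, as $f \in \cl(\C)$, $\Ball(f,r) \neq \emptyset$, so $\S^{0}(\Ball(f,r)) = \X^{0}$ and $\cc_{0} \equiv 0$. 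For the ``in particular'' clause, on $H_{\init}(\delta)$ Lemma~\ref{lem:VinB} gives $\truV_{\init} \subseteq \Ball(f, \vrad(\init;\delta))$, and for $m \ge \init$ we have $\truV_{m} \subseteq \truV_{\init} \subseteq \Ball(f,\vrad(\init;\delta))$; applying the first part with $\H = \truV_{m}$ and $r = \vrad(\init;\delta)$ gives the stated bound on $H_{\init}(\delta) \cap H^{\prime}$.

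There is no deep obstacle here; the only points that require care are keeping the bound uniform in $\H$, $m$, and $k$ — which is why I phrase it through the $\Ball(f,r)$ upper envelope and a finite maximum over $k$ — and ensuring the conditioning set $\S^{k}(\H)$ has positive probability, which is exactly where the restriction $k < \bdim_{f}$ and the event $H^{\prime}$ (via Lemma~\ref{lem:Vshat-to-Boundaries}) enter.
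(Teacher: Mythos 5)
Your argument is correct and follows essentially the same route as the paper's proof: both bound the numerator by passing from $\S^{k}(\H)$ to the larger set $\S^{k}(\Ball(f,r))$, bound the denominator from below by $\Px^{k}(\partial^{k}_{\C} f)$ via Lemma~\ref{lem:Vshat-to-Boundaries} on $H^{\prime}$, define $\cc_{k}(r)$ as the resulting ratio, show $\cc_{k}(r)=o(1)$ by monotone convergence of $\S^{k}(\Ball(f,r))$ down to $\partial^{k}_{\C}f$, take a finite max over $k$, and invoke Lemma~\ref{lem:VinB} for the ``in particular'' clause. The only cosmetic differences are that the paper routes the denominator bound through $\partial_{\H}^{k}f$ (i.e., $\Px^{k}(\S^{k}(\H))\geq\Px^{k}(\partial_{\H}^{k}f)=\Px^{k}(\partial^{k}_{\C}f)$) while you invoke the conditional-probability clause of the same lemma directly, and you phrase the numerator as a difference of measures rather than as the measure of an intersection, which are equal since $\partial^{k}_{\C}f\subseteq\S^{k}(\Ball(f,r))$.
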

\begin{proof}
Fix any $k \in \left\{0,\ldots,\bdim_f-1\right\}$.
By Lemma~\ref{lem:Vshat-to-Boundaries},
we know that on event $H^\prime$,
\begin{align*}
\Px^{k}\left(\bar{\partial}^{k}_{\C} f\Big| \S^{k}\left(\H\right)\right)
& = \frac{\Px^{k}\left(\bar{\partial}^{k}_{\C} f \cap \S^{k}\left(\H\right)\right)}
{\Px^{k}\left(\S^{k}\left(\H\right)\right)}
\leq \frac{\Px^{k}\left(\bar{\partial}^{k}_{\C} f \cap \S^{k}\left(\H\right)\right)}
{\Px^{k}\left(\partial_{\H}^{k} f\right)}
\\ & = \frac{\Px^{k}\left(\bar{\partial}^{k}_{\C} f \cap \S^{k}\left(\H\right)\right)}
{\Px^{k}\left(\partial^{k}_{\C} f\right)} %\text{ (by Lemma~\ref{lem:Vshat-to-Boundaries})}
\leq \frac{\Px^{k}\left(\bar{\partial}^{k}_{\C} f \cap \S^{k}\left( \Ball\left(f, r \right)\right)\right)}
{\Px^{k}\left(\partial^{k}_{\C} f\right)}. %\text{ (by Lemma~\ref{lem:VinB})}.
\end{align*}
Define $\cc_{k}(r)$ as this latter quantity.
Since
$\Px^{k}\left(\bar{\partial}^{k}_{\C} f \cap \S^{k}\left(\Ball(f,r)\right)\right)$
is monotonic in $r$,
\begin{equation*}
\lim\limits_{r\to 0} \frac{\Px^{k}\left(\bar{\partial}^{k}_{\C} f \cap \S^{k}\left(\Ball(f,r)\right)\right)}
{\Px^{k}\left(\partial^{k}_{\C} f\right)}
%= \frac{\E\left[\ind_{\bar{\partial}^{k}_{\C} f}\left( \mathbf{S} \right) \lim\limits_{r \to 0} \ind_{\S^{k}\left(\Ball(f,r)\right)}\left(\mathbf{S}\right)\right]}
= \frac{\Px^{k}\left(\bar{\partial}^{k}_{\C} f \cap \lim\limits_{r \to 0} \S^{k}\left(\Ball(f,r)\right)\right)}
{\Px^{k}\left(\partial^{k}_{\C} f\right)}
= \frac{\Px^{k}\left(\bar{\partial}^{k}_{\C} f \cap \partial^{k}_{\C} f\right)}{\Px^{k}\left(\partial^{k}_{\C} f\right)} = 0.
\end{equation*}
This proves $\cc_{k}(r) = o(1)$.
Defining
\begin{equation*}
\cc(r) = \max\left\{\cc_{k}(r) : k \in \left\{0,1,\ldots,\bdim_f-1\right\}\right\} = o(1)
\end{equation*}
completes the proof of the first claim.

For the final claim, simply recall that by Lemma~\ref{lem:VinB},
on $H_{\init}(\delta)$, every $m \geq \init$ has
$\truV_m \subseteq \truV_{\init} \subseteq \Ball(f,\vrad(\init;\delta))$.
\end{proof}

\begin{lemma}
\label{lem:good-labels}
For $\zeta \in (0,1)$, define
\begin{equation*}
r_{\zeta} = \sup\left\{ r \in (0,1) : \cc(r) < \zeta\right\} / 2.
\end{equation*}
On $H^{\prime}$, $\forall k \in \left\{0,\ldots,\bdim_f-1\right\}$, $\forall \zeta \in (0,1)$,
$\forall m \in \nats$, for any set $\H$ such that $\truV_m \subseteq \H \subseteq \Ball(f,r_{\zeta})$,
\begin{multline}
\Px\left( x :  \Px^{k}\left(\bar{\S}^{k}\left(\H[(x,f(x))]\right) \Big| \S^{k}\left(\H\right)\right) > \zeta\right)
\\ = \Px\left( x : \Px^{k}\left(\bar{\S}^{k}\left(\H[(x,f(x))]\right) \Big| \partial^{k}_{\H} f \right) > \zeta\right)
= 0. \label{eqn:x-zeta}
\end{multline}
In particular, for $\delta \in (0,1)$, defining $\init(\zeta;\delta) = \min\left\{ \init \in \nats : \sup\limits_{m \geq \init} \vrad(m;\delta) \leq r_{\zeta}\right\}$,
for any $\init \geq \init(\zeta;\delta)$, and any $m \geq \init$, on $H_{\init}(\delta) \cap H^{\prime}$, \eqref{eqn:x-zeta} holds
for $\H = \truV_m$.
\thmend\end{lemma}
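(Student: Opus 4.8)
The plan is to reduce the statement to a purely deterministic fact about an arbitrary fixed set $\H$ with $f\in\cl(\H)$, and then combine it with Fubini's theorem and the two preceding lemmas. First recall that, on $H^{\prime}$, Lemma~\ref{lem:Vshat-to-Boundaries} (case $k=0$) gives $f\in\cl(\truV_m)\subseteq\cl(\H)$ for every $\H$ with $\truV_m\subseteq\H$, and for $k\in\{0,\dots,\bdim_f-1\}$ it gives $\Px^{k}\big(\partial^{k}_{\H}f\mid\partial^{k}_{\C}f\big)=1$ together with $\Px^{k}\big(\partial^{k}_{\H}f\big)=\Px^{k}\big(\partial^{k}_{\C}f\big)>0$. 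Since $\partial^{k}_{\H}f\subseteq\partial^{k}_{\C}f$, this means $\partial^{k}_{\H}f$ and $\partial^{k}_{\C}f$ agree up to a $\Px^{k}$-null set, and since $\partial^{k}_{\H}f\subseteq\S^{k}(\H)$ we also have $\Px^{k}(\S^{k}(\H))>0$ on $H^{\prime}$.

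The key deterministic claim I would establish is: for any fixed set $\H$ and any fixed $S\in\partial^{k}_{\H}f$,
\[
\Px\big(x : S\notin\S^{k}(\H[(x,f(x))])\big)=0 .
\]
Indeed, since $S\in\partial^{k}_{\H}f$, for each $i\in\nats$ the set $\Ball_{\H}(f,1/i)$ shatters $S$, so there exist $h^{(i)}_1,\dots,h^{(i)}_{2^{k}}\in\H$ shattering $S$ with $\Px(x:h^{(i)}_j(x)\neq f(x))\le 1/i$. If for some $i$ every $h^{(i)}_j(x)=f(x)$, then all $2^{k}$ of these classifiers lie in $\H[(x,f(x))]$ and shatter $S$, whence $S\in\S^{k}(\H[(x,f(x))])$; so
$\{x:S\notin\S^{k}(\H[(x,f(x))])\}\subseteq\bigcap_i\{x:\exists j\le 2^{k},\,h^{(i)}_j(x)\neq f(x)\}$,
and a union bound bounds the $\Px$-measure of the right-hand side by $\inf_i 2^{k}/i=0$. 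This is the same device used in the proof of Lemma~\ref{lem:Vshat-to-Boundaries}, and it is purely deterministic given $\H$ and $S$, so it needs no probability-one event beyond $H^{\prime}$.

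Granting this, the second equality in \eqref{eqn:x-zeta} follows by Tonelli applied conditionally on $\H$:
\[
\int_{\X}\Px^{k}\Big(\bar{\S}^{k}(\H[(x,f(x))])\cap\partial^{k}_{\H}f\Big)\,\Px(dx)=\int_{\partial^{k}_{\H}f}\Px\big(x:S\notin\S^{k}(\H[(x,f(x))])\big)\,\Px^{k}(dS)=0 ,
\]
so $\Px^{k}\big(\bar{\S}^{k}(\H[(x,f(x))])\mid\partial^{k}_{\H}f\big)=0\le\zeta$ for $\Px$-almost every $x$. For the first equality I would split $\bar{\S}^{k}(\H[(x,f(x))])\cap\S^{k}(\H)$ by membership in $\partial^{k}_{\C}f$: the portion inside $\partial^{k}_{\C}f$ equals, up to the $\Px^{k}$-null difference between $\partial^{k}_{\C}f$ and $\partial^{k}_{\H}f$ on $H^{\prime}$, the set just handled, hence has $\Px^{k}$-measure $0$ for a.e.\ $x$; the portion outside $\partial^{k}_{\C}f$ is contained in $\S^{k}(\H)\setminus\partial^{k}_{\C}f$, whose conditional probability given $\S^{k}(\H)$ is at most $\cc(r_{\zeta})$ by Lemma~\ref{lem:converging-conditional} with $r=r_{\zeta}$ (using $\truV_m\subseteq\H\subseteq\Ball(f,r_{\zeta})$). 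Since $\cc$ is nondecreasing and $r_{\zeta}$ is half the supremum of $\{r\in(0,1):\cc(r)<\zeta\}$, we get $\cc(r_{\zeta})<\zeta$, so $\Px^{k}\big(\bar{\S}^{k}(\H[(x,f(x))])\mid\S^{k}(\H)\big)\le\cc(r_{\zeta})<\zeta$ for a.e.\ $x$, proving the first equality. The ``in particular'' clause is then immediate: $\vrad(m;\delta)$ is nonincreasing in $m$, so $\init\ge\init(\zeta;\delta)$ yields $\vrad(\init;\delta)=\sup_{m\ge\init}\vrad(m;\delta)\le r_{\zeta}$, and Lemma~\ref{lem:VinB} gives, on $H_{\init}(\delta)$, $\truV_m\subseteq\truV_{\init}\subseteq\Ball(f,\vrad(\init;\delta))\subseteq\Ball(f,r_{\zeta})$ for every $m\ge\init$, so $\H=\truV_m$ meets the hypothesis and the main claim applies.

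The only real obstacles are bookkeeping ones. One must state the key deterministic claim for an arbitrary (possibly data-dependent) $\H$ and then carry out the Fubini step conditionally on a realization of $\H$, so that all the measure-zero statements hold on the single event $H^{\prime}$ rather than requiring new exceptional events; and one must extract $\cc(r_{\zeta})<\zeta$ from the definition of $r_{\zeta}$ via monotonicity of $\cc$, which is precisely why the factor $1/2$ appears in that definition. Beyond these points the argument is routine.
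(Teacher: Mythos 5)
Your proposal is correct and follows essentially the same route as the paper's proof: the same limiting-sequence/union-bound device showing that for each $S\in\partial^{k}_{\H}f$ almost every $x$ has $\H[(x,f(x))]$ shattering $S$, combined with Fubini/Tonelli, Lemma~\ref{lem:converging-conditional} for the $\cc(r_{\zeta})<\zeta$ term, and Lemma~\ref{lem:Vshat-to-Boundaries} to identify $\partial^{k}_{\H}f$ with $\partial^{k}_{\C}f$ up to null sets. The only (cosmetic) difference is that you conclude the conditional probability given $\partial^{k}_{\H}f$ is zero almost everywhere directly from the Tonelli identity, whereas the paper reaches the same endpoint via Markov's inequality applied to the event that this conditional probability exceeds $\zeta-\cc$.
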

\begin{proof}
Fix $k, m, \H$ as described above,
and suppose $\cc = \Px^{k}\left(\bar{\partial}^{k}_{\C} f | \S^{k}(\H)\right) < \zeta$;
by Lemma~\ref{lem:converging-conditional}, this happens on $H^{\prime}$.
Since, $\partial^{k}_{\H} f \subseteq \S^{k}(\H)$, we have that $\forall x \in \X$,
\begin{align*}
\Px^{k}\left( \bar{\S}^{k}\left(\H[(x,f(x))]\right) \Big| \S^{k}(\H)\right)
& = \Px^{k}\left( \bar{\S}^{k}\left(\H[(x,f(x))]\right) \Big| \partial^{k}_{\H} f\right) \Px^{k}\left(\partial^{k}_{\H} f \Big| \S^{k}(\H)\right)
\\ & + \Px^{k}\left(\bar{\S}^{k}\left(\H[(x,f(x))]\right) \Big| \S^{k}(\H) \cap \bar{\partial}^{k}_{\H} f\right) \Px^{k}\left( \bar{\partial}^{k}_{\H} f \Big| \S^{k}(\H)\right).
\end{align*}
Since all probability values are bounded by $1$, we have
\begin{equation}
\Px^{k}\left( \bar{\S}^{k}\left(\H[(x,f(x))]\right) \Big| \S^{k}(\H)\right) \leq
\Px^{k}\left( \bar{\S}^{k}\left(\H[(x,f(x))]\right) \Big| \partial^{k}_{\H} f\right)
+ \Px^{k}\left( \bar{\partial}^{k}_{\H} f \Big| \S^{k}(\H)\right). \label{eqn:Vshat-sum}
\end{equation}
Isolating the right-most term in \eqref{eqn:Vshat-sum}, by basic properties of probabilities we have
\begin{align}
&\Px^{k}\left( \bar{\partial}^{k}_{\H} f \Big| \S^{k}(\H)\right) \notag \\
&= \Px^{k}\left(\bar{\partial}^{k}_{\H} f \Big| \S^{k}(\H) \cap \bar{\partial}^{k}_{\C} f\right)\Px^{k}\left(\bar{\partial}^{k}_{\C} f \Big| \S^{k}(\H)\right)
+ \Px^{k}\left( \bar{\partial}^{k}_{\H} f \Big| \S^{k}(\H) \cap \partial^{k}_{\C} f\right) \Px^{k}\left( \partial^{k}_{\C} f \Big| \S^{k}(\H)\right) \notag \\
&\leq \Px^{k}\left(\bar{\partial}^{k}_{\C} f \Big| \S^{k}(\H)\right) + \Px^{k}\left( \bar{\partial}^{k}_{\H} f \Big| \S^{k}(\H) \cap \partial^{k}_{\C} f\right). \label{eqn:Vshat-second-sum}
\end{align}
By assumption, the left term in \eqref{eqn:Vshat-second-sum} equals $q$.
Examining the right term in \eqref{eqn:Vshat-second-sum}, we see that
\begin{align}
\Px^{k}\left( \bar{\partial}^{k}_{\H} f \Big| \S^{k}(\H) \cap \partial^{k}_{\C} f\right)
&= \Px^{k}\left( \S^{k}(\H) \cap \bar{\partial}^{k}_{\H} f \Big| \partial^{k}_{\C} f\right) / \Px^{k}\left(\S^{k}(\H) \Big| \partial^{k}_{\C} f\right) \notag \\
&\leq \Px^{k}\left( \bar{\partial}^{k}_{\H} f \Big| \partial^{k}_{\C} f\right) / \Px^{k}\left(\partial^{k}_{\H} f \Big| \partial^{k}_{\C} f\right). \label{eqn:Vshat-ratio}
\end{align}
By Lemma~\ref{lem:Vshat-to-Boundaries}, on $H^{\prime}$ the denominator in \eqref{eqn:Vshat-ratio} is $1$ and the numerator is $0$.
Thus, combining this fact with \eqref{eqn:Vshat-sum} and \eqref{eqn:Vshat-second-sum}, we have that on $H^{\prime}$,
\begin{equation}
\Px\!\left( x \!:\! \Px^{k}\!\left( \bar{\S}^{k}\!\left(\H[(x,f(x))]\right) \Big| \S^{k}\!\left(\H\right) \right) > \zeta\right) \leq
\Px\!\left( x \!:\! \Px^{k}\!\left( \bar{\S}^{k}\!\left(\H[(x,f(x))]\right) \Big| \partial^{k}_{\H}f\right) > \zeta - \cc\right). \label{eqn:zeta-cc}
\end{equation}
Note that proving the right side of \eqref{eqn:zeta-cc} equals zero will suffice to establish the result,
since it upper bounds \emph{both} the first expression of \eqref{eqn:x-zeta}
(as just established) \emph{and} the second expression of \eqref{eqn:x-zeta}
(by monotonicity of measures).
Letting $X \sim \Px$ be independent from the other random variables ($\Data, W_1, W_2$),
by Markov's inequality, the right side of \eqref{eqn:zeta-cc} is at most
\begin{equation*}
\frac{1}{\zeta - \cc}\E\left[\Px^{k}\left(\bar{\S}^{k}\left(\H[(X,f(X))]\right) \Big| \partial^{k}_{\H} f \right) \Big| \H\right]
= \frac{\E\left[ \Px^{k}\left( \bar{\S}^{k}\left(\H[(X,f(X))]\right) \cap \partial^{k}_{\H} f\right) \Big| \H\right]}{(\zeta - \cc)\Px^{k}\left(\partial^{k}_{\H} f\right)},
\end{equation*}
and by Fubini's theorem, this is (letting $\mathbf{S} \sim \Px^{k}$ be independent from the other random variables)
\begin{equation*}
\frac{\E\left[ \ind_{\partial^{k}_{\H} f}(\mathbf{S}) \Px\left(x : \mathbf{S} \notin \S^{k}\left(\H[(x,f(x))]\right)\right) \Big| \H\right]}{(\zeta - \cc) \Px^{k}\left( \partial^{k}_{\H} f\right)}.
\end{equation*}
Lemma~\ref{lem:Vshat-to-Boundaries} implies this equals
\begin{equation}
\frac{\E\left[ \ind_{\partial^{k}_{\H} f}(\mathbf{S}) \Px\left(x : \mathbf{S} \notin \S^{k}\left(\H[(x,f(x))]\right)\right) \Big| \H\right]}{(\zeta - \cc) \Px^{k}\left( \partial^{k}_{\C} f\right)}. \label{eqn:Vxh-not-shat}
\end{equation}

For any fixed $S \in \partial^{k}_{\H} f$, there is an infinite sequence of sets
\begin{equation*}
\left\{\left\{h^{(i)}_1,h^{(i)}_2,\ldots,h^{(i)}_{2^{k}}\right\}\right\}_{i\in\nats}
\end{equation*}
with $\forall j \leq 2^{k}$,
$\Px\left(x : h_j^{(i)}(x) \neq f(x)\right) \downarrow 0$,
such that each $\left\{h^{(i)}_1,\ldots,h^{(i)}_{2^{k}}\right\} \subseteq \H$ and shatters $S$.
If $\H[(x,f(x))]$ does not shatter $S$, then
\begin{equation*}
1 = \inf\limits_{i}\ind\left[\exists j : h^{(i)}_j \notin \H[(x,f(x))]\right] =
\inf\limits_{i}\ind\left[\exists j : h^{(i)}_j(x) \neq f(x)\right].
\end{equation*}
In particular,
\begin{align*}
\Px\left(x : S \notin \S^{k}\left(\H[(x,f(x))]\right)\right)
&\leq \Px\left(x : \inf\limits_i \ind\left[\exists j : h^{(i)}_j(x) \neq f(x)\right]=1\right)
%= \E\left[\inf\limits_i \ind\left[\exists j : h^{(i)}_j(X) \neq f(X)\right] \Big| \H\right]
\\ = \Px\left(\bigcap_i \left\{ x : \exists j : h^{(i)}_j(x) \neq f(x)\right\} \right)
& \leq \inf\limits_i \Px\left(x : \exists j \text{ s.t. } h^{(i)}_j(x)\neq f(x)\right)
\\ \leq \lim\limits_{i\rightarrow \infty} \sum\limits_{j \leq 2^{k}} \Px\left(x : h^{(i)}_j(x) \neq f(x)\right)
&= \sum\limits_{j \leq 2^{k}} \lim\limits_{i\rightarrow \infty} \Px\left(x : h^{(i)}_j(x) \neq f(x)\right) = 0.
\end{align*}
Thus \eqref{eqn:Vxh-not-shat} is zero, which establishes the result.

The final claim is then implied by Lemma~\ref{lem:VinB} and monotonicity of $\truV_m$ in $m$:
that is, on $H_{\init}(\delta)$, $\truV_m \subseteq \truV_{\init} \subseteq \Ball(f,\vrad(\init;\delta)) \subseteq \Ball(f,r_{\zeta})$.
\end{proof}

\begin{lemma}
\label{lem:label-everything}
For any $\zeta \in (0,1)$, there are values $\left\{\Delta_n^{(\zeta)}(\eps) : n \in \nats, \eps \in (0,1)\right\}$ such that,
for any $n \in \nats$ and $\eps > 0$, on event $H_{\lfloor n/3\rfloor}(\eps/2) \cap H^{\prime}$,
letting $V = \truV_{\lfloor n/3 \rfloor}$,
\begin{equation*}
\Px\left(x : \Px^{\bdim_f-1}\left(S \in \X^{\bdim_f-1} : S \cup \{x\} \in \S^{\bdim_f}(V) \Big| \S^{\bdim_f - 1}(V) \right) \geq \zeta\right) \leq \Delta_n^{(\zeta)}(\eps),
\end{equation*}
and for any $\nats$-valued $N(\eps) = \omega(\log(1/\eps))$, $\Delta_{N(\eps)}^{(\zeta)}(\eps) = o(1)$.
\thmend\end{lemma}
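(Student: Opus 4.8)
The plan is to prove Lemma~\ref{lem:label-everything} as follows. Write $k = \bdim_f$ and, for $n\in\nats$, $\eps\in(0,1)$, set $r_n(\eps) = \vrad(\lfloor n/3\rfloor ; \eps/2)$. The idea is to show that, uniformly over every realization lying in $H_{\lfloor n/3\rfloor}(\eps/2)\cap H^{\prime}$, the left-hand probability is at most a fixed constant times the \emph{deterministic} quantity $\Px^{k}(\S^{k}(\Ball(f,r_n(\eps))))$, then take $\Delta_n^{(\zeta)}(\eps)$ proportional to this quantity, and finally let it vanish as $n$ grows because $\S^{k}(\Ball(f,r))$ collapses onto $\partial^{k}_{\C}f$, which has $\Px^{k}$-measure $0$ by the very definition of $\bdim_f$.

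First I would record the two uniform facts supplied by the event. On $H_{\lfloor n/3\rfloor}(\eps/2)$, Lemma~\ref{lem:VinB} gives $V = \truV_{\lfloor n/3\rfloor}\subseteq\Ball(f,r_n(\eps))$, hence $\S^{k}(V)\subseteq\S^{k}(\Ball(f,r_n(\eps)))$ by monotonicity of shattering. On $H^{\prime}$, since $k-1\in\{0,\ldots,\bdim_f-1\}$, Lemma~\ref{lem:Vshat-to-Boundaries} gives $\Px^{k-1}(\S^{k-1}(V)) \geq \Px^{k-1}(\partial^{k-1}_{V}f) = \Px^{k-1}(\partial^{k-1}_{\C}f) = \dprob$, and $\dprob > 0$ by minimality of $\bdim_f$; in particular the conditional probabilities appearing in the statement are genuine ratios with denominator at least $\dprob$. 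Combining these, for each fixed $x$ the inner conditional probability is at most $\dprob^{-1}\Px^{k-1}(S : S\cup\{x\}\in\S^{k}(\Ball(f,r_n(\eps))))$, a quantity not depending on $V$. Then Markov's inequality in $x$, followed by Fubini's theorem (identifying $\S^{k}(\Ball(f,r_n(\eps)))$ with the set of pairs $(S,x)\in\X^{k-1}\times\X$ whose union is shattered), bounds the left-hand probability by $(\zeta\dprob)^{-1}\Px^{k}(\S^{k}(\Ball(f,r_n(\eps))))$. So I would define $\Delta_n^{(\zeta)}(\eps) = (\zeta\dprob)^{-1}\Px^{k}(\S^{k}(\Ball(f,r_n(\eps))))$ (and, say, the value $1$ for the finitely many $n$ with $\lfloor n/3\rfloor = 0$); this depends only on $n,\eps$ since $f,\Px$ are fixed throughout the appendix.

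For the asymptotic assertion, fix $N(\eps) = \omega(\log(1/\eps))$. Then $\lfloor N(\eps)/3\rfloor = \omega(\log(1/\eps))\to\infty$, and because $\log(1/(\eps/2)) = \Theta(\log(1/\eps))$, the computation behind the fact recorded after Lemma~\ref{lem:VinB} gives $r_{N(\eps)}(\eps)\to 0$ as $\eps\to 0$. The sets $\S^{k}(\Ball(f,r))$ are nonincreasing as $r\downarrow 0$, with intersection $\partial^{k}_{\C}f$ by Definition~\ref{def:k-dim-core}, so continuity of the probability measure $\Px^{k}$ from above yields $\Px^{k}(\S^{k}(\Ball(f,r)))\to\Px^{k}(\partial^{k}_{\C}f) = \Px^{\bdim_f}(\partial^{\bdim_f}_{\C}f) = 0$, the last equality being the definition of $\bdim_f$; hence $\Delta_{N(\eps)}^{(\zeta)}(\eps) = o(1)$.

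The argument is largely routine; the one point that needs care is that the bound must hold for \emph{all} realizations of $V$ in the good event, not merely almost surely, which is exactly why I would invoke the uniform containment $V\subseteq\Ball(f,r_n(\eps))$ and the uniform lower bound $\Px^{k-1}(\S^{k-1}(V))\geq\dprob>0$ rather than weaker almost-sure statements. A minor bookkeeping item is the case $\bdim_f = 1$, where $\X^{0} = \{\varnothing\}$ and the inner conditional probability degenerates to $\ind_{\DIS(V)}(x)$; the chain of bounds above specializes correctly in that case, so no separate treatment is required.
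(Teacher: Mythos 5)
Your proof is correct and takes essentially the same core route as the paper's (bound the conditional shattering probability pointwise in $x$, then apply Markov and Fubini to reduce to $\Px^{\bdim_f}\bigl(\S^{\bdim_f}(\Ball(f,r_n(\eps)))\bigr)$, which vanishes as $r_n(\eps)\to 0$ by the definition of $\bdim_f$). The one genuine difference is in how you control the conditioning event $\S^{\bdim_f-1}(V)$: the paper first decomposes the conditional probability according to whether $S \in \partial_\C^{\bdim_f-1}f$ or not, invokes Lemma~\ref{lem:converging-conditional} to show the latter contribution is bounded by $\cc(\vrad(\lfloor n/3\rfloor;\eps/2))$, and consequently needs a case split (defining $\Delta_n^{(\zeta)}(\eps)=1$ for $n < 3\init(\zeta/2;\eps/2)$, and picking up a factor of~$2$ from replacing $\zeta-\cc$ by $\zeta/2$). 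You instead note directly that $S\cup\{x\}\in\S^{\bdim_f}(V)$ implies $S\in\S^{\bdim_f-1}(V)$, so the ratio can be bounded by an upper bound on the numerator (via $V\subseteq\Ball(f,r_n(\eps))$ from Lemma~\ref{lem:VinB}) over the deterministic lower bound $\Px^{\bdim_f-1}(\S^{\bdim_f-1}(V))\geq\dprob$ from Lemma~\ref{lem:Vshat-to-Boundaries}. This sidesteps Lemma~\ref{lem:converging-conditional} and the $\init(\zeta/2;\eps/2)$ threshold entirely, gives a uniform bound for all $n$ with $\lfloor n/3\rfloor\geq 1$, and tightens the constant from $2/(\zeta\dprob)$ to $1/(\zeta\dprob)$. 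You also correctly flag the edge cases ($\bdim_f=1$, and the finitely many $n$ with $\lfloor n/3\rfloor=0$).
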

\begin{proof}
Throughout, we suppose the event $H_{\lfloor n / 3 \rfloor}(\eps/2) \cap H^{\prime}$, and fix some $\zeta \in (0,1)$.
We have $\forall x$,
\begin{align}
&  \Px^{\bdim_f-1}\left( S \in \X^{\bdim_f - 1} : S \cup \{x\} \in \S^{\bdim_f}(V) \Big| \S^{\bdim_f -1}(V)\right) \notag
\\ & = \Px^{\bdim_f-1}\left( S \in \X^{\bdim_f - 1} : S \cup \{x\} \in \S^{\bdim_f}(V) \Big| \S^{\bdim_f -1}(V) \cap \partial_{\C}^{\bdim_f-1}f\right)\Px^{\bdim_f-1}\left(\partial_{\C}^{\bdim_f-1}f\Big|\S^{\bdim_f-1}(V)\right) \notag
\\ & + \Px^{\bdim_f-1}\left( S \in \X^{\bdim_f - 1} : S \cup \{x\} \in \S^{\bdim_f}(V) \Big| \S^{\bdim_f -1}(V) \cap \bar{\partial}_{\C}^{\bdim_f-1}f\right) \Px^{\bdim_f-1}\left(\bar{\partial}_{\C}^{\bdim_f-1}f\Big|\S^{\bdim_f-1}(V)\right) \notag
\\ & \leq \Px^{\bdim_f-1}\!\left( S \!\in\! \X^{\bdim_f - 1} : S \cup \{x\} \in \S^{\bdim_f}(V) \Big| \S^{\bdim_f - 1}(V) \cap \partial_{\C}^{\bdim_f-1} \! f\right)
\!+\! \Px^{\bdim_f-1}\!\left(\bar{\partial}_{\C}^{\bdim_f-1}\! f\Big|\S^{\bdim_f-1}(V)\right)\!. \label{eqn:label-everything-split}
\end{align}
By Lemma~\ref{lem:Vshat-to-Boundaries}, the left term in \eqref{eqn:label-everything-split} equals
\begin{align*}
& \Px^{\bdim_f-1}\left( S \in \X^{\bdim_f - 1} : S \cup \{x\} \in \S^{\bdim_f}(V) \Big| \S^{\bdim_f - 1}(V) \cap \partial_{\C}^{\bdim_f-1}f\right) \Px^{\bdim_f-1}\left(\S^{\bdim_f-1}(V) \Big| \partial_{\C}^{\bdim_f-1} f\right)
\\ & = \Px^{\bdim_f-1}\left( S \in \X^{\bdim_f - 1} : S \cup \{x\} \in \S^{\bdim_f}(V) \Big| \partial_{\C}^{\bdim_f-1}f\right),
\end{align*}
and by Lemma~\ref{lem:converging-conditional}, the right term in \eqref{eqn:label-everything-split} is at most $\cc(\vrad(\lfloor n/3\rfloor;\eps/2))$.
Thus, we have
\begin{align}
& \Px\left(x : \Px^{\bdim_f-1}\left(S \in \X^{\bdim_f-1} : S \cup \{x\} \in \S^{\bdim_f}(V) \Big| \S^{\bdim_f - 1}(V) \right) \geq \zeta\right) \notag
\\ &\leq \Px\left( x : \Px^{\bdim_f-1}\left( S \in \X^{\bdim_f - 1} : S \cup \{x\} \in \S^{\bdim_f}(V) \Big| \partial_{\C}^{\bdim_f-1}f\right) \geq \zeta - \cc(\vrad(\lfloor n/3 \rfloor;\eps/2))\right). \label{eqn:label-everything-cc}
\end{align}
For $n < 3 \init(\zeta/2;\eps/2)$ (for $\init(\cdot;\cdot)$ defined in Lemma~\ref{lem:good-labels}), we define $\Delta_{n}^{(\zeta)}(\eps) = 1$.
Otherwise, suppose $n \geq 3 \init(\zeta/2;\eps/2)$, so that $\cc(\vrad(\lfloor n/3 \rfloor;\eps/2)) < \zeta/2$, and thus \eqref{eqn:label-everything-cc} is at most
\begin{equation*}
\Px\left( x : \Px^{\bdim_f-1}\left( S \in \X^{\bdim_f - 1} : S \cup \{x\} \in \S^{\bdim_f}(V) \Big| \partial_{\C}^{\bdim_f-1}f\right) \geq \zeta /2 \right).
\end{equation*}
By Lemma~\ref{lem:VinB}, this is at most
\begin{equation*}
\Px\left( x : \Px^{\bdim_f-1}\left( S \in \X^{\bdim_f - 1} : S \cup \{x\} \in \S^{\bdim_f}\left(\Ball(f,\vrad(\lfloor n / 3 \rfloor;\eps/2))\right) \Big| \partial_{\C}^{\bdim_f-1}f\right) \geq \zeta/2\right).
\end{equation*}
Letting $X \sim \Px$, by Markov's inequality this is at most
\begin{align}
&\frac{2}{\zeta} \E\left[ \Px^{\bdim_f-1}\left( S \in \X^{\bdim_f - 1} : S \cup \{X\} \in \S^{\bdim_f}\left(\Ball(f,\vrad(\lfloor n / 3 \rfloor;\eps/2))\right) \Big| \partial_{\C}^{\bdim_f-1}f\right) \right] \notag\\
&=\frac{2}{\zeta \dprob} \Px^{\bdim_f}\left( S \cup \{x\} \in \X^{\bdim_f} : S \cup \{x\} \in \S^{\bdim_f}\left(\Ball(f,\vrad(\lfloor n / 3 \rfloor;\eps/2))\right) \text{ and } S \in \partial_{\C}^{\bdim_f-1}f\right) \notag\\
&\leq \frac{2}{\zeta \dprob}\Px^{\bdim_f}\left( \S^{\bdim_f}\left(\Ball(f,\vrad(\lfloor n / 3 \rfloor;\eps/2))\right)\right). \label{eqn:label-everything-ratio}
\end{align}
Thus, defining $\Delta_{n}^{(\zeta)}(\eps)$ as \eqref{eqn:label-everything-ratio} for $n \geq 3 \init(\zeta/2;\eps/2)$ establishes the first claim.

It remains only to prove the second claim.
Let $N(\eps) = \omega(\log(1/\eps))$.  Since
$\init(\zeta/2;\eps/2) \leq \left\lceil \frac{4}{r_{\zeta/2}} \left( \vc \ln\left(\frac{4e}{r_{\zeta/2}}\right) + \ln\left(\frac{4}{\eps}\right)\right)\right\rceil = O(\log(1/\eps))$,
we have that for all sufficiently small $\eps > 0$, $N(\eps) \geq 3 \init(\zeta/2;\eps/2)$, so that $\Delta_{N(\eps)}^{(\zeta)}(\eps)$ equals \eqref{eqn:label-everything-ratio} (with $n = N(\eps)$).
Furthermore, since $\dprob > 0$, $\Px^{\bdim_f}\left( \partial_{\C}^{\bdim_f} f\right) = 0$, and $\vrad(\lfloor N(\eps)/3\rfloor;\eps/2) = o(1)$,
by continuity of probability measures we know \eqref{eqn:label-everything-ratio} is $o(1)$ when $n = N(\eps)$,
so that we generally have $\Delta_{N(\eps)}^{(\zeta)}(\eps) = o(1)$.
\end{proof}

For any $m \in \nats$, define
\begin{equation*}
\tilde{M}(m) = \Msize{m} \dprob / 2.
\end{equation*}

\begin{lemma}
\label{lem:basic-Mk-lower-bound}
There is a $(\C,\Px,f)$-dependent constant $c^{(i)} \in (0,\infty)$ such that,
for any $\init \in \nats$ there is an event $H_{\init}^{(i)} \subseteq H^{\prime}$ with
\begin{equation*}
\P\left(H_{\init}^{(i)}\right) \geq 1 - c^{(i)} \cdot \exp\left\{ - \tilde{M}(\init) / 4 \right\}
\end{equation*}
such that on $H_{\init}^{(i)}$,
if $\bdim_f \geq 2$, then $\forall k \in \left\{2,\ldots,\bdim_f\right\}$,
$\forall m \geq \init$, $\forall \ell \in \nats$, for any set $\H$ such that $\truV_{\ell} \subseteq \H \subseteq \C$,
\begin{equation*}
M^{(k)}_{m}\left(\H\right) \geq \tilde{M}(m).
\end{equation*}
\upthmend{-1.1cm}
\end{lemma}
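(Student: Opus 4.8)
The plan is to reduce this uniform-over-$\H$ statement to finitely many i.i.d.\ concentration inequalities, by replacing the $\H$-dependent set $\S^{k-1}(\H)$ inside the definition of $M^{(k)}_m$ with a fixed set that depends neither on $\H$ nor on $\ell$ nor on the sequence $\{X_i\}$. We may assume $\bdim_f \geq 2$, as otherwise there is nothing to prove. First I would observe that for any $\H$ with $\truV_{\ell} \subseteq \H \subseteq \C$ and any $r > 0$ we have $\S^{k-1}(\Ball_{\H}(f,r)) \subseteq \S^{k-1}(\H)$, and hence $\partial_{\H}^{k-1} f \subseteq \S^{k-1}(\H)$, so that $\ind_{\S^{k-1}(\H)}(S_i^{(k)}) \geq \ind_{\partial_{\H}^{k-1} f}(S_i^{(k)})$ for every $i \in \nats$. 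Next, on the event $H^{\prime}$ of Lemma~\ref{lem:Vshat-to-Boundaries}, for every $k \in \{2,\ldots,\bdim_f\}$ (so that $k-1 \in \{1,\ldots,\bdim_f-1\}$) and every such $\H$, that lemma gives $\ind_{\partial_{\H}^{k-1} f}(S_i^{(k)}) = \ind_{\partial_{\C}^{k-1} f}(S_i^{(k)})$ for all $i$. Combining these, on $H^{\prime}$,
\[
M_m^{(k)}(\H) \;\geq\; \sum_{i=1}^{\Msize{m}} \ind_{\S^{k-1}(\H)}\!\left(S_i^{(k)}\right) \;\geq\; \sum_{i=1}^{\Msize{m}} \ind_{\partial_{\C}^{k-1} f}\!\left(S_i^{(k)}\right),
\]
and the right-hand side depends only on $W_2$ and $k$.

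Having made this reduction, I would control the $\partial_{\C}^{k-1} f$-indicator sum for each fixed $k$. For fixed $k$, the blocks $S_1^{(k)}, S_2^{(k)}, \ldots$ are disjoint sub-blocks of the sequence $W_2$, hence i.i.d.\ with law $\Px^{k-1}$, so the $\ind_{\partial_{\C}^{k-1} f}(S_i^{(k)})$ are i.i.d.\ Bernoulli with mean $p_k := \Px^{k-1}(\partial_{\C}^{k-1} f)$. A short Fubini/projection argument shows that $j \mapsto \Px^{j}(\partial_{\C}^{j} f)$ is nonincreasing: if $\Ball(f,r)$ shatters an ordered $j$-tuple it shatters the $(j-1)$-tuple of its first coordinates, so the first-$(j-1)$-coordinate projection maps $\S^{j}(\Ball(f,r))$ into $\S^{j-1}(\Ball(f,r))$, and Fubini gives $\Px^{j}(\S^{j}(\Ball(f,r))) \leq \Px^{j-1}(\S^{j-1}(\Ball(f,r)))$; letting $r \to 0$ yields the claim. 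Hence $p_k \geq \Px^{\bdim_f-1}(\partial_{\C}^{\bdim_f-1} f) = \dprob > 0$ for every $k \leq \bdim_f$. A multiplicative Chernoff bound then gives, for each fixed $m$ and $k$,
\[
\P\!\left(\sum_{i=1}^{\Msize{m}} \ind_{\partial_{\C}^{k-1} f}\!\left(S_i^{(k)}\right) < \tilde{M}(m)\right) \leq \P\!\left(\mathrm{Bin}(m^3, p_k) \leq \tfrac{1}{2} m^3 p_k\right) \leq \exp\!\left\{-\tfrac{1}{8} m^3 p_k\right\} \leq \exp\!\left\{-\tfrac{1}{8} m^3 \dprob\right\} = \exp\!\left\{-\tfrac{1}{4}\tilde{M}(m)\right\},
\]
using $\tilde{M}(m) = m^3 \dprob / 2$.

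To obtain the bound simultaneously for all $m \geq \init$, I would union-bound over $m$, using $m^3 \geq \init^3 + (m - \init)$ for $\init \geq 1$ to get $\sum_{m \geq \init} \exp\{-\tfrac{1}{8} m^3 \dprob\} \leq (1 - \exp\{-\dprob/8\})^{-1} \exp\{-\tfrac{1}{4}\tilde{M}(\init)\}$, and then union-bound over the at most $\bdim_f - 1 \leq \vc$ values $k \in \{2,\ldots,\bdim_f\}$. Defining $H_{\init}^{(i)}$ to be the intersection of $H^{\prime}$ with the complement of the resulting bad event, and $c^{(i)} = (\bdim_f - 1)(1 - \exp\{-\dprob/8\})^{-1}$ (a $(\C,\Px,f)$-dependent constant, since both $\bdim_f$ and $\dprob$ depend only on $\C$, $\Px$, $f$), we get $\P(H_{\init}^{(i)}) \geq 1 - c^{(i)}\exp\{-\tilde{M}(\init)/4\}$; and on $H_{\init}^{(i)}$ the displayed chain of inequalities shows $M_m^{(k)}(\H) \geq \tilde{M}(m)$ for all $k \in \{2,\ldots,\bdim_f\}$, all $m \geq \init$, all $\ell \in \nats$, and all $\H$ with $\truV_{\ell} \subseteq \H \subseteq \C$. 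The only genuinely conceptual step is the first paragraph — lower bounding $\S^{k-1}(\H)$ by the $\H$-free set $\partial_{\C}^{k-1} f$, which is precisely what Lemma~\ref{lem:Vshat-to-Boundaries} was built to supply — everything afterward is routine concentration bookkeeping, with the $m^3$ sample size being exactly what makes the union bound over $m \geq \init$ collapse to a constant multiple of $\exp\{-\tilde{M}(\init)/4\}$.
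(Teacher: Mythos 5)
Your proof is correct and follows essentially the same route as the paper's: reduce $M^{(k)}_m(\H)$ to the $\H$-free count $\sum_i \ind_{\partial_\C^{k-1} f}(S_i^{(k)})$ via Lemma~\ref{lem:Vshat-to-Boundaries}, apply a multiplicative Chernoff bound using $\Px^{k-1}(\partial_\C^{k-1} f) \geq \dprob$, and union-bound over $m \geq \init$ and $k \in \{2,\ldots,\bdim_f\}$, with the cubic sample size making the geometric tail collapse to $O(\exp\{-\tilde{M}(\init)/4\})$. You spell out a couple of steps the paper leaves implicit (the monotonicity of $j \mapsto \Px^j(\partial_\C^j f)$ via a projection/Fubini argument, and the discrete inequality $m^3 - m \geq \init^3 - \init$ in place of the paper's integral comparison), but the decomposition, key lemma, and concentration argument are identical.
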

\begin{proof} 
On $H^{\prime}$, Lemma~\ref{lem:Vshat-to-Boundaries} implies every $\ind_{\S^{k-1}(\H)}\left(S_i^{(k)}\right) \geq \ind_{\partial_{\H}^{k-1} f}\left(S_i^{(k)}\right) = \ind_{\partial_{\C}^{k-1} f}\left(S_i^{(k)}\right)$,
so we focus on showing $\left| \left\{ S_i^{(k)} : i \leq \Msize{m} \right\} \cap \partial_{\C}^{k-1} f \right| \geq \tilde{M}(m)$ on an appropriate event.
We know
\begin{align*}
&\P\left( \forall k \in \left\{2,\ldots,\bdim_f\right\}, \forall m \geq \init, \left| \left\{ S_i^{(k)} : i \leq \Msize{m}\right\} \cap \partial_{\C}^{k-1} f \right| \geq \tilde{M}(m) \right)
\\ & = 1 - \P\left( \exists k \in \left\{2,\ldots,\bdim_f\right\}, m \geq \init : \left| \left\{ S_i^{(k)} : i \leq \Msize{m}\right\} \cap \partial_{\C}^{k-1} f \right| < \tilde{M}(m) \right)
\\ & \geq 1 - \sum_{m \geq \init} \sum_{k=2}^{\bdim_f} \P\left( \left| \left\{ S_i^{(k)} : i \leq \Msize{m}\right\} \cap \partial_{\C}^{k-1} f \right| < \tilde{M}(m) \right),
\end{align*}
where the last line follows by a union bound.  Thus, we will focus on bounding
\begin{equation}
\label{eqn:basic-Mk-bigsum}
\sum_{m \geq \init} \sum_{k=2}^{\bdim_f} \P\left( \left| \left\{ S_i^{(k)} : i \leq \Msize{m}\right\} \cap \partial_{\C}^{k-1} f \right| < \tilde{M}(m) \right).
\end{equation}
Fix any $k \in \left\{2,\ldots,\bdim_f\right\}$, and integer $m \geq \init$.
Since
\begin{equation*}
\E\left[ \left| \left\{ S_i^{(k)} : i \leq \Msize{m}\right\} \cap \partial_{\C}^{k-1} f \right| \right] = \Px^{k-1}\left(\partial_{\C}^{k-1} f\right) \Msize{m} \geq \dprob \Msize{m},
\end{equation*}
a Chernoff bound implies that
\begin{align*}
\P\left( \left| \left\{ S_i^{(k)} : i \leq \Msize{m}\right\} \cap \partial_{\C}^{k-1} f \right| < \tilde{M}(m) \right)
& \leq \exp\left\{ - \Msize{m} \Px^{k-1}\left(\partial_{\C}^{k-1} f\right) / 8\right\}
\\ & \leq \exp\left\{ - \Msize{m} \dprob / 8\right\}.
\end{align*}
Thus, we have that \eqref{eqn:basic-Mk-bigsum} is at most
\begin{align*}
\sum_{m \geq \init} \sum_{k=2}^{\bdim_f}  &\exp\left\{ - \Msize{m} \dprob / 8\right\}
\leq \sum_{m \geq \init} \bdim_f \cdot \exp\left\{ - \Msize{m} \dprob / 8\right\}
\leq \sum_{m \geq \Msize{\init}} \bdim_f \cdot \exp\left\{ - m \dprob / 8\right\}
\\ &\leq \bdim_f \cdot \exp\left\{- \tilde{M}(\init) / 4\right\} + \bdim_f \cdot \int_{\Msize{\init}}^{\infty} \exp\left\{ - x \dprob / 8\right\} {\rm d}x
\\ &= \bdim_f \cdot \left(1 + 8 / \dprob \right) \cdot \exp\left\{ - \tilde{M}(\init) / 4 \right\}
\\ &\leq \left(9 \bdim_f / \dprob \right) \cdot \exp\left\{- \tilde{M}(\init) / 4 \right\}.
\end{align*}
Note that since $\P(H^{\prime}) = 1$, defining
\begin{equation*}
H_{\init}^{(i)} = \left\{ \forall k \in \left\{2,\ldots,\bdim_f\right\}, \forall m \geq \init, \left| \left\{ S_i^{(k)} : i \leq \Msize{m}\right\} \cap \partial_{\C}^{k-1} f \right| \geq \tilde{M}(m) \right\} \cap H^{\prime}
\end{equation*}
has the required properties.
\end{proof}

\begin{lemma}
\label{lem:Mball-core}
For any $\init \in \nats$, there is an event $G^{(i)}_{\init}$ with
\begin{equation*}
\P\left(H_{\init}^{(i)} \setminus G^{(i)}_{\init}\right) \leq \left(121 \bdim_f / \dprob\right) \cdot \exp\left\{ -\tilde{M}(\init) / 60\right\}
\end{equation*}
such that, on $G_{\init}^{(i)}$, if $\bdim_f \geq 2$, then for every integer $\s \geq \init$ and $k \in\left\{2,\ldots,\bdim_f\right\}$,
$\forall r \in \left(0,r_{1/6}\right]$,
\begin{equation*}
M_{\s}^{(k)}\left(\Ball\left(f,r\right)\right) \leq (3/2) \left| \left\{ S_i^{(k)} : i \leq \Msize{\s} \right\} \cap \partial_{\C}^{k-1} f\right|.
\end{equation*}
\upthmend{-1.15cm}
\end{lemma}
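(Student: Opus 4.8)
The plan is to peel off two deterministic observations, reducing the statement to a routine two‑sided concentration bound, and then to a union bound over $\s$ and $k$. First, since $\Ball(f,r)\subseteq \Ball(f,r_{1/6})$ whenever $r\le r_{1/6}$, and $M^{(k)}_{\s}(\cdot)$ is monotone in its argument set, it suffices to prove the bound at $r=r_{1/6}$; the dependence on $r$ then disappears entirely. Second — the crucial structural point — because $\S^{k-1}(\Ball(f,r'))$ is nondecreasing in $r'$, we have $\partial^{k-1}_{\C} f=\bigcap_{r'>0}\S^{k-1}(\Ball(f,r'))\subseteq\S^{k-1}(\Ball(f,r_{1/6}))$ deterministically, so for every realization of $W_2$,
\begin{equation*}
M^{(k)}_{\s}\!\left(\Ball(f,r_{1/6})\right)=\left|\left\{S^{(k)}_i:i\le\Msize{\s}\right\}\cap\partial^{k-1}_{\C}f\right|+A^{(k)}_{\s},\qquad A^{(k)}_{\s}:=\sum_{i=1}^{\Msize{\s}}\ind_{\bar{\partial}^{k-1}_{\C}f\,\cap\,\S^{k-1}(\Ball(f,r_{1/6}))}\!\left(S^{(k)}_i\right).
\end{equation*}
(I ignore the harmless $\max\{1,\cdot\}$, which is inactive once the first term is $\ge 1$; when $\tilde M(\init)<1$ the claimed probability bound exceeds $1$ and the statement is vacuous.) Hence the desired inequality $M^{(k)}_{\s}(\Ball(f,r_{1/6}))\le (3/2)\,|\{S^{(k)}_i:i\le\Msize{\s}\}\cap\partial^{k-1}_{\C}f|$ is equivalent to $A^{(k)}_{\s}\le\tfrac12\,|\{S^{(k)}_i:i\le\Msize{\s}\}\cap\partial^{k-1}_{\C}f|$.

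Next I would invoke Lemma~\ref{lem:converging-conditional} in its deterministic guise. From its proof, $\cc(r)=\max_{j}\cc_j(r)$ with $\cc_j(r)=\Px^{j}(\bar{\partial}^{j}_{\C}f\cap\S^{j}(\Ball(f,r)))/\Px^{j}(\partial^{j}_{\C}f)$, and since $\cc$ is monotone with $r_{1/6}<\sup\{r:\cc(r)<1/6\}$ we get $\cc(r_{1/6})<1/6$. Writing $p_{j}=\Px^{j}(\partial^{j}_{\C}f)$, the summands defining $A^{(k)}_{\s}$ are i.i.d.\ Bernoulli with mean $q:=\cc_{k-1}(r_{1/6})\,p_{k-1}<p_{k-1}/6$, while the $\ind_{\partial^{k-1}_{\C}f}(S^{(k)}_i)$ are i.i.d.\ Bernoulli$(p_{k-1})$ (the $S^{(k)}_i$ are deterministic functions of $W_2$ and $\Ball(f,r_{1/6})$ is deterministic). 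Using also that $p_{j}$ is nonincreasing in $j$ (shattering a $(j{+}1)$‑tuple forces shattering its first $j$ coordinates), so $p_{k-1}\ge p_{\bdim_f-1}=\dprob$ for $k\le\bdim_f$, I obtain $\E A^{(k)}_{\s}=\Msize{\s}q<\Msize{\s}p_{k-1}/6$ and $\mathrm{Var}(A^{(k)}_{\s})\le\Msize{\s}q<\Msize{\s}p_{k-1}/6$. A Bernstein inequality applied to $A^{(k)}_{\s}$ — rather than a multiplicative Chernoff bound, which is the one non‑obvious choice here, needed because the variance proxy is then controlled by $p_{k-1}$ no matter how small $q$ actually is — gives $A^{(k)}_{\s}\le\tfrac14\Msize{\s}p_{k-1}$ off an event of probability $\le\exp\{-\Msize{\s}p_{k-1}/c_1\}\le\exp\{-\Msize{\s}\dprob/c_1\}$ for an absolute constant $c_1$, and a one‑sided Chernoff bound gives $|\{S^{(k)}_i:i\le\Msize{\s}\}\cap\partial^{k-1}_{\C}f|\ge\tfrac12\Msize{\s}p_{k-1}$ off an event of probability $\le\exp\{-\Msize{\s}p_{k-1}/8\}\le\exp\{-\Msize{\s}\dprob/8\}$. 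On the intersection of these two events, $A^{(k)}_{\s}\le\tfrac14\Msize{\s}p_{k-1}\le\tfrac12\,|\{S^{(k)}_i:i\le\Msize{\s}\}\cap\partial^{k-1}_{\C}f|$, which is exactly what is needed.

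Finally I would set $G^{(i)}_{\init}$ equal to $H^{(i)}_{\init}$ intersected with the intersection, over all integers $\s\ge\init$ and all $k\in\{2,\dots,\bdim_f\}$, of the two good events just described (this keeps $G^{(i)}_{\init}\subseteq H^{(i)}_{\init}\subseteq H^{\prime}$). A union bound then gives $\P(H^{(i)}_{\init}\setminus G^{(i)}_{\init})\le\sum_{\s\ge\init}\sum_{k=2}^{\bdim_f}2\exp\{-\Msize{\s}\dprob/c_1\}$; using $\Msize{\s}=\s^3$ and $(\init+j)^3-\init^3\ge 3j$ for $\init\ge 1$, the inner sum over $\s$ is a geometric‑type tail bounded by $(1+O(1/\dprob))\exp\{-\Msize{\init}\dprob/c_1\}$, and since $\Msize{\init}\dprob/c_1\ge\Msize{\init}\dprob/120+\Omega(\dprob)=\tilde M(\init)/60+\Omega(\dprob)$ one absorbs the $\dprob$‑dependent prefactor into the stated constant, yielding $\P(H^{(i)}_{\init}\setminus G^{(i)}_{\init})\le(121\bdim_f/\dprob)\exp\{-\tilde M(\init)/60\}$. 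The only genuinely fiddly step in the whole argument is this last numerical calibration — choosing $c_1$ and the $\tfrac14/\tfrac12$ split so that the prefactor is exactly $121\bdim_f/\dprob$ and the exponent is exactly $\tilde M(\init)/60$; everything else is either a one‑line deterministic containment or a textbook concentration estimate.
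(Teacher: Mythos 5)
Your proof is correct, and your calibration actually lands with room to spare (the exponents you obtain are on the order of $\exp\{-\tilde{M}(\init)/28\}$, comfortably inside the stated $\exp\{-\tilde{M}(\init)/60\}$). The route is genuinely a little different from the paper's. The paper conditions on $M_{\s}^{(k)}(\Ball(f,r_{1/6}))$, observes that the shattered tuples among $S_1^{(k)},\ldots,S_{\Msize{\s}}^{(k)}$ are then conditionally i.i.d.\ from the law of $S_1^{(k)}$ given shattering, and applies a single multiplicative Chernoff lower-tail bound to the fraction of those landing in $\partial_{\C}^{k-1} f$ (whose conditional mean is at least $1-\cc(r_{1/6}) \geq 5/6$ by Lemma~\ref{lem:converging-conditional}); it then needs Lemma~\ref{lem:basic-Mk-lower-bound} to convert the resulting conditional bound $\exp\{-M_{\s}^{(k)}(\cdot)/60\}$ into $\exp\{-\tilde{M}(\s)/60\}$, which is precisely why the statement is phrased relative to $H_{\init}^{(i)}$. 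You instead split $M_{\s}^{(k)}(\Ball(f,r_{1/6}))$ additively into the core count plus the non-core shattered count $A_{\s}^{(k)}$ and control two unconditional binomial counts separately (Bernstein above for $A_{\s}^{(k)}$, Chernoff below for the core count), using $\Px^{k-1}(\partial_{\C}^{k-1}f) \geq \dprob$ so that both exponents are already expressed in terms of $\Msize{\s}\dprob$. This buys you complete independence from Lemma~\ref{lem:basic-Mk-lower-bound} --- your good event needs no intersection with $H_{\init}^{(i)}$ except cosmetically --- at the cost of two concentration inequalities instead of one and the small bookkeeping around the $\max\{1,\cdot\}$ in the definition of $M_{\s}^{(k)}$, which you handle correctly via vacuity when $\tilde{M}(\init)<1$. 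The supporting deterministic facts you invoke ($\partial_{\C}^{k-1} f \subseteq \S^{k-1}(\Ball(f,r_{1/6}))$, $\cc(r_{1/6})<1/6$ from the definition of $r_{1/6}$ and monotonicity of $\cc$, and $\Px^{j}(\partial_{\C}^{j} f)$ nonincreasing in $j$) are all valid, as is the final monotonicity reduction to $r = r_{1/6}$, which matches the paper's.
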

\begin{proof}
Fix integers $\s \geq \init$ and $k \in\left\{2,\ldots, \bdim_f\right\}$, and let $r = r_{1/6}$.
Define the set $\hat{\S}^{k-1} = \left\{S_i^{(k)} : i \leq \Msize{\s}\right\} \cap \S^{k-1}\left(\Ball\left(f,r\right)\right)$.
Note $\left|\hat{\S}^{k-1}\right| = M^{(k)}_{\s}\left(\Ball\left(f,r\right)\right)$
and the elements of $\hat{\S}^{k-1}$ are conditionally i.i.d. given $M^{(k)}_{\s}\left(\Ball\left(f,r\right)\right)$,
each with conditional distribution equivalent to the conditional
$S^{(k)}_1 \Big| \left\{S^{(k)}_1 \in \S^{k-1}\left(\Ball\left(f,r\right)\right)\right\}$.
In particular,
$\E\left[\big|\hat{\S}^{k-1} \cap \partial_{\C}^{k-1} f\big| \Big| M_{s}^{(k)}\left(\Ball\left(f,r\right)\right)\right] = \Px^{k-1}\left(\partial_{\C}^{k-1} f \Big| \S^{k-1}\left(\Ball\left(f,r\right)\right)\right) M_{s}^{(k)}\left(\Ball\left(f,r\right)\right)$.
Define the event
\begin{equation*}
G_{\init}^{(i)}(k,\s) = \left\{\left|\hat{\S}^{k-1}\right| \leq (3/2) \left| \hat{\S}^{k-1} \cap \partial_{\C}^{k-1} f \right|\right\}.
\end{equation*}
By Lemma~\ref{lem:converging-conditional} (indeed by definition of $\cc(r)$ and $r_{1/6}$) we have
\begin{align}
& 1 - \P\left(G_{\init}^{(i)}(k,\s) \Big| M^{(k)}_{\s}\left(\Ball\left(f,r\right)\right)\right) \notag
\\ & = \P\left( \big| \hat{\S}^{k-1} \cap \partial_{\C}^{k-1} f\big| < (2/3) M^{(k)}_{\s}\left(\Ball\left(f,r\right)\right) \Big | M^{(k)}_{\s}\left(\Ball\left(f,r\right)\right)\right) \notag
\\ & \leq \P\left( \big| \hat{\S}^{k-1} \cap \partial_{\C}^{k-1} f\big| < (4/5) \left(1-\cc\left(r\right)\right) M^{(k)}_{\s}\left(\Ball\left(f,r\right)\right) \Big| M^{(k)}_{\s}\left(\Ball\left(f,r\right)\right) \right) \notag
\\ & \leq \P\left( \big| \hat{\S}^{k-1} \cap \partial_{\C}^{k-1} f\big| < (4/5) \Px^{k-1}\left( \partial_{\C}^{k-1} f \Big| \S^{k-1}\left(\Ball\left(f,r\right)\right)\right) M^{(k)}_{\s}\!\left(\Ball\left(f,r\right)\right) \Big| M^{(k)}_{\s}\!\left(\Ball\left(f,r\right)\right) \right)\! . \label{eqn:Hiiksr-prebound}
\end{align}
By a Chernoff bound, \eqref{eqn:Hiiksr-prebound} is at most
\begin{multline*}
\exp\left\{ - M^{(k)}_{\s}\left(\Ball\left(f,r\right)\right) \Px^{k-1}\left(\partial_{\C}^{k-1} f \Big| \S^{k-1}\left(\Ball\left(f,r\right)\right)\right) / 50\right\}
\\ \leq \exp\left\{ - M^{(k)}_{\s}\left(\Ball\left(f,r\right)\right) \left(1-\cc\left(r\right)\right) / 50\right\}
\leq \exp\left\{ - M^{(k)}_{\s}\left(\Ball\left(f,r\right)\right) / 60\right\}.
\end{multline*}
Thus, by Lemma~\ref{lem:basic-Mk-lower-bound},
\begin{align*}
& \P\left(H_{\init}^{(i)} \setminus G_{\init}^{(i)}(k,\s)\right)
 \leq \P\left(\left\{ M_{\s}^{(k)}\left(\Ball\left(f,r\right)\right) \geq \tilde{M}(\s)\right\} \setminus G_{\init}^{(i)}(k,\s)\right)
\\ & = \E\left[ \left(1 - \P\left(G_{\init}^{(i)}(k,\s) \Big| M^{(k)}_{\s}\left(\Ball\left(f,r\right)\right)\right)\right) \ind_{\left[\tilde{M}(\s), \infty\right)}\left(M^{(k)}_{\s}\left(\Ball\left(f,r\right)\right)\right)\right]
\\ & \leq \E\left[ \exp\left\{ - M^{(k)}_{\s}\left(\Ball\left(f,r\right)\right) / 60 \right\} \ind_{\left[\tilde{M}(\s),\infty\right)}\left(M^{(k)}_{\s}\left(\Ball\left(f,r\right)\right)\right)\right]
\leq \exp\left\{ - \tilde{M}(\s) / 60 \right\}.
\end{align*}
Now defining $G_{\init}^{(i)} = \bigcap_{\s \geq \init} \bigcap_{k=2}^{\bdim_f} G^{(i)}_{\init}(k,\s)$, a union bound implies
\begin{align*}
\P\left(H_{\init}^{(i)} \setminus G_{\init}^{(i)}\right)
& \leq \sum_{\s \geq \init} \bdim_f \cdot \exp\left\{ - \tilde{M}(\s) / 60 \right\}
\\ & \leq \bdim_f \left( \exp\left\{ - \tilde{M}(\init) / 60 \right\} + \int_{\Msize{\init}}^{\infty} \exp\left\{- x \dprob / 120\right\} {\rm d}x\right)
\\ & = \bdim_f \left(1 + 120 / \dprob \right) \cdot \exp\left\{ - \tilde{M}(\init) / 60\right\}
\\ & \leq \left(121 \bdim_f / \dprob \right) \cdot \exp\left\{ -\tilde{M}(\init) / 60\right\}.
\end{align*}
This completes the proof for $r = r_{1/6}$.
Monotonicity extends the result to any $r \in \left(0,r_{1/6}\right]$.
\end{proof}

\begin{lemma}
\label{lem:kstar-good-labels}
There exist $(\C,\Px,f,\gamma)$-dependent constants $\init^{*} \in \nats$ and $c^{(ii)} \in (0,\infty)$ such that,
for any integer $\init \geq \init^{*}$,
there is an event $H_{\init}^{(ii)} \subseteq G_{\init}^{(i)}$ with
\begin{equation}
\label{eqn:Hii-prob-bound}
\P\left(H^{(i)}_{\init} \setminus H_{\init}^{(ii)}\right) \leq c^{(ii)} \cdot \exp\left\{ - \tilde{M}(\init)^{1/3} / 60 \right\}
\end{equation}
such that, on $H_{\init}^{(i)} \cap H_{\init}^{(ii)}$,
$\forall \s,m,\ell,k \in \nats$ with $\ell < m$ and $k \leq \bdim_f$,
for any set of classifiers $\H$ with $\truV_{\ell} \subseteq \H$,
if either $k=1$, or $\s \geq \init$ and $\H \subseteq \Ball(f,r_{(1-\gamma)/6})$, then
\begin{equation*}
\hat{\Delta}_{\s}^{(k)}\left(X_m, W_2, \H\right) < \gamma \implies \hat{\Gamma}_{\s}^{(k)}\left(X_m, -f(X_m), W_2, \H\right) < \hat{\Gamma}_{\s}^{(k)}\left(X_m,f(X_m),W_2,\H\right).
\end{equation*}
In particular, for $\delta \in (0,1)$ and $\init \geq \max\{\init((1-\gamma)/6;\delta),\init^{*}\}$, on $H_{\init}(\delta) \cap H_{\init}^{(i)} \cap H_{\init}^{(ii)}$,
this is true for $\H = \truV_{\ell}$ for every $k,\ell,m,\s \in \nats$ satisfying $\init \leq \ell < m$, $\init \leq \s$, and $k \leq \bdim_f$.
\thmend
\end{lemma}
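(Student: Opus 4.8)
The plan is to treat the cases $k=1$ and $k\ge 2$ separately, and to build $H^{(ii)}_{\init}$ as the intersection of $G^{(i)}_{\init}$ with one probability-one event (handling the measure-zero exceptional sets produced by Lemma~\ref{lem:good-labels}, and the $k=1$ argument) and one ``no large deviation'' event whose failure probability decays as required. For $k=1$, note $\hat\Delta^{(1)}_{\s}(X_m,W_2,\H)=\ind_{\DIS(\H)}(X_m)$, so (as $\gamma<1$) the hypothesis $\hat\Delta^{(1)}_{\s}(X_m,W_2,\H)<\gamma$ forces $X_m\notin\DIS(\H)$; then all $h\in\H$ assign $X_m$ a common label $y_0$, whence $\hat\Gamma^{(1)}_{\s}(X_m,y_0,W_2,\H)=1$ and $\hat\Gamma^{(1)}_{\s}(X_m,-y_0,W_2,\H)=0$, and it remains only to see $y_0=f(X_m)$. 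Since $\truV_{\ell}\subseteq\H$ and $\ell<m$, this follows once $\truV_{\ell}[(X_m,f(X_m))]\neq\emptyset$, which for each pair $\ell<m$ holds with probability one by the $k=0$ instance of the argument behind Lemma~\ref{lem:Vshat-to-Boundaries}, applied to the $\ell+1$ i.i.d.\ points $X_1,\dots,X_{\ell},X_m$ together with $f\in\cl(\C)$; intersecting over the countably many pairs $\ell<m$ gives a probability-one event folded into $H^{(ii)}_{\init}$.

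For $k\ge 2$, write $y^{\ast}=f(X_m)$. The first ingredient is an algebraic identity: for a size-$(k-1)$ set $S$ shattered by $\H$, $\H$ shatters $S\cup\{X_m\}$ iff both $\H[(X_m,+1)]$ and $\H[(X_m,-1)]$ shatter $S$, so $\ind_{\bar{\S}^{k}(\H)}(S\cup\{X_m\})\le\ind_{\bar{\S}^{k-1}(\H[(X_m,+1)])}(S)+\ind_{\bar{\S}^{k-1}(\H[(X_m,-1)])}(S)$; summing against $\ind_{\S^{k-1}(\H)}$ over $S=S^{(k)}_i$, $i\le\s^3$, and dividing by $M^{(k)}_{\s}(\H)$ gives $1-\hat\Delta^{(k)}_{\s}(X_m,W_2,\H)\le\hat\Gamma^{(k)}_{\s}(X_m,y^{\ast},W_2,\H)+\hat\Gamma^{(k)}_{\s}(X_m,-y^{\ast},W_2,\H)$. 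Hence $\hat\Delta^{(k)}_{\s}(X_m,W_2,\H)<\gamma$ forces $\hat\Gamma^{(k)}_{\s}(X_m,y^{\ast},W_2,\H)>1-\gamma-\hat\Gamma^{(k)}_{\s}(X_m,-y^{\ast},W_2,\H)$, so it suffices to prove $\hat\Gamma^{(k)}_{\s}(X_m,-y^{\ast},W_2,\H)\le(1-\gamma)/3$, since then the right side exceeds $2(1-\gamma)/3>(1-\gamma)/3$, giving the claimed strict inequality.

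To bound $\hat\Gamma^{(k)}_{\s}(X_m,-y^{\ast},W_2,\H)$ uniformly in $\H$, I use a monotonicity sandwich. Its numerator is $\sum_{i\le\s^3}\ind[\H\text{ shatters }S^{(k)}_i]\,\ind[\H[(X_m,f(X_m))]\text{ does not shatter }S^{(k)}_i]$, which by $\H\subseteq\Ball(f,r_{(1-\gamma)/6})$ on the shattering side and $\truV_{\ell}\subseteq\H$ on the non-shattering side is at most $\sum_{i\le\s^3}\ind_{\S^{k-1}(\Ball(f,r_{(1-\gamma)/6}))}(S^{(k)}_i)\,\ind_{\bar{\S}^{k-1}(\truV_{\ell}[(X_m,f(X_m))])}(S^{(k)}_i)$, while on $H'$ (Lemma~\ref{lem:Vshat-to-Boundaries}) $M^{(k)}_{\s}(\H)\ge\sum_{i\le\s^3}\ind_{\partial^{k-1}_{\C}f}(S^{(k)}_i)$; both bounds are free of $\H$. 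The numerator has mean $\s^3\,\Px^{k-1}\big(\S^{k-1}(\Ball(f,r_{(1-\gamma)/6}))\cap\bar{\S}^{k-1}(\truV_{\ell}[(X_m,f(X_m))])\big)$, which on $H'$ and for $\Px$-a.e.\ $X_m$ is at most $\big(\cc(r_{(1-\gamma)/6})+(1-\gamma)/6\big)\Px^{k-1}(\partial^{k-1}_{\C}f)<\tfrac{1-\gamma}{3}\Px^{k-1}(\partial^{k-1}_{\C}f)$ — the first term by the definitions of $\cc$ and $r_{(1-\gamma)/6}$, the second by the $\partial^{k-1}f$-conditioned form of Lemma~\ref{lem:good-labels} for $\truV_{\ell}$ (legitimate since $\truV_{\ell}\subseteq\Ball(f,r_{(1-\gamma)/6})$), evaluated at the independent point $X_m$; the denominator has mean $\s^3\,\Px^{k-1}(\partial^{k-1}_{\C}f)$. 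Choosing $\init^{*}$ large enough that the deviations of both sums from their means are absorbed, the ratio is $\le(1-\gamma)/3$ whenever $\s\ge\init^{*}$.

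The deviation bound is then a union bound: fix $\s\ge\init$ and $k\le\bdim_f$; over all $\ell<m$ in $\nats$ the profiles $\big(\ind_{\bar{\S}^{k-1}(\truV_{\ell}[(X_m,f(X_m))])}(S^{(k)}_i)\big)_{i\le\s^3}$ realize only $\poly(\s)$ distinct values, being monotone nonincreasing in $\ell$ (so flipping at most $\s^3$ times) and, for fixed $\ell$, depending on $X_m$ only through a partition of $\X$ into $\poly(\s)$ cells by Sauer's lemma applied to $\truV_{\ell}\subseteq\C$; the denominator bound involves neither $\ell$ nor $m$. Each configuration is controlled by a Chernoff bound $\exp\{-c\s^3\}$ using $M^{(k)}_{\s}(\H)\ge\tilde M(\s)$ and the upper bound of Lemma~\ref{lem:Mball-core}, and summing over the $\poly(\s)$ configurations, over $k\le\bdim_f$, and over $\s\ge\init$ gives total probability $\le c^{(ii)}\exp\{-c'\init^{3}\}\le c^{(ii)}\exp\{-\tilde M(\init)^{1/3}/60\}$ for $\init\ge\init^{*}$ (recall $\tilde M(\init)^{1/3}=\init(\dprob/2)^{1/3}$, so the stated bound is comfortably weaker). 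The ``in particular'' clause follows from Lemma~\ref{lem:VinB}: for $\ell\ge\init\ge\max\{\init((1-\gamma)/6;\delta),\init^{*}\}$, on $H_{\init}(\delta)$ we have $\truV_{\ell}\subseteq\Ball(f,\vrad(\ell;\delta))\subseteq\Ball(f,r_{(1-\gamma)/6})$, so the general statement applies with $\H=\truV_{\ell}$. The main obstacle is precisely the uniformity over the unrestricted, data-dependent family of sets $\H$ with $\truV_{\ell}\subseteq\H\subseteq\Ball(f,r_{(1-\gamma)/6})$, for which no naive union bound exists; the monotonicity sandwich — replacing $\H$ by the fixed $\Ball(f,r_{(1-\gamma)/6})$ on one side and by $\truV_{\ell}$ on the other, and bounding $M^{(k)}_{\s}(\H)$ below by a count over $\partial^{k-1}_{\C}f$ — is what removes $\H$, after which Sauer's lemma and monotonicity in $\ell$ make the remaining union bound finite.
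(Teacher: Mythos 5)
Your $k=1$ argument and the algebraic identity reducing the claim to an upper bound on $\hat\Gamma^{(k)}_{\s}(X_m,-f(X_m),W_2,\H)$ are fine and match the paper's structure. The divergence, and the gap, is in how you control that upper bound uniformly over $\ell, m, X_m$.

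The paper's monotonicity sandwich goes one step further than yours. After replacing $\bar{\S}^{k-1}(\H[(X_m,f(X_m))])$ by $\bar{\S}^{k-1}(\truV_m)$ (monotonicity, valid since $\truV_m\subseteq\H[(X_m,f(X_m))]$ when $\ell<m$), it then observes $\bar{\S}^{k-1}(\truV_m)\subseteq\bar{\partial}^{k-1}_{\truV_m}f$, and then invokes the identity $\ind_{\partial^{k-1}_{\truV_m}f}(S_i^{(k)})=\ind_{\partial^{k-1}_{\C}f}(S_i^{(k)})$ which holds on $H'$ by Lemma~\ref{lem:Vshat-to-Boundaries}. After this step, the numerator is $\sum_i\ind_{\bar{\partial}^{k-1}_{\C}f}(S_i^{(k)})\ind_{\S^{k-1}(\Ball(f,r_{(1-\gamma)/6}))}(S_i^{(k)})$, which is entirely free of $\ell$, $m$, $X_m$, and $\H$. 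So a \emph{single} Hoeffding bound per $(k,\s)$, conditionally on $M_{\s}^{(k)}(\Ball(f,r_{(1-\gamma)/6}))$, gives the desired uniform control, and the union is only over the countable pairs $(k,\s)$. This is what makes the proof go through cleanly, and it explains the $(1-\gamma)/2$ threshold (you only need the deviation added to $\cc(r_{(1-\gamma)/6})<(1-\gamma)/6$ to stay below $(1-\gamma)/6$, giving $(3/2)\cdot 2\cdot(1-\gamma)/6=(1-\gamma)/2$ after the Lemma~\ref{lem:Mball-core} factor).

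You instead stop at $\bar{\S}^{k-1}(\truV_{\ell}[(X_m,f(X_m))])$, which leaves the numerator dependent on $(\ell,X_m)$, and then try to union-bound over the resulting countable family via Sauer's lemma. This is where the argument breaks down as written. The ``configurations'' $(\ind_{\bar{\S}^{k-1}(\truV_{\ell}[(X_m,f(X_m))])}(S^{(k)}_i))_i$ are determined by the $S_i^{(k)}$'s themselves, so you cannot ``fix a configuration and apply a Chernoff bound over the $S_i^{(k)}$'s'' without circularity; you would need a symmetrization / uniform VC deviation bound over the fixed (data-indexed) class $\{\S^{k-1}(\Ball(f,r_{(1-\gamma)/6}))\cap\bar{\S}^{k-1}(\truV_{\ell}[(x,f(x))]):\ell\in\nats,\,x\in\X\}$, with its growth function bounded via Sauer. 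That could be made to work, but it is a genuine extra lemma you have not supplied, and it adds unnecessary machinery relative to the paper's deterministic replacement. Separately, the use of Lemma~\ref{lem:good-labels} at the random point $X_m$ needs the $\Px$-null exceptional set to be avoided simultaneously for every $(k,\ell,m)$, a countable intersection of probability-one events that you mention in the $k=1$ argument but should also fold in here. In short: the route is different, plausible in outline, but the uniform-deviation step is not established, and the paper's use of Lemma~\ref{lem:Vshat-to-Boundaries} to land on $\bar{\partial}^{k-1}_{\C}f$ is precisely the device that removes the need for it.
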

\begin{proof}
Let $\init^{*} = (6 / (1-\gamma)) \cdot \left(2 / \dprob\right)^{1/3}$, and 
consider any $\init,k,\ell,m,\s,\H$ as described above.
If $k = 1$, the result clearly holds.  In particular, Lemma~\ref{lem:Vshat-to-Boundaries} implies that on $H_{\init}^{(i)}$,
$\H[(X_m,f(X_m))] \supseteq \truV_m \neq \emptyset$, so that some $h \in \H$ has $h(X_m) = f(X_m)$, and therefore
\begin{equation*}
\hat{\Gamma}_{\s}^{(1)}\left(X_{m}, -f(X_{m}), W_2, \H\right) = \ind_{\bigcap\limits_{h \in \H}\{h(X_{m})\}}(-f(X_{m})) = 0,
\end{equation*}
and since
$\hat{\Delta}_{\s}^{(1)}\left(X_{m}, W_2, \H\right) = \ind_{\DIS\left(\H\right)}(X_{m})$,
if $\hat{\Delta}_{\s}^{(1)}\left(X_{m}, W_2, \H\right) < \gamma$, then since $\gamma < 1$ we have $X_m \notin \DIS(\H)$, so that
\begin{equation*}
\hat{\Gamma}_{\s}^{(1)}\left(X_{m},f(X_{m}),W_2,\H\right) = \ind_{\bigcap\limits_{h \in \H}\{h(X_{m})\}}(f(X_{m})) = 1.
\end{equation*}

Otherwise, suppose $2 \leq k \leq \bdim_f$.
Note that on $H_{\init}^{(i)} \cap G_{\init}^{(i)}$, $\forall m \in \nats$,
and any $\H$ with $\truV_{\ell} \subseteq \H \subseteq \Ball(f,r_{(1-\gamma)/6})$ for some $\ell \in \nats$,
\begin{align*}
&\hat{\Gamma}_{\s}^{(k)}\left(X_{m},-f(X_{m}),W_2,\H\right) &
\\ & = \frac{1}{M_{\s}^{(k)}(\H)}\sum_{i=1}^{\Msize{\s}} \ind_{\bar{\S}^{k-1}(\H[(X_m,f(X_m))])}\left(S_i^{(k)}\right) \ind_{\S^{k-1}(\H)}\left(S_i^{(k)}\right) &
\\ & \leq \frac{1}{\left|\left\{ S_i^{(k)} : i \leq \Msize{\s} \right\} \cap \partial_{\H}^{k-1} f\right|} \sum_{i=1}^{\Msize{\s}} \ind_{\bar{\S}^{k-1}\left(\truV_m\right)}\left(S_i^{(k)}\right) \ind_{\S^{k-1}\left(\Ball(f,r_{(1-\gamma)/6})\right)}\left(S_i^{(k)}\right) &\text{ (monotonicity)}
\\ & \leq \frac{1}{\left|\left\{ S_i^{(k)} : i \leq \Msize{\s} \right\} \cap \partial_{\H}^{k-1} f\right|} \sum_{i=1}^{\Msize{\s}} \ind_{\bar{\partial}^{k-1}_{\truV_m} f}\left(S_i^{(k)}\right) \ind_{\S^{k-1}\left(\Ball(f,r_{(1-\gamma)/6})\right)}\left(S_i^{(k)}\right) &\text{ (monotonicity)}
\\ & = \frac{1}{\left|\left\{ S_i^{(k)} : i \leq \Msize{\s} \right\} \cap \partial_{\C}^{k-1} f\right|} \sum_{i=1}^{\Msize{\s}} \ind_{\bar{\partial}^{k-1}_{\C} f}\left(S_i^{(k)}\right) \ind_{\S^{k-1}\left(\Ball(f,r_{(1-\gamma)/6})\right)}\left(S_i^{(k)}\right) &\text{ (Lemma~\ref{lem:Vshat-to-Boundaries})}
\\ & \leq \frac{3}{2 M_{\s}^{(k)}(\Ball(f,r_{(1-\gamma)/6}))} \sum_{i=1}^{\Msize{\s}} \ind_{\bar{\partial}^{k-1}_{\C} f}\left(S_i^{(k)}\right) \ind_{\S^{k-1}\left(\Ball(f,r_{(1-\gamma)/6})\right)}\left(S_i^{(k)}\right). &\text{ (Lemma~\ref{lem:Mball-core})}
\end{align*}
For brevity, let $\hat{\Gamma}$ denote this last quantity, and let $M_{ks} = M_{s}^{(k)}\left(\Ball\left(f, r_{(1-\gamma)/6}\right)\right)$.
By Hoeffding's inequality, we have
\begin{equation*}
\P\left( (2/3)\hat{\Gamma} > \Px^{k-1}\left( \bar{\partial}^{k-1}_{\C} f \Big| \S^{k-1}\left(\Ball\left(f,r_{(1-\gamma)/6}\right)\right)\right)  + M_{ks}^{-1/3} \Bigg| M_{ks}\right)
 \leq \exp\left\{- 2 M_{ks}^{1/3} \right\}.
\end{equation*}
Thus, by Lemmas \ref{lem:converging-conditional}, \ref{lem:basic-Mk-lower-bound} and \ref{lem:Mball-core},
\begin{align*}
&\P\left( \left\{(2/3)\hat{\Gamma}_{\s}^{(k)}\left(X_{m},-f(X_{m}),W_2,\H\right) > \cc\left(r_{(1-\gamma)/6}\right) + \tilde{M}(\s)^{-1/3}\right\} \cap H_{\init}^{(i)} \cap G_{\init}^{(i)}\right) \\
&\leq \P\left( \left\{(2/3)\hat{\Gamma} > \Px^{k-1}\left( \bar{\partial}^{k-1}_{\C} f \Big| \S^{k-1}\left(\Ball\left(f,r_{(1-\gamma)/6}\right)\right)\right) +\tilde{M}(\s)^{-1/3}\right\} \cap H_{\init}^{(i)}\right) \\
& \leq \P\left( \left\{(2/3)\hat{\Gamma} > \Px^{k-1}\left( \bar{\partial}^{k-1}_{\C} f \Big| \S^{k-1}\left(\Ball\left(f,r_{(1-\gamma)/6}\right)\right)\right) + M_{ks}^{-1/3}\right\} \cap \{M_{ks} \geq \tilde{M}(\s)\} \right)\\
& = \E\left[\P\left( (2/3)\hat{\Gamma} > \Px^{k-1}\left( \bar{\partial}^{k-1}_{\C} f\Big| \S^{k-1}\left(\Ball\left(f,r_{(1-\gamma)/6}\right)\right)\right) + M_{ks}^{-1/3} \Bigg| M_{ks}\right) \ind_{[\tilde{M}(\s),\infty)}\left(M_{ks}\right)\right]\\
& \leq \E\left[ \exp\left\{- 2 M_{ks}^{1/3}\right\} \ind_{[\tilde{M}(\s),\infty)}\left(M_{ks}\right)\right] \leq \exp\left\{- 2 \tilde{M}(\s)^{1/3}\right\}.
\end{align*}
Thus, there is an event $H_{\init}^{(ii)}(k,\s)$ with
$\P\left(H_{\init}^{(i)} \cap G_{\init}^{(ii)} \setminus H_{\init}^{(ii)}(k,\s)\right) \leq \exp\left\{ - 2 \tilde{M}(\s)^{1/3} \right\}$
such that
\begin{equation*}
\hat{\Gamma}_{\s}^{(k)}\left(X_{m},-f(X_{m}), W_2, \H\right) \leq (3/2)\left(\cc\left(r_{(1-\gamma)/6}\right) + \tilde{M}(\s)^{-1/3}\right)
\end{equation*}
holds for these particular values of $k$ and $\s$.

To extend to the full range of values, we simply take
$H_{\init}^{(ii)} = G_{\init}^{(i)} \cap \bigcap_{\s \geq \init} \bigcap_{k \leq \bdim_f} H_{\init}^{(ii)}(k,\s)$.
Since $\init \geq (2 / \dprob)^{1/3}$, we have $\tilde{M}(\init) \geq 1$, 
so a union bound implies
\begin{align*}
\P&\left(H_{\init}^{(i)} \cap G_{\init}^{(i)} \setminus H_{\init}^{(ii)}\right)
\leq \sum_{\s \geq \init} \bdim_f \cdot \exp\left\{ - 2 \tilde{M}(\s)^{1/3}\right\}
\\ & \leq \bdim_f \cdot \left( \exp\left\{ - 2 \tilde{M}(\init)^{1/3} \right\} + \int_{\init}^{\infty} \exp\left\{ - 2 \tilde{M}(x)^{1/3}\right\}{\rm d}x\right)
\\ & = \bdim_f \left( 1 + 2^{-2/3} \dprob^{-1/3} \right) \cdot \exp\left\{ - 2 \tilde{M}(\init)^{1/3} \right\} 
\leq 2 \bdim_f \dprob^{-1/3} \cdot \exp\left\{ - 2 \tilde{M}(\init)^{1/3} \right\}.
\end{align*}
Then Lemma~\ref{lem:Mball-core} and a union bound imply
\begin{align*}
\P\left(H_{\init}^{(i)} \setminus H_{\init}^{(ii)}\right)
&\leq 2 \bdim_f \dprob^{-1/3} \cdot \exp\left\{ - 2 \tilde{M}(\init)^{1/3}\right\} + 121 \bdim_f \dprob^{-1} \cdot \exp\left\{- \tilde{M}(\init) / 60\right\}
\\ &\leq 123 \bdim_f \dprob^{-1} \cdot \exp\left\{ - \tilde{M}(\init)^{1/3} / 60\right\}.
\end{align*}

On $H_{\init}^{(i)} \cap H_{\init}^{(ii)}$, every such $\s,m,\ell,k$ and $\H$ satisfy
\begin{align}
\hat{\Gamma}_{\s}^{(k)}\left(X_{m},-f(X_{m}), W_2, \H\right)
& \leq (3/2)\left(\cc(r_{(1-\gamma)/6}) + \tilde{M}(\s)^{-1/3}\right) \notag
\\ & < (3/2) \left( (1-\gamma)/6 + (1-\gamma)/6\right) = (1-\gamma)/2, \label{eqn:half-sized}
\end{align}
where the second inequality follows by definition of $r_{(1-\gamma)/6}$ and $\s \geq \init \geq \init^{*}$.

If $\hat{\Delta}_{\s}^{(k)}\left(X_m, W_2, \H\right) < \gamma$, then
\begin{equation}
1-\gamma < 1-\hat{\Delta}_{\s}^{(k)}\left(X_m, W_2, \H\right) = \frac{1}{M^{(k)}_{\s}\left(\H\right)} \sum_{i=1}^{\Msize{\s}} \ind_{\S^{k-1}\left(\H\right)}\left(S_i^{(k)}\right) \ind_{\bar{\S}^{k}\left(\H\right)}\left(S_i^{(k)} \cup \{X_m\}\right). \label{eqn:full-sized}
\end{equation}
Finally, noting that we always have
\begin{equation*}
\ind_{\bar{\S}^{k}(\H)}\left(S_i^{(k)} \cup \{X_m\}\right) \leq \ind_{\bar{\S}^{k-1}(\H[(X_m,f(X_m))])}\left(S_i^{(k)}\right) + \ind_{\bar{\S}^{k-1}(\H[(X_m,-f(X_m))])}\left(S_i^{(k)}\right),
\end{equation*}
we have that, on the event $H_{\init}^{(i)} \cap H_{\init}^{(ii)}$,
if $\hat{\Delta}_{\s}^{(k)}\left(X_m,W_2,\H\right) < \gamma$,
then
\begin{align*}
&\hat{\Gamma}_{\s}^{(k)}\left(X_m, -f(X_m), W_2, \H\right) &
\\ &< (1-\gamma)/2 = -(1-\gamma)/2 + (1-\gamma) & \text{ by \eqref{eqn:half-sized}}
\\ & < -(1-\gamma)/2 + \frac{1}{M^{(k)}_{\s}\left(\H\right)} \sum_{i=1}^{\Msize{\s}} \ind_{\S^{k-1}\left(\H\right)}\left(S_i^{(k)}\right) \ind_{\bar{\S}^{k}\left(\H\right)}\left(S_i^{(k)} \cup \{X_m\}\right) & \text{ by \eqref{eqn:full-sized}}
\\ & \leq -(1-\gamma)/2 + \frac{1}{M^{(k)}_{\s}\left(\H\right)} \sum_{i=1}^{\Msize{\s}} \ind_{\S^{k-1}\left(\H\right)}\left(S_i^{(k)}\right) \ind_{\bar{\S}^{k-1}\left(\H[(X_m,f(X_m))]\right)}\left(S_i^{(k)}\right) &
\\ & \phantom{\leq -(1-\gamma)/2 } ~+ \frac{1}{M^{(k)}_{\s}\left(\H\right)} \sum_{i=1}^{\Msize{\s}} \ind_{\S^{k-1}\left(\H\right)}\left(S_i^{(k)}\right) \ind_{\bar{\S}^{k-1}\left(\H[(X_m,-f(X_m))]\right)}\left(S_i^{(k)}\right) &
\\ & = -(1-\gamma)/2 + \hat{\Gamma}_{\s}^{(k)}\left(X_m, -f(X_m), W_2, \H\right) + \hat{\Gamma}_{\s}^{(k)}\left(X_m, f(X_m), W_2, \H\right) &
\\ & < \hat{\Gamma}_{\s}^{(k)}\left(X_m, f(X_m), W_2, \H\right). & \text{ by \eqref{eqn:half-sized}}
\end{align*}

The final claim in the lemma statement is then implied by Lemma~\ref{lem:VinB},
since $\truV_{\ell} \subseteq \truV_{\init} \subseteq \Ball\left(f,\vrad(\init;\delta)\right) \subseteq \Ball\left(f,r_{(1-\gamma)/6}\right)$
on $H_{\init}(\delta)$.
\end{proof}

For any $k,\ell,m \in \nats$, and any $x \in \X$, define
\begin{align*}
\hat{p}_x(k,\ell,m) &= \hat{\Delta}_{m}^{(k)}\left(x, W_2, \truV_{\ell}\right)
\\ p_x(k,\ell) &= \Px^{k-1}\left( S \in \X^{k-1} : S \cup \{x\} \in \S^{k}\left(\truV_{\ell}\right) \Big| \S^{k-1}\left( \truV_{\ell} \right)\right).
\end{align*}

\begin{lemma}
\label{lem:empirical-inner}
For any $\zeta \in (0,1)$, there is a $(\C,\Px,f,\zeta)$-dependent constant $c^{(iii)}(\zeta) \in (0,\infty)$ such that,
for any $\init \in \nats$, there is an event $H_{\init}^{(iii)}(\zeta)$ with
\begin{equation*}
\P\left(H_{\init}^{(i)} \setminus H_{\init}^{(iii)}(\zeta) \right) \leq c^{(iii)}(\zeta) \cdot \exp\left\{ - \zeta^2 \tilde{M}(\init) \right\}
\end{equation*}
such that on $H_{\init}^{(i)} \cap H_{\init}^{(iii)}(\zeta)$,
$\forall k, \ell, m \in \nats$ with $\init \leq \ell \leq m$ and $k \leq \bdim_f$,
for any $x \in \X$,
\begin{equation*}
\Px\left(x : \left| p_x(k,\ell) - \hat{p}_x(k,\ell,m) \right| > \zeta\right) \leq \exp\left\{ - \zeta^2 \tilde{M}(m)\right\}.
\end{equation*}
\upthmend{-1.5cm}
\end{lemma}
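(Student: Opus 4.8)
The plan is to reduce the claim to a single Hoeffding concentration bound applied conditionally, then use Markov's inequality to turn an ``in expectation'' control of the $\Px$-mass of bad points into a ``with high probability'' statement, and finish with a union bound over the admissible triples $(k,\ell,m)$.

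First I would dispose of the case $k=1$, which is degenerate: using $\X^{0}=\{\varnothing\}$ and the convention that $\truV_{\ell}$ shatters $\varnothing$ iff $\truV_{\ell}\neq\emptyset$ (valid on $H^{\prime}\supseteq H_{\init}^{(i)}$ since $f\in\cl(\C)$), one has $p_x(1,\ell)=\ind_{\DIS(\truV_{\ell})}(x)=\hat{\Delta}_{m}^{(1)}(x,W_2,\truV_{\ell})=\hat{p}_x(1,\ell,m)$ exactly, so the bound holds with the left-hand probability equal to $0$. Then I would fix $2\le k\le\bdim_f$ and $\init\le\ell\le m$ and condition on the triple $\big(\truV_{\ell},\,X,\,(A_1,\dots,A_{\Msize{m}})\big)$, where $X\sim\Px$ is the fresh point hidden in $\Px(x:\cdots)$ and $A_i=\ind_{\S^{k-1}(\truV_{\ell})}(S_i^{(k)})$. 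Because $W_2$ is independent of $\Data_{X}$ (hence of $\truV_{\ell}$) and $X$ is independent of both, the disintegration works out: given $A_i=1$ the set $S_i^{(k)}$ has conditional law $\Px^{k-1}(\cdot\mid\S^{k-1}(\truV_{\ell}))$, the counted sets are conditionally i.i.d., and hence the numerator $\sum_{i:A_i=1}\ind_{\S^{k}(\truV_{\ell})}(S_i^{(k)}\cup\{X\})$ of $\hat{\Delta}_{m}^{(k)}(X,W_2,\truV_{\ell})$ is a sum of $\sum_i A_i$ independent Bernoulli variables with common mean $p_X(k,\ell)$. On $H_{\init}^{(i)}$, Lemmas~\ref{lem:Vshat-to-Boundaries} and \ref{lem:basic-Mk-lower-bound} give $\sum_i A_i=M_{m}^{(k)}(\truV_{\ell})\ge\tilde{M}(m)\ge1$, so Hoeffding's inequality gives, on $H_{\init}^{(i)}$, $\P(|\hat{p}_X(k,\ell,m)-p_X(k,\ell)|>\zeta\mid\truV_{\ell},X,(A_i)_i)\le 2\exp\{-2\zeta^2\tilde{M}(m)\}$.

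Taking expectations --- over $X$ first, which replaces the left side by $\Px(x:|\hat{p}_x(k,\ell,m)-p_x(k,\ell)|>\zeta)$, then over everything else --- yields $\E[\ind_{H_{\init}^{(i)}}\Px(x:|\hat{p}_x(k,\ell,m)-p_x(k,\ell)|>\zeta)]\le 2\exp\{-2\zeta^2\tilde{M}(m)\}$, and Markov's inequality then gives $\P(H_{\init}^{(i)}\cap\{\Px(x:|\hat{p}_x(k,\ell,m)-p_x(k,\ell)|>\zeta)>\exp\{-\zeta^2\tilde{M}(m)\}\})\le 2\exp\{-\zeta^2\tilde{M}(m)\}$. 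I would then define $H_{\init}^{(iii)}(\zeta)$ to be the event on which $\Px(x:|\hat{p}_x(k,\ell,m)-p_x(k,\ell)|>\zeta)\le\exp\{-\zeta^2\tilde{M}(m)\}$ for every $k\le\bdim_f$ and every $\init\le\ell\le m$; the conclusion then holds on $H_{\init}^{(i)}\cap H_{\init}^{(iii)}(\zeta)$ by construction, while a union bound over $2\le k\le\bdim_f$, $m\ge\init$, $\init\le\ell\le m$ gives $\P(H_{\init}^{(i)}\setminus H_{\init}^{(iii)}(\zeta))\le 2\bdim_f\sum_{m\ge\init}(m-\init+1)\exp\{-\zeta^2\tilde{M}(m)\}$, and since $\tilde{M}(m)=\Msize{m}\dprob/2$ grows cubically this sum is at most $c^{(iii)}(\zeta)\exp\{-\zeta^2\tilde{M}(\init)\}$ for a suitable $(\C,\Px,f,\zeta)$-dependent constant, by the same leading-term-plus-integral estimate used in Lemmas~\ref{lem:basic-Mk-lower-bound} and \ref{lem:kstar-good-labels}.

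I expect the main obstacle to be the probabilistic bookkeeping in the Hoeffding step: one must pick the conditioning $\sigma$-algebra so that simultaneously (i) the data-dependent normalizer $M_{m}^{(k)}(\truV_{\ell})$ is a known constant, (ii) the summands of the numerator really are independent Bernoulli$(p_X(k,\ell))$ variables --- which relies on correctly disintegrating $S_i^{(k)}$ into the bits ``is it $\S^{k-1}(\truV_{\ell})$-shattered'' and ``does adjoining $X$ keep it shattered'', using $W_2\perp\truV_{\ell}$ and $X\perp(\truV_{\ell},W_2)$ --- and (iii) the lower bound $M_{m}^{(k)}(\truV_{\ell})\ge\tilde{M}(m)$ supplied by $H_{\init}^{(i)}$ is in force. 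The remaining pieces --- the Markov trick converting an average bound into a high-probability bound, and the convergent sum over $(k,\ell,m)$ --- are routine and parallel the earlier lemmas of this appendix.
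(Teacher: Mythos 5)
Your proposal is correct and is essentially the paper's argument: the paper also conditions so that $M_m^{(k)}(\truV_\ell)$ is fixed and the numerator summands are conditionally i.i.d.\ Bernoulli$(p_X(k,\ell))$ (there phrased as the conditional distribution of $\tilde m \hat p_X$ given $(X,\tilde m,\truV_\ell)$ being Binomial$(\tilde m, p_X)$), then uses the sub-Gaussian moment bound $\E[e^{s(\mathbf{B}-p)}]\le e^{s^2/8}$ with $s=4\zeta$ --- which is just Hoeffding re-derived --- followed by Markov to turn the in-expectation bound into the high-probability statement, and a union bound over $(k,\ell,m)$ with the cubic growth of $\tilde M$ making the sum converge. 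Your direct Hoeffding invocation, your choice to condition on $(A_i)_i$ rather than only $\tilde m$, and your one Markov application instead of the paper's two are all cosmetic variations on the same proof.
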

\begin{proof}
Fix any $k,\ell,m \in \nats$ with $\tau \leq \ell \leq m$ and $k \leq \bdim_f$.
Recall our convention that $\X^{0} = \{\varnothing\}$ and $\Px^{0}\left(\X^{0}\right) =1$;
thus, if $k=1$,
$\hat{p}_x(k,\ell,m)  = \ind_{\DIS\left(\truV_{\ell}\right)}(x) = \ind_{\S^{1}\left(\truV_{\ell}\right)}(x) = p_{x}(k,\ell)$,
so the result clearly holds for $k=1$.

For the remaining case, suppose $2 \leq k \leq \bdim_f$.
To simplify notation, let $\tilde{m} = M_m^{(k)}\left(\truV_{\ell}\right)$, $X = X_{\ell+1}$, $p_x = p_x(k,\ell)$ and $\hat{p}_x = \hat{p}_x(k,\ell,m)$.
Consider the event
\begin{equation*}
H^{(iii)}(k,\ell,m,\zeta) = \left\{ \Px\left(x : \left| p_x - \hat{p}_x \right| > \zeta\right) \leq \exp\left\{- \zeta^2 \tilde{M}(m) \right\}\right\}.
\end{equation*}
We have
\begin{align}
& \P\left( H_{\init}^{(i)} \setminus H^{(iii)}(k,\ell,m,\zeta) \Big| \truV_{\ell} \right) \label{eqn:inner-original-conditional}
\\ & \leq \P\left( \left\{\tilde{m} \geq \tilde{M}(m)\right\} \setminus H^{(iii)}(k,\ell,m,\zeta)\Big| \truV_{\ell} \right) \text{ (by Lemma~\ref{lem:basic-Mk-lower-bound})} \notag
\\ & = \P\left( \left\{\tilde{m} \geq \tilde{M}(m)\right\} \cap \left\{ \P\left( e^{s \tilde{m} \left| p_{X} - \hat{p}_{X}\right|} > e^{s \tilde{m} \zeta} \Big| W_2, \truV_{\ell}\right) > e^{- \zeta^2 \tilde{M}(m)} \right\} \Big| \truV_{\ell}\right), \label{eqn:inner-pre-chernoff}
\end{align}
for any value $s > 0$.  Proceeding as in Chernoff's bounding technique, by Markov's inequality \eqref{eqn:inner-pre-chernoff} is at most
\begin{align}
& \P\left( \left\{\tilde{m} \geq \tilde{M}(m)\right\} \cap \left\{ e^{-s \tilde{m} \zeta} \E\left[ e^{s \tilde{m} \left| p_{X} - \hat{p}_{X}\right|} \Big| W_2, \truV_{\ell}\right] > e^{- \zeta^2 \tilde{M}(m)} \right\} \Big| \truV_{\ell}\right) \notag
\\ & \leq \P\left( \left\{\tilde{m} \geq \tilde{M}(m)\right\} \cap \left\{ e^{-s \tilde{m} \zeta} \E\left[ e^{s \tilde{m} \left( p_{X} - \hat{p}_{X}\right)} + e^{s \tilde{m} \left( \hat{p}_{X} - p_{X}\right)}\Big| W_2, \truV_{\ell}\right] > e^{- \zeta^2 \tilde{M}(m)} \right\} \Big| \truV_{\ell}\right) \notag
\\ & = \E\!\left[ \ind_{[\tilde{M}(m),\infty)}\left(\tilde{m}\right) \P\left(e^{-s \tilde{m} \zeta} \E\!\left[ e^{s \tilde{m} \left( p_{X} - \hat{p}_{X}\right)} \!+ e^{s \tilde{m} \left( \hat{p}_{X} - p_{X}\right)}\Big| W_2, \truV_{\ell}\right] > e^{- \zeta^2 \tilde{M}(m)} \Big| \tilde{m}, \truV_{\ell}\right) \Bigg| \truV_{\ell}\right] \notag
\end{align}
By Markov's inequality, this is at most
\begin{align}
& \E\left[ \ind_{[\tilde{M}(m),\infty)}\left(\tilde{m}\right) e^{\zeta^2 \tilde{M}(m)} \E\left[ e^{-s \tilde{m} \zeta} \E\left[ e^{s \tilde{m} \left( p_{X} - \hat{p}_{X}\right)} + e^{s \tilde{m} \left( \hat{p}_{X} - p_{X}\right)}\Big| W_2, \truV_{\ell}\right] \Big| \tilde{m}, \truV_{\ell}\right] \Bigg| \truV_{\ell}\right] \notag
\\ & = \E\left[ \ind_{[\tilde{M}(m),\infty)}\left(\tilde{m}\right) e^{\zeta^2 \tilde{M}(m)} e^{-s \tilde{m} \zeta} \E\left[ e^{s \tilde{m} \left( p_{X} - \hat{p}_{X}\right)} + e^{s \tilde{m} \left( \hat{p}_{X} - p_{X}\right)}\Big| \tilde{m}, \truV_{\ell}\right] \Bigg| \truV_{\ell}\right] \notag
\\ & = \E\left[ \ind_{[\tilde{M}(m),\infty)}\left(\tilde{m}\right) e^{\zeta^2 \tilde{M}(m)} e^{-s \tilde{m} \zeta} \E\left[ \E\left[e^{s \tilde{m} \left( p_{X} - \hat{p}_{X}\right)} + e^{s \tilde{m} \left( \hat{p}_{X} - p_{X}\right)} \Big| X, \tilde{m}, \truV_{\ell} \right] \Big| \tilde{m}, \truV_{\ell}\right] \Bigg| \truV_{\ell}\right]. \label{eqn:inner-pre-product}
\end{align}
The conditional distribution of $\tilde{m} \hat{p}_{X}$ given $\left(X, \tilde{m}, \truV_{\ell}\right)$ is ${\rm {Binomial}}\left(\tilde{m}, p_{X}\right)$,
so letting $\mathbf{B}_{1}(p_{X})$, $\mathbf{B}_{2}(p_{X})$, $\ldots$ denote a sequence of random variables, conditionally independent with distribution ${\rm {Bernoulli}}(p_{X})$ given $(X,\tilde{m},\truV_{\ell})$,
we have
\begin{align}
& \E\left[e^{s \tilde{m} \left( p_{X} - \hat{p}_{X}\right)} + e^{s \tilde{m} \left( \hat{p}_{X} - p_{X}\right)} \Big| X, \tilde{m}, \truV_{\ell} \right] \notag
\\ & = \E\left[e^{s \tilde{m} \left( p_{X} - \hat{p}_{X}\right)} \Big| X, \tilde{m}, \truV_{\ell} \right] + \E\left[e^{s \tilde{m} \left( \hat{p}_{X} - p_{X}\right)} \Big| X, \tilde{m}, \truV_{\ell} \right] \notag
\\ & = \E\left[\prod_{i=1}^{\tilde{m}} e^{s \left( p_{X} - \mathbf{B}_{i}(p_{X})\right)} \Big| X, \tilde{m}, \truV_{\ell} \right] + \E\left[\prod_{i=1}^{\tilde{m}} e^{s \left( \mathbf{B}_{i}(p_{X}) - p_{X}\right)} \Big| X, \tilde{m}, \truV_{\ell} \right] \notag
\\ & = \E\left[e^{s \left( p_{X} - \mathbf{B}_{1}(p_{X})\right)} \Big| X, \tilde{m}, \truV_{\ell} \right]^{\tilde{m}} + \E\left[e^{s \left( \mathbf{B}_{1}(p_{X}) - p_{X}\right)} \Big| X, \tilde{m}, \truV_{\ell} \right]^{\tilde{m}}. \label{eqn:m-in-exp}
\end{align}
It is known that for $\mathbf{B} \sim {\rm {Bernoulli}(p)}$, $\E\left[ e^{s (\mathbf{B} - p)} \right]$ and $\E\left[ e^{s (p - \mathbf{B})} \right]$ are at most $e^{s^2 / 8}$ \citep*[see e.g., Lemma 8.1 of][]{devroye:96}.
Thus, taking $s = 4 \zeta$, \eqref{eqn:m-in-exp} is at most $2 e^{2 \tilde{m} \zeta^2 }$,
and \eqref{eqn:inner-pre-product} is at most
\begin{align*}
\E\left[ \ind_{[\tilde{M}(m),\infty)}\left(\tilde{m}\right) 2 e^{\zeta^2 \tilde{M}(m)} e^{-4 \tilde{m} \zeta^2} e^{2 \tilde{m} \zeta^2} \Big| \truV_{\ell}\right]
&= \E\left[ \ind_{[\tilde{M}(m),\infty)}\left(\tilde{m}\right) 2 e^{\zeta^2 \tilde{M}(m)} e^{-2 \tilde{m} \zeta^2}\Big| \truV_{\ell}\right]
\\ & \leq 2 \exp\left\{- \zeta^2 \tilde{M}(m) \right\}.
\end{align*}
Since this bound holds for \eqref{eqn:inner-original-conditional}, the law of total probability implies
\begin{equation*}
\P\left(H_{\init}^{(i)} \setminus H^{(iii)}(k,\ell,m,\zeta) \right) = \E\left[\P\left( H_{\init}^{(i)} \setminus H^{(iii)}(k,\ell,m,\zeta) \Big| \truV_{\ell}\right)\right] \leq 2 \cdot \exp\left\{ - \zeta^2 \tilde{M}(m) \right\}.
\end{equation*}
Defining $H_{\init}^{(iii)}(\zeta) = \bigcap_{\ell \geq \init} \bigcap _{m \geq \ell} \bigcap_{k = 2}^{\bdim_f} H^{(iii)}(k,\ell,m,\zeta)$,
we have the required property for the claimed ranges of $k$, $\ell$ and $m$, and a union bound implies
\begin{align*}
\P&\left(H_{\init}^{(i)} \setminus H_{\init}^{(iii)}(\zeta) \right)
 \leq \sum_{\ell \geq \init} \sum_{m \geq \ell} 2 \bdim_f \cdot \exp\left\{ - \zeta^2 \tilde{M}(m) \right\}
\\ & \leq 2 \bdim_f \cdot \sum_{\ell \geq \init} \left( \exp\left\{ - \zeta^2 \tilde{M}(\ell) \right\} + \int_{\Msize{\ell}}^{\infty} \exp\left\{ - x \zeta^2 \dprob / 2 \right\} {\rm d}x \right)
\\ & = 2 \bdim_f \cdot \sum_{\ell \geq \init} \left(1 + 2 \zeta^{-2} \dprob^{-1}\right) \cdot \exp\left\{ - \zeta^2 \tilde{M}(\ell) \right\}
\\ & \leq 2 \bdim_f \cdot \left(1 + 2 \zeta^{-2} \dprob^{-1}\right) \cdot \left( \exp\left\{ - \zeta^2 \tilde{M}(\init) \right\} + \int_{\Msize{\init}}^{\infty} \exp\left\{ - x \zeta^2 \dprob / 2 \right\} {\rm d}x\right)
\\ & = 2 \bdim_f \cdot \left(1 + 2 \zeta^{-2} \dprob^{-1}\right)^2 \cdot \exp\left\{ - \zeta^2 \tilde{M}(\init) \right\}
\\ & \leq 18 \bdim_f \zeta^{-4} \dprob^{-2} \cdot \exp\left\{ - \zeta^2 \tilde{M}(\init) \right\}.
\end{align*}
\end{proof}

For $k, \ell, m \in \nats$ and $\zeta \in (0,1)$, define
\begin{equation}
\label{eqn:bar-delta-defn}
\bar{p}_{\zeta}\left(k,\ell,m\right) = \Px\left( x : \hat{p}_{x}\left(k,\ell,m\right) \geq \zeta \right).
\end{equation}

\begin{lemma}
\label{lem:basic-bar-delta-bound}
For any $\alpha,\zeta,\delta \in (0,1)$,
$\beta \in \big(0, 1-\sqrt{\alpha}\big]$, and integer $\init \geq \init(\beta;\delta)$,
on $H_{\init}(\delta) \cap H_{\init}^{(i)} \cap H_{\init}^{(iii)}(\beta\zeta)$,
for any $k, \ell, \ell^{\prime}, m \in \nats$ with
$\init \leq \ell \leq \ell^{\prime} \leq m$ and $k \leq \bdim_f$,
\begin{equation}
\label{eqn:basic-bar-delta-bound}
\bar{p}_{\zeta}(k,\ell^{\prime},m)
\leq \Px\left(x : p_{x}(k,\ell) \geq \alpha\zeta\right) + \exp\left\{- \beta^2 \zeta^2 \tilde{M}(m)\right\}.
\end{equation}
\thmend
\end{lemma}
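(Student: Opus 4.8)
The plan is to decompose the set $\{x : \hat{p}_{x}(k,\ell',m) \geq \zeta\}$ into a ``good estimation'' part, on which $\hat{p}_{x}(k,\ell',m)$ closely tracks the true conditional probability $p_{x}(k,\ell')$, and a negligible ``bad estimation'' part controlled by Lemma~\ref{lem:empirical-inner}; then transfer the resulting lower bound on $p_{x}(k,\ell')$ to a lower bound on $p_{x}(k,\ell)$ via a monotonicity-in-$\ell$ argument. Since $H_{\init}^{(i)} \subseteq H^{\prime}$ and $H_{\init}^{(iii)}(\beta\zeta)$ is assumed, Lemma~\ref{lem:empirical-inner} applied with parameter $\beta\zeta$ in place of $\zeta$ and with $\ell'$ in place of $\ell$ (legitimate since $\init \leq \ell' \leq m$, $k \leq \bdim_{f}$, and $\beta\zeta \in (0,1)$ as $\beta \leq 1-\sqrt{\alpha} < 1$) yields
\[
\Px\left(x : \left| p_{x}(k,\ell') - \hat{p}_{x}(k,\ell',m) \right| > \beta\zeta\right) \leq \exp\left\{ - \beta^{2}\zeta^{2} \tilde{M}(m)\right\}.
\]
Hence, off an exceptional set of this $\Px$-measure, $\hat{p}_{x}(k,\ell',m) \geq \zeta$ forces $p_{x}(k,\ell') \geq (1-\beta)\zeta$.

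The heart of the proof is the claim that, on $H_{\init}(\delta) \cap H_{\init}^{(i)}$ with $\init \geq \init(\beta;\delta)$, every $x \in \X$ and all integers $\init \leq \ell \leq \ell'$ with $k \leq \bdim_{f}$ satisfy $p_{x}(k,\ell) \geq (1-\beta) p_{x}(k,\ell')$. For $k=1$ this is immediate, since $p_{x}(1,\ell) = \ind_{\DIS(\truV_{\ell})}(x) = \hat{p}_{x}(1,\ell,m)$ and the bad-estimation set above is empty. For $2 \leq k \leq \bdim_{f}$, write $p_{x}(k,\ell) = N_{\ell}(x)/D_{\ell}$ with $N_{\ell}(x) = \Px^{k-1}\left(S \in \X^{k-1} : S \cup \{x\} \in \S^{k}(\truV_{\ell})\right)$ and $D_{\ell} = \Px^{k-1}\left(\S^{k-1}(\truV_{\ell})\right)$ (using that a sub-tuple of a shattered tuple is shattered, so the numerator of the conditional probability is exactly $N_{\ell}(x)$). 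Since $\truV_{\ell'} \subseteq \truV_{\ell}$, monotonicity of shattering gives $N_{\ell'}(x) \leq N_{\ell}(x)$. For the denominators, Lemma~\ref{lem:Vshat-to-Boundaries} gives $\Px^{k-1}\left(\partial^{k-1}_{\C} f \setminus \S^{k-1}(\truV_{\ell})\right) = 0$ for any $\ell$, hence $D_{\ell'} \geq \Px^{k-1}\left(\partial^{k-1}_{\C} f\right) > 0$ (positivity because $k-1 \leq \bdim_{f}-1$); and splitting $D_{\ell}$ over $\partial^{k-1}_{\C} f$ and its complement, together with the bound $\Px^{k-1}\left(\bar{\partial}^{k-1}_{\C} f \mid \S^{k-1}(\truV_{\ell})\right) \leq \cc\left(\vrad(\init;\delta)\right) < \beta$ from Lemma~\ref{lem:converging-conditional} (strictness following from $\init \geq \init(\beta;\delta)$ and the definition of $r_{\beta}$), gives the fixed-point identity $D_{\ell} = \Px^{k-1}\left(\partial^{k-1}_{\C} f\right) / \left(1 - \Px^{k-1}\left(\bar{\partial}^{k-1}_{\C} f \mid \S^{k-1}(\truV_{\ell})\right)\right) \leq \Px^{k-1}\left(\partial^{k-1}_{\C} f\right)/(1-\beta)$. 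Combining, $D_{\ell}/D_{\ell'} \leq 1/(1-\beta)$, and therefore
\[
p_{x}(k,\ell) = \frac{N_{\ell}(x)}{D_{\ell}} \geq (1-\beta)\frac{N_{\ell}(x)}{D_{\ell'}} \geq (1-\beta)\frac{N_{\ell'}(x)}{D_{\ell'}} = (1-\beta) p_{x}(k,\ell').
\]

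To finish, $\beta \leq 1-\sqrt{\alpha}$ gives $(1-\beta)^{2} \geq \alpha$, so $p_{x}(k,\ell') \geq (1-\beta)\zeta$ implies $p_{x}(k,\ell) \geq (1-\beta)^{2}\zeta \geq \alpha\zeta$. Thus
\[
\left\{x : \hat{p}_{x}(k,\ell',m) \geq \zeta\right\} \subseteq \left\{x : \left|p_{x}(k,\ell') - \hat{p}_{x}(k,\ell',m)\right| > \beta\zeta\right\} \cup \left\{x : p_{x}(k,\ell) \geq \alpha\zeta\right\},
\]
and taking $\Px$-measure, together with the displayed bound from Lemma~\ref{lem:empirical-inner}, gives exactly \eqref{eqn:basic-bar-delta-bound} (recalling $\bar{p}_{\zeta}(k,\ell',m) = \Px(x : \hat{p}_{x}(k,\ell',m) \geq \zeta)$ by \eqref{eqn:bar-delta-defn}). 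I expect the main obstacle to be the denominator-ratio estimate $D_{\ell}/D_{\ell'} \leq 1/(1-\beta)$ — in particular, correctly justifying the fixed-point identity for $D_{\ell}$ and the strict bound $\cc(\vrad(\init;\delta)) < \beta$ from the definitions of $r_{\beta}$ and $\init(\beta;\delta)$; the remaining steps are routine splittings of probabilities and a single application of Lemma~\ref{lem:empirical-inner}.
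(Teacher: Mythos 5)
Your proof is correct and is essentially the same as the paper's: both decompose $\{\hat{p}_x(k,\ell',m) \geq \zeta\}$ into an estimation-error part controlled by Lemma~\ref{lem:empirical-inner} and a part where $p_x(k,\ell')$ is large, then use Lemma~\ref{lem:Vshat-to-Boundaries} and Lemma~\ref{lem:converging-conditional} to pass from $\ell'$ to $\ell$. The only difference is a reparametrization of the same bookkeeping: the paper splits the threshold at $\sqrt{\alpha}\zeta$ and absorbs a $\sqrt{\alpha}$-factor loss in the $\ell'\!\to\!\ell$ swap, whereas you split at $(1-\beta)\zeta$, derive the cleaner pointwise inequality $p_x(k,\ell) \geq (1-\beta)p_x(k,\ell')$ via the numerator/denominator decomposition, and finish with $(1-\beta)^2 \geq \alpha$.
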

\begin{proof}
Fix any $\alpha,\zeta,\delta \in (0,1)$, $\beta \in \big(0,1-\sqrt{\alpha}\big]$,
$\init, k, \ell, \ell^{\prime}, m \in \nats$ with
$\init(\beta;\delta) \leq \init \leq \ell \leq \ell^{\prime} \leq m$ and $k \leq \bdim_f$.

If $k = 1$, the result clearly holds.  In particular, we have
\begin{equation*}
\bar{p}_{\zeta}(1,\ell^{\prime},m)
 = \Px\left(\DIS\left(\truV_{\ell^{\prime}}\right)\right)
\leq \Px\left(\DIS\left(\truV_{\ell}\right)\right)
= \Px\left(x : p_{x}(1,\ell) \geq \alpha\zeta \right).
\end{equation*}
Otherwise, suppose $2 \leq k \leq \bdim_f$.
By a union bound,
\begin{align}
\bar{p}_{\zeta}(k,\ell^{\prime},m)
&= \Px\left(x : \hat{p}_x(k,\ell^{\prime},m) \geq \zeta \right) \notag
\\ &\leq \Px\left( x : p_x(k,\ell^{\prime}) \geq \sqrt{\alpha} \zeta \right)  + \Px\left( x : \left| p_x(k,\ell^{\prime}) - \hat{p}_x(k,\ell^{\prime},m) \right| > (1-\sqrt{\alpha})\zeta \right). \label{eqn:basic-bar-bound-two-terms}
\end{align}
Since
\begin{equation*}
\Px\left(x : \left| p_x(k,\ell^{\prime}) - \hat{p}_x(k,\ell^{\prime},m) \right| > (1-\sqrt{\alpha}) \zeta\right) \leq \Px\left( x : \left| p_x(k,\ell^{\prime}) - \hat{p}_x(k,\ell^{\prime},m) \right| > \beta \zeta\right),
\end{equation*}
Lemma \ref{lem:empirical-inner} implies that,
on $H_{\init}^{(i)} \cap H_{\init}^{(iii)}(\beta\zeta)$,
\begin{equation}
\label{eqn:basic-bar-empirical-inner}
\Px\left( x : \left| p_x(k,\ell^{\prime}) - \hat{p}_x(k,\ell^{\prime},m) \right| > (1-\sqrt{\alpha})\zeta\right) \leq \exp\left\{- \beta^2 \zeta^2 \tilde{M}(m)\right\}.
\end{equation}
It remains only to examine the first term on the right side of \eqref{eqn:basic-bar-bound-two-terms}.
For this, if $\Px^{k-1}\left(\S^{k-1}\left(\truV_{\ell^{\prime}}\right)\right) = 0$, then the first term is $0$ by our aforementioned convention, and thus \eqref{eqn:basic-bar-delta-bound} holds;
otherwise, since
\begin{equation*}
\forall x \in \X, \left\{S \in \X^{k-1} : S \cup \{x\} \in \S^{k}\left(\truV_{\ell^{\prime}}\right)\right\} \subseteq \S^{k-1}\left(\truV_{\ell^{\prime}}\right),
\end{equation*}
we have
\begin{align}
& \Px\left( x : p_x(k,\ell^{\prime}) \geq \sqrt{\alpha} \zeta \right)
= \Px\left( x : \Px^{k-1}\left( S \in \X^{k-1} : S \cup \{x\} \in \S^{k}\left(\truV_{\ell^{\prime}}\right) \Big| \S^{k-1}\left(\truV_{\ell^{\prime}}\right)\right) \geq \sqrt{\alpha} \zeta \right) \notag
\\ & = \Px\left( x : \Px^{k-1}\left( S \in \X^{k-1} : S \cup \{x\} \in \S^{k}\left(\truV_{\ell^{\prime}}\right)\right) \geq \sqrt{\alpha} \zeta \Px^{k-1}\left(\S^{k-1}\left(\truV_{\ell^{\prime}}\right)\right)\right). \label{eqn:basic-bar-break-conditional}
\end{align}
By Lemma~\ref{lem:Vshat-to-Boundaries} and monotonicity,
on $H_{\init}^{(i)} \subseteq H^{\prime}$, \eqref{eqn:basic-bar-break-conditional} is at most
\begin{equation*}
\Px\left( x : \Px^{k-1}\left( S \in \X^{k-1} : S \cup \{x\} \in \S^{k}\left(\truV_{\ell^{\prime}}\right)\right) \geq \sqrt{\alpha} \zeta \Px^{k-1}\left(\partial^{k-1}_{\C} f \right)\right),
\end{equation*}
and monotonicity implies this is at most
\begin{equation}
\label{eqn:basic-bar-ell-swap}
\Px\left( x : \Px^{k-1}\left( S \in \X^{k-1} : S \cup \{x\} \in \S^{k}\left(\truV_{\ell}\right)\right) \geq \sqrt{\alpha} \zeta \Px^{k-1}\left(\partial^{k-1}_{\C} f \right)\right).
\end{equation}
By Lemma~\ref{lem:converging-conditional},
for $\init \geq \init(\beta;\delta)$, on $H_{\init}(\delta) \cap H_{\init}^{(i)}$,
\begin{equation*}
\Px^{k-1}\left( \bar{\partial}^{k-1}_{\C} f \big| \S^{k-1}\left(\truV_{\ell}\right)\right) \leq \cc(\vrad(\init;\delta)) < \beta \leq 1-\sqrt{\alpha},
\end{equation*}
which implies
\begin{multline*}
\Px^{k-1}\left( \partial^{k-1}_{\C} f\right)
 \geq \Px^{k-1}\left( \partial^{k-1}_{\C} f \cap \S^{k-1}\left(\truV_{\ell}\right)\right)
\\  = \left(1 - \Px^{k-1}\left( \bar{\partial}^{k-1}_{\C} f \Big| \S^{k-1}\left(\truV_{\ell}\right)\right)\right) \Px^{k-1}\left(\S^{k-1}\left(\truV_{\ell}\right)\right) 
\geq \sqrt{\alpha} \Px^{k-1}\left(\S^{k-1}\left(\truV_{\ell}\right)\right).
\end{multline*}
Altogether, for $\init \geq \init(\beta;\delta)$, on $H_{\init}(\delta) \cap H_{\init}^{(i)}$, \eqref{eqn:basic-bar-ell-swap} is at most
\begin{equation*}
\Px\left( x : \Px^{k-1}\!\left( S \!\in\! \X^{k-1} : S \!\cup\! \{x\} \!\in\! \S^{k}\left(\truV_{\ell}\right)\right) \geq \alpha\zeta \Px^{k-1}\!\left(\S^{k-1}\left(\truV_{\ell}\right) \right)\right)
= \Px\left( x : p_{x}(k,\ell) \geq \alpha \zeta\right)\! ,
\end{equation*}
which, combined with \eqref{eqn:basic-bar-bound-two-terms} and \eqref{eqn:basic-bar-empirical-inner}, establishes \eqref{eqn:basic-bar-delta-bound}.
\end{proof}

\begin{lemma}
\label{lem:basic-empirical-bound}
There are events $\left\{ H_{\init}^{(iv)} : \init \in \nats\right\}$ with
\begin{equation*}
\P\left(H_{\init}^{(iv)}\right) \geq 1 - 3 \bdim_f \cdot \exp\left\{ - 2 \init \right\}
\end{equation*}
such that, for any $\xi \in (0, \gamma/16]$, $\delta \in (0,1)$, and integer $\init \geq \init^{(iv)}(\xi;\delta)$,
where $\init^{(iv)}(\xi;\delta) = \max\left\{\init(4\xi/\gamma;\delta), \left(\frac{4}{\dprob \xi^2}\ln\left(\frac{4}{\dprob \xi^2}\right)\right)^{1/3}\right\}$,
on $H_{\init}(\delta) \cap H_{\init}^{(i)} \cap H_{\init}^{(iii)}(\xi) \cap H_{\init}^{(iv)}$,
$\forall k \in \left\{1,\ldots,\bdim_f\right\}$, $\forall \ell \in \nats$ with $\ell \geq \init$,
\begin{align}
\Px\Big(x : p_{x}(k,\ell)\geq \gamma/2\Big) + \exp\left\{-\gamma^2 \tilde{M}(\ell) / 256\right\}
 &\leq \hat{\Delta}_{\ell}^{(k)}\left(W_1, W_2, \truV_{\ell}\right) \label{eqn:empirical-bound-first-ineq}
\\ &\leq \Px\left(x : p_{x}(k,\ell) \geq \gamma/8\right) + 4 \ell^{-1}. \label{eqn:empirical-bound-second-ineq}
\end{align}
\thmend
\end{lemma}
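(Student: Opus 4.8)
The plan is to recognize that $\hat{\Delta}_{\ell}^{(k)}(W_1,W_2,\truV_{\ell})$ is, by definition, $\frac{2}{\ell} + \frac{1}{\ell^3}\sum_{i=1}^{\ell^3} \ind_{[\gamma/4,\infty)}\big(\hat{p}_{w_i}(k,\ell,\ell)\big)$, i.e.\ a sample average over $W_1$ of an indicator of the event $\{\hat{p}_{w}(k,\ell,\ell) \geq \gamma/4\}$, plus the additive $2/\ell$ term. Its conditional expectation given $(W_2, \truV_{\ell})$ is $\frac{2}{\ell} + \bar{p}_{\gamma/4}(k,\ell,\ell)$, where $\bar{p}_{\zeta}$ is the quantity defined in~\eqref{eqn:bar-delta-defn}. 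So the proof has two layers: (i) a Chernoff/Hoeffding concentration of the $W_1$-average around $\bar{p}_{\gamma/4}(k,\ell,\ell)$, which defines the event $H_{\init}^{(iv)}$; and (ii) sandwiching $\bar{p}_{\gamma/4}(k,\ell,\ell)$ between $\Px(x : p_x(k,\ell) \geq \gamma/2)$ and $\Px(x : p_x(k,\ell) \geq \gamma/8)$ using Lemma~\ref{lem:basic-bar-delta-bound} in both directions.

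First I would define $H_{\init}^{(iv)} = \bigcap_{k=1}^{\bdim_f} \bigcap_{\ell \geq \init} H^{(iv)}(k,\ell)$, where $H^{(iv)}(k,\ell)$ is the event that $\big|\frac{1}{\ell^3}\sum_{i=1}^{\ell^3}\ind_{[\gamma/4,\infty)}(\hat{p}_{w_i}(k,\ell,\ell)) - \bar{p}_{\gamma/4}(k,\ell,\ell)\big| \leq \ell^{-1}$. Since the $w_i$ are i.i.d.\ and independent of $(W_2,\truV_{\ell})$, Hoeffding's inequality conditionally on $(W_2,\truV_{\ell})$ gives $\P(H^{(iv)}(k,\ell)^c \mid W_2,\truV_{\ell}) \leq 2\exp\{-2\ell^3 \ell^{-2}\} = 2\exp\{-2\ell\}$, hence $\P(H^{(iv)}(k,\ell)^c) \leq 2\exp\{-2\ell\}$ unconditionally; a union bound over $\ell \geq \init$ (summing a geometric series dominated by $\exp\{-2\init\}$) and over $k \leq \bdim_f$ gives $\P(H_{\init}^{(iv)}) \geq 1 - 3\bdim_f \exp\{-2\init\}$, matching the stated bound. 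On this event, $|\hat{\Delta}_{\ell}^{(k)}(W_1,W_2,\truV_{\ell}) - \tfrac{2}{\ell} - \bar{p}_{\gamma/4}(k,\ell,\ell)| \leq \ell^{-1}$, so $\hat{\Delta}_{\ell}^{(k)}$ lies in $[\bar{p}_{\gamma/4}(k,\ell,\ell) + \ell^{-1},\, \bar{p}_{\gamma/4}(k,\ell,\ell) + 3\ell^{-1}]$.

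Next I would apply Lemma~\ref{lem:basic-bar-delta-bound} twice, with $\zeta = \gamma/4$, $m=\ell' = \ell$, and the $\ell$ of that lemma equal to our $\ell$. For the upper bound~\eqref{eqn:empirical-bound-second-ineq}: take $\alpha = 1/2$ and $\beta = 4\xi/\gamma$, which satisfies $\beta \leq 1 - \sqrt{\alpha}$ once $\xi \leq \gamma/16$ (since then $4\xi/\gamma \leq 1/4 < 1 - 1/\sqrt{2}$... actually $1-1/\sqrt2 \approx 0.29 > 1/4$, so this holds); this requires $\init \geq \init(\beta;\delta) = \init(4\xi/\gamma;\delta)$ and $\init \geq \init^{(iv)}(\xi;\delta)$ guarantees this, and also requires $H_{\init}^{(iii)}(\beta\zeta) = H_{\init}^{(iii)}(\xi)$, which is why that event appears in the statement. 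Lemma~\ref{lem:basic-bar-delta-bound} then yields $\bar{p}_{\gamma/4}(k,\ell,\ell) \leq \Px(x : p_x(k,\ell) \geq \gamma/8) + \exp\{-\xi^2 \gamma^2 \tilde{M}(\ell)/16\}$; since $\init \geq (\frac{4}{\dprob\xi^2}\ln\frac{4}{\dprob\xi^2})^{1/3}$ forces $\tilde{M}(\ell) = \ell^3\dprob/2$ large enough that $\exp\{-\xi^2\gamma^2\tilde M(\ell)/16\} \leq \ell^{-1}$, we get $\hat{\Delta}_{\ell}^{(k)} \leq \bar{p}_{\gamma/4} + 3\ell^{-1} \leq \Px(x:p_x(k,\ell)\geq\gamma/8) + 4\ell^{-1}$, which is~\eqref{eqn:empirical-bound-second-ineq}. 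For the lower bound~\eqref{eqn:empirical-bound-first-ineq}, I would use Lemma~\ref{lem:basic-bar-delta-bound} in the reverse-containment direction (bounding $\Px(x:p_x(k,\ell)\geq\gamma/2)$ above by $\bar{p}_{\gamma/4}(k,\ell,\ell)$ plus a deviation term, swapping the roles of $p$ and $\hat p$): the same union bound over $W_2$-randomness via $H_{\init}^{(iii)}(\cdot)$ gives $\Px(x : p_x(k,\ell) \geq \gamma/2) \leq \bar{p}_{\gamma/4}(k,\ell,\ell) + \exp\{-c\gamma^2\tilde M(\ell)\}$ for a suitable absolute constant, and combining with $\hat{\Delta}_{\ell}^{(k)} \geq \bar{p}_{\gamma/4} + \ell^{-1}$ and absorbing the exponential term into $\exp\{-\gamma^2\tilde M(\ell)/256\}$ gives~\eqref{eqn:empirical-bound-first-ineq}; the $k=1$ case is handled separately and trivially exactly as in Lemma~\ref{lem:basic-bar-delta-bound}, since there $\hat{p}_x(1,\ell,\ell) = p_x(1,\ell) = \ind_{\DIS(\truV_\ell)}(x)$.

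The main obstacle I anticipate is the bookkeeping of constants and the precise verification that the chosen $(\alpha,\beta)$ satisfy the hypotheses of Lemma~\ref{lem:basic-bar-delta-bound} uniformly over the claimed range, together with checking that the floor in $\init^{(iv)}(\xi;\delta)$ really does make every exponential deviation term (from both the $W_1$-concentration at rate $\exp\{-2\ell\}$ and the $W_2$-side term at rate $\exp\{-\beta^2\zeta^2\tilde M(\ell)\}$) dominated by the $\ell^{-1}$ or $\exp\{-\gamma^2\tilde M(\ell)/256\}$ slack allowed in the inequalities. None of this is conceptually deep, but the two-sided sandwich — using Lemma~\ref{lem:basic-bar-delta-bound} in a direction for which it was not literally stated (the lower bound requires re-running its proof with $p$ and $\hat p$ interchanged, which is symmetric in the Hoeffding/union-bound argument) — is the step that requires the most care.
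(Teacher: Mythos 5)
Your proposal is correct and follows essentially the same route as the paper: define $H_{\init}^{(iv)}$ by Hoeffding concentration of the $W_1$-average around $\bar p_{\gamma/4}(k,\ell,\ell)$, then sandwich $\bar p_{\gamma/4}$ between the two target quantities. Two small remarks. First, the paper does not literally re-run Lemma~\ref{lem:basic-bar-delta-bound} ``in reverse'' for the lower bound \eqref{eqn:empirical-bound-first-ineq}; it simply writes out the one-sided union bound $\Px(x : p_x \geq \gamma/2) \leq \Px(x : \hat p_x \geq \gamma/4) + \Px(x : |p_x - \hat p_x| > \gamma/4)$ directly and invokes Lemma~\ref{lem:empirical-inner} to control the second term (noting $\gamma/4 \geq \xi$), so no symmetrized version of Lemma~\ref{lem:basic-bar-delta-bound} is needed. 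Second, your exponent for the Lemma~\ref{lem:basic-bar-delta-bound} application on the upper-bound side should be $\exp\{-\beta^2\zeta^2\tilde M(\ell)\}$ with $\beta = 4\xi/\gamma$ and $\zeta = \gamma/4$, i.e.\ $\exp\{-\xi^2\tilde M(\ell)\}$, not $\exp\{-\xi^2\gamma^2\tilde M(\ell)/16\}$; this is a slip in favor of a weaker (larger) exponential, so it would only force a slightly larger choice of $\init^{(iv)}$ and does not affect correctness of the approach. Also the $k=1$ case needs no separate handling beyond what Lemma~\ref{lem:empirical-inner} already provides (there $\hat p_x = p_x$ so the deviation set is empty).
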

\begin{proof}
For any $k,\ell \in \nats$, by Hoeffding's inequality and the law of total probability, on an event $G^{(iv)}(k,\ell)$ with
$\P\left(G^{(iv)}(k,\ell)\right) \geq 1 - 2 \exp\left\{ - 2 \ell\right\}$,
we have
\begin{equation}
\label{eqn:empirical-bound-deviation}
\left| \bar{p}_{\gamma/4}(k,\ell,\ell) - \ell^{-3} \sum_{i=1}^{\ell^3} \ind_{[\gamma/4,\infty)}\left(\hat{\Delta}_{\ell}^{(k)}\left(w_i, W_2, \truV_{\ell}\right)\right) \right| \leq \ell^{-1}.
\end{equation}
Define the event $H_{\init}^{(iv)} = \bigcap_{\ell \geq \init} \bigcap_{k = 1}^{\bdim_f} G^{(iv)}(k,\ell)$.
By a union bound, we have
\begin{align*}
1-\P\left( H_{\init}^{(iv)} \right)
&\leq 2 \bdim_f \cdot \sum_{\ell \geq \init} \exp\left\{- 2 \ell \right\}
\\ & \leq 2 \bdim_f \cdot \left( \exp\left\{ - 2 \init \right\} + \int_{\init}^{\infty} \exp\left\{ - 2 x \right\} {\rm d}x\right)
= 3 \bdim_f \cdot \exp\left\{ - 2 \init \right\}.
\end{align*}
Now fix any $\ell \geq \init$ and $k \in \left\{1,\ldots,\bdim_f\right\}$.
By a union bound,
\begin{equation}
\label{eqn:empirical-bound-two-terms}
\Px\left(x : p_x(k,\ell) \geq \gamma/2\right) \leq \Px\left( x : \hat{p}_x(k,\ell,\ell) \geq \gamma/4\right) + \Px\left(x : \left| p_x(k,\ell) - \hat{p}_x(k,\ell,\ell) \right| > \gamma/4\right).
\end{equation}
By Lemma~\ref{lem:empirical-inner}, on $H_{\init}^{(i)} \cap H_{\init}^{(iii)}(\xi)$,
\begin{equation}
\label{eqn:empirical-bound-second-term}
\Px\left(x : \left| p_x(k,\ell) - \hat{p}_x(k,\ell,\ell) \right| > \gamma/4\right)
\leq \Px\left(x : \left| p_x(k,\ell) - \hat{p}_x(k,\ell,\ell) \right| > \xi\right)
\leq \exp\left\{- \xi^2 \tilde{M}(\ell)\right\}.
\end{equation}
Also, on $H_{\init}^{(iv)}$, \eqref{eqn:empirical-bound-deviation} implies
\begin{align}
\Px\left( x : \hat{p}_x(k,\ell,\ell) \geq \gamma/4 \right)
& = \bar{p}_{\gamma/4}(k,\ell,\ell) \notag
\\ & \leq \ell^{-1} + \ell^{-3} \sum_{i=1}^{\ell^3} \ind_{[\gamma/4,\infty)}\left(\hat{\Delta}_{\ell}^{(k)}\left(w_i,W_2,\truV_{\ell}\right)\right) \notag
\\ & = \hat{\Delta}_{\ell}^{(k)}\left(W_1, W_2, \truV_{\ell}\right) - \ell^{-1}. \label{eqn:empirical-bound-first-term}
\end{align}
Combining \eqref{eqn:empirical-bound-two-terms} with \eqref{eqn:empirical-bound-second-term} and \eqref{eqn:empirical-bound-first-term}
yields
\begin{equation}
\label{eqn:empirical-bound-unsimplified}
\Px\left( x : p_x(k,\ell) \geq \gamma/2\right) \leq \hat{\Delta}_{\ell}^{(k)}\left(W_1,W_2,\truV_{\ell}\right) - \ell^{-1} + \exp\left\{- \xi^2 \tilde{M}(\ell) \right\}.
\end{equation}
For $\init \geq \init^{(iv)}(\xi;\delta)$, $\exp\left\{-\xi^2 \tilde{M}(\ell)\right\} -\ell^{-1} \leq -\exp\left\{-\gamma^2 \tilde{M}(\ell) / 256\right\}$,
so that \eqref{eqn:empirical-bound-unsimplified} implies the first inequality of the lemma: namely \eqref{eqn:empirical-bound-first-ineq}.

For the second inequality (i.e., \eqref{eqn:empirical-bound-second-ineq}),
on $H_{\init}^{(iv)}$, \eqref{eqn:empirical-bound-deviation} implies we have
\begin{equation}
\label{eqn:empirical-bound-second-ineq-a}
\hat{\Delta}_{\ell}^{(k)}\left(W_1,W_2,\truV_{\ell}\right)
\leq \bar{p}_{\gamma/4}(k,\ell,\ell) + 3\ell^{-1}.
\end{equation}
Also, by Lemma~\ref{lem:basic-bar-delta-bound} (with $\alpha = 1/2$, $\zeta = \gamma/4$, $\beta = \xi/\zeta < 1-\sqrt{\alpha}$), for $\init \geq \init^{(iv)}(\xi;\delta)$,
on $H_{\init}(\delta) \cap H_{\init}^{(i)} \cap H_{\init}^{(iii)}(\xi)$,
\begin{equation}
\label{eqn:empirical-bound-second-ineq-b}
\bar{p}_{\gamma/4}(k,\ell,\ell) \leq \Px\left(x : p_x(k,\ell) \geq \gamma/8\right) + \exp\left\{- \xi^2 \tilde{M}(\ell)\right\}.
\end{equation}
Thus, combining \eqref{eqn:empirical-bound-second-ineq-a} with \eqref{eqn:empirical-bound-second-ineq-b} yields
\begin{equation*}
\hat{\Delta}_{\ell}^{(k)}\left(W_1,W_2,\truV_{\ell}\right) \leq \Px\left(x : p_x(k,\ell) \geq \gamma/8\right) + 3\ell^{-1} + \exp\left\{- \xi^2 \tilde{M}(\ell)\right\}.
\end{equation*}
For $\init \geq \init^{(iv)}(\xi;\delta)$, we have $\exp\left\{- \xi^2 \tilde{M}(\ell)\right\} \leq \ell^{-1}$,
which establishes \eqref{eqn:empirical-bound-second-ineq}.
\end{proof}

For $n \in \nats$ and $k \in \{1,\ldots,d+1\}$, define the set
\begin{equation*}
\U_{n}^{(k)} = \left\{ m_n + 1, \ldots, m_n + \left\lfloor n / \left( 6 \cdot 2^{k} \hat{\Delta}_{m_n}^{(k)}(W_1, W_2, V)\right)\right\rfloor \right\},
\end{equation*}
where $m_n = \lfloor n/3 \rfloor$;
$\U_{n}^{(k)}$ represents the set of indices processed in the inner loop of \BasicActivizer~for the specified value of $k$.

\begin{lemma}
\label{lem:empirical-works-too}
There are $(f,\C,\Px,\gamma)$-dependent constants $\hat{c}_1, \hat{c}_2 \in (0,\infty)$
such that, for any $\eps \in (0,1)$ and integer $n \geq \hat{c}_1 \ln(\hat{c}_2 / \eps)$,
on an event $\hat{H}_{n}(\eps)$ with
\begin{equation}
\label{eqn:hatH-bound}
\P(\hat{H}_{n}(\eps)) \geq 1- (3/4)\eps,
\end{equation}
we have, for $V = \truV_{m_n}$,
\begin{equation}
\label{eqn:kstar-query-bound}
\forall k \in \left\{1,\ldots,\bdim_f\right\}, \left|\left\{m \in \U_{n}^{(k)} : \hat{\Delta}_{m}^{(k)}(X_{m},W_2,V) \geq \gamma\right\}\right| \leq \left\lfloor n/\left(3\cdot 2^{k}\right)\right\rfloor,
\end{equation}
\begin{equation}
\label{eqn:kstar-label-everything}
\hat{\Delta}_{m_n}^{(\bdim_f)}(W_1,W_2,V) \leq \Delta_{n}^{(\gamma/8)}(\eps) + 4 m_n^{-1},
\end{equation}
and $\forall m \in \U_{n}^{(\bdim_f)}$,
\begin{equation}
\label{eqn:kstar-good-labels}
\hat{\Delta}_{m}^{(\bdim_f)}(X_{m},W_2,V) < \gamma \Rightarrow \hat{\Gamma}_{m}^{(\bdim_f)}(X_{m},-f(X_{m}),W_2,V) < \hat{\Gamma}_{m}^{(\bdim_f)}(X_{m},f(X_{m}),W_2,V).
\end{equation}
\thmend\end{lemma}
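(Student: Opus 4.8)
The plan is to assemble the lemma from the pieces developed in the previous sequence of lemmas (especially Lemmas~\ref{lem:VinB}, \ref{lem:label-everything}, \ref{lem:kstar-good-labels}, \ref{lem:basic-Mk-lower-bound}, and \ref{lem:basic-empirical-bound}) by intersecting a handful of high-probability events, each carrying an exponentially small (in $n$) failure probability, and then arguing each of the three displayed conclusions on that intersection. First I would set $m_n = \lfloor n/3\rfloor$ and define $\hat H_n(\eps)$ as the intersection of: $H_{m_n}(\eps/4)$ (from Lemma~\ref{lem:VinB}, to control $V = \truV_{m_n} \subseteq \Ball(f,\vrad(m_n;\eps/4))$), the events $H^{(i)}_{m_n}$, $H^{(ii)}_{m_n}$, $H^{(iii)}_{m_n}(\xi)$, $H^{(iv)}_{m_n}$ for a suitably chosen fixed $\xi \in (0,\gamma/16]$, together with a fresh Chernoff-type event $K_n$ handling the sampling inside the inner loops $\U_n^{(k)}$. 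Taking $n \geq \hat c_1 \ln(\hat c_2/\eps)$ large enough forces $m_n$ to exceed all the threshold constants $\init^*$, $\init((1-\gamma)/6;\eps/4)$, $\init^{(iv)}(\xi;\eps/4)$, etc., so that all of the cited lemmas apply; the bound \eqref{eqn:hatH-bound} then follows from a union bound over the finitely many constituent events, each of which has failure probability at most (a constant times) $\exp\{-\Omega(\tilde M(m_n)^{1/3})\}$ or $\eps/4$, and $\tilde M(m_n) \propto m_n^3 \propto n^3$ grows fast enough to absorb everything into $(3/4)\eps$ for appropriate $\hat c_1,\hat c_2$.

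With $\hat H_n(\eps)$ in hand, conclusion \eqref{eqn:kstar-good-labels} is essentially immediate: on $H_{m_n}(\eps/4) \cap H^{(i)}_{m_n} \cap H^{(ii)}_{m_n}$, with $k = \bdim_f$, $\ell = m_n$, and any $m \in \U_n^{(\bdim_f)}$ (so $m > m_n = \ell$ and $m \geq \init$), the final claim of Lemma~\ref{lem:kstar-good-labels} applied to $\H = \truV_{m_n} = V$ and $\s = m$ gives exactly the stated implication (using $\init \geq \init((1-\gamma)/6;\eps/4)$ so that $V \subseteq \Ball(f,r_{(1-\gamma)/6})$). Conclusion \eqref{eqn:kstar-label-everything} comes from the second inequality \eqref{eqn:empirical-bound-second-ineq} of Lemma~\ref{lem:basic-empirical-bound} with $k=\bdim_f$ and $\ell = m_n$: it bounds $\hat\Delta_{m_n}^{(\bdim_f)}(W_1,W_2,V)$ by $\Px(x : p_x(\bdim_f,m_n) \geq \gamma/8) + 4 m_n^{-1}$, and then Lemma~\ref{lem:label-everything} (whose $\Delta_n^{(\zeta)}(\eps)$ was defined precisely as a bound on $\Px(x : \Px^{\bdim_f-1}(S : S\cup\{x\}\in\S^{\bdim_f}(V) \mid \S^{\bdim_f-1}(V)) \geq \zeta)$ with $\zeta = \gamma/8$) upgrades this to $\Delta_n^{(\gamma/8)}(\eps) + 4m_n^{-1}$, as claimed.

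The main obstacle — and the one deserving the most care — is conclusion \eqref{eqn:kstar-query-bound}, which controls the number of points in the inner loop $\U_n^{(k)}$ whose label would be requested, and in particular must hold for \emph{all} $k \leq \bdim_f$, not just $k = \bdim_f$. The idea is: $|\U_n^{(k)}| = \lfloor n/(6\cdot 2^k \hat\Delta_{m_n}^{(k)}(W_1,W_2,V))\rfloor$, so a point in $\U_n^{(k)}$ has its label requested iff $\hat\Delta_m^{(k)}(X_m,W_2,V) \geq \gamma$, and the expected fraction of such points should be close to $\bar p_\gamma(k,m_n,m) = \Px(x : \hat\Delta_m^{(k)}(x,W_2,V) \geq \gamma)$. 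By Lemma~\ref{lem:basic-bar-delta-bound} (with $\zeta = \gamma$, and a suitable $\alpha,\beta$), on the good event this is at most $\Px(x : p_x(k,m_n) \geq \alpha\gamma) + \exp\{-\Omega(\tilde M(m))\}$, and the first term is in turn at most a constant times $\hat\Delta_{m_n}^{(k)}(W_1,W_2,V)$ via the first inequality \eqref{eqn:empirical-bound-first-ineq} of Lemma~\ref{lem:basic-empirical-bound} (after adjusting $\alpha$ so $\alpha\gamma \geq \gamma/2$, or more honestly carrying through the $\gamma/2$ vs.\ $\gamma/8$ bookkeeping). Thus the expected number of requested points in $\U_n^{(k)}$ is at most roughly $|\U_n^{(k)}| \cdot (C\hat\Delta_{m_n}^{(k)}(W_1,W_2,V) + \text{tiny}) \lesssim n/(6\cdot 2^k) \cdot C'$; choosing the constants in the algorithm's thresholds (the $6\cdot 2^k$ and the $1/2$, here generalized to $\gamma$) so that $C' \leq 1/2$, a Chernoff bound (this is the event $K_n$, noting that the $X_m$ in the inner loop are independent of $W_1,W_2,V$ given their count) gives that with failure probability $\exp\{-\Omega(n)\}$ the actual count is at most $\lfloor n/(3\cdot 2^k)\rfloor$. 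A union bound over the at most $\bdim_f \leq d+1$ values of $k$ costs only a constant factor, which is absorbed into $\hat c_1,\hat c_2$. The delicate points are keeping the chain of constants ($\alpha$, $\beta$, the algorithm thresholds, the Chernoff slack) consistent so that the final bound is exactly $\lfloor n/(3\cdot 2^k)\rfloor$, and verifying that $m \in \U_n^{(k)}$ indeed implies $m \geq m_n \geq \init$ and $\tilde M(m) = \Omega(n^3)$ so that all the exponential tails are genuinely negligible against $\eps$; I would handle the latter by first noting $\hat\Delta_{m_n}^{(k)}(W_1,W_2,V) \geq 2/m_n$ always, hence $|\U_n^{(k)}| \leq n m_n/(12\cdot 2^k) = O(n^2)$, so every $m \in \U_n^{(k)}$ satisfies $m \leq m_n + O(n^2)$ and thus $\tilde M(m) \geq \tilde M(m_n) = \Omega(n^3)$.
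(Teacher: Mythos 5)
Your proposal is correct and follows essentially the same route as the paper's proof: the same intersection of events ($H_{m_n}(\cdot)$, $H_{m_n}^{(i)}$, $H_{m_n}^{(ii)}$, $H_{m_n}^{(iii)}(\gamma/16)$, $H_{m_n}^{(iv)}$, plus a Chernoff event for the inner loops), with \eqref{eqn:kstar-good-labels} from Lemma~\ref{lem:kstar-good-labels}, \eqref{eqn:kstar-label-everything} from Lemmas~\ref{lem:basic-empirical-bound} and \ref{lem:label-everything}, and \eqref{eqn:kstar-query-bound} from Lemmas~\ref{lem:basic-bar-delta-bound} and \ref{lem:basic-empirical-bound} followed by a Chernoff bound. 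The one ``delicate point'' you flag resolves exactly as you hope: taking $\alpha = 1/2$ in Lemma~\ref{lem:basic-bar-delta-bound} lets \eqref{eqn:empirical-bound-first-ineq} absorb both $\Px\left(x : p_x(k,m_n)\geq\gamma/2\right)$ and the exponential tail into $\hat{\Delta}_{m_n}^{(k)}(W_1,W_2,V)$ with constant exactly one, so the factor of two needed to reach $\left\lfloor n/(3\cdot 2^{k})\right\rfloor$ comes entirely from the Chernoff deviation (and your upper bound on $|\U_n^{(k)}|$ is unnecessary, since $m \geq m_n$ and monotonicity of $\tilde{M}$ already control the tails).
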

\begin{proof}
Suppose $n \geq \hat{c}_1 \ln(\hat{c}_2 / \eps)$, where
$\hat{c}_1 = \max\left\{\frac{2^{\bdim_f + 12}}{\dprob \gamma^2}, \frac{24}{r_{(1/16)}}, \frac{24}{r_{(1-\gamma)/6}}, 3\init^{*}\right\}$
and
$\hat{c}_2 = \max\left\{4\left(c^{(i)}+c^{(ii)}+c^{(iii)}(\gamma/16)+6\bdim_{f}\right), 4 \left(\frac{4e}{r_{(1/16)}}\right)^{\vc}, 4 \left(\frac{4e}{r_{(1-\gamma)/6}}\right)^{\vc}\right\}$.
In particular, we have chosen $\hat{c}_1$ and $\hat{c}_2$ large enough so that
\begin{equation*}
m_n \geq \max\left\{\init(1/16;\eps/2), \init^{(iv)}(\gamma/16;\eps/2), \init((1-\gamma)/6;\eps/2), \init^{*}\right\}.
\end{equation*}

We begin with~\eqref{eqn:kstar-query-bound}.
By Lemmas~\ref{lem:basic-bar-delta-bound} and \ref{lem:basic-empirical-bound},
on the event
\begin{equation*}
\hat{H}_n^{(1)}(\eps) = H_{m_n}(\eps/2) \cap H_{m_n}^{(i)} \cap H_{m_n}^{(iii)}(\gamma/16) \cap H_{m_n}^{(iv)},
\end{equation*}
$\forall m \in \U_{n}^{(k)}, \forall k \in \left\{1,\ldots,\bdim_f\right\}$,
\begin{align}
\bar{p}_{\gamma}\left(k, m_n, m\right)
&\leq \Px\left( x : p_{x}(k, m_n) \geq \gamma / 2\right) + \exp\left\{-\gamma^2 \tilde{M}(m) / 256\right\} \notag
\\ &\leq \Px\left( x : p_{x}(k, m_n) \geq \gamma / 2\right) + \exp\left\{- \gamma^2 \tilde{M}(m_n) / 256\right\}
\leq \hat{\Delta}_{m_n}^{(k)}\left(W_1, W_2, V\right). \label{eqn:empirical-works-bar-bound}
\end{align}
Recall that $\left\{ X_{m} : m \in \U_{n}^{(k)}\right\}$ is a sample of size
$\left\lfloor n / (6 \cdot 2^{k} \hat{\Delta}_{m_n}^{(k)}(W_1,W_2,V))\right\rfloor$, conditionally
i.i.d. (given $(W_1,W_2,V)$) with conditional distributions $\Px$.
Thus, $\forall k \in \left\{1,\ldots,\bdim_f\right\}$,
on $\hat{H}_n^{(1)}(\eps)$,
\begin{align}
& \P\left( \left|\left\{ m \in \U_{n}^{(k)} : \hat{\Delta}_{m}^{(k)}\left(X_m,W_2,V\right) \geq \gamma\right\}\right| > n / \left( 3 \cdot 2^{k} \right) \Bigg| W_1,W_2,V\right)\notag
\\ & \leq \P\left( \left|\left\{ m \in \U_{n}^{(k)} : \hat{\Delta}_{m}^{(k)}\left(X_m,W_2,V\right) \geq \gamma\right\}\right| > 2\left| \U_{n}^{(k)} \right| \hat{\Delta}_{m_n}^{(k)}(W_1,W_2,V) \Bigg| W_1,W_2,V\right)\notag
\\ & \leq \P\left( \mathbf{B}\left(|\U_{n}^{(k)}|, \hat{\Delta}_{m_n}^{(k)}(W_1,W_2,V)\right) > 2 \left| \U_{n}^{(k)} \right| \hat{\Delta}_{m_n}^{(k)}(W_1,W_2,V) \Bigg| W_1,W_2,V\right), \label{eqn:empirical-works-pre-chernoff}
\end{align}
where this last inequality follows from \eqref{eqn:empirical-works-bar-bound},
and $\mathbf{B}(u,p) \sim \text{Binomial}(u,p)$ is independent of $W_1,W_2,V$ (for any fixed $u$ and $p$).
By a Chernoff bound, \eqref{eqn:empirical-works-pre-chernoff} is at most
\begin{equation*}
\exp\left\{- \left\lfloor n / \left( 6 \cdot 2^{k} \hat{\Delta}_{m_n}^{(k)}(W_1,W_2,V)\right)\right\rfloor \hat{\Delta}_{m_n}^{(k)}(W_1,W_2,V) / 3 \right\} \leq \exp\left\{ 1 - n / \left(18 \cdot 2^k\right)\right\}.
\end{equation*}
By the law of total probability and a union bound, there exists an event $\hat{H}_n^{(2)}$ with
\begin{equation*}
\P\left(\hat{H}_n^{(1)}(\eps) \setminus \hat{H}_n^{(2)}\right) \leq \bdim_f \cdot \exp\left\{1 - n / \left(18 \cdot 2^{\bdim_f}\right)\right\}
\end{equation*}
such that, on $\hat{H}_n^{(1)}(\eps) \cap \hat{H}_n^{(2)}$, \eqref{eqn:kstar-query-bound} holds.

Next, by Lemma~\ref{lem:basic-empirical-bound}, on $\hat{H}_{n}^{(1)}(\eps)$,
\begin{equation*}
\hat{\Delta}_{m_n}^{(\bdim_f)}(W_1,W_2,V) \leq \Px\left( x : p_{x}\left(\bdim_f, m_n\right) \geq \gamma/8\right) + 4 m_n^{-1},
\end{equation*}
and by Lemma~\ref{lem:label-everything}, on $\hat{H}_{n}^{(1)}(\eps)$, this is at most
$\Delta_n^{(\gamma/8)}(\eps) + 4 m_n^{-1}$,
which establishes \eqref{eqn:kstar-label-everything}.

Finally, Lemma~\ref{lem:kstar-good-labels} implies that on $\hat{H}_{n}^{(1)}(\eps) \cap H_{m_n}^{(ii)}$, $\forall m \in \U_{n}^{(\bdim_f)}$, \eqref{eqn:kstar-good-labels} holds.

Thus, defining
\begin{equation*}
\hat{H}_{n}(\eps) = \hat{H}_n^{(1)}(\eps) \cap \hat{H}_n^{(2)} \cap H_{m_n}^{(ii)},
\end{equation*}
it remains only to establish \eqref{eqn:hatH-bound}.
By a union bound, we have
\begin{align}
1 - \P\left(\hat{H}_n\right) &\leq
\left(1-\P\left(H_{m_n}(\eps/2)\right)\right)
+ \left(1-\P\left(H_{m_n}^{(i)}\right)\right)
+ \P\left(H_{m_n}^{(i)} \setminus H_{m_n}^{(ii)}\right) \notag
\\ & + \P\left(H_{m_n}^{(i)} \setminus H_{m_n}^{(iii)}(\gamma/16)\right)
+ \left(1 - \P\left(H_{m_n}^{(iv)}\right)\right)
+ \P\left(\hat{H}_n^{(1)}(\eps) \setminus \hat{H}_n^{(2)}\right). \notag
\\ & \leq \eps/2
+ c^{(i)} \cdot \exp\left\{ - \tilde{M}(m_n) / 4\right\}
+ c^{(ii)}\cdot \exp\left\{ - \tilde{M}(m_n)^{1/3} / 60\right\} \notag
\\ &+ c^{(iii)}(\gamma/16)\cdot \exp\left\{-\tilde{M}(m_n) \gamma^2 / 256\right\}
+ 3 \bdim_f\cdot \exp\left\{-2 m_n\right\} \notag
\\ &+ \bdim_f \cdot \exp\left\{1 - n / \left(18 \cdot 2^{\bdim_f} \right)\right\} \notag
\\ & \leq \eps/2 + \left(c^{(i)} + c^{(ii)} + c^{(iii)}(\gamma/16) + 6 \bdim_f \right) \cdot \exp\left\{- n \dprob \gamma^2 2^{-\bdim_f-12} \right\}. \label{eqn:hatH-exp-bound}
\end{align}
We have chosen $n$ large enough so that \eqref{eqn:hatH-exp-bound} is at most $(3/4)\eps$, which establishes \eqref{eqn:hatH-bound}.
\end{proof}

The following result is a slightly stronger version of Theorem~\ref{thm:activizer}.

\begin{lemma}
\label{lem:activizer}
For any passive learning algorithm $\alg_{p}$,
if $\alg_{p}$ achieves a label complexity $\Lambda_{p}$ with
$\infty > \Lambda_{p}(\eps,f,\Px) = \omega(\log(1/\eps))$,
then \BasicActivizer, with $\alg_{p}$ as its argument,
achieves a label complexity $\Lambda_{a}$ such that
$\Lambda_{a}(3\eps,f,\Px) = o(\Lambda_{p}(\eps,f,\Px))$.
\thmend
\end{lemma}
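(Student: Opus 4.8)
The plan is to assemble the result from Lemmas~\ref{lem:empirical-works-too}, \ref{lem:active-select}, \ref{lem:label-everything}, and \ref{lem:inverse-little-o}, following the template of the proof of Lemma~\ref{lem:naive-improvements}. Fix $\eps \in (0,1)$, run \BasicActivizer~with $\alg_p$ and a budget $n$ as input, write $m_n = \lfloor n/3 \rfloor$ and $V = \truV_{m_n}$, and work on the event $\hat H_n(\eps)$ of Lemma~\ref{lem:empirical-works-too}, which for $n \geq \hat{c}_1 \ln(\hat{c}_2/\eps)$ has $\P(\hat H_n(\eps)^c) \leq (3/4)\eps$. On $\hat H_n(\eps)$ the sample $\L_{\bdim_f}$ is ``good'' in three respects. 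First, \eqref{eqn:kstar-query-bound} bounds the inner-loop requests across rounds $k=1,\dots,\bdim_f$ by $\sum_{k\geq1}\lfloor n/(3\cdot2^k)\rfloor \leq n/3$, so together with the $m_n$ initial requests the ``$t<\lfloor 2n/3\rfloor$'' constraint in Step~6 never binds through round $\bdim_f$; hence that inner loop processes its whole index block, giving $|\L_{\bdim_f}| = |\U_n^{(\bdim_f)}|$, a quantity measurable with respect to $(X_1,\dots,X_{m_n},W_1,W_2)$ and independent of the data beyond index $m_n$. Second, \eqref{eqn:kstar-good-labels} makes every label inferred in Step~8 of round $\bdim_f$ equal $f(X_m)$, and the requested labels are $Y_m=f(X_m)$, so every $(x,\hat y)\in\L_{\bdim_f}$ has $\hat y = f(x)$. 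Third, \eqref{eqn:kstar-label-everything} gives $\hat\Delta_{m_n}^{(\bdim_f)}(W_1,W_2,V)\leq\Delta_n^{(\gamma/8)}(\eps)+4m_n^{-1}$, so $\ell^{*}:=|\U_n^{(\bdim_f)}|\geq L(n;\eps):=\lfloor n/(6\cdot 2^{\bdim_f}(\Delta_n^{(\gamma/8)}(\eps)+4m_n^{-1}))\rfloor$.

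Next I would invoke the conditional i.i.d.\ structure exactly as in Lemma~\ref{lem:naive-improvements}: since $\U_n^{(\bdim_f)}=\{m_n+1,\dots,m_n+\ell^{*}\}$ lies beyond index $m_n$ and $\ell^{*}$ is measurable with respect to $(X_{1:m_n},W_1,W_2)$, the correctly-labeled block $\tilde\L_\ell := \{(X_{m_n+j},f(X_{m_n+j}))\}_{j=1}^{\ell}$ has, conditionally on $\ell^{*}=\ell$, the same distribution as $\Data_\ell$. Choosing $n$ so that $L(n;\eps)\geq\Lambda_p(\eps,f,\Px)$, on $\hat H_n(\eps)$ we have $\L_{\bdim_f}=\tilde\L_{\ell^{*}}$ and $\ell^{*}\geq\Lambda_p(\eps,f,\Px)$, so bounding $\ind_{\hat H_n(\eps)}\leq\ind[\ell^{*}\geq\Lambda_p(\eps,f,\Px)]$ and conditioning on $\ell^{*}$ gives $\E[\ind_{\hat H_n(\eps)}\er(\alg_p(\L_{\bdim_f}))]\leq\E[\ind[\ell^{*}\geq\Lambda_p(\eps,f,\Px)]\,\E[\er(\alg_p(\Data_{\ell^{*}}))\mid\ell^{*}]]\leq\eps$. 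Lemma~\ref{lem:active-select}, applied with $k^{*}=\bdim_f\leq d+1=N$ and budget $\lfloor n/3\rfloor$, then yields $\er(\hat h_n)\leq 2\er(\alg_p(\L_{\bdim_f}))$ on a further event whose complement has probability $e(d+1)\exp\{-\lfloor n/3\rfloor/(72\,d\,(d+1)\ln(e(d+1)))\}$, which is $\leq\eps/4$ once $n$ exceeds a constant multiple of $\log(1/\eps)$. Combining the three pieces, $\E[\er(\hat h_n)]\leq 2\eps+(3/4)\eps+(1/4)\eps = 3\eps$ for all $n$ above a threshold $n^{*}(\eps)$, so $\Lambda_a(3\eps,f,\Px)\leq n^{*}(\eps)$.

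It remains to check $n^{*}(\eps)=o(\Lambda_p(\eps,f,\Px))$. The threshold $n^{*}(\eps)$ is the maximum of an $O(\log(1/\eps))$ contribution (from $\hat{c}_1\ln(\hat{c}_2/\eps)$ and the $\ActiveSelect$ budget requirement), which is $o(\Lambda_p(\eps,f,\Px))$ since $\Lambda_p(\eps,f,\Px)=\omega(\log(1/\eps))$, and of $1+L^{-1}(\Lambda_p(\eps,f,\Px);\eps)$ with $L^{-1}(m;\eps)=\max\{n:L(n;\eps)<m\}$. For the latter I would apply Lemma~\ref{lem:inverse-little-o} with $\lambda(\eps)=\log(1/\eps)$: $L(\cdot\,;\eps)$ equals $0$ for small $n$ (where $m_n=0$) and diverges in $n$ (as $\Delta_n^{(\gamma/8)}(\eps)\leq 1$ and $4m_n^{-1}\to 0$), and for any $\nats$-valued $N(\eps)=\omega(\log(1/\eps))$ Lemma~\ref{lem:label-everything} gives $\Delta_{N(\eps)}^{(\gamma/8)}(\eps)=o(1)$, hence $L(N(\eps);\eps)=\omega(N(\eps))$; Lemma~\ref{lem:inverse-little-o} then yields $L^{-1}(\Lambda_p(\eps,f,\Px);\eps)=o(\Lambda_p(\eps,f,\Px))$, completing the proof.

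I expect the main obstacle to be the measurability/conditioning bookkeeping around $\L_{\bdim_f}$ — in particular, establishing cleanly that on $\hat H_n(\eps)$ the length $|\L_{\bdim_f}|$ really coincides with the data-independent quantity $|\U_n^{(\bdim_f)}|$, so that the ``$\Data_\ell$-in-distribution'' substitution is legitimate despite the conditioning on $\hat H_n(\eps)$ — together with verifying the hypotheses of Lemma~\ref{lem:inverse-little-o} for this particular $L$. Everything else is routine assembly of the appendix lemmas already in hand.
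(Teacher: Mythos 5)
Your proposal is correct and follows essentially the same route as the paper's proof: the same decomposition over the good event $\hat{H}_{n}(\eps)$ (intersected with the $\ActiveSelect$ event), the same use of \eqref{eqn:kstar-query-bound}, \eqref{eqn:kstar-good-labels}, and \eqref{eqn:kstar-label-everything} to certify that $\L_{\bdim_f}$ is a large, correctly labeled, conditionally i.i.d.\ sample, and the same final inversion of $L(n;\eps)$ via Lemmas~\ref{lem:label-everything} and \ref{lem:inverse-little-o}. The measurability concern you flag at the end is handled in the paper exactly as you anticipate, by conditioning on $|\L_{\bdim_f}|$ and noting the conditional distribution of the $\X$-components is $\Px^{\ell}$.
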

\begin{proof}
Suppose $\alg_{p}$ achieves label complexity $\Lambda_{p}$ with $\infty > \Lambda_{p}(\eps,f,\Px) = \omega(\log(1/\eps))$.
Let $\eps \in (0,1)$,
define $L(n;\eps) = \left\lfloor n / \left(6\cdot 2^{\bdim_f} \left(\Delta_{n}^{(\gamma/8)}(\eps)+4m_n^{-1}\right) \right) \right\rfloor$ (for any $n \in \nats$),
and let $L^{-1}(m;\eps) = \max\left\{ n \in \nats : L(n;\eps) < m\right\}$ (for any $m \in (0,\infty)$).
Define 
\begin{eqnarray*}
c_1 = \max\left\{\hat{c}_1, 2 \cdot 6^3(\vc+1) \bdim_f \ln(e (\vc+1))\right\} & \text{ and } & c_2 = \max\left\{\hat{c}_2, 4e (\vc+1) \right\},
\end{eqnarray*}
and suppose
\begin{equation*}
n \geq \max\Big\{c_1 \ln(c_2 / \eps), 1 + L^{-1}\left(\Lambda_{p}(\eps,f,\Px);\eps\right)\Big\}.
\end{equation*}
Consider running \BasicActivizer~with $\alg_{p}$ and $n$ as inputs, while $f$ is the target function and $\Px$ is the data distribution.

Letting $\hat{h}_{n}$ denote the classifier returned from \BasicActivizer,
Lemma~\ref{lem:active-select} implies that on an event $\hat{E}_{n}$ with
$\P(\hat{E}_{n}) \geq 1 - e (\vc+1) \cdot \exp\left\{- \lfloor n/3 \rfloor / (72 \bdim_f (\vc+1) \ln(e (\vc+1)))\right\} \geq 1-\eps/4$,
we have
\begin{equation*}
\er(\hat{h}_{n}) \leq 2 \er\left(\alg_{p}\left(\L_{\bdim_f}\right)\right).
\end{equation*}
By a union bound, the event $\hat{G}_{n}(\eps) = \hat{E}_{n} \cap \hat{H}_{n}(\eps)$
has $\P\left(\hat{G}_{n}(\eps)\right) \geq 1-\eps$.
Thus,
\begin{align}
\E\left[\er\left(\hat{h}_{n}\right)\right]
&\leq \E\left[ \ind_{\hat{G}_{n}(\eps)} \ind\left[|\L_{\bdim_f}| \geq \Lambda_{p}(\eps,f,\Px)\right] \er\left(\hat{h}_{n}\right)\right] \notag
\\ & {\hskip 4cm} + \P\left(\hat{G}_{n}(\eps) \cap \left\{|\L_{\bdim_f}| < \Lambda_{p}(\eps,f,\Px)\right\}\right) + \P\left(\hat{G}_{n}(\eps)^{c}\right) \notag
\\ & \leq \E\left[ \ind_{\hat{G}_{n}(\eps)} \ind\left[|\L_{\bdim_f}| \geq \Lambda_{p}(\eps,f,\Px)\right] 2 \er\left(\alg_{p}\left(\L_{\bdim_f}\right)\right)\right] \notag
\\ & {\hskip 4cm} + \P\left(\hat{G}_{n}(\eps) \cap \left\{|\L_{\bdim_f}| < \Lambda_{p}(\eps,f,\Px)\right\}\right) + \eps. \label{eqn:activizer-1}
\end{align}
On $\hat{G}_{n}(\eps)$, \eqref{eqn:kstar-label-everything} of Lemma~\ref{lem:empirical-works-too}
implies $|\L_{\bdim_f}| \geq L(n;\eps)$, and we chose $n$ large enough so that $L(n;\eps) \geq \Lambda_{p}(\eps,f,\Px)$.
Thus, the second term in \eqref{eqn:activizer-1} is zero, and we have
\begin{align}
\E\left[\er\left(\hat{h}_{n}\right)\right] & \leq 2 \cdot \E\left[ \ind_{\hat{G}_{n}(\eps)} \ind\left[|\L_{\bdim_f}| \geq \Lambda_{p}(\eps,f,\Px)\right] \er\left(\alg_{p}\left(\L_{\bdim_f}\right)\right)\right] + \eps \notag
\\ &= 2 \cdot \E\left[ \E\left[ \ind_{\hat{G}_{n}(\eps)} \er\left(\alg_{p}\left(\L_{\bdim_f}\right)\right) \Big| |\L_{\bdim_f}|\right] \ind\left[|\L_{\bdim_f}| \geq \Lambda_{p}(\eps,f,\Px)\right]\right] + \eps. \label{eqn:activizer-2}
\end{align}

Note that for any $\ell$ with $\P(|\L_{\bdim_f}| = \ell) > 0$,
the conditional distribution of
$\left\{X_{m} : m \in \U_{n}^{(\bdim_f)}\right\}$ given $\left\{|\L_{\bdim_f}| = \ell\right\}$
is simply the product $\Px^{\ell}$ (i.e., conditionally i.i.d.), which is the same as
the distribution of $\{X_1,X_2,\ldots,X_{\ell}\}$.
Furthermore, on $\hat{G}_{n}(\eps)$, \eqref{eqn:kstar-query-bound}
implies that the $t < \lfloor 2n/3\rfloor$ condition is always satisfied in Step 6 of \BasicActivizer~while $k \leq \bdim_f$,
and \eqref{eqn:kstar-good-labels} implies that
the inferred labels from Step 8 for $k = \bdim_f$ are all correct.
Therefore, for any such $\ell$ with $\ell \geq \Lambda_{p}(\eps,f,\Px)$, we have
\begin{equation*}
\E\left[ \ind_{\hat{G}_{n}(\eps)} \er\left(\alg_{p}\left(\L_{\bdim_f}\right)\right) \Big| \left\{|\L_{\bdim_f}| = \ell\right\}\right]
\leq \E\left[ \er\left(\alg_{p}\left(\Data_{\ell}\right)\right) \right]
\leq \eps.
\end{equation*}
In particular, this means \eqref{eqn:activizer-2} is at most $3\eps$.
This implies that \BasicActivizer, with $\alg_{p}$ as its argument, achieves a label complexity $\Lambda_{a}$ such that
\begin{equation*}
\Lambda_{a}(3\eps,f,\Px) \leq \max\left\{c_1 \ln(c_2 / \eps), 1 + L^{-1}\left(\Lambda_{p}(\eps,f,\Px);\eps\right)\right\}.
\end{equation*}

Since $\Lambda_{p}(\eps,f,\Px) = \omega(\log(1/\eps)) \Rightarrow c_1 \ln(c_2 / \eps) = o\left(\Lambda_{p}(\eps,f,\Px)\right)$,
it remains only to show that $L^{-1}\left(\Lambda_{p}(\eps,f,\Px);\eps\right) = o\left(\Lambda_{p}(\eps,f,\Px)\right)$.
Note that $\forall \eps \in (0,1)$, $L(1;\eps) = 0$ and $L(n;\eps)$ is diverging in $n$.
Furthermore, by Lemma~\ref{lem:label-everything}, we know that for any $\nats$-valued $N(\eps) = \omega(\log(1/\eps))$,
we have $\Delta_{N(\eps)}^{(\gamma/8)}(\eps) = o(1)$,
which implies $L(N(\eps);\eps) = \omega(N(\eps))$.  Thus, since $\Lambda_{p}(\eps,f,\Px) = \omega(\log(1/\eps))$,
Lemma~\ref{lem:inverse-little-o} implies $L^{-1}\left(\Lambda_{p}(\eps,f,\Px);\eps\right) = o\left(\Lambda_{p}(\eps,f,\Px)\right)$, as desired.

This establishes the result for an arbitrary $\gamma \in (0,1)$.
To specialize to the specific procedure stated as \BasicActivizer, we simply take $\gamma = 1/2$.
\end{proof}

\begin{proof}[Theorem~\ref{thm:activizer}]
Theorem~\ref{thm:activizer} now follows immediately from Lemma~\ref{lem:activizer}.
Specifically, we have proven Lemma~\ref{lem:activizer} for
an arbitrary distribution $\Px$ on $\X$, an arbitrary $f \in \cl(\C)$, and an arbitrary passive algorithm $\alg_{p}$.
Therefore, it will certainly hold for every $\Px$ and $f \in \C$,
and since every $(f,\Px) \in \Nontrivial(\Lambda_{p})$ has
$\infty > \Lambda_{p}(\eps,f,\Px) = \omega(\log(1/\eps))$,
the implication that \BasicActivizer~activizes every passive algorithm $\alg_{p}$ for $\C$ follows.
\end{proof}

Careful examination of the proofs above reveals that the ``$3$'' in Lemma~\ref{lem:activizer}
can be set to any arbitrary constant strictly larger than $1$, by an appropriate modification of the ``$7/12$'' 
threshold in $\ActiveSelect$.  In fact, if we were to replace Step 4 of $\ActiveSelect$ by instead 
selecting $\hat{k} = \argmin_{k} \max_{j \neq k} m_{k j}$ (where $m_{k j} = \er_{Q_{k j}}(h_k)$ when $k < j$), 
then we could even make this a certain
$(1+o(1))$ function of $\eps$, at the expense of larger constant factors in $\Lambda_{a}$.

\section{The Label Complexity of \CAL}
\label{app:cal}

As mentioned, Theorem~\ref{thm:cal} is essentially implied by the details of
the proof of Theorem~\ref{thm:sequential-activizer} in Appendix~\ref{app:exponential} below.
Here we present 
a proof of Theorem~\ref{thm:cal-lower}, along with two useful related lemmas.
The first, Lemma~\ref{lem:cal-queries-lower}, lower bounds the expected number of label 
requests \CAL~would make while processing a given number of random unlabeled examples.
The second, Lemma~\ref{lem:cal-dis-lower}, bounds the amount by which each label request
is expected to reduce the probability mass in the region of disagreement.
Although we will only use Lemma~\ref{lem:cal-dis-lower} in our proof of 
Theorem~\ref{thm:cal-lower}, 
Lemma~\ref{lem:cal-queries-lower} may be of independent interest, 
as it provides additional insights into the behavior of disagreement based methods,
as related to the disagreement coefficient, and is included for this reason.

Throughout, we fix an arbitrary class $\C$, a target function $f \in \C$, and a distribution $\Px$,
and we continue using the notational conventions of the proofs above, 
such as $\truV_{m} = \{h \in \C : \forall i \leq m, h(X_i) = f(X_i)\}$ (with $\truV_{0} = \C$).
Additionally, for $t \in \nats$, define the random variable
\begin{equation*}
M(t) = \min\left\{ m \in \nats : \sum_{\ell=1}^{m} \ind_{\DIS\left(\truV_{\ell-1}\right)}\left(X_{\ell}\right) = t \right\},
\end{equation*}
which represents the index of the $t^{{\rm th}}$
unlabeled example \CAL~would request the label of (assuming it has not yet halted).

The two aforementioned lemmas are formally stated as follows.

\begin{lemma}
\label{lem:cal-queries-lower}
For any $r \in (0,1)$,
\begin{equation*}
\E\left[ \sum_{m=1}^{\lceil 1/r \rceil} \ind_{\DIS\left(\truV_{m-1}\right)}\left(X_m\right)\right]
\geq \frac{\Px\left(\DIS\left(\Ball(f,r)\right)\right)}{2 r}.
\end{equation*}
\upthmend{-.85cm}
\end{lemma}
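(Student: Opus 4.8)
\textbf{Proof proposal for Lemma~\ref{lem:cal-queries-lower}.}

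The plan is to relate the number of label requests \CAL~makes among the first $\lceil 1/r \rceil$ unlabeled examples to the probability mass of $\DIS(\Ball(f,r))$, exploiting the monotonicity of the version space and a classic fact: for each $m$, $X_m$ triggers a label request exactly when $X_m \in \DIS(\truV_{m-1})$. First I would observe that $\truV_{m-1} \supseteq \truV_{\lceil 1/r \rceil - 1}$ for $m \leq \lceil 1/r \rceil$, and more usefully that $\DIS(\truV_{m-1}) \supseteq \DIS(\truV_{\ell})$ for $\ell \geq m-1$; combined with the symmetry across the indices $1,\ldots,\lceil 1/r\rceil$ under the i.i.d.\ assumption, this suggests comparing the count to $\lceil 1/r\rceil \cdot \P(X_1 \in \DIS(\truV_{\lceil 1/r\rceil - 1}))$ or a similar quantity. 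The cleaner route, which I would pursue, is to lower bound each summand $\E[\ind_{\DIS(\truV_{m-1})}(X_m)]$ by $\P(X_m \in \DIS(\truV_{m-1}))$ and then argue that with probability bounded below, $\truV_{m-1}$ still contains a classifier at distance more than some fraction of $r$ from $f$, so that $\DIS(\truV_{m-1}) \supseteq \DIS(\Ball(f,r))$ up to a controlled loss.

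Concretely, the key step is a ``version space does not collapse too fast'' estimate: for $m-1 < 1/r$, I would show that with reasonable probability $\truV_{m-1} \supseteq \Ball(f, r)$ fails only on an event whose contribution is small, or more directly, that $\E[\Px(\DIS(\truV_{m-1}))] \geq \Px(\DIS(\Ball(f,r))) - (m-1) \cdot (\text{something} \cdot r)$. The mechanism: a classifier $h \in \Ball(f,r)$ survives into $\truV_{m-1}$ unless one of the $m-1$ examples $X_1,\ldots,X_{m-1}$ falls in $\{x : h(x) \neq f(x)\}$, which has probability at most $r$; so $\P(h \notin \truV_{m-1}) \leq (m-1) r < 1$ for $m \leq \lceil 1/r\rceil$. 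Then for a fixed $x \in \DIS(\Ball(f,r))$, witnessed by some $h_x \in \Ball(f,r)$ with $h_x(x) \neq f(x)$, we have $\P(x \in \DIS(\truV_{m-1})) \geq \P(h_x \in \truV_{m-1}) \geq 1 - (m-1) r$. Integrating over $x \in \DIS(\Ball(f,r))$ against $\Px$ (and using that $f \in \truV_{m-1}$ always, so $x \in \DIS(\truV_{m-1})$ as soon as $h_x$ survives), Fubini gives $\E[\Px(\DIS(\truV_{m-1}))] \geq (1 - (m-1)r)\,\Px(\DIS(\Ball(f,r)))$. Summing over $m = 1,\ldots,\lceil 1/r\rceil$ yields $\sum_{m=1}^{\lceil 1/r\rceil}(1-(m-1)r) = \lceil 1/r\rceil - r\binom{\lceil 1/r\rceil}{2}$, which for $N = \lceil 1/r\rceil$ equals $N - r N(N-1)/2 \geq N(1 - rN/2) \approx (1/r)(1 - 1/2) = 1/(2r)$; a short calculation confirms $N(1 - (N-1)r/2) \geq 1/(2r)$ when $N = \lceil 1/r\rceil$, giving the stated bound $\Px(\DIS(\Ball(f,r)))/(2r)$.

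The main obstacle I anticipate is getting the constant exactly right: the naive summation gives $N - rN(N-1)/2$ with $N = \lceil 1/r\rceil$, and since $N$ can be slightly larger than $1/r$, one must check that $N(1-(N-1)r/2) \geq 1/(2r)$ holds for all $r \in (0,1)$ — writing $N = 1/r + \theta$ with $\theta \in [0,1)$ and expanding should confirm it, but this requires minor care since the quadratic term could in principle erode the bound. An alternative that sidesteps the constant chase is to truncate the sum earlier (say at $\lfloor 1/(2r)\rfloor$ terms), where each term is at least $\Px(\DIS(\Ball(f,r)))/2$, but that changes the statement, so I would instead just verify the elementary inequality directly. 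Everything else — the survival probability bound, the Fubini swap, the containment $h_x \in \truV_{m-1} \Rightarrow x \in \DIS(\truV_{m-1})$ — is routine and uses only the i.i.d.\ structure and the definition of $\Ball(f,r)$, with no need to invoke the disagreement coefficient itself.
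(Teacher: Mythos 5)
Your proposal is correct and follows essentially the same route as the paper's proof: both fix a witness $h_x \in \Ball(f,r)$ for each $x \in \DIS(\Ball(f,r))$, bound the probability that $h_x$ survives into $\truV_{m-1}$, apply Fubini to lower bound $\E[\Px(\DIS(\truV_{m-1}))]$, and then sum $\sum_{m=1}^{\lceil 1/r\rceil}(1-(m-1)r)$. The only cosmetic difference is that the paper uses the exact product bound $(1-r)^{m-1}$ (from independence of the $X_\ell$ given $X_m$) and then linearizes via Bernoulli, whereas you go directly to $1-(m-1)r$ by a union bound; your worry about the final arithmetic resolves as you suspected, since $N-1 < 1/r$ gives $1-(N-1)r/2 > 1/2$ and $N \geq 1/r$, yielding $N(1-(N-1)r/2) > 1/(2r)$.
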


\begin{lemma}
\label{lem:cal-dis-lower}
For any $r \in (0,1)$ and $n \in \nats$,
\begin{equation*}
\E\left[ \Px\left( \DIS\left( \truV_{M(n)} \right) \right) \right]
\geq \Px\left(\DIS\left(\Ball(f,r)\right)\right) - n r.
\end{equation*}
\upthmend{-1.15cm}
\end{lemma}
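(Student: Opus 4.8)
The plan is to adapt the "surviving witness" idea underlying Lemma~\ref{lem:cal-queries-lower}, but now following a single fixed witness across the first $n$ label requests. Write $D_{r} = \DIS(\Ball(f,r))$, and (invoking a measurable selection, legitimate since $\X$ is standard Borel) fix for each $x \in D_{r}$ a classifier $g_{x} \in \Ball(f,r)$ with $g_{x}(x) \neq f(x)$; in particular $\Px(\{z : g_{x}(z) \neq f(z)\}) \leq r$ for every $x \in D_{r}$. The first step is a monotonicity reduction: since $\truV_{M(n)} \cap \Ball(f,r) \subseteq \truV_{M(n)}$ and $\DIS(\cdot)$ is monotone, $\Px(\DIS(\truV_{M(n)})) \geq \Px(\DIS(\truV_{M(n)} \cap \Ball(f,r))) \geq \Px(\{x \in D_{r} : g_{x} \in \truV_{M(n)}\})$, the last inequality because for any such $x$ the classifier $g_{x}$ itself (together with $f$) witnesses $x \in \DIS(\truV_{M(n)} \cap \Ball(f,r))$. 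Taking expectations,
\[
\E\big[\Px(\DIS(\truV_{M(n)}))\big] \;\geq\; \Px(D_{r}) - \E\big[\Px(\{x \in D_{r} : g_{x} \notin \truV_{M(n)}\})\big],
\]
so it remains to show the subtracted expectation is at most $nr$.

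For this I would decompose the "witness killed" set according to the first request at which the witness dies: $\{x \in D_{r} : g_{x} \notin \truV_{M(n)}\} = \bigcup_{t=1}^{n} T_{t}$ with $T_{t} = \{x \in D_{r} : g_{x} \in \truV_{M(t-1)},\ g_{x}(X_{M(t)}) \neq f(X_{M(t)})\}$, where $T_{t}$ is empty if fewer than $t$ requests are ever made (and $\truV_{M(n)}$ is read as the terminal version space in that case). Since $M(t-1)$ is a stopping time for the filtration $\mathcal{F}_{m} = \sigma(X_{1},\ldots,X_{m})$ and $\truV_{M(t-1)}$ is $\mathcal{F}_{M(t-1)}$-measurable, conditionally on $\mathcal{F}_{M(t-1)}$ the example $X_{M(t)}$ is distributed as $\Px$ restricted to and renormalized on $\DIS(\truV_{M(t-1)})$ — this uses the strong Markov property at $M(t-1)$ together with the fact that examples outside the current region of disagreement leave $\truV$ unchanged, so that the next request is the first fresh sample landing in $\DIS(\truV_{M(t-1)})$. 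Writing $V = \truV_{M(t-1)}$ and applying Tonelli's theorem,
\[
\E\big[\Px(T_{t}) \,\big|\, \mathcal{F}_{M(t-1)}\big] \;=\; \int_{D_{r}} \ind[g_{x} \in V]\, \frac{\Px(\{z \in \DIS(V) : g_{x}(z) \neq f(z)\})}{\Px(\DIS(V))}\, \Px(dx).
\]
Two containments now finish the bound: when $g_{x} \in V$, since also $f \in V$ the set $\{z : g_{x}(z) \neq f(z)\}$ lies inside $\DIS(V)$ and has $\Px$-measure at most $r$, so the ratio is at most $r/\Px(\DIS(V))$; and $\{x \in D_{r} : g_{x} \in V\} \subseteq \DIS(V)$ (again since $g_{x}\in V$, $f\in V$, $g_{x}(x)\neq f(x)$), so the integral is at most $\big(r/\Px(\DIS(V))\big)\cdot\Px(\DIS(V)) = r$. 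Hence $\E[\Px(T_{t})] \leq r$ for every $t$ (when $\Px(\DIS(V)) = 0$, the same containment forces $T_{t}$ to be $\Px$-null), and summing over $t=1,\ldots,n$ yields $\E[\Px(\{x \in D_{r} : g_{x} \notin \truV_{M(n)}\})] \leq nr$, which combined with the displayed inequality above gives the lemma.

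The step I expect to be the main obstacle is the conditioning/Fubini argument: one must carefully justify that, conditionally on the version space just before the $t$-th request, $X_{M(t)}$ really is the "fresh" sample $\Px$ restricted to $\DIS(\truV_{M(t-1)})$, and then track the restriction to $\DIS(V)$ correctly when interchanging the order of integration over $x \in D_{r}$ and over the value of $X_{M(t)}$. The remaining ingredients — the monotonicity reduction, the measurable choice of witnesses, the two set containments, and the degenerate cases (CAL halting before the $n$-th request, or $\Px(\DIS(\truV_{M(t-1)})) = 0$, each of which kills the corresponding term) — are routine.
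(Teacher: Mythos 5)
Your proposal is correct, and it departs from the paper's proof in a way worth noting. The paper proceeds by induction on $n$: it tracks $\E[\Px(D_{M(n)})]$ where $D_m = \DIS(\Ball(f,r) \cap \truV_m)$, shows the expected loss at each step is at most $r$, and at each step $n$ re-selects a \emph{random} witness $h_x \in \Ball(f,r) \cap \truV_{M(n-1)}$ from the portion of the ball that has survived so far. You instead fix a \emph{deterministic} witness $g_x \in \Ball(f,r)$ for each $x \in D_r$ once and for all, and decompose the killed-witness set $\{x \in D_r : g_x \notin \truV_{M(n)}\}$ by the first request $t$ at which $g_x$ fails, bounding each piece's expected mass by $r$. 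Your decomposition is licensed by the same structural fact the paper uses to justify its identity $\ind_{\truV_{M(n)}}(h_x) = \ind_{\DIS(\{h_x,f\})^c}(X_{M(n)})$: the first index at which a classifier in the current version space disagrees with $f$ must be a query index, because the disagreement point lies in $\DIS(\{h,f\}) \subseteq \DIS(\truV_{\ell-1})$. Both proofs then turn on the identical conditional-distribution claim for $X_{M(t)}$ given $\mathcal{F}_{M(t-1)}$ and the same two containments ($\DIS(\{g_x,f\}) \subseteq \DIS(V)$ simultaneously bounds the kill probability of a surviving witness and confines the surviving domain to $\DIS(V)$). The fixed-witness, sum-over-kill-times version is arguably a bit cleaner: the measurable selection is made once, outside the random dynamics, rather than inside each inductive step, and the $nr$ budget is visible directly without carrying an inductive hypothesis. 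The degenerate cases you flag (CAL halting before $n$ requests, or $\Px(\DIS(V))=0$) are handled exactly as you say, since the surviving domain is then $\Px$-null.
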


Before proving these lemmas, let us first mention their relevance to the disagreement coefficient analysis.
Specifically, note that when $\dc_{f}(\eps)$ is unbounded, there exist arbitrarily small 
values of $\eps$ for which 
$\Px(\DIS(\Ball(f,\eps)))/\eps \approx \dc_{f}(\eps)$, 
so that in particular $\Px(\DIS(\Ball(f,\eps)))/\eps \neq o\left(\dc_{f}(\eps)\right)$.
Therefore, Lemma~\ref{lem:cal-queries-lower} implies that the number of label requests
\CAL~makes among the first $\left\lceil 1/\eps \right\rceil$ unlabeled examples
is $\neq o\left(\dc_{f}(\eps)\right)$ (assuming it does not halt first).
Likewise, one implication of Lemma~\ref{lem:cal-dis-lower} is that
arriving at a region of disagreement with expected probability mass less than
$\Px(\DIS(\Ball(f,\eps)))/2$ requires a budget $n$ of at least 
$\Px(\DIS(\Ball(f,\eps)))/(2\eps) \neq o\left(\dc_{f}(\eps)\right)$.

We now present proofs of Lemmas~\ref{lem:cal-queries-lower} and \ref{lem:cal-dis-lower}.

\begin{proof}[Lemma~\ref{lem:cal-queries-lower}]
Since
\begin{align}
\E\left[ \sum_{m=1}^{\lceil 1/r \rceil} \ind_{\DIS\left(\truV_{m-1}\right)}\left(X_m\right)\right]
& = \sum_{m=1}^{\lceil 1/r \rceil} \E\left[\P\left(X_{m} \in \DIS\left(\truV_{m-1}\right) \Big| \truV_{m-1}\right)\right] \notag
\\ & = \sum_{m=1}^{\lceil 1/r \rceil} \E\left[\Px\left(\DIS\left(\truV_{m-1}\right)\right)\right], \label{eqn:cal-number-of-queries}
\end{align}
we focus on lower bounding $\E\left[\Px\left(\DIS\left(\truV_{m}\right)\right)\right]$ for $m \in \nats \cup \{0\}$.
Let $D_m = \DIS\left(\truV_m \cap \Ball(f,r)\right)$.  Note that
for any $x \in \DIS(\Ball(f,r))$, there exists some $h_{x} \in \Ball(f,r)$
with $h_{x}(x) \neq f(x)$, and if this $h_{x} \in \truV_{m}$, then
$x \in D_{m}$ as well.  This means
$\forall x, \ind_{D_{m}}(x) \geq \ind_{\DIS(\Ball(f,r))}(x) \cdot \ind_{\truV_{m}}(h_{x}) = \ind_{\DIS(\Ball(f,r))}(x) \cdot \prod_{\ell=1}^{m} \ind_{\DIS(\{h_{x},f\})^{c}}(X_{\ell})$.
Therefore,
\begin{align}
%\P\left( X_m \in D_{m-1} \right) =
\E\left[\Px\left(\DIS\left(\truV_{m}\right)\right)\right]
& = \P\left( X_{m+1} \in \DIS\left(\truV_{m}\right) \right)
\geq \P\left( X_{m+1} \in D_{m} \right)
= \E\left[ \E\left[ \ind_{D_{m}}\left(X_{m+1}\right) \Big| X_{m+1} \right] \right] \notag
\\ & \geq \E\left[ \E\left[ \ind_{\DIS(\Ball(f,r))}(X_{m+1}) \cdot \prod_{\ell=1}^{m} \ind_{\DIS(\{h_{X_{m+1}},f\})^{c}}(X_{\ell}) \Bigg| X_{m+1} \right] \right] \notag
\\ &= \E\left[ \prod_{\ell=1}^{m} \P\left(h_{X_{m+1}}(X_{\ell}) = f(X_{\ell}) \Big| X_{m+1} \right) \ind_{\DIS(\Ball(f,r))}(X_{m+1}) \right] \label{eqn:cal-dis-size-1}
\\ & \geq \E\left[ (1-r)^{m} \ind_{\DIS(\Ball(f,r))}(X_{m+1}) \right]
= (1-r)^{m} \Px(\DIS(\Ball(f,r))), \label{eqn:cal-dis-size-2}
\end{align}
where the equality in \eqref{eqn:cal-dis-size-1} is by conditional independence of the $\ind_{\DIS(\{h_{X_{m+1}},f\})^{c}}(X_{\ell})$ indicators, given $X_{m+1}$,
and the inequality in \eqref{eqn:cal-dis-size-2} is due to $h_{X_{m+1}} \in \Ball(f,r)$.
This indicates \eqref{eqn:cal-number-of-queries} is at least
\begin{multline*}
\sum_{m=1}^{\lceil 1/r \rceil} \left(1 - r\right)^{m-1} \Px\left(\DIS\left(\Ball(f,r)\right)\right)
\geq \sum_{m=1}^{\lceil 1/r \rceil} \left(1 - (m-1)r\right) \Px\left(\DIS\left(\Ball(f,r)\right)\right)
\\ = \lceil 1 / r \rceil \left(1 - \frac{\lceil 1 / r \rceil - 1}{2} r\right) \Px\left(\DIS\left(\Ball(f,r)\right)\right)
\geq \frac{\Px\left(\DIS\left(\Ball(f,r)\right)\right)}{2 r}.
\end{multline*}
\end{proof}

\begin{proof}[Lemma~\ref{lem:cal-dis-lower}]
For each $m \in \nats \cup \{0\}$, let $D_{m} = \DIS\left(\Ball(f,r) \cap \truV_{m}\right)$.
For convenience, let $M(0) = 0$.
We prove the result by induction.
We clearly have $\E\left[\Px\left(D_{M(0)}\right)\right] = \E\left[\Px\left(D_{0}\right)\right] = \Px(\DIS(\Ball(f,r)))$, which serves as our base case.
Now fix any $n \in \nats$, and take as the inductive hypothesis that
\begin{equation*}
\E\left[\Px\left(D_{M(n-1)}\right)\right] \geq \Px(\DIS(\Ball(f,r))) - (n-1) r.
\end{equation*}
As in the proof of Lemma~\ref{lem:cal-queries-lower},
for any $x \in D_{M(n-1)}$, there exists $h_{x} \in \Ball(f,r) \cap \truV_{M(n-1)}$
with $h_{x}(x) \neq f(x)$; unlike the proof of Lemma~\ref{lem:cal-queries-lower}, here $h_{x}$ is a random variable, determined by $\truV_{M(n-1)}$.
If $h_{x}$ is also in $\truV_{M(n)}$, then $x \in D_{M(n)}$ as well.
Thus,
$\forall x, \ind_{D_{M(n)}}(x) \geq \ind_{D_{M(n-1)}}(x) \cdot \ind_{\truV_{M(n)}}(h_{x}) = \ind_{D_{M(n-1)}}(x) \cdot \ind_{\DIS(\{h_{x},f\})^{c}}(X_{M(n)})$,
where this last equality is due to the fact that every $m \in \{M(n-1)+1,\ldots,M(n)-1\}$ has $X_m \notin \DIS\left(\truV_{m-1}\right)$, so that
in particular $h_{x}(X_m) = f(X_m)$.
Therefore, letting $X \sim \Px$ be independent of the data $\Data$,
\begin{align}
\E\left[ \Px\left(D_{M(n)}\right) \right]
& = \E\left[ \ind_{D_{M(n)}}(X) \right]
\geq \E\left[ \ind_{D_{M(n-1)}}(X) \cdot \ind_{\DIS(\{h_{X},f\})^{c}}(X_{M(n)}) \right] \notag
\\ & = \E\left[ \ind_{D_{M(n-1)}}(X) \cdot \P\left( h_{X}(X_{M(n)}) = f(X_{M(n)}) \Big| X, \truV_{M(n-1)} \right) \right]. \label{eqn:cal-dis-lower-1}
\end{align}
The conditional distribution of $X_{M(n)}$ given $\truV_{M(n-1)}$ is merely $\Px$,
but with support restricted to $\DIS\left(\truV_{M(n-1)}\right)$, and renormalized to a probability measure.
Thus, since any $x \in D_{M(n-1)}$ has $\DIS(\{h_{x},f\}) \subseteq \DIS\left(\truV_{M(n-1)}\right)$,
we have
\begin{equation*}
\P\left( h_{x}(X_{M(n)}) \neq f(X_{M(n)}) \Big| \truV_{M(n-1)}\right)
= \frac{\Px\left(\DIS(\{h_{x}, f\})\right)}{\Px\left(\DIS\left(\truV_{M(n-1)}\right)\right)}
\leq \frac{r}{\Px\left( D_{M(n-1)} \right)},
\end{equation*}
where the inequality follows from $h_{x} \in \Ball(f,r)$ and $D_{M(n-1)} \subseteq \DIS\left(\truV_{M(n-1)}\right)$.
Therefore, \eqref{eqn:cal-dis-lower-1} is at least
\begin{align*}
\E\bigg[ \ind_{D_{M(n-1)}}(X) \cdot & \left( 1 - \frac{r}{\Px(D_{M(n-1)})} \right) \bigg]
%\\ & = \E\left[ \E\left[ \ind_{D_{M(n-1)}}(X) \cdot \left( 1 - \frac{r}{\Px(D_{M(n-1)})} \right) \Big| D_{M(n-1)}\right]\right]
%\\ & = \E\left[ \E\left[ \ind_{D_{M(n-1)}}(X) \Big| D_{M(n-1)}\right] \cdot \left( 1 - \frac{r}{\Px(D_{M(n-1)})} \right) \right]
\\ & = \E\left[ \P\left( X \in D_{M(n-1)} \Big| D_{M(n-1)}\right) \cdot \left( 1 - \frac{r}{\Px(D_{M(n-1)})} \right) \right]
\\ & = \E\left[ \Px\left( D_{M(n-1)}\right) \cdot \left( 1 - \frac{r}{\Px(D_{M(n-1)})} \right) \right]
= \E\left[ \Px\left( D_{M(n-1)}\right)\right] - r.
\end{align*}
By the inductive hypothesis, this is at least $\Px(\DIS(\Ball(f,r))) - n r$.

Finally, noting $\E\left[\Px\left( \DIS\left( \truV_{M(n)} \right) \right)\right] \geq \E\left[\Px\left( D_{M(n)} \right)\right]$ completes the proof.
\end{proof}

With Lemma~\ref{lem:cal-dis-lower} in hand, we are ready for the proof of Theorem~\ref{thm:cal-lower}.

\begin{proof}[Theorem~\ref{thm:cal-lower}]
Let $\C$, $f$, $\Px$, and $\lambda$ be as in the theorem statement.
For $m \in \nats$, let $\lambda^{-1}(m) = \inf\{ \eps > 0 : \lambda(\eps) \leq m\}$, or $1$ if this is not defined.
We define $\alg_p$ as a randomized algorithm such that,
for $m \in \nats$ and $\L \in (\X \times \{-1,+1\})^{m}$,
$\alg_{p}(\L)$ returns $f$ with probability $1-\lambda^{-1}(|\L|)$
and returns $-f$ with probability $\lambda^{-1}(|\L|)$ (independent of
the contents of $\L$).
Note that, for any integer $m \geq \lambda(\eps)$,
$\E\left[ \er\left(\alg_p\left(\Data_m\right)\right) \right] = \lambda^{-1}(m) \leq \lambda^{-1}( \lambda(\eps) ) \leq \eps$.
Therefore, $\alg_p$ achieves some label complexity $\Lambda_p$ with $\Lambda_p(\eps,f,\Px) = \lambda(\eps)$ for all $\eps > 0$.

If $\dc_{f}\left(\lambda(\eps)^{-1}\right) \neq \omega(1)$,
then since every label complexity $\Lambda_{a}$ is $\Omega(1)$, the result clearly holds.
Otherwise, suppose $\dc_{f}\left(\lambda(\eps)^{-1}\right) = \omega(1)$,
and take any sequence of values $\eps_{i} \to 0$ for which each $i$ has $\eps_i \in (0,1/2)$,
$\dc_{f}\left(\lambda(2\eps_{i})^{-1}\right) \geq 12$,
and $2 \eps_i$ a continuity point of $\lambda$;
this is possible, since $\lambda$ is monotone,
and thus has only a countably infinite number of discontinuities.
We have that $\dc_{f}\left(\lambda(2 \eps_{i})^{-1}\right)$ diverges as $i \to \infty$,
and thus so does $\lambda(2 \eps_{i})$.
This then implies that there exist values $r_i \to 0$ such that
each $r_i > \lambda(2 \eps_{i})^{-1}$ and
$\frac{\Px(\DIS(\Ball(f,r_i)))}{r_i} \geq \dc_{f}\left( \lambda(2\eps_{i})^{-1} \right) / 2$. 

Fix any $i \in \nats$ and any $n \in \nats$ with $n \leq \dc_{f}\left( \lambda(2\eps_{i})^{-1} \right) / 4$.
Consider running \CAL~with arguments $\alg_p$ and $n$,
and let $\hat{\L}$ denote the final value of the set $\L$,
and let $\check{m}$ denote the value of $m$ upon reaching Step 6.
Since $2\eps_i$ is a continuity point of $\lambda$,
any $m < \lambda(2\eps_i)$ and $\L \in (\X \times \{-1,+1\})^{m}$
has $\er\left(\alg_{p}(\L)\right) = \lambda^{-1}(m) > 2 \eps_i$.
Therefore, we have
\begin{align}
\E\left[ \er\left( \alg_{p}\left(\hat{\L}\right) \right) \right]
&\geq 2 \eps_{i} \P\left( |\hat{\L}| < \lambda(2 \eps_i) \right)
 = 2 \eps_{i} \P\left( \left\lfloor n / \left(6 \hat{\Delta}\right)\right\rfloor < \lambda(2 \eps_i) \right) \notag
\\ & = 2 \eps_{i} \P\left( \hat{\Delta} > \frac{n}{6 \lambda(2 \eps_i)} \right)
= 2 \eps_{i} \left( 1 - \P\left( \hat{\Delta} \leq \frac{n}{6 \lambda(2 \eps_i)} \right)\right). \label{eqn:cal-lower-1}
\end{align}
Since
$n \leq \dc_{f}\left( \lambda(2\eps_{i})^{-1}\right) / 4 \leq \Px(\DIS(\Ball(f,r_i))) / (2 r_i) < \lambda(2\eps_{i}) \Px(\DIS(\Ball(f,r_i))) / 2$,
we have
\begin{multline}
\label{eqn:cal-lower-2}
\P\left( \hat{\Delta} \leq \frac{n}{6 \lambda(2 \eps_i)} \right)
 \leq \P\left( \hat{\Delta} < \Px(\DIS(\Ball(f,r_i))) / 12\right)
\\ \leq \P\left( \Big\{\Px\left(\DIS\left(\truV_{\check{m}}\right)\right) < \Px(\DIS(\Ball(f,r_i))) / 12\Big\} \cup \left\{\hat{\Delta} < \Px\left(\DIS\left(\truV_{\check{m}}\right)\right)\right\}\right).
\end{multline}
Since $\check{m} \leq M(\lceil n/2 \rceil)$, monotonicity and a union bound imply this is at most
\begin{equation}
\label{eqn:cal-lower-3}
\P\left( \Px\left(\DIS\left(\truV_{M(\lceil n/2 \rceil)}\right)\right) < \Px(\DIS(\Ball(f,r_i))) / 12\right) +  \P\left(\hat{\Delta} < \Px\left(\DIS\left(\truV_{\check{m}}\right)\right)\right).
\end{equation}
Markov's inequality implies
\begin{align*}
& \P\left( \Px\left(\DIS\left(\truV_{M(\lceil n/2 \rceil)}\right)\right) < \Px(\DIS(\Ball(f,r_i))) / 12\right)
\\ & = \P\left( \Px(\DIS(\Ball(f,r_i))) - \Px\left(\DIS\left(\truV_{M(\lceil n/2 \rceil)}\right)\right) > \frac{11}{12} \Px(\DIS(\Ball(f,r_i)))\right)
\\ & \leq \frac{\E\left[\Px(\DIS(\Ball(f,r_i))) - \Px\left(\DIS\left(\truV_{M(\lceil n/2 \rceil)}\right)\right)\right]}{\frac{11}{12} \Px(\DIS(\Ball(f,r_i)))}
= \frac{12}{11} \left( 1 - \frac{ \E\left[ \Px\left(\DIS\left(\truV_{M(\lceil n/2 \rceil)}\right)\right)\right] }{\Px(\DIS(\Ball(f,r_i)))}\right).
\end{align*}
Lemma~\ref{lem:cal-dis-lower} implies this is at most
$\frac{12}{11} \frac{ \lceil n/2 \rceil r_i }{\Px(\DIS(\Ball(f,r_i)))} \leq \frac{12}{11} \left\lceil \frac{\Px(\DIS(\Ball(f,r_i)))}{4 r_i} \right\rceil \frac{r_i}{\Px(\DIS(\Ball(f,r_i)))}$.
Since any $a \geq 3/2$ has $\lceil a \rceil \leq (3/2) a$,
and $\dc_{f}\left(\lambda(2\eps_{i})^{-1}\right) \geq 12$ implies
$\frac{\Px(\DIS(\Ball(f,r_i)))}{4 r_i} \geq 3/2$,
we have $\left\lceil \frac{\Px(\DIS(\Ball(f,r_i)))}{4 r_i} \right\rceil \leq \frac{3}{8}\frac{\Px(\DIS(\Ball(f,r_i)))}{r_i}$,
so that, $\frac{12}{11} \left\lceil \frac{\Px(\DIS(\Ball(f,r_i)))}{4 r_i} \right\rceil \frac{r_i}{\Px(\DIS(\Ball(f,r_i)))} \leq \frac{9}{22}$.
Combining the above, we have
\begin{equation}
\label{eqn:cal-lower-4}
\P\left( \Px\left(\DIS\left(\truV_{M(\lceil n/2 \rceil)}\right)\right) < \Px(\DIS(\Ball(f,r_i))) / 12\right)
\leq \frac{9}{22}.
\end{equation}
Examining the second term in \eqref{eqn:cal-lower-3},
Hoeffding's inequality and the definition of $\hat{\Delta}$ from \eqref{eqn:hatPn3} imply
\begin{equation}
\P\left(\hat{\Delta} < \Px\left(\DIS\left(\truV_{\check{m}}\right)\right)\right)
 = \E\left[ \P\left( \hat{\Delta} < \Px\left(\DIS\left(\truV_{\check{m}}\right)\right) \Big| \truV_{\check{m}}, \check{m} \right)\right]
\leq \E\left[ e^{- 8 \check{m} } \right]
\leq e^{-8}
< 1 / 11. \label{eqn:cal-lower-5}
\end{equation}
Combining \eqref{eqn:cal-lower-1} through \eqref{eqn:cal-lower-5} implies
\begin{equation*}
\E\left[ \er\left( \alg_{p}\left(\hat{\L}\right) \right) \right]
> 2 \eps_{i} \left( 1 - \frac{9}{22} - \frac{1}{11}\right)
= \eps_{i}.
\end{equation*}
Thus, for any label complexity $\Lambda_{a}$ achieved by running \CAL~with $\alg_{p}$ as its argument,
we must have $\Lambda_{a}(\eps_{i},f,\Px) > \dc_{f}\left( \lambda(2\eps_{i})^{-1} \right) / 4$.
Since this is true for all $i \in \nats$, and $\eps_i \to 0$ as $i \to \infty$, this establishes the result.
\end{proof}

\section{The Label Complexity of \Shattering}
\label{app:exponential}

As in Appendix~\ref{app:activizer}, we will assume $\C$ is a fixed VC class, $\Px$ is some arbitrary distribution,
and $f \in \cl(\C)$ is an arbitrary fixed function.
We continue using the notation introduced above: in particular,
$\S^{k}(\H) = \left\{ S \in \X^{k} : \H \text{ shatters } S \right\}$,
$\bar{\S}^{k}(\H) = \X^{k} \setminus \S^{k}(\H)$,
$\bar{\partial}^{k}_{\H} f = \X^{k} \setminus \partial^{k}_{\H} f$,
and $\dprob = \Px^{\bdim_f-1}\left( \partial_{\C}^{\bdim_f -1} f \right)$.
Also, as above, we will prove a more general result replacing the ``$1/2$'' in Steps 5, 9, and 12 of \Shattering~with
an arbitrary value $\gamma \in (0,1)$; thus, the specific result for the stated algorithm will be obtained by taking $\gamma = 1/2$.

For the estimators $\hat{P}_{m}$ in \Shattering, we take precisely the same
definitions as given in Appendix~\ref{app:hatP-definitions} for the estimators in \BasicActivizer.
In particular, the quantities $\hat{\Delta}_{m}^{(k)}(x,W_2,\H)$, $\hat{\Delta}_{m}^{(k)}(W_1,W_2,\H)$, $\hat{\Gamma}_{m}^{(k)}(x,y,W_2,\H)$,
and $M_{m}^{(k)}(\H)$ are all defined as in Appendix~\ref{app:hatP-definitions}, and the $\hat{P}_{m}$ estimators are again defined as
in \eqref{eqn:hatPn1}, \eqref{eqn:hatPn2} and \eqref{eqn:hatPn3}.

Also, we sometimes refer to quantities defined above, such as
$\bar{p}_{\zeta}(k,\ell,m)$
(defined in \eqref{eqn:bar-delta-defn}), as well as the various events from the lemmas of the previous appendix,
such as $H_{\init}(\delta)$, $H^{\prime}$, $H_{\init}^{(i)}$, $H_{\init}^{(ii)}$, $H_{\init}^{(iii)}(\zeta)$, $H_{\init}^{(iv)}$, and $G_{\init}^{(i)}$.

\subsection{Proof of Theorem~\ref{thm:sequential-activizer}}
\label{app:sequential-activizer}

Throughout the proof, we will make reference to the sets $V_m$ defined in \Shattering.
Also let $V^{(k)}$ denote the final value of $V$ obtained for the specified value of $k$ in \Shattering.
Both $V_m$ and $V^{(k)}$ are implicitly functions of the budget, $n$, given to \Shattering.
As above, we continue to denote by $\truV_m = \{h \in \C : \forall i \leq m, h(X_m) = f(X_m)\}$.
One important fact we will use repeatedly below is that if $V_{m} = \truV_{m}$ for some $m$,
then since Lemma~\ref{lem:Vshat-to-Boundaries} implies that $\truV_{m} \neq \emptyset$ on $H^{\prime}$,
we must have that all of the previous $\hat{y}$ values were consistent with $f$,
which means that $\forall \ell \leq m$, $V_{\ell} = \truV_{\ell}$.  In particular, if $V^{(k^{\prime})} = \truV_m$
for the largest $m$ value obtained while $k = k^{\prime}$ in \Shattering, then
$V_{\ell} = \truV_{\ell}$ for all $\ell$ obtained while $k \leq k^{\prime}$ in \Shattering.

Additionally, define $\tilde{m}_{n} = \lfloor n / 24 \rfloor$,
and note that the value $m = \lceil n/6 \rceil$ is obtained while $k=1$ in \Shattering.
We also define the following quantities, which we will show are typically equal to related quantities
in \Shattering.
Define $\hat{m}_{0} = 0$, $\truT_{0} = \lceil 2 n / 3 \rceil$, and $\hat{t}_{0} = 0$,
and for each $k \in \left\{1,\ldots,\vc+1\right\}$,
inductively define

\begin{align*}
\truT_{k} &= \truT_{k-1} - \hat{t}_{k-1},
\\ \truI_{mk} &= \ind_{[\gamma,\infty)}\left(\hat{\Delta}_{m}^{(k)}\left(X_{m}, W_2, \truV_{m-1}\right)\right), \forall m \in \nats,
\\ \check{m}_{k} &= \min\left\{ m \geq \hat{m}_{k-1} : \sum_{\ell = \hat{m}_{k-1}+1}^{m} \truI_{\ell k} = \left\lceil \truT_k / 4 \right\rceil \right\}\cup\left\{\max\left\{k\cdot 2^{n}+1, \hat{m}_{k-1}\right\}\right\},
\\ \hat{m}_{k} &= \check{m}_{k} + \left\lfloor \truT_k / \left(3 \hat{\Delta}_{\check{m}_{k}}^{(k)}\left(W_1,W_2,\truV_{\check{m}_{k}}\right)\right)\right\rfloor,
\\ \check{\U}_{k} &= (\hat{m}_{k-1}, \check{m}_{k}] \cap \nats,
\\ \hat{\U}_{k} &= (\check{m}_{k}, \hat{m}_{k}] \cap \nats,  
\\ \truC_{mk} &= \ind_{\left[0, \left\lfloor 3 \truT_{k} / 4 \right\rfloor\right)} \left( \sum_{\ell = \hat{m}_{k-1}+1}^{m-1} \truI_{\ell k}\right)
\\ \truQ_{k} &= \sum_{m \in \hat{\U}_{k}} \truI_{mk} \cdot \truC_{mk},
\\ \text{and } \hat{t}_{k} &= \truQ_{k} + \sum_{m \in \check{\U}_{k}} \truI_{mk}.
\end{align*}

\noindent The meaning of these values can be understood in the context of \Shattering,
under the condition that $V_m = \truV_m$ for values of $m$ obtained for the respective
value of $k$.
Specifically, under this condition,
$\truT_{k}$ corresponds to $T_k$,
$\hat{t}_{k}$ represents the final value $t$ for round $k$,
$\check{m}_{k}$ represents the value of $m$ upon reaching Step 9 in round $k$,
while $\hat{m}_{k}$ represents the value of $m$ at the end of round $k$,
$\check{\U}_{k}$ corresponds to the set of indices arrived at in Step 4 during round $k$,
while $\hat{\U}_{k}$ corresponds to the set of indices arrived at in Step 11 during round $k$,
for $m \in \check{\U}_{k}$, $\truI_{mk}$ indicates whether the label of $X_{m}$ is requested,
while for $m \in \hat{\U}_{k}$, $\truI_{mk} \cdot \truC_{mk}$ indicates whether the
label of $X_m$ is requested.
Finally $\truQ_{k}$ corresponds to the number of label requests in Step 13 during round $k$.
In particular, note $\check{m}_{1} \geq \tilde{m}_n$.

\begin{lemma}
\label{lem:monotonic-hat-delta}
For any $\init \in \nats$,
on the event $H^{\prime} \cap G_{\init}^{(i)}$,
$\forall k,\ell, m \in \nats$ with
$k \leq \bdim_f$,
$\forall x \in \X$, for any sets $\H$ and $\H^{\prime}$ with
$\truV_{\ell} \subseteq \H \subseteq \H^{\prime} \subseteq \Ball(f,r_{1/6})$,
if either $k = 1$ or $m \geq \init$, then
\begin{equation*}
\hat{\Delta}_{m}^{(k)}\left(x, W_2, \H\right) \leq (3/2) \hat{\Delta}_{m}^{(k)}\left(x, W_2, \H^{\prime}\right).
 \end{equation*}
In particular, for any $\delta \in (0,1)$ and $\init \geq \init(1/6 ; \delta)$, on $H^{\prime} \cap H_{\init}(\delta) \cap G_{\init}^{(i)}$,
$\forall k,\ell, \ell^{\prime}, m \in \nats$ with $m \geq \init$,
$\ell \geq \ell^{\prime} \geq \init$, and $k \leq \bdim_f$, $\forall x \in \X$,
$\hat{\Delta}_{m}^{(k)}\left(x, W_2, \truV_{\ell}\right) \leq (3/2) \hat{\Delta}_{m}^{(k)}\left(x, W_2, \truV_{\ell^{\prime}}\right)$.
\thmend
\end{lemma}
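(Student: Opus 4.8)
The plan is to split the argument into the two regimes $k=1$ and $2\le k\le\bdim_f$. For $k=1$ the estimator is simply $\hat{\Delta}_{m}^{(1)}(x,W_2,\H)=\ind_{\DIS(\H)}(x)$, and since $\H\subseteq\H'$ forces $\DIS(\H)\subseteq\DIS(\H')$, we get $\hat{\Delta}_{m}^{(1)}(x,W_2,\H)\le\hat{\Delta}_{m}^{(1)}(x,W_2,\H')\le(3/2)\hat{\Delta}_{m}^{(1)}(x,W_2,\H')$ with no appeal to any event and with no constraint on $m$. Hence the entire content of the lemma lives in the case $2\le k\le\bdim_f$ (which forces $\bdim_f\ge2$), and it is there that the events $H^{\prime}$ and $G_{\init}^{(i)}$ and the restriction $m\ge\init$ come into play.

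For $2\le k\le\bdim_f$, write $\hat{\Delta}_{m}^{(k)}(x,W_2,\H)=N(\H)/M_{m}^{(k)}(\H)$ with $N(\H)=\sum_{i=1}^{\Msize{m}}\ind_{\S^{k}(\H)}\bigl(S_i^{(k)}\cup\{x\}\bigr)$. Monotonicity of shattering under enlarging the classifier set gives $\S^{k}(\H)\subseteq\S^{k}(\H')$ and $\S^{k-1}(\H)\subseteq\S^{k-1}(\H')$, hence $N(\H)\le N(\H')$ and $M_{m}^{(k)}(\H)\le M_{m}^{(k)}(\H')\le M_{m}^{(k)}\bigl(\Ball(f,r_{1/6})\bigr)$, the last step using $\H'\subseteq\Ball(f,r_{1/6})$. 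The only real step is to bound the denominators from below, namely to show $M_{m}^{(k)}(\H)\ge(2/3)M_{m}^{(k)}(\H')$. On $H^{\prime}$, Lemma~\ref{lem:Vshat-to-Boundaries} applied with the subscript "$m$" there equal to $\ell$ (legitimate since $\truV_\ell\subseteq\H\subseteq\H'\subseteq\C$) gives $\ind_{\partial^{k-1}_{\H}f}(S_i^{(k)})=\ind_{\partial^{k-1}_{\C}f}(S_i^{(k)})$ for every $i$; combining with $\partial^{k-1}_{\H}f\subseteq\S^{k-1}(\H)$ yields $M_{m}^{(k)}(\H)\ge\bigl|\{S_i^{(k)}:i\le\Msize{m}\}\cap\partial^{k-1}_{\C}f\bigr|$. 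On $G_{\init}^{(i)}$, Lemma~\ref{lem:Mball-core} with $r=r_{1/6}$ and $\s=m\ge\init$ gives $M_{m}^{(k)}\bigl(\Ball(f,r_{1/6})\bigr)\le(3/2)\bigl|\{S_i^{(k)}:i\le\Msize{m}\}\cap\partial^{k-1}_{\C}f\bigr|$. Chaining, $M_{m}^{(k)}(\H')\le M_{m}^{(k)}\bigl(\Ball(f,r_{1/6})\bigr)\le(3/2)\bigl|\{S_i^{(k)}:i\le\Msize{m}\}\cap\partial^{k-1}_{\C}f\bigr|\le(3/2)M_{m}^{(k)}(\H)$. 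Therefore $\hat{\Delta}_{m}^{(k)}(x,W_2,\H)=N(\H)/M_{m}^{(k)}(\H)\le N(\H')/M_{m}^{(k)}(\H)\le N(\H')/\bigl((2/3)M_{m}^{(k)}(\H')\bigr)=(3/2)\hat{\Delta}_{m}^{(k)}(x,W_2,\H')$; the degenerate case $N(\H')=0$ (whence $N(\H)=0$) makes the inequality trivial, and $M_{m}^{(k)}\ge1>0$ keeps all divisions legitimate.

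For the "in particular" clause I would take $\H=\truV_\ell$ and $\H'=\truV_{\ell'}$. Since $\ell\ge\ell'$ we have $\truV_\ell\subseteq\truV_{\ell'}$, and on $H_{\init}(\delta)$ with $\init\ge\init(1/6;\delta)$, Lemma~\ref{lem:VinB} together with the definition of $\init(1/6;\delta)$ gives $\truV_{\ell'}\subseteq\truV_{\init}\subseteq\Ball(f,\vrad(\init;\delta))\subseteq\Ball(f,r_{1/6})$; thus $\truV_\ell\subseteq\H\subseteq\H'\subseteq\Ball(f,r_{1/6})$ holds (with the subscript $\ell$ serving as the "$\ell$" in the main claim's hypothesis), and $m\ge\init$ is assumed, so the main claim applies verbatim.

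I do not expect a genuine obstacle — this is a bookkeeping lemma in the chain leading to Theorem~\ref{thm:sequential-activizer} — but the points needing care are: (i) keeping the directions of monotonicity of $\S^{k}(\cdot)$, $\S^{k-1}(\cdot)$, and hence of $N(\cdot)$ and $M_{m}^{(k)}(\cdot)$ in $\H$ consistent; (ii) the $\max\{1,\cdot\}$ in the definition of $M_{m}^{(k)}$ and the vanishing-numerator edge case; and (iii) confirming that the cited lemmas are invoked under exactly their stated hypotheses — in particular that the conclusion of Lemma~\ref{lem:Vshat-to-Boundaries} transfers to the intermediate sets $\H,\H'$ through $\truV_\ell\subseteq\H\subseteq\H'\subseteq\C$, and that Lemma~\ref{lem:Mball-core} requires only the event $G_{\init}^{(i)}$ together with $\s=m\ge\init$ and $\bdim_f\ge2$ (the latter being automatic here since $k\ge2$).
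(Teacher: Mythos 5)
Your proposal matches the paper's proof essentially step for step: the $k=1$ case by direct monotonicity of $\DIS(\cdot)$, the $k\ge 2$ case by upper-bounding the numerator via monotonicity of $\S^{k}(\cdot)$ and lower-bounding the denominator through the chain $M^{(k)}_{m}(\H)\ge\big|\{S_i^{(k)}:i\le\Msize{m}\}\cap\partial_{\C}^{k-1}f\big|\ge(2/3)M^{(k)}_{m}(\Ball(f,r_{1/6}))\ge(2/3)M^{(k)}_{m}(\H^{\prime})$ using Lemmas~\ref{lem:Vshat-to-Boundaries} and \ref{lem:Mball-core}, and the final claim via Lemma~\ref{lem:VinB}. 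The argument is correct and no further comment is needed.
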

\begin{proof}
First note that $\forall m \in \nats$, $\forall x \in \X$,
\begin{equation*}
\hat{\Delta}_{m}^{(1)}\left(x, W_2,\H\right)
= \ind_{\DIS\left(\H\right)}(x)
\leq \ind_{\DIS\left(\H^{\prime}\right)}(x)
= \hat{\Delta}_{m}^{(1)}\left(x, W_2, \H^{\prime}\right),
\end{equation*}
so the result holds for $k=1$.
Lemma~\ref{lem:Vshat-to-Boundaries},
Lemma~\ref{lem:Mball-core}, and monotonicity of $M_{m}^{(k)}(\cdot)$
imply that
on $H^{\prime} \cap G_{\init}^{(i)}$,
for any $m \geq \init$ and $k \in \left\{2,\ldots,\bdim_f\right\}$,
\begin{equation*}
M^{(k)}_{m}\left(\H\right)
\geq \sum_{i=1}^{\Msize{m}} \ind_{\partial_{\C}^{k-1} f}\left(S_i^{(k)}\right)
\geq (2/3) M^{(k)}_{m}\left(\Ball(f,r_{1/6})\right)
\geq (2/3) M^{(k)}_{m}\left(\H^{\prime}\right),
\end{equation*}
so that $\forall x \in \X$,
\begin{align*}
\hat{\Delta}_{m}^{(k)}\left(x, W_2, \H\right)
& = M^{(k)}_{m}\left(\H\right)^{-1} \sum_{i=1}^{\Msize{m}} \ind_{\S^{k}\left(\H\right)}\left(S_{i}^{(k)} \cup \{x\}\right)
\\ & \leq M^{(k)}_{m}\left(\H\right)^{-1} \sum_{i=1}^{\Msize{m}} \ind_{\S^{k}\left(\H^{\prime}\right)}\left(S_{i}^{(k)} \cup \{x\}\right)
\\ & \leq (3/2) M^{(k)}_{m}\left(\H^{\prime}\right)^{-1} \sum_{i=1}^{\Msize{m}} \ind_{\S^{k}\left(\H^{\prime}\right)}\left(S_{i}^{(k)} \cup \{x\}\right)
 = (3/2) \hat{\Delta}_{m}^{(k)}\left(x, W_2, \H^{\prime}\right).
\end{align*}
The final claim follows from Lemma~\ref{lem:VinB}.
\end{proof}

\begin{lemma}
\label{lem:sequential-T-lower}
For any $k \in \left\{1,\ldots,\vc+1\right\}$,
if $n \geq 3 \cdot 4^{k-1}$, then
$\truT_{k} \geq 4^{1-k} (2n/3)$ and $\hat{t}_{k} \leq \left\lfloor 3 \truT_{k} / 4 \right\rfloor$.
\thmend
\end{lemma}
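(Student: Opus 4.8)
The plan is to prove both claims of Lemma~\ref{lem:sequential-T-lower} by induction on $k$, using the recursive definitions $\truT_{k} = \truT_{k-1} - \hat{t}_{k-1}$ and the relationships among $\hat{t}_{k}$, $\truQ_{k}$, and the sums of the $\truI$ indicators. The base case records $\truT_{1} = \truT_{0} = \lceil 2n/3 \rceil \geq 2n/3 = 4^{0}(2n/3)$. For the inductive step, the key is to first control $\hat{t}_{k}$ in terms of $\truT_{k}$ (the second claim for index $k$), and then deduce the lower bound on $\truT_{k+1} = \truT_{k} - \hat{t}_{k}$.

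The central observation for bounding $\hat{t}_{k}$ is that $\hat{t}_{k} = \truQ_{k} + \sum_{m \in \check{\U}_{k}} \truI_{mk}$, and each of these two pieces is controlled by construction. For the first-phase count $\sum_{m \in \check{\U}_{k}} \truI_{mk}$: by definition of $\check{m}_{k}$ as the first index at which the running sum of $\truI_{\ell k}$ reaches $\lceil \truT_{k}/4 \rceil$ (or the truncation point $k \cdot 2^{n}+1$), we have $\sum_{m \in \check{\U}_{k}} \truI_{mk} \leq \lceil \truT_{k}/4 \rceil$. For the second-phase count $\truQ_{k} = \sum_{m \in \hat{\U}_{k}} \truI_{mk} \cdot \truC_{mk}$: the factor $\truC_{mk} = \ind_{[0, \lfloor 3\truT_{k}/4 \rfloor)}\!\left(\sum_{\ell = \hat{m}_{k-1}+1}^{m-1} \truI_{\ell k}\right)$ is exactly the indicator that the \emph{cumulative} count of $\truI$-marked indices (over both phases up to $m-1$) is still strictly below $\lfloor 3\truT_{k}/4 \rfloor$; so once this cumulative count reaches $\lfloor 3\truT_{k}/4 \rfloor$, every subsequent $\truC_{mk}$ vanishes. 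Since the first-phase count was at most $\lceil \truT_{k}/4 \rceil$, the cumulative count over both phases — which is precisely $\hat{t}_{k}$ — cannot exceed $\lfloor 3\truT_{k}/4 \rfloor$; one gets $\hat{t}_{k} \leq \lfloor 3\truT_{k}/4 \rfloor$ directly, giving the second claim. (One has to be slightly careful that adding one more marked index when the cumulative count is at $\lfloor 3\truT_{k}/4 \rfloor - 1$ brings it to exactly $\lfloor 3\truT_{k}/4 \rfloor$ and not beyond, and that no later index is counted; the definition of $\truC_{mk}$ via a strict inequality handles this.)

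Given $\hat{t}_{k} \leq \lfloor 3\truT_{k}/4 \rfloor \leq 3\truT_{k}/4$, the recursion yields $\truT_{k+1} = \truT_{k} - \hat{t}_{k} \geq \truT_{k} - 3\truT_{k}/4 = \truT_{k}/4 \geq \frac{1}{4} \cdot 4^{1-k}(2n/3) = 4^{-k}(2n/3) = 4^{1-(k+1)}(2n/3)$, using the inductive hypothesis $\truT_{k} \geq 4^{1-k}(2n/3)$. This closes the induction on the first claim. The hypothesis $n \geq 3 \cdot 4^{k-1}$ enters to ensure $\truT_{k} \geq 4^{1-k}(2n/3) \geq 2$, so that the floor and ceiling operations do not degenerate (e.g., $\lceil \truT_{k}/4 \rceil$ and $\lfloor 3\truT_{k}/4 \rfloor$ behave as expected and the gap $\lfloor 3\truT_{k}/4 \rfloor \geq \lceil \truT_{k}/4 \rceil$ holds so the second-phase budget is genuinely nonnegative); I would carry along the bound $n \geq 3 \cdot 4^{k}$ implicitly when stepping from $k$ to $k+1$, or simply note the claim is stated per fixed $k$ with its own hypothesis.

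The main obstacle I anticipate is not conceptual but bookkeeping: carefully matching the discrete sums $\sum_{m \in \check{\U}_{k}} \truI_{mk}$ and $\truQ_{k}$ against the ceiling/floor thresholds $\lceil \truT_{k}/4 \rceil$ and $\lfloor 3\truT_{k}/4 \rfloor$, and verifying that the cumulative counter embedded in $\truC_{mk}$ spans both $\check{\U}_{k}$ and $\hat{\U}_{k}$ correctly (it runs from $\hat{m}_{k-1}+1$, the start of round $k$'s first phase). One must confirm $\truI_{\ell k} \cdot \truC_{\ell k} = \truI_{\ell k}$ whenever the running total is below threshold and $0$ otherwise, so that $\hat{t}_{k}$ equals the running total capped at $\lfloor 3\truT_{k}/4 \rfloor$. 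These are elementary facts about stopping a counting process at a threshold, but they need to be stated cleanly. No event like $H^{\prime}$ or $G_{\init}^{(i)}$ is needed here, since the $\truT_{k}, \hat{t}_{k}$ are defined purely combinatorially from the $\truI, \truC$ indicators regardless of which classifiers are in the version space.
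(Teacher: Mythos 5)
Your proposal is correct and follows essentially the same route as the paper: induction on $k$ with base case $\truT_1 = \lceil 2n/3 \rceil$, the observation that the $\truC_{mk}$ factors cap $\hat{t}_k$ at $\lfloor 3\truT_k/4 \rfloor$ (which requires $\lceil \truT_k/4 \rceil \leq \lfloor 3\truT_k/4 \rfloor$, i.e.~$\truT_k \geq 2$, guaranteed by $n \geq 3 \cdot 4^{k-1}$), and the inductive step $\truT_k = \truT_{k-1} - \hat{t}_{k-1} \geq \truT_{k-1}/4$. The paper's proof is simply a terser version of your bookkeeping, and your observation that no high-probability events are needed is accurate (the paper's proof also invokes none).
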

\begin{proof}
Recall $\truT_{1} = \lceil 2n/3 \rceil \geq 2n/3$.
If $n \geq 2$, we also have $\lfloor 3 \truT_{1} / 4\rfloor \geq \lceil \truT_{1} / 4 \rceil$,
so that (due to the $\truC_{m1}$ factors) $\hat{t}_{1} \leq \lfloor 3 \truT_{1} / 4 \rfloor$.
For the purpose of induction, suppose some $k \in \left\{2,\ldots,\vc + 1\right\}$ has
$n \geq 3 \cdot 4^{k-1}$, $\truT_{k-1} \geq 4^{2-k} (2n/3)$,
and $\hat{t}_{k-1} \leq \lfloor 3 \truT_{k-1} / 4 \rfloor$.
Then $\truT_{k} = \truT_{k-1} - \hat{t}_{k-1} \geq \truT_{k-1} / 4 \geq 4^{1-k} (2n/3)$,
and since $n \geq 3 \cdot 4^{k-1}$,
we also have $\lfloor 3 \truT_{k} / 4 \rfloor \geq \lceil \truT_{k} / 4 \rceil$,
so that $\hat{t}_{k} \leq \lfloor 3 \truT_{k} / 4 \rfloor$ (again, due to the $\truC_{mk}$ factors).
Thus, by the principle of induction, this holds for all $k \in \left\{1,\ldots,\vc+1\right\}$
with $n \geq 3 \cdot 4^{k-1}$.
\end{proof}

The next lemma indicates that the ``$t < \lfloor 3 T_k / 4 \rfloor$''
constraint in Step 12 is redundant for $k \leq \bdim_f$.
It is similar to \eqref{eqn:kstar-query-bound} in Lemma~\ref{lem:empirical-works-too},
but is made only slightly more complicated by the fact that the
$\hat{\Delta}^{(k)}$ estimate is calculated in Step 9 based on a set $V_{m}$
different from the ones used to decide whether or not to request a label
in Step 12.

\begin{lemma}
\label{lem:sequential-t-vs-T}
There exist $(\C,\Px,f,\gamma)$-dependent constants
$\tilde{c}_{1}^{(i)}, \tilde{c}_{2}^{(i)} \in [1,\infty)$
such that, for any $\delta \in (0,1)$,
and any integer $n \geq \tilde{c}_{1}^{(i)} \ln\left( \tilde{c}_{2}^{(i)} / \delta\right)$,
on an event
\begin{equation*}
\tilde{H}_n^{(i)}(\delta) \subseteq G_{\tilde{m}_n}^{(i)} \cap H_{\tilde{m}_n}(\delta) \cap H_{\tilde{m}_n}^{(i)}
\cap H_{\tilde{m}_n}^{(iii)}(\gamma/16) \cap H_{\tilde{m}_n}^{(iv)}
\end{equation*}
with
$\P\left(\tilde{H}_n^{(i)}(\delta)\right) \geq 1 - 2\delta$,
$\forall k \in \left\{1,\ldots,\bdim_f\right\}$,
$\hat{t}_{k} = \sum\limits_{m = \hat{m}_{k-1}+1}^{\hat{m}_{k}} \truI_{mk} \leq 3 \truT_{k} / 4$.
\thmend
\end{lemma}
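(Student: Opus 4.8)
\textbf{Proof proposal for Lemma~\ref{lem:sequential-t-vs-T}.}

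The plan is to show that, on a suitably high-probability event, for each round $k \leq \bdim_f$ the number of label requests $\hat{t}_k$ cannot exceed $3\truT_k/4$, so that the ``$t < \lfloor 3T_k/4 \rfloor$'' constraint in Step~12 never actually fires (and hence the $\truC_{mk}$ factors are all $1$, making the idealized quantities $\hat{t}_k$, $\hat{m}_k$, etc.\ coincide with the actual values produced by \Shattering). First I would fix the event as an intersection of the events supplied by the earlier lemmas: $G_{\tilde m_n}^{(i)}$, $H_{\tilde m_n}(\delta)$, $H_{\tilde m_n}^{(i)}$, $H_{\tilde m_n}^{(iii)}(\gamma/16)$, $H_{\tilde m_n}^{(iv)}$, together with one additional Chernoff event (analogous to $\hat H_n^{(2)}$ in the proof of Lemma~\ref{lem:empirical-works-too}) controlling the number of queries in the inner loop $\hat{\U}_k$. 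Taking $n$ large enough (as a function of $\delta$ via the $\init(\cdot;\delta)$ thresholds and the $\init^{(iv)}$, $\init^*$ thresholds, all of which are $O(\log(1/\delta))$) guarantees $\tilde m_n \geq \max\{\init(1/6;\delta),\init^{(iv)}(\gamma/16;\delta),\init(1/16;\delta),\init^*\}$, which is what the component events require; a union bound over the (constantly many) component events plus the extra Chernoff term, each with failure probability exponentially small in $n\dprob\gamma^2 2^{-\bdim_f}$, gives the $1-2\delta$ bound after choosing the constants $\tilde c_1^{(i)},\tilde c_2^{(i)}$.

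The substantive step is bounding $\hat{t}_k = \truQ_k + \sum_{m\in\check\U_k}\truI_{mk}$. The $\check\U_k$ contribution is at most $\lceil \truT_k/4\rceil$ by definition of $\check m_k$. For the $\hat\U_k$ contribution $\truQ_k$, the argument mirrors \eqref{eqn:kstar-query-bound}: on $\hat\U_k$ the algorithm requests a label for $X_m$ exactly when $\truI_{mk}=1$, i.e.\ when $\hat\Delta_m^{(k)}(X_m,W_2,\truV_{m-1})\geq\gamma$. I would use Lemma~\ref{lem:monotonic-hat-delta} to pass from $\truV_{m-1}$ to $\truV_{\check m_k}$ at the cost of a factor $3/2$ (here is where the subtlety the lemma statement alludes to enters: the threshold in Step~12 is evaluated against an evolving $V_m$, not the $V$ used in Step~9), then Lemma~\ref{lem:basic-bar-delta-bound} together with the left inequality \eqref{eqn:empirical-bound-first-ineq} of Lemma~\ref{lem:basic-empirical-bound} to conclude that the conditional probability (given $W_1,W_2,\truV_{\check m_k}$) that $X_m$ triggers a request is at most a constant multiple of $\hat\Delta_{\check m_k}^{(k)}(W_1,W_2,\truV_{\check m_k})$. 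Since $|\hat\U_k| = \lfloor \truT_k/(3\hat\Delta_{\check m_k}^{(k)})\rfloor$ and the $X_m$ for $m\in\hat\U_k$ are conditionally i.i.d., a Chernoff bound shows $\truQ_k \leq 2|\hat\U_k|\cdot C\hat\Delta_{\check m_k}^{(k)} \leq \truT_k/4$ (for an appropriate choice of constants in the algorithm's batch sizes, exactly as in Lemma~\ref{lem:empirical-works-too}) except on the extra exponentially-small event. Combining, $\hat t_k \leq \lceil\truT_k/4\rceil + \truT_k/4 \leq 3\truT_k/4$, using $\truT_k \geq 4^{1-k}(2n/3)$ from Lemma~\ref{lem:sequential-T-lower} to absorb the ceiling. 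The identity $\hat{t}_k = \sum_{m=\hat m_{k-1}+1}^{\hat m_k}\truI_{mk}$ then follows because, once $\hat t_k \leq 3\truT_k/4$, the constraint $\truC_{mk}$ is identically $1$ throughout round $k$.

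The main obstacle I expect is the bookkeeping around the mismatch between the version space $V_{\check m_k}$ used to compute $\hat\Delta^{(k)}(W_1,W_2,\cdot)$ in Step~9 and the version spaces $V_{m-1}$ used to evaluate the per-example condition in Step~12 --- this requires the monotonicity factor of $3/2$ from Lemma~\ref{lem:monotonic-hat-delta} and then careful tracking of constants so that $2 \cdot (3/2) \cdot (\text{constant from Lemma~\ref{lem:basic-bar-delta-bound}})$ still lands below $\truT_k/4$ after dividing by the $3$ in the batch-size denominator $\lfloor\truT_k/(3\hat\Delta)\rfloor$. A secondary point needing care is that on the relevant event we actually have $V_m = \truV_m$ for all $m$ in rounds $1,\dots,k$ --- this is the invariant noted in the text (inferred labels are correct for $k\leq\bdim_f$ by Lemma~\ref{lem:kstar-good-labels}), and it is what lets the idealized quantities $\truI_{mk},\truC_{mk},\hat m_k$ describe the actual execution of \Shattering; I would establish this invariant first, by induction on $k$, before running the query-count argument.
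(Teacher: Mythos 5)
Your approach is essentially the paper's: intersect the events from Lemmas~\ref{lem:Mball-core}, \ref{lem:VinB}, \ref{lem:basic-Mk-lower-bound}, \ref{lem:empirical-inner}, \ref{lem:basic-empirical-bound}, plus a per-$k$ Chernoff event; use Lemma~\ref{lem:monotonic-hat-delta} to swap $\truV_{m-1}$ for $\truV_{\check m_k}$ at the cost of the threshold dropping from $\gamma$ to $2\gamma/3$; bound $\bar p_{2\gamma/3}(k,\check m_k,m)$ by $\hat\Delta_{\check m_k}^{(k)}(W_1,W_2,\truV_{\check m_k})$ via Lemma~\ref{lem:basic-bar-delta-bound} and \eqref{eqn:empirical-bound-first-ineq}; run a conditional Chernoff bound over $m\in\hat\U_k$; and use Lemma~\ref{lem:sequential-T-lower} to absorb the ceiling in $\lceil\truT_k/4\rceil$. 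That is exactly the paper's proof skeleton.

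Two caveats. First, your closing suggestion to ``establish the invariant $V_m=\truV_m$ first, by induction on $k$, before running the query-count argument'' gets the logical ordering backwards and would actually be circular: establishing $V_m=\truV_m$ at round $k$ requires knowing the budget constraint in Step~12 never fires during round $k$, which is precisely what this lemma is proving. The paper sidesteps this by making the present lemma a statement purely about the \emph{idealized} quantities $\truI_{mk}$, $\truC_{mk}$, $\check m_k$, $\hat m_k$, $\truT_k$ --- all of which are defined directly in terms of $\truV_m$, with no reference to the algorithm's $V_m$ --- and then doing the $V_m=\truV_m$ induction afterwards, in Lemma~\ref{lem:sequential-good-labels}, \emph{using} this lemma's conclusion as an ingredient in the inductive step. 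Second, your constant tracking at the end ($\truQ_k\leq 2|\hat\U_k|\cdot C\hat\Delta\leq\truT_k/4$) does not quite close with a Chernoff factor of $2$: since $|\hat\U_k|\hat\Delta$ is of order $\truT_k/3$, a factor of $2$ gives $2\truT_k/3$, not $\truT_k/4$. The paper instead takes the Chernoff factor $4/3$, giving $N_k\leq 4\truT_k/9$, and then uses $3\truT_k/4-\lceil\truT_k/4\rceil\geq 4\truT_k/9$ (which needs $\truT_k\geq 18$ from Lemma~\ref{lem:sequential-T-lower}). This is purely an accounting issue, but your arithmetic as stated would not land below the $3\truT_k/4$ target.
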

\begin{proof}
% Chernoff bounds for number of queries
Define the constants
\begin{eqnarray*}
& \tilde{c}_{1}^{(i)} = \max\left\{\frac{192 \vc}{r_{(3/32)}}, \frac{3 \cdot 4^{\bdim_{f} +6}}{\dprob \gamma^2} \right\}, &
\tilde{c}_{2}^{(i)} = \max\left\{\frac{8 e}{r_{(3/32)}}, \left( c^{(i)}+c^{(iii)}(\gamma/16)+125\bdim_{f} \dprob^{-1}\right)\right\},
\end{eqnarray*}
and let $n^{(i)}(\delta) = \tilde{c}_{1}^{(i)} \ln\left( \tilde{c}_{2}^{(i)} / \delta\right)$.
Fix any integer $n \geq n^{(i)}(\delta)$
and consider the event
\begin{equation*}
\tilde{H}_n^{(1)}(\delta) = G_{\tilde{m}_n}^{(i)} \cap H_{\tilde{m}_n}(\delta) \cap H_{\tilde{m}_n}^{(i)} \cap H_{\tilde{m}_n}^{(iii)}(\gamma/16) \cap H_{\tilde{m}_n}^{(iv)}.
\end{equation*}
By Lemma~\ref{lem:monotonic-hat-delta} and the fact that $\check{m}_k \geq \tilde{m}_n$ for all $k \geq 1$,
since $n \geq n^{(i)}(\delta) \geq 24 \init\left(1/6 ; \delta\right)$, on $\tilde{H}_n^{(1)}(\delta)$, $\forall k \in \left\{1,\ldots,\bdim_f\right\}$, $\forall m \in \hat{\U}_{k}$,
\begin{equation}
\label{eqn:monotonic-hat-delta-application}
\hat{\Delta}_{m}^{(k)}\left(X_{m}, W_2, \truV_{m-1}\right) \leq (3/2) \hat{\Delta}_{m}^{(k)}\left(X_{m}, W_2, \truV_{\check{m}_{k}}\right).
\end{equation}
Now fix any $k \in \left\{1,\ldots,\bdim_f\right\}$.
Since $n \geq n^{(i)}(\delta) \geq 27 \cdot 4^{k-1}$, Lemma~\ref{lem:sequential-T-lower} implies
$\truT_{k} \geq 18$, which means that
$3 \truT_{k} / 4 - \lceil \truT_{k} / 4 \rceil \geq 4 \truT_{k} / 9$.
Also note that $\sum_{m \in \check{\U}_{k}} \truI_{mk} \leq \left\lceil \truT_{k} / 4 \right\rceil$.
Let $N_{k} = (4/3)\hat{\Delta}_{\check{m}_{k}}^{(k)}\left(W_1,W_2,\truV_{\check{m}_{k}}\right) \left| \hat{\U}_{k} \right|$;
note that $\left|\hat{\U}_{k} \right| = \left\lfloor \truT_{k} / \left( 3 \hat{\Delta}_{\check{m}_{k}}^{(k)}\left(W_1,W_2,\truV_{\check{m}_{k}}\right)\right)\right\rfloor$,
so that $N_{k} \leq (4/9) \truT_{k}$.  Thus, we have

\begin{align}
\P&\left( \tilde{H}_{n}^{(1)}(\delta) \cap \left\{\sum_{m=\hat{m}_{k-1}+1}^{\hat{m}_{k}} \truI_{mk} > 3 \truT_{k} / 4 \right\} \right) \notag
\\ & \leq \P\left( \tilde{H}_{n}^{(1)}(\delta) \cap \left\{\sum_{m \in \hat{\U}_{k}} \truI_{mk} > 4 \truT_{k} / 9\right\}\right)
\leq \P\left( \tilde{H}_{n}^{(1)}(\delta)\cap\left\{\sum_{m \in \hat{\U}_{k}} \truI_{mk} > N_{k}\right\} \right) \notag
\\ & \leq \P\left( \tilde{H}_{n}^{(1)}(\delta) \cap \left\{ \sum_{m \in \hat{\U}_{k}} \ind_{[2\gamma/3,\infty)}\left( \hat{\Delta}_{m}^{(k)}\left(X_{m}, W_2, \truV_{\check{m}_{k}}\right)\right) > N_{k}\right\} \right), \label{eqn:checkm-pre-chernoff}
\end{align}
where this last inequality is by \eqref{eqn:monotonic-hat-delta-application}.
To simplify notation, define $\tilde{Z}_{k} = \left( \truT_{k}, \check{m}_{k}, W_1, W_2, \truV_{\check{m}_{k}}\right)$.
By Lemmas \ref{lem:basic-bar-delta-bound} and \ref{lem:basic-empirical-bound}
(with $\beta = 3/32$, $\zeta = 2\gamma/3$, $\alpha = 3/4$, and $\xi = \gamma/16$),
since $n \geq n^{(i)}(\delta) \geq 24 \cdot \max\left\{ \init^{(iv)}(\gamma/16;\delta), \init(3/32; \delta)\right\}$,
on $\tilde{H}_n^{(1)}(\delta)$,
$\forall m \in \hat{\U}_{k}$,
\begin{align*}
\bar{p}_{2\gamma/3}(k,\check{m}_{k},m) & \leq \Px\left( x : p_{x}\left(k,\check{m}_{k}\right) \geq \gamma / 2\right) + \exp\left\{-\gamma^2 \tilde{M}(m) / 256\right\}
\\ & \leq \Px\left( x : p_{x}\left(k,\check{m}_{k}\right) \geq \gamma / 2\right) + \exp\left\{-\gamma^2 \tilde{M}(\check{m}_{k}) / 256\right\}
\\ & \leq \hat{\Delta}_{\check{m}_{k}}^{(k)}\left(W_1, W_2, \truV_{\check{m}_{k}}\right).
\end{align*}
Letting $\tilde{G}_{n}^{\prime}(k)$ denote the event that $\bar{p}_{2\gamma/3}(k,\check{m}_{k},m) \leq \hat{\Delta}_{\check{m}_{k}}^{(k)}\left(W_1, W_2, \truV_{\check{m}_{k}}\right)$,
we see that $\tilde{G}_{n}^{\prime}(k) \supseteq \tilde{H}_{n}^{(1)}(\delta)$.
Thus, since the $\ind_{[2\gamma/3,\infty)}\left( \hat{\Delta}_{m}^{(k)}\left(X_m,W_2,\truV_{\check{m}_{k}}\right)\right)$
variables are conditionally independent given
$\tilde{Z}_{k}$
for $m \in \hat{\U}_{k}$,
each with respective conditional distribution ${\rm Bernoulli}\left( \bar{p}_{2\gamma/3}\left(k, \check{m}_{k}, m\right)\right)$,
%given $\tilde{Z}_{k}$,
the law of total probability and a Chernoff bound
imply that \eqref{eqn:checkm-pre-chernoff} is at most
\begin{align*}
&\P\left( \tilde{G}_{n}^{\prime}(k) \cap \left\{ \sum_{m \in \hat{\U}_{k}} \ind_{[2\gamma/3,\infty)}\left( \hat{\Delta}_{m}^{(k)}\left(X_{m}, W_2, \truV_{\check{m}_{k}}\right)\right) > N_{k}\right\} \right)
\\ & = \E\left[\P\left( \sum_{m \in \hat{\U}_{k}} \ind_{[2\gamma/3,\infty)}\left( \hat{\Delta}_{m}^{(k)}\left(X_{m}, W_2, \truV_{\check{m}_{k}}\right)\right) > N_{k} \Bigg| \tilde{Z}_{k} \right) \cdot \ind_{\tilde{G}_{n}^{\prime}(k)} \right]
\\ & \leq \E\left[\exp\!\left\{ - \hat{\Delta}_{\check{m}_{k}}^{(k)}\left(W_1,W_2,\truV_{\check{m}_{k}}\right)\left| \hat{\U}_{k} \right| / 27\right\}\right] \!\leq \E\left[\exp\!\left\{ - \truT_{k} / 162 \right\}\right]  \!\leq \exp\!\left\{ - n / \left( 243 \cdot 4^{k-1}\right)\right\}\!,
\end{align*}
where the last inequality is by Lemma~\ref{lem:sequential-T-lower}.
Thus, there exists $\tilde{G}_{n}(k)$ with
$\P\left(\tilde{H}_{n}^{(1)}(\delta) \setminus \tilde{G}_{n}(k)\right) \leq \exp\left\{ - n / \left(243 \cdot 4^{k-1}\right)\right\}$ such that,
on $\tilde{H}_{n}^{(1)}(\delta) \cap \tilde{G}_{n}(k)$, we have $\sum_{m = \hat{m}_{k-1}+1}^{\hat{m}_{k}} \truI_{mk} \leq 3 \truT_k / 4$.
Defining $\tilde{H}_{n}^{(i)}(\delta) = \tilde{H}_{n}^{(1)}(\delta) \cap \bigcap_{k=1}^{\bdim_f} \tilde{G}_{n}(k)$,
a union bound implies
\begin{equation}
\label{eqn:tildeHi-difference}
\P\left(\tilde{H}_{n}^{(1)}(\delta) \setminus \tilde{H}_{n}^{(i)}(\delta)\right) \leq \bdim_f \cdot \exp\left\{- n / \left( 243 \cdot 4^{\bdim_f -1}\right) \right\},
\end{equation}
and on $\tilde{H}_{n}^{(i)}(\delta)$, every $k \in \left\{1,\ldots,\bdim_f\right\}$ has $\sum_{m=\hat{m}_{k-1}+1}^{\hat{m}_{k}} \truI_{mk} \leq 3 \truT_k / 4$.
In particular, this means the $\truC_{mk}$ factors are redundant in $\truQ_{k}$, so that $\hat{t}_{k} = \sum_{m = \hat{m}_{k-1}+1}^{\hat{m}_{k}} \truI_{mk}$.

To get the stated probability bound, a union bound implies that
\begin{align}
1 - \P\left(\tilde{H}_n^{(1)}(\delta)\right)
&\leq \left(1 - \P\left(H_{\tilde{m}_n}(\delta)\right)\right)
+ \left(1 - \P\left(H_{\tilde{m}_n}^{(i)}\right)\right)
+ \P\left(H_{\tilde{m}_n}^{(i)} \setminus H_{\tilde{m}_n}^{(iii)}(\gamma/16)\right) \notag
\\ & \phantom{\leq }+ \left(1- \P\left(H_{\tilde{m}_n}^{(iv)}\right)\right)
+ \P\left(H_{\tilde{m}_{n}}^{(i)} \setminus G_{\tilde{m}_n}^{(i)}\right) \notag
\\ & \leq
\delta
+ c^{(i)} \cdot \exp\left\{-\tilde{M}\left(\tilde{m}_n\right) / 4\right\} \notag
\\ & \phantom{\leq }+ c^{(iii)}(\gamma/16) \cdot \exp\left\{-\tilde{M}\left(\tilde{m}_n\right) \gamma^2 / 256\right\}
+ 3 \bdim_f \cdot \exp\left\{ - 2 \tilde{m}_n \right\} \notag
\\ & \phantom{\leq} + 121 \bdim_f \dprob^{-1} \cdot \exp\left\{- \tilde{M}\left(\tilde{m}_n\right) / 60 \right\} \notag
\\ & \leq \delta + \left( c^{(i)} + c^{(iii)}(\gamma/16) + 124 \bdim_f \dprob^{-1}\right) \cdot\exp\left\{- \tilde{m}_n \dprob \gamma^2 / 512 \right\}. \label{eqn:tildeH1-prob}
\end{align}
Since $n \geq n^{(i)}(\delta) \geq 24$, we have $\tilde{m}_n \geq n / 48$,
so that summing \eqref{eqn:tildeHi-difference} and \eqref{eqn:tildeH1-prob} gives us
\begin{equation}
\label{eqn:tildeHi-n-bound}
1-\P\left(\tilde{H}_{n}^{(i)}(\delta)\right) \leq
\delta + \left( c^{(i)} + c^{(iii)}(\gamma/16) + 125 \bdim_f \dprob^{-1}\right) \cdot\exp\left\{- n \dprob \gamma^2 / \left(512 \cdot 48 \cdot 4^{\bdim_f - 1}\right)  \right\}.
\end{equation}
Finally, note that we have chosen $n^{(i)}(\delta)$ sufficiently large so that \eqref{eqn:tildeHi-n-bound} is at most $2\delta$.
\end{proof}

The next lemma indicates that the redundancy of the ``$t < \lfloor 3 T_k / 4 \rfloor$'' constraint,
just established in Lemma~\ref{lem:sequential-t-vs-T},
implies that all $\hat{y}$ labels obtained while $k \leq \bdim_f$ are consistent with the
target function.

\begin{lemma}
\label{lem:sequential-good-labels}
Consider running \Shattering~with a budget $n \in \nats$,
while $f$ is the target function and $\Px$ is the data distribution.
There is an event $\tilde{H}_{n}^{(ii)}$
and $(\C,\Px,f,\gamma)$-dependent constants $\tilde{c}_{1}^{(ii)}, \tilde{c}_{2}^{(ii)} \in [1,\infty)$
such that, for any $\delta \in (0,1)$, if $n \geq \tilde{c}_{1}^{(ii)} \ln \left( \tilde{c}_{2}^{(ii)} / \delta \right)$,
then $\P\left(\tilde{H}_{n}^{(i)}(\delta) \setminus \tilde{H}_{n}^{(ii)}\right) \leq \delta$,
and on $\tilde{H}_n^{(i)}(\delta) \cap \tilde{H}_n^{(ii)}$,
we have $V^{(\bdim_f)} = V_{\hat{m}_{\bdim_f}} = \truV_{\hat{m}_{\bdim_f}}$.
\thmend
\end{lemma}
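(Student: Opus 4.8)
The plan is to prove, by induction over the examples processed during rounds $k \le \bdim_f$ of \Shattering, that every label $\hat{y}$ assigned (whether requested in Steps 6/13 or inferred in Steps 7/14) agrees with $f$, so that the maintained version space $V$ coincides with the true version space $\truV_m = \{h \in \C : \forall i \le m, h(X_i) = f(X_i)\}$ throughout those rounds; the asserted identities $V^{(\bdim_f)} = V_{\hat{m}_{\bdim_f}} = \truV_{\hat{m}_{\bdim_f}}$ are then immediate. I would set $\tilde{H}_n^{(ii)} = H_{\tilde{m}_n}^{(ii)}$, with $H_{\init}^{(ii)}$ the event from Lemma~\ref{lem:kstar-good-labels}, and choose $\tilde{c}_1^{(ii)}, \tilde{c}_2^{(ii)}$ large enough that $n \ge \tilde{c}_1^{(ii)}\ln(\tilde{c}_2^{(ii)}/\delta)$ forces: (a) $n \ge \tilde{c}_1^{(i)}\ln(\tilde{c}_2^{(i)}/\delta)$, so Lemma~\ref{lem:sequential-t-vs-T} applies; (b) $\tilde{m}_n = \lfloor n/24\rfloor \ge \max\{\init((1-\gamma)/6;\delta), \init^{*}\}$, so the final claim of Lemma~\ref{lem:kstar-good-labels} applies with $\init = \tilde{m}_n$; (c) $n \ge 3 \cdot 4^{\bdim_f}$, so Lemma~\ref{lem:sequential-T-lower} gives $\truT_k > 0$ for all $k \le \bdim_f$; and (d) $c^{(ii)}\exp\{-\tilde{M}(\tilde{m}_n)^{1/3}/60\} \le \delta$. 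Since $\tilde{H}_n^{(i)}(\delta) \subseteq H_{\tilde{m}_n}^{(i)}$ by Lemma~\ref{lem:sequential-t-vs-T}, the probability bound follows: $\P(\tilde{H}_n^{(i)}(\delta) \setminus \tilde{H}_n^{(ii)}) \le \P(H_{\tilde{m}_n}^{(i)} \setminus H_{\tilde{m}_n}^{(ii)}) \le c^{(ii)}\exp\{-\tilde{M}(\tilde{m}_n)^{1/3}/60\} \le \delta$ by \eqref{eqn:Hii-prob-bound}.

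For the induction I work on $\tilde{H}_n^{(i)}(\delta) \cap \tilde{H}_n^{(ii)}$ and let $m^{\star}$ be the first index processed in some round $k \le \bdim_f$ with $V_{m^{\star}} \ne \truV_{m^{\star}}$, aiming for a contradiction. Since indices are processed consecutively across rounds and $V$ carries over between rounds (and $V_0 = \C = \truV_0$), minimality gives $V_{m^{\star}-1} = \truV_{m^{\star}-1}$, so the algorithm's trajectory up to index $m^{\star}$ coincides with the abstract one built from $\truI_{mk}, \truC_{mk}, \check{m}_k, \hat{m}_k, \truT_k, \hat{t}_k$. If $X_{m^{\star}}$'s label is requested, then $\hat{y} = Y_{m^{\star}} = f(X_{m^{\star}})$ (realizable case), so $V_{m^{\star}} = \truV_{m^{\star}-1}[(X_{m^{\star}},f(X_{m^{\star}}))] = \truV_{m^{\star}}$, a contradiction. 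If $X_{m^{\star}}$'s label is inferred, I argue this happens only because the shatterability test failed, i.e.\ $\hat{\Delta}_{m^{\star}}^{(k)}(X_{m^{\star}}, W_2, V_{m^{\star}-1}) < \gamma$: in the Step~3--8 loop there is no within-loop budget cap, and in the Step~10--15 loop Lemma~\ref{lem:sequential-t-vs-T} gives $\hat{t}_k = \sum_m \truI_{mk} \le 3\truT_k/4$, so the $\truC_{mk}$ factors are redundant and the constraint ``$t < \lfloor 3\truT_k/4\rfloor$'' in Step~12 never fires; hence inference occurs only when the test fails. Since $V_{m^{\star}-1} = \truV_{m^{\star}-1}$ and (for $k \ge 2$) $m^{\star} > \tilde{m}_n$, the final claim of Lemma~\ref{lem:kstar-good-labels} (with $\H = \truV_{m^{\star}-1}$, sample and example index both $m^{\star}$; the case $k=1$ holding unconditionally) yields $\hat{\Gamma}_{m^{\star}}^{(k)}(X_{m^{\star}}, -f(X_{m^{\star}}), W_2, \truV_{m^{\star}-1}) < \hat{\Gamma}_{m^{\star}}^{(k)}(X_{m^{\star}}, f(X_{m^{\star}}), W_2, \truV_{m^{\star}-1})$, so the $\argmax$ in Step~7/14 selects $\hat{y} = f(X_{m^{\star}})$, once more forcing $V_{m^{\star}} = \truV_{m^{\star}}$ and a contradiction.

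Therefore no such $m^{\star}$ exists, $V_m = \truV_m$ for every index processed in rounds $k \le \bdim_f$, the algorithm's trajectory matches the abstract one throughout these rounds, and in particular the version space at the end of round $\bdim_f$ satisfies $V^{(\bdim_f)} = V_{\hat{m}_{\bdim_f}} = \truV_{\hat{m}_{\bdim_f}}$. I expect the main difficulty to be organizational rather than conceptual: carefully checking that, under the inductive hypothesis $V_{m^{\star}-1} = \truV_{m^{\star}-1}$, the algorithm's round-and-budget bookkeeping ($\truT_k$, the current value of $t$, the round containing $m^{\star}$, and whether $m^{\star}\in\check{\U}_k$ or $m^{\star}\in\hat{\U}_k$) matches the abstract quantities, so that the guarantee $\hat{t}_k \le 3\truT_k/4$ from Lemma~\ref{lem:sequential-t-vs-T} can legitimately be read as ``the Step~12 budget constraint is never active.'' The substantive probabilistic inputs — Lemmas~\ref{lem:kstar-good-labels}, \ref{lem:sequential-T-lower}, and \ref{lem:sequential-t-vs-T} — already supply everything else.
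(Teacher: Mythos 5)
Your proposal is correct and follows essentially the same route as the paper's proof: the same event $\tilde{H}_n^{(ii)} = H_{\tilde{m}_n}^{(ii)}$, the same probability bound via \eqref{eqn:Hii-prob-bound}, and the same induction on processed indices (your minimal-counterexample framing is just a restatement of the paper's direct induction), with Lemma~\ref{lem:sequential-t-vs-T} used identically to show the Step~12 budget cap never fires and Lemma~\ref{lem:kstar-good-labels} used identically to show inferred labels agree with $f$. The bookkeeping concern you flag at the end is handled in the paper exactly as you anticipate, by noting that the inductive hypothesis $V_{m'} = \truV_{m'}$ for $m' < m$ forces $T_k = \truT_k$ and aligns the algorithm's counters with the abstract quantities.
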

\begin{proof}
Define
$\tilde{c}_{1}^{(ii)}  = \max\left\{ \tilde{c}_{1}^{(i)}, \frac{192 \vc}{r_{(1-\gamma)/6}}, \frac{2^{11}}{\dprob^{1/3}}\right\}$,
$\tilde{c}_{2}^{(ii)}  = \max\left\{\tilde{c}_{2}^{(i)}, \frac{8 e}{r_{(1-\gamma)/6}}, c^{(ii)}, \exp\left\{\init^{*}\right\}\right\}$,
let $n^{(ii)}(\delta) = \tilde{c}_{1}^{(ii)} \ln \left( \tilde{c}_{2}^{(ii)} / \delta \right)$, suppose $n \geq n^{(ii)}(\delta)$,
and define the event
$\tilde{H}_n^{(ii)} = H_{\tilde{m}_n}^{(ii)}$.

By Lemma~\ref{lem:kstar-good-labels}, since $n \geq n^{(ii)}(\delta) \geq 24 \cdot \max\left\{\init((1-\gamma)/6;\delta), \init^{*}\right\}$,
on $\tilde{H}_n^{(i)}(\delta)\cap \tilde{H}_n^{(ii)}$,
$\forall m \in \nats$ and $k \in \left\{1,\ldots,\bdim_f\right\}$ with either $k = 1$ or $m > \tilde{m}_n$,
\begin{equation}
\label{eqn:sequential-good-labels}
\hat{\Delta}_{m}^{(k)}\!\left(X_{m}, W_2, \truV_{m-1}\right) < \gamma \Rightarrow \hat{\Gamma}_{m}^{(k)}\!\left(X_{m}, -f(X_{m}), W_2, \truV_{m-1}\right) < \hat{\Gamma}_{m}^{(k)}\!\left(X_{m}, f(X_{m}), W_2, \truV_{m-1}\right).
\end{equation}
Recall that $\tilde{m}_n \leq \min\left\{ \left\lceil T_1 / 4 \right\rceil, 2^n \right\} = \left\lceil \left\lceil 2n/3 \right\rceil / 4 \right\rceil$.
Therefore, $V_{\tilde{m}_n}$ is obtained purely by $\tilde{m}_n$ executions of Step 8 while $k=1$.
Thus, for every $m$ obtained in \Shattering,
either $k = 1$ or $m > \tilde{m}_n$.
We now proceed by induction on $m$.
We already know $V_{0} = \C = \truV_{0}$, so this serves as our base case.
Now consider some value $m \in \nats$ obtained in \Shattering~while $k \leq \bdim_f$,
and suppose every $m^{\prime} < m$ has $V_{m^{\prime}} = \truV_{m^{\prime}}$.
But this means that $T_k = \truT_k$ and the value of $t$ upon obtaining
this particular $m$ has $t \leq \sum_{\ell = \hat{m}_{k-1}+1}^{m-1} \truI_{\ell k}$.
In particular, if $\hat{\Delta}_{m}^{(k)}\left(X_{m}, W_2, V_{m-1}\right) \geq \gamma$, then $\truI_{m k} = 1$,
so that $t < \sum_{\ell = \hat{m}_{k-1}+1}^{m} \truI_{mk}$;
by Lemma~\ref{lem:sequential-t-vs-T}, on $\tilde{H}_n^{(i)}(\delta) \cap \tilde{H}_n^{(ii)}$,
$\sum_{\ell = \hat{m}_{k-1}+1}^{m} \truI_{mk} \leq \sum_{\ell = \hat{m}_{k-1}+1}^{\hat{m}_{k}} \truI_{mk} \leq 3 \truT_{k} / 4$,
so that $t < 3 \truT_{k} / 4$, and therefore $\hat{y} = Y_{m} = f(X_{m})$; this implies $V_{m} = \truV_{m}$.
On the other hand, on $\tilde{H}_n^{(i)}(\delta) \cap \tilde{H}_n^{(ii)}$,
if $\hat{\Delta}_{m}^{(k)}\left(X_{m}, W_2, V_{m-1}\right) < \gamma$,
then \eqref{eqn:sequential-good-labels} implies
\begin{equation*}
\hat{y} = \argmax\limits_{y \in \{-1,+1\}} \hat{\Gamma}_{m}^{(k)}\left(X_{m}, y, W_2, V_{m-1}\right) = f(X_{m}),
\end{equation*}
so that again $V_{m} = \truV_{m}$.
Thus, by the principle of induction, on $\tilde{H}_n^{(i)}(\delta) \cap \tilde{H}_n^{(ii)}$,
for every $m \in \nats$ obtained while $k \leq \bdim_f$,
we have $V_{m} = \truV_{m}$; in particular, this implies $V^{(\bdim_f)} = V_{\hat{m}_{\bdim_f}} = \truV_{\hat{m}_{\bdim_f}}$.
The bound on $\P\left(\tilde{H}_{n}^{(i)}(\delta) \setminus \tilde{H}_{n}^{(ii)}\right)$ then follows from Lemma~\ref{lem:kstar-good-labels},
as we have chosen $n^{(ii)}(\delta)$ sufficiently large so that \eqref{eqn:Hii-prob-bound} (with $\init = \tilde{m}_n$) is at most $\delta$.
\end{proof}

\begin{lemma}
\label{lem:sequential-exponential}
Consider running \Shattering~with a budget $n \in \nats$, while $f$ is the target function and $\Px$ is the data distribution.
There exist $(\C,\Px,f,\gamma)$-dependent constants $\tilde{c}_1^{(iii)}, \tilde{c}_2^{(iii)} \in [1,\infty)$ such that,
for any $\delta \in (0,e^{-3})$, $\lambda \in [1,\infty)$, and $n \in \nats$,
there is an event $\tilde{H}_{n}^{(iii)}(\delta,\lambda)$
with $\P\left(\tilde{H}_{n}^{(i)}(\delta) \cap \tilde{H}_{n}^{(ii)} \setminus \tilde{H}_{n}^{(iii)}(\delta,\lambda)\right) \leq \delta$
with the property that, if
\begin{equation*}
n \geq \tilde{c}_1^{(iii)} \hdc_{f}(d/\lambda) \ln^{2} \left(\frac{\tilde{c}_{2}^{(iii)} \lambda}{\delta}\right),
\end{equation*}
then on $\tilde{H}_{n}^{(i)}(\delta) \cap \tilde{H}_{n}^{(ii)} \cap \tilde{H}_{n}^{(iii)}(\delta,\lambda)$,
at the conclusion of \Shattering,
$\left| \L_{\bdim_f} \right| \geq \lambda$.
\thmend
\end{lemma}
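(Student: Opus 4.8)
The plan is to show that, on the good event established by the earlier lemmas, the round $k = \bdim_f$ of \Shattering~produces a labeled set $\L_{\bdim_f}$ of size growing much faster than $n$, and in particular at least $\lambda$ once $n$ is large enough as a function of $\hdc_{f}$. First I would condition on $\tilde{H}_{n}^{(i)}(\delta) \cap \tilde{H}_{n}^{(ii)}$, which by Lemmas~\ref{lem:sequential-t-vs-T} and \ref{lem:sequential-good-labels} guarantees that throughout the $k \leq \bdim_f$ rounds we have $V_{m} = \truV_{m}$, the ``$t < \lfloor 3 T_k / 4 \rfloor$'' constraint is never binding, and $T_{\bdim_f} = \truT_{\bdim_f} \geq 4^{1-\bdim_f}(2n/3)$ by Lemma~\ref{lem:sequential-T-lower}. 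The size $|\L_{\bdim_f}|$ equals $|\hat{\U}_{\bdim_f}| = \lfloor T_{\bdim_f} / (3 \hat{\Delta}_{\check{m}_{\bdim_f}}^{(\bdim_f)}(W_1,W_2,\truV_{\check{m}_{\bdim_f}}))\rfloor$, so the whole argument reduces to upper bounding the estimator $\hat{\Delta}_{\check{m}_{\bdim_f}}^{(\bdim_f)}(W_1,W_2,\truV_{\check{m}_{\bdim_f}})$.

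The key step is bounding this $\hat{\Delta}^{(\bdim_f)}$ quantity. By Lemma~\ref{lem:basic-empirical-bound} (second inequality, \eqref{eqn:empirical-bound-second-ineq}), on the relevant good event it is at most $\Px(x : p_{x}(\bdim_f, \check{m}_{\bdim_f}) \geq \gamma/8) + 4\check{m}_{\bdim_f}^{-1}$. I would then bound $\Px(x : p_{x}(\bdim_f,\check{m}_{\bdim_f}) \geq \gamma/8)$ following the same chain of reasoning as in Lemma~\ref{lem:label-everything}: using Lemmas~\ref{lem:Vshat-to-Boundaries} and \ref{lem:converging-conditional} to pass from $\S^{\bdim_f-1}(\truV_{\check{m}_{\bdim_f}})$-conditioning to $\partial_{\C}^{\bdim_f-1}f$-conditioning (up to the additive $\cc(\vrad(\check{m}_{\bdim_f};\delta))$ error, which is small since $\check{m}_{\bdim_f} \geq \tilde{m}_n$ is large), then by Lemma~\ref{lem:VinB} replacing $\truV_{\check{m}_{\bdim_f}}$ by $\Ball(f,\vrad(\check{m}_{\bdim_f};\delta))$, and finally by Markov's inequality bounding the result by $\frac{C}{\gamma \dprob} \Px^{\bdim_f}(\S^{\bdim_f}(\Ball(f,\vrad(\check{m}_{\bdim_f};\delta))))$ for a constant $C$. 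Now the definition of $\hdc_{f}$ (Definition~\ref{def:higher-dim-coefficient}), applied at radius $r = \vrad(\check{m}_{\bdim_f};\delta)$, gives $\Px^{\bdim_f}(\S^{\bdim_f}(\Ball(f,r))) \leq \hdc_{f}(r) \cdot r$, and since $\vrad(\check{m}_{\bdim_f};\delta) = O(d \log(\check{m}_{\bdim_f}/\delta)/\check{m}_{\bdim_f})$ this is $O(\hdc_{f}(r) d \log(\check{m}_{\bdim_f}/\delta)/\check{m}_{\bdim_f})$. A second application of Lemma~\ref{lem:cal-queries-lower}-style reasoning (or rather, simply using that $\check{m}_{\bdim_f}$ is the index at which $\lceil T_{\bdim_f}/4\rceil$ queries have been made, and the query probability in round $\bdim_f$ is comparable to $\Px(x : p_x(\bdim_f,\cdot) \geq \gamma)$ which itself is $O(\hdc_f(\cdot) \vrad(\cdot))$) gives a lower bound on $\check{m}_{\bdim_f}$ of roughly $n / (\hdc_f \cdot \polylog)$. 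Putting these together, $\hat{\Delta}^{(\bdim_f)} = \tilde{O}(\hdc_f^2 \cdot \polylog / n)$, so $|\L_{\bdim_f}| = \tilde{\Omega}(n^2 / (\hdc_f^2 \cdot \polylog))$, which exceeds $\lambda$ once $n \gtrsim \hdc_f(d/\lambda) \polylog(\lambda/\delta)$ — matching the claimed bound after using $\hdc_{f}(\eps/c) \leq c \hdc_f(\eps)$ to replace the argument $d/\lambda$ appropriately.

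The main obstacle I expect is controlling $\check{m}_{\bdim_f}$, the number of unlabeled examples processed before reaching Step 9 in round $\bdim_f$, and more generally tracking the cumulative index $\hat{m}_{k-1}$ through rounds $k < \bdim_f$. In rounds $k < \bdim_f$ the query frequency need not shrink (since $\Px^{k-1}(\partial^{k-1}_{\C}f) > 0$ and the sampling region may not focus), so $\check{m}_{k}$ and $\hat{m}_{k}$ for $k < \bdim_f$ are only linear in $n$, not exponential — but that is exactly what we need, since it means $\hat{m}_{\bdim_f - 1} = O(n)$ and hence $\check{m}_{\bdim_f}$ is determined by processing $\Theta(T_{\bdim_f}) = \Theta(n)$ queries at a query rate that is now $\tilde{O}(\hdc_f \cdot \vrad)$. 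Making this precise requires carefully inverting the relation ``$\lceil T_{\bdim_f}/4\rceil$ queries occur among indices $\hat{m}_{\bdim_f-1}+1, \ldots, \check{m}_{\bdim_f}$, each occurring with probability $\tilde{O}(\hdc_f(\vrad(\check{m}_{\bdim_f};\delta)) \cdot \vrad(\check{m}_{\bdim_f};\delta))$'' — a fixed-point-type estimate — and handling the concentration of the number of queries via a Chernoff bound analogous to the one in Lemma~\ref{lem:sequential-t-vs-T}. The $\log^2$ factor in the final bound arises from two nested logarithmic losses: one from the $\vrad$ term (a $\log(1/\delta)$), and one from inverting the relation between $n$ and $\check{m}_{\bdim_f}$ through the $\hdc_f(d/\lambda)$ argument. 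Once $\check{m}_{\bdim_f}$ is pinned down, the rest is bookkeeping with the estimator bounds already proved in Appendix~\ref{app:activizer}.
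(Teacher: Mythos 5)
Your first half is on the right track: reducing the claim to an upper bound on $\hat{\Delta}^{(\bdim_f)} = \hat{\Delta}_{\check{m}_{\bdim_f}}^{(\bdim_f)}(W_1,W_2,\truV_{\check{m}_{\bdim_f}})$ via Lemma~\ref{lem:basic-empirical-bound}, Markov's inequality, Lemma~\ref{lem:Vshat-to-Boundaries}, Lemma~\ref{lem:VinB}, and the definition of $\hdc_{f}$ is exactly the paper's chain of reasoning. But your endgame has a genuine gap. From $\hat{\Delta}^{(\bdim_f)} = \tilde{O}(\hdc_f^2\cdot\polylog/n)$ you conclude $|\L_{\bdim_f}| = \tilde{\Omega}(n^2/(\hdc_f^2\cdot\polylog))$ and then assert this exceeds $\lambda$ once $n \gtrsim \hdc_f(d/\lambda)\polylog(\lambda/\delta)$. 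It does not: substituting that value of $n$ gives $|\L_{\bdim_f}| = \tilde{\Omega}(\polylog(\lambda/\delta))$, which is far smaller than $\lambda$. Your route would instead require $n \gtrsim \hdc_f\sqrt{\lambda}\,\polylog$, a much weaker statement than the lemma (and one that would destroy the downstream Theorem~\ref{thm:sequential-activizer}). The underlying problem is that you try to lower bound $\check{m}_{\bdim_f}$ as a function of $n$ via a fixed-point estimate, which both underestimates the truth (the correct fixed point is $\check{m}_{\bdim_f} \approx \exp\{\Omega(\sqrt{n/\hdc_f d})\}$, not $n/(\hdc_f\polylog)$) and, more importantly, does not by itself deliver the $1/\lambda$ factor needed inside the bound on $\hat{\Delta}^{(\bdim_f)}$.

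The paper's device is to sidestep the fixed point entirely: fix the \emph{deterministic}, $\lambda$-dependent threshold $\ell_{\hat{\iota}} = 2^{\hat{\iota}}$ with $\hat{\iota} = \lceil\log_2(4^{2+\bdim_f}\lambda/\gamma\dprob)\rceil$, and define $\tilde{H}_n^{(iii)}(\delta,\lambda)$ to include the event $\{\check{m}_{\bdim_f}\geq\ell_{\hat{\iota}}\}$. To show this event holds with probability $1-O(\delta)$, one bounds the number of queries $\check{Q}_i$ in each dyadic block $(\ell_i,\ell_{i+1}]$ by a Chernoff bound whose mean $\ell_i p_i \lesssim \hdc_f(d/\lambda)\,\ell_i\tilde{\phi}(\ell_i,\delta,\lambda) \lesssim \hdc_f(d/\lambda)\,d\ln(\lambda/\delta)$ is \emph{constant in $i$} (here $\tilde{\phi}(m,\delta,\lambda)=\max\{\vrad(m;\cdot),d/\lambda\}$, the floor at $d/\lambda$ being what legitimizes the $\hdc_f(d/\lambda)$ argument when $\vrad$ drops below $d/\lambda$ --- a point your sketch also glosses over). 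Summing over the $\hat{\iota}=O(\log(\lambda/\gamma\dprob))$ blocks gives total queries $\lesssim \hdc_f(d/\lambda)\,d\ln^2(\lambda/\delta) \leq \lceil\truT_{\bdim_f}/4\rceil$ precisely under the lemma's hypothesis on $n$; this is where that hypothesis is actually used. Once $\check{m}_{\bdim_f}\geq\ell_{\hat{\iota}}$, monotonicity gives $\hat{\Delta}^{(\bdim_f)} \lesssim \hdc_f(d/\lambda)\,(d/\lambda)\ln(\lambda/\delta)/\gamma\dprob$, and then $|\L_{\bdim_f}| = \lfloor\truT_{\bdim_f}/(3\hat{\Delta}^{(\bdim_f)})\rfloor \gtrsim n\lambda/(\hdc_f(d/\lambda)\,d\ln(\lambda/\delta)) \geq \lambda$: the set size is linear in $n$ with a $\lambda$ in the numerator coming from evaluating $\tilde{\phi}$ at $\ell_{\hat{\iota}}$, not quadratic in $n$. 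You should restructure the second half of your argument around this fixed threshold.
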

\begin{proof}
Let
$\tilde{c}_1^{(iii)} = \max\left\{\tilde{c}_{1}^{(i)}, \tilde{c}_{1}^{(ii)}, \frac{\vc \cdot \bdim_{f} \cdot 4^{10+2\bdim_f}}{\gamma^3 \dprob^3}, \frac{192 \vc}{r_{(3/32)}} \right\}$,
$\tilde{c}_2^{(iii)} = \max\left\{\tilde{c}_{2}^{(i)}, \tilde{c}_{2}^{(ii)}, \frac{8 e}{r_{(3/32)}}\right\}$,
fix any $\delta \in (0,e^{-3})$, $\lambda \in [1,\infty)$,
let $n^{(iii)}(\delta,\lambda) = \tilde{c}_1^{(iii)} \hdc_{f}(d / \lambda) \ln^{2}( \tilde{c}_{2}^{(iii)} \lambda / \delta)$,
and suppose $n \geq n^{(iii)}(\delta,\lambda)$.

Define a sequence $\ell_{i} = 2^{i}$ for integers $i \geq 0$, and let
$\hat{\iota} = \left\lceil \log_{2}\left( 4^{2 + \bdim_f} \lambda / \gamma \dprob \right) \right\rceil$.
Also define $\tilde{\phi}(m,\delta, \lambda) = \max\left\{\phi\left(m ; \delta / 2 \hat{\iota} \right), d/\lambda\right\}$,
where $\phi$ is defined in Lemma~\ref{lem:VinB}.
Then define the events
\begin{eqnarray*}
\tilde{H}^{(3)}(\delta,\lambda) = \bigcap_{i=1}^{\hat{\iota}} H_{\ell_{i}} \left( \delta / 2 \hat{\iota} \right),
& \tilde{H}_{n}^{(iii)}(\delta,\lambda) = \tilde{H}^{(3)}(\delta,\lambda) \cap \left\{ \check{m}_{\bdim_f} \geq \ell_{\hat{\iota}}\right\}.
\end{eqnarray*}
Note that $\hat{\iota} \leq n$, so that $\ell_{\hat{\iota}} \leq 2^{n}$,
and therefore the truncation in the definition of $\check{m}_{\bdim_f}$,
which enforces $\check{m}_{\bdim_f} \leq \max\left\{\bdim_f \cdot 2^{n} +1,\hat{m}_{k-1}\right\}$,
will never be a factor in whether or not $\check{m}_{\bdim_f} \geq \ell_{\hat{\iota}}$ is satisfied.

Since $n \geq n^{(iii)}(\lambda,\delta) \geq \tilde{c}_{1}^{(ii)} \ln\left( \tilde{c}_{2}^{(ii)} / \delta\right)$,
Lemma~\ref{lem:sequential-good-labels} implies that on $\tilde{H}_{n}^{(i)}(\delta) \cap \tilde{H}_{n}^{(ii)}$,
$V_{\hat{m}_{\bdim_f}} = \truV_{\hat{m}_{\bdim_f}}$.  Recall that this implies that all $\hat{y}$ values obtained
while $m \leq \hat{m}_{\bdim_f}$ are consistent with their respective $f(X_m)$ values, so that every such
$m$ has $V_m = \truV_m$ as well.  In particular, $V_{\check{m}_{\bdim_f}} = \truV_{\check{m}_{\bdim_f}}$.
Also note that $n^{(iii)}(\delta,\lambda) \geq 24 \cdot \init^{(iv)}(\gamma/16;\delta)$,
so that $\init^{(iv)}(\gamma/16;\delta) \leq \tilde{m}_{n}$, and recall we always have $\tilde{m}_{n} \leq \check{m}_{\bdim_f}$.
Thus, on $\tilde{H}_{n}^{(i)}(\delta) \cap \tilde{H}_{n}^{(ii)} \cap \tilde{H}_{n}^{(iii)}(\delta, \lambda)$,
(taking $\hat{\Delta}^{(k)}$ as in \Shattering)
\begin{align}
\hat{\Delta}^{(\bdim_f)}
& = \hat{\Delta}_{\check{m}_{\bdim_f}}^{(\bdim_f)} \left(W_1,W_2, \truV_{\check{m}_{\bdim_f}}\right) &\text{ (Lemma~\ref{lem:sequential-good-labels})} \notag
\\ & \leq \Px\left( x : p_{x}\left(\bdim_f, \check{m}_{\bdim_f}\right)\geq \gamma / 8\right) + 4 \check{m}_{\bdim_f}^{-1} &\text{ (Lemma~\ref{lem:basic-empirical-bound})}\notag
\\ & \leq
\frac{8\Px^{\bdim_f}\left(\S^{\bdim_f}\left(\truV_{\check{m}_{\bdim_f}}\right)\right)}
{\gamma \Px^{\bdim_f-1}\left(\S^{\bdim_f-1}\left(\truV_{\check{m}_{\bdim_f}}\right)\right)}
+ 4 \check{m}_{\bdim_f}^{-1} &\text{ (Markov's ineq.)} \notag
\\ & \leq
\left(8 / \gamma \dprob\right) \Px^{\bdim_f}\left(\S^{\bdim_f}\left(\truV_{\check{m}_{\bdim_f}}\right)\right)
+ 4 \check{m}_{\bdim_f}^{-1} &\text{ (Lemma~\ref{lem:Vshat-to-Boundaries})} \notag
\\ & \leq 
\left(8 / \gamma \dprob\right) \Px^{\bdim_f}\left(\S^{\bdim_f}\left(\truV_{\ell_{\hat{\iota}}}\right)\right)
+ 4 \ell_{\hat{\iota}}^{-1} &\text{ (defn of $\tilde{H}_n^{(iii)}(\delta,\lambda)$)} \notag
\\ & \leq
\left(8 / \gamma \dprob\right) \Px^{\bdim_f}\left(\S^{\bdim_f}\left(\Ball\left(f, \tilde{\phi}\left(\ell_{\hat{\iota}}, \delta,\lambda\right)\right)\right)\right)
+ 4 \ell_{\hat{\iota}}^{-1} &\text{ (Lemma~\ref{lem:VinB})} \notag
\\ & \leq
\left(8 / \gamma \dprob\right) \hdc_{f}(d/\lambda) \tilde{\phi}\left(\ell_{\hat{\iota}}, \delta,\lambda\right)
+ 4 \ell_{\hat{\iota}}^{-1}
&\text{ (defn of $\hdc_{f}(d/\lambda)$)} \notag
\\ & \leq
\left(12 / \gamma \dprob\right) \hdc_{f}(d/\lambda) \tilde{\phi}\left(\ell_{\hat{\iota}}, \delta,\lambda\right) & \text{ ($\tilde{\phi}\left(\ell_{\hat{\iota}},\delta,\lambda\right) \geq \ell_{\hat{\iota}}^{-1}$)} \notag
\\ & = \frac{12 \hdc_{f}(d/\lambda)}{\gamma \dprob} \max\left\{2\frac{d \ln \left(2 e \max\left\{\ell_{\hat{\iota}},d\right\} / d \right) + \ln \left( 4\hat{\iota} / \delta\right)}{\ell_{\hat{\iota}}}, d/\lambda \right\}. & \label{eqn:exp-hat-delta-bound}
\end{align}
Plugging in the definition of $\hat{\iota}$ and $\ell_{\hat{\iota}}$,
\begin{equation*}
 \frac{d \ln\left( 2 e \max\left\{\ell_{\hat{\iota}}, d\right\} / d\right) + \ln\left( 4 \hat{\iota}/\delta\right)}{\ell_{\hat{\iota}}}
 \leq (d/\lambda) \gamma \dprob 4^{-1-\bdim_f} \ln\left( 4^{1 + \bdim_f} \lambda / \delta \gamma \dprob\right)
\leq (d/\lambda) \ln\left( \lambda / \delta \right).
\end{equation*}
Therefore, \eqref{eqn:exp-hat-delta-bound} is at most
$24 \hdc_{f}(d/\lambda) (d/\lambda) \ln\left(\lambda/\delta\right) / \gamma \dprob$.
Thus, since
\begin{equation*}
n^{(iii)}(\delta,\lambda) \geq \max\left\{ \tilde{c}_{1}^{(i)} \ln\left( \tilde{c}_{2}^{(i)} / \delta\right), \tilde{c}_{1}^{(ii)} \ln\left( \tilde{c}_{2}^{(ii)} / \delta\right)\right\},
\end{equation*}
Lemmas~\ref{lem:sequential-t-vs-T} and \ref{lem:sequential-good-labels} imply that on $\tilde{H}_{n}^{(i)}(\delta) \cap \tilde{H}_{n}^{(ii)} \cap \tilde{H}_{n}^{(iii)}(\delta,\lambda)$,
\begin{align*}
\left| \L_{\bdim_f} \right|
 = \left\lfloor \truT_{\bdim_f} / \left(3 \hat{\Delta}^{(\bdim_f)}\right)\right\rfloor
& \geq \left\lfloor  4^{1-\bdim_f} 2 n / \left(9 \hat{\Delta}^{(\bdim_f)}\right) \right\rfloor
\\ &\geq \frac{4^{1-\bdim_f} \gamma \dprob n}
{9 \cdot 24 \cdot \hdc_{f}(d/\lambda) (d/\lambda) \ln\left(\lambda/\delta\right)}
 \geq \lambda \ln (\lambda/\delta) \geq \lambda.
\end{align*}

Now we turn to bounding $\P\left(\tilde{H}_n^{(i)}(\delta) \cap \tilde{H}_n^{(ii)} \setminus \tilde{H}_n^{(iii)}(\delta, \lambda)\right)$.
By a union bound, we have
\begin{equation}
\label{eqn:tildeH3-union-bound}
1 - \P\left(\tilde{H}^{(3)}(\delta, \lambda)\right) \leq \sum_{i=1}^{\hat{\iota}} \left(1- \P\left(H_{\ell_{i}}\left(\delta / 2 \hat{\iota}\right)\right)\right) \leq \delta / 2.
\end{equation}
Thus, it remains only to bound
$\P\left( \tilde{H}_{n}^{(i)}(\delta) \cap \tilde{H}_{n}^{(ii)} \cap \tilde{H}^{(3)}(\delta,\lambda) \cap \left\{ \check{m}_{\bdim_f} < \ell_{\hat{\iota}} \right\} \right)$.

For each $i \in \{0,1,\ldots,\hat{\iota}-1\}$, let $\check{Q}_{i} = \left|\left\{ m \in (\ell_{i}, \ell_{i+1}] \cap \check{\U}_{\bdim_f} : \truI_{m \bdim_f} = 1\right\}\right|$.
Now consider the set $\mathcal{I}$ of all $i \in \{0,1,\ldots, \hat{\iota}-1\}$ with $\ell_{i} \geq \tilde{m}_n$ and $(\ell_{i}, \ell_{i+1}] \cap \check{\U}_{\bdim_f} \neq \emptyset$.
Note that $n^{(iii)}(\delta,\lambda) \geq 48$, so that $\ell_0 < \tilde{m}_n$.
Fix any $i \in \mathcal{I}$. 
Since $n^{(iii)}(\lambda,\delta) \geq 24 \cdot \init(1/6;\delta)$, we have $\tilde{m}_n \geq \init(1/6;\delta)$,
so that Lemma~\ref{lem:monotonic-hat-delta} implies that on $\tilde{H}_{n}^{(i)}(\delta) \cap \tilde{H}_{n}^{(ii)} \cap \tilde{H}^{(3)}(\delta,\lambda)$,
letting $\bar{Q} = 2 \cdot 4^{6+\bdim_f} \left( d / \gamma^2 \dprob^2\right) \hdc_{f}(d/\lambda) \ln(\lambda / \delta)$,
\begin{multline}
 \P\left( \tilde{H}_{n}^{(i)}(\delta) \cap \tilde{H}_{n}^{(ii)} \cap \tilde{H}^{(3)}(\delta, \lambda) \cap \left\{\check{Q}_{i} > \bar{Q} \right\} \Big| W_2, \truV_{\ell_{i}}\right)
\\  \leq \P\left( \left| \left\{ m \in (\ell_{i},\ell_{i+1}] \cap \nats : \hat{\Delta}_{m}^{(\bdim_f)}\left(X_{m}, W_2, \truV_{\ell_{i}}\right) \geq 2 \gamma / 3 \right\}\right| > \bar{Q}\Bigg| W_2, \truV_{\ell_{i}}\right). \label{eqn:exp-Vli-swap}
\end{multline}
For $m > \ell_{i}$, the variables $\ind_{[2\gamma/3,\infty)}\left(\hat{\Delta}_{m}^{(\bdim_f)}\left(X_{m}, W_2, \truV_{\ell_{i}}\right)\right)$
are conditionally (given $W_2, \truV_{\ell_{i}}$) independent, each with respective conditional distribution Bernoulli with mean
$\bar{p}_{2\gamma/3}\left(\bdim_f,\ell_{i},m\right)$.
Since $n^{(iii)}(\delta,\lambda) \geq 24 \cdot \init(3/32;\delta)$, we have $\tilde{m}_n \geq \init(3/32;\delta)$,
so that Lemma~\ref{lem:basic-bar-delta-bound} (with $\zeta = 2\gamma/3$, $\alpha = 3/4$, and $\beta = 3/32$)
implies that on $\tilde{H}_{n}^{(i)}(\delta) \cap \tilde{H}_{n}^{(ii)} \cap \tilde{H}^{(3)}(\delta,\lambda)$, each of these $m$ values has
\begin{align}
\bar{p}_{2\gamma/3} & \left(\bdim_f, \ell_{i}, m\right)
 \leq \Px\left( x : p_{x}\left(\bdim_f, \ell_{i}\right) \geq \gamma / 2\right) + \exp\left\{- \tilde{M}(m) \gamma^2 / 256\right\} & \notag
\\ & \leq \frac{2 \Px^{\bdim_f}\left(\S^{\bdim_f}\left(\truV_{\ell_{i}}\right)\right)}{\gamma \Px^{\bdim_f-1}\left(\S^{\bdim_f-1}\left(\truV_{\ell_{i}}\right)\right)} + \exp\left\{-\tilde{M}(\ell_{i}) \gamma^2 / 256\right\} & \text{ (Markov's ineq.)} \notag
\\ & \leq \left(2 / \gamma \dprob\right) \Px^{\bdim_f}\left(\S^{\bdim_f}\left(\truV_{\ell_{i}}\right)\right) + \exp\left\{-\tilde{M}(\ell_{i}) \gamma^2 / 256\right\} & \text{ (Lemma~\ref{lem:Vshat-to-Boundaries})} \notag
\\ & \leq \left(2 / \gamma \dprob\right) \Px^{\bdim_f}\left(\S^{\bdim_f}\left( \Ball\left(f, \tilde{\phi}(\ell_{i},\delta,\lambda)\right)\right)\right) + \exp\left\{-\tilde{M}(\ell_{i}) \gamma^2 / 256\right\} & \text{ (Lemma~\ref{lem:VinB})} \notag
\\ & \leq \left(2 / \gamma \dprob\right) \hdc_{f}(d/\lambda) \tilde{\phi}(\ell_{i},\delta,\lambda) + \exp\left\{-\tilde{M}(\ell_{i}) \gamma^2 / 256\right\} & \text{ (defn of $\hdc_{f}(d/\lambda)$)}. \notag
\end{align}
Denote the expression in this last line by $p_i$,
and let $\mathbf{B}(\ell_{i}, p_{i})$ be a ${\rm {Binomial}}(\ell_{i}, p_{i})$ random variable.
Noting that $\ell_{i+1} - \ell_{i} = \ell_{i}$,
we have that on $\tilde{H}_{n}^{(i)}(\delta) \cap \tilde{H}_{n}^{(ii)} \cap \tilde{H}^{(3)}(\delta,\lambda)$,
\eqref{eqn:exp-Vli-swap} is at most
$\P\left( \mathbf{B}(\ell_{i}, p_{i}) > \bar{Q} \right)$.
Next, 
note that
\begin{equation*}
\ell_i p_i  = (2 / \gamma \dprob) \hdc_{f}(d/\lambda) \ell_i \tilde{\phi}(\ell_i,\delta,\lambda) + \ell_i \cdot \exp\left\{- \Msize{\ell_i} \dprob \gamma^2 / 512\right\}.
\end{equation*}
Since $u \cdot \exp\left\{ - u^3 \right\} \leq (3 e)^{-1/3}$ for any $u$, letting $u = \ell_i \dprob \gamma / 8$ we have
\begin{equation*}
\ell_i \cdot \exp\left\{ - \ell_i^3 \dprob \gamma^2 / 512\right\}
\leq \left(8 / \gamma \dprob\right) u \cdot \exp\left\{ - u^3 \right\}
\leq 8 / \left(\gamma \dprob (3 e)^{1/3}\right)
\leq 4 / \gamma \dprob.
\end{equation*}
Therefore, since $\tilde{\phi}(\ell_i,\delta,\lambda) \geq \ell_i^{-1}$, we have that $\ell_i p_i$ is at most
\begin{align*}
\frac{6}{\gamma \dprob} \hdc_{f}(\vc/\lambda) \ell_i \tilde{\phi}(\ell_i,\delta,\lambda)
& \leq \frac{6}{\gamma \dprob} \hdc_{f}(\vc/\lambda) \max\left\{ 2 \vc \ln \left(2 e \ell_{\hat{\iota}}\right) + 2 \ln \left(\frac{4 \hat{\iota}}{\delta}\right), \ell_{\hat{\iota}} \vc/\lambda\right\}
\\ & \leq \frac{6}{\gamma \dprob} \hdc_{f}(\vc/\lambda) \max\left\{ 2 \vc \ln \left(\frac{4^{3+\bdim_f} e\lambda}{\gamma \dprob}\right) + 2 \ln \left(\frac{4^{3+\bdim_f} 2\lambda}{\gamma \dprob \delta}\right), \frac{\vc 4^{3+\bdim_f}}{\gamma \dprob} \right\}
\\ & \leq \frac{6}{\gamma \dprob} \hdc_{f}(\vc / \lambda) \max\left\{4 \vc \ln \left(\frac{4^{3+\bdim_f} \lambda}{\gamma \dprob \delta}\right), \frac{\vc 4^{3+\bdim_f}}{\gamma \dprob} \right\} 
\\ & \leq \frac{6}{\gamma \dprob} \hdc_{f}(\vc / \lambda) \cdot \frac{\vc  4^{4+\bdim_f}}{\gamma \dprob} \ln\left(\frac{\lambda}{\delta}\right)
\leq \frac{4^{6+\bdim_f} \vc}{\gamma^2 \dprob^2} \hdc_{f}(\vc/\lambda)  \ln \left(\frac{\lambda}{\delta}\right) = \bar{Q} / 2.
\end{align*}
Therefore, a Chernoff bound implies
$\P\left(\mathbf{B}(\ell_{i}, p_{i}) > \bar{Q} \right) \leq \exp\left\{- \bar{Q} / 6\right\} \leq \delta / 2 \hat{\iota}$,
so that on $\tilde{H}_{n}^{(i)}(\delta) \cap \tilde{H}_{n}^{(ii)} \cap \tilde{H}^{(3)}(\delta,\lambda)$, \eqref{eqn:exp-Vli-swap} is at most $\delta / 2 \hat{\iota}$.
The law of total probability implies there exists an event $\tilde{H}_{n}^{(4)}(i,\delta,\lambda)$ with
$\P\left(\tilde{H}_{n}^{(i)}(\delta) \cap \tilde{H}_{n}^{(ii)} \cap \tilde{H}^{(3)}(\delta,\lambda) \setminus \tilde{H}_{n}^{(4)}(i, \delta,\lambda)\right)$ $\leq \delta / 2 \hat{\iota}$
such that, on $\tilde{H}_{n}^{(i)}(\delta) \cap \tilde{H}_{n}^{(ii)} \cap \tilde{H}^{(3)}(\delta,\lambda) \cap \tilde{H}_{n}^{(4)}(i,\delta,\lambda)$,
$\check{Q}_{i} \leq \bar{Q}$.

Note that
\begin{align}
\hat{\iota} \bar{Q}
&\leq \log_{2} \left( 4^{2+\bdim_f} \lambda/ \gamma \dprob \right) \cdot 4^{7+\bdim_f} \left( d / \gamma^2 \dprob^2\right) \hdc_{f}(d/\lambda) \ln(\lambda/\delta) \notag
\\ &\leq \left(\bdim_f 4^{9 + \bdim_f} / \gamma^3 \dprob^3\right) d \hdc_{f}(d/\lambda) \ln^{2} \left( \lambda / \delta\right)
\leq 4^{1-\bdim_f} n / 12. \label{eqn:sequential-iQ-query-bound}
\end{align}
Since $\sum_{m \leq 2 \tilde{m}_{n}} \truI_{m \bdim_f} \leq n / 12$,
if $\bdim_f = 1$ then \eqref{eqn:sequential-iQ-query-bound} implies that on
$\tilde{H}_{n}^{(i)}(\delta) \cap \tilde{H}_{n}^{(ii)} \cap \tilde{H}^{(3)}(\delta,\lambda) \cap \bigcap_{i \in \mathcal{I}} \tilde{H}_{n}^{(4)}(i,\delta,\lambda)$,
$\sum_{m \leq \ell_{\hat{\iota}}} \truI_{m 1} \leq n/12 + \sum_{i \in \mathcal{I}} \check{Q}_i \leq n/12 + \hat{\iota} \bar{Q} \leq n/6 \leq \left\lceil \truT_{1} / 4 \right\rceil$,
so that $\check{m}_{1} \geq \ell_{\hat{\iota}}$.  Otherwise, if $\bdim_f > 1$, then every $m \in \check{\U}_{\bdim_f}$ has $m > 2 \tilde{m}_n$, 
so that $\sum_{i \leq \hat{\iota}} \check{Q}_i = \sum_{i \in \mathcal{I}} \check{Q}_i$; thus,
on $\tilde{H}_{n}^{(i)}(\delta) \cap \tilde{H}_{n}^{(ii)} \cap \tilde{H}^{(3)}(\delta,\lambda) \cap \bigcap_{i \in \mathcal{I}} \tilde{H}_{n}^{(4)}(i,\delta,\lambda)$,
$\sum_{i \in \mathcal{I}} \check{Q}_i \leq \hat{\iota} \bar{Q} \leq 4^{1-\bdim_f} n / 12$;
Lemma~\ref{lem:sequential-T-lower} implies $4^{1-\bdim_f} n / 12 \leq \left\lceil \truT_{\bdim_f} / 4\right\rceil$,
so that again we have $\check{m}_{\bdim_f} \geq \ell_{\hat{\iota}}$.
Thus, a union bound implies
\begin{align}
\P&\left( \tilde{H}_{n}^{(i)}(\delta) \cap \tilde{H}_{n}^{(ii)} \cap \tilde{H}^{(3)} (\delta,\lambda) \cap \left\{ \check{m}_{\bdim_f} <  \ell_{\hat{\iota}}\right\} \right) \notag
\\ & \leq \P\left( \tilde{H}_{n}^{(i)}(\delta) \cap \tilde{H}_{n}^{(ii)} \cap \tilde{H}^{(3)}(\delta,\lambda) \setminus \bigcap_{i \in \mathcal{I}} \tilde{H}_{n}^{(4)}(i,\delta,\lambda)\right) \notag
\\ & \leq \sum_{i \in \mathcal{I}} \P\left( \tilde{H}_{n}^{(i)}(\delta) \cap \tilde{H}_{n}^{(ii)} \cap \tilde{H}^{(3)}(\delta,\lambda) \setminus \tilde{H}_{n}^{(4)}(i,\delta,\lambda)\right)
 \leq \delta / 2. \label{eqn:tildeH4-union-bound}
\end{align}
Therefore,
$\P\left(\tilde{H}_{n}^{(i)}(\delta) \cap \tilde{H}_{n}^{(ii)} \setminus \tilde{H}_{n}^{(iii)}(\delta,\lambda)\right) \leq \delta$,
obtained by summing \eqref{eqn:tildeH4-union-bound} and \eqref{eqn:tildeH3-union-bound}.
\end{proof}

\begin{proof}[Theorem~\ref{thm:sequential-activizer}]
If $\Lambda_{p}(\eps/4,f,\Px) = \infty$ then the result trivially holds.
Otherwise, suppose $\eps \in (0,10 e^{-3})$, let $\delta = \eps / 10$, $\lambda = \Lambda_{p}(\eps/4,f,\Px)$,
$\tilde{c}_2 = \max\left\{10\tilde{c}_{2}^{(i)}, 10\tilde{c}_{2}^{(ii)}, 10\tilde{c}_{2}^{(iii)}, 10 e (\vc+1)\right\}$,
and
$\tilde{c}_1 = \max\left\{\tilde{c}_{1}^{(i)}, \tilde{c}_{1}^{(ii)}, \tilde{c}_{1}^{(iii)}, 2\cdot 6^3 (\vc+1) \bdim \ln(e(\vc+1))\right\}$,
and consider running \Shattering~with passive algorithm $\alg_p$ and budget
$n \geq \tilde{c}_1 \hdc_{f}(d/\lambda) \ln^{2}(\tilde{c}_2 \lambda/\eps)$,
while $f$ is the target function and $\Px$ is the data distribution.
On the event $\tilde{H}_{n}^{(i)}(\delta) \cap \tilde{H}_{n}^{(ii)} \cap \tilde{H}_{n}^{(iii)}(\delta,\lambda)$,
Lemma~\ref{lem:sequential-exponential} implies $\left| \L_{\bdim_f} \right| \geq \lambda$,
while Lemma~\ref{lem:sequential-good-labels} implies $V^{(\bdim_f)} = \truV_{\hat{m}_{\bdim_f}}$; recalling that
Lemma~\ref{lem:Vshat-to-Boundaries} implies that $\truV_{\hat{m}_{\bdim_f}} \neq \emptyset$ on this event,
we must have $\er_{\L_{\bdim_f}}(f) = 0$.
Furthermore, if $\hat{h}$ is the classifier returned by \Shattering, then Lemma~\ref{lem:active-select} implies
that $\er(\hat{h})$ is at most $2\er(\alg_{p}(\L_{\bdim_f}))$, on a high probability event (call it $\hat{E}_2$ in this context).
Letting $\hat{E}_{3}(\delta) = \hat{E}_2 \cap \tilde{H}_{n}^{(i)}(\delta) \cap \tilde{H}_{n}^{(ii)} \cap \tilde{H}_{n}^{(iii)}(\delta,\lambda)$,
the total failure probability $1-\P(\hat{E}_3(\delta))$ from all of these events is at most
$4 \delta + e(\vc+1) \cdot \exp\left\{- \lfloor n/3 \rfloor / \left(72 \bdim_f (\vc+1) \ln(e(\vc+1))\right)\right\} \leq 5\delta = \eps / 2$.
Since, for $\ell \in \nats$ with $\P\left(\left| \L_{\bdim_f} \right| = \ell\right) > 0$,
the sequence of $X_{m}$ values appearing in $\L_{\bdim_{f}}$ are conditionally distributed as $\Px^{\ell}$ given $|\L_{\bdim_{f}}| = \ell$,
and this is the same as the (unconditional) distribution of $\{X_1,X_2,\ldots,X_{\ell}\}$,
we have that
\begin{align*}
\E\left[ \er\!\left(\hat{h}\right)\right]
\leq \E\left[ 2 \er\!\left(\alg_{p}\left(\L_{\bdim_{f}}\right)\right) \ind_{\hat{E}_{3}(\delta)} \right] + \eps/2
&= \E\left[ \E\left[2 \er\!\left(\alg_{p}\left(\L_{\bdim_{f}}\right)\right) \ind_{\hat{E}_{3}(\delta)} \Big| |\L_{\bdim_{f}} | \right]\right] \!+ \eps/2
\\ &\leq 2 \sup_{\ell \geq \Lambda_{p}(\eps/4,f,\Px)} \E\left[ \er\!\left(\alg_{p}\left(\Data_{\ell}\right)\right)\right] + \eps/2 \leq \eps.
\end{align*}
To specialize to the specific variant of \Shattering~stated in Section~\ref{subsec:sequential-activizer}, take $\gamma = 1/2$.
\end{proof}

\section{Proofs Related to Section~\ref{sec:agnostic}: Agnostic Learning}
\label{app:agnostic}

\subsection{Proof of Theorem~\ref{thm:agnostic-counterexample}: Negative Result for Agnostic Activized Learning}
\label{app:agnostic-counterexample}

It suffices to show that $\check{\alg}_{p}$ achieves a label complexity $\Lambda_{p}$ such that,
for any label complexity $\Lambda_{a}$ achieved by any active learning algorithm $\alg_{a}$,
there exists a distribution $\PXY$ on $\X \times \{-1,+1\}$ such that
$\PXY \in \Nontrivial(\Lambda_{p};\C)$ and yet
$\Lambda_{a}(\nu + c\eps,\PXY) \neq o\left(\Lambda_{p}(\nu + \eps,\PXY)\right)$
for every constant $c \in (0,\infty)$.
Specifically, we will show that there is a distribution $\PXY$ for which $\Lambda_{p}(\nu + \eps,\PXY) = \Theta(1/\eps)$
and $\Lambda_{a}(\nu + \eps,\PXY) \neq o(1/\eps)$.

Let $\Px(\{0\}) = 1/2$, and for any measurable $A \subseteq (0,1]$,
$\Px(A) = \lambda(A)/2$, where $\lambda$ is Lebesgue measure.
Let $\mathbb{D}$ be the family of distributions $\PXY$ on $\X\times\{-1,+1\}$ characterized by the
properties that the marginal distribution on $\X$ is $\Px$, $\eta(0;\PXY) \in (1/8,3/8)$, and
$\forall x \in (0,1]$,
\begin{equation*}
\eta(x; \PXY) = \eta(0; \PXY) + \left( x/2 \right) \cdot \left(1 - \eta(0; \PXY)\right).
\end{equation*}
Thus, $\eta(x;\PXY)$ is a linear function.
For any $\PXY \in \mathbb{D}$, since the point
$z^* = \frac{1-2\eta(0;\PXY)}{1-\eta(0;\PXY)}$ has $\eta(z^*;\PXY) = 1/2$,
we see that $f = h_{z^*}$ is a Bayes optimal classifier.
Furthermore, for any $\eta_{0} \in [1/8,3/8]$,
\begin{equation*}
\left|\frac{1-2\eta_0}{1-\eta_0} - \frac{1-2\eta(0;\PXY)}{1-\eta(0;\PXY)} \right|
= \frac{\left|\eta(0;\PXY) - \eta_0\right|}{(1-\eta_0)(1-\eta(0;\PXY))},
\end{equation*}
and since $(1-\eta_0)(1-\eta(0;\PXY)) \in (25/64, 49/64) \subset (1/3, 1)$,
the value $z = \frac{1 - 2 \eta_0}{1-\eta_0}$ satisfies
\begin{equation}
\label{eqn:agco-eta-z}
|\eta_{0} - \eta(0;\PXY)| \leq |z - z^*| \leq 3 |\eta_{0} - \eta(0;\PXY)|.
\end{equation}
Also note that under $\PXY$, since $(1-2\eta(0;\PXY)) = (1-\eta(0;\PXY)) z^*$, any $z \in (0,1)$ has
\begin{align*}
\er(h_{z}) - \er(h_{z^*})
& = \int_{z}^{z^*} \!\!\big(1 - 2 \eta(x;\PXY)\big) {\rm d}x
= \int_{z}^{z^*} \!\!\big(1 - 2\eta(0;\PXY) - x (1-\eta(0;\PXY))\big) {\rm d}x
\\ & = \left(1-\eta(0;\PXY)\right) \int_{z}^{z^*} \left( z^* - x \right) {\rm d}x
= \frac{\left(1-\eta(0;\PXY)\right)}{2} \left( z^* - z \right)^2,
\end{align*}
so that
\begin{equation}
\label{eqn:agco-er-z}
\frac{5}{16} (z-z^*)^2 \leq \er(h_{z}) - \er(h_{z^*}) \leq \frac{7}{16} (z - z^*)^2.
\end{equation}
Finally, note that any $x,x^{\prime} \in (0,1]$ with $|x - z^*| < |x^{\prime}-z^*|$
has
\begin{equation*}
|1 - 2\eta(x;\PXY)| = |x-z^*|(1-\eta(0;\PXY)) < |x^{\prime}-z^*|(1-\eta(0;\PXY)) = |1 - 2\eta(x^{\prime};\PXY)|.
\end{equation*}
Thus, for any $q \in (0,1/2]$, there exists $z^{\prime}_{q} \in [0,1]$ such that $z^{*} \in [z^{\prime}_{q}, z^{\prime}_{q}+2q] \subseteq [0,1]$,
and the classifier $h_{q}^{\prime}(x) = h_{z^*}(x) \cdot \left( 1 - 2 \ind_{(z^{\prime}_{q}, z^{\prime}_{q} + 2q]}(x)\right)$
has $\er(h) \geq \er(h_{q}^{\prime})$ for every classifier $h$ with $h(0) = -1$ and $\Px(x : h(x) \neq h_{z^*}(x)) = q$.
Noting that
$\er(h_{q}^{\prime}) - \er(h_{z^*}) = \left(\lim_{z \downarrow z_{q}^{\prime}} \er(h_{z}) - \er(h_{z^*})\right) + \left(\er(h_{z_{q}^{\prime}+2q}) - \er(h_{z^*}) \right)$,
\eqref{eqn:agco-er-z} implies that
$\er(h_{q}^{\prime}) - \er(h_{z^*}) \geq$ $\frac{5}{16} \left( \left( z_{q}^{\prime} - z^* \right)^2 + \left( z_{q}^{\prime} + 2q - z^*\right)^2\right)$,
and since $\max\{ z^* - z_q^{\prime}, z_q^{\prime}+2q - z^*\} \geq q$,
this is at least $\frac{5}{16} q^2$.
In general, any $h$ with $h(0) = +1$ has $\er(h) - \er(h_{z^*}) \geq 1/2 - \eta(0;\PXY) > 1/8 \geq (1/8) \Px(x : h(x) \neq h_{z^*}(x))^2$.
Combining these facts, we see that any classifier $h$ has
\begin{equation}
\label{eqn:agco-tsybakov}
\er(h) - \er(h_{z^*}) \geq (1/8) \Px\left( x : h(x) \neq h_{z^*}(x) \right)^2.
\end{equation}

\begin{lemma}
\label{lem:agnostic-counterexample-passive}
The passive learning algorithm $\check{\alg}_{p}$ achieves a label complexity $\Lambda_{p}$ such that,
for every $\PXY \in \mathbb{D}$, $\Lambda_{p}(\nu+\eps,\PXY) = \Theta(1/\eps)$.
\thmend
\end{lemma}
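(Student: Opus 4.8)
The plan is to compute the expected excess error $\E[\er(\check{\alg}_p(\Data_{n}))] - \nu$ up to constant factors, show it is $\Theta(1/n)$, and then invert to get $\Lambda_p(\nu+\eps,\PXY)=\Theta(1/\eps)$. Write $K = |\{i\le n : X_i = 0\}|\sim\mathrm{Binomial}(n,1/2)$, and on $\{K\ge1\}$ let $\hat p = |\{i : X_i=0,\,Y_i=+1\}|/K$, which given $K=k$ is the mean of $k$ i.i.d.\ $\mathrm{Bernoulli}(\eta_0)$ variables with $\eta_0 := \eta(0;\PXY)\in(1/8,3/8)$. The algorithm outputs $h_{\hat z}$ with $\hat z = (1-2\hat\eta_0)/(1-\hat\eta_0)$, $\hat\eta_0 = \mathrm{clip}_{[1/8,3/8]}(\hat p)$ (and $\hat\eta_0 = 1/8$ when $K=0$). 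Since $\eta_0$ lies in the interior of the clipping interval, clipping is a contraction toward $\eta_0$, so $|\hat\eta_0 - \eta_0| \le |\hat p - \eta_0|$ always. Combining \eqref{eqn:agco-eta-z} (with its $\eta_0$ taken to be $\hat\eta_0$, giving $|\hat\eta_0-\eta_0|\le|\hat z - z^*|\le 3|\hat\eta_0-\eta_0|$) with \eqref{eqn:agco-er-z} (giving $\tfrac{5}{16}(\hat z - z^*)^2 \le \er(h_{\hat z})-\nu \le \tfrac{7}{16}(\hat z - z^*)^2$, using that $h_{z^*}$ is Bayes optimal and lies in $\C$, so $\nu = \nu(\C;\PXY) = \er(h_{z^*})$) yields the two-sided estimate $\tfrac{5}{16}(\hat\eta_0-\eta_0)^2 \le \er(h_{\hat z})-\nu \le \tfrac{63}{16}(\hat\eta_0-\eta_0)^2$. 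Everything then reduces to estimating $\E[(\hat\eta_0-\eta_0)^2]$.

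For the upper bound I would condition on $K$: for $k\ge1$, $\E[(\hat\eta_0-\eta_0)^2\mid K=k]\le\E[(\hat p-\eta_0)^2\mid K=k] = \eta_0(1-\eta_0)/k \le 1/(4k)$, while for $K=0$ the excess error is at most $1$ and $\P(K=0)=2^{-n}$. Using the identity $\E[1/(K+1)] = 2(1-2^{-(n+1)})/(n+1)$ for $K\sim\mathrm{Binomial}(n,1/2)$ and $\mathbf{1}[K\ge1]/K \le 2/(K+1)$, one gets $\E[\mathbf{1}[K\ge1]/K]\le 4/(n+1)$, hence $\E[\er(\check{\alg}_p(\Data_{n}))]-\nu \le \tfrac{63}{16}\cdot\tfrac14\cdot\tfrac{4}{n+1} + 2^{-n} = O(1/n)$. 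Thus for a suitable constant $C$, $\E[\er(\check{\alg}_p(\Data_{n}))]\le\nu+C/n$ for all $n$, so $\Lambda_p(\nu+\eps,\PXY):=\lceil C/\eps\rceil$ is a label complexity achieved by $\check{\alg}_p$, and it is $O(1/\eps)$.

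For the lower bound I would show $\E[(\hat\eta_0-\eta_0)^2\mid K=k]\ge\eta_0(1-\eta_0)/(2k)$ for all $k\ge k_0(\eta_0)$. Setting $\delta_0 = \min\{\eta_0-1/8,\,3/8-\eta_0\}>0$, the clip modifies $\hat p$ only on $\{|\hat p-\eta_0|\ge\delta_0\}$, which by Hoeffding's inequality has conditional probability at most $2e^{-2k\delta_0^2}$; hence $\E[(\hat\eta_0-\eta_0)^2\mid K=k]\ge \eta_0(1-\eta_0)/k - 2e^{-2k\delta_0^2}$, which exceeds $\eta_0(1-\eta_0)/(2k)$ once $k\ge k_0(\eta_0)$. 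Since $K$ concentrates near $n/2$, for all large $n$ we have $\P(K\in[n/4,3n/4])\ge 1/2$, so $\E[\mathbf{1}[K\ge k_0]/K]\ge \tfrac12\cdot\tfrac{4}{3n}$, and therefore $\E[\er(\check{\alg}_p(\Data_{n}))]-\nu \ge \tfrac{5}{16}\cdot\tfrac{\eta_0(1-\eta_0)}{2}\cdot\tfrac{2}{3n} = \Omega(1/n)$. Consequently $\E[\er(\check{\alg}_p(\Data_{n}))]>\nu+\eps$ whenever $n = o(1/\eps)$, so every label complexity achieved by $\check{\alg}_p$ is $\Omega(1/\eps)$ at $\nu+\eps$; together with the previous paragraph this gives $\Lambda_p(\nu+\eps,\PXY)=\Theta(1/\eps)$.

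The one delicate point is controlling the clip uniformly in $K$: for small $K$ the estimate $\hat p$ is frequently outside $[1/8,3/8]$ and the clip is genuinely active, so the plain ``variance $=\eta_0(1-\eta_0)/k$'' identity does not apply verbatim. This is handled in the upper bound by the contraction property (clipping only decreases $|\hat\eta_0-\eta_0|$), and in the lower bound by exploiting the concentration of $K$ around $n/2$, which makes the small-$K$ terms lower order so that it suffices to control $k\ge k_0(\eta_0)$. The remaining ingredients — the $\E[1/(K+1)]$ identity, Hoeffding's inequality, and the concentration of $K$ — are routine.
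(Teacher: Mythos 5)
Your proof is correct, and the upper-bound half is essentially the paper's argument: reduce the excess risk to $(\hat\eta_0-\eta(0;\PXY))^2$ via \eqref{eqn:agco-eta-z} and \eqref{eqn:agco-er-z}, use the contraction property of the clip, condition on the number of samples at $x=0$, and apply the binomial variance formula (the paper handles the small-count event with a Chernoff bound and the crude estimate $\E[1/(N_n\lor n/4)]\le 4/n$ rather than your exact $\E[1/(K+1)]$ identity; both give $O(1/n)$). Where you genuinely diverge is in proving a matching \emph{lower} bound $\E[\er(\check{\alg}_p(\Data_n))]-\nu=\Omega(1/n)$ via the Hoeffding control of the clipping event for $k\ge k_0$. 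That argument is sound, but it is not needed for the lemma as used: ``achieves a label complexity $\Lambda_p$ such that $\Lambda_p(\nu+\eps,\PXY)=\Theta(1/\eps)$'' only asks you to exhibit \emph{some} achieved $\Lambda_p$ that is $\Theta(1/\eps)$ as a function of $\eps$, and the explicit choice $\Lambda_p(\nu+\eps,\PXY)=\lceil 68/\eps\rceil$ is $\Theta(1/\eps)$ by construction (this also suffices for $\PXY\in\Nontrivial(\Lambda_p;\C)$ and for the comparison in Theorem~\ref{thm:agnostic-counterexample}; the hard lower bound that actually matters there is the one on \emph{active} learning, Lemma~\ref{lem:agnostic-counterexample-active}). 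So your reading proves a strictly stronger statement --- that \emph{every} label complexity achieved by $\check{\alg}_p$ is $\Omega(1/\eps)$ --- at the cost of an extra page of work; the paper's version buys brevity by letting the $\Theta$ refer to the particular witness $\Lambda_p$.
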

\begin{proof}
Consider the values $\hat{\eta}_{0}$ and $\hat{z}$ from $\check{\alg}_{p}(\Data_n)$ for some $n \in \nats$.
Combining \eqref{eqn:agco-eta-z} and \eqref{eqn:agco-er-z}, we have
$\er(h_{\hat{z}}) - \er(h_{z^*}) \leq \frac{7}{16} ( \hat{z} - z^*)^2 \leq \frac{63}{16} ( \hat{\eta}_{0} - \eta(0; \PXY))^2 \leq 4 (\hat{\eta}_{0} - \eta(0;\PXY))^2$.
Let $N_n = |\{i \in \{1,\ldots,n\} : X_i = 0\}|$,
and $\bar{\eta}_{0} = N_n^{-1} |\{i \in \{1,\ldots,n\}: X_i = 0, Y_i = +1\}|$ if $N_n > 0$, or $\bar{\eta}_{0} = 0$ if $N_n = 0$.
Note that $\hat{\eta}_{0} = \left( \bar{\eta}_{0} \lor \frac{1}{8} \right) \land \frac{3}{8}$,
and since $\eta(0;\PXY) \in (1/8,3/8)$, we have
$|\hat{\eta}_{0} - \eta(0;\PXY)| \leq |\bar{\eta}_{0} - \eta(0;\PXY)|$.
Therefore, for any $\PXY \in \mathbb{D}$,
\begin{align}
\E\left[ \er(h_{\hat{z}}) - \er(h_{z^*}) \right]
& \leq 4 \E\left[(\hat{\eta}_{0} - \eta(0; \PXY))^2 \right]
\leq 4 \E\left[(\bar{\eta}_{0} - \eta(0; \PXY))^2 \right] \notag
\\ & \leq 4 \E\left[\E\left[ (\bar{\eta}_{0} - \eta(0; \PXY))^2 \Big| N_n\right] \ind_{[n/4,n]}(N_n)\right] + 4 \P(N_n < n/4).
\label{eqn:agnostic-counterexample-passive-1}
\end{align}
By a Chernoff bound, $\P(N_n < n/4) \leq \exp\{- n/16\}$,
and since the conditional distribution of $N_n \bar{\eta}_{0}$ given $N_n$ is ${\rm Binomial}(N_n, \eta(0;\PXY))$,
\eqref{eqn:agnostic-counterexample-passive-1} is at most
\begin{equation*}
4 \E\left[\frac{1}{N_{n} \lor n/4} \eta(0; \PXY) (1 - \eta(0;\PXY)) \right] + 4\cdot\exp\left\{-n/16\right\}
\leq 4 \cdot \frac{4}{n} \cdot \frac{15}{64} + 4 \cdot \frac{16}{n} 
< \frac{68}{n}.
\end{equation*}
For any $n \geq \lceil 68 / \eps \rceil$, this is at most $\eps$.
Therefore, $\check{\alg}_{p}$ achieves a label complexity $\Lambda_{p}$ such that,
for any $\PXY \in \mathbb{D}$, $\Lambda_{p}(\nu + \eps, \PXY) = \lceil 68 / \eps \rceil = \Theta(1/\eps)$.
\end{proof}

Next we establish a corresponding lower bound for any active learning algorithm.
Note that this requires more than a simple minimax lower bound, since we must have an asymptotic lower bound for a \emph{fixed} $\PXY$,
rather than selecting a different $\PXY$ for each $\eps$ value; this is akin to the \emph{strong} minimax lower bounds proven
by \citet*{antos:98} for passive learning in the realizable case.  For this, we proceed by reduction from the task of estimating a binomial mean;
toward this end, the following lemma will be useful.

\begin{lemma}
\label{lem:binomial-lower-bound}
For any nonempty $(a,b) \subset [0,1]$, and any sequence of estimators $\hat{p}_{n} : \{0,1\}^{n} \to [0,1]$,
there exists $p \in (a,b)$ such that,
if $B_1,B_2,\ldots$ are independent ${\rm Bernoulli}(p)$ random variables,
also independent from every $\hat{p}_n$,
then $\E\left[ \left(\hat{p}_n(B_1,\ldots,B_n) - p\right)^2\right] \neq o(1/n)$.
\thmend
\end{lemma}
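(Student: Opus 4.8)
The plan is to prove this by a nested-interval (``diagonalization'') argument built on top of Le~Cam's two-point method, using the elementary observation that the mean squared error of a fixed estimator is a \emph{polynomial}, hence continuous, function of the Bernoulli parameter.

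First I would reduce to a compact subinterval: since $(a,b)$ is a nonempty open subinterval of $[0,1]$, fix a closed interval $I_0 = [a',b']$ with $a < a' < b' < b$ and $0 < a' < b' < 1$. On $I_0$ the per-observation divergence is controlled, $\mathrm{KL}(\mathrm{Ber}(p_0)\|\mathrm{Ber}(p_1)) \le (p_0-p_1)^2/(a'(1-b'))$, so by Pinsker's inequality the $n$-fold products $P_i^{(n)} = \mathrm{Ber}(p_i)^{\otimes n}$ satisfy $\|P_0^{(n)} - P_1^{(n)}\|_{\mathrm{TV}} \le |p_0-p_1|\sqrt{n/(2a'(1-b'))}$ for $p_0,p_1 \in I_0$. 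Choose an $(a',b')$-dependent constant $\alpha \in (0,1)$ small enough that this bound is $\le 1/2$ whenever $|p_0-p_1| = \alpha/\sqrt n$. Le~Cam's two-point inequality --- reduce estimation under squared loss to testing $p_0$ versus $p_1$ by rounding the estimate to the nearer of $p_0,p_1$ --- then gives, for \emph{any} $\hat p : \{0,1\}^n \to [0,1]$ and any such pair $p_0,p_1 \in I_0$,
\[
\max_{i \in \{0,1\}} \E\big[(\hat p(B_1^{p_i},\dots,B_n^{p_i}) - p_i)^2\big] \;\ge\; \frac{(p_0-p_1)^2}{8}\Big(1 - \|P_0^{(n)} - P_1^{(n)}\|_{\mathrm{TV}}\Big) \;\ge\; \frac{\alpha^2}{16\,n},
\]
where $B_j^p$ denotes a sequence of i.i.d.\ $\mathrm{Ber}(p)$ variables; if $\hat p$ carries independent internal randomness, the same holds after averaging over it.

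Next I would construct, by induction on $k$, closed intervals $I_0 \supseteq I_1 \supseteq I_2 \supseteq \cdots$ with nonempty interiors and $\mathrm{length}(I_k) \le 2^{-k}$, together with integers $n_1 < n_2 < \cdots$, so that for every $k \ge 1$,
\[
g_k(p) \;:=\; \E\big[(\hat p_{n_k}(B_1^p,\dots,B_{n_k}^p) - p)^2\big] \;\ge\; \frac{\alpha^2}{32\, n_k} \qquad \text{for all } p \in I_k .
\]
For the inductive step, pick $n_k > n_{k-1}$ large enough that $\alpha/\sqrt{n_k}$ is smaller than the length of the interior of $I_{k-1}$, choose $p_0 < p_1$ in that interior with $p_1-p_0 = \alpha/\sqrt{n_k}$, and apply the displayed two-point bound to $\hat p_{n_k}$ to get a point $q_k \in \{p_0,p_1\} \subset \mathrm{int}(I_{k-1})$ with $g_k(q_k) \ge \alpha^2/(16 n_k)$. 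Since $g_k(p) = \sum_{b \in \{0,1\}^{n_k}} \big(\prod_j p^{b_j}(1-p)^{1-b_j}\big)\,\E_{\mathrm{coins}}[(\hat p_{n_k}(b) - p)^2]$ is a polynomial in $p$, it is continuous, so a sufficiently small closed interval $I_k$ around $q_k$, contained in $\mathrm{int}(I_{k-1})$ and of length $\le 2^{-k}$, has $g_k \ge \alpha^2/(32 n_k)$ throughout. Taking $I_0 = [a',b']$ as the base case completes the construction.

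Finally, the nested closed intervals $I_k$ with vanishing lengths intersect in a single point $p^\ast \in (a',b') \subseteq (a,b)$, and $p^\ast \in I_k$ for every $k$, so $\E[(\hat p_{n_k}(B_1^{p^\ast},\dots,B_{n_k}^{p^\ast}) - p^\ast)^2] \ge \alpha^2/(32 n_k)$ for all $k$; since $n_k \to \infty$ this means $n\,\E[(\hat p_n(B_1^{p^\ast},\dots,B_n^{p^\ast}) - p^\ast)^2] \not\to 0$, which is exactly the assertion. The one delicate point is the passage from the pointwise Le~Cam bound at the two test points to a lower bound valid on the whole subinterval $I_k$: this is precisely what lets the diagonalization produce a \emph{fixed} parameter that is bad for infinitely many sample sizes, rather than the weaker minimax statement in which the parameter may depend on $n$, and the continuity (polynomiality) of $g_k$ is what supplies it. A minor supporting point is that restricting to a compact subinterval of $(0,1)$ at the outset makes the constant $\alpha$, and hence the final lower-bound constant, uniform across all stages of the induction.
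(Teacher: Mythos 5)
Your proof is correct, and it takes a genuinely different route than the paper at two points.

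The paper proves the $a=0$, $b=1$ case first (via a hypothesis-testing lower bound cited from Wald and Bar-Yossef, which is morally the same two-point/Le Cam step you use) and then reduces general $(a,b)$ to $(0,1)$ by a coin-flipping randomization trick; you instead observe it suffices to find the bad $p$ in a compact subinterval $[a',b'] \subset (a,b) \cap (0,1)$, which sidesteps the reduction entirely and is the cleaner move. The more substantive divergence is in how you each produce a single \emph{fixed} parameter bad for infinitely many sample sizes. The paper builds a monotone sequence $p_0^* \le p_1^* \le \cdots$ by always taking the worse of $p_{i-1}^*$ and $p_{i-1}^* + \alpha_i$ with a doubly-exponentially decaying gap $\alpha_i = e^{-2^i}$ and sample sizes $n_i = \lfloor 1/\alpha_i^2 \rfloor$, then transfers the two-point lower bound at $p_i^*$ to the limit $p^*$ by a change-of-measure argument: since $|p^* - p_i^*| \le 2\alpha_i^2 \approx 2/n_i$, the product likelihood ratio between $\mathrm{Ber}(p^*)^{\otimes n_i}$ and $\mathrm{Ber}(p_i^*)^{\otimes n_i}$ is bounded below by $e^{-8}$, so expectations of nonnegative functions transfer up to a constant. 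You instead exploit that the map $p \mapsto \E_p[(\hat p_{n_k} - p)^2]$ is a polynomial in $p$, hence continuous, so the two-point lower bound at a single good point $q_k$ automatically extends (up to a factor of $2$) to a small closed neighborhood $I_k \subset \mathrm{int}(I_{k-1})$; the intersection of these nested shrinking intervals is the desired $p^*$. Your continuity-plus-nested-intervals argument is structurally simpler: it avoids the explicit density-ratio computation and the delicate choice of rapidly decaying $\alpha_i$ needed to keep that ratio bounded, and it makes the ``strong minimax'' character of the claim (one $p$ bad for infinitely many $n$) transparent. What the paper's approach buys, if anything, is independence from the continuity observation --- it would survive in a setting where $g_k$ were not known to be continuous --- but for discrete observations continuity is automatic, so in the present context yours is the more economical argument. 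The one detail worth making explicit if you write this up is that for randomized $\hat p_{n_k}$ the function $g_k$ is a mixture of polynomials in $p$ and hence still continuous, which you already note in passing.
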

\begin{proof}
We first establish the claim when $a = 0$ and $b=1$.
For any $p \in [0,1]$, let $B_1(p),B_2(p),\ldots$ be i.i.d. ${\rm Bernoulli}(p)$ random variables, independent from any internal randomness of the $\hat{p}_{n}$ estimators.
We proceed by reduction from hypothesis testing, for which there are known lower bounds.
Specifically, it is known \citep*[e.g.,][]{wald:45,bar-yossef:03}
that for any $p,q \in (0,1)$, $\delta \in (0,e^{-1})$,
any (possibly randomized) $\hat{q} : \{0,1\}^{n} \to \{p,q\}$, and any $n \in \nats$,
\begin{equation*}
n < \frac{(1 - 8 \delta) \ln ( 1 / 8 \delta )}{8 {\rm KL}( p \| q )} \implies \max_{p^{*} \in \{p,q\}} \P\left( \hat{q}(B_1(p^*),\ldots,B_n(p^*)) \neq p^*\right) > \delta,
\end{equation*}
where ${\rm KL}(p \| q) = p \ln (p/q) + (1-p) \ln((1-p)/ (1-q))$.
It is also known \citep*[e.g.,][]{poland:06} that for $p,q \in [1/4,3/4]$, ${\rm KL}(p\| q) \leq (8/3) (p-q)^2$.
Combining this with the above fact, we have that for $p,q \in [1/4,3/4]$,
\begin{equation}
\label{eqn:binomial-test-lower}
\max_{p^* \in \{p,q\}} \P\left(\hat{q}(B_1(p^*),\ldots,B_n(p^*)) \neq p^*\right) \geq (1/16) \cdot \exp\left\{ - 128 (p-q)^2 n / 3\right\}.
\end{equation}
Given the estimator $\hat{p}_{n}$ from the lemma statement, we construct a sequence of hypothesis tests as follows.
For $i \in \nats$, let $\alpha_i = \exp\left\{- 2^{i}\right\}$ and $n_i = \left\lfloor 1 / \alpha_i^2 \right\rfloor$.
Define $p_0^* = 1/4$,
and for $i \in \nats$, inductively define
$\hat{q}_{i}(b_1,\ldots,b_{n_i}) = \argmin_{p \in \{p_{i-1}^{*}, p_{i-1}^{*}+\alpha_{i}\}} \left| \hat{p}_{n_i}(b_1,\ldots,b_{n_i}) - p\right|$
for $b_1,\ldots,b_{n_i} \in \{0,1\}$,
and
$p^{*}_{i} = \argmax_{p \in \{ p^{*}_{i-1}, p^{*}_{i-1} + \alpha_{i}\}} \P\left( \hat{q}_{i}(B_1(p),\ldots,B_{n_i}(p)) \neq p\right)$.
Finally, define $p^{*} = \lim_{i \to \infty} p_{i}^{*}$.
Note that $\forall i \in \nats$, $p_{i}^{*} < 1/2$, $p_{i-1}^{*}, p_{i-1}^{*}+\alpha_i \in [1/4,3/4]$,
and $0 \leq p^{*} - p^{*}_{i} \leq \sum_{j = i+1}^{\infty} \alpha_{j} < 2 \alpha_{i+1} = 2 \alpha_{i}^2$.
We generally have
\begin{align*}
\E\left[ \left( \hat{p}_{n_{i}}(B_1(p^{*}),\ldots,B_{n_i}(p^{*})) - p^{*}\right)^2 \right]
& \geq \frac{1}{3} \E\left[ \left( \hat{p}_{n_{i}}(B_1(p^{*}),\ldots,B_{n_i}(p^{*})) - p_{i}^{*}\right)^2 \right] - \left(p^{*} - p_{i}^{*}\right)^2
\\ & \geq \frac{1}{3} \E\left[ \left( \hat{p}_{n_{i}}(B_1(p^{*}),\ldots,B_{n_i}(p^{*})) - p_{i}^{*}\right)^2 \right] - 4 \alpha_{i}^{4}.
\end{align*}
Furthermore, note that for any $m \in \{0,\ldots,n_i\}$,
\begin{align*}
\frac{(p^*)^m (1-p^*)^{n_i-m}}{(p_{i}^{*})^m (1-p_{i}^{*})^{n_i-m}}
& \geq \left( \frac{1-p^*}{1-p_i^*}\right)^{n_i}
\geq \left( \frac{ 1 - p_i^* - 2 \alpha_{i}^{2}}{1 - p_i^*} \right)^{n_i}
\\ & \geq \left( 1 - 4 \alpha_{i}^2 \right)^{n_i}
\geq \exp\left\{ - 8 \alpha_i^2 n_i \right\}
\geq e^{ - 8 },
\end{align*}
so that the probability mass function of $(B_1(p^*),\ldots,B_{n_i}(p^*))$ is never smaller than $e^{-8}$ times that of $(B_1(p_i^*),\ldots,B_{n_i}(p_i^*))$,
which implies (by the law of the unconscious statistician)
\begin{equation*}
\E\left[ \left( \hat{p}_{n_{i}}(B_1(p^{*}), \ldots, B_{n_i}(p^*)) - p_i^*\right)^2\right]
\geq e^{-8} \E\left[ \left( \hat{p}_{n_{i}}( B_{1}(p_{i}^{*}),\ldots,B_{n_i}(p_{i}^{*})) - p_{i}^{*}\right)^2 \right].
\end{equation*}
By a triangle inequality, we have
\begin{equation*}
\E\left[ \left( \hat{p}_{n_i}(B_1(p^{*}_{i}),\ldots,B_{n_i}(p^{*}_{i})) - p^{*}_{i} \right)^{2} \right] \geq \frac{\alpha_{i}^{2}}{4} \P\left( \hat{q}_{i}(B_1(p^{*}_{i}),\ldots,B_{n_i}(p^{*}_{i})) \neq p^{*}_{i}\right).
\end{equation*}
By \eqref{eqn:binomial-test-lower}, this is at least
\begin{equation*}
\frac{\alpha_{i}^{2}}{4} (1/16) \cdot \exp\left\{ - 128 \alpha_i^{2} n_{i} / 3 \right\}
\geq 2^{-6} e^{-43} \alpha_{i}^{2}.
\end{equation*}
Combining the above, we have
\begin{equation*}
\E\left[ \left( \hat{p}_{n_{i}}(B_1(p^{*}),\ldots,B_{n_i}(p^{*})) - p^{*}\right)^2 \right]
\geq 3^{-1} 2^{-6} e^{-51} \alpha_{i}^{2} - 4 \alpha_{i}^{4}
\geq 2^{-9} e^{-51} n_{i}^{-1} - 4 n_{i}^{-2}.
\end{equation*}
For $i \geq 5$, this is larger than $2^{-11} e^{-51} n_{i}^{-1}$.
Since $n_i$ diverges as $i \to \infty$, we have that
\begin{equation*}
\E\left[ \left( \hat{p}_{n_{i}}(B_1(p^{*}),\ldots,B_{n_i}(p^{*})) - p^{*}\right)^2 \right] \neq o(1/n),
\end{equation*}
which establishes the result for $a=0$ and $b=1$.

To extend this result to general nonempty ranges $(a,b)$, we proceed by reduction from the above problem.
Specifically, suppose $p^{\prime} \in (0,1)$, and consider the following
independent random variables (also independent from the $B_i(p^{\prime})$ variables and $\hat{p}_n$ estimators).
For each $i \in \nats$, $C_{i1} \sim {\rm Bernoulli}(a)$, $C_{i2} \sim {\rm Bernoulli}((b-a)/(1-a))$.
Then for $b_{i} \in \{0,1\}$, define $B_{i}^{\prime}(b_{i}) = \max\{C_{i1}, C_{i2} \cdot b_{i} \}$.
For any given $p^{\prime} \in (0,1)$, the random variables $B_{i}^{\prime}(B_{i}(p^{\prime}))$ are i.i.d. ${\rm Bernoulli}\left(p\right)$,
with $p = a + (b-a) p^{\prime} \in (a,b)$ (which forms a bijection between $(0,1)$ and $(a,b)$).
Defining
$\hat{p}_n^{\prime}(b_1,\ldots, b_n) = (\hat{p}_n(B_{1}^{\prime}(b_{1}),\ldots,B_{n}^{\prime}(b_n)) - a)/(b-a)$, we have
\begin{equation}
\label{eqn:binomial-reduced}
\E\left[\left(\hat{p}_n(B_1(p),\ldots,B_n(p)) - p\right)^2\right] = (b-a)^2 \cdot \E\left[ \left(\hat{p}_n^{\prime}(B_1(p^{\prime}),\ldots,B_n(p^{\prime})) - p^{\prime}\right)^2\right].
\end{equation}
We have already shown there exists a value of $p^{\prime} \in (0,1)$ such that the right side of \eqref{eqn:binomial-reduced} is not $o(1/n)$.
Therefore, the corresponding value of $p = a + (b-a) p^{\prime} \in (a,b)$ has the left side of \eqref{eqn:binomial-reduced} not $o(1/n)$,
which establishes the result.
\end{proof}

We are now ready for the lower bound result for our setting.

\begin{lemma}
\label{lem:agnostic-counterexample-active}
For any label complexity $\Lambda_{a}$ achieved by any active learning algorithm $\alg_{a}$,
there exists a $\PXY \in \mathbb{D}$ such that $\Lambda_{a}(\nu + \eps, \PXY) \neq o(1/\eps)$.
\thmend
\end{lemma}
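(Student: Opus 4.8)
The plan is to prove the lemma by a reduction from estimating a Bernoulli mean, invoking the \emph{fixed-parameter} lower bound of Lemma~\ref{lem:binomial-lower-bound} (a minimax bound would not suffice, since we must exhibit one distribution $\PXY$ that is bad for all small $\eps$ simultaneously, in the spirit of the strong lower bounds alluded to in the text). Fix an arbitrary active learning algorithm $\alg_a$ and a label complexity $\Lambda_a$ it achieves. For $p \in (1/8,3/8)$ let $\PXY_p \in \mathbb{D}$ be the member of $\mathbb{D}$ with $\eta(0;\PXY_p) = p$; recall that then $h_{z^*}$ with $z^* = \frac{1-2p}{1-p} \in (2/5, 6/7)$ is Bayes optimal and $\nu(\C;\PXY_p) = \er(h_{z^*})$. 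The key observation is that the only unknown is the scalar $p$, and a label queried at any point $x$ is $\mathrm{Bernoulli}(\eta(x;\PXY_p))$ with $\eta(x;\PXY_p) = x/2 + (1-x/2)p$ an affine function of $p$ with \emph{known} coefficients; hence a single $\mathrm{Bernoulli}(p)$ oracle sample can be converted into a label at any queried $x$.

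\textbf{Step 1 (coupling).} For each $n$ I would build an estimator $\hat p_n$ of $p$ taking as input $n$ i.i.d.\ $\mathrm{Bernoulli}(p)$ samples $B_1,\dots,B_n$ together with auxiliary randomness independent of them, by simulating one run of $\alg_a$ with budget $n$ on $\PXY_p$: draw the unlabeled sequence $X_1,X_2,\dots$ i.i.d.\ from $\Px$ (the marginal shared by all of $\mathbb{D}$, so independent of $p$) and feed it to $\alg_a$; whenever $\alg_a$ requests the label of some $X_i$, consume the next unused $B_j$, draw a fresh $C_i \sim \mathrm{Bernoulli}(X_i/2)$, and return $Y_i = 2\max\{C_i,B_j\}-1$. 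A one-line computation gives $\P(Y_i = +1 \mid X_i) = X_i/2 + (1-X_i/2)p = \eta(X_i;\PXY_p)$ (with $C_i = 0$ a.s.\ when $X_i = 0$), so by independence of the fresh $C_i, B_j$ at each step the simulation has exactly the law of $\alg_a$ run under $\PXY_p$, consuming one oracle sample per label request. From the returned classifier $\hat h_n$, let $\hat z_n \in [2/5,6/7]$ be a (measurable) minimizer of $z \mapsto \Px(x : \hat h_n(x) \neq h_z(x))$, and set $\hat p_n = \frac{1-\hat z_n}{2-\hat z_n} \in [1/8,3/8]$, which is the $\eta_0$ value corresponding to the threshold $\hat z_n$.

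\textbf{Step 2 (estimation error controls excess risk).} Working under $\PXY_p$, I would show $\er(\hat h_n) - \nu \geq \tfrac{1}{128}(\hat p_n - p)^2$ pointwise. By \eqref{eqn:agco-tsybakov}, $\er(\hat h_n) - \nu \geq \tfrac18 d(\hat h_n, h_{z^*})^2$ where $d(h,g) = \Px(x : h(x)\neq g(x))$. Since $h_{z^*}$ is among the thresholds in the minimization defining $\hat z_n$, the triangle inequality for $d$ gives $d(h_{\hat z_n}, h_{z^*}) \leq 2\, d(\hat h_n, h_{z^*})$; and since $\hat z_n, z^* \in (0,1)$ we have $d(h_{\hat z_n}, h_{z^*}) = \tfrac12 |\hat z_n - z^*|$, so $|\hat z_n - z^*| \leq 4\, d(\hat h_n, h_{z^*})$. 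Finally, the left inequality of \eqref{eqn:agco-eta-z} applied with $\eta_0 = \hat p_n$ (for which indeed $\frac{1-2\hat p_n}{1-\hat p_n} = \hat z_n$) yields $|\hat p_n - p| \leq |\hat z_n - z^*|$. Chaining these, $(\hat p_n - p)^2 \leq 16\, d(\hat h_n, h_{z^*})^2 \leq 128(\er(\hat h_n) - \nu)$, and taking expectations gives $\E[\er(\hat h_n)] - \nu \geq \tfrac{1}{128}\E[(\hat p_n - p)^2]$.

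\textbf{Step 3 (binomial lower bound, then inversion).} Applying Lemma~\ref{lem:binomial-lower-bound} with $(a,b) = (1/8,3/8)$ to the estimator sequence $\{\hat p_n\}$ produces $p^{*} \in (1/8,3/8)$ with $\E[(\hat p_n - p^{*})^2] \neq o(1/n)$ when the $B_i$ are $\mathrm{Bernoulli}(p^{*})$; since the simulation then has exactly the law of $\alg_a$ under $\PXY_{p^{*}} \in \mathbb{D}$, Step 2 gives $\E[\er(\alg_a(n))] - \nu \neq o(1/n)$ under $\PXY_{p^{*}}$, i.e.\ $n\big(\E[\er(\alg_a(n))]-\nu\big) \geq c$ for some $c>0$ along a subsequence $n \to \infty$. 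Now if $\Lambda_a(\nu+\eps, \PXY_{p^{*}}) = o(1/\eps)$, then for every $c' > 0$ and all small $\eps$ we have $\Lambda_a(\nu+\eps,\PXY_{p^{*}}) \leq c'/\eps$; taking $\eps_n = c'/n$ along the above subsequence forces $c/n \leq \E[\er(\alg_a(n))] - \nu \leq \eps_n = c'/n$ for arbitrarily small $c'$, a contradiction. Hence $\Lambda_a(\nu+\eps,\PXY_{p^{*}}) \neq o(1/\eps)$. Combined with Lemma~\ref{lem:agnostic-counterexample-passive} and the reduction at the start of Appendix~\ref{app:agnostic-counterexample}, this proves Theorem~\ref{thm:agnostic-counterexample}.

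\textbf{Main obstacle.} The delicate point is verifying the coupling in Step 1: that the $\max\{C_i,B_j\}$ rule reproduces the full joint law of $\alg_a$ run on $\PXY_p$ over the adaptively (and possibly repeatedly) chosen queries, and that at most $n$ oracle samples are consumed — the latter requiring that a re-requested label be served from a cache rather than a fresh sample. The remaining pieces are routine: the inequalities in Step 2 only recombine facts already recorded in the excerpt (\eqref{eqn:agco-eta-z}, \eqref{eqn:agco-tsybakov}) with the triangle inequality for $d$, and measurability of $\hat z_n$ can be arranged by minimizing over a countable dense subset of $[2/5,6/7]$ and passing to a limit.
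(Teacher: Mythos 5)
Your proposal is correct and follows essentially the same route as the paper's proof: the same $\max\{C_i,B_t\}$ coupling to a Bernoulli$(p)$ oracle (with cached answers for repeated requests), the same projection $\hat{z}_n \mapsto \hat{p}_n$ and chain of inequalities yielding $\er(\hat{h}_n) - \nu \geq \frac{1}{128}(\hat{p}_n - p)^2$ via \eqref{eqn:agco-tsybakov} and \eqref{eqn:agco-eta-z}, and the same invocation of Lemma~\ref{lem:binomial-lower-bound} followed by inversion to rule out $\Lambda_a(\nu+\eps,\PXY) = o(1/\eps)$. No gaps.
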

\begin{proof}
The idea here is to reduce from the task of estimating the mean of iid Bernoulli trials, corresponding to the $Y_i$ values.
Specifically, consider any active learning algorithm $\alg_a$; we use $\alg_a$ to construct an estimator for the
mean of iid Bernoulli trials as follows.  Suppose we have $B_1,B_2,\ldots, B_n$ i.i.d. ${\rm Bernoulli}(p)$, for some $p \in (1/8,3/8)$ and $n \in \nats$.
We take the sequence of $X_1,X_2,\ldots$ random variables i.i.d. with distribution $\Px$ defined above (independent from the $B_j$ variables).
For each $i$, we additionally have a random variable $C_i$ with
conditional distribution ${\rm Bernoulli}(X_i/2)$ given $X_i$, where the $C_i$ are conditionally independent given the $X_i$
sequence, and independent from the $B_i$ sequence as well.

We run $\alg_a$ with this sequence of $X_i$ values.  For the $t^{\rm th}$ label request made by the algorithm, say for the $Y_i$ value corresponding to some $X_i$,
if it has previously requested this $Y_i$ already, then we simply repeat the same answer for $Y_i$ again,
and otherwise we return to the algorithm the value $2\max\{B_t, C_i\} -1$ for $Y_i$.
Note that in the latter case, the conditional distribution of $\max\{B_t,C_i\}$ is ${\rm Bernoulli}(p + (1-p) X_i/2)$,
given the $X_i$ that $\alg_a$ requests the label of; thus, the $Y_i$ response has the same conditional distribution given $X_i$ as it would have for the
$\PXY \in \mathbb{D}$ with $\eta(0;\PXY) = p$ (i.e., $\eta(X_i; \PXY) = p + (1-p) X_i / 2$).
Since this $Y_i$ value is conditionally (given $X_i$) independent from the previously returned labels and $X_j$ sequence,
this is distributionally equivalent to running
$\alg_a$ under the $\PXY \in \mathbb{D}$ with $\eta(0; \PXY) = p$.

Let $\hat{h}_n$ be the classifier returned by $\alg_a(n)$ in the above context, and let $\hat{z}_n$ denote
the value of $z \in [2/5,6/7]$ with minimum $\Px(x : h_{z}(x) \neq \hat{h}_{n}(x))$. %%% it exists by compactness of [2/5,6/7]
Then define $\hat{p}_n = \frac{1-\hat{z}_n}{2-\hat{z}_n} \in [1/8,3/8]$ and $z^* = \frac{1-2p}{1-p} \in (2/5,6/7)$.
By a triangle inequality, we have
$|\hat{z}_n - z^*| = 2\Px(x : h_{\hat{z}_n}(x) \neq h_{z^*}(x)) \leq 4 \Px(x : \hat{h}_{n}(x) \neq h_{z^*}(x))$.
Combining this with \eqref{eqn:agco-tsybakov}  and \eqref{eqn:agco-eta-z} implies that
\begin{equation}
\label{eqn:agnostic-counterexample-active-1}
\er(\hat{h}_n) - \er(h_{z^*})
\geq \frac{1}{8} \Px\left( x : \hat{h}_{n}(x) \neq h_{z^*}(x)\right)^2
\geq \frac{1}{128} \left( \hat{z}_n - z^* \right)^2
\geq \frac{1}{128} \left( \hat{p}_n - p \right)^2.
\end{equation}
In particular, by Lemma~\ref{lem:binomial-lower-bound}, we can choose $p \in (1/8,3/8)$
so that $\E\left[ \left( \hat{p}_n - p \right)^2 \right] \neq o(1/n)$,
which, by \eqref{eqn:agnostic-counterexample-active-1}, implies $\E\left[ \er(\hat{h}_n) \right] - \nu \neq o(1/n)$.
This means there is an increasing infinite sequence of values $n_k \in \nats$, and a constant $c \in (0,\infty)$
such that $\forall k \in \nats$, $\E\left[ \er(\hat{h}_{n_k}) \right] - \nu \geq c/n_k$.  Supposing $\alg_a$ achieves
label complexity $\Lambda_a$, and taking the values $\eps_k = c/(2n_k)$,
we have $\Lambda_{a}(\nu+\eps_k,\PXY) > n_k = c / (2 \eps_k)$.
Since $\eps_k > 0$ and approaches $0$ as $k \to \infty$, we have $\Lambda_{a}(\nu+\eps,\PXY) \neq o(1/\eps)$.
\end{proof}

\begin{proof}[of Theorem~\ref{thm:agnostic-counterexample}]
The result follows from Lemmas~\ref{lem:agnostic-counterexample-passive} and \ref{lem:agnostic-counterexample-active}.
\end{proof}

\subsection{Proof of Lemma~\ref{lem:robust-tsybakov}: Label Complexity of \RobustShattering}
\label{app:robust-tsybakov}

The proof of Lemma~\ref{lem:robust-tsybakov} essentially runs parallel to that of Theorem~\ref{thm:sequential-activizer},
with variants of each lemma from that proof adapted to the noise-robust \RobustShattering.

As before, in this section we will fix a particular joint distribution $\PXY$ on $\X \times \{-1,+1\}$
with marginal $\Px$ on $\X$, and then analyze the label complexity achieved by \RobustShattering~for that
particular distribution.  For our purposes, we will suppose $\PXY$ satisfies Condition~\ref{con:tsybakov}
for some finite parameters $\mu$ and $\kappa$.  We also fix any $f \in \bigcap\limits_{\eps > 0} \cl( \C(\eps) )$.
Furthermore, we will continue using the notation of Appendix~\ref{app:activizer}, such as $\S^{k}(\H)$, etc.,
and in particular we continue to denote $\truV_{m} = \{h \in \C : \forall \ell \leq m, h(X_{\ell}) = f(X_{\ell})\}$ 
(though note that in this case, we may sometimes have $f(X_{\ell}) \neq Y_{\ell}$, so that $\truV_{m} \neq \C[\Data_{m}]$).
As in the above proofs, we will prove a slightly more general result in which the ``$1/2$'' threshold
in Step 5 can be replaced by an arbitrary constant $\gamma \in (0,1)$.

For the estimators $\hat{P}_{4m}$ used in the algorithm, we take the same definitions as in
Appendix~\ref{app:hatP-definitions}.  To be clear, we assume the sequences $W_1$ and $W_2$ mentioned
there are independent from the entire $(X_1,Y_1), (X_2,Y_2),\ldots$ sequence of data points; this is consistent
with the earlier discussion of how these $W_1$ and $W_2$ sequences can be constructed in a preprocessing step.

We will consider running \RobustShattering~with label budget $n \in \nats$ and confidence parameter $\conf \in (0,e^{-3})$,
and analyze properties of the internal sets $V_{i}$.
We will denote by $\hat{V}_{i}$, $\hat{\L}_{i}$, and $\hat{i}_k$,
the \emph{final} values of $V_i$, $\L_i$, and $i_k$, respectively, for each $i$ and $k$ in \RobustShattering.
We also denote by $\hat{m}^{(k)}$ and $\hat{V}^{(k)}$ the final values of $m$ and $V_{i_k + 1}$, respectively,
obtained while $k$ has the specified value in \RobustShattering;
$\hat{V}^{(k)}$ may be smaller than $\hat{V}_{\hat{i}_k}$ when $\hat{m}^{(k)}$ is not a power of $2$.
Additionally, define
$\truL_i = \{(X_{m},Y_{m})\}_{m=2^{i-1}+1}^{2^i}$.
After establishing a few results concerning these, we will show that for $n$ satsifying the condition in
Lemma~\ref{lem:robust-tsybakov}, the conclusion of the lemma holds.  First, we have a few auxilliary definitions.
For $\H \subseteq \C$, and any $i \in \nats$, define
\begin{align*}
\phi_{i}(\H) & = \E \sup\limits_{h_1,h_2 \in \H} \left| \left(\er(h_1) - \er_{\truL_i}(h_1)\right) - \left( \er(h_2) - \er_{\truL_i}(h_2)\right)\right|\\
\text{and }~~\tilde{U}_{i}(\H, \conf) & = \min\left\{\tilde{K} \left( \phi_{i}(\H) + \sqrt{\diam(\H) \frac{\ln(32 i^2 /\conf)}{2^{i-1}}} + \frac{\ln(32 i^2 /\conf)}{2^{i-1}}\right), 1\right\},
\end{align*}
where for our purposes we can take $\tilde{K} = 8272$.
It is known \citep*[see e.g.,][]{massart:06,gine:06} that for some universal constant $c^{\prime} \in [2,\infty)$,
\begin{equation}
\label{eqn:modulus-bound}
\phi_{i+1}(\H) \leq c^{\prime} \max\left\{\sqrt{\diam(\H) 2^{-i} d \log_{2} \frac{2}{\diam(\H)}}, 2^{-i} d i\right\}.
\end{equation}
We also generally have $\phi_{i}(\H) \leq 2$ for every $i \in \nats$.
The next lemma is taken from the work of \citet*{koltchinskii:06}
on data-dependent Rademacher complexity bounds on the excess risk.

\begin{lemma}
\label{lem:original-koltchinskii}
For any $\conf \in (0,e^{-3})$, any $\H \subseteq \C$ with $f \in \cl(\H)$, and any $i \in \nats$,
on an event $K_i$ with $\P(K_i) \geq 1-\delta / 4 i^2$, $\forall h \in \H$,
\begin{align*}
\er_{\truL_{i}}(h) - \min_{h^{\prime} \in \H} \er_{\truL_{i}}(h^{\prime}) &\leq \er(h) - \er(f) + \hat{U}_{i}(\H, \conf)
\\ \er(h) - \er(f) & \leq \er_{\truL_{i}}(h) - \er_{\truL_{i}}(f) + \hat{U}_{i}(\H, \conf)
\\ \min\left\{\hat{U}_{i}(\H,\conf),1\right\} &\leq \tilde{U}_{i}(\H,\conf).
\end{align*}
\upthmend{-1.15cm}
\end{lemma}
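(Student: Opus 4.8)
The plan is to reduce this to the known data-dependent Rademacher complexity bounds on the excess risk from \citet*{koltchinskii:06} (as applied in \citet*{hanneke:11a}), applied to the class $\H$ over the sample $\truL_{i}$ of $2^{i-1}$ i.i.d.~examples. First I would recall that, writing $\ell_{h}(x,y) = \ind_{\{-y\}}(h(x))$ for the $0/1$ loss, the quantity $\er_{\truL_i}(h) - \er_{\truL_i}(h')$ is an empirical average of $\ell_{h} - \ell_{h'}$ over the $2^{i-1}$ points in $\truL_i$, and $\er(h) - \er(h')$ is its expectation. The loss differences are bounded in $[-1,1]$, and for $h,h' \in \H$ the variance of $\ell_{h}(X,Y) - \ell_{h'}(X,Y)$ is at most $\Px(x : h(x)\neq h'(x)) \leq \diam(\H)$. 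These are exactly the ingredients for a Talagrand-type concentration bound for the supremum $\sup_{h,h'\in\H}|(\er(h)-\er_{\truL_i}(h)) - (\er(h')-\er_{\truL_i}(h'))|$, whose expectation is $\phi_i(\H)$ by definition. Applying Bousquet's version of Talagrand's inequality yields, on an event of probability at least $1-\delta/(4i^2)$, a bound of the form $\phi_i(\H) + c_1\sqrt{\diam(\H)\ln(32i^2/\delta)/2^{i-1}} + c_2\ln(32i^2/\delta)/2^{i-1}$ on that supremum, for appropriate absolute constants. Choosing the constant $\tilde K = 8272$ to dominate $\max\{1,c_1,c_2\}$ (and the constant in front of $\hat U_i$'s relation to $\tilde U_i$), and using that the supremum is trivially at most $2$ so the $\min\{\cdot,1\}$ truncation in $\tilde U_i$ is harmless, this gives the third inequality, $\min\{\hat U_i(\H,\delta),1\}\le \tilde U_i(\H,\delta)$, once we also relate $\hat U_i$ (the data-dependent Rademacher bound from \eqref{eqn:hatU-defn}) to its distribution-dependent counterpart.

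Next I would derive the first two inequalities from this supremum bound. Since $f \in \cl(\H)$, there is a sequence $h_j \in \H$ with $\Px(x: h_j(x)\neq f(x)) \to 0$, hence $\er_{\truL_i}(h_j) \to \er_{\truL_i}(f)$ and $\er(h_j)\to\er(f)$ (the empirical error is continuous in the pseudometric $\rho$ restricted to the finite sample, and $\er$ is $1$-Lipschitz in $\rho$); so $\min_{h'\in\H}\er_{\truL_i}(h') $ can be taken to equal $\inf_{h'\in\H}\er_{\truL_i}(h')$, and likewise $\inf_{h'\in\H}\er(h') \le \er(f)$ while $f\in\cl(\H)$ also gives the reverse via the sequence $h_j$, so effectively we may treat $f$ as the reference. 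Concretely, for any $h\in\H$, pick $h'$ near-minimizing $\er_{\truL_i}$; then $\er_{\truL_i}(h) - \er_{\truL_i}(h') = (\er_{\truL_i}(h)-\er(h)) - (\er_{\truL_i}(h')-\er(h')) + (\er(h)-\er(h'))$, and the first two bracketed terms together are at most the supremum bound, while $\er(h)-\er(h') \le \er(h)-\er(f) + o(1)$; taking the limit along the near-minimizing and $f$-approximating sequences gives the first inequality. The second inequality is symmetric: $\er(h)-\er(f) = (\er(h)-\er_{\truL_i}(h)) - (\er(f)-\er_{\truL_i}(f)) + (\er_{\truL_i}(h)-\er_{\truL_i}(f))$, and again the first two terms are controlled by the supremum bound (using that $f\in\cl(\H)$ to approximate $(\er(f)-\er_{\truL_i}(f))$ by values attained in $\H$, so the uniform deviation bound applies). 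The role of $\hat U_i(\H,\delta)$ as the actual data-dependent Rademacher estimator \eqref{eqn:hatU-defn} — rather than $\phi_i$ — is handled by the standard symmetrization and desymmetrization argument: $\hat R_i(\H)$ concentrates around $2^{i-1}$ times the Rademacher complexity of $\H$ on the sample, which in turn bounds $\phi_i(\H)$ up to absolute constants and lower-order terms; $\hat D_i(\H)$ concentrates around $\diam(\H)$; and the explicit constants $12, 34, 752$ in \eqref{eqn:hatU-defn} are calibrated (following \citet*{koltchinskii:06}) so that all these concentration steps hold simultaneously on an event of probability at least $1-\delta/(4i^2)$.

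The main obstacle I anticipate is purely bookkeeping of constants: verifying that the specific numerical constants $12, 34, 752$ in the definition of $\hat U_i$ and the constant $\tilde K = 8272$ in $\tilde U_i$ are in fact large enough to absorb all the absolute constants coming out of Talagrand's inequality, the symmetrization/desymmetrization (which typically costs a factor of $2$), the union bound over the three concentration events, and the $\ln(32 i^2/\delta)$ versus $\ln(i^2/\delta)$ discrepancies. None of this is conceptually hard — it is exactly the calculation carried out in \citet*{koltchinskii:06} and reproduced in \citet*{hanneke:11a} — so I would state the three displayed inequalities as following from those references with the stated constants, noting only that the truncation $\min\{\cdot,1\}$ and the use of $f\in\cl(\H)$ (rather than $f\in\H$) require the minor limiting arguments sketched above, and refer to the cited works for the detailed constant-tracking rather than reproducing it. A secondary minor point to check is measurability of the various suprema over $\H$, which is handled by the standard implicit assumption (stated in Section~\ref{sec:definitions}) that all sets arising are measurable, or else by passing to a countable dense subset of $\H$ in $\rho$, which exists since $\C$ is totally bounded.
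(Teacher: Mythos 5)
Your overall route matches the paper's: Lemma~\ref{lem:original-koltchinskii} is presented there as essentially a direct consequence of the data-dependent excess-risk bounds of \citet*{koltchinskii:06}, obtained via Talagrand's inequality, with the specific constants in $\hat{U}_i$ and $\tilde K$ taken from that reference (and as worked out in \citet*{hanneke:11a,koltchinskii:10}). The paper itself gives essentially no more detail than you do; the only genuinely new ingredient it flags is the relaxation from $f \in \H$ to $f \in \cl(\H)$, and that is the one place where your sketch has a gap.

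You claim that, since $f \in \cl(\H)$, a sequence $h_j \in \H$ with $\rho(h_j,f) = \Px(x : h_j(x) \ne f(x)) \to 0$ automatically satisfies $\er_{\truL_i}(h_j) \to \er_{\truL_i}(f)$, because ``the empirical error is continuous in the pseudometric $\rho$ restricted to the finite sample.'' That is not so: convergence in $\rho$ controls $h_j$ only $\Px$-on-average, not at the finitely many sample points. (Take $\Px$ uniform on $[0,1]$, $f \equiv -1$, and $h_j$ a shrinking interval centered on one of the $X_\ell$; then $\rho(h_j,f) \to 0$ but $\er_{\truL_i}(h_j)$ never equals $\er_{\truL_i}(f)$.) What you do get is $\E\bigl[|\er_{\truL_i}(h_j) - \er_{\truL_i}(f)|\bigr] \le \rho(h_j,f) \to 0$, i.e.~convergence in expectation — not the pointwise (for a fixed sample) convergence your limiting argument requires when deriving the second displayed inequality, which has $\er_{\truL_i}(f)$ explicitly on its right-hand side. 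The paper closes this gap with a Borel--Cantelli argument: pick $g_j \in \H$ with $\rho(g_j,f) \le 2^{-j}$; then $\P\bigl(\exists\, \ell \le 2^{i-1} : g_j(X_\ell) \ne f(X_\ell)\bigr) \le 2^{i-1} 2^{-j}$ is summable, so with probability one all but finitely many $g_j$ agree with $f$ on the \emph{entire} sample, giving $\er_{\truL_i}(g_j) = \er_{\truL_i}(f)$ exactly. Intersecting that probability-one event with the Talagrand event then lets one pass Koltchinskii's bound (stated with a best-in-class reference in $\H$) to the displayed inequalities with $f$ as the reference, without affecting the $1-\conf/4i^2$ bound. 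You should replace the continuity claim with this Borel--Cantelli step; the first inequality actually does not need it, since the empirical minimum over $\H$ is attained (the empirical error takes only finitely many values) and one only needs $\er(f) = \inf_{g\in\H}\er(g)$, which does follow from continuity of $\er$ in $\rho$. Everything else — the Talagrand/Rademacher machinery, the constant bookkeeping, the measurability caveat — is the same as the paper's and is fine.
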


Lemma~\ref{lem:original-koltchinskii} essentially follows from a version of Talagrand's inequality.
The details of the proof may be extracted from the proofs of \citet*{koltchinskii:06},
and related derivations have previously been presented by \citet*{hanneke:11a,koltchinskii:10}.
The only minor twist here is that $f$ need only be in $\cl(\H)$, rather than in $\H$ itself,
which easily follows from Koltchinskii's original results,
since the Borel-Cantelli lemma implies that with probability one,
every $\eps > 0$ has some $g \in \H(\eps)$ (very close to $f$) with $\er_{\truL_{i}}(g) = \er_{\truL_{i}}(f)$.

For our purposes, the important implications of Lemma~\ref{lem:original-koltchinskii} are summarized by the following lemma.

\begin{lemma}
\label{lem:koltchinskii}
For any $\conf \in (0,e^{-3})$ and any $n \in \nats$,
when running \RobustShattering~with label budget $n$ and confidence parameter $\conf$,
on an event $J_{n}(\conf)$ with $\P(J_{n}(\conf)) \geq 1-\conf/2$,
$\forall i \in \{0,1,\ldots,\hat{i}_{d+1}\}$, if $\truV_{2^{i}} \subseteq \hat{V}_i$ then $\forall h \in \hat{V}_i$,
\begin{align}
\er_{\truL_{i+1}}(h) - \min_{h^{\prime} \in \hat{V}_{i}} \er_{\truL_{i+1}}(h^{\prime}) &\leq \er(h) - \er(f) + \hat{U}_{i+1}(\hat{V}_i, \conf) \label{eqn:kolt-empirical-bound}
\\ \er(h) - \er(f) & \leq \er_{\truL_{i+1}}(h) - \er_{\truL_{i+1}}(f) + \hat{U}_{i+1}(\hat{V}_i, \conf) \label{eqn:kolt-tru-bound}
\\ \min\left\{\hat{U}_{i+1}(\hat{V}_i,\conf),1\right\} &\leq \tilde{U}_{i+1}(\hat{V}_i,\conf). \label{eqn:kolt-U-bound}
\end{align}
\thmend
\end{lemma}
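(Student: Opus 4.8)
The plan is to derive the three bounds one index $i$ at a time, by applying Lemma~\ref{lem:original-koltchinskii} with the class $\H$ taken to be $\hat{V}_i$ and the sample taken to be $\truL_{i+1}$. Two observations make this legitimate. First, $\hat{V}_i$ (the \emph{final} value of the internal set $V_i$) is produced entirely while \RobustShattering~processes the block of indices $\{2^{i-1}+1,\dots,2^{i}\}$ --- the inferred-label restrictions in Step~8 and the excess-risk update in Step~10 --- and every quantity entering those updates, including the threshold $\hat{U}_{i}(\hat{V}_{i-1},\conf)$, depends only on $X_1,\dots,X_{2^i}$, the requested $Y_m$'s among them, the auxiliary sequences $W_1,W_2$, and $\xi_1,\dots,\xi_{2^i}$; hence $\hat{V}_i$ is independent of $\truL_{i+1}=(X_m,Y_m)_{m=2^i+1}^{2^{i+1}}$ and of $\xi_m$ for $2^i<m\le 2^{i+1}$. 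Second, the data-dependent complexity term named in the statement, $\hat{U}_{i+1}(\hat{V}_i,\conf)$, is exactly the threshold the algorithm applies while processing the \emph{next} block $\{2^i+1,\dots,2^{i+1}\}$, since at that point $i_k=i$ and $V_{i_k}=V_i=\hat{V}_i$. (That the Step~5/Step~8 inferred-label decisions never look at the $Y_m$'s is what keeps $\truL_{i+1}$ conditionally i.i.d.; the companion fact that all $h\in\hat{V}_i$ agree on the inferred labels, so that $\er_{\L_{i+1}}$-differences equal $\er_{\truL_{i+1}}$-differences, is not needed here --- only in the subsequent lemmas that invoke this one.)

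First I would fix $\conf\in(0,e^{-3})$ and $n\in\nats$, and for each integer $i\ge 0$ condition on the $\sigma$-algebra $\mathcal{G}_i$ generated by all data and auxiliary randomness carrying index at most $2^i$. Given $\mathcal{G}_i$, the set $\hat{V}_i$ is fixed, the triples $(X_m,Y_m,\xi_m)$ with $2^i<m\le 2^{i+1}$ are i.i.d.\ with $(X_m,Y_m)\sim\PXY$, and on the ($\mathcal{G}_i$-measurable) event $\{f\in\cl(\hat{V}_i)\}$ Lemma~\ref{lem:original-koltchinskii} applies verbatim with $\H=\hat{V}_i$ and its running index taken to be $i+1$: it yields an event $K_{i+1}$ with $\P(K_{i+1}\mid\mathcal{G}_i)\ge 1-\conf/(4(i+1)^2)$ on which, for every $h\in\hat{V}_i$,
\[
\er_{\truL_{i+1}}(h)-\min_{h'\in\hat{V}_i}\er_{\truL_{i+1}}(h')\le \er(h)-\er(f)+\hat{U}_{i+1}(\hat{V}_i,\conf),
\]
\[
\er(h)-\er(f)\le \er_{\truL_{i+1}}(h)-\er_{\truL_{i+1}}(f)+\hat{U}_{i+1}(\hat{V}_i,\conf),
\]
and $\min\{\hat{U}_{i+1}(\hat{V}_i,\conf),1\}\le\tilde{U}_{i+1}(\hat{V}_i,\conf)$. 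Integrating out $\mathcal{G}_i$ gives $\P(K_{i+1})\ge 1-\conf/(4(i+1)^2)$.

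Then I would set $J_n(\conf)=H'\cap\bigcap_{i\ge 0}K_{i+1}$. Since $\sum_{i\ge 0}\conf/(4(i+1)^2)=\conf\pi^2/24<\conf/2$ and $\P(H')=1$ by Lemma~\ref{lem:Vshat-to-Boundaries}, a union bound gives $\P(J_n(\conf))\ge 1-\conf/2$. Finally, work on $J_n(\conf)$ and fix any $i\in\{0,\dots,\hat{i}_{d+1}\}$ with $\truV_{2^i}\subseteq\hat{V}_i$. Lemma~\ref{lem:Vshat-to-Boundaries} gives $f\in\cl(\truV_{2^i})$ on $H'$, hence $f\in\cl(\hat{V}_i)$, so the event $\{f\in\cl(\hat{V}_i)\}$ holds and on $K_{i+1}$ the three displayed inequalities hold for every $h\in\hat{V}_i$ --- which are precisely \eqref{eqn:kolt-empirical-bound}--\eqref{eqn:kolt-U-bound}.

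The main obstacle is the measurability/independence bookkeeping in the first step: one must check carefully that $\hat{V}_i$ is produced before any point of $\truL_{i+1}$ is examined (so that conditioning on $\mathcal{G}_i$ fixes $\hat{V}_i$ while leaving $\truL_{i+1}$ i.i.d.), and simultaneously that the complexity term in the statement, $\hat{U}_{i+1}(\hat{V}_i,\conf)$, is the one the algorithm actually applies on the block $\{2^i+1,\dots,2^{i+1}\}$. Once these identifications are in place, the remainder is a direct invocation of Lemma~\ref{lem:original-koltchinskii} together with a summable union bound, exactly paralleling the corresponding step for \RobustCAL~used in the proof of Lemma~\ref{lem:dis-based-tsybakov}.
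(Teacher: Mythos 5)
Your proof is correct and follows essentially the same route as the paper: condition so that $\hat{V}_i$ is fixed while $\truL_{i+1}$ remains i.i.d., apply Lemma~\ref{lem:original-koltchinskii} with $\H=\hat{V}_i$ (the hypothesis $\truV_{2^i}\subseteq\hat{V}_i$ together with $f\in\cl(\truV_{2^i})$ on $H'$ supplying $f\in\cl(\hat{V}_i)$), and take a summable union bound over $i$. The paper phrases the conditioning as ``the conditional distribution given $\hat{V}_i$'' rather than introducing a block $\sigma$-algebra, but the argument is the same; your version makes the independence bookkeeping slightly more explicit. One small imprecision to flag: your $\mathcal{G}_i$ should contain the \emph{entire} auxiliary sequences $W_1,W_2$ (not merely their entries of index $\le 2^i$), since the $\hat{P}_{4m}$ estimators used to build $\hat{V}_i$ read $W_1,W_2$ out to index $(4m)^3$ with $m$ as large as $2^i$. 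This is harmless --- $W_1,W_2$ are independent of $(X_m,Y_m,\xi_m)_{m>2^i}$, so enlarging $\mathcal{G}_i$ to include them does not disturb the conditional i.i.d.\ structure of $\truL_{i+1}$ --- but as written your $\mathcal{G}_i$ would not render $\hat{V}_i$ measurable.
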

\begin{proof}
For each $i$, consider applying Lemma~\ref{lem:original-koltchinskii} under the conditional distribution given $\hat{V}_i$.
The set $\truL_{i+1}$ is independent from $\hat{V}_i$, as are the Rademacher variables in the definition of $\hat{R}_{i+1}(\hat{V}_i)$.
Furthermore, by Lemma~\ref{lem:Vshat-to-Boundaries}, on $H^{\prime}$, $f \in \cl\left( \truV_{2^i} \right)$, so that
the conditions of Lemma~\ref{lem:original-koltchinskii} hold.
The law of total probability then implies the existence of an event $J_i$ of probability $\P(J_i) \geq 1 - \delta / 4 (i+1)^2$,
on which the claimed inequalities hold for that value of $i$ if $i \leq \hat{i}_{d+1}$.  A union bound over values of $i$ then implies the
existence of an event $J_n(\conf) = \bigcap_{i} J_i$ with probability $\P(J_n(\conf)) \geq 1 - \sum_{i} \delta / 4(i+1)^2 \geq 1 - \delta / 2$
on which the claimed inequalities hold for all $i \leq \hat{i}_{d+1}$.
\end{proof}

\begin{lemma}
\label{lem:robust-good-labels}
For some $(\C,\PXY,\gamma)$-dependent constants $c,c^* \in [1,\infty)$,
for any $\conf \in (0,e^{-3})$ and integer $n \geq c^* \ln(1/\conf)$,
when running \RobustShattering~with label budget $n$ and confidence parameter $\conf$,
on event $J_n(\conf) \cap H_{n}^{(i)} \cap H_{n}^{(ii)}$,
every $i \in \{0,1,\ldots, \hat{i}_{\bdim_f}\}$ satisfies
\begin{equation*}
\truV_{2^{i}} \subseteq \hat{V}_{i} \subseteq \C\left( c  \left(\frac{d i + \ln(1/\conf)}{2^{i}}\right)^{\frac{\kappa}{2\kappa - 1}}\right),
\end{equation*}
and furthermore $\truV_{\hat{m}^{(\bdim_f)}} \subseteq \hat{V}^{(\bdim_f)}$.
\thmend
\end{lemma}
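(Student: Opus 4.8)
The plan is to prove both conclusions simultaneously by a single induction on the doubling level $i$, in which the ``inferred labels are correct'' property and the version-space containment $\hat V_i\subseteq\C(r_i)$ — for $r_i:=c\big((\vcdim i+\ln(1/\conf))/2^i\big)^{\kappa/(2\kappa-1)}$ — bootstrap one another. For the small indices $i$ on which $r_i\ge 1$, the containment $\hat V_i\subseteq\C(r_i)=\C$ is vacuous and $\truV_{2^i}\subseteq\C$ is immediate, so nothing is to be proved there; choosing the $(\C,\PXY,\gamma)$-dependent constant $c^*$ large enough, the hypothesis $n\ge c^*\ln(1/\conf)$ forces $i_1\ge\log_2\lfloor n/2\rfloor-2$ deterministically (level $\ell$ processes $2^{\ell}$ points, so making $\lfloor n/2\rfloor$ requests consumes $\Omega(\log n)$ levels), and this is in turn large enough that (a) every level with $r_i\ge 1$ lies inside round $k=1$, and (b) every level processed in a round $k\ge 2$ satisfies both $i\ge I_1:=\min\{i:\mu r_i^{1/\kappa}\le r_{(1-\gamma)/6}\}$ and $4m$ at least the index threshold of $H_n^{(i)},H_n^{(ii)}$ for all of its indices $m$.

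Fix a level $i$ and assume $\truV_{2^i}\subseteq\hat V_i\subseteq\C(r_i)$; I would deduce the same at level $i+1$. \emph{Correct labels.} If level $i$ is processed in round $k=1$, the $k=1$ case of Lemma~\ref{lem:kstar-good-labels} gives $\hat y=f(X_m)$ whenever Step~5's request test fails, with no extra hypothesis. If level $i$ lies in a round $k\ge 2$ then $i\ge I_1$, so Condition~\ref{con:tsybakov} together with $f\in\cl(\C(r_i))$ gives $\hat V_i\subseteq\Ball(f,\mu r_i^{1/\kappa})\subseteq\Ball(f,r_{(1-\gamma)/6})$, and a nested induction on the index $m$ within the level — exactly as in the proof of Lemma~\ref{lem:sequential-good-labels}, invoking Lemma~\ref{lem:kstar-good-labels} with $\H$ the current $V_{i_k+1}$ (which stays between $\truV_{m-1}$ and $\hat V_i$), $\ell=m-1$, and $\s=4m$ above the relevant threshold — again yields $\hat y=f(X_m)$ at each non-requested index. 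Hence every Step~8 update in level $i$ fixes $\truV_{2^{i+1}}$ (its members already agree with $f$), so $\truV_{2^{i+1}}$ is contained in the version space $V'$ obtained after the Step~8 updates of level $i$, with $V'\subseteq\hat V_i$; moreover all classifiers in $V'$ agree with every inferred label of $\L_{i_k+1}$, so on $V'$ the empirical error $\er_{\L_{i_k+1}}(\cdot)$ and the true-label empirical error $\er_{\truL_{i+1}}(\cdot)$ differ by a classifier-independent constant, and Step~10's test may be read with $\er_{\truL_{i+1}}$ throughout.

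\emph{Shrinkage.} On $J_n(\conf)$, and using $H_n^{(i)}\subseteq H'$, Lemma~\ref{lem:koltchinskii} applies at level $i$ via $\truV_{2^i}\subseteq\hat V_i$ and $f\in\cl(\hat V_i)$. Every $h\in\truV_{2^{i+1}}$ agrees with $f$ on all of $\truL_{i+1}$, so $\er_{\truL_{i+1}}(h)=\er_{\truL_{i+1}}(f)$; applying \eqref{eqn:kolt-tru-bound} to the $\er_{\truL_{i+1}}$-minimizer over $V'\subseteq\hat V_i$ shows this common value is within $\hat U_{i+1}(\hat V_i,\conf)$ of that minimum, so $\truV_{2^{i+1}}\subseteq\hat V_{i+1}$. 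Conversely, any $h\in\hat V_{i+1}$ has $\er_{\truL_{i+1}}(h)-\er_{\truL_{i+1}}(f)\le\hat U_{i+1}(\hat V_i,\conf)$ (since $\min_{V'}\er_{\truL_{i+1}}\le\er_{\truL_{i+1}}(f)$, using $\truV_{2^{i+1}}\subseteq V'\ne\emptyset$ on $H'$), so \eqref{eqn:kolt-tru-bound} gives $\er(h)-\nu\le 2\hat U_{i+1}(\hat V_i,\conf)$ and thus $\hat V_{i+1}\subseteq\C\big(2\hat U_{i+1}(\hat V_i,\conf)\big)$. It then remains to check $2\hat U_{i+1}(\hat V_i,\conf)\le r_{i+1}$: using \eqref{eqn:kolt-U-bound}, the modulus bound \eqref{eqn:modulus-bound} on $\phi_{i+1}(\hat V_i)$, and $\diam(\hat V_i)\le\mu r_i^{1/\kappa}$, one gets $\hat U_{i+1}(\hat V_i,\conf)\le C_1\big(\sqrt{r_i^{1/\kappa}\ell_i/2^i}+\ell_i/2^i\big)$ with $\ell_i:=\vcdim i+\ln(1/\conf)+1$ and $C_1$ absolute; substituting the ansatz $r_i=c(\ell_i/2^i)^{\kappa/(2\kappa-1)}$, using the exponent identity $\tfrac12\big(1+\tfrac1{2\kappa-1}\big)=\tfrac{\kappa}{2\kappa-1}$, $\ell_i/2^i<1$ (valid once past the trivial range), and $\ell_i/2^i\le 2\,\ell_{i+1}/2^{i+1}$, the right side is at most $C_2(c^{1/(2\kappa)}+1)(\ell_{i+1}/2^{i+1})^{\kappa/(2\kappa-1)}\le r_{i+1}/2$ once $c$ is chosen large enough (as $c^{1/(2\kappa)}=o(c)$). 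The single transitional level, where $r_i\ge 1$ but $r_{i+1}<1$, is handled by the cruder $\diam(\hat V_i)\le\min\{1,\mu\}$ together with $\ell_{i+1}/2^{i+1}>1/4$ there, again absorbed into $c$. This closes the induction, and $\truV_{\hat m^{(\bdim_f)}}\subseteq\hat V^{(\bdim_f)}$ follows from the same ``correct labels'' argument applied to the (possibly incomplete) last level of round $\bdim_f$, whose Step~8 and Step~10 updates all preserve $\truV_{\hat m^{(\bdim_f)}}$.

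The main obstacle is calibrating $c$ and $c^*$ so the two halves of the induction close together: $c^*$ must force $n$ — hence $i_1$ — large enough that every ``not yet small'' level sits in round $1$ (covered by the hypothesis-free $k=1$ case of Lemma~\ref{lem:kstar-good-labels}) while all round-$(\ge2)$ levels satisfy $i\ge I_1$ and have $4m$ past the thresholds of $H_n^{(i)},H_n^{(ii)}$, whereas $c$ must be large enough to make the ansatz $r_i=c(\ell_i/2^i)^{\kappa/(2\kappa-1)}$ self-reproducing under the recursion $r_{i+1}\le C_1(\sqrt{r_i^{1/\kappa}\ell_i/2^i}+\ell_i/2^i)$, including at the one transitional level where only the crude diameter bound is available. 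The residual work — the exponent bookkeeping, the $\min\{\cdot,1\}$ truncations in $\hat U_{i+1}$ and $\tilde U_{i+1}$, and the harmless re-processing of the boundary level of each round (handled by noting the version space is globally monotone nonincreasing) — is routine.
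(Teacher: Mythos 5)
Your proposal follows essentially the same plan as the paper's proof: outer induction on the doubling level $i$, with the ``correct inferred labels'' property (via Lemma~\ref{lem:kstar-good-labels}, applied hypothesis-free for $k=1$ and with the Condition~\ref{con:tsybakov} diameter bound for $k\ge 2$) and the Koltchinskii/Rademacher shrinkage step bootstrapping each other, plus the exponent identity $\frac12(1+\frac{1}{2\kappa-1}) = \frac{\kappa}{2\kappa-1}$ to make $r_i = c(\ell_i/2^i)^{\kappa/(2\kappa-1)}$ self-reproducing. The differences from the paper's write-up are stylistic rather than substantive: you separate out the ``trivial range'' $r_i \ge 1$ and a ``transitional level'' explicitly, whereas the paper's chain $\hat V_{i+1} \subseteq \C(2\hat U_{i+1}) \subseteq \C(2\tilde U_{i+1})$ with the built-in $\min\{\cdot,1\}$ truncation in $\tilde U_{i+1}$ handles these levels implicitly in a single uniform calculation; and you prove $\truV_{2^{i+1}} \subseteq \hat V_{i+1}$ by applying \eqref{eqn:kolt-tru-bound} to the $\er_{\truL_{i+1}}$-minimizer over $V'$, whereas the paper applies \eqref{eqn:kolt-empirical-bound} and then takes the infimum over $g\in\truV_{2^{i+1}}$ --- dual routes to the same conclusion.

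One small slip in your sketch of the transitional level: the claim ``$\ell_{i+1}/2^{i+1} > 1/4$'' does not hold. At the level where $r_i \ge 1 > r_{i+1}$, the definition of $r_i$ pins $\ell_{i+1}/2^{i+1}$ near $c^{-(2\kappa-1)/\kappa}$, which is small precisely because you want $c$ large. The transitional level still closes, but for a slightly different reason: with the crude $\diam(\hat V_i) \le 1$, $\tilde U_{i+1}$ scales like $\tilde K c' (\ell_{i+1}/2^{i+1})^{1/2} \approx \tilde K c'\, c^{-(2\kappa-1)/(2\kappa)}$, which is driven below $r_{i+1}/2 \approx 1/2$ once $c$ is large, since $c^{-(2\kappa-1)/(2\kappa)} \to 0$. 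This is the same phenomenon you describe (the right-hand side degrades only polynomially in $c^{-1}$ while the left-hand side is $\Theta(1)$), so the argument is rescued, but the specific inequality you cite is off. The paper avoids having to worry about this boundary level at all by threading $\min\{\cdot,1\}$ through the estimate and letting a two-case observation (either the additive $\ln(\cdots)/2^i$ term is dominated by the main term, or the whole expression already exceeds $1$ and the $\min$ saturates) cover all $i$ uniformly.
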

\begin{proof}
Define
$c = \left(24 \tilde{K} c^{\prime} \sqrt{\mu}\right)^{\frac{2\kappa}{2\kappa - 1}}$,
$c^* = \max\left\{\init^*, 8 \vc \left(\frac{\mu c^{1/\kappa}}{r_{(1-\gamma)/6}} \right)^{\frac{1}{2\kappa-1}} \log_{2}\left(\frac{4 \mu c^{1/\kappa}}{r_{(1-\gamma)/6}} \right)\right\}$,
and suppose $n \geq c^* \ln(1/\conf)$.
We now proceed by induction.  As the right side equals $\C$ for $i=0$, the claimed inclusions are certainly true for $\hat{V}_0 = \C$, which serves as our base case.
Now suppose some $i \in \{0,1,\ldots,\hat{i}_{\bdim_f}\}$ satisfies
\begin{equation}
\label{eqn:robust-good-labels-ih}
\truV_{2^{i}} \subseteq \hat{V}_{i} \subseteq \C\left( c  \left(\frac{d i + \ln(1/\conf)}{2^{i}}\right)^{\frac{\kappa}{2\kappa - 1}}\right).
\end{equation}
In particular, Condition~\ref{con:tsybakov} implies
\begin{equation}
\label{eqn:robust-good-labels-diam}
\diam(\hat{V}_i) \leq \diam\left( \C\left( c  \left(\frac{d i + \ln(1/\conf)}{2^{i}}\right)^{\frac{\kappa}{2\kappa - 1}}\right) \right) \leq \mu c^{\frac{1}{\kappa}} \left(\frac{d i + \ln(1/\conf)}{2^{i}}\right)^{\frac{1}{2\kappa - 1}}.
\end{equation}
If $i < \hat{i}_{\bdim_f}$, then let $k$ be the integer for which $\hat{i}_{k-1} \leq i < \hat{i}_{k}$,
and otherwise let $k = \bdim_f$.
Note that we certainly have $\hat{i}_1 \geq \left\lfloor \log_{2} (n / 2)\right\rfloor$,
since $m = \lfloor n/2 \rfloor \geq 2^{\lfloor \log_{2}(n/2) \rfloor}$ is obtained while $k=1$.
Therefore, if $k>1$,
\begin{equation*}
\frac{d i + \ln(1/\conf)}{2^{i}} \leq \frac{4 d \log_{2}(n) + 4 \ln(1/\conf)}{n},
\end{equation*}
so that \eqref{eqn:robust-good-labels-diam} implies
\begin{equation*}
\diam\left( \hat{V}_i \right) \leq \mu c^{\frac{1}{\kappa}} \left( \frac{ 4 d \log_{2}(n) + 4\ln(1/\conf)}{n} \right)^{\frac{1}{2\kappa-1}}.
\end{equation*}
By our choice of $c^*$, the right side is at most $r_{(1-\gamma)/6}$.
Therefore, since Lemma~\ref{lem:Vshat-to-Boundaries} implies $f \in \cl\left(\truV_{2^{i}}\right)$ on $H_n^{(i)}$,
we have $\hat{V}_i \subseteq \Ball\left(f,r_{(1-\gamma)/6}\right)$ when $k > 1$.
Combined with \eqref{eqn:robust-good-labels-ih}, we have that
$\truV_{2^{i}} \subseteq \hat{V}_{i}$,
and either $k=1$, or
$\hat{V}_{i} \subseteq \Ball(f,r_{(1-\gamma)/6})$ and $4 m > 4\lfloor n / 2 \rfloor \geq n$.
Now consider any $m$ with $2^{i}+1 \leq m \leq \min\left\{2^{i+1}, \hat{m}^{(\bdim_f)}\right\}$,
and for the purpose of induction suppose $\truV_{m-1} \subseteq V_{i+1}$ upon reaching Step 5 for that value of $m$ in \RobustShattering.
Since $V_{i+1} \subseteq \hat{V}_{i}$ and $n \geq \init^{*}$,
Lemma~\ref{lem:kstar-good-labels} (with $\ell = m-1$)
implies that on $H^{(i)}_{n} \cap H^{(ii)}_{n}$,
\begin{equation}
\label{eqn:robust-tsybakov-good-labels}
\hat{\Delta}_{4m}^{(k)}\left(X_m, W_2, V_{i+1}\right) < \gamma \implies \hat{\Gamma}_{4m}^{(k)}\left(X_m, -f(X_m), W_2, V_{i+1}\right) < \hat{\Gamma}_{4m}^{(k)}\left(X_m, f(X_m), W_2, V_{i+1}\right),
\end{equation}
so that after Step 8 we have $\truV_{m} \subseteq V_{i+1}$.
Since \eqref{eqn:robust-good-labels-ih} implies that the $\truV_{m-1} \subseteq V_{i+1}$ condition holds
if \RobustShattering~reaches Step 5 with $m = 2^{i}+1$ (at which time $V_{i+1} = \hat{V}_{i}$),
we have by induction that on $H^{(i)}_{n} \cap H^{(ii)}_{n}$,
$\truV_{m} \subseteq V_{i+1}$ upon reaching Step 9 with $m = \min\left\{2^{i+1}, \hat{m}^{(\bdim_f)}\right\}$.
This establishes the final claim of the lemma, given that the first claim holds.
For the remainder of this inductive proof, suppose $i < \hat{i}_{\bdim_f}$.
Since Step 8 enforces that, upon reaching Step 9 with $m = 2^{i+1}$,
every $h_1,h_2 \in V_{i+1}$ have
$\er_{\hat{\L}_{i+1}}(h_1) - \er_{\hat{\L}_{i+1}}(h_2) = \er_{\truL_{i+1}}(h_1) - \er_{\truL_{i+1}}(h_2)$,
on $J_n(\conf) \cap H_{n}^{(i)} \cap H_{n}^{(ii)}$ we have
\begin{align}
\hat{V}_{i+1} & \subseteq \left\{ h \in \hat{V}_i : \er_{\truL_{i+1}}(h) - \min\limits_{h^{\prime} \in \truV_{2^{i+1}}} \er_{\truL_{i+1}}(h^{\prime}) \leq \hat{U}_{i+1}\left(\hat{V}_{i},\conf\right)\right\} \notag
\\ & \subseteq \left\{ h \in \hat{V}_i : \er_{\truL_{i+1}}(h) - \er_{\truL_{i+1}}(f) \leq \hat{U}_{i+1}\left(\hat{V}_{i},\conf\right)\right\} \notag
\\ & \subseteq \hat{V}_i \cap \C\left( 2\hat{U}_{i+1}\left(\hat{V}_{i}, \conf\right)\right) 
 \subseteq \C\left( 2\tilde{U}_{i+1}\left(\hat{V}_{i}, \conf\right)\right), \label{eqn:robust-Vip-bound}
\end{align}
where the second line follows from Lemma~\ref{lem:Vshat-to-Boundaries} and the last two inclusions follow from Lemma~\ref{lem:koltchinskii}.
Focusing on \eqref{eqn:robust-Vip-bound},
combining \eqref{eqn:robust-good-labels-diam} with \eqref{eqn:modulus-bound} (and the fact that $\phi_{i+1}(\hat{V}_i) \leq 2$), we can bound $\tilde{U}_{i+1}\left(\hat{V}_i,\conf\right)$ as follows.
\begin{align*}
\sqrt{\diam(\hat{V}_i) \frac{\ln(32 (i+1)^2/\conf)}{2^{i}}} & \leq \sqrt{\mu} c^{\frac{1}{2\kappa}} \left(\frac{d i + \ln(1/\conf)}{2^{i}}\right)^{\frac{1}{4 \kappa - 2}} \left(\frac{\ln(32 (i+1)^2/\conf)}{2^{i}}\right)^{\frac{1}{2}} \\
& \leq \sqrt{\mu} c^{\frac{1}{2\kappa}} \left(\frac{2 d i + 2\ln(1/\conf)}{2^{i+1}}\right)^{\frac{1}{4 \kappa - 2}} \left(\frac{ 8(i+1) + 2\ln(1 / \conf)}{2^{i+1}}\right)^{\frac{1}{2}} \\
&                                                                                  \leq 4 \sqrt{\mu} c^{\frac{1}{2\kappa}} \left(\frac{d (i+1) + \ln(1/\conf)}{2^{i+1}}\right)^{\frac{\kappa}{2 \kappa - 1}},
\end{align*}
\begin{align*}
\phi_{i+1}(\hat{V}_i) &
\leq c^{\prime} \sqrt{\mu} c^{\frac{1}{2\kappa}} \left( \frac{d i + \ln(1/\conf)}{2^{i}} \right)^{\frac{1}{4\kappa - 2}} \left(\frac{ d (i+2) }{2^{i}}\right)^{\frac{1}{2}} \\
&                                \leq 4 c^{\prime} \sqrt{\mu} c^{\frac{1}{2\kappa}} \left( \frac{d (i+1) + \ln(1/\conf)}{2^{i+1}} \right)^{\frac{\kappa}{2\kappa - 1}},
\end{align*}
and thus
\begin{align*}
\tilde{U}_{i+1}(\hat{V}_i,\conf) & \leq \min\left\{8 \tilde{K} c^{\prime} \sqrt{\mu} c^{\frac{1}{2\kappa}} \left( \frac{d (i+1) + \ln(1/\conf)}{2^{i+1}} \right)^{\frac{\kappa}{2\kappa-1}} + \tilde{K} \frac{\ln(32 (i+1)^2 / \conf)}{2^{i}}, 1\right\}
\\ & \leq 12 \tilde{K} c^{\prime} \sqrt{\mu} c^{\frac{1}{2\kappa}} \left( \frac{d (i+1) + \ln(1/\conf)}{2^{i+1}} \right)^{\frac{\kappa}{2\kappa-1}}
= (c / 2) \left( \frac{d (i+1) + \ln(1/\conf)}{2^{i+1}} \right)^{\frac{\kappa}{2\kappa-1}}.
\end{align*}
Combining this with \eqref{eqn:robust-Vip-bound} now implies
\begin{equation*}
\hat{V}_{i+1} \subseteq \C\left( c \left( \frac{d (i+1) + \ln(1/\conf)}{2^{i+1}} \right)^{\frac{\kappa}{2\kappa-1}}\right).
\end{equation*}

To complete the inductive proof, it remains only to show $\truV_{2^{i+1}} \subseteq \hat{V}_{i+1}$.
Toward this end, recall we have shown above that on $H^{(i)}_{n} \cap H^{(ii)}_{n}$,
$\truV_{2^{i+1}} \subseteq V_{i+1}$ upon reaching Step 9 with $m = 2^{i+1}$,
and that every $h_1,h_2 \in V_{i+1}$ at this point have
$\er_{\hat{\L}_{i+1}}(h_1) - \er_{\hat{\L}_{i+1}}(h_2) = \er_{\truL_{i+1}}(h_1) - \er_{\truL_{i+1}}(h_2)$.
Consider any $h \in \truV_{2^{i+1}}$, and note that any other $g \in \truV_{2^{i+1}}$ has $\er_{\truL_{i+1}}(g) = \er_{\truL_{i+1}}(h)$.
Thus, on $H^{(i)}_{n} \cap H^{(ii)}_{n}$,
\begin{multline}
\er_{\hat{\L}_{i+1}}(h) - \min\limits_{h^{\prime} \in V_{i+1}} \er_{\hat{\L}_{i+1}}(h^{\prime})
  = \er_{\truL_{i+1}}(h) - \min\limits_{h^{\prime} \in V_{i+1}} \er_{\truL_{i+1}}(h^{\prime})
\\  \leq \er_{\truL_{i+1}}(h) - \min\limits_{h^{\prime} \in \hat{V}_{i}} \er_{\truL_{i+1}}(h^{\prime})
= \inf\limits_{g \in \truV_{2^{i+1}}} \er_{\truL_{i+1}}(g) - \min\limits_{h^{\prime} \in \hat{V}_{i}} \er_{\truL_{i+1}}(h^{\prime}).
\label{eqn:robust-active-inf-er-core}
\end{multline}
Lemma~\ref{lem:koltchinskii} and \eqref{eqn:robust-good-labels-ih} imply that on $J_n(\conf) \cap H^{(i)}_{n} \cap H^{(ii)}_{n}$,
the last expression in \eqref{eqn:robust-active-inf-er-core} is at most
$\inf_{g \in \truV_{2^{i+1}}} \er(g) - \er(f) + \hat{U}_{i+1}( \hat{V}_{i}, \conf)$,
and Lemma~\ref{lem:Vshat-to-Boundaries} implies $f \in \cl\left( \truV_{2^{i+1}} \right)$ on $H_{n}^{(i)}$,
so that $\inf_{g \in \truV_{2^{i+1}}} \er(g) = \er(f)$.
We therefore have
\begin{equation*}
\er_{\hat{\L}_{i+1}}(h) - \min\limits_{h^{\prime} \in V_{i+1}} \er_{\hat{\L}_{i+1}}(h^{\prime}) \leq \hat{U}_{i+1}( \hat{V}_{i}, \conf ),
\end{equation*}
so that $h \in \hat{V}_{i+1}$ as well.  Since this holds for any $h \in \truV_{2^{i+1}}$, we have $\truV_{2^{i+1}} \subseteq \hat{V}_{i+1}$.
The lemma now follows by the principle of induction.
\end{proof}

\begin{lemma}
\label{lem:robust-label-complexity}
There exist $(\C,\PXY,\gamma)$-dependent constants $c_1^*, c_2^* \in [1,\infty)$
such that, for any $\eps,\conf \in (0,e^{-3})$ and integer
\begin{equation*}
n \geq c_1^* + c_2^* \hdc_f \left( \eps^{\frac{1}{\kappa}}\right) \eps^{\frac{2}{\kappa}-2} \log_{2}^{2} \left(\frac{1}{\eps\conf}\right),
\end{equation*}
when running \RobustShattering~with label budget $n$ and confidence parameter $\conf$,
on an event $J_n^{*}(\eps,\conf)$ with $\P(J_n^{*}(\eps,\conf)) \geq 1-\conf$, we have
$\hat{V}_{\hat{i}_{\bdim_f}} \subseteq \C(\eps)$.
\thmend
\end{lemma}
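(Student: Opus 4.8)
The plan is to mimic the proof of Lemma~\ref{lem:sequential-exponential}, with the role of the shrinking version space played by the sets $\hat{V}_i$ produced by \RobustShattering, whose contraction rate is supplied by Lemma~\ref{lem:robust-good-labels}. Write $r_i = c\left((d i + \ln(1/\conf))/2^i\right)^{\frac{\kappa}{2\kappa-1}}$, with $c$ as in Lemma~\ref{lem:robust-good-labels}, and let $\iota^{*}$ be the least $i \geq 2$ with $r_i \leq \eps$; inverting this identity gives $2^{\iota^{*}} = \Theta\!\left((d\iota^{*}+\ln(1/\conf))\,\eps^{\frac{1}{\kappa}-2}\right)$ and $\iota^{*} = O(\log_2(d/(\eps\conf)))$, so that (for suitable $c_1^{*},c_2^{*}$) the hypothesis on $n$ forces $\iota^{*} \leq n + \log_2\bdim_f$, i.e.\ the truncation ``$m \leq \bdim_f 2^n$'' in round $\bdim_f$ is not binding before index $\iota^{*}$ is reached. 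Since $r_i$ is non-increasing in $i$ past a constant, Lemma~\ref{lem:robust-good-labels} shows that on its good event, once $\hat{i}_{\bdim_f} \geq \iota^{*}$ we have $\hat{V}_{\hat{i}_{\bdim_f}} \subseteq \C(r_{\hat{i}_{\bdim_f}}) \subseteq \C(r_{\iota^{*}}) \subseteq \C(\eps)$. Moreover, if already $\hat{i}_{\bdim_f-1} \geq \iota^{*}$ we are done, since the version space only shrinks during round $\bdim_f$. Hence it suffices to show $\hat{i}_{\bdim_f}\geq \iota^{*}$ with probability at least $1-\conf$.

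To that end I would condition on $J_n(\conf)\cap H_n^{(i)}\cap H_n^{(ii)}$ (the event of Lemma~\ref{lem:robust-good-labels}) intersected with the dyadic-scale events $\bigcap_{i\leq \iota^{*}} H_{2^i}(\conf/2\iota^{*})$ (used exactly as $\tilde{H}^{(3)}$ in Lemma~\ref{lem:sequential-exponential}) and a family of per-block Chernoff events described next. The core estimate bounds, for each index $i$ with $\hat{i}_{\bdim_f-1}\leq i < \iota^{*}$ and each $m\in(2^i,2^{i+1}]$, the probability that $X_m$'s label is requested in Steps~5--6 of round $\bdim_f$. By Lemma~\ref{lem:robust-good-labels}, throughout block $i$ we have $\truV_{m-1}\subseteq V_{i_k+1}\subseteq\hat{V}_i\subseteq\C(r_i)$, and since $f\in\bigcap_{\eps'>0}\cl(\C(\eps'))$, Condition~\ref{con:tsybakov} gives $\C(r_i)\subseteq\Ball(f,\mu r_i^{1/\kappa})$. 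So, by the monotonicity estimate of Lemma~\ref{lem:monotonic-hat-delta} (as in Lemma~\ref{lem:sequential-t-vs-T}), $\hat{\Delta}_{4m}^{(\bdim_f)}(X_m,W_2,V_{i_k+1}) \leq \tfrac32\hat{\Delta}_{4m}^{(\bdim_f)}(X_m,W_2,\Ball(f,\mu r_i^{1/\kappa}))$; then Markov's inequality together with Lemma~\ref{lem:Vshat-to-Boundaries} (to lower-bound the conditioning event $\S^{\bdim_f-1}(\cdot)$ by $\dprob$), the definition of $\hdc_f$ (noting $\mu r_i^{1/\kappa} > \eps^{1/\kappa}$ for $i<\iota^{*}$, so that $\Px^{\bdim_f}(\S^{\bdim_f}(\Ball(f,\mu r_i^{1/\kappa})))\leq \hdc_f(\eps^{1/\kappa})\,\mu r_i^{1/\kappa}$), and an empirical-deviation bound of Lemma~\ref{lem:empirical-inner}-type (using $M_{4m}^{(\bdim_f)}(\Ball(f,\mu r_i^{1/\kappa}))\geq\tilde{M}(4m)$ from Lemma~\ref{lem:basic-Mk-lower-bound}), show the request probability is $O\!\left(\hdc_f(\eps^{1/\kappa})\,\mu r_i^{1/\kappa}/(\gamma\dprob)\right)$ plus an exponentially small term. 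A Chernoff bound over the $2^i$ points of block $i$ then caps the actual number of requests there by $O\!\left(2^i\,\hdc_f(\eps^{1/\kappa})\,\mu r_i^{1/\kappa}/\dprob\right)$, on an event of failure probability $\leq\conf/2\iota^{*}$; a union bound over the at most $\iota^{*}$ blocks costs a further $\conf/2$, and the events of Lemma~\ref{lem:robust-good-labels} cost $\conf/2$.

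Summing the block bounds, $\sum_{i<\iota^{*}} 2^i r_i^{1/\kappa} = c^{1/\kappa}\sum_{i<\iota^{*}}(di+\ln(1/\conf))^{\frac{1}{2\kappa-1}} 2^{i\frac{2\kappa-2}{2\kappa-1}}$ is geometrically increasing in $i$, hence dominated up to a constant by its last term; substituting $2^{\iota^{*}}=\Theta((d\iota^{*}+\ln(1/\conf))\eps^{\frac1\kappa-2})$ and using the cancellation $\left(\tfrac1\kappa-2\right)\tfrac{2\kappa-2}{2\kappa-1}=\tfrac2\kappa-2$ yields $\sum_{i<\iota^{*}}2^i r_i^{1/\kappa} = O\!\left((d\iota^{*}+\ln(1/\conf))\,\eps^{\frac2\kappa-2}\right)$. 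Multiplying by $\hdc_f(\eps^{1/\kappa})\mu/\dprob$ and bounding $d\iota^{*}+\ln(1/\conf) = O(d\log_2(1/(\eps\conf)))$, the total number of labels requested in round $\bdim_f$ before $m$ reaches $2^{\iota^{*}}$ is $O\!\left(\hdc_f(\eps^{1/\kappa})\,\eps^{\frac2\kappa-2}\,\log_2(1/(\eps\conf))\right)$, which is strictly less than the round's budget $\lfloor 2^{-\bdim_f}n\rfloor$ once $c_1^{*},c_2^{*}$ are large enough (the second $\log$ factor in the hypothesis absorbing $\iota^{*}$, the Chernoff slack, and the $\bdim_f$- and $d$-dependent constants). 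Thus the ``$t<\lfloor 2^{-\bdim_f}n\rfloor$'' stopping condition does not fire before $i_{\bdim_f}$ reaches $\iota^{*}$, so $\hat{i}_{\bdim_f}\geq\iota^{*}$, and collecting failure probabilities gives total failure $\leq\conf$.

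The main obstacle is the same as in Lemma~\ref{lem:sequential-exponential}: closing the dyadic summation, i.e.\ verifying that the geometric sum of block-wise request counts matches the assumed budget up to the stated polylogarithmic factor, which rests on the exponent identity $\left(\tfrac1\kappa-2\right)\tfrac{2\kappa-2}{2\kappa-1}=\tfrac2\kappa-2$ and on careful tracking of the $\log$ factors. A secondary technical point, absent in the realizable case, is replacing the VC radius $\phi(2^i;\cdot)$ by the Tsybakov radius $\mu r_i^{1/\kappa}$ and re-deriving the estimator deviation bounds (analogues of Lemmas~\ref{lem:basic-bar-delta-bound}, \ref{lem:basic-empirical-bound}, \ref{lem:Mball-core}, \ref{lem:empirical-inner}) with a generic ball $\Ball(f,r')$ in place of $\truV_\ell$; these are routine, since those lemmas used $\truV_\ell$ only through the containments $\truV_\ell\subseteq\H\subseteq\Ball(f,r)$ and the lower bound $\Px^{\bdim_f-1}(\S^{\bdim_f-1}(\cdot))\geq\dprob$, both of which persist here via Lemma~\ref{lem:robust-good-labels} and Lemma~\ref{lem:Vshat-to-Boundaries}.
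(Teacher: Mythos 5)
Your plan follows the same architecture as the paper's proof: decompose round $\bdim_f$ into dyadic blocks, bound the conditional (given $W_2$) request probability in each block via Markov's inequality, the lower bound on $M^{(\bdim_f)}_{4m}$, and the definition of $\hdc_f$, concentrate within each block by Chernoff, union-bound across blocks, and verify the total count stays below the round's budget $\lfloor 2^{-\bdim_f}n\rfloor$. Two execution details need attention before this closes.

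First, your per-block Chernoff claim --- that the request count in block $i$ is $O\!\left(2^i\hdc_f(\eps^{1/\kappa})\,\mu r_i^{1/\kappa}/\dprob\right)$ with failure probability $\leq\conf/(2\iota^{*})$ --- does not follow from a multiplicative Chernoff bound when the block's conditional expectation $\E[|\Q_{i+1}|\mid W_2]$ is small, which happens in the earliest active blocks since $2^i\tilde{r}_i$ grows geometrically in $i$. A multiplicative bound $\P(B>2\E B)\leq e^{-\E B/3}$ needs $\E B=\Omega(\log(\iota^{*}/\conf))$ to deliver the failure probability you want. The paper circumvents this by capping \emph{every} block by the same $i$-independent quantity $\tilde{\Q}$ (chosen so that each block's expectation is at most $\tilde{\Q}/2$, the maximum being attained near $\tilde{i}$); then $\P(|\Q_{i+1}|>\tilde{\Q})\leq e^{-\tilde{\Q}/6}$ is small because $\tilde{\Q}$ itself is large. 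Summing this uniform cap over $\tilde{i}=O(\log_2(1/(\eps\conf)))$ blocks is exactly what produces the second power of $\log_2(1/(\eps\conf))$ in the budget --- this is where your phrase ``second log factor absorbing the Chernoff slack'' actually lives, and it needs to be carried out rather than alluded to, because your geometric-sum bound $\sum_{i<\iota^*}2^i\mu r_i^{1/\kappa}=O\!\left((d\iota^*+\ln(1/\conf))\eps^{2/\kappa-2}\right)$ is only a bound on the total \emph{expectation}, not on the realized count. Second, you tacitly invoke Lemma~\ref{lem:monotonic-hat-delta} for every block $i\geq\hat{i}_{\bdim_f-1}$, but that lemma requires the outer set to lie in $\Ball(f,r_{1/6})$, hence $\tilde{r}_i$ small enough, which can fail for small $i$. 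The paper introduces $\check{i}=\min\{i:\sup_{j\geq i}\tilde{r}_j<r_{(1-\gamma)/6}\}$ and charges blocks below $\check{i}$ the trivial cost of at most $2^{\check{i}}$ requests, absorbed into $c_1^{*}$; you need this initial crude estimate as well.

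Beyond these two fixes your plan is sound, and where you deviate from the paper the deviations are harmless but unnecessary. The dyadic-scale events $\bigcap_{i\leq\iota^{*}}H_{2^i}(\conf/2\iota^{*})$ you condition on are redundant: the containment $\hat{V}_i\subseteq\Ball(f,\tilde{r}_i)$ comes from Lemma~\ref{lem:robust-good-labels} together with Condition~\ref{con:tsybakov}, not from the realizable-case VC shells of Lemma~\ref{lem:VinB}. Likewise you do not need to re-derive analogues of Lemmas~\ref{lem:basic-bar-delta-bound}, \ref{lem:basic-empirical-bound}, \ref{lem:Mball-core}, \ref{lem:empirical-inner} for a generic ball: the paper simply applies Markov and a single Hoeffding bound directly to the empirical average defining $\hat{\Delta}^{(\bdim_f)}_{4m}(\cdot,W_2,\Ball(f,\tilde{r}_i))$, after lower-bounding $M^{(\bdim_f)}_{4m}$ via Lemma~\ref{lem:basic-Mk-lower-bound}, which is considerably lighter.
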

\begin{proof}
Define
\begin{equation*}
c_1^* = \max\left\{2^{\bdim_f+5} \left(\frac{\mu c^{1/\kappa}}{r_{(1-\gamma)/6}}\right)^{\!\!\!2\kappa-1} \!\!\!\!\!\!d\log_{2} \frac{d \mu c^{1/\kappa}}{r_{(1-\gamma)/6}}, \frac{2}{\dprob^{1/3}}\ln \left(8c^{(i)}\right), \frac{120}{\dprob^{1/3}}\ln \left(8 c^{(ii)}\right)\right\}
\end{equation*}
and
\begin{equation*}
c_2^* = \max\left\{c^*, 2^{\bdim_f+5} \cdot \left(\frac{\mu c^{1/\kappa}}{r_{(1-\gamma)/6}}\right)^{2\kappa-1}, 2^{\bdim_f+15} \cdot \frac{\mu c^2 d}{\gamma \dprob} \log_{2}^{2}(4dc) \right\}.
\end{equation*}
Fix any $\eps,\conf \in (0,e^{-3})$ and integer
$n \geq c_1^* + c_2^* \hdc_{f}\left(\eps^{\frac{1}{\kappa}}\right) \eps^{\frac{2}{\kappa}-2} \log_{2}^{2}\left(\frac{1}{\eps\conf}\right)$.

For each $i \in \{0,1,\ldots\}$, let
$\tilde{r}_i = \mu c^{\frac{1}{\kappa}} \left(\frac{d i + \ln(1/\conf)}{2^{i}}\right)^{\frac{1}{2\kappa-1}}$.
Also define
\begin{equation*}
\tilde{i} = 
\left\lceil \left(2-\frac{1}{\kappa}\right)\log_{2}\frac{c}{\eps} + \log_{2}\left[8 d \log_{2}\frac{2d c}{\eps\conf}\right]\right\rceil.
\end{equation*}
and let
$\check{i} = \min\left\{ i \in \nats : \sup_{j \geq i} \tilde{r}_j < r_{(1-\gamma)/6}\right\}$.
For any $i \in \left\{ \check{i},\ldots, \hat{i}_{\bdim_{f}}\right\}$,
let
\begin{equation*}
\Q_{i+1} = \left\{ m \in \left\{2^{i} +1, \ldots, 2^{i+1}\right\} : \hat{\Delta}_{4m}^{(\bdim_f)}\left(X_m, W_2, \Ball\left(f,\tilde{r}_i\right)\right) \geq 2\gamma/3\right\}.
\end{equation*}
Also define
\begin{equation*}
\tilde{\Q} = 
\frac{96}{\gamma \dprob} \hdc_{f}\left(\eps^{\frac{1}{\kappa}}\right) \cdot 2 \mu c^{2} \cdot \left(8 d \log_{2} \frac{2 d c}{\eps \conf} \right) \cdot \eps^{\frac{2}{\kappa}-2}.
\end{equation*}
By Lemma~\ref{lem:robust-good-labels} and Condition~\ref{con:tsybakov}, on $J_n(\conf) \cap H_n^{(i)} \cap H_n^{(ii)}$, if $i \leq \hat{i}_{\bdim_{f}}$,
\begin{equation}
\label{eqn:robust-ball-inclusion}
\hat{V}_i
\subseteq \C\left( c \left(\frac{d i + \ln(1/\conf)}{2^{i}}\right)^{\frac{\kappa}{2\kappa-1}}\right)
\subseteq \Ball\left(f, \tilde{r}_i\right).
\end{equation}
Lemma~\ref{lem:robust-good-labels} also implies that, on $J_n(\conf) \cap H_n^{(i)} \cap H_n^{(ii)}$,
for $i$ with $\hat{i}_{\bdim_{f}-1} \leq i \leq \hat{i}_{\bdim_{f}}$,
all of the sets $V_{i+1}$ obtained in \RobustShattering~while $k = \bdim_f$ and $m \in \left\{2^{i}+1,\ldots,2^{i+1}\right\}$ satisfy
$\truV_{2^{i+1}} \subseteq V_{i+1} \subseteq \hat{V}_{i}$.
Recall that $\hat{i}_{1} \geq \lfloor \log_{2}(n/2) \rfloor$, so that we have either $\bdim_f = 1$ or else every $m \in \left\{2^{i}+1,\ldots,2^{i+1}\right\}$ has $4m > n$.
Also recall that Lemma~\ref{lem:monotonic-hat-delta} implies that when the above conditions are satisfied,
and $i \geq \check{i}$, 
on $H^{\prime} \cap G_{n}^{(i)}$, $\hat{\Delta}_{4m}^{(\bdim_{f})}\left(X_{m},W_2,V_{i+1}\right) \leq (3/2)\hat{\Delta}_{4m}^{(\bdim_{f})}\left(X_m,W_2,\Ball\left(f,\tilde{r}_i\right)\right)$,
so that $|\Q_{i+1}|$ upper bounds the number of $m \in \left\{2^{i}+1,\ldots,2^{i+1}\right\}$ for which \RobustShattering~requests the label $Y_{m}$
in Step 6 of the $k=\bdim_{f}$ round.  Thus, on $J_n(\conf) \cap H_n^{(i)} \cap H_n^{(ii)}$,
$2^{\check{i}} + \sum_{i=\max\left\{\check{i},\hat{i}_{\bdim_f-1}\right\}}^{\hat{i}_{\bdim_{f}}} |\Q_{i+1}|$ upper bounds
the total number of label requests by \RobustShattering~while $k=\bdim_{f}$;
therefore, by the constraint in Step 3, we know that either this quantity is at least as big as $\left\lfloor 2^{-\bdim_{f}} n \right\rfloor$,
or else we have $2^{\hat{i}_{\bdim_{f}}+1} > \bdim_{f} \cdot 2^{n}$.
In particular, on this event, if we can show that
\begin{equation}
\label{eqn:i-tilde-conditions}
2^{\check{i}} + \sum\limits_{i=\max\left\{\check{i},\hat{i}_{\bdim_f-1}\right\}}^{\min\left\{\hat{i}_{\bdim_{f}},\tilde{i}\right\}} |\Q_{i+1}| < \left\lfloor 2^{-\bdim_f} n \right\rfloor \text{ and } 2^{\tilde{i}+1} \leq \bdim_{f} \cdot 2^n,
\end{equation}
then it must be true that $\tilde{i} < \hat{i}_{\bdim_{f}}$.  Next, we will focus on establishing this fact.

Consider any $i \in \left\{\max\left\{\check{i},\hat{i}_{\bdim_f-1}\right\},\ldots,\min\left\{\hat{i}_{\bdim_{f}}, \tilde{i}\right\}\right\}$ and any $m \in \left\{2^{i}+1,\ldots,2^{i+1}\right\}$.
If $\bdim_f = 1$, then
\begin{equation*}
\P\left( \hat{\Delta}_{4m}^{(\bdim_f)}\left(X_m,W_2,\Ball\left(f,\tilde{r}_i\right)\right) \geq 2\gamma/3 \Big| W_2\right)
= \Px^{\bdim_f}\left( \S^{\bdim_f}\left( \Ball\left(f,\tilde{r}_i\right)\right)\right).
\end{equation*}
Otherwise, if $\bdim_f > 1$, then by Markov's inequality and the definition of $\hat{\Delta}_{4m}^{(\bdim_f)}\left(\cdot,\cdot,\cdot\right)$ from \eqref{eqn:hat-delta-defn},

\begin{align*}
& \P\left( \hat{\Delta}_{4m}^{(\bdim_f)} \left(X_m,W_2,\Ball\left(f,\tilde{r}_i\right)\right) \geq 2\gamma/3 \Big| W_2\right)
 \leq \frac{3}{2\gamma} \E\left[ \hat{\Delta}_{4m}^{(\bdim_f)}\left(X_m, W_2, \Ball\left(f,\tilde{r}_i\right)\right) \Big| W_2 \right]
\\ & = \frac{3}{2\gamma} \frac{1}{M_{4m}^{(\bdim_f)}\left(\Ball\left(f,\tilde{r}_i\right)\right)} \sum_{s=1}^{\Msize{(4m)}} \P\left(S_s^{(\bdim_f)} \cup \left\{X_m\right\} \in \S^{\bdim_f}\left(\Ball\left(f,\tilde{r}_i\right)\right) \Big| S_{s}^{(\bdim_f)} \right).
\end{align*}
By Lemma~\ref{lem:basic-Mk-lower-bound}, Lemma~\ref{lem:robust-good-labels}, and \eqref{eqn:robust-ball-inclusion},
on $J_n(\conf) \cap H_n^{(i)} \cap H_n^{(ii)}$, this is at most 
\begin{align*}
& \frac{3}{\dprob \gamma} \frac{1}{\Msize{(4m)}} \sum_{s=1}^{\Msize{(4m)}} \P\left(S_s^{(\bdim_f)} \cup \left\{X_m\right\} \in \S^{\bdim_f}\left(\Ball\left(f,\tilde{r}_i\right)\right) \Big| S_{s}^{(\bdim_f)} \right)
\\ & \leq \frac{24}{\dprob \gamma} \frac{1}{4^3 2^{3i+3}} \sum_{s=1}^{4^3 2^{3i+3}} \P\left(S_s^{(\bdim_f)} \cup \left\{X_m\right\} \in \S^{\bdim_f}\left(\Ball\left(f,\tilde{r}_i\right)\right) \Big| S_{s}^{(\bdim_f)} \right). 
\end{align*}
Note that this value is invariant to the choice of $m \in \left\{2^{i}+1,\ldots,2^{i+1}\right\}$.
By Hoeffding's inequality, on an event $J_n^*(i)$ of probability $\P\left(J_n^*(i)\right) \geq 1-\conf / (16 i^2)$, this is at most
\begin{equation}
\label{eqn:robust-label-complexity-Hoeffding}
\frac{24}{\dprob \gamma} \left( \sqrt{\frac{\ln(4 i / \conf)}{4^3 2^{3i+3}}}+ \Px^{\bdim_f}\left(\S^{\bdim_f}\left(\Ball\left(f,\tilde{r}_i\right)\right)\right) \right).
\end{equation}
Since $i \geq \hat{i}_{1} > \log_{2}(n/4)$ and $n \geq \ln(1/\conf)$, we have
\begin{equation*}
\sqrt{\frac{\ln(4 i / \conf)}{4^3 2^{3i+3}}}
\leq 2^{-i} \sqrt{\frac{\ln( 4 \log_{2}(n/4) / \conf)}{128 n}}
\leq 2^{-i} \sqrt{\frac{\ln( n / \conf)}{128 n}}
\leq 2^{-i}.
\end{equation*}
Thus, \eqref{eqn:robust-label-complexity-Hoeffding} is at most
\begin{equation*}
\frac{24}{\dprob \gamma} \left( 2^{-i} + \Px^{\bdim_f}\left(\S^{\bdim_f}\left(\Ball\left(f,\tilde{r}_i\right)\right)\right) \right).
\end{equation*}
In either case ($\bdim_f = 1$ or $\bdim_f > 1$), by definition of $\hdc_f\left(\eps^{\frac{1}{\kappa}}\right)$,
on $J_n(\conf) \cap H_n^{(i)} \cap H_n^{(ii)} \cap J_n^*(i)$,
$\forall m \in \left\{2^{i}+1,\ldots,2^{i+1}\right\}$
we have
\begin{equation}
\label{eqn:robust-label-complexity-bernoulli-mean-bound}
\P\left( \hat{\Delta}_{4m}^{(\bdim_f)}\left( X_m, W_2, \Ball\left(f,\tilde{r}_i\right)\right) \geq 2\gamma/3 \Big| W_2\right)
\leq \frac{24}{\dprob \gamma} \left( 2^{-i} + \hdc_f\left(\eps^{\frac{1}{\kappa}}\right) \cdot \max\left\{\tilde{r}_i, \eps^{\frac{1}{\kappa}}\right\}\right).
\end{equation}
Furthermore, the
$\ind_{[2\gamma/3,\infty)}\left(\hat{\Delta}_{4m}^{(\bdim_f)}\left(X_m,W_2,\Ball\left(f,\tilde{r}_i\right)\right)\right)$
indicators are conditionally independent given $W_2$,
so that we may bound $\P\left(|Q_{i+1}| > \tilde{Q} \Big| W_2\right)$
via a Chernoff bound.
Toward this end, note that
on $J_n(\conf) \cap H_n^{(i)} \cap H_n^{(ii)} \cap J_n^*(i)$,
\eqref{eqn:robust-label-complexity-bernoulli-mean-bound} implies
\begin{align}
& \E\left[ \left|\Q_{i+1}\right| \big| W_2 \right]
 = \sum_{m=2^{i}+1}^{2^{i+1}} \P\left( \hat{\Delta}_{4m}^{(\bdim_f)}\left(X_m, W_2, \Ball\left(f,\tilde{r}_i\right)\right) \geq 2\gamma/3 \Big| W_2 \right) \notag
\\ & \leq 2^{i} \cdot \frac{24}{\dprob \gamma} \left( 2^{-i} +  \hdc_{f}\left(\eps^{\frac{1}{\kappa}}\right) \cdot \max\left\{ \tilde{r}_i, \eps^{\frac{1}{\kappa}}\right\}\right)
\leq \frac{24}{\dprob \gamma} \left( 1 + \hdc_{f}\left(\eps^{\frac{1}{\kappa}}\right) \cdot \max\left\{ 2^{i} \tilde{r}_i,  2^{\tilde{i}} \eps^{\frac{1}{\kappa}}\right\} \right). \label{eqn:robust-expectation-1}
\end{align}
Note that
\begin{align*}
2^{i} \tilde{r}_i
& = \mu c^{\frac{1}{\kappa}} \left( d i + \ln(1/\conf) \right)^{\frac{1}{2\kappa - 1}} \cdot 2^{i\left(1 - \frac{1}{2\kappa - 1}\right)}
\\ &\leq \mu c^{\frac{1}{\kappa}} \left( d \tilde{i} + \ln(1/\conf) \right)^{\frac{1}{2\kappa - 1}} \cdot 2^{\tilde{i}\left(1 - \frac{1}{2\kappa - 1}\right)}
\leq \mu c^{\frac{1}{\kappa}} \left( 8 d \log_{2}\frac{2 d c}{\eps \conf} \right)^{\frac{1}{2\kappa - 1}} \cdot 2^{\tilde{i}\left(1 - \frac{1}{2\kappa - 1}\right)}.
\end{align*}
Then since $2^{-\tilde{i}\frac{1}{2\kappa-1}} \leq \left(\frac{\eps}{c}\right)^{\frac{1}{\kappa}} \cdot \left(8 d \log_{2} \frac{2 d c}{\eps \conf}\right)^{- \frac{1}{2\kappa - 1}}$,
we have that the rightmost expression in \eqref{eqn:robust-expectation-1} is at most
\begin{equation*}
 \frac{24}{\gamma \dprob} \left( 1 + \hdc_{f}\left(\eps^{\frac{1}{\kappa}}\right) \cdot \mu \cdot 2^{\tilde{i}} \eps^{\frac{1}{\kappa}}\right)
 \leq \frac{24}{\gamma \dprob} \left( 1 + \hdc_{f}\left(\eps^{\frac{1}{\kappa}}\right) \cdot 2 \mu c^{2} \cdot \left(8 d \log_{2} \frac{2 d c}{\eps \conf} \right) \cdot \eps^{\frac{2}{\kappa}-2}\right)
\leq \tilde{\Q} / 2.
\end{equation*}
Therefore, a Chernoff bound implies that on $J_n(\conf) \cap H_n^{(i)} \cap H_n^{(ii)} \cap J_n^*(i)$, we have
\begin{align*}
\P\left( \left|\Q_{i+1}\right| > \tilde{\Q} \Big| W_2 \right)
& \leq \exp\left\{ - \tilde{\Q} / 6 \right\}
\leq \exp\left\{ - 8 \log_{2}\left( \frac{2 \vc c}{\eps \conf}\right)\right\}
\\ & \leq \exp\left\{ - \log_{2}\left( \frac{ 48 \log_{2}\left(2 \vc c / \eps \conf\right)}{\conf}\right)\right\}
\leq \conf / (8 \tilde{i}).
\end{align*}
Combined with the law of total probability and a union bound over $i$ values, 
this implies there exists an event $J_n^*(\eps,\conf) \subseteq J_n(\conf) \cap H_n^{(i)} \cap H_n^{(ii)}$
with $\P\left( J_n(\conf) \cap H_n^{(i)} \cap H_n^{(ii)} \setminus  J_n^*(\eps,\conf) \right) \leq \sum_{i=\check{i}}^{\tilde{i}} \left(\conf / (16 i^2) + \conf / (8 \tilde{i})\right) \leq \conf / 4$,
on which 
every $i \in \left\{\max\left\{\check{i},\hat{i}_{\bdim_{f}-1}\right\},\ldots,\min\left\{\hat{i}_{\bdim_{f}},\tilde{i}\right\}\right\}$ has $\left| \Q_{i+1} \right| \leq \tilde{\Q}$.

We have chosen $c_1^*$ and $c_2^*$ large enough that
$2^{\tilde{i}+1} < \bdim_f \cdot 2^{n}$
and $2^{\check{i}} < 2^{-\bdim_f-2} n$.  In particular, this means that on $J_n^*(\eps,\conf)$,
\begin{equation*}
2^{\check{i}} + \sum_{i = \max\left\{\check{i},\hat{i}_{\bdim_{f}-1}\right\}}^{\min\left\{\tilde{i},\hat{i}_{\bdim_{f}}\right\}} |\Q_{i+1}| < 2^{-\bdim_f-2} n + \tilde{i}\tilde{\Q}.
\end{equation*}
Furthermore, since $\tilde{i} \leq 3 \log_{2} \frac{4 d c}{\eps \conf}$, we have
\begin{align*}
\tilde{i} \tilde{\Q} &\leq \frac{2^{13} \mu c^2 d}{\gamma \dprob} \hdc_{f}\left(\eps^{\frac{1}{\kappa}}\right) \cdot \eps^{\frac{2}{\kappa}-2} \cdot \log_{2}^{2} \frac{4 d c}{\eps \conf}
\\ & \leq \frac{2^{13} \mu c^2 d \log_{2}^{2} (4 d c)}{\gamma \dprob} \hdc_{f}\left(\eps^{\frac{1}{\kappa}}\right) \cdot \eps^{\frac{2}{\kappa}-2} \cdot \log_{2}^{2} \frac{1}{\eps \conf} \leq 2^{-\bdim_f-2} n.
\end{align*}
Combining the above, we have that \eqref{eqn:i-tilde-conditions}
is satisfied on $J_n^*(\eps,\conf)$, so that
$\hat{i}_{\bdim_f} > \tilde{i}$.
Combined with Lemma~\ref{lem:robust-good-labels}, this implies that on $J_n^*(\eps,\conf)$,
\begin{equation*}
\hat{V}_{\hat{i}_{\bdim_f}} \subseteq \hat{V}_{\tilde{i}} \subseteq \C\left( c \left( \frac{d \tilde{i} + \ln(1/\conf)}{2^{\tilde{i}}}\right)^{\frac{\kappa}{2\kappa-1}}\right),
\end{equation*}
and by definition of $\tilde{i}$ we have
\begin{align*}
c \left( \frac{d \tilde{i} + \ln(1/\conf)}{2^{\tilde{i}}}\right)^{\frac{\kappa}{2\kappa-1}}
& \leq c \left( 8 d \log_{2} \frac{2dc}{\eps\conf}\right)^{\frac{\kappa}{2\kappa-1}} \cdot 2^{-\tilde{i} \frac{\kappa}{2\kappa-1}}
\\ & \leq c \left( 8 d \log_{2} \frac{2dc}{\eps\conf}\right)^{\frac{\kappa}{2\kappa-1}} \cdot \left(\eps / c\right) \cdot \left(8 d \log_{2}\frac{2dc}{\eps\conf}\right)^{-\frac{\kappa}{2\kappa-1}} = \eps,
\end{align*}
so that $\hat{V}_{\hat{i}_{\bdim_f}} \subseteq \C(\eps)$.

Finally, to prove the stated bound on $\P(J_n^*(\eps,\conf))$, we have
\begin{align*}
1-\P\left(J_n^*(\eps,\conf)\right)
& \leq \left(1-\P(J_n(\conf))\right)
+ \left(1-\P\left(H_n^{(i)}\right)\right) + \P\left(H_n^{(i)} \setminus H_n^{(ii)}\right)
\\ &\phantom{\leq }+ \P\left(J_n(\conf) \cap H_n^{(i)} \cap H_n^{(ii)} \setminus J_n^*(\eps,\conf)\right)
\\ & \leq 3\conf/4 + c^{(i)} \cdot \exp\left\{ - n^3 \dprob / 8\right\} + c^{(ii)} \cdot \exp\left\{- n \dprob^{1/3} / 120\right\}
\leq \conf.
\end{align*}
\end{proof}

Finally, we are ready for the proof of Lemma~\ref{lem:robust-tsybakov}.

\begin{proof}[Lemma~\ref{lem:robust-tsybakov}]
First, note that because we break ties in the $\argmax$ of Step 7 in
favor of a $\hat{y}$ value with $V_{i_k+1}[(X_m,\hat{y})] \neq \emptyset$,
if $V_{i_k+1} \neq \emptyset$ before Step 8, then this remains true after Step 8.  Furthermore,
the $\hat{U}_{i_k+1}$ estimator is nonnegative, and thus the update in Step 10 never removes from
$V_{i_k+1}$ the minimizer of $\er_{\hat{\L}_{i_k+1}}(h)$ among $h \in V_{i_k+1}$.
Therefore, by induction we have $V_{i_k} \neq \emptyset$ at all times in \RobustShattering.
In particular, $\hat{V}_{\hat{i}_{d+1}+1} \neq \emptyset$ so that the return classifier
$\hat{h}$ exists.
Also, by Lemma~\ref{lem:robust-label-complexity}, for $n$ as in Lemma~\ref{lem:robust-label-complexity}, on $J_n^*(\eps,\conf)$,
running \RobustShattering~with label budget $n$ and confidence parameter $\conf$ results in $\hat{V}_{\hat{i}_{\bdim_f}} \subseteq \C(\eps)$.
Combining these two facts implies that for such a value of $n$, on $J_n^*(\eps,\conf)$,
$\hat{h} \in \hat{V}_{\hat{i}_{d+1}+1} \subseteq \hat{V}_{\hat{i}_{\bdim_f}} \subseteq \C(\eps)$,
so that $\er\left(\hat{h}\right) \leq \nu + \eps$.
\end{proof}

\subsection{The Misspecified Model Case}
\label{app:misspecified-model-trivial}

Here we present a proof of Theorem~\ref{thm:misspecified-model-trivial}, including a specification of the
method $\alg_a^{\prime}$ from the theorem statement.

\begin{proof}[Theorem~\ref{thm:misspecified-model-trivial}]
Consider a weakly universally consistent passive learning algorithm $\alg_{u}$ \citep*{devroye:96}.
Such a method must exist in our setting;
for instance, Hoeffding's inequality and a union bound imply that it suffices to take
$\alg_{u}(\L) = \argmin_{\ind^{\pm}_{B_i}} \er_{\L}(\ind^{\pm}_{B_i}) + \sqrt{\frac{\ln\left(4 i^2 |\L| \right)}{2|\L|}}$,
where $\{B_1,B_2,\ldots\}$ is a countable algebra that generates $\mathcal{F}_{\X}$.
%it easily follows from second-countability of the generating topology of standard Borel spaces that such an algebra exists.

Then $\alg_{u}$ achieves a label complexity $\Lambda_{u}$ such that for any distribution $\PXY$ on $\X \times \{-1,+1\}$,
$\forall \eps \in (0,1)$,
$\Lambda_u(\eps + \nu^*(\PXY), \PXY) < \infty$.
In particular, if $\nu^*(\PXY) < \nu(\C ; \PXY)$,
then $\Lambda_u((\nu^*(\PXY) + \nu(\C ; \PXY))/2, \PXY) < \infty$.

Fix any $n \in \nats$, and describe the execution of $\alg_a^{\prime}(n)$ as follows.
In a preprocessing step, withhold the first $m_{un} = n - \lfloor n/2\rfloor - \lfloor n/3 \rfloor \geq n/6$ examples
$\{X_1,\ldots,X_{m_{un}}\}$ and request their labels $\{Y_1,\ldots,Y_{m_{un}}\}$.
Run $\alg_{a}(\lfloor n/2 \rfloor)$ on the remainder of the sequence $\{X_{m_{un}+1}, X_{m_{un}+2}, \ldots\}$ (i.e., shift any index references in the algorithm by $m_{un}$),
and let $h_a$ denote the classifier it returns.  
Also request the labels $Y_{m_{un}+1},\ldots Y_{m_{un} + \lfloor n/3 \rfloor}$,
and let 
\begin{equation*}
h_u = \alg_{u}\left( \left\{(X_{m_{un}+1},Y_{m_{un}+1}),\ldots,(X_{m_{un}+\lfloor n/3\rfloor},Y_{m_{un}+\lfloor n/3 \rfloor})\right\}\right).
\end{equation*}
If $\er_{m_{un}}(h_a) - \er_{m_{un}}(h_u) > n^{-1/3}$, return $\hat{h} = h_u$; otherwise, return $\hat{h} = h_a$.
This method achieves the stated result, for the following reasons.

First, let us examine the final step of this algorithm.  By Hoeffding's inequality,
with probability at least $1 - 2 \cdot \exp\left\{ - n^{1/3} / 12 \right\}$,
\begin{equation*}
\left| \left(\er_{m_{un}}(h_a) - \er_{m_{un}}(h_u)\right) - \left(\er(h_a) - \er(h_u)\right)\right| 
\leq n^{-1/3}.
\end{equation*}
When this is the case, a triangle inequality implies 
$\er(\hat{h}) \leq \min\{\er(h_a), \er(h_u) + 2 n^{-1/3}\}$.

If $\PXY$ satisfies the benign noise case,
then for any
\begin{equation*}
n \geq 2 \Lambda_a(\eps/2 + \nu(\C ; \PXY),\PXY),
\end{equation*}
we have $\E[\er(h_a)] \leq \nu(\C ; \PXY) + \eps/2$,
so $\E[\er(\hat{h})] \leq \nu(\C ; \PXY) + \eps/2 + 2 \cdot \exp\{- n^{1/3} / 12\}$,
which is at most $\nu(\C ; \PXY) + \eps$ if $n \geq 12^3\ln^3(4/\eps)$.
So in this case, we can take $\lambda(\eps) = \left\lceil 12^3 \ln^3(4/\eps) \right\rceil$.

On the other hand, if $\PXY$ is not in the benign noise case (i.e., the misspecified model case),
then for any
$n \geq 3 \Lambda_u((\nu^*(\PXY) + \nu(\C ; \PXY))/2,\PXY)$,
$\E\left[ \er(h_u) \right] \leq (\nu^*(\PXY) + \nu(\C;\PXY))/2$,
so that
\begin{align*}
\E[\er(\hat{h})]
& \leq \E[\er(h_u)] + 2 n^{-1/3} + 2 \cdot \exp\{-n^{1/3}/12\}
\\ &\leq (\nu^*(\PXY) + \nu(\C ; \PXY))/2 + 2 n^{-1/3} + 2 \cdot \exp\{-n^{1/3}/12\}.
\end{align*}
Again, this is at most $\nu(\C ; \PXY) + \eps$ if
$n \geq \max\left\{12^3 \ln^3 \frac{2}{\eps}, 64(\nu(\C ; \PXY) - \nu^*(\PXY))^{-3}\right\}$.
So in this case, we can take
\begin{equation*}
\lambda(\eps) = \left\lceil \max\left\{
12^3 \ln^3 \frac{2}{\eps},
3 \!\Lambda_u\left(\frac{\nu^*(\PXY)+\nu(\C;\PXY)}{2},\PXY\right)\!,
\frac{64}{(\nu(\C ; \PXY) - \nu^*(\PXY))^{3}}
\right\}\right\rceil\!.
\end{equation*}

In either case, we have $\lambda(\eps) \in \Polylog(1/\eps)$.
\end{proof}

\section*{Acknowledgments}
I am grateful to
Nina Balcan, Rui Castro, Sanjoy Dasgupta, Carlos Guestrin,
Vladimir Koltchinskii,
John Langford, Rob Nowak, Larry Wasserman, and Eric Xing for insightful discussions.

\bibliography{learning}

\end{document}